\newcommand{\figlabel}{Fig.\xspace}
\newcommand{\eqlabel}{Eq.\xspace}
\newcommand{\seclabel}{Sec.\xspace}
\newcommand{\secLabel}{Sec.\xspace}
\newcommand{\inlinesection}[1]{\noindent\textbf{#1.}}
\newcommand{\minorsection}[1]{\noindent\textbf{#1.}}
\newcommand{\supp}{\textit{the Appendix}\xspace}
\newcommand{\REMOVAL}[1]{}
\definecolor{Highlight}{HTML}{39b54a}  %
\let\@algcomment\relax
\newcommand\algcomment[1]{\def\@algcomment{\footnotesize#1}}
\renewcommand\fs@ruled{\def\@fs@cfont{\bfseries}\let\@fs@capt\floatc@ruled
  \def\@fs@pre{\hrule height.8pt depth0pt \kern2pt}%
  \def\@fs@post{}%
  \def\@fs@mid{\kern2pt\hrule\kern2pt}%
  \let\@fs@iftopcapt\iftrue}
\newcommand{\cmmnt}[1]{}
\newcommand{\zoomin}[9]{ %
\begin{tikzpicture}[spy using outlines={rectangle,#9,magnification=#8,size=#6}]   
	\node[anchor=south west,inner sep=0]  {\includegraphics[width=#7]{#1}};
	\spy on (#2, #3) in node at (#4,#5);
\end{tikzpicture}
}
\newcommand{\zoomincrop}[9]{ %
\begin{tikzpicture}[spy using outlines={rectangle,#9,magnification=#8,size=#6}]   
	\node[anchor=south west,inner sep=0]  {\includegraphics[width=#7]{#1}};
	\spy on (#2, #3) in node at (#4,#5);
\end{tikzpicture}
}
\crefname{section}{Sec.}{Secs.}
\Crefname{section}{Section}{Sections}
\Crefname{table}{Table}{Tables}
\crefname{table}{Tab.}{Tabs.}
\newcommand{\methodname}{GES~}
\begin{document}

\title{\methodname: Generalized Exponential Splatting for Efficient Radiance Field Rendering}

\author{
\textbf{Abdullah Hamdi$^{1}$}\quad \quad
\textbf{Luke Melas-Kyriazi$^{1}$}\quad \quad
\textbf{Jinjie Mai$^{2}$} \quad \quad
\textbf{Guocheng Qian$^{2,4}$} \\
\textbf{Ruoshi Liu$^{3}$}\quad \quad
\textbf{Carl Vondrick$^{3}$}\quad \quad
\textbf{Bernard Ghanem$^{2}$}\quad \quad
\textbf{Andrea Vedaldi$^{1}$}
\\
$^1$Visual Geometry Group, University of Oxford\\
$^2$King Abdullah University of Science and Technology (KAUST) \\ $^3$Columbia University \quad \quad \quad $^4$Snap Inc.\\ \small
\texttt{abdullah.hamdi@eng.ox.ac.uk}
}

\maketitle

\vspace{-2em}
\begin{abstract}
Advancements in 3D Gaussian Splatting have significantly accelerated 3D reconstruction and generation.
However, it may require a large number of Gaussians, which creates a substantial memory footprint.
This paper introduces \textit{\methodname} (Generalized Exponential Splatting), a novel representation that employs Generalized Exponential Function (GEF) to model 3D scenes, requiring far fewer particles to represent a scene and thus significantly outperforming Gaussian Splatting methods in efficiency with a plug-and-play replacement ability for Gaussian-based utilities.
\methodname is validated theoretically and empirically in both principled 1D setup and realistic 3D scenes. It is shown to represent signals with sharp edges more accurately, which are typically challenging for Gaussians due to their inherent low-pass characteristics.
Our empirical analysis demonstrates that GEF outperforms Gaussians in fitting natural-occurring signals (\eg squares, triangles, parabolic signals), thereby reducing the need for extensive splitting operations that increase the memory footprint of Gaussian Splatting. With the aid of a frequency-modulated loss, 
\methodname achieves competitive performance in novel-view synthesis benchmarks while requiring less than half the memory storage of Gaussian Splatting and increasing the rendering speed by up to 39\%.
The code is available on the project website \url{https://abdullahamdi.com/ges}.
\end{abstract}
\section{Introduction}%
\label{sec:intro}
\vspace{-4pt}
\begin{figure}[t] 
\centering
\includegraphics[width=\linewidth]{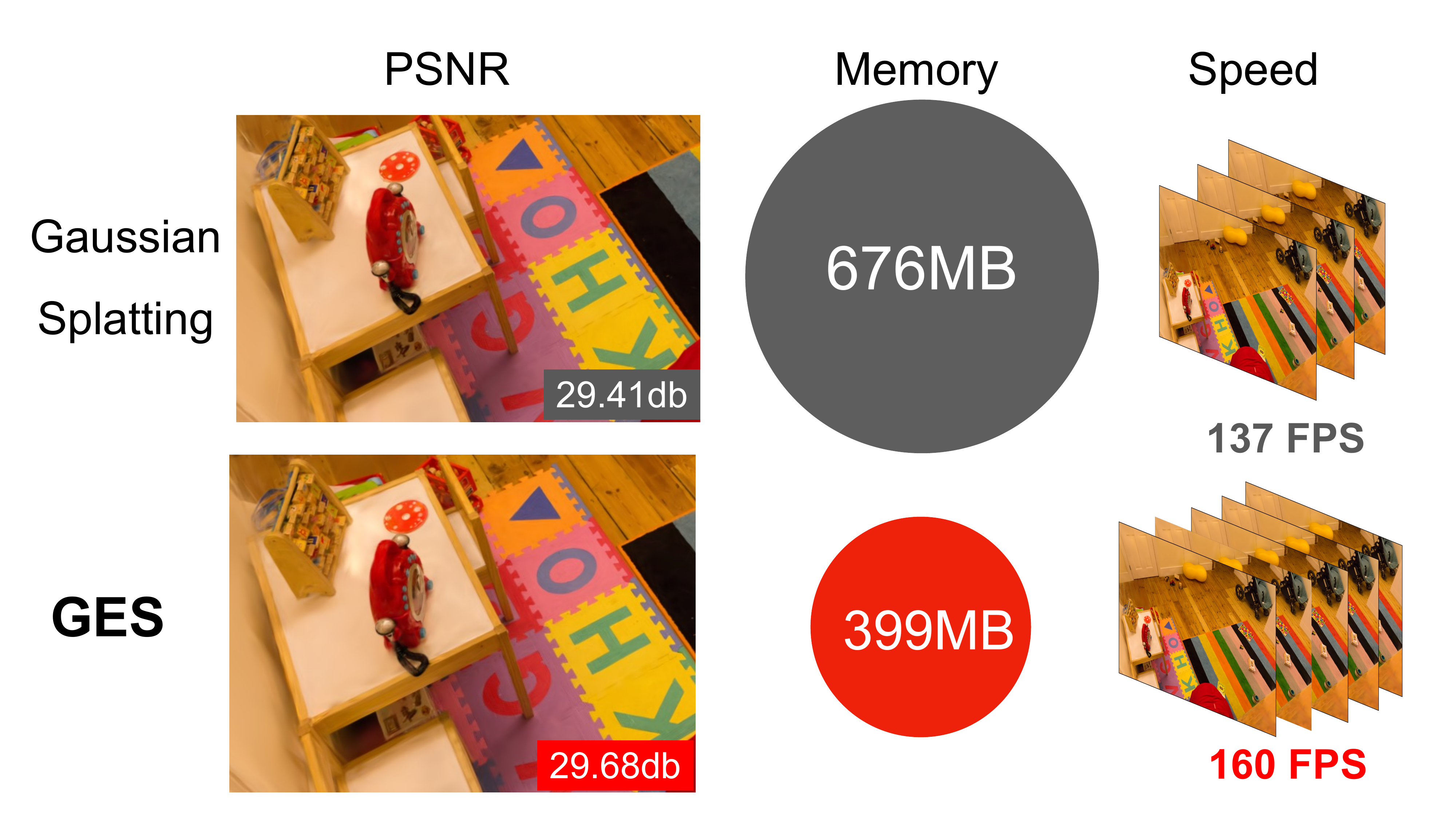}
\caption{
\textbf{GES: Generalized Exponential Splatting} We propose a faster and more memory-efficient alternative to Gaussian Splatting \cite{gaussiansplatter}  that relies on Generalized exponential Functions (with additional learnable shape parameters) instead of  Gaussians.}

\label{fig:pullfigure}
\end{figure}

\begin{figure*}[t]
\centering
\includegraphics[width=0.99\linewidth]{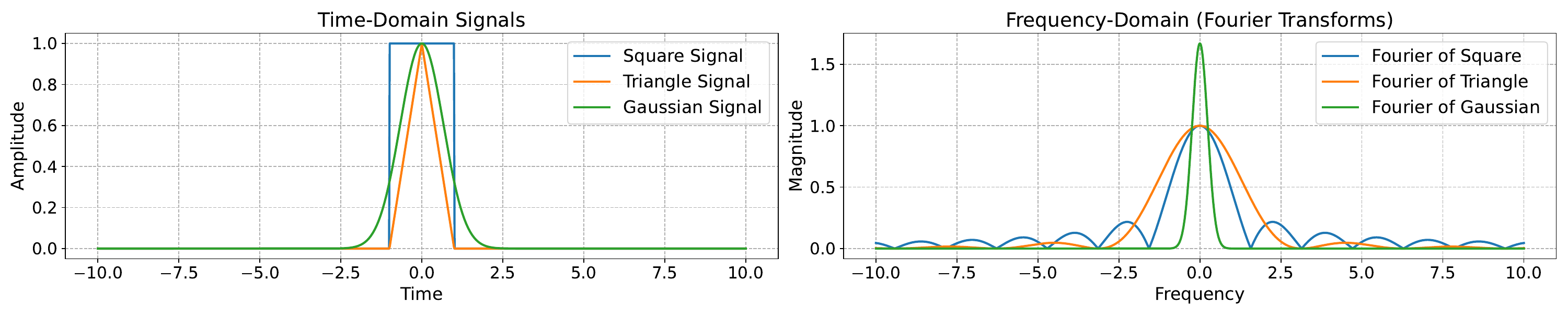}
\caption{\textbf{The Inherent Low-Pass Limitation of Gaussians}. We illustrate the bandwidth constraint of Gaussian functions compared to square and triangle signals. The Gaussian functions' low-pass property restricts their ability to fit signals with sharp edges that have infinite bandwidth. This limitation constitutes a challenge for 3D Gaussian Splatting \cite{gaussiansplatter} in accurately fitting high-bandwidth 3D spatial data.}
\label{fig:partfoureir}
\end{figure*} 
The pursuit of more engaging and immersive virtual experiences across gaming, cinema, and the metaverse demands advancements in 3D technologies that balance visual richness with computational efficiency. In this regard, 3D Gaussian Splatting (GS)~\cite{gaussiansplatter} is a recent alternative to neural radiance fields \cite{NeRF,NerfInTheWild,dnerf,PlenOctrees,plenoxels,InstantNGP} for learning and rendering 3D objects and scenes.
GS represents a scene as a large mixture of small, coloured Gaussians.
Its key advantage is the existence of a very fast differentiable renderer, which makes this representation ideally suited for real-time applications and significantly reduces the learning cost.
Specifically, fast rendering of learnable 3D representations is of key importance for applications like gaming, where high-quality, fluid, and responsive graphics are essential.

However, GS is not without shortcomings.
We notice in particular that GS implicitly makes an assumption on the nature of the modeled signals, which is suboptimal.
Specifically, Gaussians correspond to \emph{low-pass filters}, but most 3D scenes are far from low-pass as they contain abrupt discontinuities in shape and appearance. \figlabel{\ref{fig:partfoureir}} demosntrates this inherent low-pass limitation of Gaussian-based methods. 
As a result, GS needs to use a huge number of very small Gaussians to represent such 3D scenes, far more than if a more appropriate basis was selected, which negatively impacts memory utilization.

To address this shortcoming, in this work, we introduce \textit{\methodname} (Generalized Exponential Splatting), a new approach that utilizes the Generalized Exponential Function (GEF) for modeling 3D scenes (\figlabel{\ref{fig:pullfigure}}).
Our method is designed to effectively represent signals, especially those with sharp features, which previous Gaussian splatting techniques often smooth out or require extensive splitting to model~\cite{gaussiansplatter}.
Demonstrated in \figlabel{\ref{fig:genealized}}, we show that while $N=5$ randomly initialized Gaussians are required to fit a square, only $2$ GEFs are needed for the same signal. This stems from the fact that Gaussian mixtures have a low-pass frequency domain, while many common signals, like the square, are not band-limited.
This high-band modeling constitutes a fundamental challenge to Gaussian-based methods.
To help \methodname to train gradually from low-frequency to high-frequency details, we propose a specialized frequency-modulated image loss. This allows \methodname to achieve more than 50\% reduction in the memory requirement of Gaussian splatting and up to 39\% increase in rendering speed while maintaining a competitive performance on standard novel view synthesis benchmarks.

We summarize our contributions as follows:
\begin{itemize}[leftmargin=1em,topsep=0pt]
\item We present principled numerical simulations motivating the use of the Generalized Exponential Functions (GEF) instead of Gaussians for scene modeling.
\item We propose Generalized Exponential Splatting (\methodname\!\!), a novel 3D representation that leverages GEF to develop a splatting-based method for realistic, real-time, and memory-efficient novel view synthesis. 
\item Equipped with a specialized frequency-modulated image loss and through extensive experiments on standard benchmarks on novel view synthesis, \methodname shows a 50\% reduction in memory requirement and up to 39\% increase in rendering speed for real-time radiance field rendering based on Gaussian Splatting. \methodname can act as a plug-and-play replacement for \textit{any} Gaussian-based utilities. 
\end{itemize}
\begin{figure*}[t] 
\centering
\resizebox{1.02\linewidth}{!}{
\begin{tabular}{c|cc}
\includegraphics[width=0.34\linewidth]{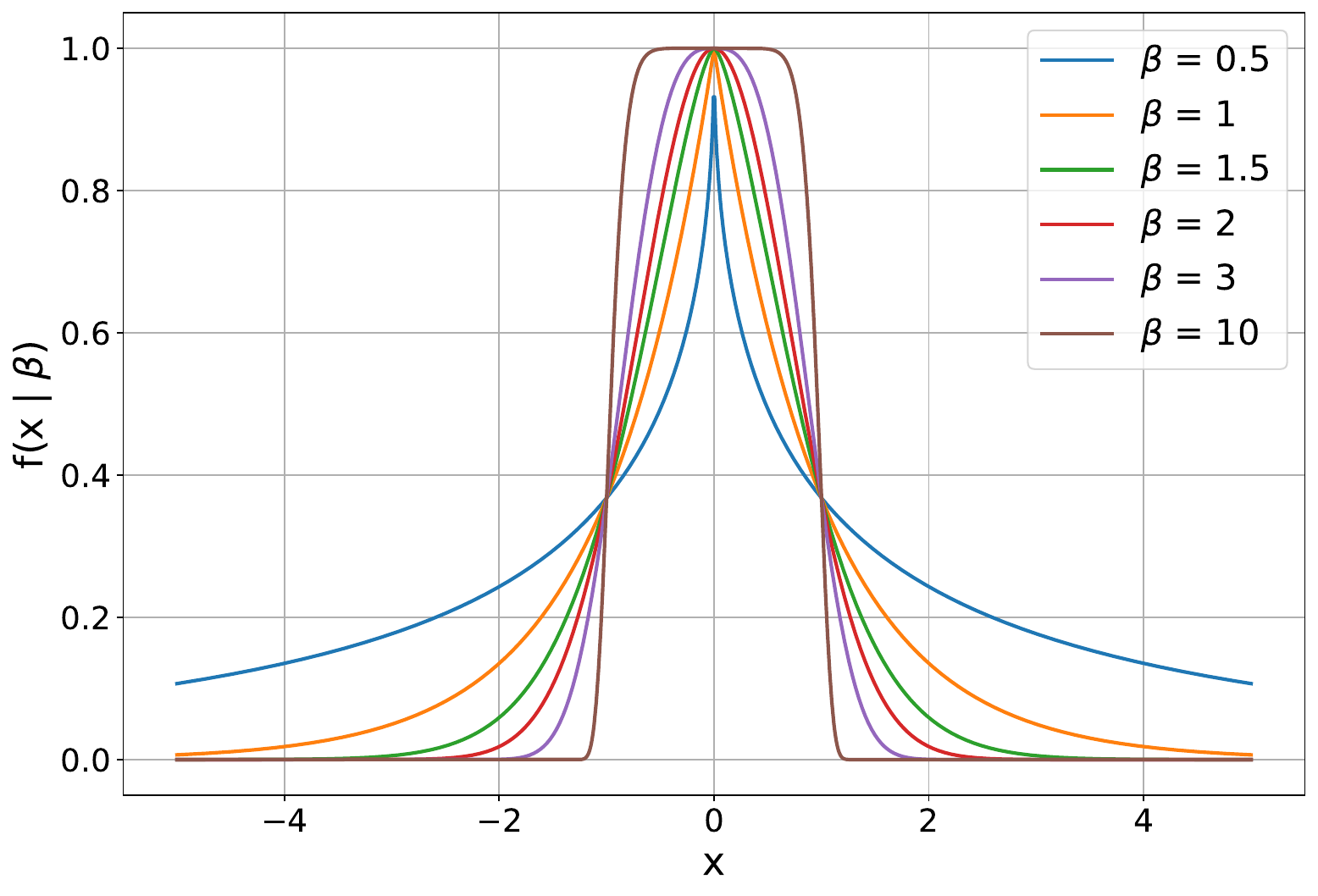}
& \includegraphics[width=0.32\linewidth]{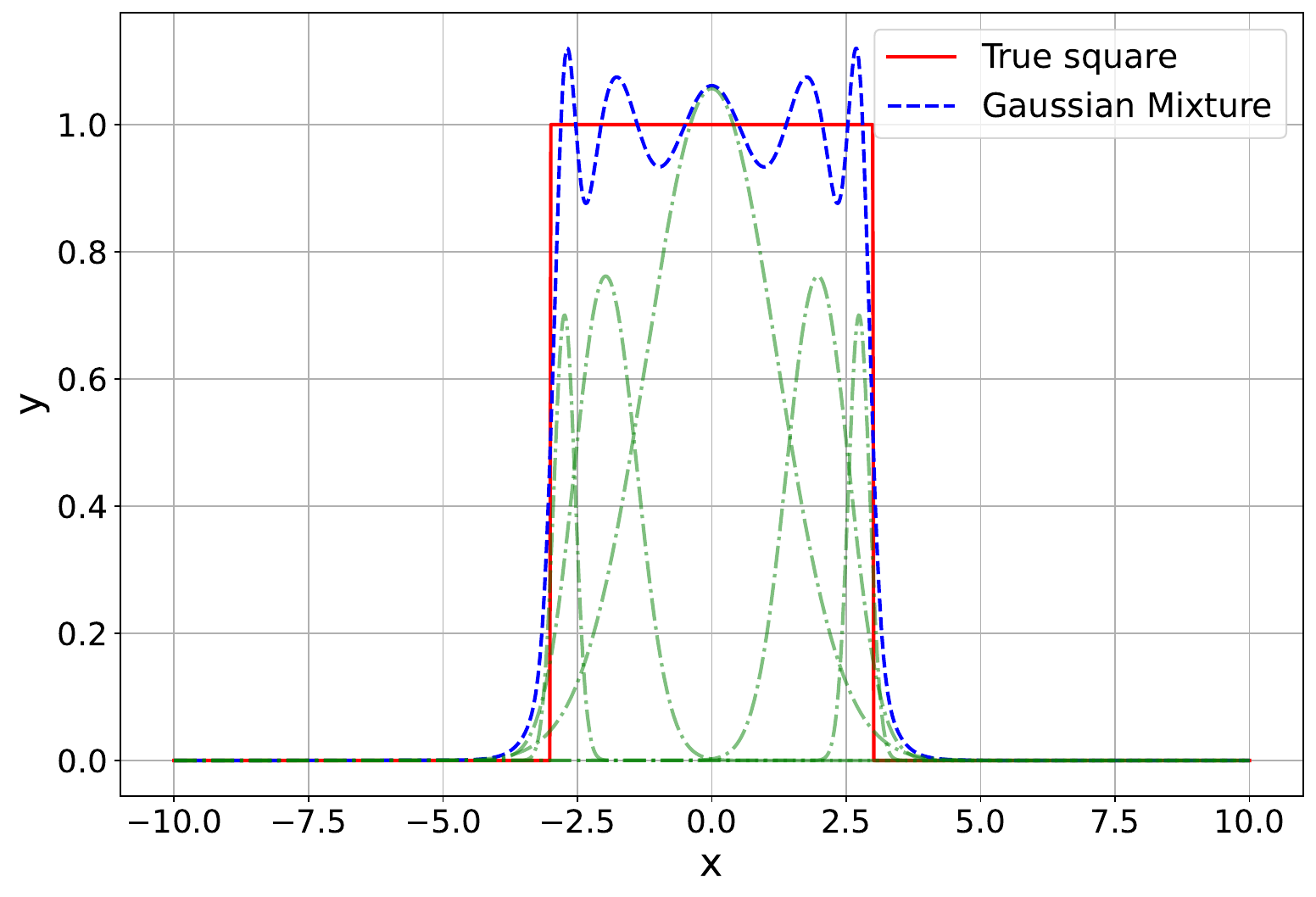} &  \includegraphics[width=0.32\linewidth]{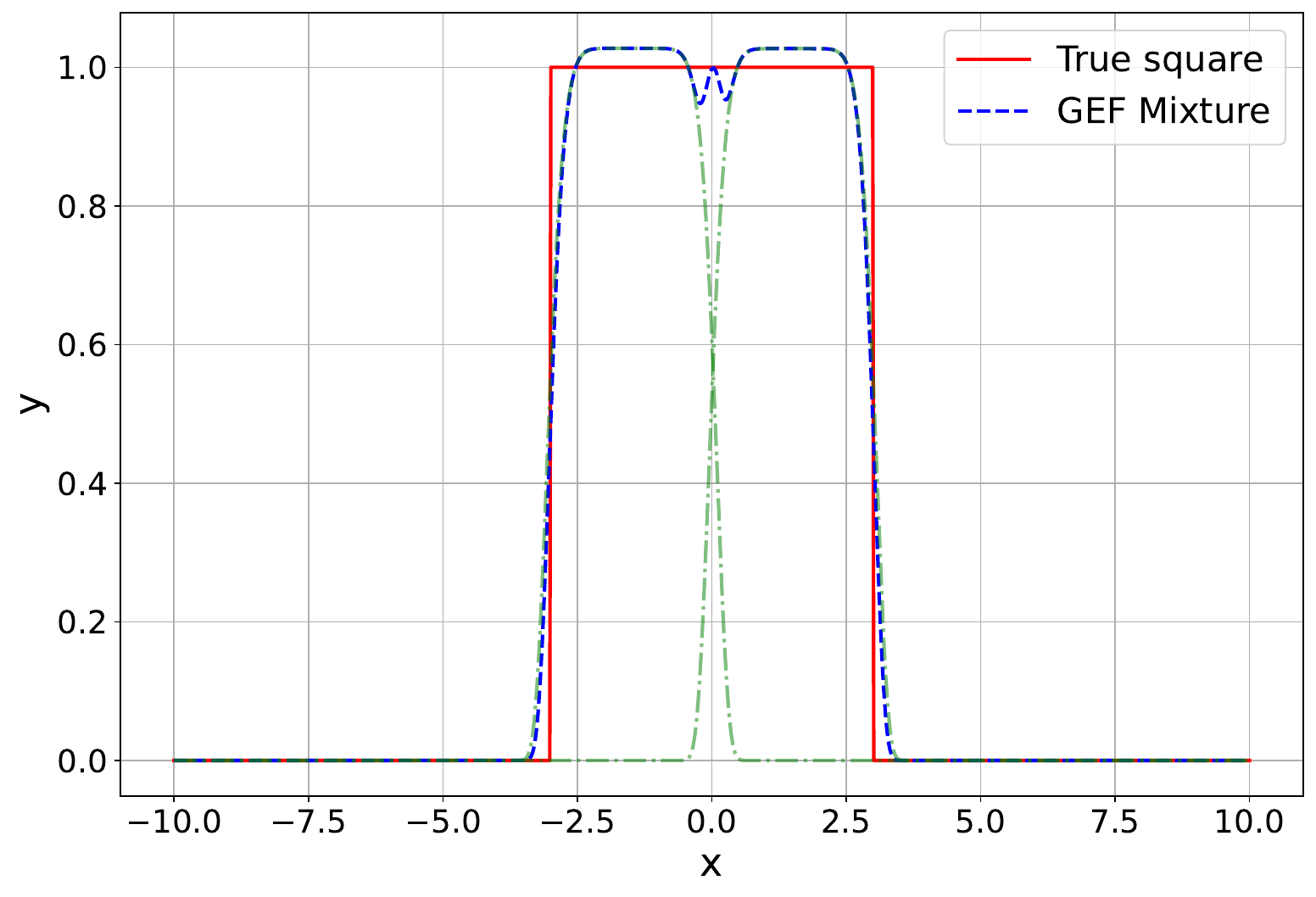} \\
(a) A family of GEFs $ f_{\beta}(x)$ &  (b) Five Gaussians fitting a square & (c) Two GEFs fitting a square
\end{tabular}
}
\caption{\textbf{Generalized Exponential Function (GEF)}. (\textit{a}): We show a family of GEFs $ f_{\beta}(x) = Ae^{-\left(\frac{|x - \mu|}{\alpha}\right)^\beta}$ with different $\beta$ values for $\alpha=1, \mu=0$. When $\beta=2$, the function reduces to the Gaussian function followed in 3D gaussian splatting \cite{gaussiansplatter}. In our \methodname, we learn $\beta$ as another parameter of each splatting component. (\textit{b,c}): The proposed GEF mixture, with learnable $\beta$, fits the same signal (square) with fewer components compared to Gaussian functions using gradient-based optimizations. (\textit{b}): We show an example of the fitted mixture with $N=5$ components when Gaussians are used \vs (\textit{c}) when GEF is used with $N=2$ components. GEF achieves less error loss (0.44) and approximates sharp edges better than the Gaussian counterpart (0.48 error) with less number of components. The optimized individual components (initialized with random parameters) are shown in green after convergence.
}
\label{fig:genealized}
\end{figure*}
\section{Related work}\label{sec:related}
\vspace{-4pt}
\minorsection{Multi-view 3D reconstruction}
Multi-view 3D reconstruction aims to recover the 3D structure of a scene from its 2D RGB images captured from different camera positions~\cite{UncalibratedStereoRig, RomeInADay}.
Classical approaches usually recover a scene's geometry as a point cloud using SIFT-based~\cite{SIFT} point matching~\cite{colmap_sfm, colmap_mvs}.
More recent methods enhance them by relying on neural networks for feature extraction (\eg \cite{mvsnet, DeepMVS, rmvsnet, fastmvsnet}).
The development of Neural Radiance Fields (NeRF)~\cite{NeRF, NeuralVolumes} has prompted a shift towards reconstructing 3D as volume radiance~\cite{VolumeRenderingDigest}, enabling the synthesis of photo-realistic novel views~\cite{RefNeRF, MipNeRF, MipNeRF-360}.
Subsequent works have also explored the optimization of NeRF in few-shot (\eg \cite{DietNeRF, InfoNeRF, WideBaseline}) and one-shot (\eg \cite{PixelNeRF, EG3D}) settings.
NeRF does not store any 3D geometry explicitly (only the density field), and several works propose to use a signed distance function to recover a scene's surface~\cite{IDR, NeuS, VolSDF, HF-NeuS, PatchNeuS, liu2023you,liu2023humans}, including in the few-shot setting as well (\eg \cite{MonoSDF, NeRS}).

\minorsection{Differentiable rendering}
Gaussian Splatting is a point-based rendering~\cite{gross2011point,aliev2020neural} algorithm that parameterizes 3D points as Gaussian functions (mean, variance, opacity) with spherical harmonic coefficients for the angular radiance component~\cite{yu2021plenoxels}. Prior works have extensively studied differentiable rasterization, with a series of works\cite{loper2014opendr,kato2018neural,liu2019soft} proposing techniques to define a differentiable function between triangles in a triangle mesh and pixels, which allows for adjusting parameters of triangle mesh from observation. These works range from proposing a differentiable renderer for mesh processing with image filters ~\cite{liu2018paparazzi}, and proposing to blend schemes of nearby triangles ~\cite{petersen2019pix2vex}, to extending differentiable rasterization to large-scale indoor scenes ~\cite{yifan2019differentiable}. On the point-based rendering~\cite{gross2011point} side, neural point-based rendering~\cite{kato2018neural} allows features to be learned and stored in 3D points for geometrical and textural information. 
Wiles \etal combine neural point-based rendering with an adversarial loss for better photorealism ~\cite{wiles2020synsin}, whereas later works  use points to represent a radiance field, combining NeRF and point-based rendering ~\cite{xu2022point,zhang2022differentiable}. Our \methodname is a point-based rasterizer in which every point represents a generalized exponential with scale, opacity, and shape, affecting the rasterization accordingly.

\minorsection{Prior-based 3D reconstruction}
Modern zero-shot text-to-image generators~\cite{DALL-E, LDM, DALLE-2, Imagen, ediffi, Make-a-Scene} have improved the results by providing stronger synthesis priors~\cite{DreamFusion, SJC, Latent-NeRF, text2tex, sdfusion}.
DreamFusion~\cite{DreamFusion} is a seminal work that proposed to distill an off-the-shelf diffusion model~\cite{Imagen} into a NeRF~\cite{NeRF, MipNeRF-360} for a given text query.
It sparked numerous follow-up approaches for text-to-3D synthesis (\eg \cite{Magic3D, Fantasia3D}) and image-to-3D reconstruction (\eg \cite{DITTO-NeRF, RealFusion, Zero-1-to-3, deitke2023objaverse}).
The latter is achieved via additional reconstruction losses on the frontal camera position~\cite{Zero-1-to-3} and/or subject-driven diffusion guidance~\cite{DreamBooth3D, Magic3D}.
The developed methods improved the underlying 3D representation~\cite{Magic3D, Fantasia3D, stable-dreamfusion} and 3D consistency of the supervision~\cite{Zero-1-to-3, 3DFuse}; explored task-specific priors~\cite{Text2Room, Farm3D, TEXTure} and additional controls~\cite{SKED}. Lately, Gaussian-based methods \cite{dreemgaussian} improved the speed of optimization of 3D generation, utilizing the fast rasterization of Gaussian Splatting. We showcase how our \methodname can act as a plug-and-play replacement for Gaussian Splatting in this application and other utilities.

\begin{figure*}[htbp]
  \centering
  \resizebox{1.02\linewidth}{!}{
  \begin{tabular}{ccc}
    \multicolumn{1}{c}{(a) Square signal} & \multicolumn{1}{c}{(b) Parabolic signal} & \multicolumn{1}{c}{(c) Exponential signal} \\ 

      \includegraphics[width=0.325\textwidth]{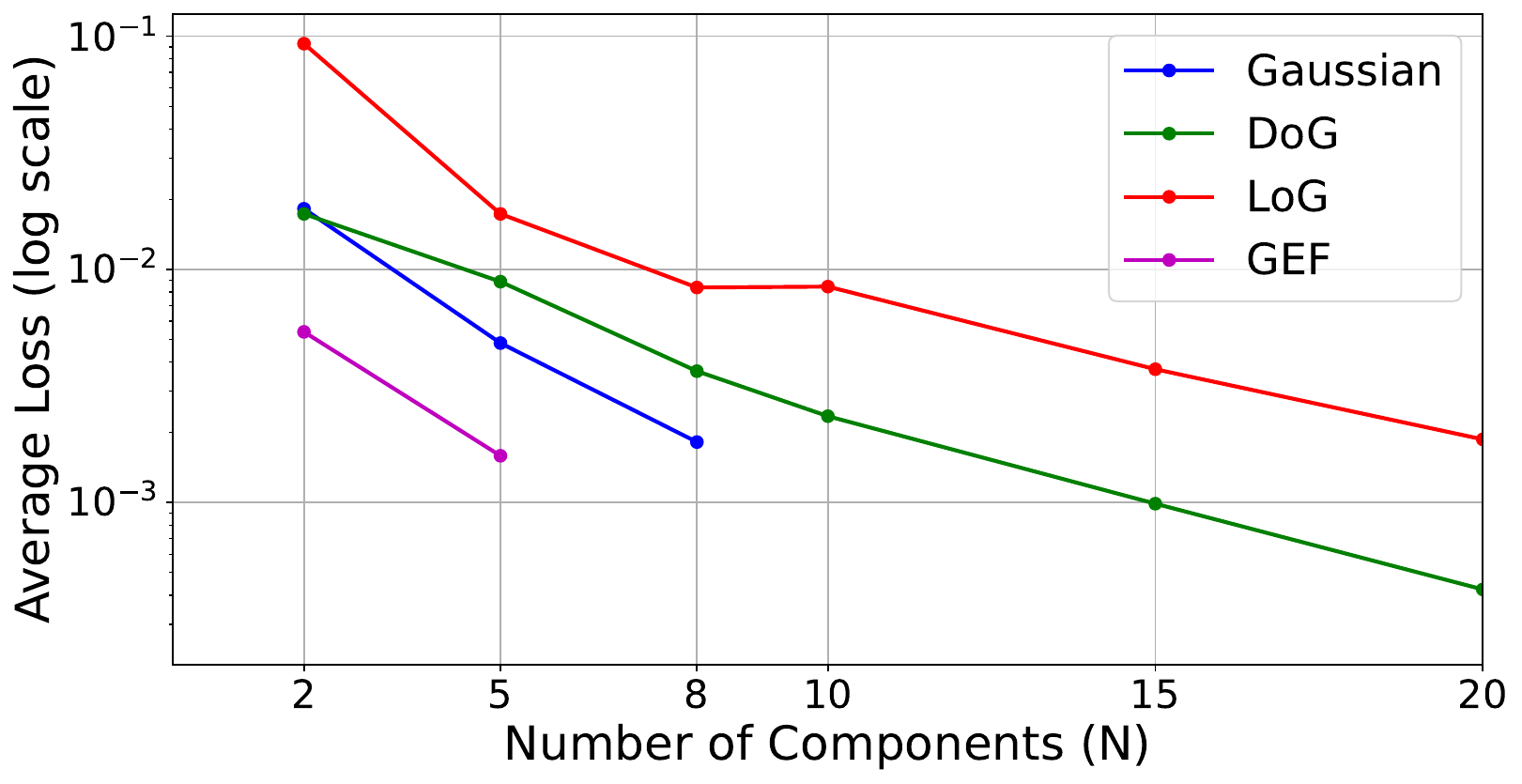} &
      \includegraphics[width=0.325\textwidth]{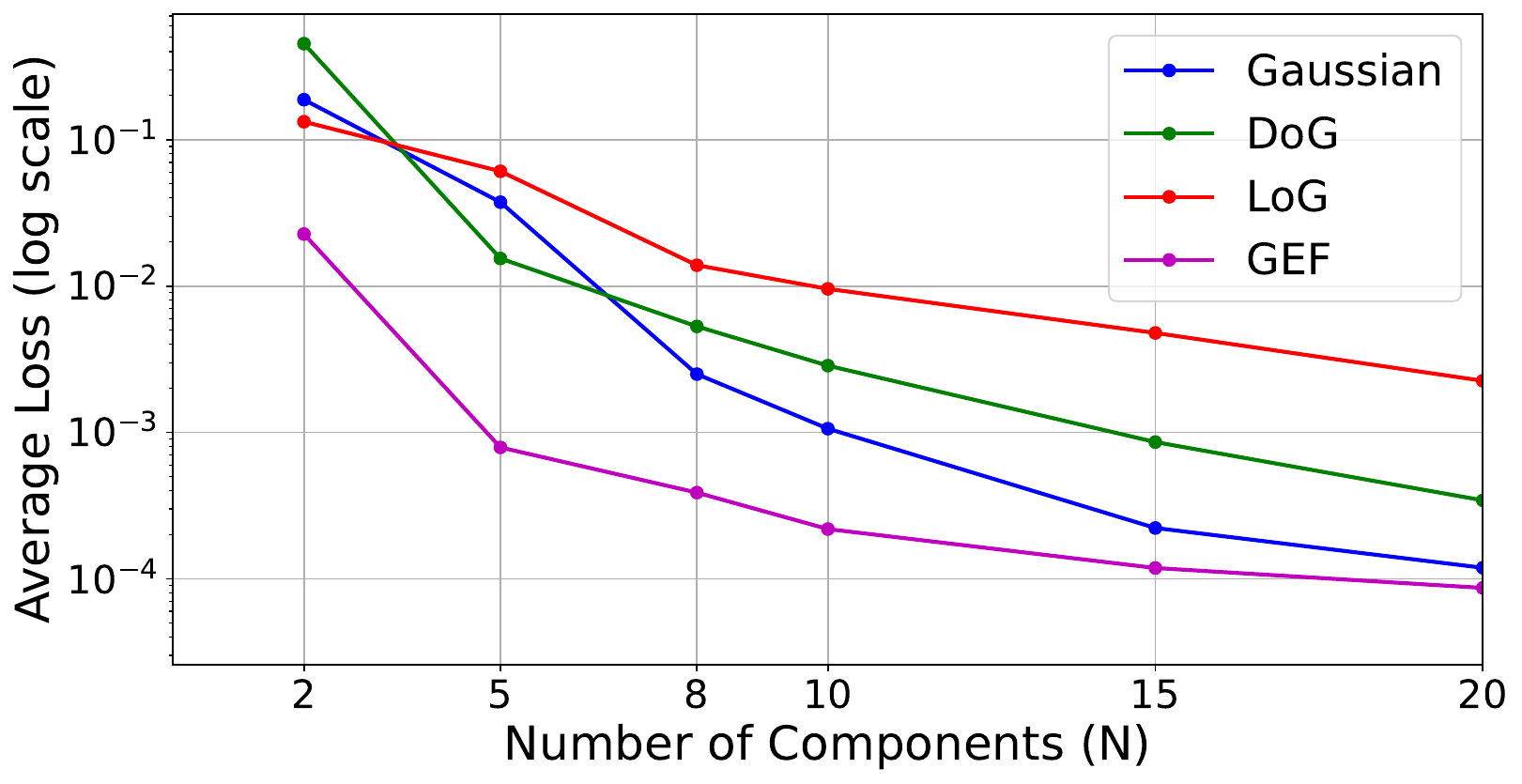} & 
      \includegraphics[width=0.325\textwidth]{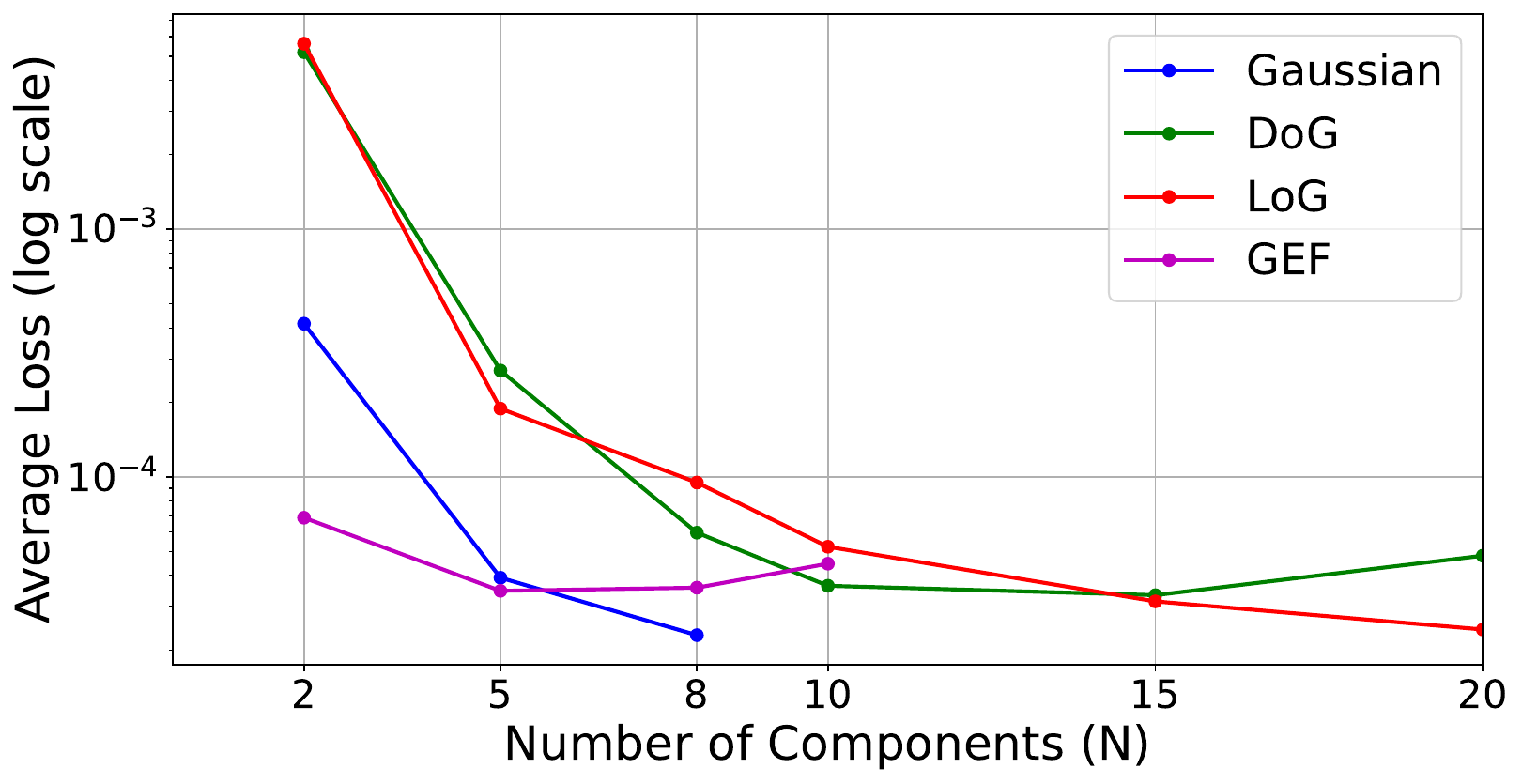} 

      \\ 
      \multicolumn{1}{c}{(d) Triangle signal} & \multicolumn{1}{c}{(e) Gaussian signal} & \multicolumn{1}{c}{(f) Half sinusoid signal} \\ 
      \includegraphics[width=0.325\textwidth]{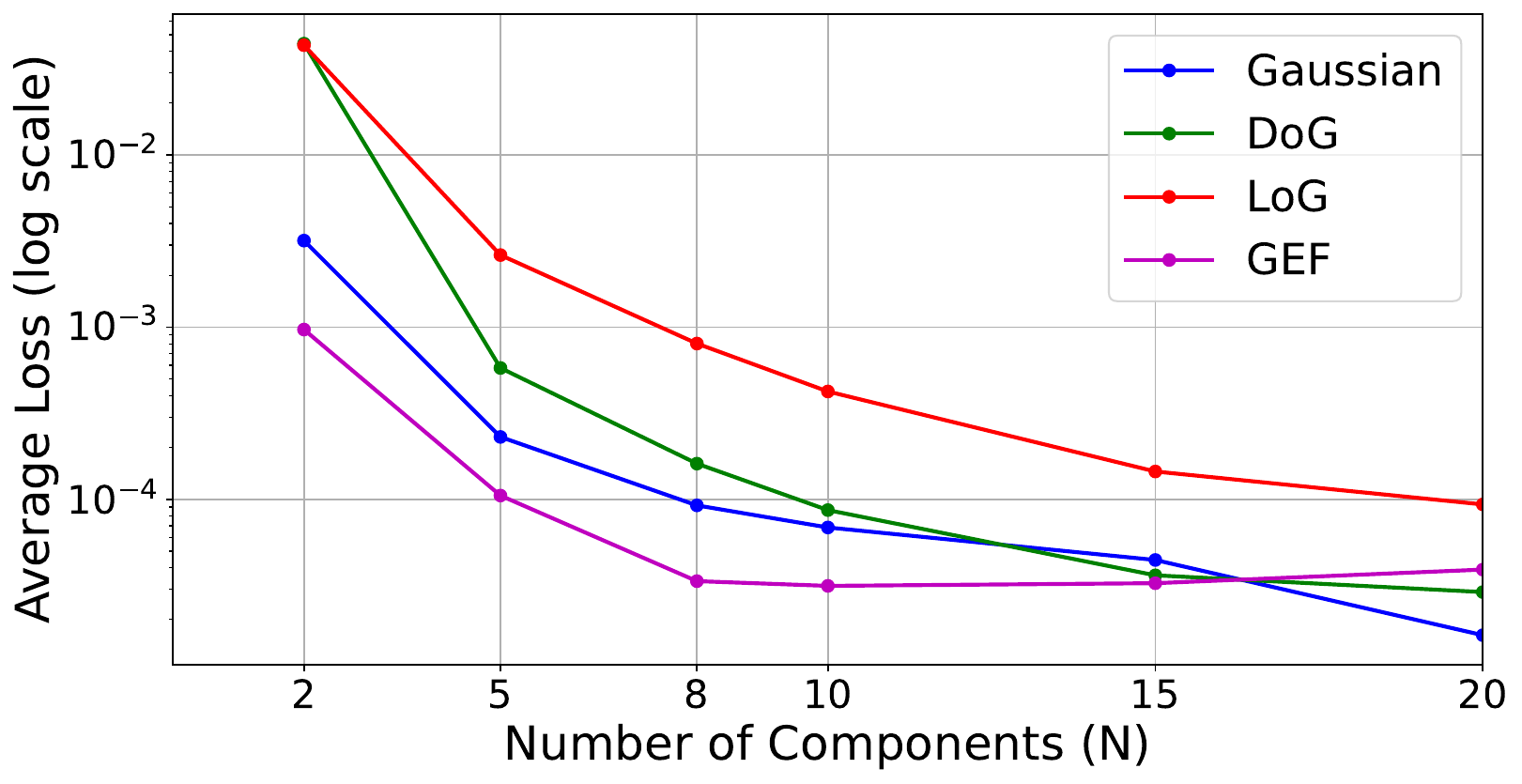} &
      \includegraphics[width=0.325\textwidth]{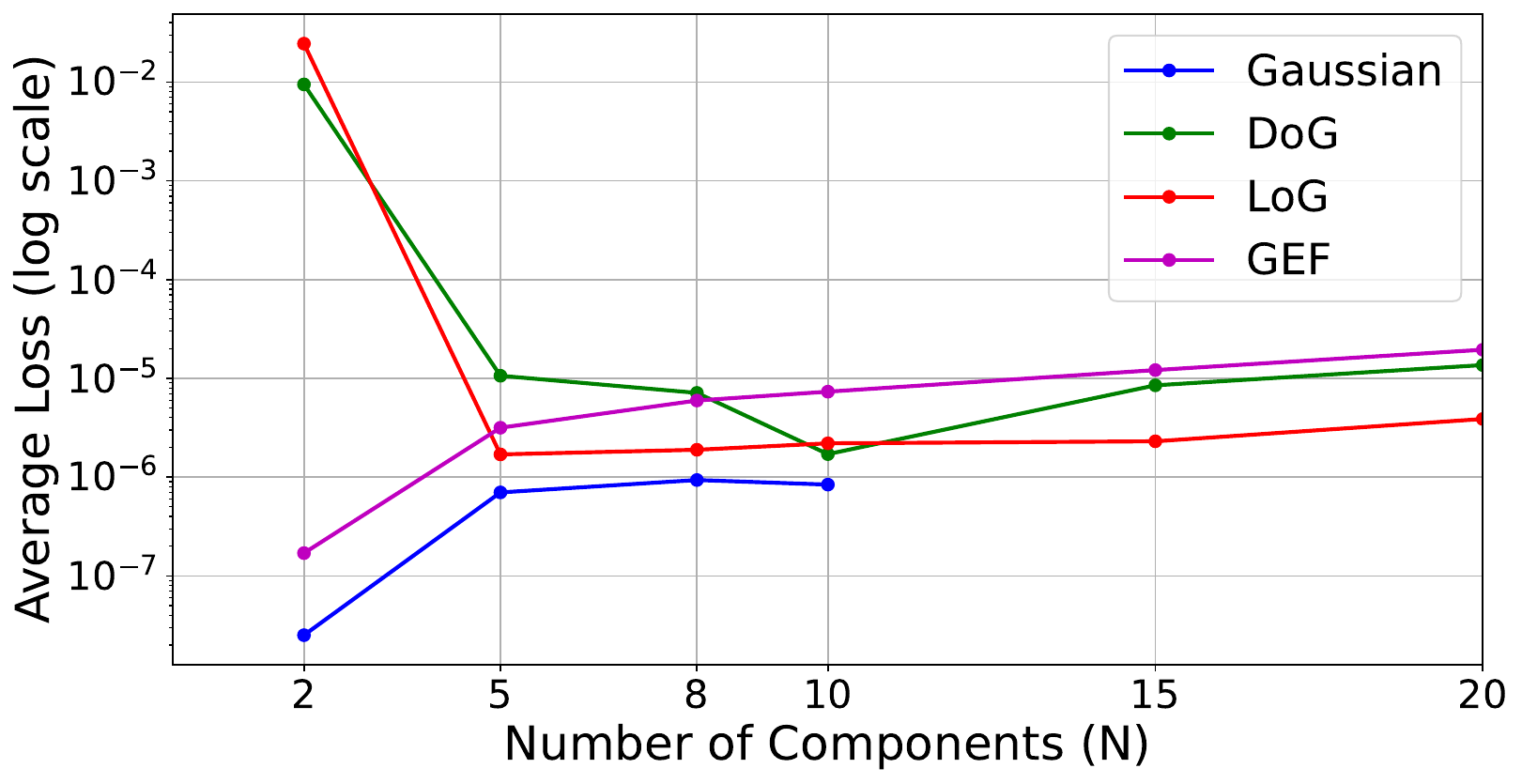} &
      \includegraphics[width=0.325\textwidth]{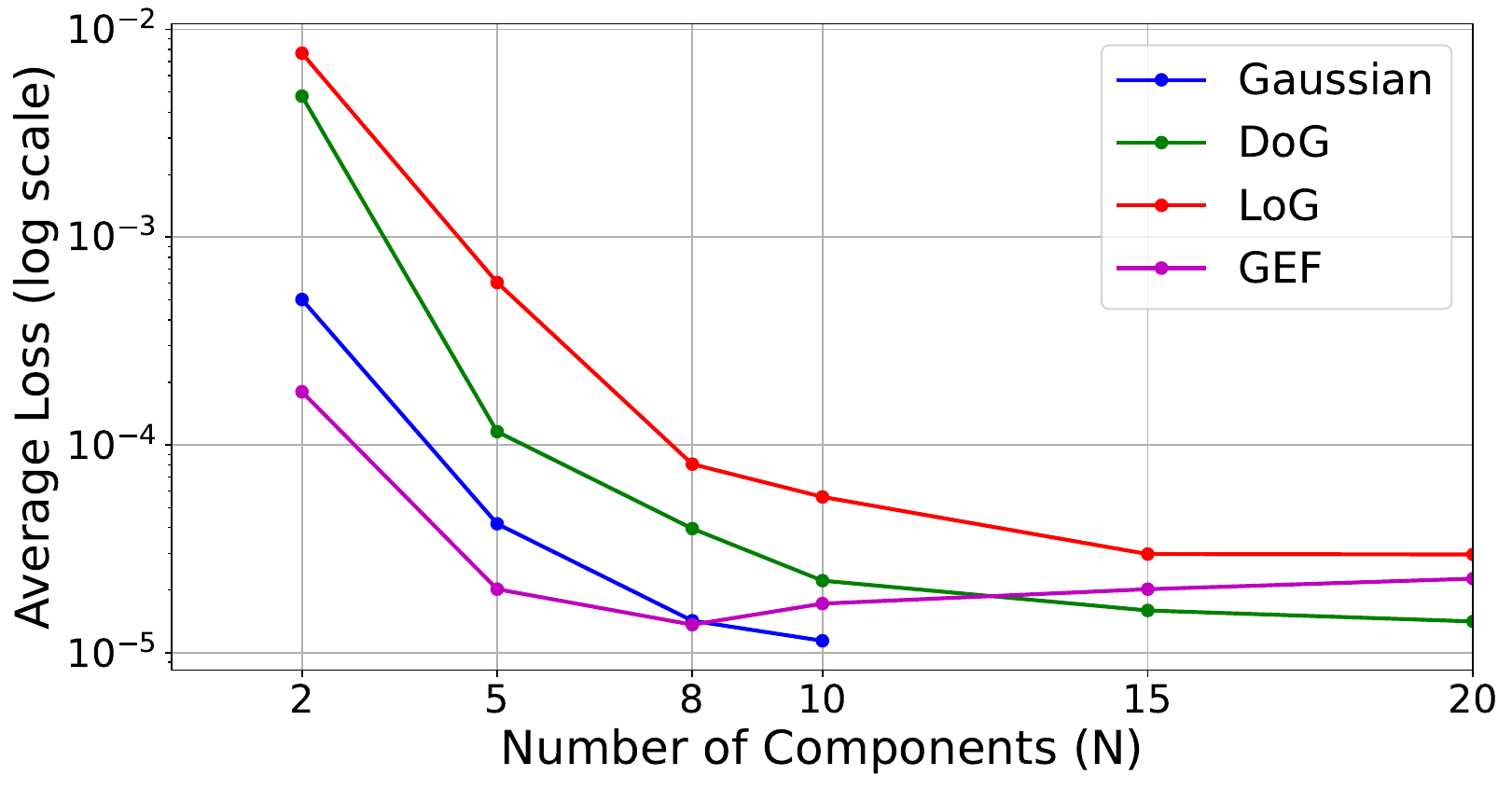} 
  \end{tabular}
  }
  \caption{\textbf{Numerical Simulation Results of Different Mixtures.} We show a comparison of average loss for different mixture models optimized with gradient-based optimizers across varying numbers of components on various signal types (a-f). In the case of `NaN` loss ( gradient explosion), the results are not shown on the plots. Full simulation results are provided in \supp }
  \label{fig:loss-stability}
  \end{figure*}
\section{Properties of Generalized Exponentials}\label{sec:prelinminary}
\vspace{-4pt}
\subsection{Generalized Exponential Function}
\vspace{-2pt}
\minorsection{Preliminaries}
The Generalized Exponential Function (GEF) is similar to the probability density function (PDF) of the Generalized Normal Distribution (GND) \cite{generlizedgaussian}. This function allows for a more flexible adaptation to various data shapes by adjusting the shape parameter $ \beta \in (0,\infty) $. The GEF is given by:
\begin{equation}\label{eq:gef}
    f(x | \mu, \alpha, \beta, A) = A \exp\left(-\left(\frac{|x - \mu|}{\alpha}\right)^\beta\right)
\end{equation}
where $ \mu \in \mathbb{R} $ is the location parameter, $ \alpha \in \mathbb{R} $ is the scale parameter, $ A \in \mathbb{R}^{+} $ defines a positive amplitude. The behavior of this function is illustrated in \figlabel{\ref{fig:genealized}}.
For $ \beta = 2 $, the GEF becomes a scaled Gaussian $f(x | \mu, \alpha, \beta=2, A) = A e^{-\frac{1}{2}\left(\frac{x - \mu}{\alpha/\sqrt{2}}\right)^2}$.
 The GEF, therefore, provides a versatile framework for modeling a wide range of data by varying $ \beta $, unlike the Gaussian mixtures, which have a low-pass frequency domain. Many common signals, like the square or triangle, are band-unlimited, constituting a fundamental challenge to Gaussian-based methods. In this paper, we try to \textit{learn} a positive $\beta$ for every component of the Gaussian splatting to allow for a generalized 3D representation. 

\minorsection{Theoretical Results}
Despite its generalizable capabilities, the behavior of the GEF cannot be easily studied analytically, as it involves complex integrals of exponentials without closed form that depend on the shape parameter $\beta$. 
We demonstrate in Theorem \ref{thrm:1} in \supp that for specific cases, such as for a square signal, the GEF can achieve a strictly smaller approximation error than the corresponding Gaussian function by properly choosing $ \beta $. The proof exploits the symmetry of the square wave signal to simplify the error calculations. Theorem \ref{thrm:1} provides a theoretical foundation for preferring the GEF over standard Gaussian functions in our \methodname representation instead of 3D Gaussian Splatting \cite{gaussiansplatter}.

\subsection{Assessing 1D GEF Mixtures in Simulation}
\vspace{-2pt}
We evaluate the effectiveness of a mixture of GEFs in representing various one-dimensional (1D) signal types.
This evaluation is conducted by fitting the model to synthetic signals that replicate characteristics properties of common real-world signals.
More details and additional simulation results are provided in \supp.  

\minorsection{Simulation Setup}
The experimental framework was based on a series of parametric models implemented in PyTorch \cite{Paszke2019Pytorch}, designed to approximate 1D signals using mixtures of different functions such as Gaussian (low-pass), Difference of Gaussians (DoG), Laplacian of Gaussian (LoG), and a GEF mixture model. Each model comprised parameters for means, variances (or scales), and weights, with the generalized model incorporating an additional parameter, $\beta$, to control the exponentiation of the GEF function.

\minorsection{Models}
In this section, we briefly overview the mixture models employed to approximate true signals. Detailed formulations are provided in \supp.

    \textbf{Gaussian Mixture:} This model uses a combination of multiple Gaussian functions. Each Gaussian is characterized by its own mean, variance, and weight. The overall model is a weighted sum of these Gaussian functions, which is a low-pass filter.
    
    \textbf{Difference of Gaussians (DoG) Mixture:} The DoG model is a variation of the Gaussian mixture. It is formed by taking the difference between pairs of Gaussian functions with a predefined variance ratio. This model is particularly effective in highlighting contrasts in the signal and is considered a band-pass filter.
    
    \textbf{Laplacian of Gaussian (LoG) Mixture:} This model combines the characteristics of a Laplacian of Gaussian function. Each component in the mixture has specific parameters that control its shape and scale. Just like the DoG, the LoG model is adept at capturing fine details in the signal and is a band-pass filter.
    
    \textbf{Generalized Exponential (GEF) Mixture:} A more flexible version of the Gaussian mixture, this model introduces an additional shape parameter $\beta$. By adjusting this parameter, we can fine-tune the model to better fit the characteristics of the signal. The GEF Mixture frequency response depends on the shape parameter $\beta$.

\begin{figure*}[t!]
	\newlength\mytmplen
	\setlength\mytmplen{.193\linewidth}
	\setlength{\tabcolsep}{1pt}
	\renewcommand{\arraystretch}{0.5}
	\centering
	\begin{tabular}{cccccc}
		Ground Truth & \methodname (Ours) & Gaussians & Mip-NeRF360 & InstantNGP \\
\zoomin{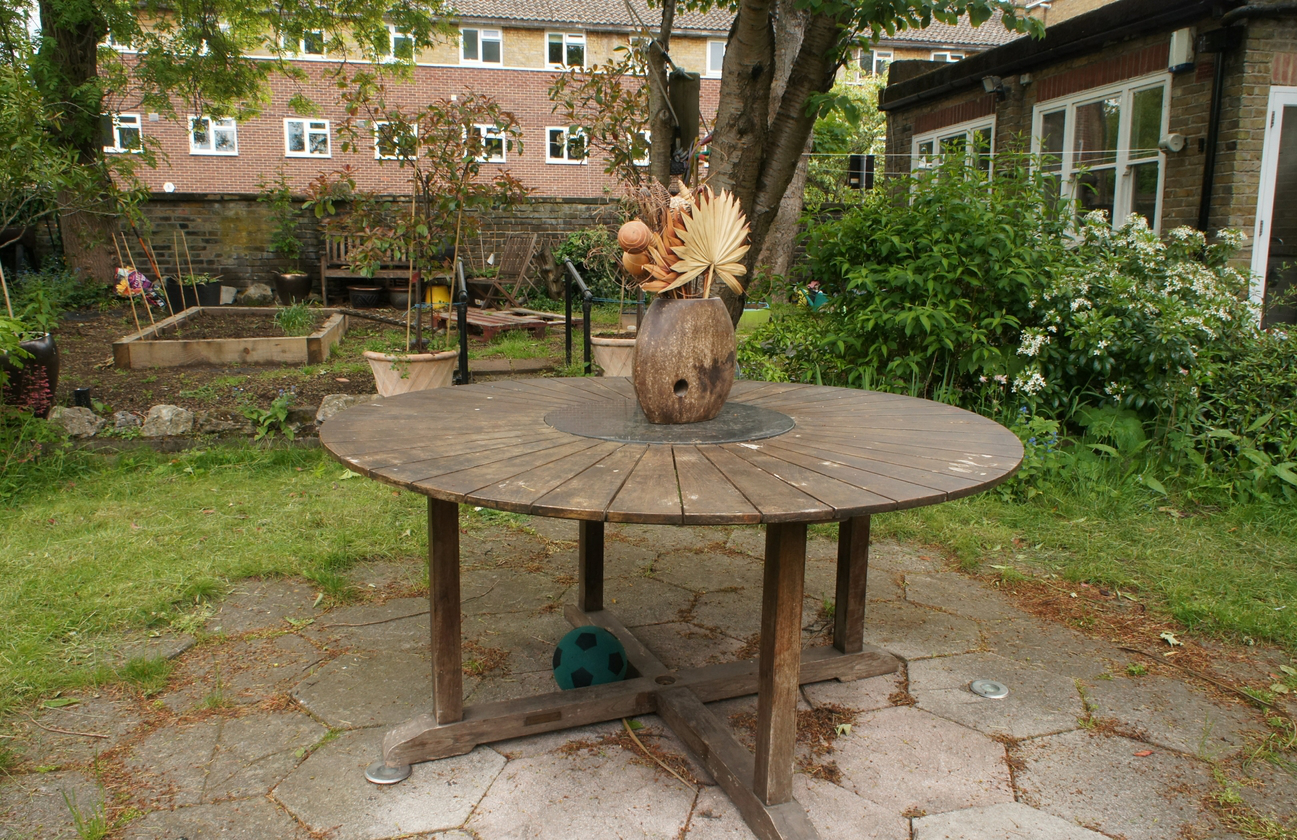}{0.8}{1.65}{0.55cm}{0.55cm}{1cm}{\mytmplen}{3}{red}&
		\zoomin{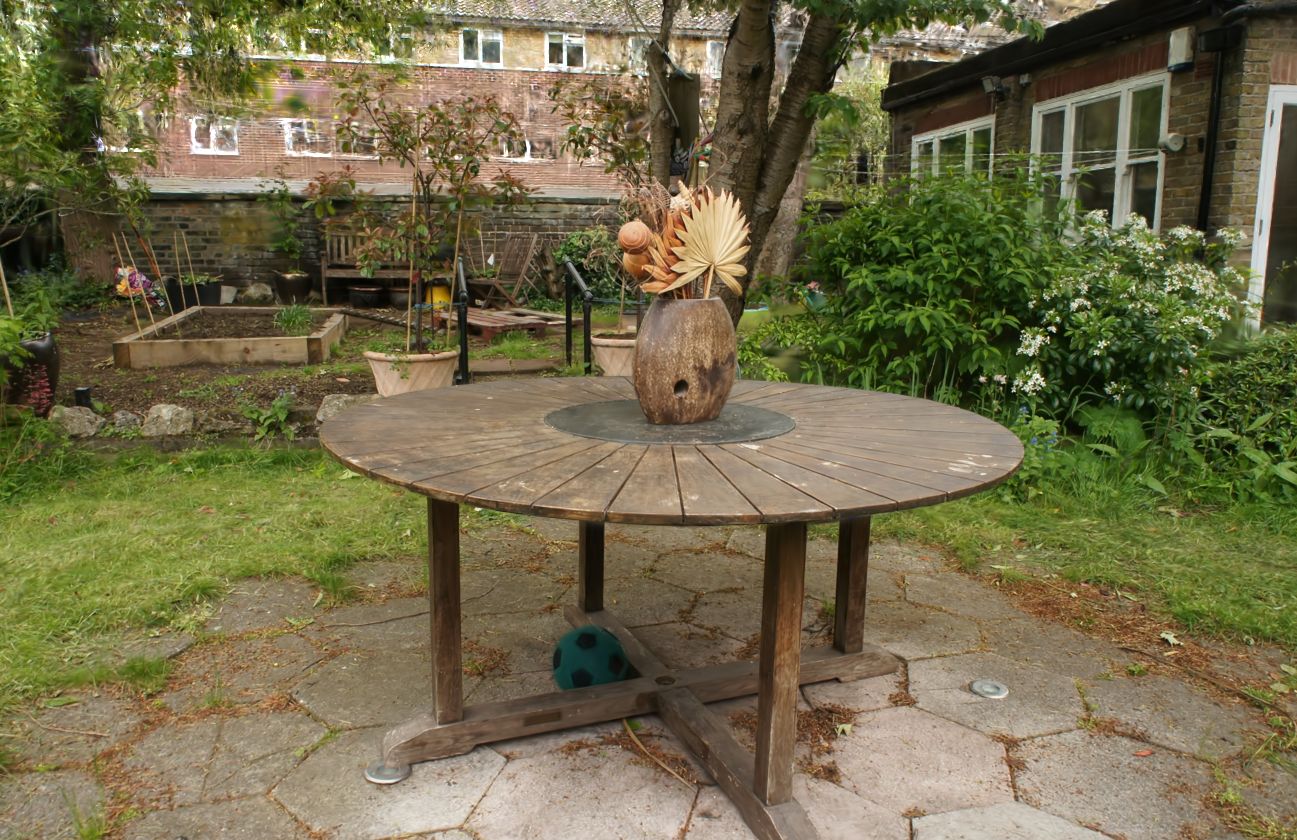}{0.8}{1.65}{0.55cm}{0.55cm}{1cm}{\mytmplen}{3}{red} &
		\zoomin{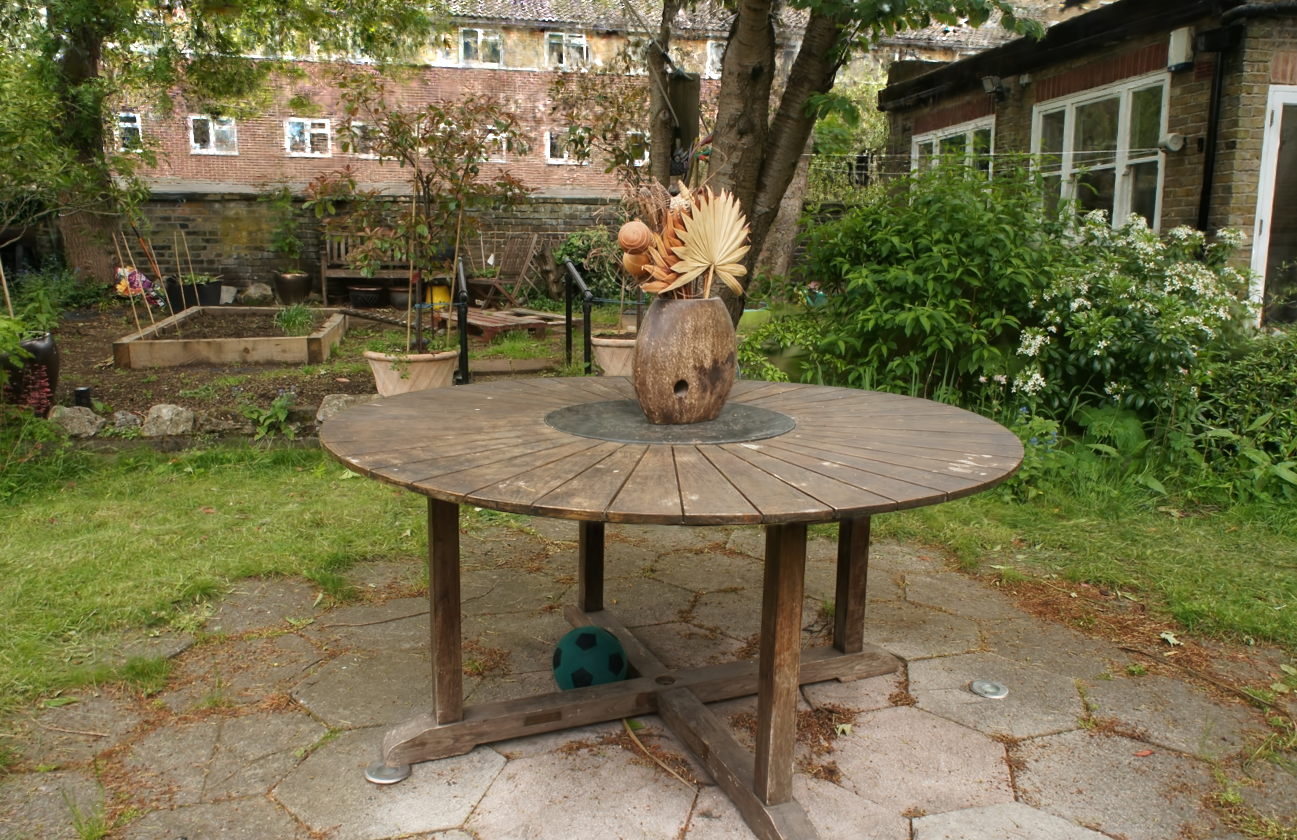}{0.8}{1.65}{0.55cm}{0.55cm}{1cm}{\mytmplen}{3}{red}&
		\zoomin{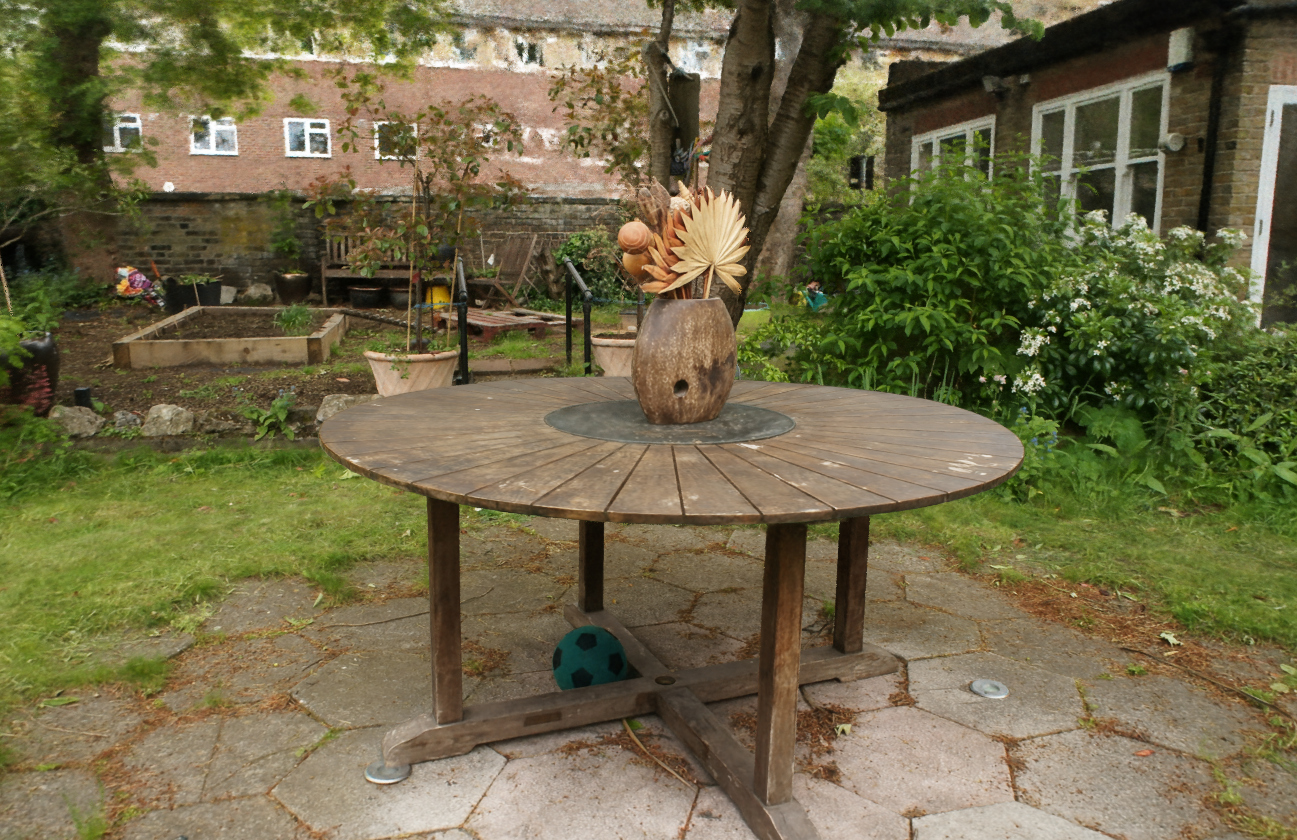}{0.8}{1.65}{0.55cm}{0.55cm}{1cm}{\mytmplen}{3}{red}&
		\zoomin{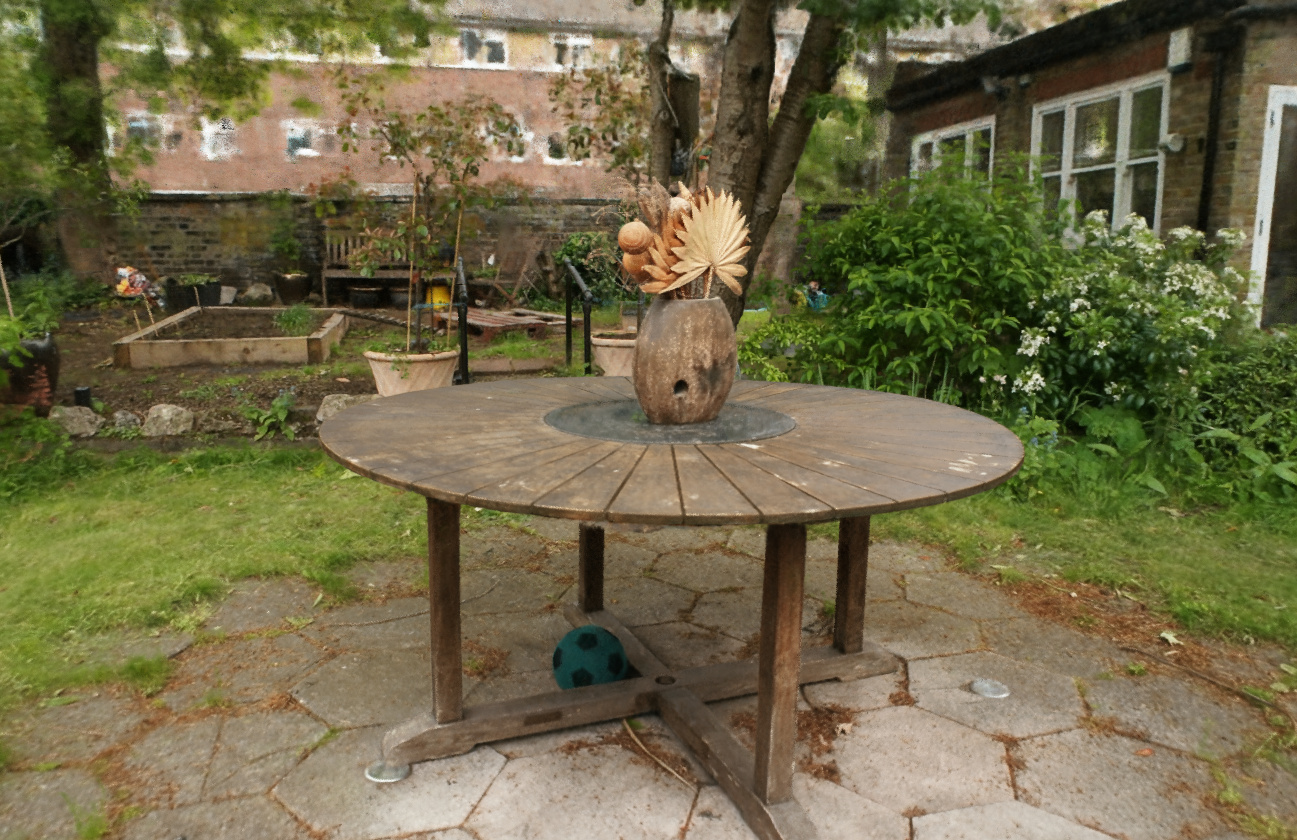}{0.8}{1.65}{0.55cm}{0.55cm}{1cm}{\mytmplen}{3}{red}
\\
		\zoomin{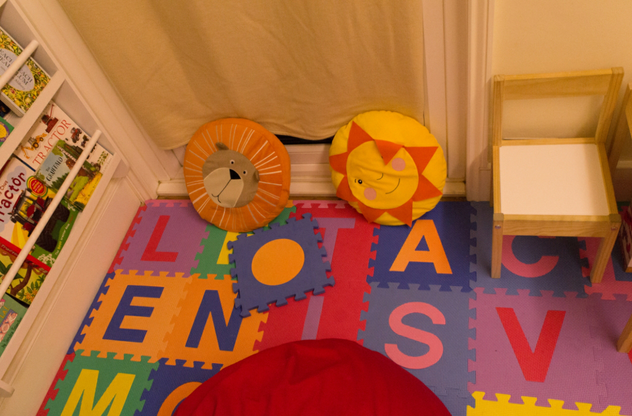}{1.2}{1.4}{0.55cm}{0.55cm}{1cm}{\mytmplen}{3}{green}&
		\zoomin{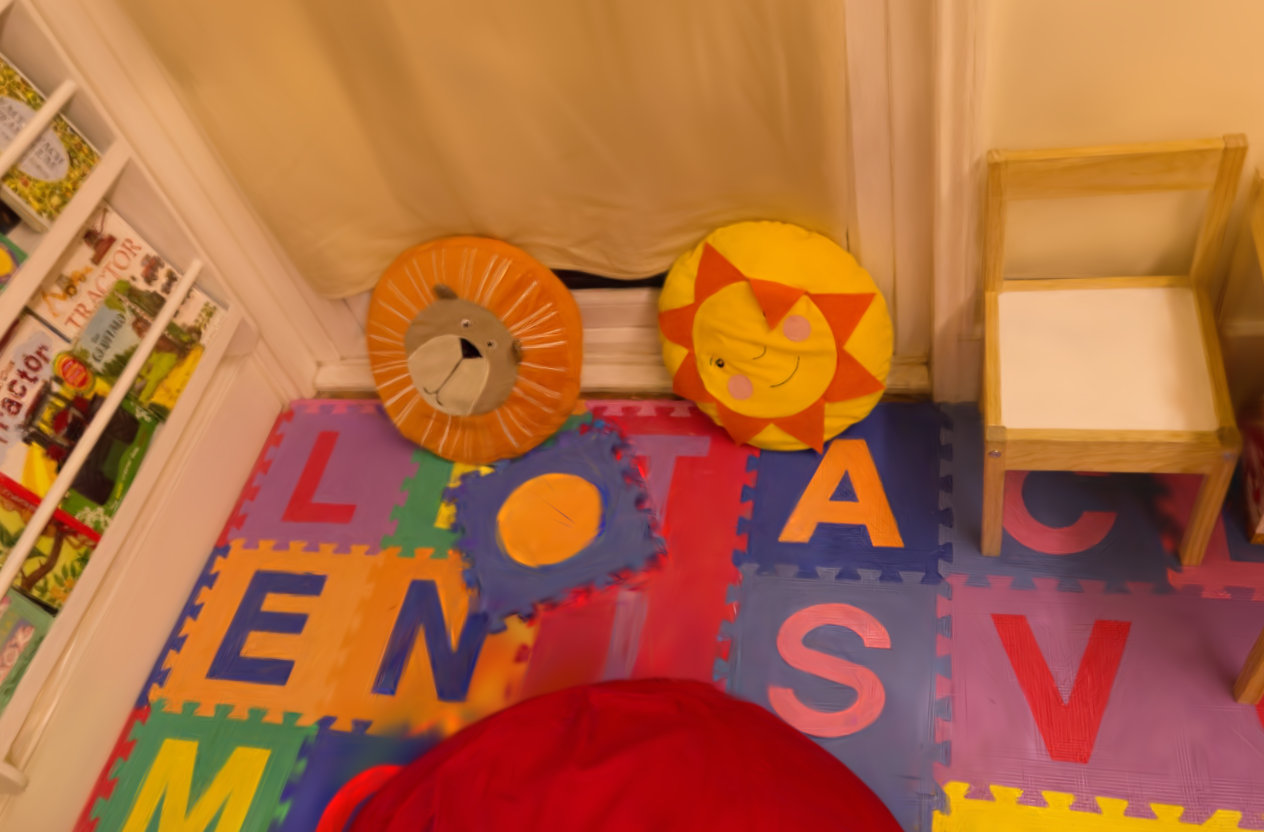}{1.2}{1.4}{0.55cm}{0.55cm}{1cm}{\mytmplen}{3}{green} &
		\zoomin{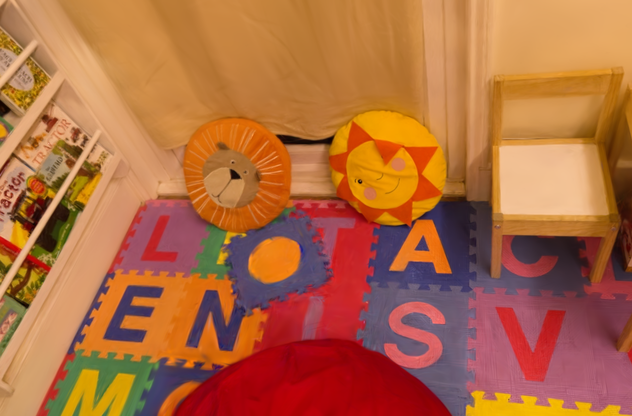}{1.2}{1.4}{0.55cm}{0.55cm}{1cm}{\mytmplen}{3}{green}&
		\zoomin{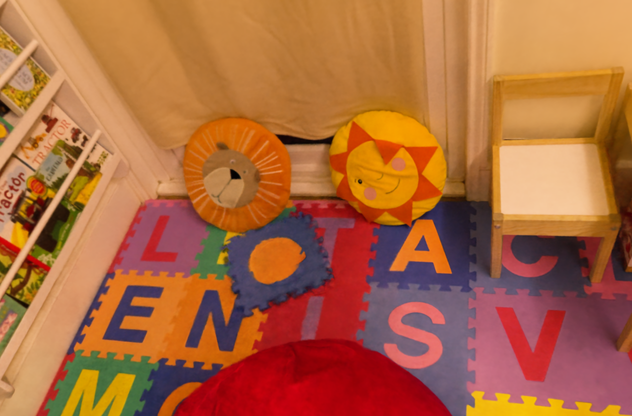}{1.2}{1.4}{0.55cm}{0.55cm}{1cm}{\mytmplen}{3}{green}&
		\zoomin{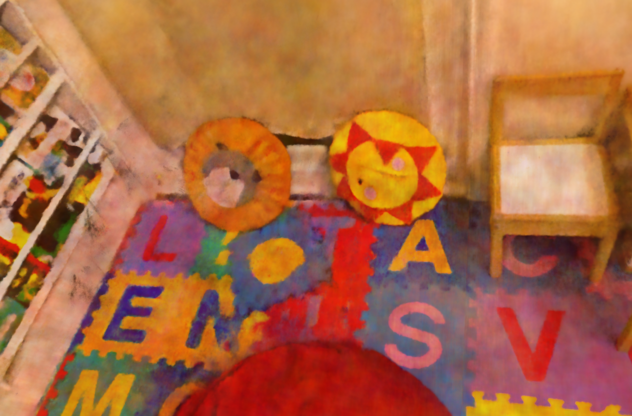}{1.2}{1.4}{0.55cm}{0.55cm}{1cm}{\mytmplen}{3}{green}
\\
		\includegraphics[width=\mytmplen]{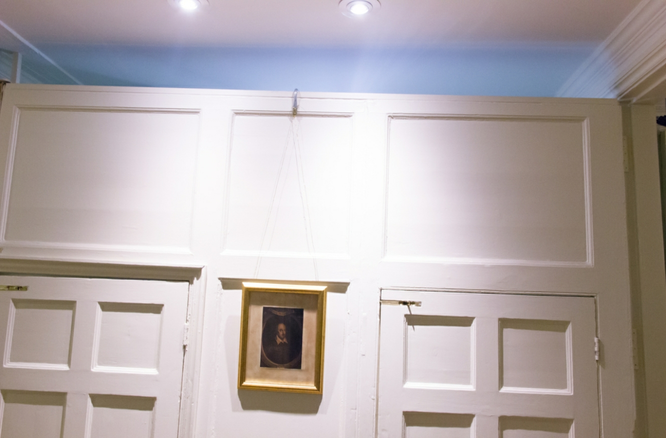} &
		\includegraphics[width=\mytmplen]{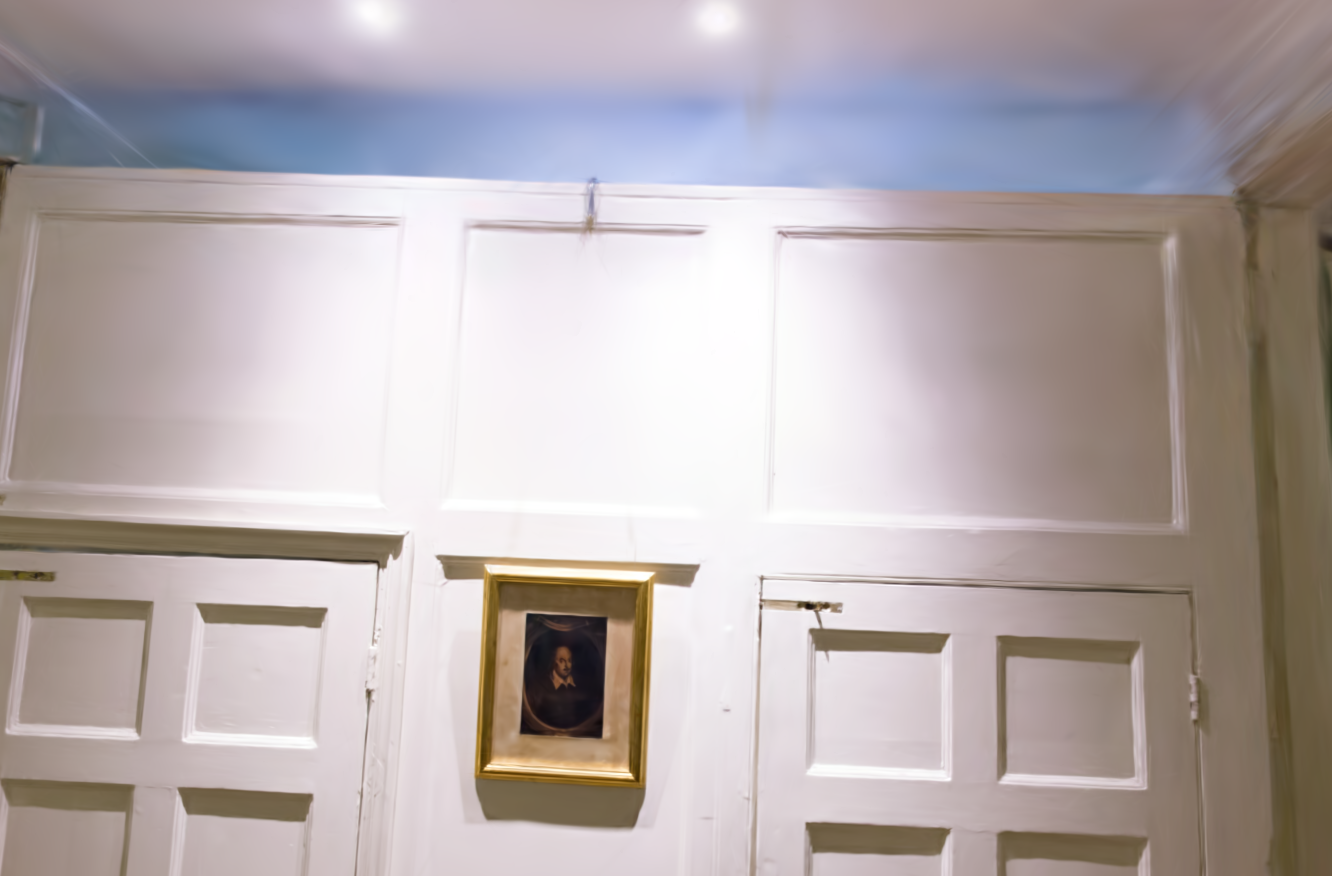} &
		\includegraphics[width=\mytmplen]{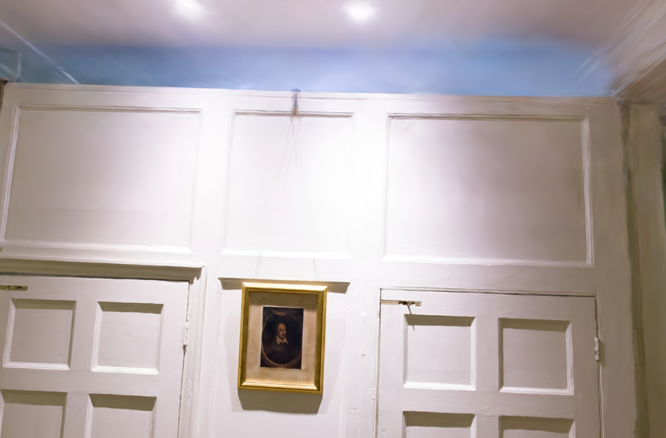} &
		\includegraphics[width=\mytmplen]{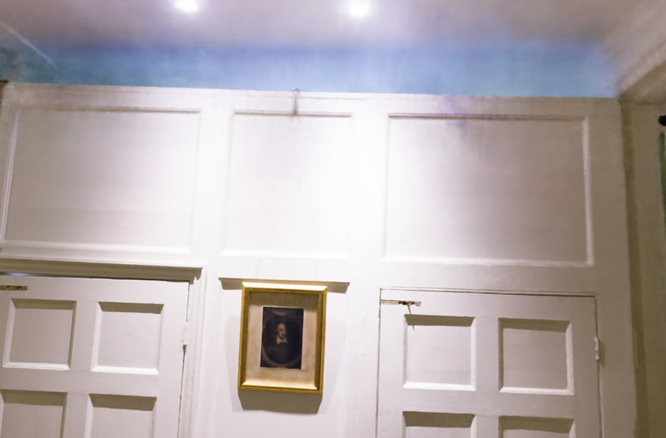} &
		\includegraphics[width=\mytmplen]{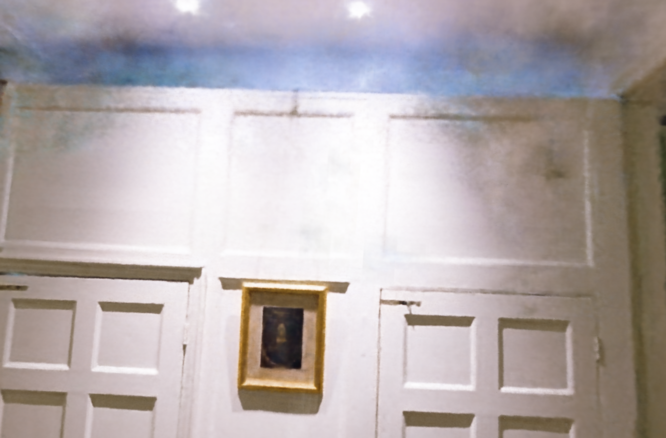} 
\\
		\includegraphics[width=\mytmplen]{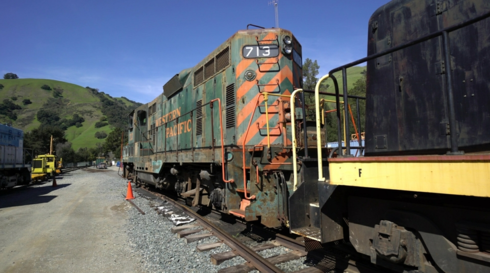} &
		\includegraphics[width=\mytmplen]{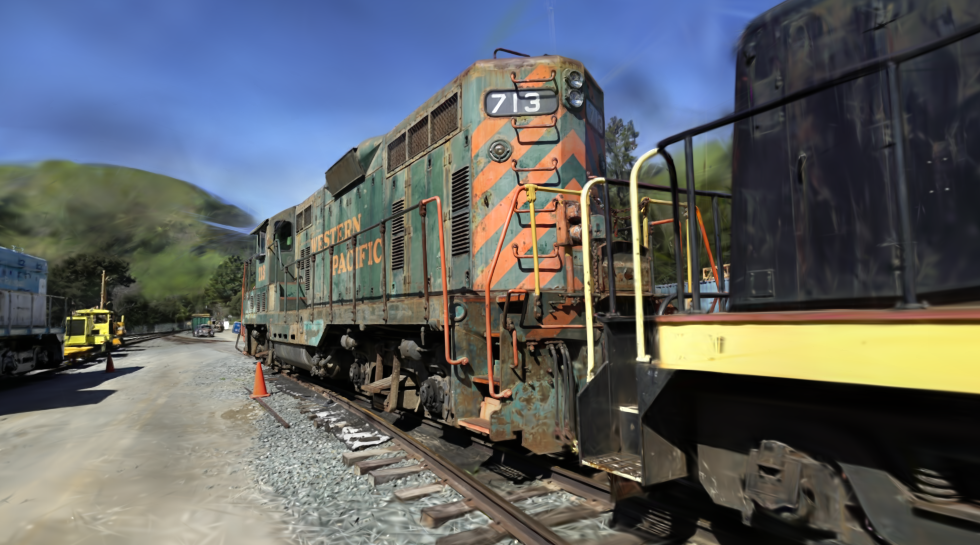} &
		\includegraphics[width=\mytmplen]{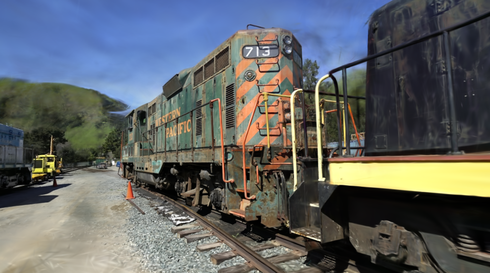} &
		\includegraphics[width=\mytmplen]{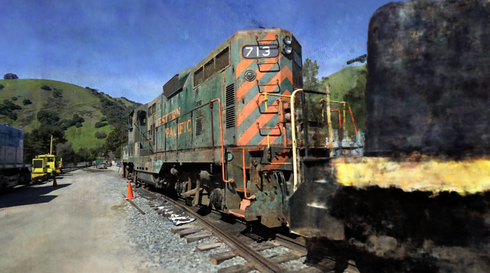} &
		\includegraphics[width=\mytmplen]{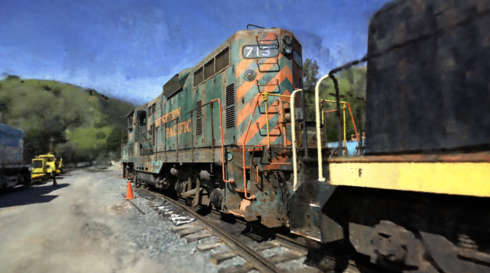}
 \\
	\end{tabular}
    \vspace{-.3cm} %
\caption{
    \label{fig:comparisons}
    \textbf{Visual Comparison on Novel View Synthesis.} We display comparisons between our proposed method and established baselines alongside their respective ground truth images. The depicted scenes are ordered as follows: \textsc{Garden}  and \textsc{Room} from the Mip-NeRF360 dataset; \textsc{DrJohnson} from the Deep Blending dataset; and \textsc{Train} from Tanks\&Temples. Subtle differences in rendering quality are accentuated through zoomed-in details. These specific scenes were picked similarly to Gaussin Splatting \cite{gaussiansplatter} for a fair comparison. It might be difficult in general to see differences between \methodname and Gaussians because they have almost the same PSNR (despite \methodname requiring 50\% less memory).}
	
\end{figure*}

\minorsection{Model Configuration}
The models were configured with a varying number of components $N$, with tests conducted using $N = \{2, 5, 8, 10, 15, 20\}$. The weights of the components are chosen to be positive. All the parameters of all the $N$ components were learned. Each model was trained using the Adam optimizer with a mean squared error loss function. The input $x$ was a linearly spaced tensor representing the domain of the synthetic signal, and the target $y$ was the value of the signal at each point in $x$. Training proceeded for a predetermined number of epochs, and the loss was recorded at the end of training.

\minorsection{Data Generation}
Synthetic 1D signals were generated for various signal types over a specified range, with a given data size and signal width. The signals were used as the ground truth for training the mixture models. The ground truth signals used in the experiment are one-dimensional (1D) functions that serve as benchmarks for evaluating signal processing algorithms. The signal types under study are: \textit{square}, \textit{triangle}, \textit{parabolic}, \textit{half sinusoidal}, \textit{Gaussian}, and \textit{exponential} functions. We show \figlabel{\ref{fig:genealized}} an example of fitting a Gaussian when $N=5$ and a Generalized mixture on the square signal when $N=2$. Note how sharp edges constitute a challenge for Gaussians that have low pass bandwidth while a square signal has an infinite bandwidth known by the sinc function \cite{shannon}.

\minorsection{Simulation Results}
The models' performance was evaluated based on the loss value after training. Additionally, the model's ability to represent the input signal was visually inspected through generated plots. Multiple runs per configuration were executed to account for variance in the results. For a comprehensive evaluation, each configuration was run multiple times (20 runs per configuration) to account for variability in the training process. During these runs, the number of instances where the training resulted in a 'nan' loss was removed from the loss plots, and hence some plots in \figlabel{\ref{fig:loss-stability}} do not have loss values at some $N$. As depicted in \figlabel{\ref{fig:loss-stability}}, the GEF Mixture consistently yielded the lowest loss across the number of components, indicating its effective approximation of many common signals, especially band-unlimited signals like the square and triangle. The only exception is the Gaussian signal, which is (obviously) fitted better with a Gaussian Mixture.

\section{Generalized Exponential Splatting (GES)}\label{sec:method} 
\vspace{-4pt}
Having established the benefits of GEF of \eqlabel{\eqref{eq:gef}} over Gaussian functions, we will now demonstrate how to extend GEF into the Generalized Exponential Splatting (GES) framework, offering a plug-and-play replacement for Gaussian Splatting. We also start with a collection of static images of a scene and their corresponding camera calibrations obtained through Structure from Motion (SfM) \cite{schoenberger2016sfm}, which additionally provides a sparse point cloud. Moving beyond Gaussian models \cite{gaussiansplatter}, \methodname adopts an exponent $\beta$ to tailor the focus of the splats, thus sharpening the delineation of scene edges. 
This technique is not only more efficient in memory usage but also can surpass Gaussian splatting in established benchmarks for novel view synthesis.

\subsection{Differentiable GES Formulation}
\label{sec:laplacian-splats}
\vspace{-2pt}
Our objective is to enhance novel view synthesis with a refined scene representation. We leverage a generalized exponential form, here termed Generalized Exponential Splatting, which for location $ \mathbf{x} $ in 3D space and a positive definite matrix $ \boldsymbol{\Sigma} $, is defined by:
\begin{equation} \label{eq:ges}
	L(\mathbf{x}; \boldsymbol{\mu}, \boldsymbol{\Sigma}, \beta) = \exp \left\{ -\frac{1}{2} \big((\mathbf{x} - \boldsymbol{\mu})^{\intercal}\boldsymbol{\Sigma}^{-1}(\mathbf{x} - \boldsymbol{\mu})\big)^\frac{\beta}{2} \right\},
\end{equation}
where $ \boldsymbol{\mu} $ is the location parameter and $ \boldsymbol{\Sigma} $ is the covariance matrix equivalance in Gaussian Splatting\cite{gaussiansplatter}. $ \beta $ is a shape parameter that controls the sharpness of the splat. When $ \beta = 2 $, this formulation is equivalent to Gaussian splatting \cite{gaussiansplatter}.
Our approach maintains an opacity measure $\kappa$ for blending and utilizes spherical harmonics for coloring, similar to Gaussian splatting \cite{gaussiansplatter}.

For 2D image projection, we adapt the technique by Zwicker \etal \cite{zwicker2001ewa}, but keep track of our variable exponent $ \beta $. The camera-space covariance matrix $ \boldsymbol{\Sigma}' $ is transformed as follows:
$
	\boldsymbol{\Sigma}' = \mathbf{J} \mathbf{W} \boldsymbol{\Sigma} \mathbf{W}^{\intercal} \mathbf{J}^{\intercal},
$ 
where $ \mathbf{J} $ is the Jacobian of the transformation from world to camera space, and $ \mathbf{W} $ is a diagonal matrix containing the inverse square root of the eigenvalues of $ \boldsymbol{\Sigma} $.
We ensure $\boldsymbol{\Sigma}$ remains positively semi-definite throughout the optimization by formulating it as a product of a scaling matrix $\mathbf{S}$ (modified by some positive modification function $\phi(\beta) > 0$ as we show later) and a rotation matrix $\mathbf{R}$, with optimization of these components facilitated through separate 3D scale vectors $\mathbf{s}$ and quaternion rotations $\mathbf{q}$.

\subsection{Fast Differentiable Rasterizer for Generalized Exponential Splats}
\label{sec:fast-raster-laplacians}
\vspace{-2pt}
\minorsection{Intuition from Volume Rendering}
The concept of volume rendering in the context of neural radiance fields \cite{NeRF} involves the integration of emitted radiance along a ray passing through a scene. The integral equation for the expected color $C(\mathbf{r})$ of a camera ray $\mathbf{r}(t) = \mathbf{o} + t\mathbf{d}$, with near and far bounds $t_n$ and $t_f$, respectively, is given by:
 \begin{align}
 \begin{aligned} \label{eq:transimttance}
C(\mathbf{r}) = \int_{t_n}^{t_f} T(t) \kappa(\mathbf{r}(t)) c(\mathbf{r}(t), \mathbf{d}) \,dt, \\ \text{where} \quad T(t) = \exp\left(-\int_{t_n}^{t} \kappa(\mathbf{r}(s)) \,ds\right).
 \end{aligned}
 \end{align}
Here, $T(t)$ represents the transmittance along the ray from $t_n$ to $t$, $\kappa(\mathbf{r}(t))$ is the volume density, and $c(\mathbf{r}(t), \mathbf{d})$ is the emitted radiance at point $\mathbf{r}(t)$ in the direction $\mathbf{d}$. The total distance $[{t_n},{t_f}]$ crossed by the ray across non-empty space dictates the amount of lost energy and hence the reduction of the intensity of the rendered colors. In the Gaussian Splatting world \cite{gaussiansplatter}, this distance $[{t_n},{t_f}]$ is composed of the projected variances $\alpha$ of each component along the ray direction $\mathbf{o} + t\mathbf{d}$. In our \methodname of \eqlabel{\eqref{eq:ges}}, if the shape parameter $\beta$ of some individual component changes, the effective impact on \eqlabel{\eqref{eq:transimttance}} will be determined by the effective variance projection $\widehat{\alpha}$ of the same component modified by the modifcation function $\phi(\beta)$ as follows: 
 \begin{align}
 \begin{aligned} \label{eq:effective}
\widehat{\alpha}(\beta) = \phi(\beta)\alpha \quad.
 \end{aligned}
 \end{align}
   Note that the modification function $\phi$ we chose does not depend on the ray direction since the shape parameter $\beta$ is a global property of the splatting component, and we assume the scene to comprise many components. We tackle next the choice of the modification function $\phi$ and how it fits into the rasterization framework of Gaussian Splatting \cite{gaussiansplatter}.

\begin{figure}[t]
\centering
\includegraphics[trim={5.3cm 0 7.6cm 2.3cm},clip,width=0.75\linewidth]{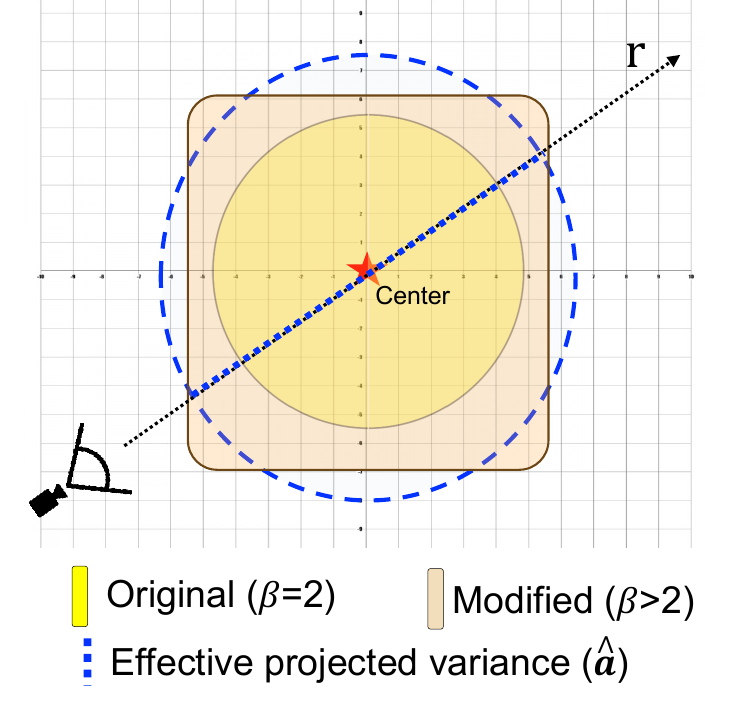}
\vspace{-4pt}
\caption{\textbf{Effective Variance of GES components}. We demonstrate the concept of effective variance projection $\widehat{\alpha}(\beta)$ for an individual splatting component intersecting a camera ray $\mathbf{r}$ under shape modification $(\beta>2)$. Note that $\widehat{\alpha}(\beta)$ is a scaled version of the original splat projected variance $\alpha$.}
\label{fig:explaination}
\end{figure}

\minorsection{Approximate Rasterization}
The main question is how to represent the GES in the rasterization framework. In effect, the rasterization in Gaussian Splatting \cite{gaussiansplatter} only relies on the variance splats of each component. So, we only need to simulate the effect of the shape parameter $\beta$ on the covariance of each component to get the rasterization of \methodname. To do that, we modify the scales matrix of the covariance in each component by the scaler function $\phi(\beta)$ of that component.  From probability theory, the exact conversion between the variance of the generalized exponential distribution and the variance of the Gaussian distribution is given by \cite{generlizedgaussian} as 
\begin{equation} \label{eq:probablity}
\phi(\beta) = \frac{\Gamma(3/\beta)}{\Gamma(1/\beta)} 
\end{equation}
, where $\Gamma$ is the Gamma function. This conversion in \eqlabel{\eqref{eq:probablity}} ensures the PDF integrates to 1. In a similar manner, the integrals in \eqlabel{\eqref{eq:transimttance}} under \eqlabel{\eqref{eq:effective}} can be shown to be equivalent for  Gaussians and \methodname using the same modification of \eqlabel{\eqref{eq:probablity}}. The modification will affect the rasterization \textit{as if} we did perform the exponent change. It is a trick that allows using generalized exponential rasterization without taking the $\beta$ exponent. Similarly, the Gaussian splatting \cite{gaussiansplatter} is \textit{not learning rigid Gaussians}, it learns properties of point clouds that \textit{act as if} there are Gaussians placed there when they splat on the image plane. Both our \methodname and Gaussians are in the same spirit of splatting, and representing 3D with splat properties. \figlabel{\ref{fig:explaination}} demonstrates this concept for an individual splatting component intersecting a ray $\mathbf{r}$ from the camera and the idea of effective variance projection $\widehat{\alpha}$. However, as can be in \figlabel{\ref{fig:explaination}}, this scaler modification $\phi(\beta)$ introduces some view-dependent boundary effect error (\eg if the ray $\mathbf{r}$ passed on the diagonal). We provide an upper bound estimate on this error in \supp . 

Due to the instability of the $\Gamma$ function in \eqlabel{\eqref{eq:probablity}}, we can approximate $\phi(\beta)$ with the following smooth function.
\begin{equation} \label{eq:approx}
\bar{\phi}_{\rho}(\beta)= \frac{2}{1 + e^{-(\rho  \beta - 2 \rho)}}~~.
\end{equation}
The difference between the exact modification $\phi(\beta)$ and the approximate $\bar{\phi}_{\rho}(\beta)$ ( controlled by the hyperparameter shape strength $\rho$ ) is shown in \figlabel{\ref{fig:approximation}}. At $\beta=2$  (Gaussian shape), the modifications $\phi$ and $ \bar{\phi}$ are exactly 1. This parameterization $\bar{\phi}_{\rho}(\beta)$ ensures that the variance of each component remains positive. 
\begin{figure}[t]
\centering
\includegraphics[width=0.98\linewidth]{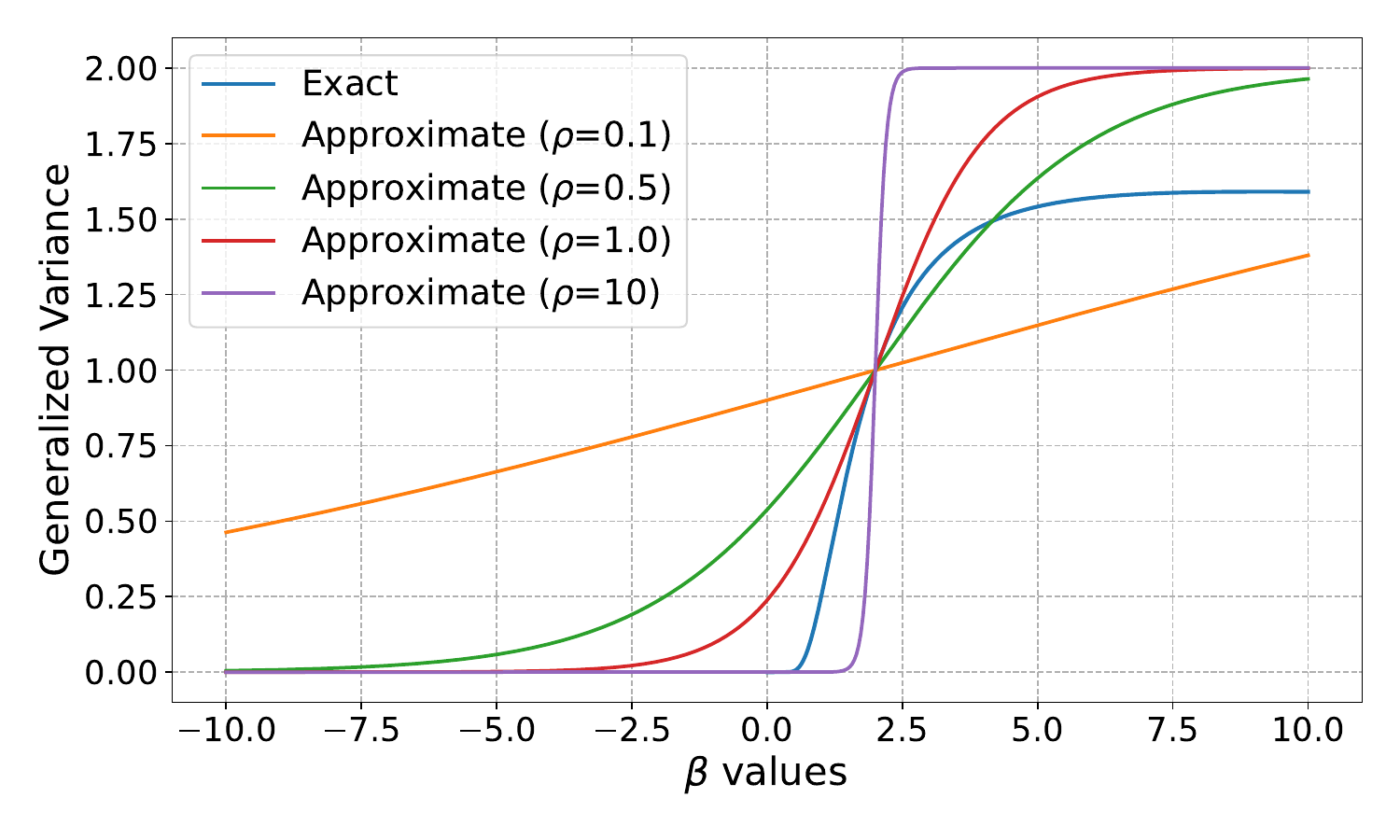}
\vspace{-4pt}
\caption{\textbf{The Modification Function $\phi(\beta)$}. We show different $\rho$ shape strength values of the approximate functions $\bar{\phi}_{\rho}(\beta)$ in \eqlabel{\eqref{eq:approx}} and the exact modification function $\phi(\beta)$ in \eqlabel{\eqref{eq:probablity}}. At $\beta=2$ ( gaussian splats), \textit{all} functions have a variance modification of 1, and \methodname reduces to Gaussian Splatting. In the extreme case of $\rho=0$, \methodname reduces to Gaussian Splatting for \textit{any} $\beta$.}
\label{fig:approximation}
\end{figure}

\subsection{Frequency-Modulated Image Loss }
\label{sec:image-laplacian-guidance}
\vspace{-2pt}
To effectively utilize the broad-spectrum capabilities of \methodname, it has been enhanced with a frequency-modulated image loss, denoted as $\mathcal{L}_{\omega}$. This loss is grounded in the rationale that \methodname, initially configured with Gaussian low-pass band splats, should primarily concentrate on low-frequency details during the initial stages of training. As training advances, with the splat formations adapting to encapsulate higher frequencies, the optimization's emphasis should gradually shift towards these higher frequency bands within the image. This concept bears a technical resemblance to the frequency modulation approach used in BARF \cite{barf}, albeit applied within the image domain rather than the 3D coordinate space. The loss is guided by a frequency-conditioned mask implemented via a Difference of Gaussians (DoG) filter to enhance edge-aware optimization in image reconstruction tasks modulated by the normalized frequency $\omega$. The DoG filter acts as a band-pass filter, emphasizing the edges by subtracting a blurred version of the image from another less blurred version, thus approximating the second spatial derivative of the image. This operation is mathematically represented as:
\begin{align*} 
\text{DoG}(I) = G(I, \sigma_1) - G(I, \sigma_2),  ~~~ 0 < \sigma_2 < \sigma_1 
\end{align*}
where $ G(I, \sigma) $ denotes the Gaussian blur operation on image $ I $ with standard deviation $ \sigma $. The choice of $ \sigma $ values dictates the scale of edges to be highlighted, effectively determining the frequency band of the filter. We chose $\sigma_1 = 2 \sigma_2$ to ensure the validity of the band-pass filter, where the choice of $\sigma_2$ will determine the target frequency band of the filter. In our formulation, we use predetermined target normalized frequencies $\omega$ ( $\omega= 0\%$ for low frequencies to $\omega= 100\%$ for high frequencies). We chose $\sigma_2= 0.1+10\omega$ to ensure the stability of the filter and reasonable resulting masks.
The filtered image is then used to generate an edge-aware mask $M_\omega $ through a pixel-wise comparison to a threshold value (after normalization) as follows. 
 \begin{align}
 \begin{aligned} \label{eq:mask}
M_\omega  = \mathbbm{1}\big( & \text{DoG}_{\omega}(I_{\text{gt}})_{\text{normalized}} > \epsilon_{\omega}\big)~~, \\ \text{DoG}_{\omega}(I)& = G(I, 0.2+20\omega) - G(I,0.1+10\omega) 
\end{aligned}
\end{align}
, where $0\leq\epsilon_{\omega}\leq 1$ is the threshold ( we pick 0.5) for a normalized response of the filter $\text{DoG}_{\omega}$, $ I_{\text{gt}} $ is the ground truth image, and $\mathbbm{1}$ is the indicator function. See \figlabel{\ref{fig:mask}} for examples of the masks.
The edge-aware frequency-modulated loss $ \mathcal{L}_{\omega} $ is defined as:
\begin{equation} \label{eq:laplace}
\mathcal{L}_{\omega} = \lVert (I - I_{\text{gt}}) \cdot M_\omega \rVert_1,
\end{equation}
where $ I $ is the reconstructed image, and $ \lVert \cdot \rVert_1 $ denotes the L1 norm. This term is integrated into the overall loss, as shown later. The mask is targeted for the specified frequencies $\omega$. We use a linear schedule to determine these target $\omega$ values in \eqlabel{\eqref{eq:laplace}} and \eqlabel{\eqref{eq:mask}} during the optimization of \methodname, $\omega= \frac{\text{current iteration}}{\text{total iterations}}$. 
The loss $\mathcal{L}_{\omega}$ aims to help in tuning the shape $\beta$ based on the nature of the scene. It does so by focusing the \methodname components on low pass signals first during the training before focusing on high frequency with tuning $\beta$ from their initial values. This helps the \textit{efficiency} of \methodname as can be seen later in Table \ref{tab:ablation-general-tanks} (almost free 9\% reduction in memory).

Due to DoG filter sensitivity for high-frequencies, the mask for $ 0\%< \omega \leq 50 \%$ is defined as $1 - M_\omega $ of $ 50\%<\omega \leq 100\%$. This ensures that all parts of the image will be covered by one of the masks $M_\omega$, while focusing on the details more as the optimization progresses.

\begin{figure*}[t] 
\centering
\includegraphics[width=0.24\linewidth]{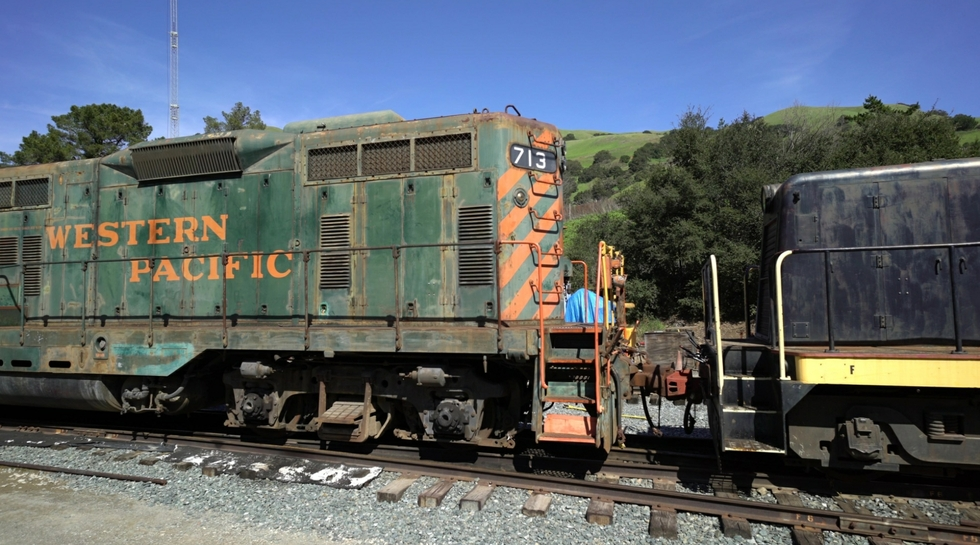}
\includegraphics[trim={1.9cm 2.3cm 1cm 0},clip,width=0.24\linewidth]{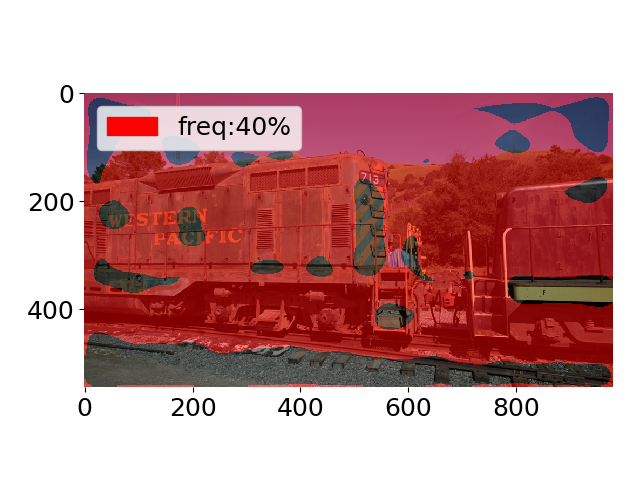}
\includegraphics[trim={1.9cm 2.3cm 1cm 0},clip,width=0.24\linewidth]{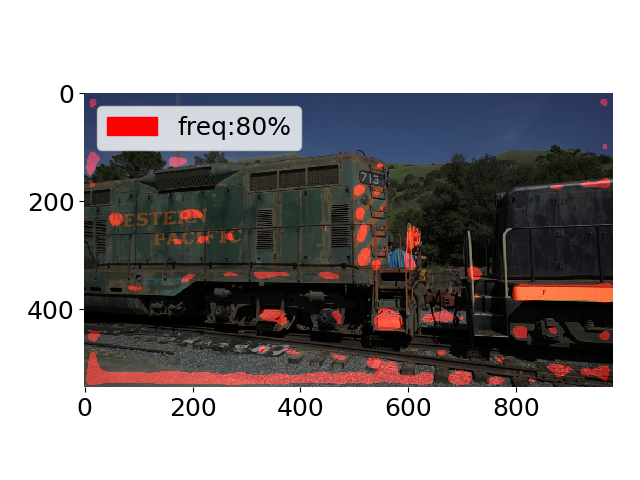}
\includegraphics[trim={1.9cm 2.3cm 1cm 0},clip,width=0.24\linewidth]{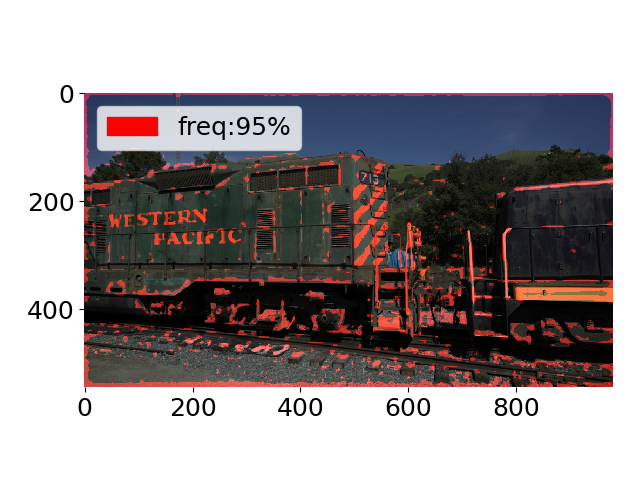}
\vspace{-4pt}
\caption{
\textbf{Frequency-Modulated Image Masks.} For the input example image on the left, We show examples of the frequency loss masks $M_\omega$ used in \seclabel{\ref{sec:image-laplacian-guidance}} for different numbers of target normalized frequencies $\omega$ ( $\omega= 0\%$ for low frequencies to $\omega= 100\%$ for high frequencies). This masked loss helps our \methodname learn specific bands of frequencies. We use a linear schedule to determine these target $\omega$ values during the optimization of \methodname, $\omega= \frac{\text{current iteration}}{\text{total iterations}}$. Note that due to DoG filter sensitivity for high-frequencies, the mask for $ 0< \omega \leq 50 \%$ is defined as $1 - M_\omega $ of $ 50<\omega \leq 100 \%$. This ensures that all parts of the image will be covered by one of the masks $M_\omega$, while focusing on the details more as the optimization progresses.
}
\label{fig:mask}
\end{figure*}
\subsection{Optimization of the Generalized Exponential Splats}
\label{sec:opt-laplaciansplats}

We detail a novel approach for controlling shape density, which selectively prunes \methodname according to their shape attributes, thus eliminating the need for a variable density mechanism. This optimization strategy encompasses the $\beta$ parameter as well as the splat's position $\mathbf{x}$, opacity $\kappa$, covariance matrix $\boldsymbol{\Sigma}$, and color representation through spherical harmonics coefficients \cite{gaussiansplatter}. Optimization of these elements is conducted using stochastic gradient descent, with the process accelerated by GPU-powered computation and specialized CUDA kernels.

Starting estimates for $\boldsymbol{\Sigma}$ and $\mathbf{x}$ are deduced from the SfM points, while all $\beta$ values are initialized with $\beta=2$ (pure Gaussian spalts). The loss function integrates an $\mathcal{L}_1$ metric combined with a structural similarity loss (SSIM), and the frequency-modulated loss$\mathcal{L}_{\omega}$:
\begin{equation} \label{eq:final}
	\mathcal{L} = \lambda_{\text{L1}} \mathcal{L}_1 + \lambda_{\text{ssim}} \mathcal{L}_{\text{ssim}} + 
 \lambda_{\omega} \mathcal{L}_{\omega},
\end{equation}
where $\lambda_{\text{ssim}} = 0.2$ is applied uniformly in all evaluations, and $\lambda_{\text{L1}} = 1 - \lambda_{\text{ssim}} - \lambda_{\omega}  $. Expanded details on the learning algorithm and other specific procedural elements are available in \supp.

\begin{table*}[t!]
	\small
\resizebox{1.0\linewidth}{!}{
  \tabcolsep=0.07cm
\begin{tabular}{l|cccccc|cccccc|cccccc}
			
	Dataset & \multicolumn{6}{c|}{Mip-NeRF360 Dataset}  & \multicolumn{6}{c|}{Tanks\&Temples} & \multicolumn{6}{c}{Deep Blending}\\
	Method|Metric
	& $SSIM^\uparrow$   & $PSNR^\uparrow$    & $LPIPS^\downarrow$  & Train$^\downarrow$  & FPS$^\uparrow$ & Mem$^\downarrow$  
	& $SSIM^\uparrow$   & $PSNR^\uparrow$    & $LPIPS^\downarrow$  & Train$^\downarrow$ & FPS$^\uparrow$  & Mem$^\downarrow$  
	& $SSIM^\uparrow$   & $PSNR^\uparrow$    & $LPIPS^\downarrow$  & Train$^\downarrow$  & FPS$^\uparrow$ & Mem$^\downarrow$  \\
	\hline 
	Plenoxels& 0.626 & 23.08 & 0.463 & 26m & 6.79 & 2.1GB & 0.719 & 21.08 & 0.379 & 25m & 13.0 & 2.3GB & 0.795 & 23.06 & 0.510 & 28m & 11.2 & 2.7GB \\
	INGP & 0.699 & 25.59 & 0.331 & 7.5m & 9.43 & 
 \cellcolor{orange!40} 48MB & 0.745 & 21.92 & 0.305 & 7m & 14.4 & \cellcolor{orange!40} 48MB & 0.817 & 24.96 & 0.390 & 8m & 2.79 & \cellcolor{orange!40} 48MB \\ 
	Mip-NeRF360& 0.792 \cellcolor{yellow!40} & \cellcolor{red!40}  27.69 & \cellcolor{orange!40}  0.237 & 48h & 0.06 & \cellcolor{red!40} 8.6MB  & 0.759 \cellcolor{yellow!40} & \cellcolor{yellow!40}  22.22 & \cellcolor{yellow!40} 0.257 & 48h & 0.14 & \cellcolor{red!40} 8.6MB & \cellcolor{orange!40}  0.901 & \cellcolor{yellow!40}  29.40 &  \cellcolor{orange!40} 0.245 & 48h & 0.09 & \cellcolor{red!40} 8.6MB\\
	3D Gaussians-7K& 0.770 & 25.60 & 0.279 & 6.5m & \cellcolor{orange!40}  160 & 523MB &  0.767 & 21.20 & 0.280 & 7m & \cellcolor{orange!40}  197 & 270MB & 0.875 &27.78 & 0.317 & 4.5m & \cellcolor{red!40}  172 & 386MB \\
	3D Gaussians-30K& \cellcolor{red!40} 0.815 & \cellcolor{orange!40} 27.21 & \cellcolor{red!40} 0.214 & 42m & \cellcolor{yellow!40} 134 & 734MB & \cellcolor{red!40} 0.841 & \cellcolor{orange!40} 23.14 & \cellcolor{red!40} 0.183 & 26m & \cellcolor{yellow!40} 154 & 411MB & \cellcolor{red!40} 0.903 & 29.41 \cellcolor{orange!40}  & \cellcolor{red!40} 0.243 & 36m & \cellcolor{yellow!40} 137 & 676MB\\ \midrule
	\methodname (ours) & \cellcolor{orange!40}   0.794  & \cellcolor{yellow!40}  26.91  & \cellcolor{yellow!40}   0.250 &   32m  & \cellcolor{red!40} 186  &  \cellcolor{yellow!40} 377MB  & \cellcolor{orange!40}  0.836 \cellcolor{orange!40}   & 23.35   \cellcolor{red!40}  & \cellcolor{orange!40} 0.198   &  21m  & \cellcolor{red!40} 210  & \cellcolor{yellow!40} 222MB  & \cellcolor{orange!40}  0.901 \cellcolor{orange!40} & 29.68 \cellcolor{red!40} & 0.252 \cellcolor{yellow!40}  &  30m & \cellcolor{orange!40} 160 & \cellcolor{yellow!40} 399MB \\
\end{tabular}
	}
\caption{
    \textbf{Comparative Analysis of Novel View Synthesis Techniques.} This table presents a comprehensive comparison of our approach with established methods across various datasets. The metrics, inclusive of SSIM, PSNR, and LPIPS, alongside training duration, frames per second, and memory usage, provide a multidimensional perspective of performance efficacy. Note that our training time numbers of the different methods may be computed on different GPUs; they are not necessarily perfectly comparable but are still valid. Note that non-explicit representations (INGP, Mip-NeRF360) have low memory because they rely on additional slow neural networks for decoding. Red-colored results are the best.}
 \label{tab:comparisons}
 \vspace{-3mm}
\end{table*}

\section{Experiments}\label{sec:exp}
\vspace{-4pt}
\subsection{Datasets and Metrics}\label{sec:dataset}
\vspace{-2pt}
In our experiments, we utilized a diverse range of datasets to test the effectiveness of our algorithm in rendering real-world scenes. This evaluation encompassed 13 real scenes from various sources. We particularly focused on scenes from the Mip-Nerf360 dataset \cite{MipNeRF-360}, renowned for its superior NeRF rendering quality, alongside select scenes from the Tanks \& Temples dataset \cite{Knapitsch2017}, and instances provided by Hedman et al. \cite{hedman2018deep} for their work in Deep Blending. These scenes presented a wide array of capture styles, ranging from bounded indoor settings to expansive unbounded outdoor environments.

The quality benchmark in our study was set by the Mip-Nerf360 \cite{MipNeRF}, which we compared against other contemporary fast NeRF methods, such as InstantNGP \cite{InstantNGP} and Plenoxels. Our train/test split followed the methodology recommended by Mip-NeRF360, using every 8th photo for testing. This approach facilitated consistent and meaningful error metric comparisons, including standard measures such as PSNR, L-PIPS, and SSIM, as frequently employed in existing literature (see Table~\ref{tab:comparisons}). Our results encompassed various configurations and iterations, highlighting differences in training time, rendering speeds, and memory requirements for optimized parameters.

\subsection{Implementation Details of \methodname} \label{sec:details}
\vspace{-2pt}
Our methodology maintained consistent hyperparameter settings across all scenes, ensuring uniformity in our evaluations. We deployed an A6000 GPU for most of our tests. Our Generalized Exponential Splatting (\methodname) was implemented over 40,000 iterations, and the density gradient threshold is set to 0.0003. The learning rate for the shape parameter was set at 0.0015, with a shape reset interval of 1000 iterations and a shape pruning interval of 100 iterations. The threshold for pruning based on shape was set at 0.5, while the shape strength parameter was determined to be 0.1, offering a balance between accuracy and computational load. Additionally, the Image Laplacian scale factor was set at 0.2, with the corresponding $\lambda_{\omega}$ frequency loss coefficient marked at 0.5, ensuring edge-enhanced optimization in our image reconstruction tasks. The other hyperparameters and design choices (like opacity splitting and pruning) shared with Gaussian splitting \cite{gaussiansplatter}  were kept the same.  More details are provided in \supp.

\section{Results}\label{sec:results}
\vspace{-4pt}
\subsection{Novel View Synthesis Results}
\vspace{-2pt}
We evaluated \textit{\methodname} against several state-of-the-art techniques in both novel view synthesis tasks. Table \ref{tab:comparisons} encapsulate the comparative results in addition to \figlabel{\ref{fig:comparisons}}.
Table \ref{tab:comparisons} demonstrates that \textit{\methodname} achieves a balance between high fidelity and efficiency in novel view synthesis. Although it does not always surpass other methods in SSIM or PSNR, it significantly excels in memory usage and speed. With only 377MB of memory and a processing speed of 2 minutes, \textit{\methodname} stands out as a highly efficient method, particularly when compared to the 3D Gaussians-30K and Instant NGP, which require substantially more memory or longer processing times.  Overall, the results underscore \textit{\methodname}'s capability to deliver balanced performance with remarkable efficiency, making it a viable option for real-time applications that demand both high-quality output and operational speed and memory efficiency.

Note that it is difficult to see the differences in \textit{visual effects} between \methodname and Gaussians in \figlabel{\ref{fig:comparisons}} since they have almost the same PSNR but a different file size (Table \ref{tab:comparisons}). For a fair visual comparison, we restrict the number of components to be roughly the same (by controlling the splitting of Gaussians) and show the results in \figlabel{\ref{fig:fair}}. It clearly shows that \methodname can model tiny and sharp edges for that scene better than Gaussians.

    \begin{figure}[t]
\setlength\mytmplen{.32\linewidth}
      \centering
  \tabcolsep=0.03cm
  \resizebox{1.0\linewidth}{!}{
  \begin{tabular}{ccc}
 Ground Truth & \textbf{GES(ours)} & Gaussians \\
\zoomincrop{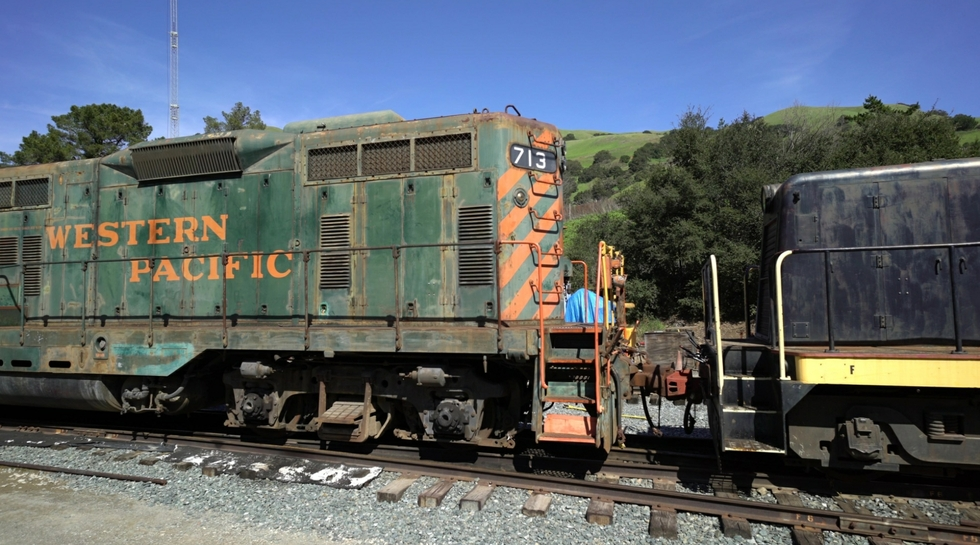}{0.5}{1.25}{1.2cm}{0.5cm}{1cm}{\mytmplen}{2.5}{red}  &
\zoomin{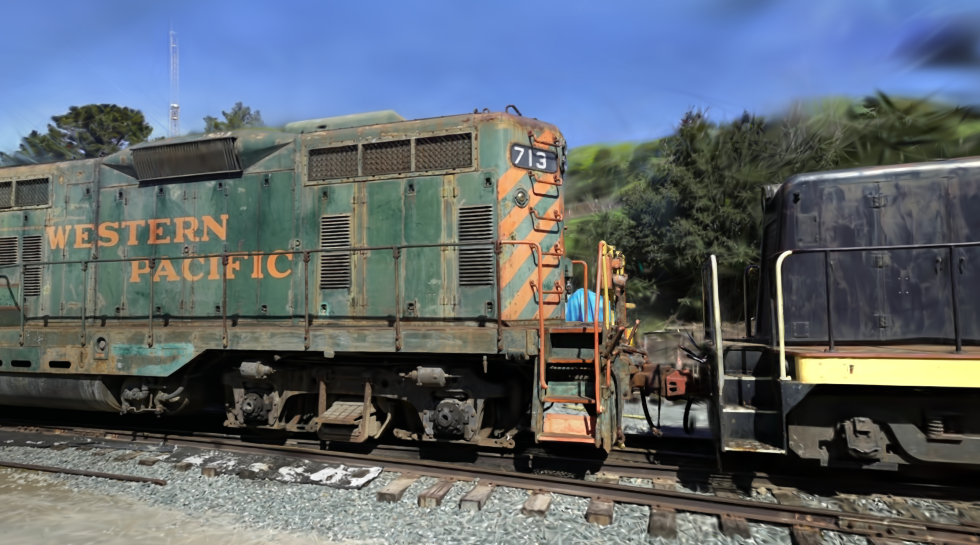}{0.5}{1.25}{1.2cm}{0.5cm}{1cm}{\mytmplen}{2.5}{red}  &
\zoomin{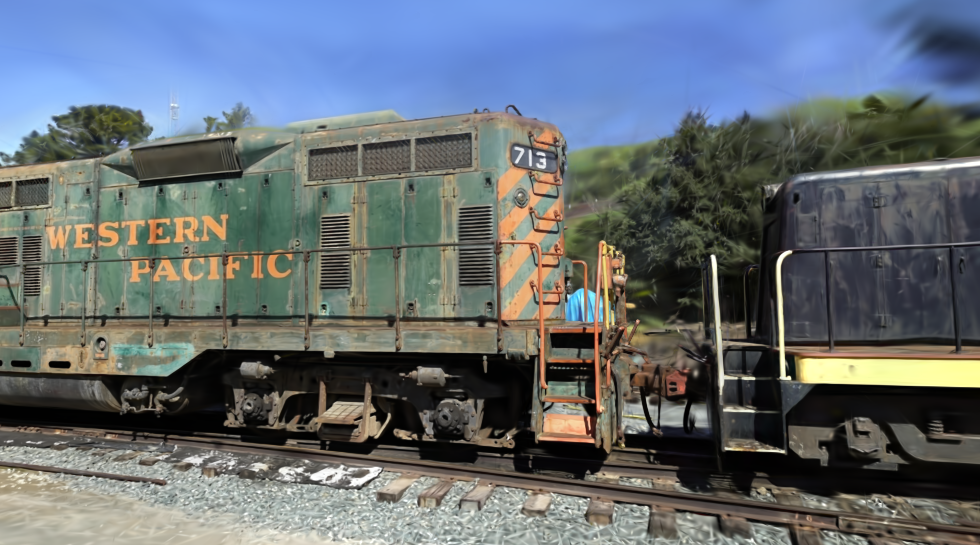}{0.5}{1.25}{1.2cm}{0.5cm}{1cm}{\mytmplen}{2.5}{red}  \\          
  \end{tabular}
  }
  \vspace{-4pt}
      \caption{
\textbf{Fair Visual Comparison.} We show an example of Gaussians \cite{gaussiansplatter} and \methodname when constrained to \textit{the same number} of splatting components for a fair visual comparison. It clearly shows that \methodname can model tiny and sharp edges for that scene better than Gaussians.
      }
      \label{fig:fair}
  \end{figure}
\subsection{Ablation and analysis}\label{sec:ablation}
\vspace{-2pt}

\minorsection{Shape parameters} In Table \ref{tab:ablation-general-mip}, we explore the effect of important hyperparameters associated with the new shape parameter on novel view synthesis performance. We see that proper approximation $\bar{\phi}_{\rho}$ in \eqlabel{\eqref{eq:approx}} is necessary, because  if we set $\rho=10$ for $\bar{\phi}_{\rho}$ to be as close to the exact $\phi(\beta)$ (\figlabel{\ref{fig:approximation}}), the PSNR would drop to 11.6. Additional detailed analysis is provided in \supp .

\minorsection{Effect of frequency-modulated image loss}
We study the effect of the frequency loss $\mathcal{L}_{\omega}$ introduced in \secLabel{\ref{sec:image-laplacian-guidance}} on the performance by varying $\lambda_\omega$. In table 
\ref{tab:ablation-general-mip} and in \figlabel{\ref{figsup:laplace}} we demonstrate how adding this $\mathcal{L}_{\omega}$ improves the optimization in areas where large contrast exists or where the smooth background is rendered and also improves the efficiency of GES. We notice that increasing $\lambda_\omega$ in \methodname indeed reduces the size of the file, but can affect the performance. We chose $\lambda_\omega =0.5$ as a middle ground between improved performance and reduced file size.  

\minorsection{Analyzing memory reduction}
We find that the reduction in memory after learning $\beta$ is indeed attributed to the reduction of the number of components needed. For example, in the ``Train'' sequence, the number of components is 1,087,264 and 548,064 for Gaussian splatting and \methodname respectively. This translates into the reduction of file size from 275 MB to 129.5 MB when utilizing \methodname.

\minorsection{Applying \methodname in fast 3D generation}
Recent works have proposed to use Gaussian Splatting for 3D generation pipelines such as DreamGaussian~\cite{dreemgaussian} and Text-to-3D using Gaussian Splatting~\cite{text2gaussian}. Integrating \methodname into these Gaussian-based 3D generation pipelines has yielded fast and compelling results with a plug-and-play ability of GES in place of Gaussian Splatting (see \figlabel{\ref{fig:gen3d_vis}}).
\begin{figure*}[t] 
    \setlength\mytmplen{.243\linewidth}
    \setlength{\tabcolsep}{1pt}
    \renewcommand{\arraystretch}{0.5}
    \centering
    \resizebox{1.0\linewidth}{!}{
    \begin{tabular}{cccc}
    \tabcolsep=0.00cm
    Ground Truth& \textbf{\methodname (full)} & \methodname (w/o $\mathcal{L}_{\omega}$ )  & Gaussian Splatting \cite{gaussiansplatter} \\ 
\zoomin{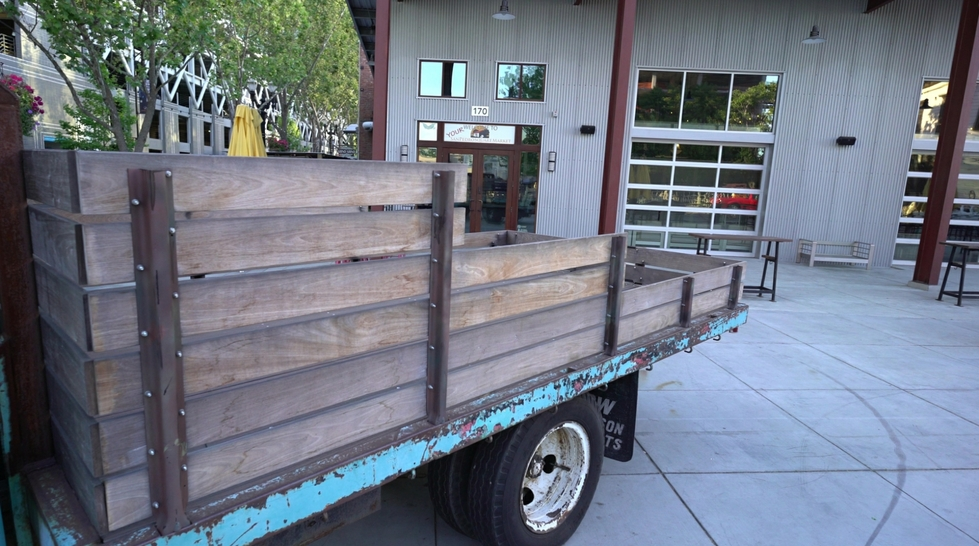}{3.0}{2.1}{3.6cm}{0.55cm}{1cm}{\mytmplen}{3}{red} &
\zoomin{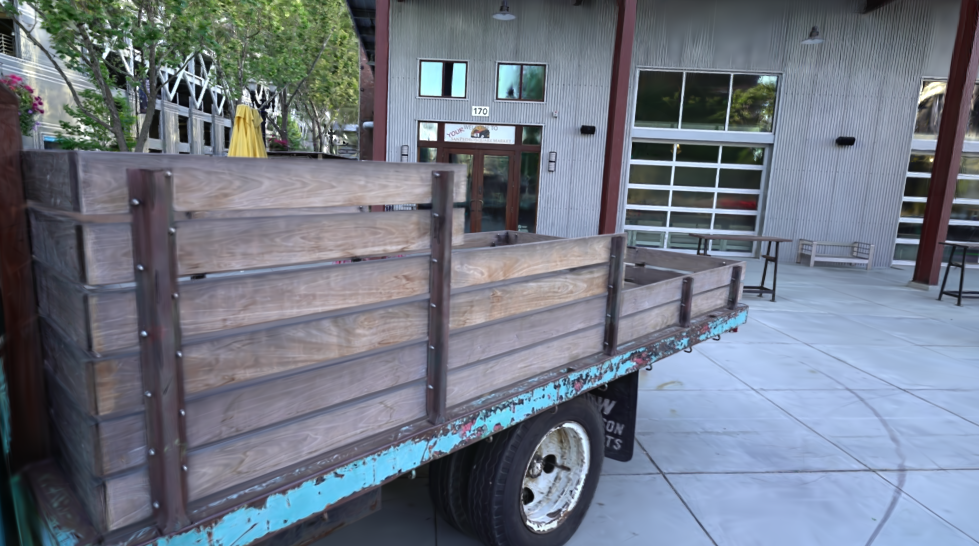}{3.0}{2.1}{3.6cm}{0.55cm}{1cm}{\mytmplen}{3}{red} &
\zoomin{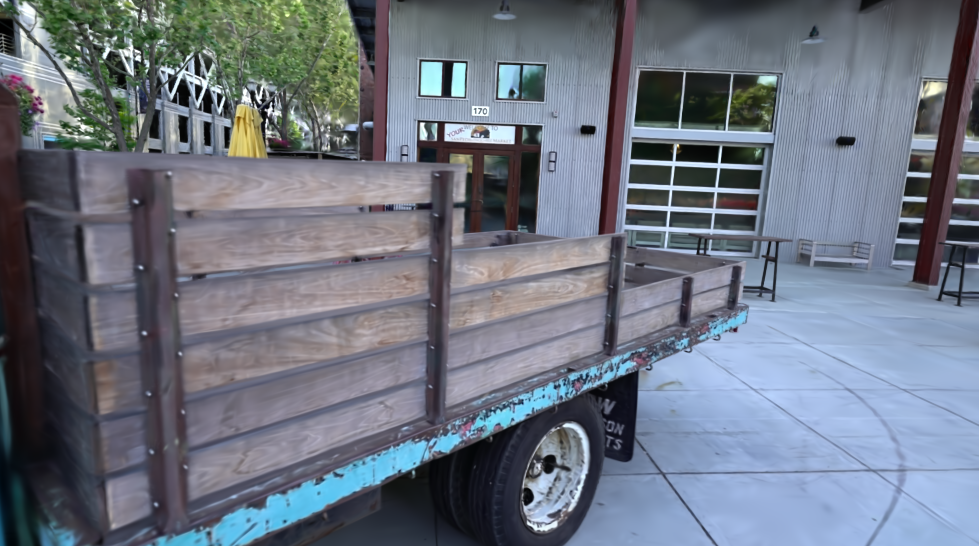}{3.0}{2.1}{3.6cm}{0.55cm}{1cm}{\mytmplen}{3}{red} &
\zoomin{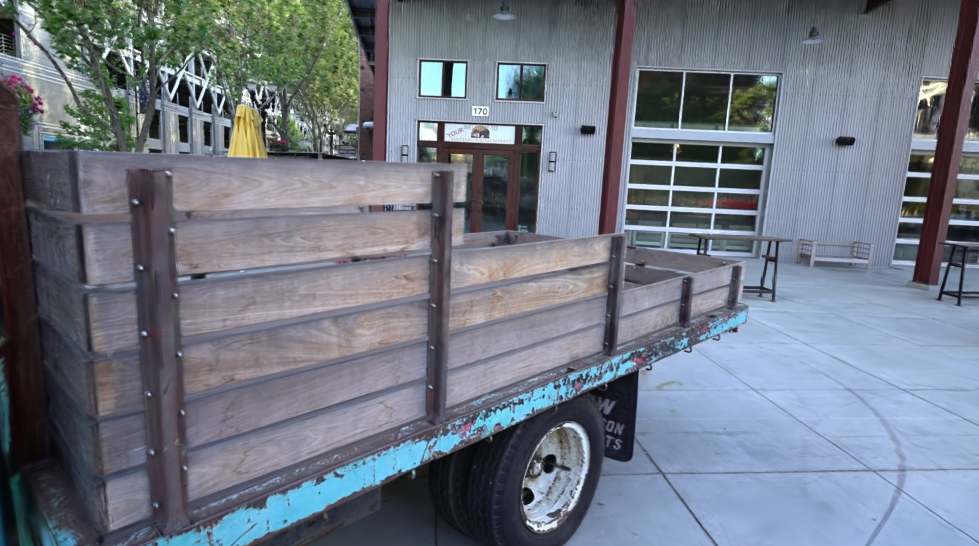}{3.0}{2.1}{3.6cm}{0.55cm}{1cm}{\mytmplen}{3}{red} \\
    \end{tabular} }
    \caption{
    \textbf{Frequency-Modulated Loss Effect.} We show the effect of the frequency-modulated image loss $\mathcal{L}_{\omega}$ on the performance on novel views synthesis. Note how adding this $\mathcal{L}_{\omega}$ improves the optimization in areas where a large contrast exists or a smooth background is rendered.  
    }
    \label{figsup:laplace}
    \end{figure*}

\begin{table}[t]
    \centering
      \resizebox{0.99\linewidth}{!}{%
          \tabcolsep=0.1cm
    \begin{tabular}{lcccc}
    \toprule
     \multirowcell{1}{Ablation Setup}  & \multirowcell{1}{$PSNR^\uparrow$} & \multirowcell{1}{$SSIM^\uparrow$} & \multirowcell{1}{$LPIPS^\downarrow$} & \multirowcell{1}{\textbf{Size (MB)}$^\downarrow$} \\
    \midrule
    Gaussians   & 27.21 & 0.815 & 0.214 & 734 \\ \hline
    GES w/o approx. $\bar{\phi}_{\rho}$ & 11.60 & 0.345 & 0.684 & 364 \\
    GES w/o shape reset & 26.57 & 0.788 & 0.257 & 374 \\
    GES w/o $\mathcal{L}_{\omega}$ loss & 27.07 & 0.800 & 0.250 & 411 \\
     Full GES  & 26.91 & 0.794 & 0.250 & 377 \\
    \bottomrule
    \end{tabular}
    \vspace{-6pt} 
    }    
    \footnotesize
    \caption{\textbf{Ablation Study on Novel View Synthesis.} We study the impact of several components in \methodname on the reconstruction quality and file size in the Mip-NeRF360 dataset.}
    \label{tab:ablation-general-mip}
    \end{table}

\begin{figure}[t!] 
\centering

\includegraphics[trim={2.6cm 1.9cm 9.0cm 1.0cm},clip,width=0.7\linewidth]{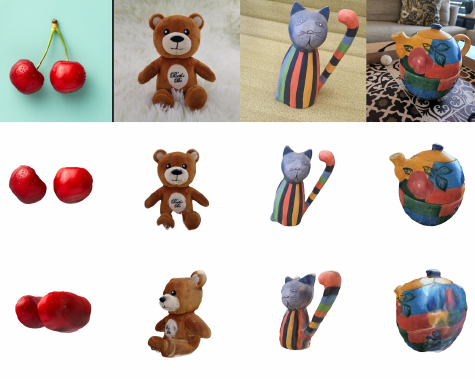}
\caption{
\textbf{\methodname Application: Fast Image-to-3D Generation}. We show selected 3D generated examples from Co3D images \cite{co3d} by combining GES with the Gaussian-based 3D generation pipeline \cite{dreemgaussian}, highlighting the plug-and-play benefits of \methodname to replace Gaussian Splatting \cite{gaussiansplatter}.  
}
\label{fig:gen3d_vis}
\vspace{-1mm}
\end{figure}
\section{Conclusion and discussion}\label{sec:con}
\vspace{-4pt}
This paper introduced \textit{\methodname} (Generalized Exponential Splatting), a new technique for 3D scene modeling that improves upon Gaussian Splatting in memory efficiency and signal representation, particularly for high-frequency signals. Our empirical results demonstrate its efficacy in novel view synthesis and 3D generation tasks.

\inlinesection{Limitation}\label{sec:limit}
One obvious limitation in our approach is that performance typically drops trying to make the representation as memor-efficient and as compact as possible. This is more noticeable for more complex scenes due to the pruning operations that depend on $\beta$-tuning. Removing many of the components can eventually drop the PSNR performance (Table 1 last 2 rows). Future research could focus on enhancing \methodname's performance in more complex and dynamic environments and exploring its integration with other technologies in 3D modeling.

{
\small
\bibliographystyle{ieee_fullname}
\bibliography{main}

\begin{thebibliography}{10}\itemsep=-1pt

\bibitem{RomeInADay}
Sameer Agarwal, Yasutaka Furukawa, Noah Snavely, Ian Simon, Brian Curless,
  Steven~M Seitz, and Richard Szeliski.
\newblock Building rome in a day.
\newblock {\em Communications of the ACM}, 54(10):105--112, 2011.

\bibitem{aliev2020neural}
Kara-Ali Aliev, Artem Sevastopolsky, Maria Kolos, Dmitry Ulyanov, and Victor
  Lempitsky.
\newblock Neural point-based graphics.
\newblock In {\em Computer Vision--ECCV 2020: 16th European Conference,
  Glasgow, UK, August 23--28, 2020, Proceedings, Part XXII 16}, pages 696--712.
  Springer, 2020.

\bibitem{ediffi}
Yogesh Balaji, Seungjun Nah, Xun Huang, Arash Vahdat, Jiaming Song, Karsten
  Kreis, Miika Aittala, Timo Aila, Samuli Laine, Bryan Catanzaro, et~al.
\newblock ediffi: Text-to-image diffusion models with an ensemble of expert
  denoisers.
\newblock {\em arXiv preprint arXiv:2211.01324}, 2022.

\bibitem{MipNeRF}
Jonathan~T Barron, Ben Mildenhall, Matthew Tancik, Peter Hedman, Ricardo
  Martin-Brualla, and Pratul~P Srinivasan.
\newblock Mip-nerf: A multiscale representation for anti-aliasing neural
  radiance fields.
\newblock In {\em Proceedings of the IEEE/CVF Conference on Computer Vision and
  Pattern Recognition (CVPR)}, pages 5855--5864, 2021.

\bibitem{MipNeRF-360}
Jonathan~T. Barron, Ben Mildenhall, Dor Verbin, Pratul~P. Srinivasan, and Peter
  Hedman.
\newblock Mip-nerf 360: Unbounded anti-aliased neural radiance fields.
\newblock In {\em Proceedings of the IEEE/CVF Conference on Computer Vision and
  Pattern Recognition (CVPR)}, pages 5470--5479, June 2022.

\bibitem{ZipNerf}
Jonathan~T. Barron, Ben Mildenhall, Dor Verbin, Pratul~P. Srinivasan, and Peter
  Hedman.
\newblock Zip-nerf: Anti-aliased grid-based neural radiance fields.
\newblock {\em ICCV}, 2023.

\bibitem{EG3D}
Eric~R Chan, Connor~Z Lin, Matthew~A Chan, Koki Nagano, Boxiao Pan, Shalini
  De~Mello, Orazio Gallo, Leonidas~J Guibas, Jonathan Tremblay, Sameh Khamis,
  et~al.
\newblock Efficient geometry-aware 3d generative adversarial networks.
\newblock In {\em Proceedings of the IEEE/CVF Conference on Computer Vision and
  Pattern Recognition (CVPR)}, pages 16123--16133, 2022.

\bibitem{text2tex}
Dave~Zhenyu Chen, Yawar Siddiqui, Hsin-Ying Lee, Sergey Tulyakov, and Matthias
  Nie{\ss}ner.
\newblock Text2tex: Text-driven texture synthesis via diffusion models.
\newblock {\em arXiv preprint arXiv:2303.11396}, 2023.

\bibitem{Fantasia3D}
Rui Chen, Yongwei Chen, Ningxin Jiao, and Kui Jia.
\newblock Fantasia3d: Disentangling geometry and appearance for high-quality
  text-to-3d content creation.
\newblock {\em arXiv preprint arXiv:2303.13873}, 2023.

\bibitem{text2gaussian}
Zilong Chen, Feng Wang, and Huaping Liu.
\newblock Text-to-3d using gaussian splatting.
\newblock {\em arXiv preprint arXiv:2309.16585}, 2023.

\bibitem{sdfusion}
Yen-Chi Cheng, Hsin-Ying Lee, Sergey Tulyakov, Alexander~G Schwing, and
  Liang-Yan Gui.
\newblock Sdfusion: Multimodal 3d shape completion, reconstruction, and
  generation.
\newblock In {\em Proceedings of the IEEE/CVF Conference on Computer Vision and
  Pattern Recognition (CVPR)}, 2023.

\bibitem{PatchNeuS}
Fran{\c{c}}ois Darmon, B{\'e}n{\'e}dicte Bascle, Jean-Cl{\'e}ment Devaux,
  Pascal Monasse, and Mathieu Aubry.
\newblock Improving neural implicit surfaces geometry with patch warping.
\newblock In {\em Proceedings of the IEEE/CVF Conference on Computer Vision and
  Pattern Recognition (CVPR)}, pages 6260--6269, 2022.

\bibitem{deitke2023objaverse}
Matt Deitke, Ruoshi Liu, Matthew Wallingford, Huong Ngo, Oscar Michel, Aditya
  Kusupati, Alan Fan, Christian Laforte, Vikram Voleti, Samir~Yitzhak Gadre,
  et~al.
\newblock Objaverse-xl: A universe of 10m+ 3d objects.
\newblock {\em arXiv preprint arXiv:2307.05663}, 2023.

\bibitem{generlizedgaussian}
J~Armando Dominguez-Molina, Graciela Gonz{\'a}lez-Far{\'\i}as, Ram{\'o}n~M
  Rodr{\'\i}guez-Dagnino, and ITESM~Campus Monterrey.
\newblock A practical procedure to estimate the shape parameter in the
  generalized gaussian distribution.
\newblock {\em available through link
  https://www.cimat.mx/BiblioAdmin/RTAdmin/reportes/enlinea/I-01-18{\_}eng.pdf},
  1, 2003.

\bibitem{WideBaseline}
Yilun Du, Cameron Smith, Ayush Tewari, and Vincent Sitzmann.
\newblock Learning to render novel views from wide-baseline stereo pairs.
\newblock In {\em Proceedings of the IEEE/CVF Conference on Computer Vision and
  Pattern Recognition (CVPR)}, 2023.

\bibitem{UncalibratedStereoRig}
Olivier~D Faugeras.
\newblock What can be seen in three dimensions with an uncalibrated stereo rig?
\newblock In {\em Computer Vision—ECCV'92: Second European Conference on
  Computer Vision Santa Margherita Ligure, Italy, May 19--22, 1992 Proceedings
  2}, pages 563--578. Springer, 1992.

\bibitem{plenoxels}
Sara Fridovich-Keil, Alex Yu, Matthew Tancik, Qinhong Chen, Benjamin Recht, and
  Angjoo Kanazawa.
\newblock Plenoxels: Radiance fields without neural networks.
\newblock In {\em Proceedings of the IEEE/CVF Conference on Computer Vision and
  Pattern Recognition (CVPR)}, pages 5501--5510, June 2022.

\bibitem{Make-a-Scene}
Oran Gafni, Adam Polyak, Oron Ashual, Shelly Sheynin, Devi Parikh, and Yaniv
  Taigman.
\newblock Make-a-scene: Scene-based text-to-image generation with human priors.
\newblock In {\em Proceedings of the European Conference on Computer Vision
  (ECCV)}, pages 89--106, 2022.

\bibitem{gross2011point}
Markus Gross and Hanspeter Pfister.
\newblock {\em Point-based graphics}.
\newblock Elsevier, 2011.

\bibitem{hedman2018deep}
Peter Hedman, Julien Philip, True Price, Jan-Michael Frahm, George Drettakis,
  and Gabriel Brostow.
\newblock Deep blending for free-viewpoint image-based rendering.
\newblock {\em ACM Trans. on Graphics (TOG)}, 37(6), 2018.

\bibitem{Text2Room}
Lukas H{\"o}llein, Ang Cao, Andrew Owens, Justin Johnson, and Matthias
  Nie{\ss}ner.
\newblock Text2room: Extracting textured 3d meshes from 2d text-to-image
  models.
\newblock {\em arXiv preprint arXiv:2303.11989}, 2023.

\bibitem{DeepMVS}
Po-Han Huang, Kevin Matzen, Johannes Kopf, Narendra Ahuja, and Jia-Bin Huang.
\newblock Deepmvs: Learning multi-view stereopsis.
\newblock In {\em Proceedings of the IEEE/CVF Conference on Computer Vision and
  Pattern Recognition (CVPR)}, pages 2821--2830, 2018.

\bibitem{DietNeRF}
Ajay Jain, Matthew Tancik, and Pieter Abbeel.
\newblock Putting nerf on a diet: Semantically consistent few-shot view
  synthesis.
\newblock In {\em Proceedings of the IEEE/CVF Conference on Computer Vision and
  Pattern Recognition (CVPR)}, pages 5885--5894, 2021.

\bibitem{Farm3D}
Tomas Jakab, Ruining Li, Shangzhe Wu, Christian Rupprecht, and Andrea Vedaldi.
\newblock Farm3d: Learning articulated 3d animals by distilling 2d diffusion.
\newblock {\em arXiv preprint arXiv:2304.10535}, 2023.

\bibitem{shannon}
A.J. Jerri.
\newblock The shannon sampling theorem—its various extensions and
  applications: A tutorial review.
\newblock {\em Proceedings of the IEEE}, 65(11):1565--1596, 1977.

\bibitem{kato2018neural}
Hiroharu Kato, Yoshitaka Ushiku, and Tatsuya Harada.
\newblock Neural 3d mesh renderer.
\newblock In {\em Proceedings of the IEEE conference on computer vision and
  pattern recognition}, pages 3907--3916, 2018.

\bibitem{gaussiansplatter}
Bernhard Kerbl, Georgios Kopanas, Thomas Leimk{\"u}hler, and George Drettakis.
\newblock 3d gaussian splatting for real-time radiance field rendering.
\newblock {\em ACM Transactions on Graphics}, 42(4), July 2023.

\bibitem{InfoNeRF}
Mijeong Kim, Seonguk Seo, and Bohyung Han.
\newblock Infonerf: Ray entropy minimization for few-shot neural volume
  rendering.
\newblock In {\em Proceedings of the IEEE/CVF Conference on Computer Vision and
  Pattern Recognition (CVPR)}, pages 12912--12921, 2022.

\bibitem{Knapitsch2017}
Arno Knapitsch, Jaesik Park, Qian-Yi Zhou, and Vladlen Koltun.
\newblock Tanks and temples: Benchmarking large-scale scene reconstruction.
\newblock {\em ACM Transactions on Graphics}, 36(4), 2017.

\bibitem{Magic3D}
Chen-Hsuan Lin, Jun Gao, Luming Tang, Towaki Takikawa, Xiaohui Zeng, Xun Huang,
  Karsten Kreis, Sanja Fidler, Ming-Yu Liu, and Tsung-Yi Lin.
\newblock Magic3d: High-resolution text-to-3d content creation.
\newblock In {\em Proceedings of the IEEE/CVF Conference on Computer Vision and
  Pattern Recognition (CVPR)}, 2023.

\bibitem{barf}
Chen-Hsuan Lin, Wei-Chiu Ma, Antonio Torralba, and Simon Lucey.
\newblock Barf: Bundle-adjusting neural radiance fields.
\newblock In {\em IEEE International Conference on Computer Vision ({ICCV})},
  2021.

\bibitem{liu2018paparazzi}
Hsueh-Ti~Derek Liu, Michael Tao, and Alec Jacobson.
\newblock Paparazzi: surface editing by way of multi-view image processing.
\newblock {\em ACM Trans. Graph.}, 37(6):221--1, 2018.

\bibitem{liu2023you}
Ruoshi Liu, Sachit Menon, Chengzhi Mao, Dennis Park, Simon Stent, and Carl
  Vondrick.
\newblock What you can reconstruct from a shadow.
\newblock In {\em Proceedings of the IEEE/CVF Conference on Computer Vision and
  Pattern Recognition}, pages 17059--17068, 2023.

\bibitem{liu2023humans}
Ruoshi Liu and Carl Vondrick.
\newblock Humans as light bulbs: 3d human reconstruction from thermal
  reflection.
\newblock In {\em Proceedings of the IEEE/CVF Conference on Computer Vision and
  Pattern Recognition}, pages 12531--12542, 2023.

\bibitem{Zero-1-to-3}
Ruoshi Liu, Rundi Wu, Basile Van~Hoorick, Pavel Tokmakov, Sergey Zakharov, and
  Carl Vondrick.
\newblock Zero-1-to-3: Zero-shot one image to 3d object.
\newblock {\em arXiv preprint arXiv:2303.11328}, 2023.

\bibitem{liu2019soft}
Shichen Liu, Tianye Li, Weikai Chen, and Hao Li.
\newblock Soft rasterizer: A differentiable renderer for image-based 3d
  reasoning.
\newblock In {\em Proceedings of the IEEE/CVF International Conference on
  Computer Vision}, pages 7708--7717, 2019.

\bibitem{NeuralVolumes}
Stephen Lombardi, Tomas Simon, Jason Saragih, Gabriel Schwartz, Andreas
  Lehrmann, and Yaser Sheikh.
\newblock Neural volumes: Learning dynamic renderable volumes from images.
\newblock {\em arXiv preprint arXiv:1906.07751}, 2019.

\bibitem{loper2014opendr}
Matthew~M Loper and Michael~J Black.
\newblock Opendr: An approximate differentiable renderer.
\newblock In {\em Computer Vision--ECCV 2014: 13th European Conference, Zurich,
  Switzerland, September 6-12, 2014, Proceedings, Part VII 13}, pages 154--169.
  Springer, 2014.

\bibitem{SIFT}
David~G Lowe.
\newblock Distinctive image features from scale-invariant keypoints.
\newblock {\em International Journal of Computer Vision (IJCV)}, 60:91--110,
  2004.

\bibitem{NerfInTheWild}
Ricardo Martin-Brualla, Noha Radwan, Mehdi~SM Sajjadi, Jonathan~T Barron,
  Alexey Dosovitskiy, and Daniel Duckworth.
\newblock Nerf in the wild: Neural radiance fields for unconstrained photo
  collections.
\newblock In {\em Proceedings of the IEEE/CVF Conference on Computer Vision and
  Pattern Recognition (CVPR)}, pages 7210--7219, 2021.

\bibitem{RealFusion}
Luke Melas-Kyriazi, Christian Rupprecht, Iro Laina, and Andrea Vedaldi.
\newblock Realfusion: 360$\{$$\backslash$deg$\}$ reconstruction of any object
  from a single image.
\newblock In {\em Proceedings of the IEEE/CVF Conference on Computer Vision and
  Pattern Recognition (CVPR)}, 2023.

\bibitem{Latent-NeRF}
Gal Metzer, Elad Richardson, Or Patashnik, Raja Giryes, and Daniel Cohen-Or.
\newblock Latent-nerf for shape-guided generation of 3d shapes and textures.
\newblock {\em arXiv preprint arXiv:2211.07600}, 2022.

\bibitem{SKED}
Aryan Mikaeili, Or Perel, Daniel Cohen-Or, and Ali Mahdavi-Amiri.
\newblock Sked: Sketch-guided text-based 3d editing.
\newblock {\em arXiv preprint arXiv:2303.10735}, 2023.

\bibitem{NeRF}
Ben Mildenhall, Pratul~P Srinivasan, Matthew Tancik, Jonathan~T Barron, Ravi
  Ramamoorthi, and Ren Ng.
\newblock Nerf: Representing scenes as neural radiance fields for view
  synthesis.
\newblock In {\em Proceedings of the European Conference on Computer Vision
  (ECCV)}, pages 405--421. Springer, 2020.

\bibitem{InstantNGP}
Thomas M\"uller, Alex Evans, Christoph Schied, and Alexander Keller.
\newblock Instant neural graphics primitives with a multiresolution hash
  encoding.
\newblock {\em ACM Trans. Graph.}, 41(4):102:1--102:15, July 2022.

\bibitem{PointE}
Alex Nichol, Heewoo Jun, Prafulla Dhariwal, Pamela Mishkin, and Mark Chen.
\newblock Point-e: A system for generating 3d point clouds from complex
  prompts.
\newblock {\em arXiv preprint arXiv:2212.08751}, 2022.

\bibitem{Paszke2019Pytorch}
Adam Paszke, Sam Gross, Francisco Massa, Adam Lerer, James Bradbury, Gregory
  Chanan, Trevor Killeen, Zeming Lin, Natalia Gimelshein, Luca Antiga, et~al.
\newblock {PyTorch}: An imperative style, high-performance deep learning
  library.
\newblock In {\em Advances in Neural Information Processing Systems (NeurIPS)},
  2019.

\bibitem{petersen2019pix2vex}
Felix Petersen, Amit~H Bermano, Oliver Deussen, and Daniel Cohen-Or.
\newblock Pix2vex: Image-to-geometry reconstruction using a smooth
  differentiable renderer.
\newblock {\em arXiv preprint arXiv:1903.11149}, 2019.

\bibitem{sdxl}
Dustin Podell, Zion English, Kyle Lacey, Andreas Blattmann, Tim Dockhorn, Jonas
  Müller, Joe Penna, and Robin Rombach.
\newblock Sdxl: Improving latent diffusion models for high-resolution image
  synthesis, 2023.

\bibitem{DreamFusion}
Ben Poole, Ajay Jain, Jonathan~T Barron, and Ben Mildenhall.
\newblock Dreamfusion: Text-to-3d using 2d diffusion.
\newblock {\em International Conference on Learning Representations (ICLR)},
  2022.

\bibitem{dnerf}
Albert Pumarola, Enric Corona, Gerard Pons-Moll, and Francesc Moreno-Noguer.
\newblock D-nerf: Neural radiance fields for dynamic scenes.
\newblock In {\em Proceedings of the IEEE/CVF Conference on Computer Vision and
  Pattern Recognition (CVPR)}, pages 10318--10327, 2021.

\bibitem{magic123}
Guocheng Qian, Jinjie Mai, Abdullah Hamdi, Jian Ren, Aliaksandr Siarohin, Bing
  Li, Hsin-Ying Lee, Ivan Skorokhodov, Peter Wonka, Sergey Tulyakov, and
  Bernard Ghanem.
\newblock Magic123: One image to high-quality 3d object generation using both
  2d and 3d diffusion priors.
\newblock {\em arXiv preprint arXiv:2306.17843}, 2023.

\bibitem{CLIP}
Alec Radford, Jong~Wook Kim, Chris Hallacy, Aditya Ramesh, Gabriel Goh,
  Sandhini Agarwal, Girish Sastry, Amanda Askell, Pamela Mishkin, Jack Clark,
  et~al.
\newblock Learning transferable visual models from natural language
  supervision.
\newblock In {\em Proceedings of the International Conference on Machine
  Learning (ICML)}, pages 8748--8763. PMLR, 2021.

\bibitem{DreamBooth3D}
Amit Raj, Srinivas Kaza, Ben Poole, Michael Niemeyer, Nataniel Ruiz, Ben
  Mildenhall, Shiran Zada, Kfir Aberman, Michael Rubinstein, Jonathan Barron,
  et~al.
\newblock Dreambooth3d: Subject-driven text-to-3d generation.
\newblock {\em arXiv preprint arXiv:2303.13508}, 2023.

\bibitem{DALLE-2}
Aditya Ramesh, Prafulla Dhariwal, Alex Nichol, Casey Chu, and Mark Chen.
\newblock Hierarchical text-conditional image generation with clip latents.
\newblock {\em arXiv preprint arXiv:2204.06125}, 2022.

\bibitem{DALL-E}
Aditya Ramesh, Mikhail Pavlov, Gabriel Goh, Scott Gray, Chelsea Voss, Alec
  Radford, Mark Chen, and Ilya Sutskever.
\newblock Zero-shot text-to-image generation.
\newblock In {\em Proceedings of the International Conference on Machine
  Learning (ICML)}, pages 8821--8831. PMLR, 2021.

\bibitem{co3d}
Jeremy Reizenstein, Roman Shapovalov, Philipp Henzler, Luca Sbordone, Patrick
  Labatut, and David Novotny.
\newblock Common objects in 3d: Large-scale learning and evaluation of
  real-life 3d category reconstruction.
\newblock In {\em Proceedings of the IEEE/CVF International Conference on
  Computer Vision (ICCV)}, pages 10901--10911, October 2021.

\bibitem{TEXTure}
Elad Richardson, Gal Metzer, Yuval Alaluf, Raja Giryes, and Daniel Cohen-Or.
\newblock Texture: Text-guided texturing of 3d shapes.
\newblock {\em arXiv preprint arXiv:2302.01721}, 2023.

\bibitem{LDM}
Robin Rombach, Andreas Blattmann, Dominik Lorenz, Patrick Esser, and Bj{\"o}rn
  Ommer.
\newblock High-resolution image synthesis with latent diffusion models.
\newblock In {\em Proceedings of the IEEE/CVF Conference on Computer Vision and
  Pattern Recognition (CVPR)}, pages 10684--10695, 2022.

\bibitem{Imagen}
Chitwan Saharia, William Chan, Saurabh Saxena, Lala Li, Jay Whang, Emily~L
  Denton, Kamyar Ghasemipour, Raphael Gontijo~Lopes, Burcu Karagol~Ayan, Tim
  Salimans, et~al.
\newblock Photorealistic text-to-image diffusion models with deep language
  understanding.
\newblock {\em Advances in Neural Information Processing Systems (NeurIPS)},
  35:36479--36494, 2022.

\bibitem{colmap_sfm}
Johannes~Lutz Sch\"{o}nberger and Jan-Michael Frahm.
\newblock Structure-from-motion revisited.
\newblock In {\em Conference on Computer Vision and Pattern Recognition
  (CVPR)}, 2016.

\bibitem{schoenberger2016sfm}
Johannes~Lutz Sch{\"{o}}nberger and Jan-Michael Frahm.
\newblock Structure-from-motion revisited.
\newblock In {\em Conference on Computer Vision and Pattern Recognition
  (CVPR)}, 2016.

\bibitem{colmap_mvs}
Johannes~Lutz Sch\"{o}nberger, Enliang Zheng, Marc Pollefeys, and Jan-Michael
  Frahm.
\newblock Pixelwise view selection for unstructured multi-view stereo.
\newblock In {\em European Conference on Computer Vision (ECCV)}, 2016.

\bibitem{DITTO-NeRF}
Hoigi Seo, Hayeon Kim, Gwanghyun Kim, and Se~Young Chun.
\newblock Ditto-nerf: Diffusion-based iterative text to omni-directional 3d
  model.
\newblock {\em arXiv preprint arXiv:2304.02827}, 2023.

\bibitem{3DFuse}
Junyoung Seo, Wooseok Jang, Min-Seop Kwak, Jaehoon Ko, Hyeonsu Kim, Junho Kim,
  Jin-Hwa Kim, Jiyoung Lee, and Seungryong Kim.
\newblock Let 2d diffusion model know 3d-consistency for robust text-to-3d
  generation.
\newblock In {\em Proceedings of the IEEE/CVF Conference on Computer Vision and
  Pattern Recognition (CVPR)}, 2023.

\bibitem{VolumeRenderingDigest}
Andrea Tagliasacchi and Ben Mildenhall.
\newblock Volume rendering digest (for nerf).
\newblock {\em arXiv preprint arXiv:2209.02417}, 2022.

\bibitem{stable-dreamfusion}
Jiaxiang Tang.
\newblock Stable-dreamfusion: Text-to-3d with stable-diffusion, 2022.
\newblock https://github.com/ashawkey/stable-dreamfusion.

\bibitem{dreemgaussian}
Jiaxiang Tang, Jiawei Ren, Hang Zhou, Ziwei Liu, and Gang Zeng.
\newblock Dreamgaussian: Generative gaussian splatting for efficient 3d content
  creation.
\newblock {\em arXiv preprint arXiv:2309.16653}, 2023.

\bibitem{RefNeRF}
Dor Verbin, Peter Hedman, Ben Mildenhall, Todd Zickler, Jonathan~T Barron, and
  Pratul~P Srinivasan.
\newblock Ref-nerf: Structured view-dependent appearance for neural radiance
  fields.
\newblock In {\em Proceedings of the IEEE/CVF Conference on Computer Vision and
  Pattern Recognition (CVPR)}, pages 5481--5490. IEEE, 2022.

\bibitem{SJC}
Haochen Wang, Xiaodan Du, Jiahao Li, Raymond~A Yeh, and Greg Shakhnarovich.
\newblock Score jacobian chaining: Lifting pretrained 2d diffusion models for
  3d generation.
\newblock In {\em Proceedings of the IEEE/CVF Conference on Computer Vision and
  Pattern Recognition (CVPR)}, 2023.

\bibitem{NeuS}
Peng Wang, Lingjie Liu, Yuan Liu, Christian Theobalt, Taku Komura, and Wenping
  Wang.
\newblock Neus: Learning neural implicit surfaces by volume rendering for
  multi-view reconstruction.
\newblock In {\em Advances in Neural Information Processing Systems (NeurIPS)},
  2021.

\bibitem{HF-NeuS}
Yiqun Wang, Ivan Skorokhodov, and Peter Wonka.
\newblock Hf-neus: Improved surface reconstruction using high-frequency
  details.
\newblock {\em Advances in Neural Information Processing Systems (NeurIPS)},
  35:1966--1978, 2022.

\bibitem{wiles2020synsin}
Olivia Wiles, Georgia Gkioxari, Richard Szeliski, and Justin Johnson.
\newblock Synsin: End-to-end view synthesis from a single image.
\newblock In {\em Proceedings of the IEEE/CVF Conference on Computer Vision and
  Pattern Recognition}, pages 7467--7477, 2020.

\bibitem{xu2022point}
Qiangeng Xu, Zexiang Xu, Julien Philip, Sai Bi, Zhixin Shu, Kalyan Sunkavalli,
  and Ulrich Neumann.
\newblock Point-nerf: Point-based neural radiance fields.
\newblock In {\em Proceedings of the IEEE/CVF Conference on Computer Vision and
  Pattern Recognition}, pages 5438--5448, 2022.

\bibitem{mvsnet}
Yao Yao, Zixin Luo, Shiwei Li, Tian Fang, and Long Quan.
\newblock Mvsnet: Depth inference for unstructured multi-view stereo.
\newblock In {\em Proceedings of the European Conference on Computer Vision
  (ECCV)}, pages 767--783, 2018.

\bibitem{rmvsnet}
Yao Yao, Zixin Luo, Shiwei Li, Tianwei Shen, Tian Fang, and Long Quan.
\newblock Recurrent mvsnet for high-resolution multi-view stereo depth
  inference.
\newblock In {\em Proceedings of the IEEE/CVF Conference on Computer Vision and
  Pattern Recognition (CVPR)}, pages 5525--5534, 2019.

\bibitem{VolSDF}
Lior Yariv, Jiatao Gu, Yoni Kasten, and Yaron Lipman.
\newblock Volume rendering of neural implicit surfaces.
\newblock {\em Advances in Neural Information Processing Systems (NeurIPS)},
  34:4805--4815, 2021.

\bibitem{IDR}
Lior Yariv, Yoni Kasten, Dror Moran, Meirav Galun, Matan Atzmon, Basri Ronen,
  and Yaron Lipman.
\newblock Multiview neural surface reconstruction by disentangling geometry and
  appearance.
\newblock {\em Advances in Neural Information Processing Systems (NeurIPS)},
  33:2492--2502, 2020.

\bibitem{yifan2019differentiable}
Wang Yifan, Felice Serena, Shihao Wu, Cengiz {\"O}ztireli, and Olga
  Sorkine-Hornung.
\newblock Differentiable surface splatting for point-based geometry processing.
\newblock {\em ACM Transactions on Graphics (TOG)}, 38(6):1--14, 2019.

\bibitem{yu2021plenoxels}
Alex Yu, Sara Fridovich-Keil, Matthew Tancik, Qinhong Chen, Benjamin Recht, and
  Angjoo Kanazawa.
\newblock Plenoxels: Radiance fields without neural networks.
\newblock {\em arXiv preprint arXiv:2112.05131}, 2(3):6, 2021.

\bibitem{PlenOctrees}
Alex Yu, Ruilong Li, Matthew Tancik, Hao Li, Ren Ng, and Angjoo Kanazawa.
\newblock {PlenOctrees} for real-time rendering of neural radiance fields.
\newblock In {\em Proceedings of the IEEE/CVF International Conference on
  Computer Vision (ICCV)}, 2021.

\bibitem{PixelNeRF}
Alex Yu, Vickie Ye, Matthew Tancik, and Angjoo Kanazawa.
\newblock pixelnerf: Neural radiance fields from one or few images.
\newblock In {\em Proceedings of the IEEE/CVF Conference on Computer Vision and
  Pattern Recognition (CVPR)}, pages 4578--4587, 2021.

\bibitem{fastmvsnet}
Zehao Yu and Shenghua Gao.
\newblock Fast-mvsnet: Sparse-to-dense multi-view stereo with learned
  propagation and gauss-newton refinement.
\newblock In {\em Proceedings of the IEEE/CVF Conference on Computer Vision and
  Pattern Recognition (CVPR)}, pages 1949--1958, 2020.

\bibitem{MonoSDF}
Zehao Yu, Songyou Peng, Michael Niemeyer, Torsten Sattler, and Andreas Geiger.
\newblock Monosdf: Exploring monocular geometric cues for neural implicit
  surface reconstruction.
\newblock In {\em Advances in Neural Information Processing Systems (NeurIPS)},
  2022.

\bibitem{NeRS}
Jason Zhang, Gengshan Yang, Shubham Tulsiani, and Deva Ramanan.
\newblock Ners: neural reflectance surfaces for sparse-view 3d reconstruction
  in the wild.
\newblock {\em Advances in Neural Information Processing Systems (NeurIPS)},
  34:29835--29847, 2021.

\bibitem{zhang2022differentiable}
Qiang Zhang, Seung-Hwan Baek, Szymon Rusinkiewicz, and Felix Heide.
\newblock Differentiable point-based radiance fields for efficient view
  synthesis.
\newblock In {\em SIGGRAPH Asia 2022 Conference Papers}, pages 1--12, 2022.

\bibitem{LPIPS}
Richard Zhang, Phillip Isola, Alexei~A Efros, Eli Shechtman, and Oliver Wang.
\newblock The unreasonable effectiveness of deep features as a perceptual
  metric.
\newblock In {\em Proceedings of the IEEE/CVF Conference on Computer Vision and
  Pattern Recognition (CVPR)}, 2018.

\bibitem{zwicker2001ewa}
Matthias Zwicker, Hanspeter Pfister, Jeroen~Van Baar, and Markus Gross.
\newblock Ewa volume splatting.
\newblock {\em Proceedings Visualization, 2001. VIS'01.}, pages 29--538, 2001.

\end{thebibliography}
}
\clearpage
\appendix

\renewcommand{\thesection}{\Alph{section}}

\setcounter{section}{0}

\section{Theory Behind Generalized Exponentials}%
\label{secsup:theory}

\subsection{Generalized Exponential Function}

The Generalized Exponential Function (GEF) is similar to the probability density function (PDF) of the Generalized Normal Distribution (GND) \cite{generlizedgaussian} with an additional amplitude parameter $ A \in \mathbb{R}$. This function allows for a more flexible adaptation to various data shapes by adjusting the shape parameter $ \beta \in (0,\infty) $. The GEF is given by the following.
\begin{equation}\label{eq:gef-supp}
    f(x | \mu, \alpha, \beta, A) = A \exp\left(-\left(\frac{|x - \mu|}{\alpha}\right)^\beta\right)
\end{equation}
where $ \mu \in \mathbb{R} $ is the location parameter, $ \alpha \in \mathbb{R} $ is the scale parameter, $ A $ defines the amplitude, and $ \beta > 0 $ is the shape parameter. 
For $ \beta = 2 $, the GEF becomes a scaled Gaussian distribution:
\begin{equation}\label{eq:gaussian}
f(x | \mu, \alpha, \beta=2, A) = \frac{A}{\alpha\sqrt{2\pi}} \exp\left(-\frac{1}{2}\left(\frac{x - \mu}{\alpha/\sqrt{2}}\right)^2\right)
\end{equation}
And for $ \beta = 1 $, \eqlabel\ref{eq:gef-supp} reduces to a scaled Laplace distribution:
\begin{equation}\label{eq:lap}
f(x | \mu, \alpha, \beta=1, A) = \frac{A}{2\alpha} \exp\left(-\frac{|x - \mu|}{\alpha}\right)
\end{equation}
 The GEF, therefore, provides a versatile framework for modeling a wide range of data by varying $ \beta $, unlike the Gaussian mixtures, which have a low-pass frequency domain. Many common signals, like the square or triangle, are band-unlimited, constituting a fundamental challenge to Gaussian-based methods (see \figlabel{\ref{fig:foureir}}). In this paper, we try to \textit{learn} a positive $\beta$ for every component of the Gaussian splatting to allow for a generalized 3D representation.

\subsection{Theoretical Results}

Despite its generalizable capabilities, the GEF has no fixed behavior in terms of frequency domain. The error functions of the GEF and its Fourier domain cannot be studied analytically, as they involve complex integrals of exponentials without closed form that depend on the shape parameter $\beta$. For example, the Fourier of GEF is given by $$\mathcal{F}(f)(\xi) = \int_{-\infty}^{\infty} A \exp\left(-\left(\frac{|x-\mu|}{\alpha}\right)^\beta\right) e^{-2\pi i x \xi} \, dx
$$ which does not have a closed-form solution for a general $\beta$. We demonstrate that for specific cases, such as for a square signal, the GEF can achieve a smaller approximation error than the corresponding Gaussian function by properly choosing $ \beta $. Theorem \ref{thrm:1} provides a theoretical foundation for preferring the GEF over standard Gaussian functions in our \methodname representation instead of 3D Gaussian Splatting \cite{gaussiansplatter}.

\newtheorem{theorem}{Theorem}
\begin{theorem}[Superiority of GEF Approximation Over Gaussian for Square Wave Signals] \label{thrm:1}
Let $ S(t) $ represent a square wave signal with amplitude $ A > 0 $ and width $ L >0 $ centered at $ t = 0 $. Define two functions: a scaled Gaussian $ G(t; \alpha, A) = A e^{-\frac{t^2}{\alpha^2}} $, and a Generalized Exponential Function $ GEF(t; \alpha, \beta, A) = A e^{-(|t|/\alpha)^\beta} $. For any given scale parameter $ \alpha $, there exists a shape parameter $ \beta $ such that the approximation error $
E_f =   \int_{-\infty}^{\infty} |S(t) - f(t)| dt.$ of the square signal $S(t)$ using GEF is strictly smaller than that using the Gaussian $G$.
\end{theorem}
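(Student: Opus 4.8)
The plan is to use the evenness of $S$, $G$, and the $GEF$ to collapse the two-sided $L^1$ error into a one-sided integral, and then treat $\beta$ as the sole free variable and show the Gaussian choice $\beta=2$ is never optimal. Writing $a=L/2$ and noting $0\le f(t)\le A$ for every candidate $f$, the integrand of $E_f$ equals $A-f(t)$ on $[0,a]$ and $f(t)$ on $[a,\infty)$, so
\begin{equation*}
E_f = 2\Big[\int_0^{a}\!\big(A-f(t)\big)\,dt+\int_{a}^{\infty}\! f(t)\,dt\Big]=AL+2\!\int_0^\infty\! f(t)\,dt-4\!\int_0^{a}\! f(t)\,dt.
\end{equation*}
Substituting $f(t)=A\exp(-(t/\alpha)^\beta)$ and $s=t/\alpha$ gives a one-parameter family
\begin{equation*}
E(\beta)=AL+2A\alpha\!\int_0^\infty\! e^{-s^\beta}\,ds-4A\alpha\!\int_0^{r}\! e^{-s^\beta}\,ds,\qquad r:=\tfrac{L}{2\alpha},
\end{equation*}
in which $E(2)$ is precisely the Gaussian error $E_G$; the goal reduces to finding, for each $r>0$, some $\beta$ with $E(\beta)<E(2)$.

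My strategy is to split on the single threshold $r=1$ (that is, $\alpha=L/2$) and use a distinct mechanism on each side, arranged so the two arguments overlap at $r=1$ and jointly cover all $\alpha$. For $r\le 1$ (scale at least the half-width) I would differentiate under the integral sign. Since $\partial_\beta e^{-s^\beta}=-e^{-s^\beta}s^\beta\ln s$, the derivative at the Gaussian point is $E'(2)=2A\alpha\big[2K(r)-K(\infty)\big]$ with $K(b):=\int_0^{b}e^{-s^2}s^2\ln s\,ds$. On $(0,1]$ the factor $\ln s$ is nonpositive, so $K(r)<0$, while a Gamma/digamma evaluation gives $K(\infty)=\tfrac{\sqrt\pi}{8}\,(2-\gamma-2\ln 2)>0$; hence $E'(2)<0$ and a slightly sharper splat $\beta=2+\varepsilon$ already satisfies $E(\beta)<E(2)$.

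For $r\ge 1$ I would instead let $\beta\to\infty$, where $e^{-s^\beta}\to\mathbbm{1}(s<1)$ pointwise with an integrable dominating envelope, so $E(\beta)\to E(\infty)=2A\alpha(r-1)$. A direct subtraction yields
\begin{equation*}
E(2)-E(\infty)=A\alpha\big[\,2+\sqrt\pi\,(1-2\,\mathrm{erf}(r))\,\big],
\end{equation*}
which is strictly positive for every $r$ because $\mathrm{erf}<1$. Thus $E(\infty)<E(2)$, and by continuity some large finite $\beta$ meets $E(\beta)<E(2)$.

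The crux, and the reason for splitting rather than using one derivative argument everywhere, is that $E'(2)$ genuinely vanishes at one exceptional ratio $r=c^\star>1$ (where $2K(c^\star)=K(\infty)$), so the linearization at $\beta=2$ is inconclusive near $c^\star$. The threshold is chosen exactly so that this degenerate ratio sits strictly inside the $r\ge 1$ regime, where the conclusion rests on the global $\beta\to\infty$ comparison rather than on $E'(2)$. The two checks that need care are confirming the sign $K(\infty)>0$ — a close call since $2-\gamma-2\ln 2\approx 0.037$ — and justifying differentiation and the limit beneath the integral by dominated convergence; both are routine but are what make the sign conclusions rigorous.
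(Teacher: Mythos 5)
Your proof is correct, and it takes a genuinely different --- and substantially more rigorous --- route than the paper's. The paper splits $\Delta E = E_G - E_{GEF}$ into a middle difference over $[0,L/2]$ and a tail difference over $[L/2,\infty)$, estimates the middle term by a trapezoidal rule that collapses to the single evaluation $\mathrm{err}(L/2)$, picks $\beta>2$ or $\beta<2$ according to whether $L/(2\alpha)$ exceeds $1$ so that this pointwise difference is positive, and then asserts without computation that the tail difference can ``similarly'' be made positive; it never checks that one $\beta$ handles both terms simultaneously, and the trapezoidal step is an approximation rather than a bound. You instead reduce the whole error to the exact one-variable function $E(\beta)=AL+2A\alpha\,\Gamma(1+1/\beta)-4A\alpha\int_0^r e^{-s^\beta}\,ds$ with $r=L/(2\alpha)$, and beat $E(2)$ by two clean mechanisms: the sign of $E'(2)$ via $K(r)<0<K(\infty)=\tfrac{\sqrt\pi}{8}(2-\gamma-2\ln 2)$ when $r\le 1$, and the global comparison with the limit $E(\infty)=2A\alpha(r-1)$ when $r\ge 1$, where $E(2)-E(\infty)=A\alpha\bigl[2+\sqrt{\pi}\,(1-2\,\mathrm{erf}(r))\bigr]>A\alpha(2-\sqrt{\pi})>0$. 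This buys an actual proof where the paper has a sketch, and it also exposes that the paper's prescribed direction for $\alpha>L/2$ (take $\beta<2$) is not the one that reduces the \emph{total} error: your computation gives $E'(2)<0$ there, so $\beta$ slightly above $2$ is what works, while $\beta\to 0^{+}$ sends $\Gamma(1+1/\beta)\to\infty$. Your isolation of the degenerate ratio $c^{\star}>1$ where $E'(2)=0$, and your routing of that regime through the $\beta\to\infty$ comparison, supplies exactly the care the paper's local, middle-term-only reasoning lacks. The only point worth writing out is the uniform domination justifying $E'(\beta)$ near $\beta=2$ (e.g.\ $s^{\beta}|\ln s|\le s^{3/2}|\ln s|$ on $(0,1)$ and $e^{-s^{\beta}}s^{\beta}\ln s\le e^{-s^{3/2}}s^{5/2}\ln s$ for $s>1$), which, as you say, is routine.
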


\begin{proof}
The error metric $E_f$ for the square signal $S(t)$ approximation using $f$ function as $
E_f =   \int_{-\infty}^{\infty} |S(t) - f(t)| dt.$
Utilizing symmetry and definition of $S(t)$, and the fact that $S(t)> G(t; \alpha, A) $, the error for the Gaussian approximation simplifies to:
$$
E_G = 2 \int_{0}^{L/2} A (1 - e^{-\frac{t^2}{\alpha^2}}) dt + 2 \int_{L/2}^{\infty} A e^{-\frac{t^2}{\alpha^2}} dt.
$$

For the GEF approximation, the error is:
$$
E_{GEF} = 2 \int_{0}^{L/2} A (1 - e^{-(t/\alpha)^\beta}) dt + 2 \int_{L/2}^{\infty} A e^{-(t/\alpha)^\beta} dt.
$$
The goal is to show the difference in errors $\Delta E = E_G  - E_{GEF} $ to be strictly positive, by picking $\beta$ appropriately. The error difference can be described as follows.
$$
\Delta E = \Delta E_{middle} + \Delta E_{tail}
$$
$$
\Delta E_{middle} = 2 \int_{0}^{L/2} A (1 - e^{-\frac{t^2}{\alpha^2}}) dt ~~ -~~ 2 \int_{0}^{L/2} A (1 - e^{-(t/\alpha)^\beta}) dt
$$
$$
\Delta E_{tail} = 2 \int_{L/2}^{\infty} A e^{-\frac{t^2}{\alpha^2}} dt - 2 \int_{L/2}^{\infty} A e^{-(t/\alpha)^\beta} dt
$$
Let us
Define $ \text{err}(t) $ as the difference between the exponential terms:
$$
\text{err}(t) = e^{-\frac{t^2}{\alpha^2}} - e^{-(t/\alpha)^\beta}.
$$
The difference in the middle error terms for the Gaussian and GEF approximations, $ \Delta E_{middle} $, can be expressed using $ \text{err}(t) $ as:
$$
\Delta E_{middle} = 2 A \int_{0}^{L/2} \text{err}(t) \, dt.
$$

Using the trapezoidal approximation of the integral, this simplifies to:
$$
\Delta E_{middle} \approx L A ~\text{err}(L/2) =  L A\left( e^{-\frac{L^2}{4\alpha^2}} - e^{-(L/2\alpha)^\beta} \right).
$$

Based on the fact that the negative exponential is monotonically decreasing and to ensure $ \Delta E_{middle} $ is always positive, we choose $ \beta $ based on the relationship between $ L/2 $ and $ \alpha $ :
\begin{itemize}
  \item If $ \frac{L}{2} > \alpha $ (i.e., $ \frac{L}{2\alpha} > 1 $), choosing $ \beta > 2 $ ensures $ e^{-(L/2\alpha)^\beta} < e^{-\frac{L^2}{4\alpha^2}} $.
  \item If $ \frac{L}{2} < \alpha $ (i.e., $ \frac{L}{2\alpha} < 1 $), choosing $ 0 < \beta < 2 $ results in $ e^{-(L/2\alpha)^\beta} < e^{-\frac{L^2}{4\alpha^2}} $.
\end{itemize}
Thus, $ \Delta E_{middle} $ can always be made positive by choosing $ \beta $ appropriately, implying that the error in the GEF approximation in the interval $ [-L/2, L/2] $ is always less than that of the Gaussian approximation. Similarly, the difference of tail errors $\Delta E_{tail}$ can be made positive by an appropriate choice of $ \beta $, concluding that the total error $ E_{GEF} $ is strictly less than $ E_{G} $. This concludes the proof.
\end{proof}

\subsection{Numerical Simulation of Gradient-Based 1D Mixtures}

\minorsection{Objective}
The primary objective of this numerical simulation is to evaluate the effectiveness of the generalized exponential model in representing various one-dimensional (1D) signal types. This evaluation was conducted by fitting the model to synthetic signals generated to embody characteristics of square, triangle, parabolic, half sinusoidal, Gaussian, and exponential functions, which can constitute a non-exclusive list of basic topologies available in the real world.

\minorsection{Simulation Setup}
The experimental framework was based on a series of parametric models implemented in PyTorch, designed to approximate 1D signals using mixtures of different functions such as Gaussian, Difference of Gaussians (DoG), Laplacian of Gaussian (LoG), and a Generalized mixture model. Each model comprised parameters for means, variances (or scales), and weights, with the generalized model incorporating an additional parameter, $\beta$, to control the exponentiation of the Gaussian function.

\begin{figure*}[t]
\centering
\includegraphics[width=0.99\linewidth]{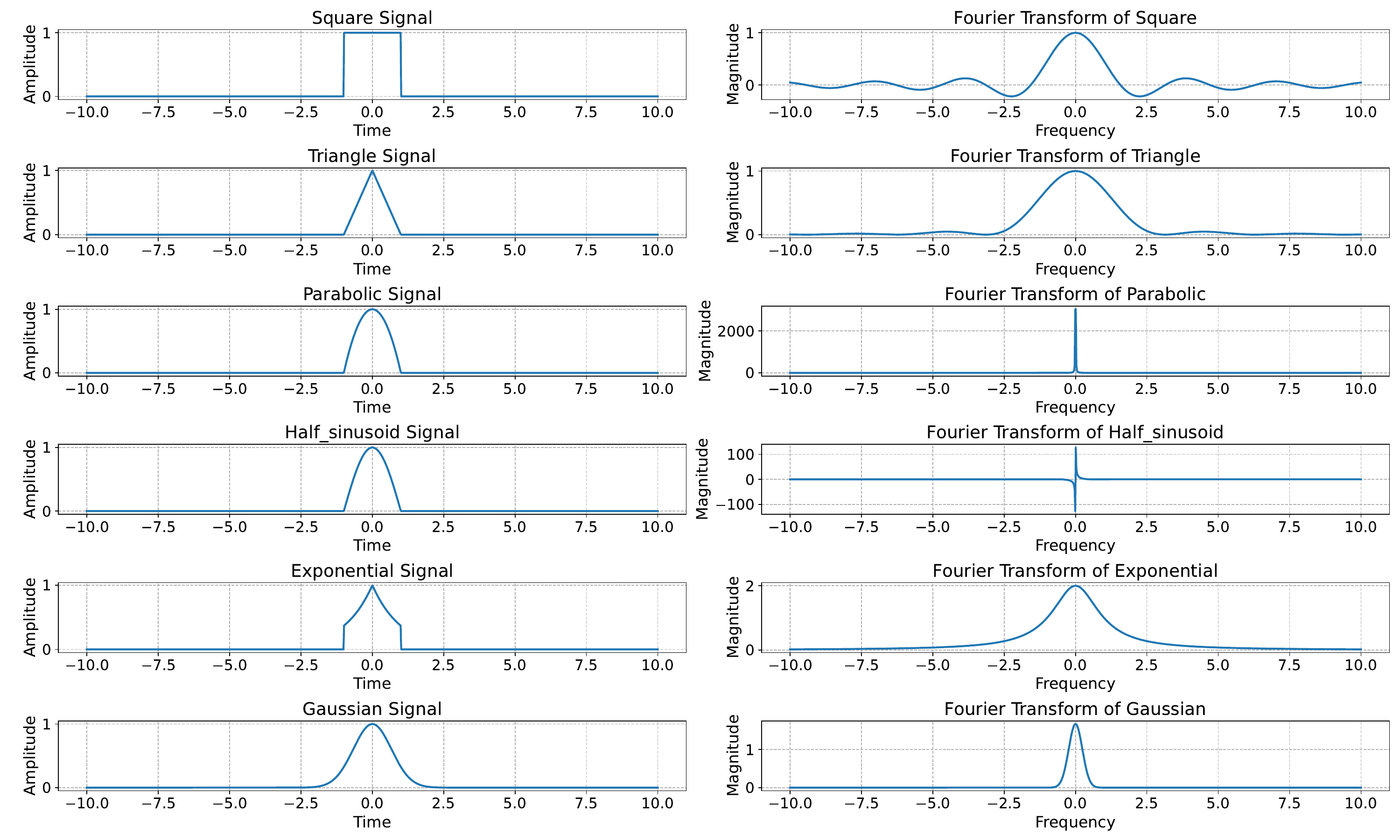}
\caption{\textbf{Commong Signals Used and Their Fourier Transforms}. Note that the Gaussian function is low-pass bandwidth, while common signals like the square and triangle with sharp edges have infinite bandwidth, making them challenging to be fitted with mixtures that have low-pass frequency bandwidth (\eg Gaussian mixtures, represented by Gaussian Splatting \cite{gaussiansplatter}).}
\label{fig:foureir}
\end{figure*}

\minorsection{Models}
Here, we describe the mixture models used to approximate the true signal forms.

\begin{itemize}
    \item \textbf{Gaussian Mixture Model (GMM):}
    The GMM combines several Gaussian functions, each defined by its mean ($\mu_i$), variance ($\sigma_i^2$), and weight ($w_i$). For a set of $N$ Gaussian functions, the mixture model $g(x)$ can be expressed as:
    \begin{equation} \label{eq:gaussianmixture}
    g(x) = \sum_{i=1}^{N} w_i \exp\left(-\frac{(x - \mu_i)^2}{2\sigma_i^2 + \epsilon}\right),
    \end{equation}
    where $\epsilon$ is a small constant to avoid division by zero, with $ \epsilon = 1e-8$.

    \item \textbf{Difference of Gaussians (DoG) Mixture Model:}
    The DoG mixture model is comprised of elements that represent the difference between two Gaussian functions with a fixed variance ratio $\nu$. The model $d(x)$ for $N$ components is given by:
    \begin{equation}
    \begin{aligned}
    &d(x) = \sum_{i=1}^{N} w_i D_i \\
    D_i = &\left(\exp\left(-\frac{(x - \mu_i)^2}{2\sigma_i^2 + \epsilon}\right) - \exp\left(-\frac{(x - \mu_i)^2}{2(\sigma_i^2 / \nu) + \epsilon}\right)\right),
    \end{aligned}
    \end{equation}
    where $\sigma_i$ is a scale parameter, and the variance ratio $\nu$ is fixed to be 4.

    \item \textbf{Laplacian of Gaussian (LoG) Mixture Model:}
    The LoG mixture model is formed by a series of Laplacian of Gaussian functions, each defined by a mean ($\mu_i$), scale ($\gamma_i$), and weight ($w_i$). The mixture model $l(x)$ is:
    \begin{equation}
    l(x) = \sum_{i=1}^{N} w_i \left(-\frac{(x - \mu_i)^2}{\gamma_i^2} + 1\right) \exp\left(-\frac{(x - \mu_i)^2}{2\gamma_i^2 + \epsilon}\right),
    \end{equation}

    \item \textbf{Generalized Mixture Model:}
    This model generalizes the Gaussian mixture by introducing a shape parameter $\beta$. Each component of the model $h(x)$ is expressed as:
    \begin{equation} \label{eq:genmixture}
    h(x) = \sum_{i=1}^{N} w_i \exp\left(-\frac{|x - \mu_i|^\beta}{2\sigma_i^2 + \epsilon}\right),
    \end{equation}
    where $\beta$ is a learnable parameter that is optimized alongside other parameters. When $\beta = 2$ is fixed, the equation in \eqlabel{\eqref{eq:genmixture}} reduces to the one in \eqlabel{\eqref{eq:gaussianmixture}}.
\end{itemize}

\minorsection{Model Configuration}
The models were configured with a varying number of components $N$, with tests conducted using $N = \{2, 5, 8, 10, 15, 20, 50, 100\}$. The weights of the components could be either positive or unrestricted. For the generalized model, the $\beta$ parameter was learnable.

\minorsection{Training Procedure}
Each model was trained using the Adam optimizer with a mean squared error loss function. The input $x$ was a linearly spaced tensor representing the domain of the synthetic signal, and the target $y$ was the value of the signal at each point in $x$. Training proceeded for a predetermined number of epochs, and the loss was recorded at the end of training.

\minorsection{Data Generation}
Synthetic 1D signals were generated for various signal types over a specified range, with a given data size and signal width. The signals were used as the ground truth for training the mixture models. The ground truth signals used in the experiment are one-dimensional (1D) functions that serve as benchmarks for evaluating signal processing algorithms. Each signal type is defined within a specified width around the origin, and the value outside this interval is zero ( see \figlabel{\ref{fig:foureir}}). The parameter \texttt{width} $\sigma$ dictates the effective span of the non-zero portion of the signal. We define six distinct signal types as follows:

\begin{enumerate}
    \item \textbf{Square Signal:} The square signal is a binary function where the value is 1 within the interval $(- \frac{\sigma}{2}, \frac{\sigma}{2})$ and 0 elsewhere. Mathematically, it is represented as
    \begin{equation}
    f_{\text{square}}(x) =
    \begin{cases}
    1 & \text{if } -\frac{\sigma}{2} < x < \frac{\sigma}{2}, \\
    0 & \text{otherwise}.
    \end{cases}
    \end{equation}
    Its Fourier Transform is given by
    \begin{equation}
    \text{FT}\{\text{Square Wave}\}(f) = \text{sinc}\left(\frac{f \cdot \sigma}{\pi}\right)
    \end{equation}

    \item \textbf{Triangle Signal:} This signal increases linearly from the left edge of the interval to the center and decreases symmetrically to the right edge, forming a triangular shape. It is defined as
    \begin{equation}
    f_{\text{triangle}}(x) =
    \begin{cases}
    \frac{\sigma}{2} - |x| & \text{if } -\frac{\sigma}{2} < x < \frac{\sigma}{2}, \\
    0 & \text{otherwise}.
    \end{cases}
    \end{equation}
    Its Fourier Transform is
    \begin{equation}
    \text{FT}\{\text{Triangle Wave}\}(f) = \left(\text{sinc}\left(\frac{f \cdot \sigma}{2\pi}\right)\right)^2
    \end{equation}

    \item \textbf{Parabolic Signal:} This signal forms a downward-facing parabola within the interval, and its expression is
    \begin{equation}
    f_{\text{parabolic}}(x) =
    \begin{cases}
    (\frac{\sigma}{2})^2 - x^2 & \text{if } -\frac{\sigma}{2} < x < \frac{\sigma}{2}, \\
    0 & \text{otherwise}.
    \end{cases}
    \end{equation}
    The Fourier Transform of the parabolic signal is
    \begin{equation}
    \text{FT}\{\text{Parabolic Wave}\}(f) = \frac{3 \cdot \left(\text{sinc}\left(\frac{f \cdot \sigma}{2\pi}\right)\right)^2}{\pi^2 \cdot f^2}
    \end{equation}

    \item \textbf{Half Sinusoid Signal:} A half-cycle of a sine wave is contained within the interval, starting and ending with zero amplitude. Its formula is
    \begin{equation}
    f_{\text{half\_sinusoid}}(x) =
    \begin{cases}
    \sin\left((x + \frac{\sigma}{2}) \frac{\pi}{\sigma}\right) & \text{if } -\frac{\sigma}{2} < x < \frac{\sigma}{2}, \\
    0 & \text{otherwise}.
    \end{cases}
    \end{equation}
    Its Fourier Transform is described by
    \begin{equation}
    \text{FT}\{\text{Half Sinusoid}\}(f) = 
    \begin{cases} 
        \frac{\sigma}{2} & \text{if } f = 0 \\
        \frac{\sigma \cdot \sin(\pi \cdot f \cdot \sigma)}{\pi^2 \cdot f^2} & \text{otherwise}
    \end{cases}
    \end{equation}

    \item \textbf{Exponential Signal:} Exhibiting an exponential decay centered at the origin, this signal is represented by
    \begin{equation}
    f_{\text{exponential}}(x) =
    \begin{cases}
    \exp(-|x|) & \text{if } -\frac{\sigma}{2} < x < \frac{\sigma}{2}, \\
    0 & \text{otherwise}.
    \end{cases}
    \end{equation}
    The Fourier Transform for the exponential signal is
    \begin{equation}
    \text{FT}\{\text{Exponential}\}(f) = \frac{\sigma}{f^2 + \left(\frac{\sigma}{2}\right)^2}
    \end{equation}

    \item \textbf{Gaussian Signal:} Unlike the others, the Gaussian signal is not bounded within a specific interval but instead extends over the entire range of $ x $, with its amplitude governed by a Gaussian distribution. It is given by
    \begin{equation}
    f_{\text{Gaussian}}(x) = \exp\left(-\frac{x^2}{2\sigma^2}\right).
    \end{equation}
    The Fourier Transform of the Gaussian signal is also a Gaussian, which in the context of standard deviation $ \sigma $ is represented as
    \begin{equation}
    \text{FT}\{\text{Gaussian}\}(f) = \sqrt{2\pi} \cdot \sigma \cdot \exp\left(-2\pi^2 \sigma^2 f^2\right)
    \end{equation}
\end{enumerate}

As shown in \figlabel{\ref{fig:foureir}}, the Gaussian function has a low-pass band, while signals like the square and triangle with sharp edges have infinite bandwidth, making them challenging for mixtures that have low-pass frequency bandwidth (\eg Gaussian mixtures, represented by Gaussian Splatting \cite{gaussiansplatter}). 

Each signal is sampled at discrete points using a PyTorch tensor to facilitate computational manipulation and analysis within the experiment's framework. We show in \figlabel{\ref{supfig:fitting_square_p},\ref{supfig:fitting_square_N},\ref{supfig:fitting_parabolic_p},\ref{supfig:fitting_parabolic_N},\ref{supfig:fitting_exponential_p},\ref{supfig:fitting_exponential_N},\ref{supfig:fitting_triangle_p},\ref{supfig:fitting_triangle_N},\ref{supfig:fitting_gaussian_p},\ref{supfig:fitting_gaussian_N},\ref{supfig:fitting_half_sinusoid_p}, and \ref{supfig:fitting_half_sinusoid_N}} examples of fitting all the mixture on all different signal types of interest when positive weighting is used in the mixture \vs when allowing real weighting in the combinations in the above equations. Note how sharp edges constitute a challenge for Gaussians that have low pass bandwidth while a square signal has an infinite bandwidth known by the sinc function \cite{shannon}.

\begin{figure*}[t]
\centering
\begin{tabular}{cc|cc}
    \includegraphics[width=0.24\textwidth]{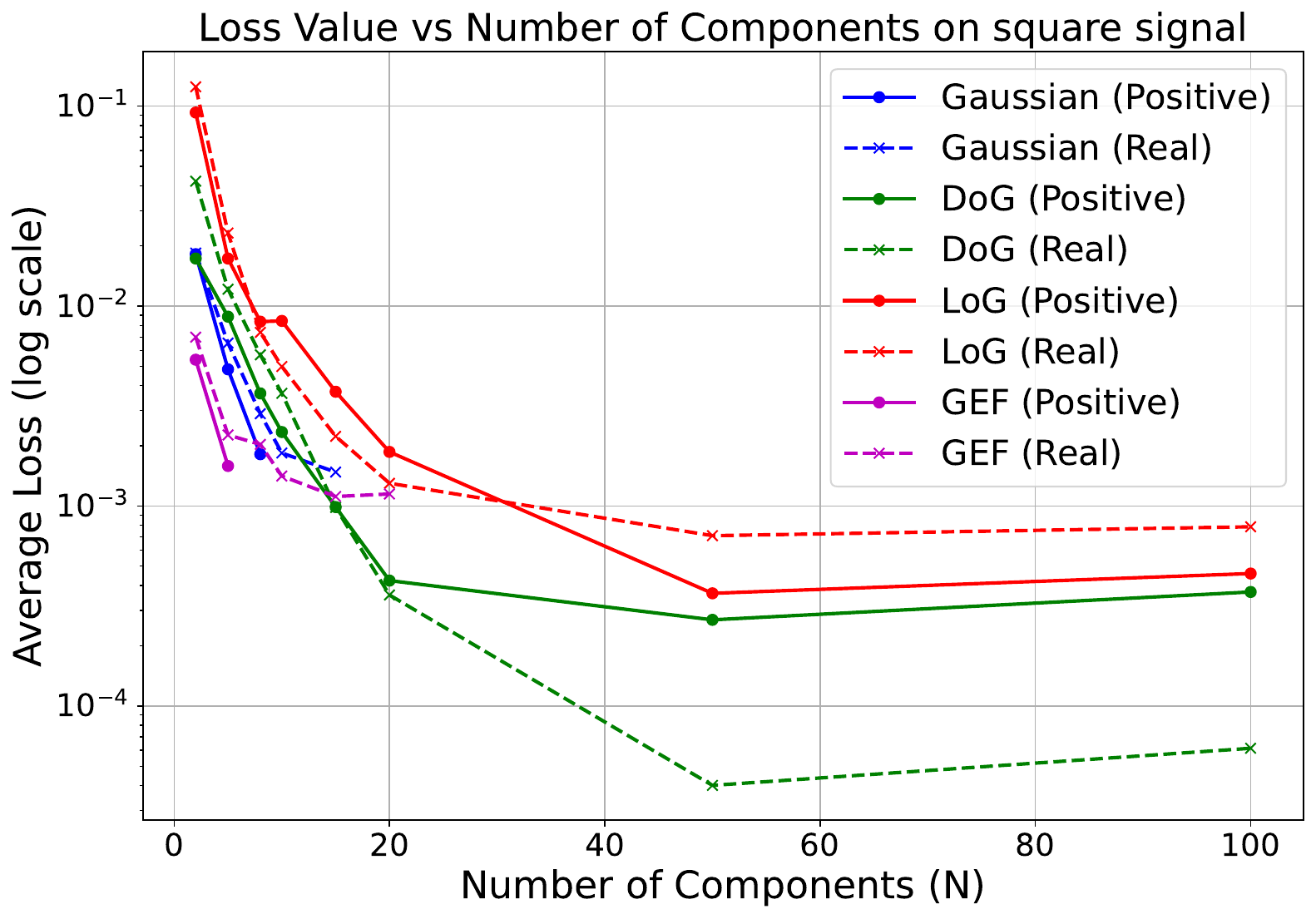} &
    \includegraphics[width=0.24\textwidth]{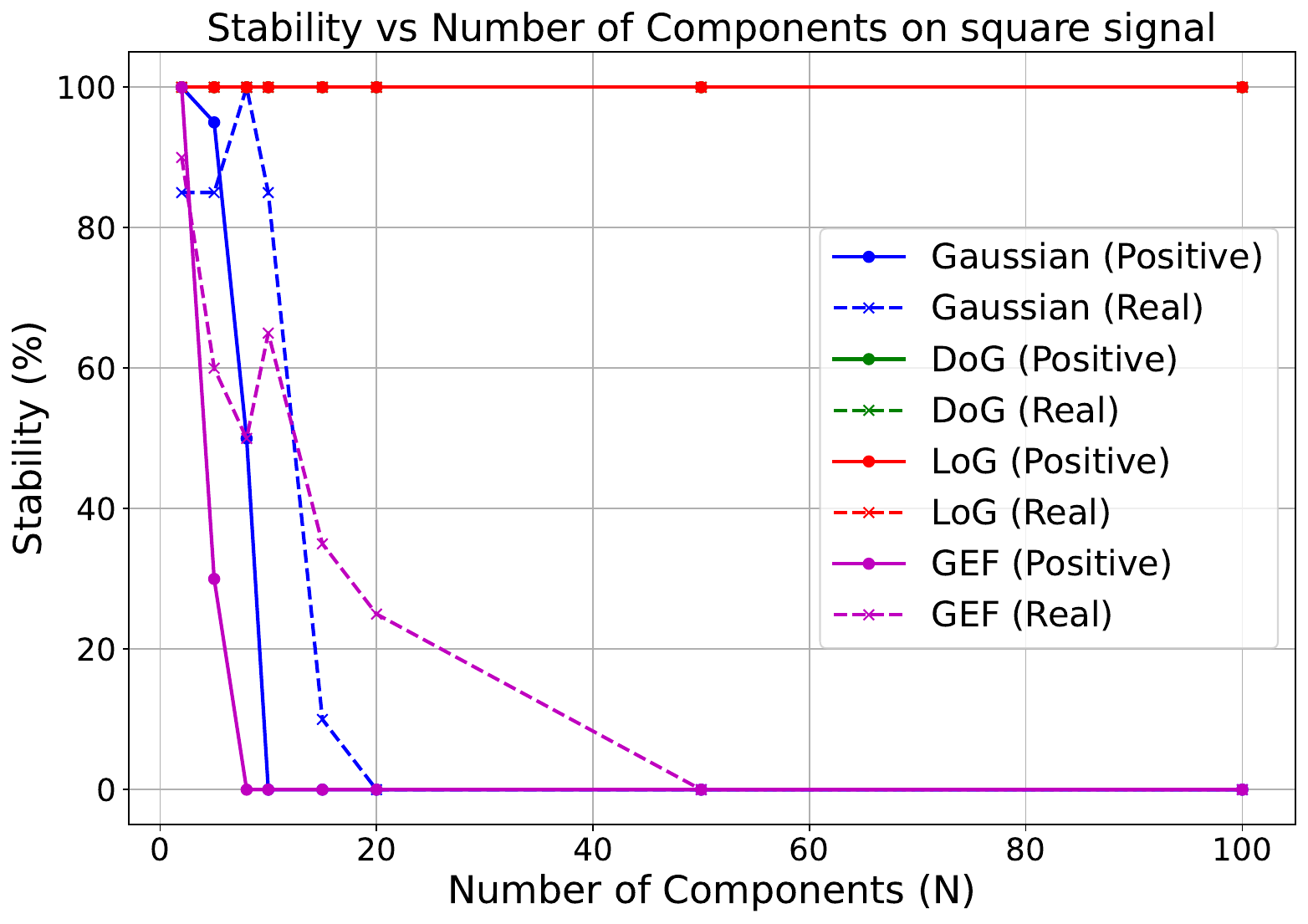} &
    \includegraphics[width=0.24\textwidth]{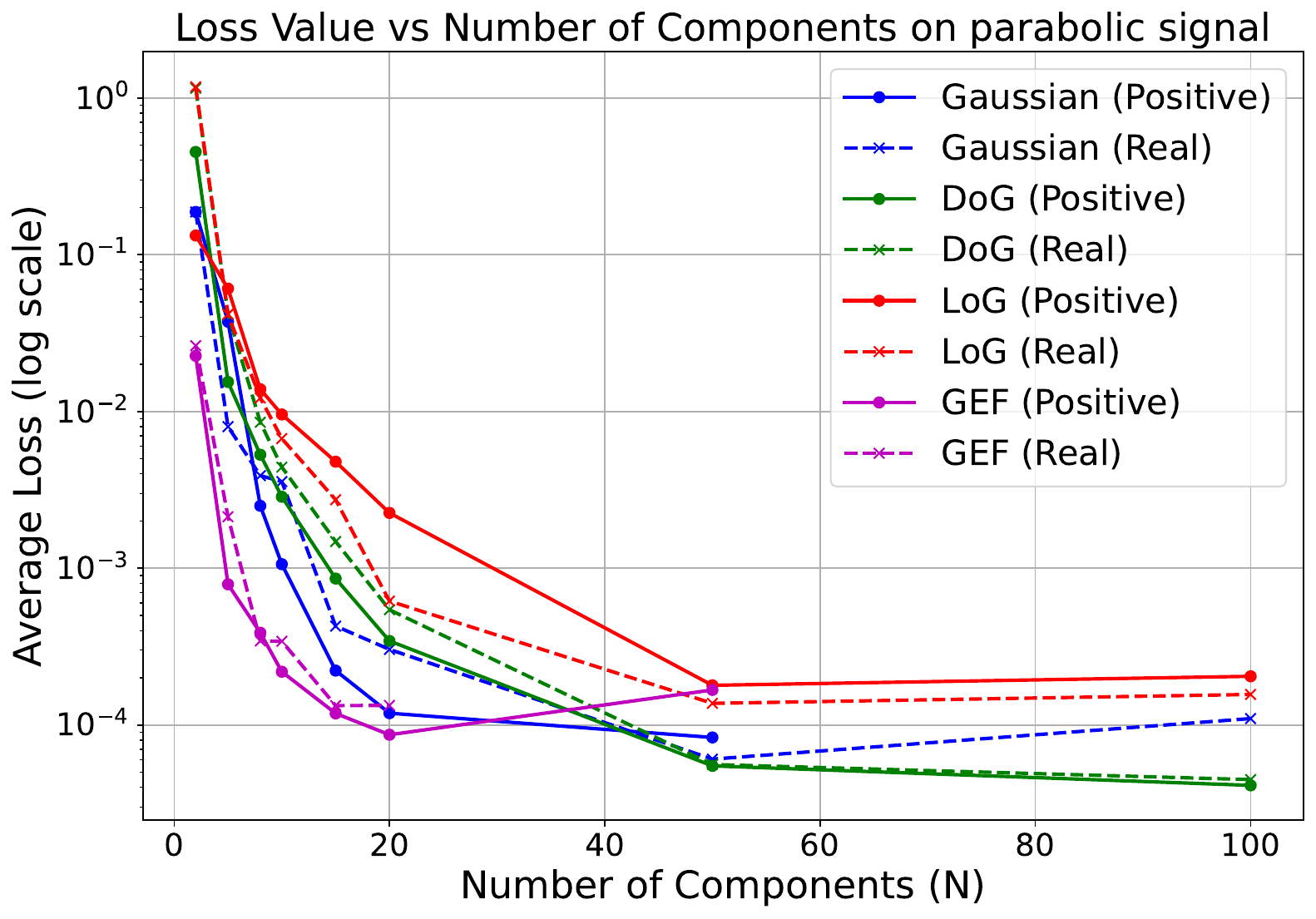} &
    \includegraphics[width=0.24\textwidth]{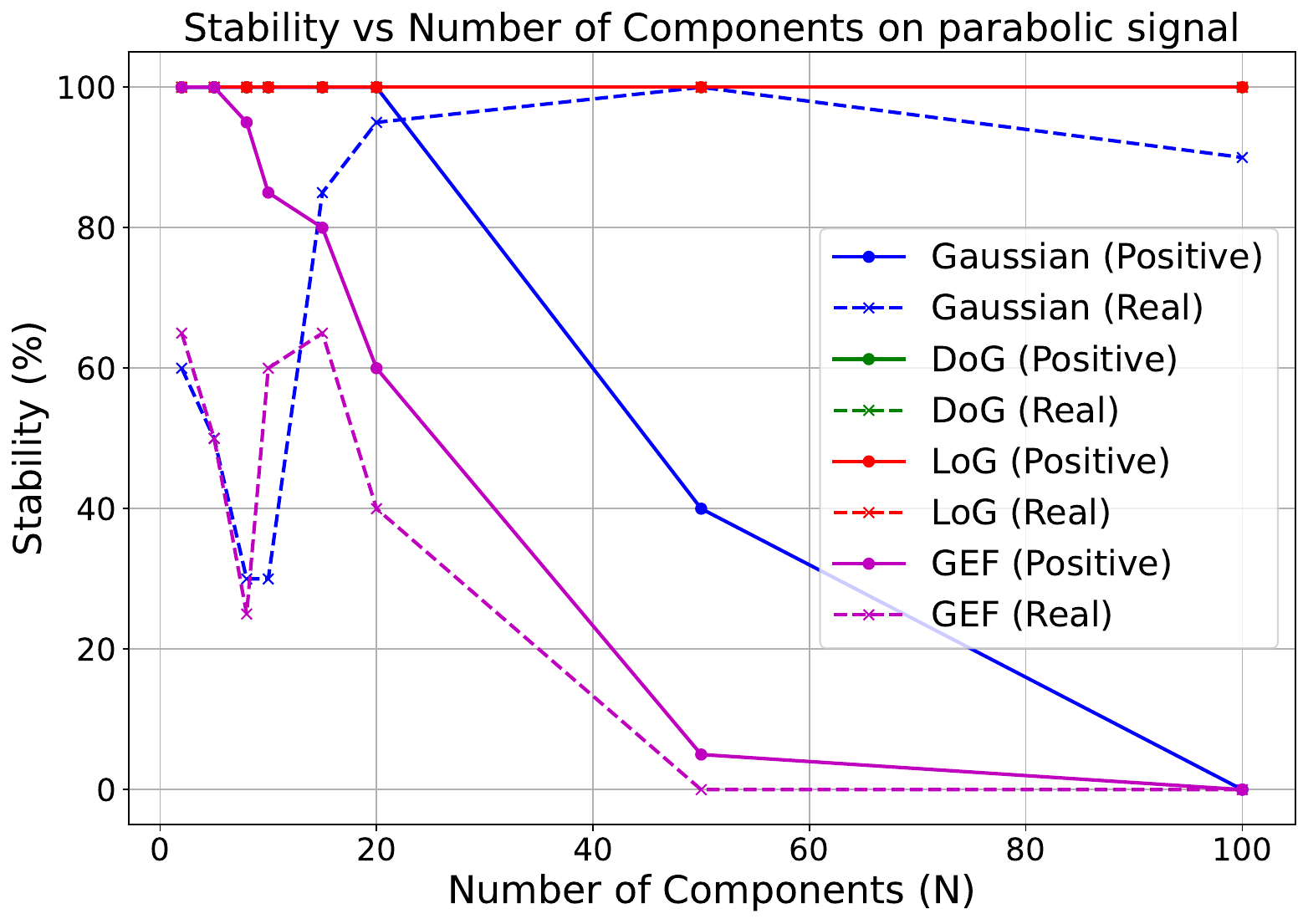} \\ 
    \multicolumn{2}{c|}{(a) Square signal} & \multicolumn{2}{c}{(b) Parabolic signal} \\ \hline
    \includegraphics[width=0.24\textwidth]{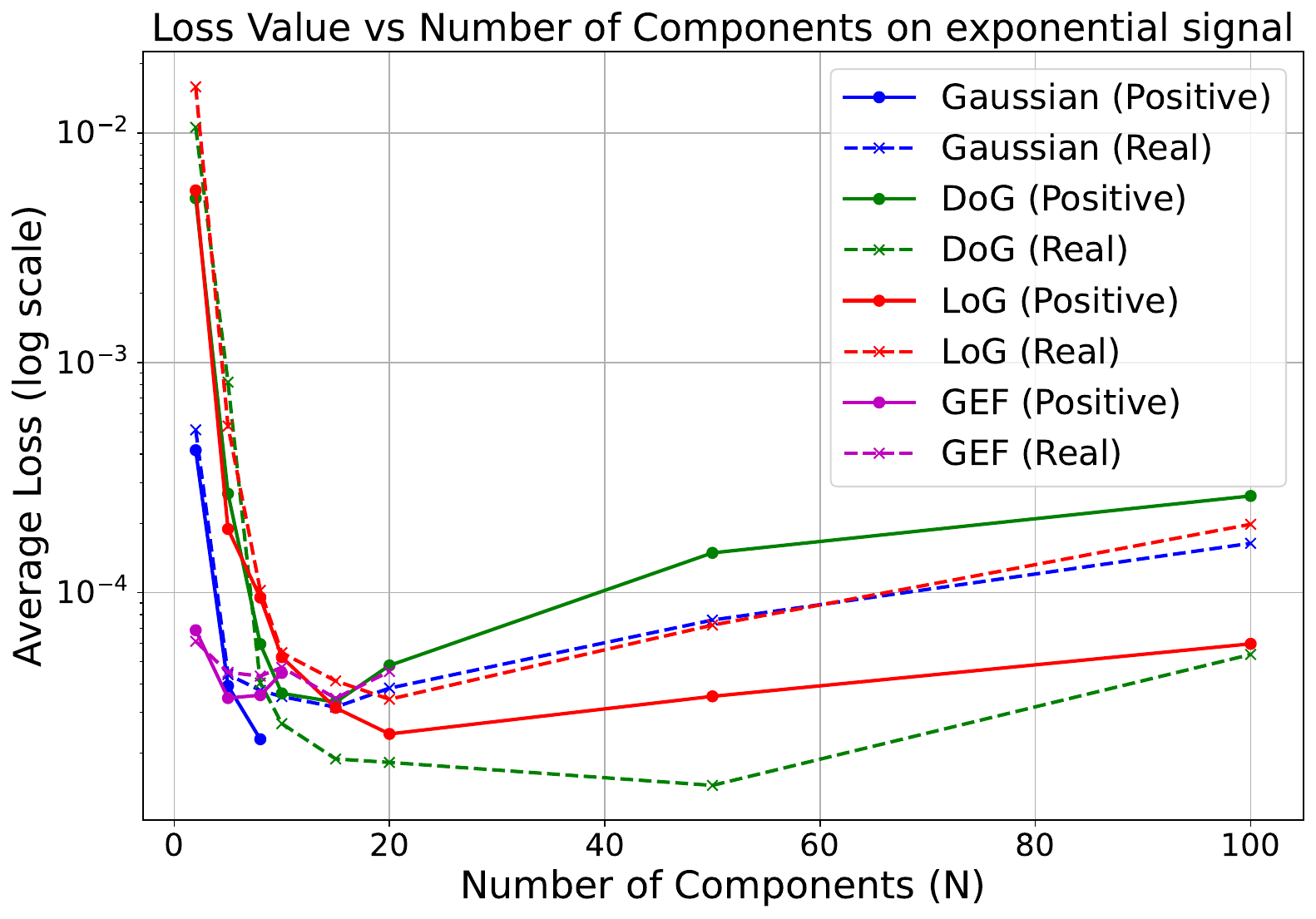} &
    \includegraphics[width=0.24\textwidth]{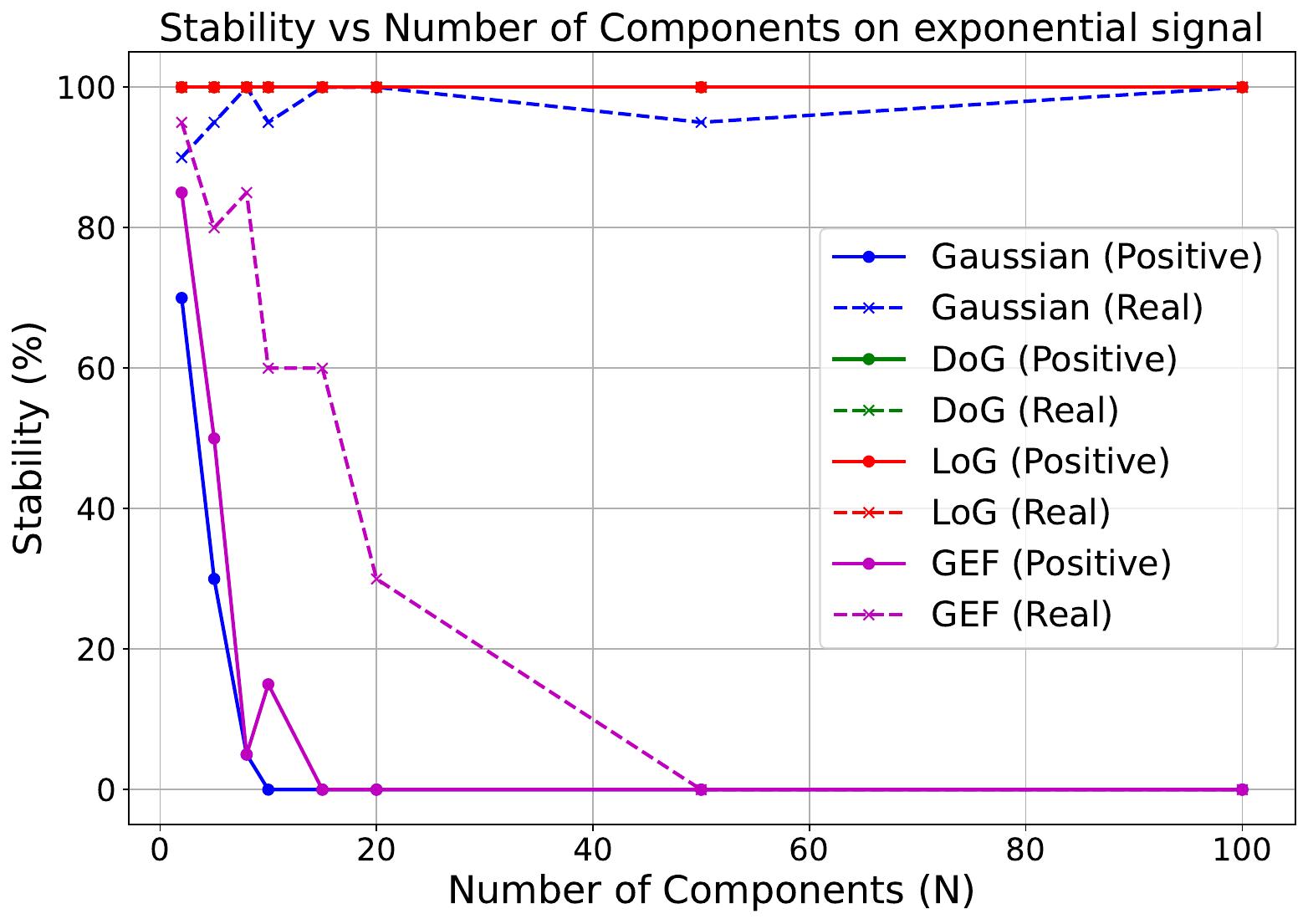} &
    \includegraphics[width=0.24\textwidth]{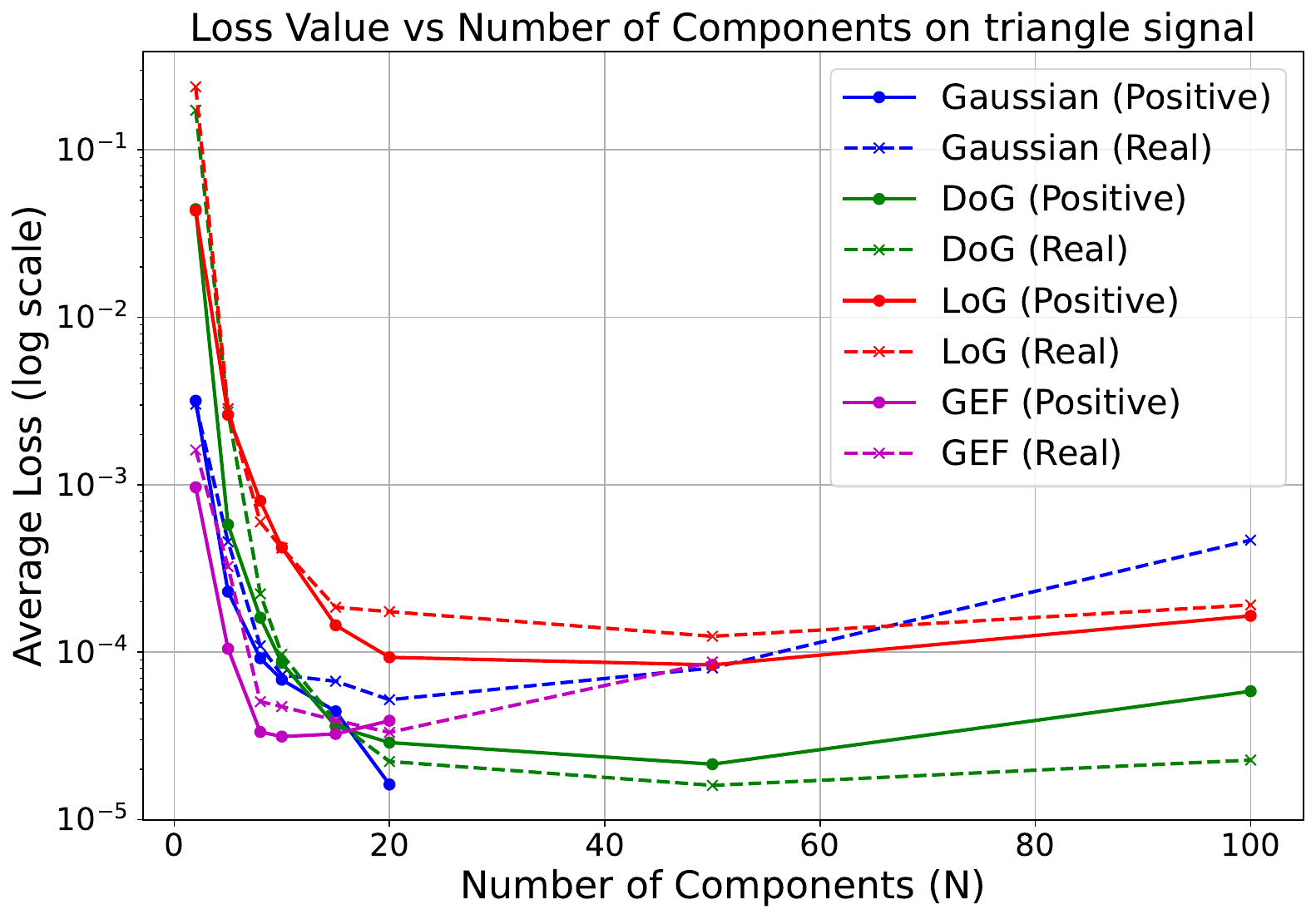} &
    \includegraphics[width=0.24\textwidth]{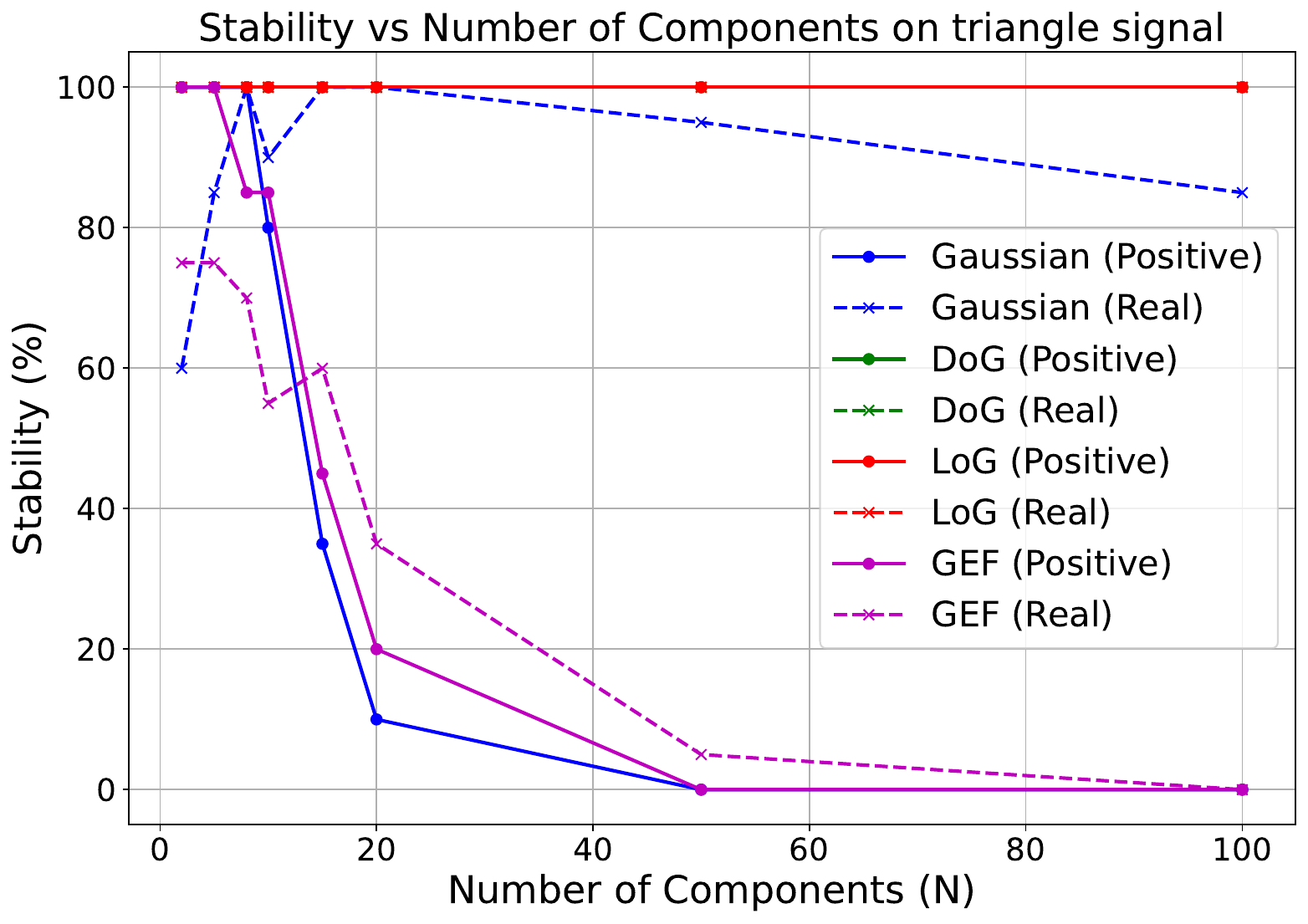} \\
    \multicolumn{2}{c|}{(c) Exponential signal} & \multicolumn{2}{c}{(d) Triangle signal} \\ \hline
    \includegraphics[width=0.24\textwidth]{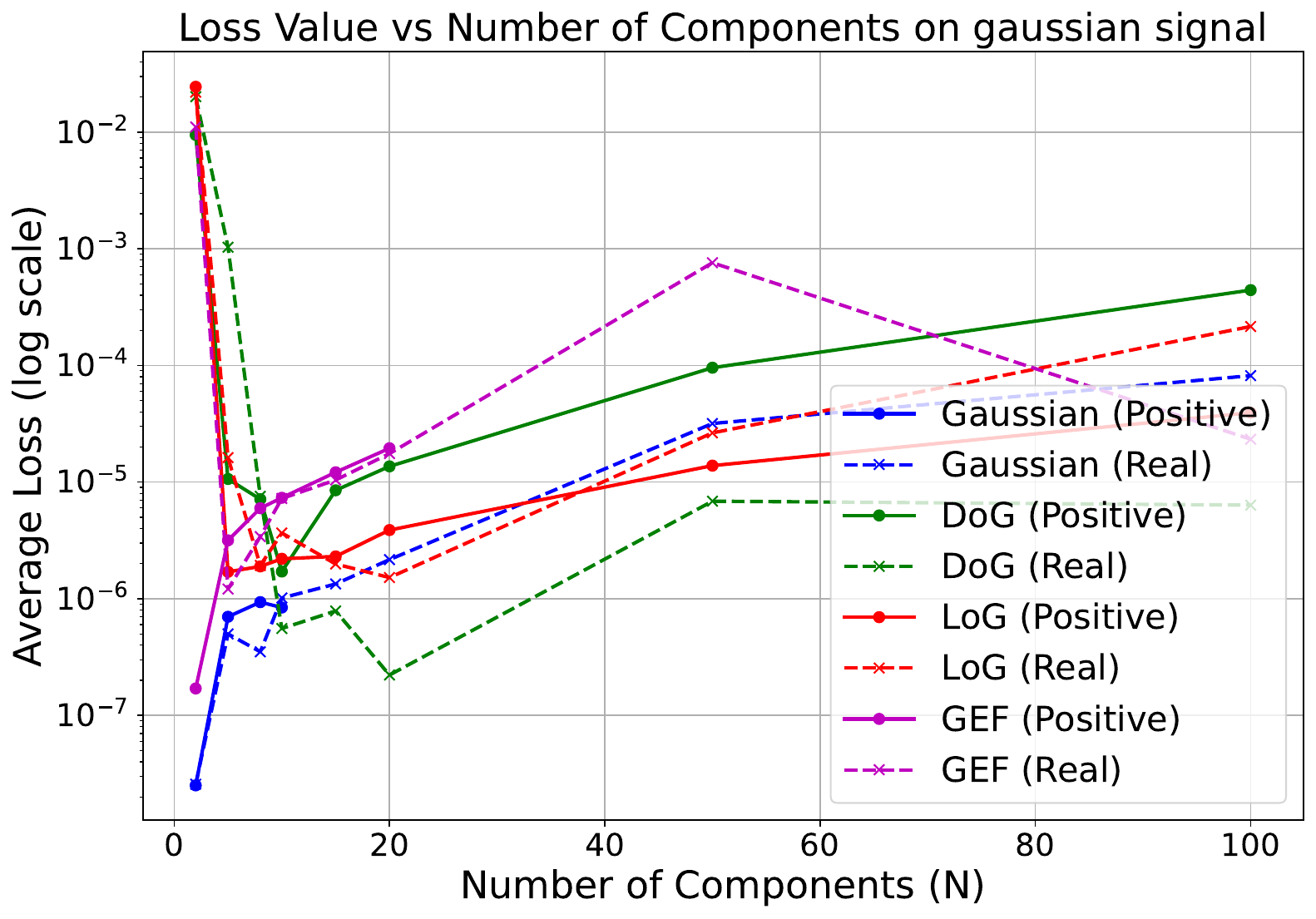} &
    \includegraphics[width=0.24\textwidth]{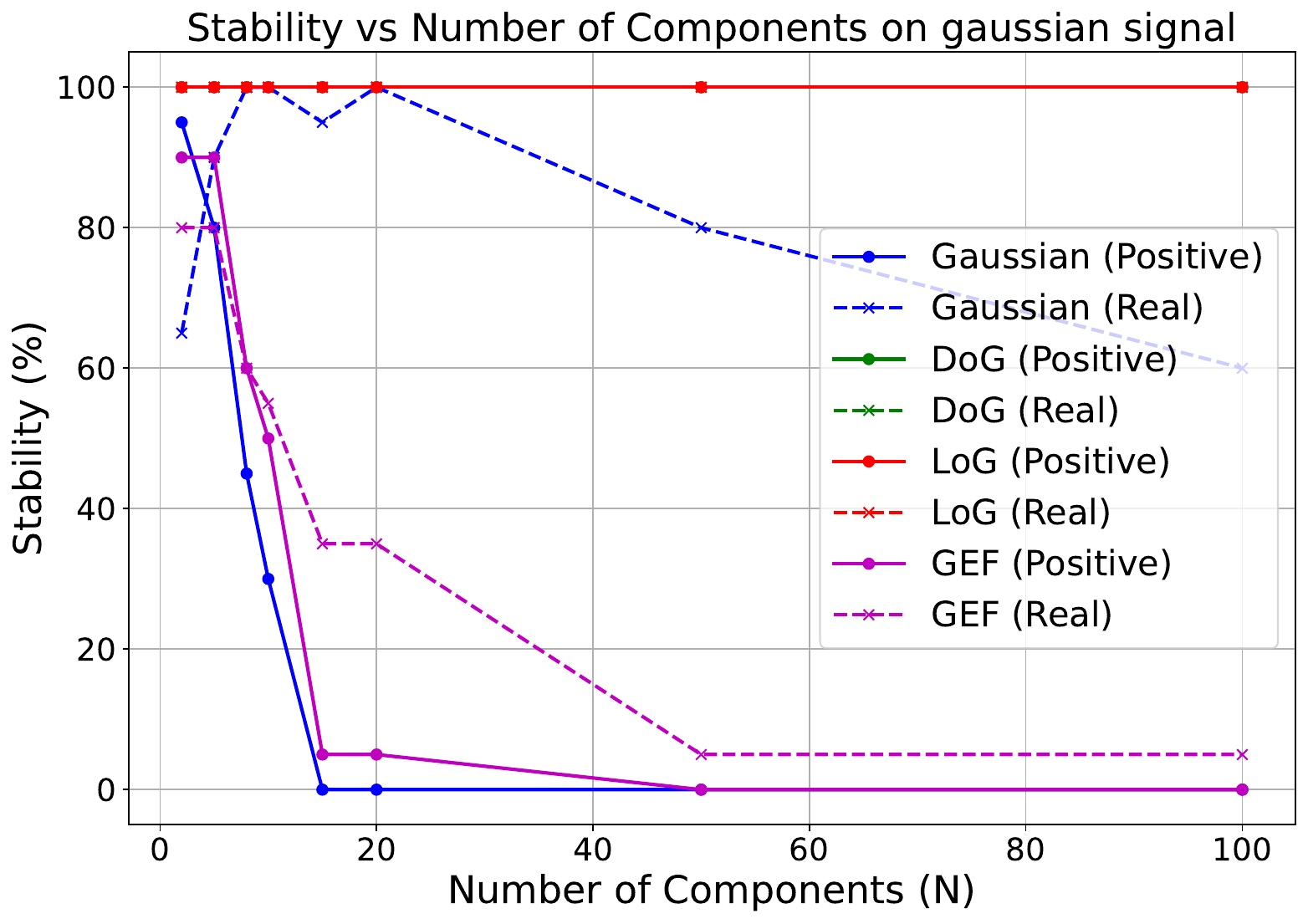} &
    \includegraphics[width=0.24\textwidth]{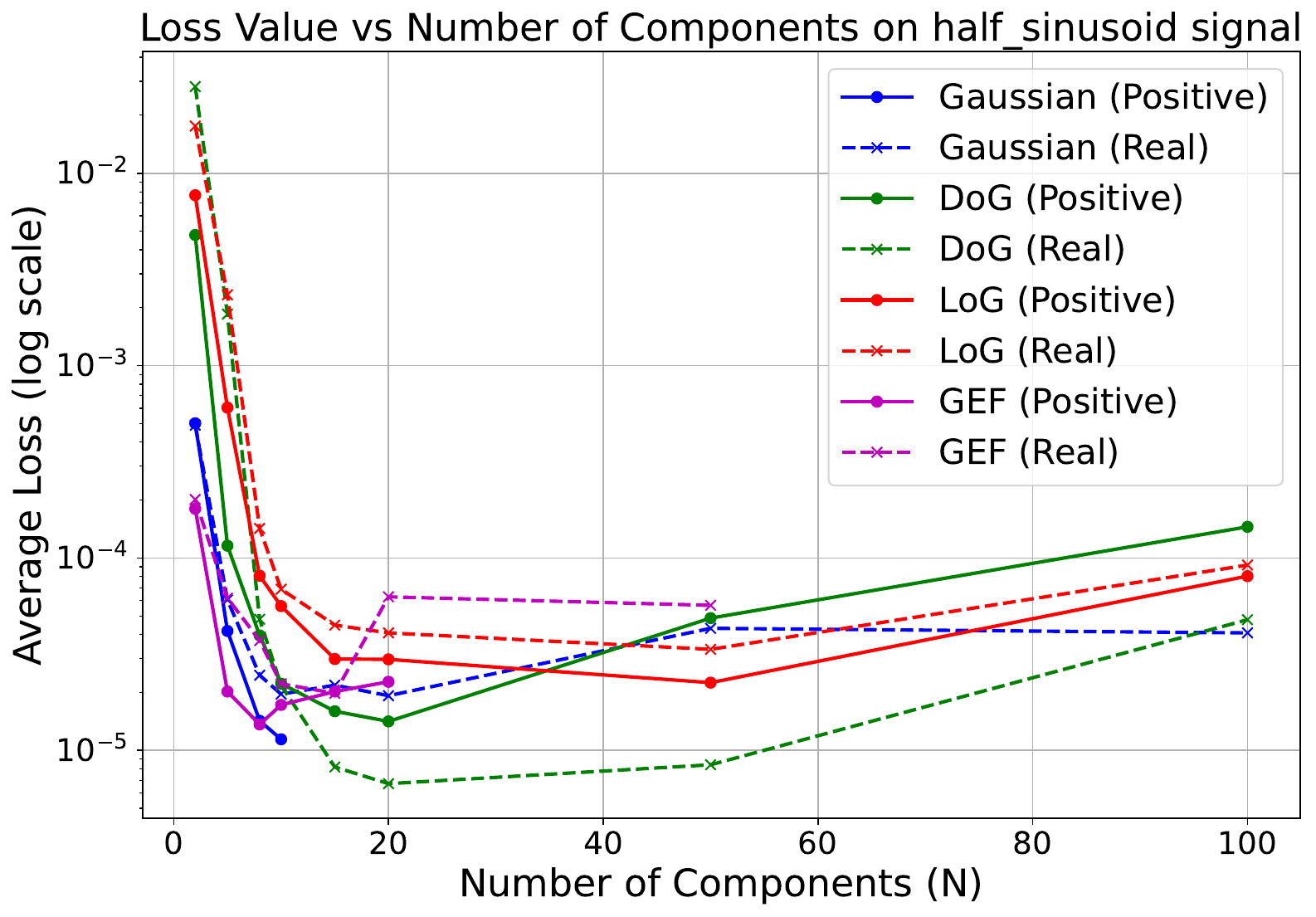} &
    \includegraphics[width=0.24\textwidth]{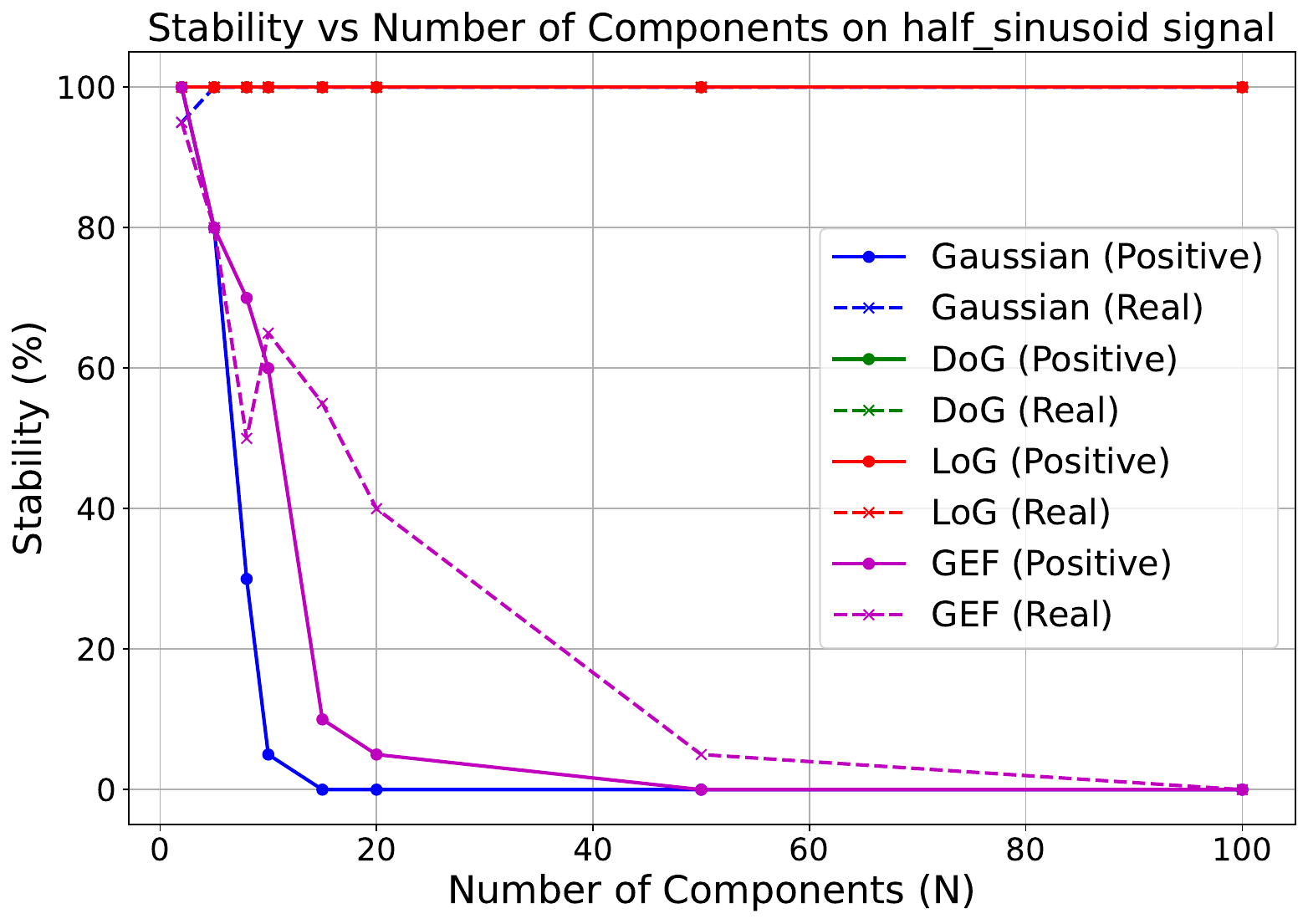} \\
    \multicolumn{2}{c|}{(e) Gaussian signal} & \multicolumn{2}{c}{(f) Half sinusoid signal} \\ 
\end{tabular}
\caption{\textbf{Numerical Simulation Results of Different Mixtures.} We show a comparison of average loss and stability (percentage of successful runs) for different mixture models optimized with gradient-based optimizers across varying numbers of components and weight configurations (positive \vs real weights) on various signal types (a-f). }
\label{figsup:loss-stability}
\end{figure*}
\minorsection{Loss Evaluation}
The models' performance was evaluated based on the loss value after training. Additionally, the model's ability to represent the input signal was visually inspected through generated plots. Multiple runs per configuration were executed to account for variance in the results.

\minorsection{Stability Evaluation}
Model stability and performance were assessed using a series of experiments involving various signal types and mixture models. Each model was trained on a 1D signal generated according to predefined signal types (square, triangle, parabolic, half sinusoid, Gaussian, and exponential), with the goal of minimizing the mean squared error (MSE) loss between the model output and the ground truth signal. The number of components in the mixture models ($N$) varied among a set of values, and models were also differentiated based on whether they were constrained to positive weights.
For a comprehensive evaluation, each configuration was run multiple times (20 runs per configuration) to account for variability in the training process. During these runs, the number of instances where the training resulted in a \texttt{NaN} loss was recorded as an indicator of stability issues.
The stability of each model was quantified by the percentage of successful training runs ($ \frac{\text{Total Runs} - \text{NaN Loss Counts}}{\text{Total Runs}} \times 100 \% $). The experiments that failed failed because the loss has diverged to \texttt{NaN}. This typical numerical instability in optimization is the result of learning the variance which can go close to zero, resulting in the exponential formula (in \eqlabel{\eqref{eq:gef-supp}}) to divide by an extremely small number.

The average MSE loss from successful runs was calculated to provide a measure of model performance. The results of these experiments were plotted, showing the relationship between the number of components and the stability and loss of the models for each signal type.

\minorsection{Simulation Results}
In the conducted analysis, both the loss and stability of various mixture models with positive and non-positive weights were evaluated on signals with different shapes. As depicted in Figure~\ref{figsup:loss-stability}, the Gaussian Mixture Model with positive weights consistently yielded the lowest loss across the number of components, indicating its effective approximation of the square signal. Conversely, non-positive weights in the Gaussian and General models showed a higher loss, emphasizing the importance of weight sign-on model performance.  These findings highlight the intricate balance between model complexity and weight constraints in achieving both low loss and high stability. Note that GEF is very efficient in fitting the square with few components, while LoG and DoG are more stable for a larger number of components. Also, note that positive weight mixtures tend to achieve lower loss with a smaller number of components but are less stable for a larger number of components. 
\begin{figure*}[h]
    \centering
    \resizebox{1.0\linewidth}{!}{
    \begin{tabular}{cccc}
    \tabcolsep=0.01cm
    Gaussian Mixture& LoG Mixture & DoG Mixture & GEF Mixture \\ 
    \includegraphics[width=0.24\linewidth]{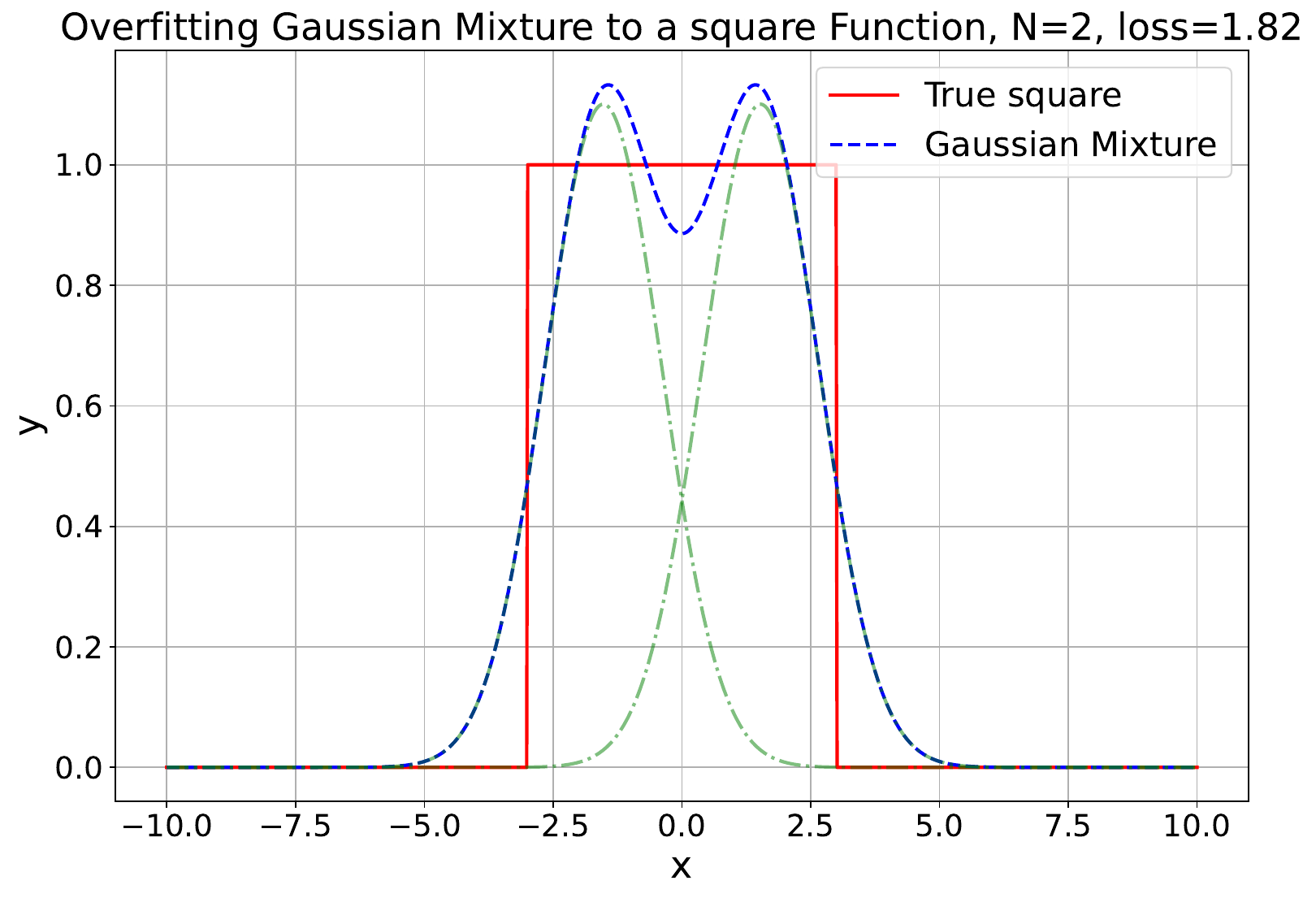} & 
    \includegraphics[width=0.24\linewidth]{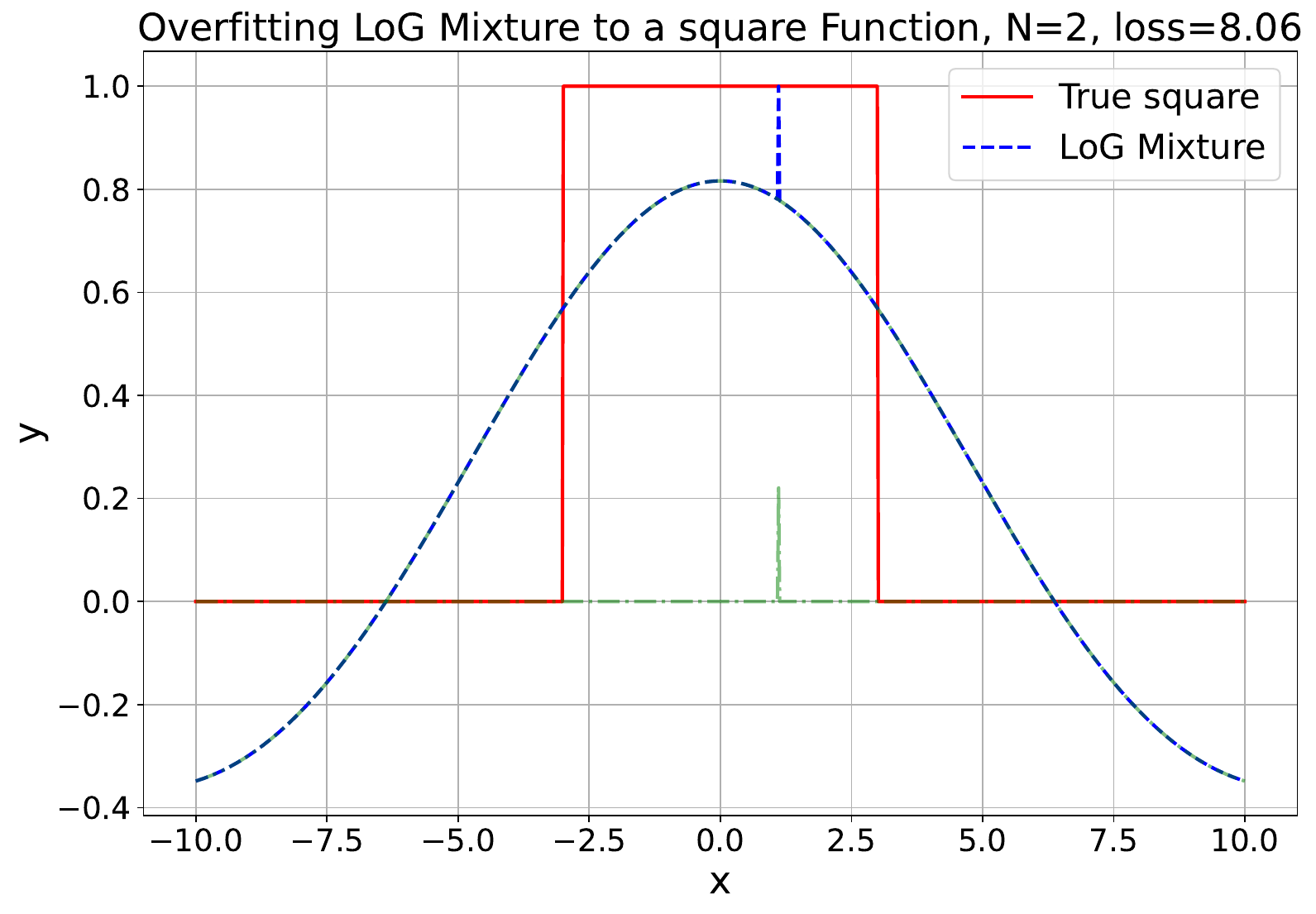} & 
    \includegraphics[width=0.24\linewidth]{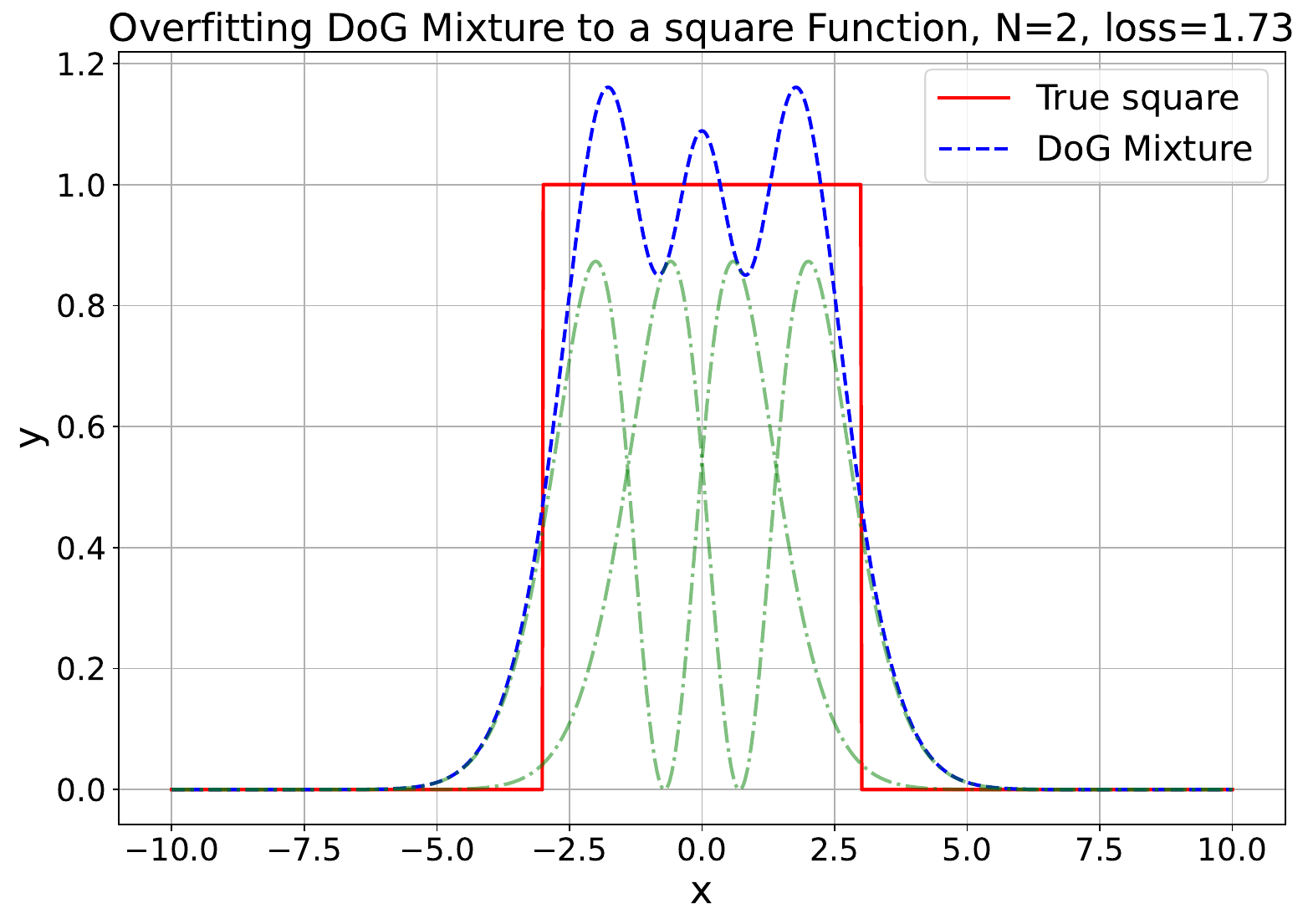} & 
    \includegraphics[width=0.24\linewidth]{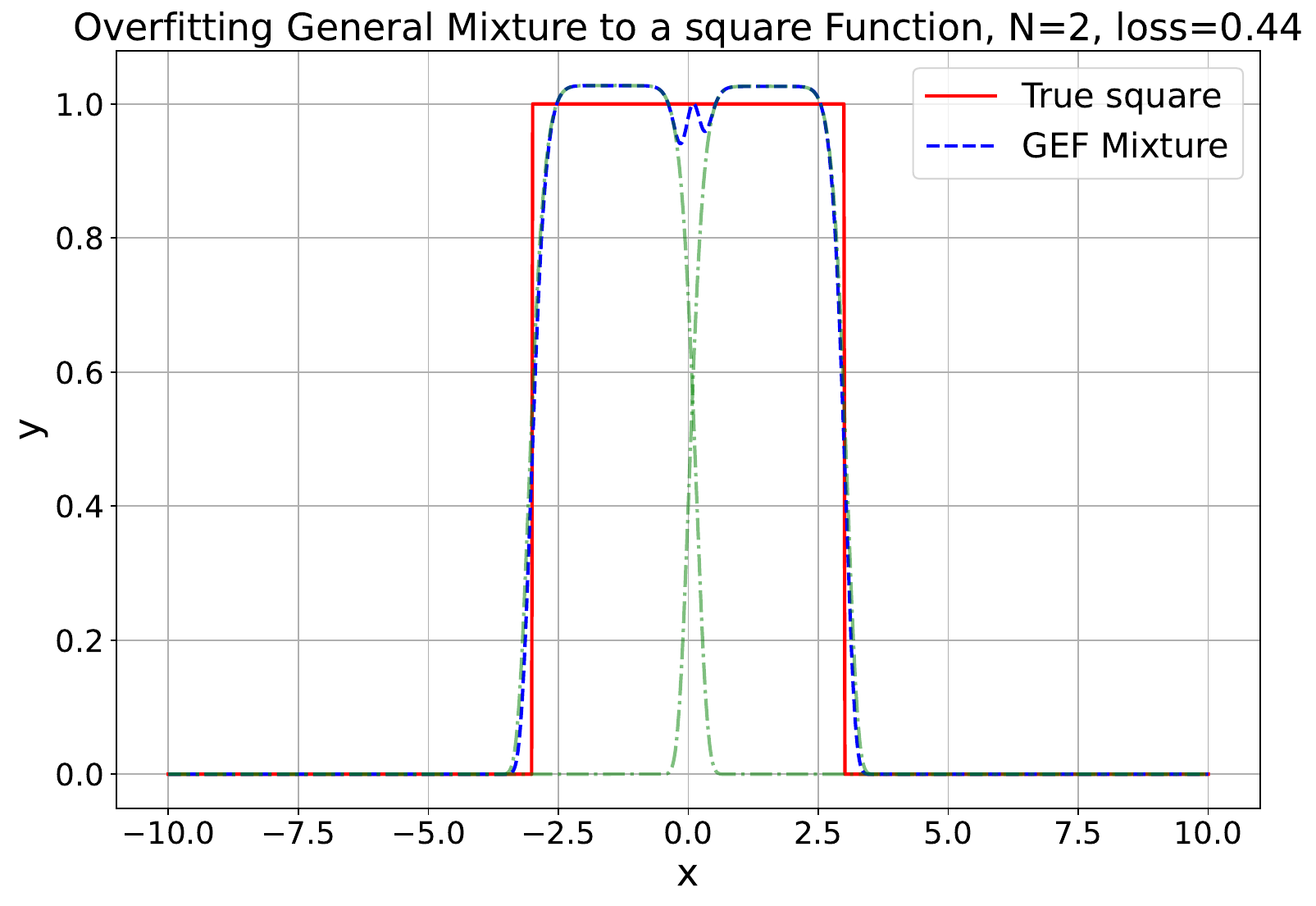}\\ 
    \includegraphics[width=0.24\linewidth]{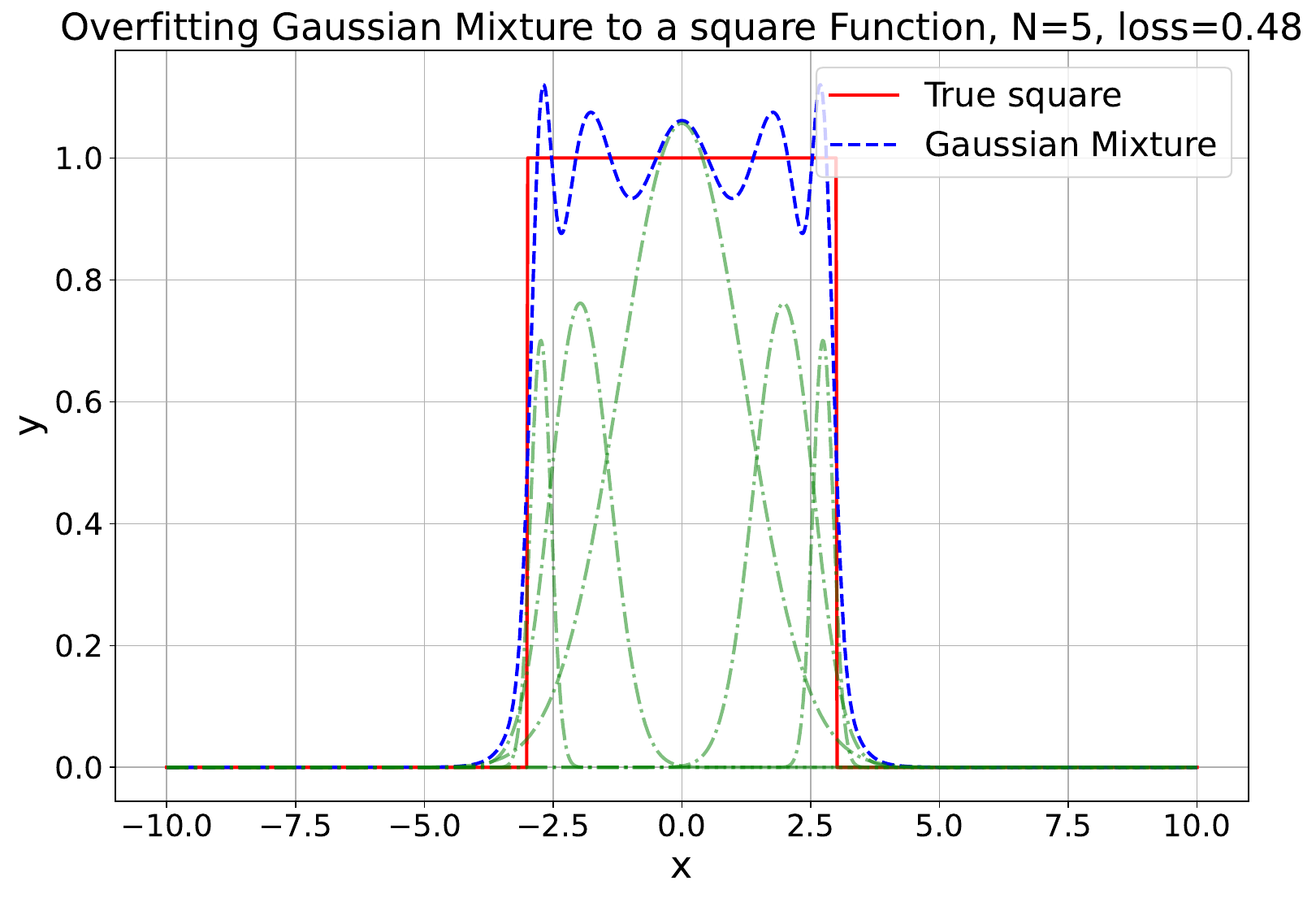} & 
    \includegraphics[width=0.24\linewidth]{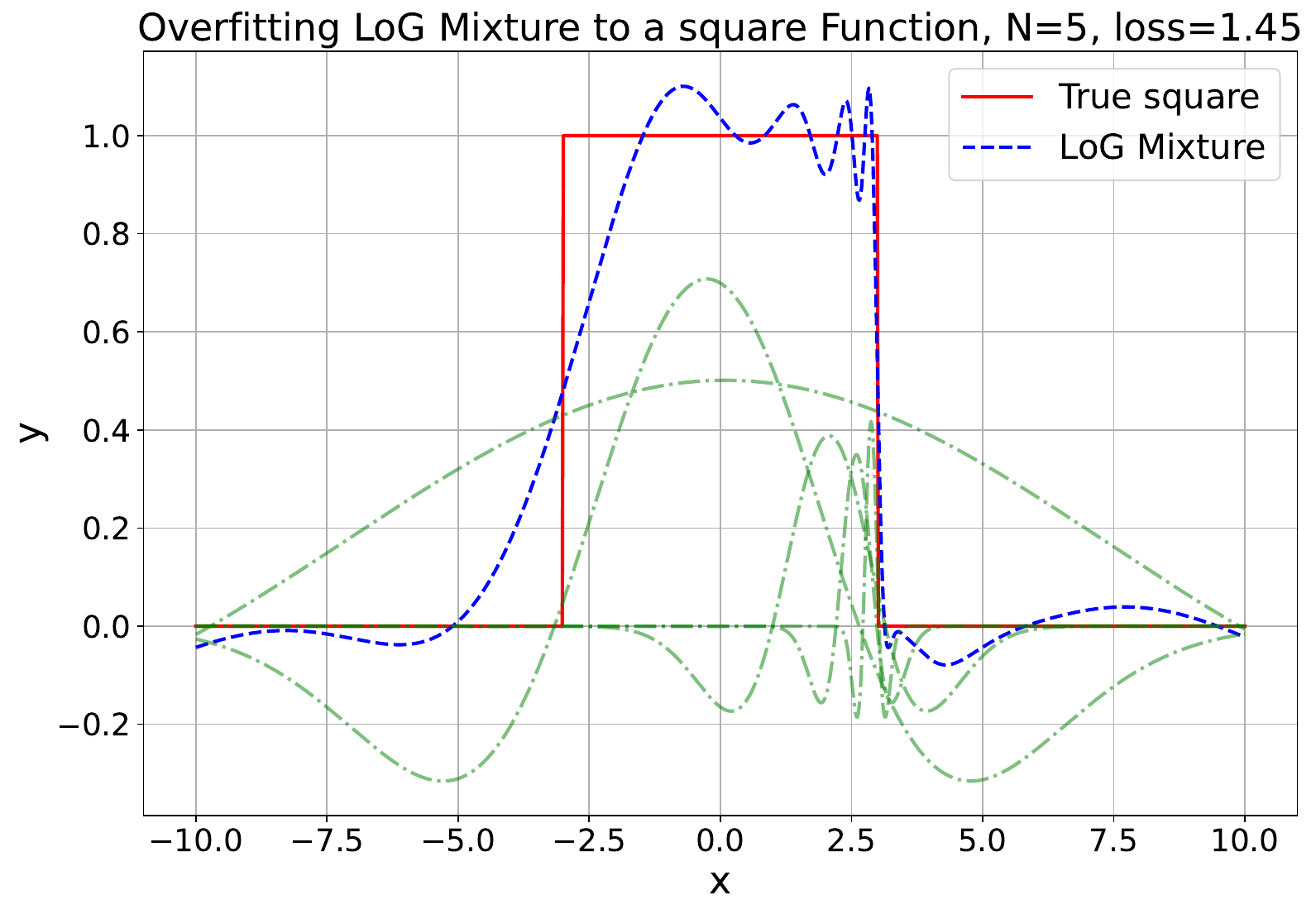} & 
    \includegraphics[width=0.24\linewidth]{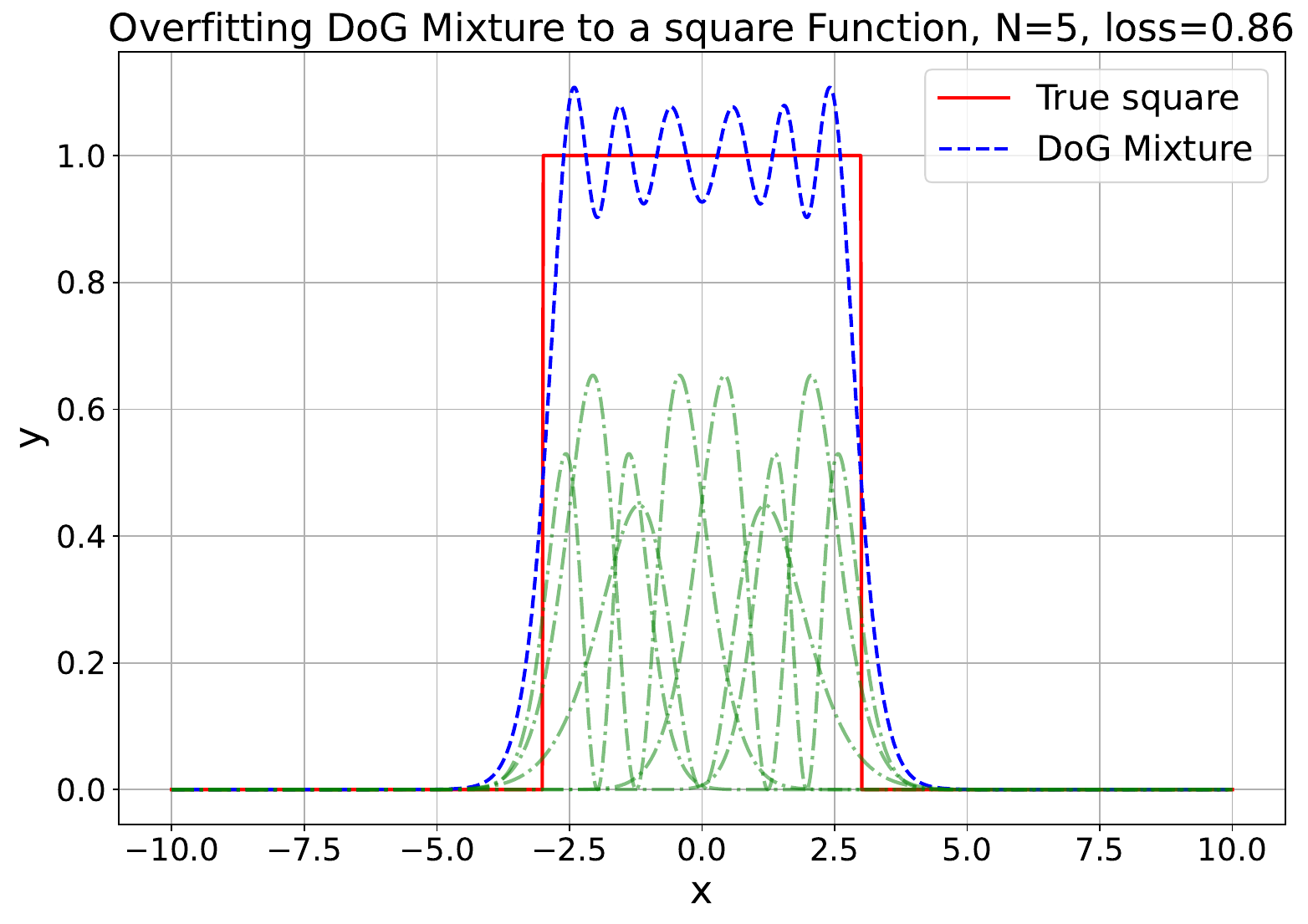} & 
    \includegraphics[width=0.24\linewidth]{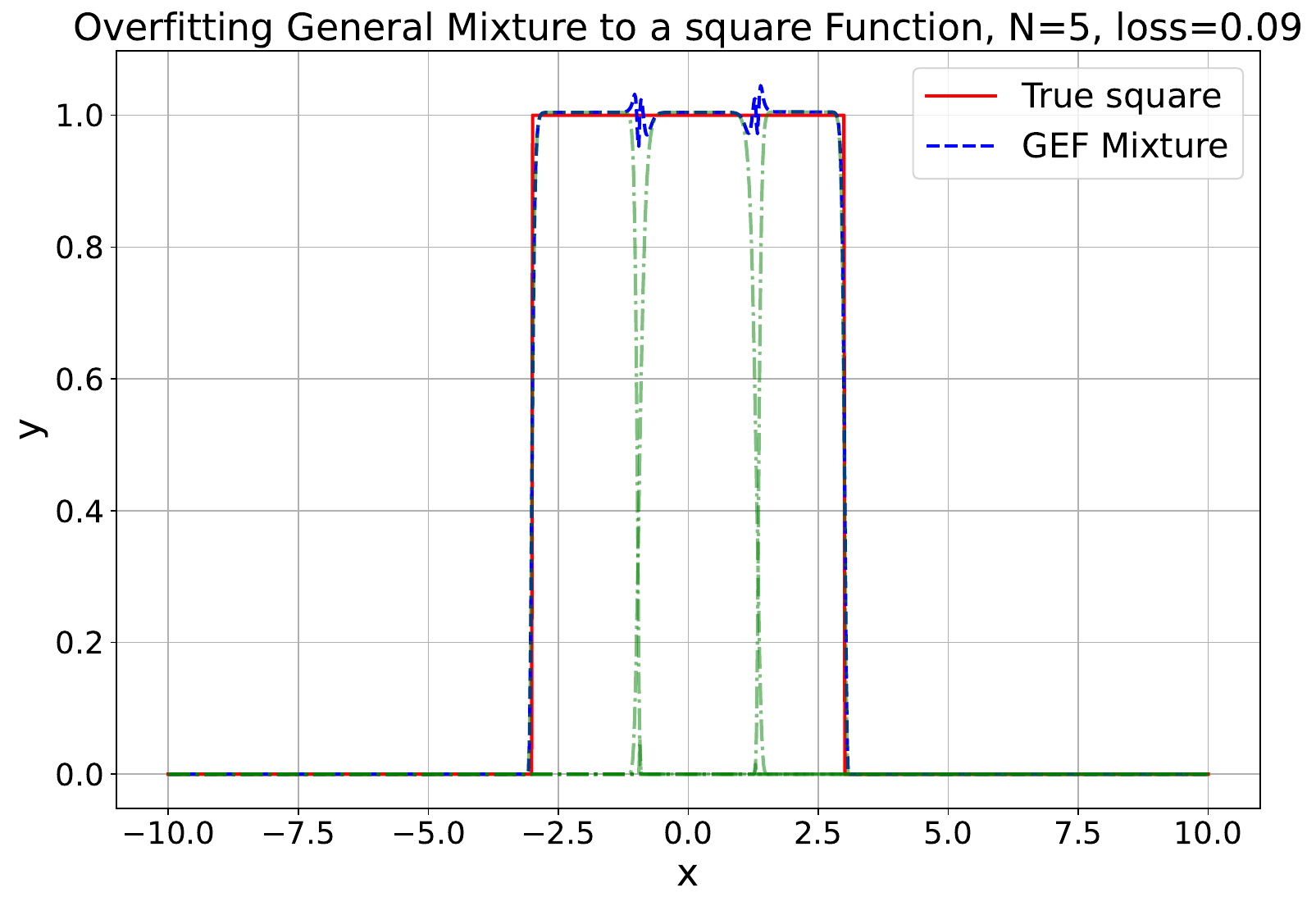}\\ 
    \includegraphics[width=0.24\linewidth]{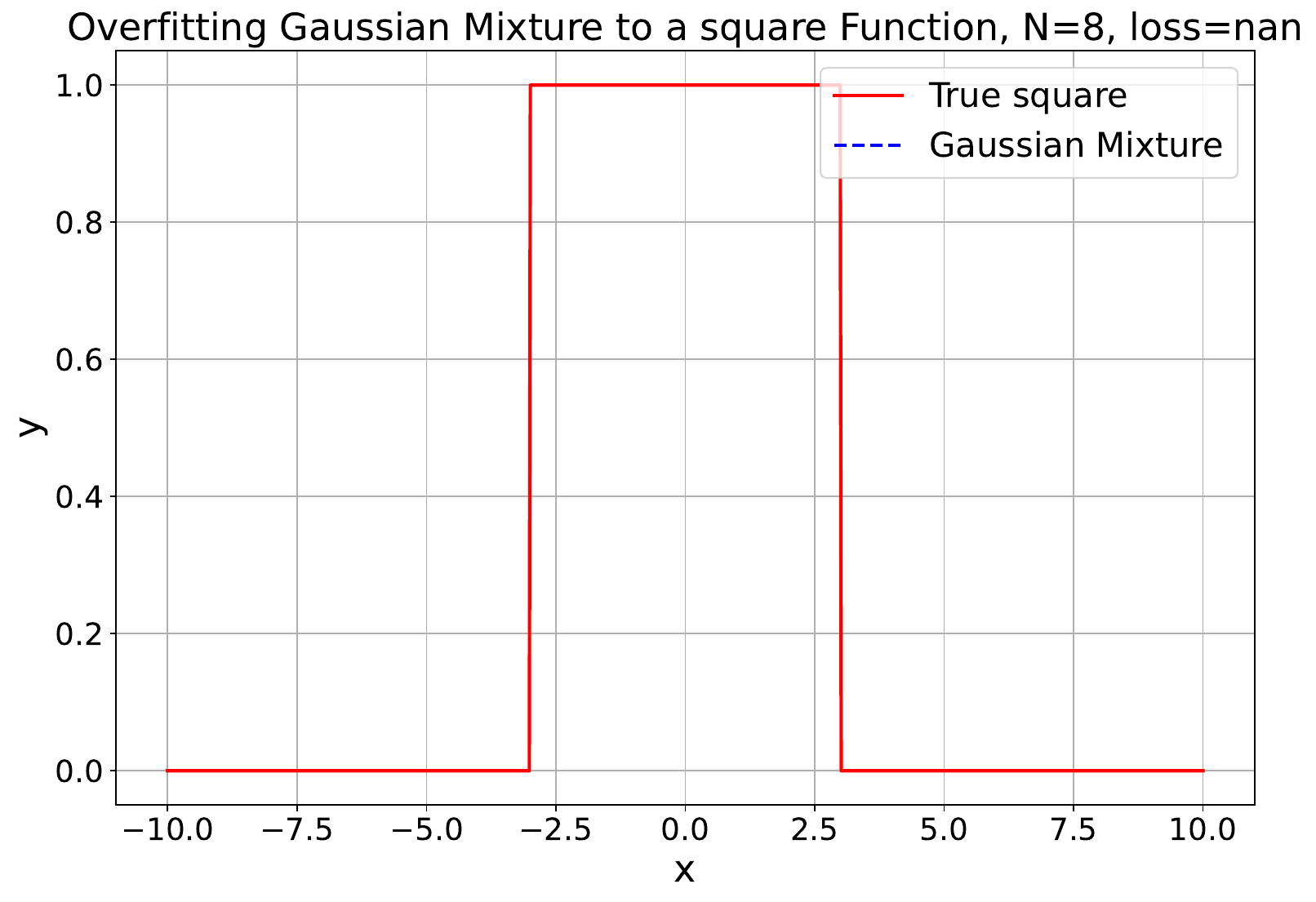} & 
    \includegraphics[width=0.24\linewidth]{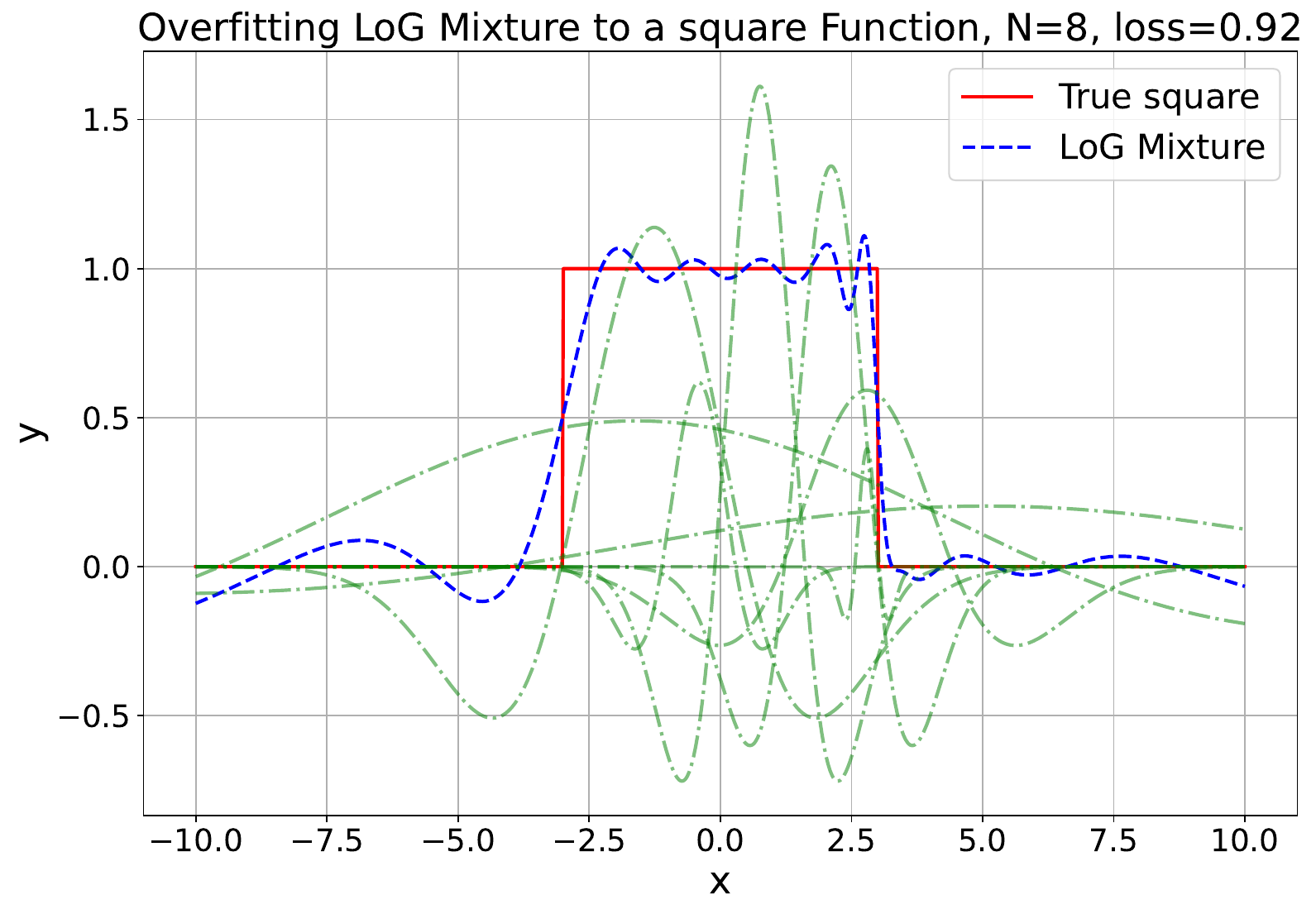} & 
    \includegraphics[width=0.24\linewidth]{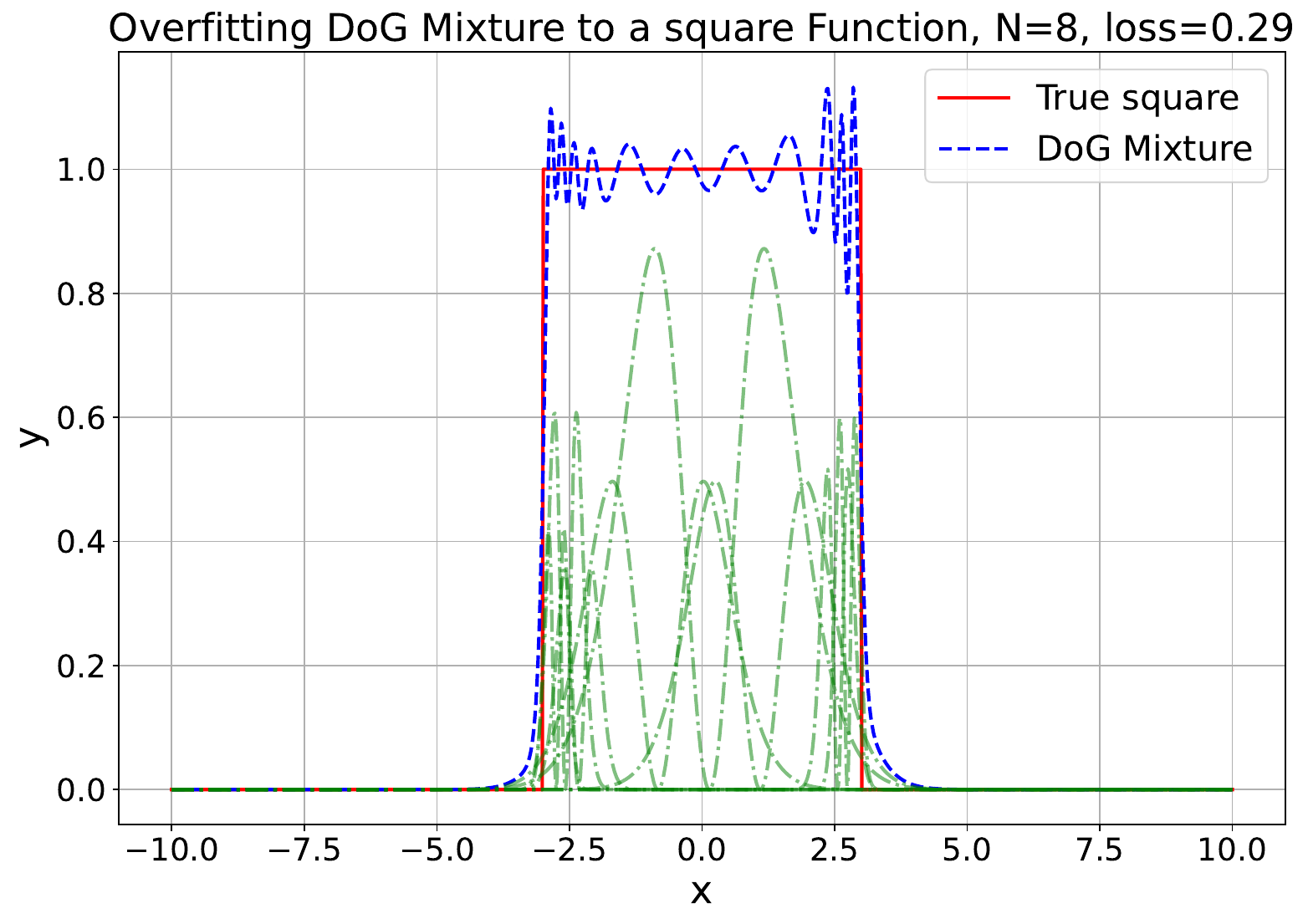} & 
    \includegraphics[width=0.24\linewidth]{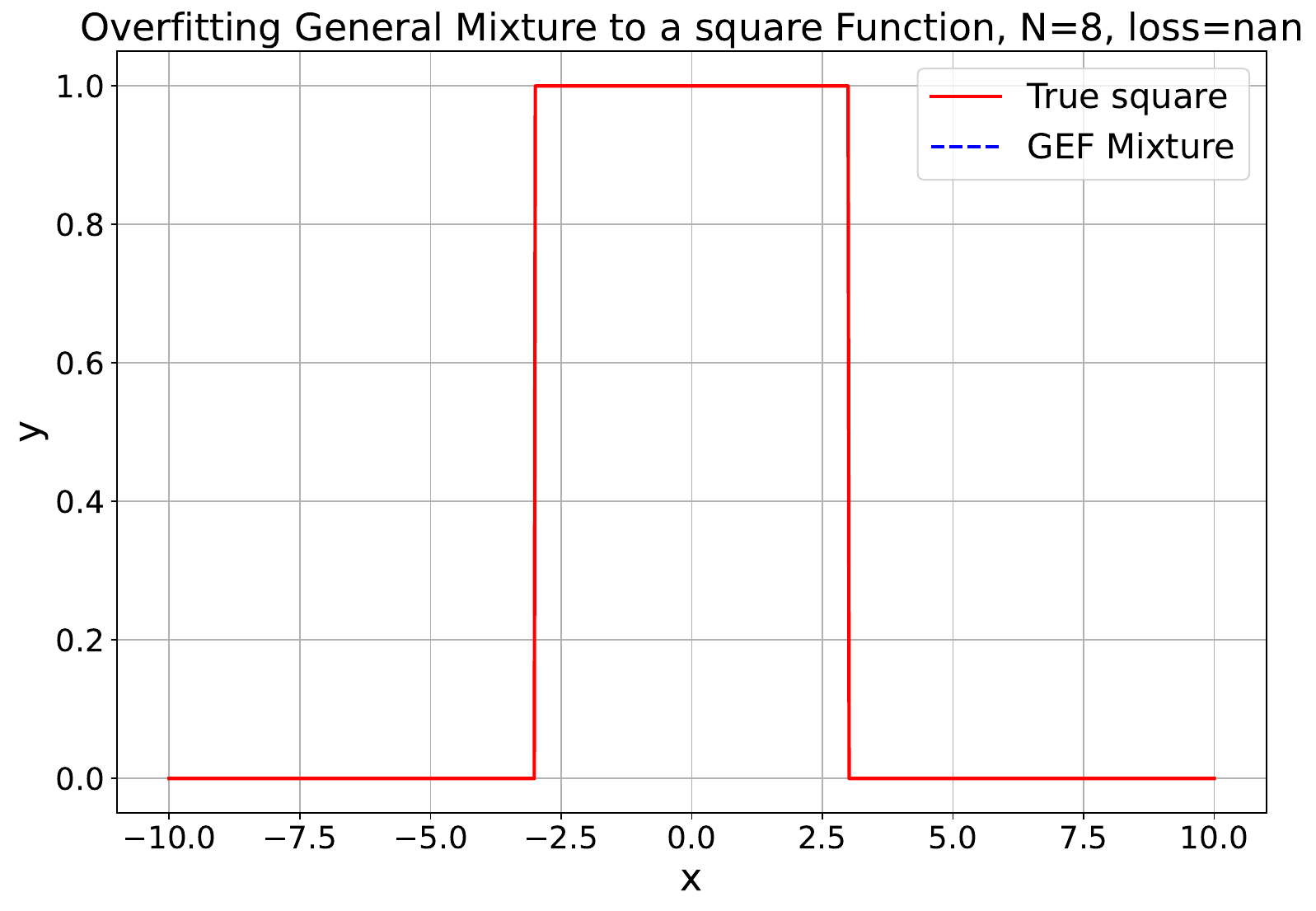}\\ 
    \includegraphics[width=0.24\linewidth]{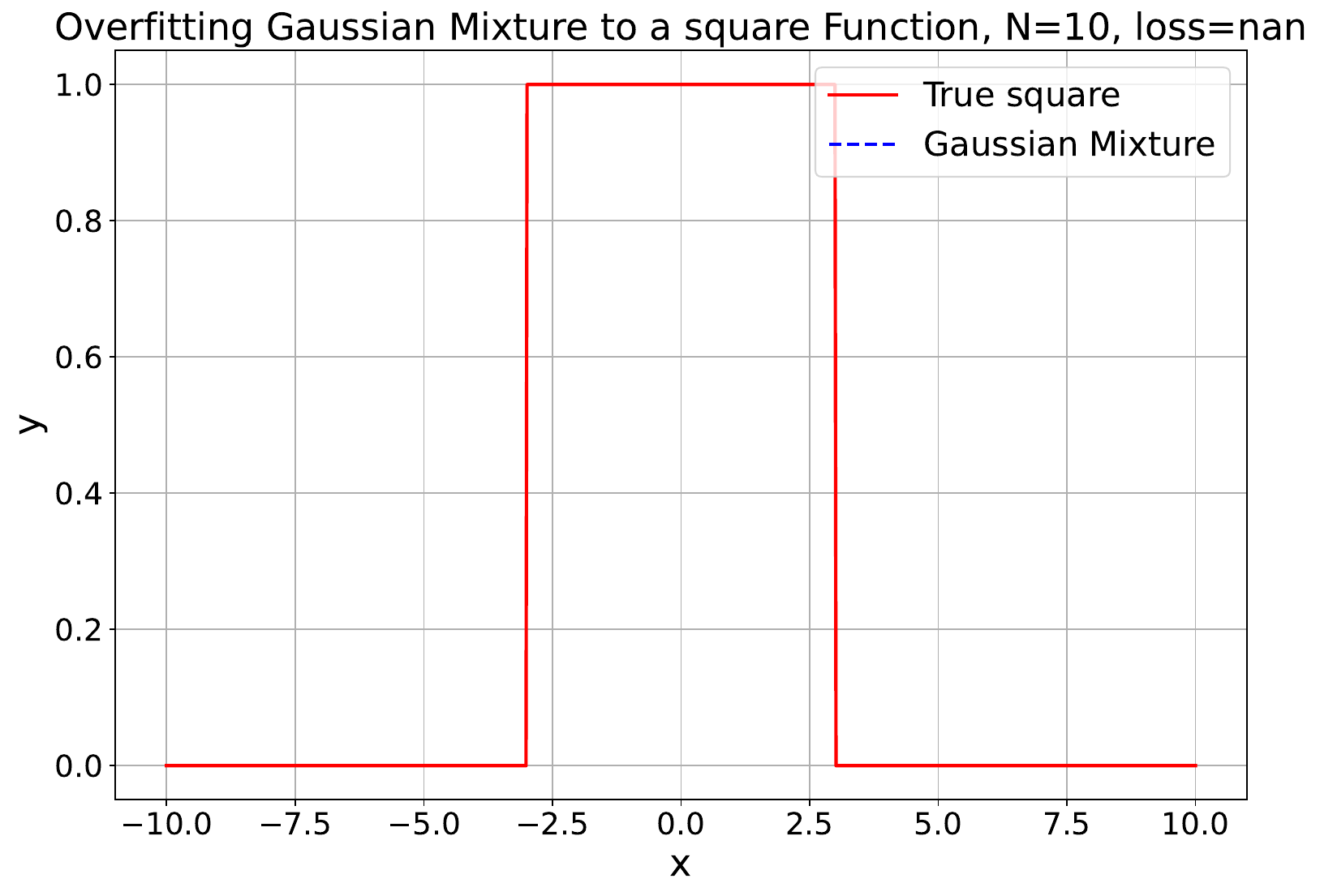} & 
    \includegraphics[width=0.24\linewidth]{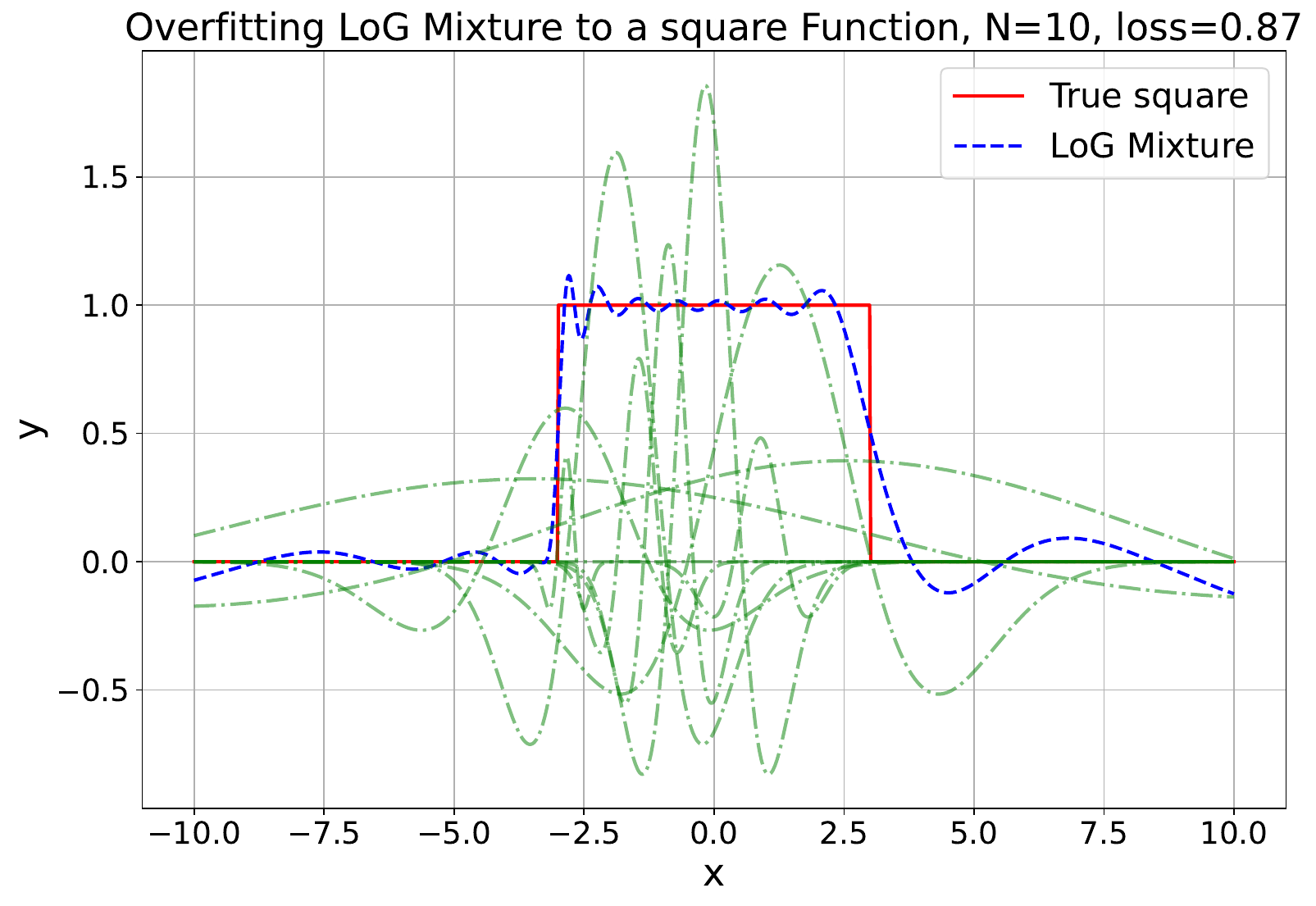} & 
    \includegraphics[width=0.24\linewidth]{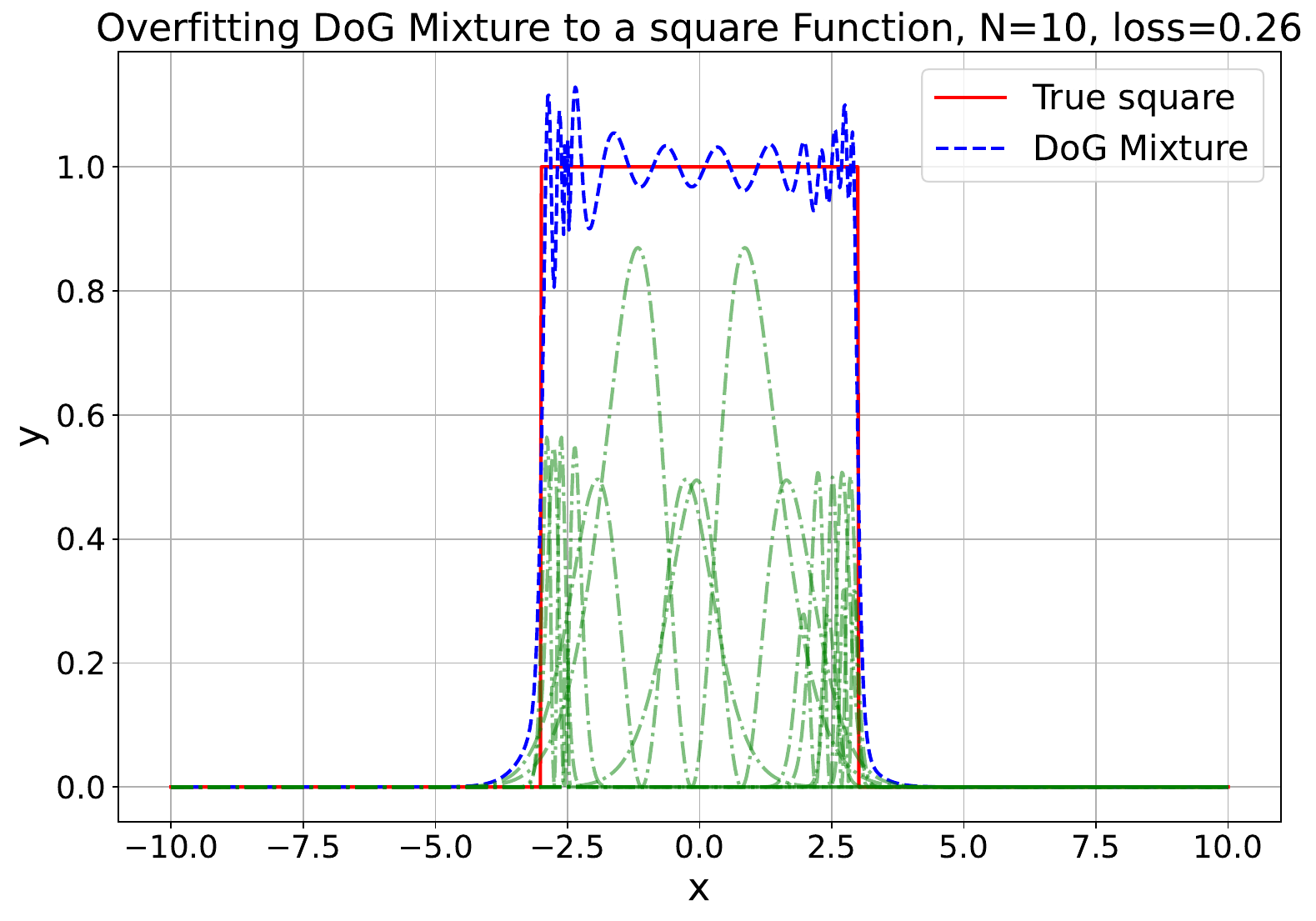} & 
    \includegraphics[width=0.24\linewidth]{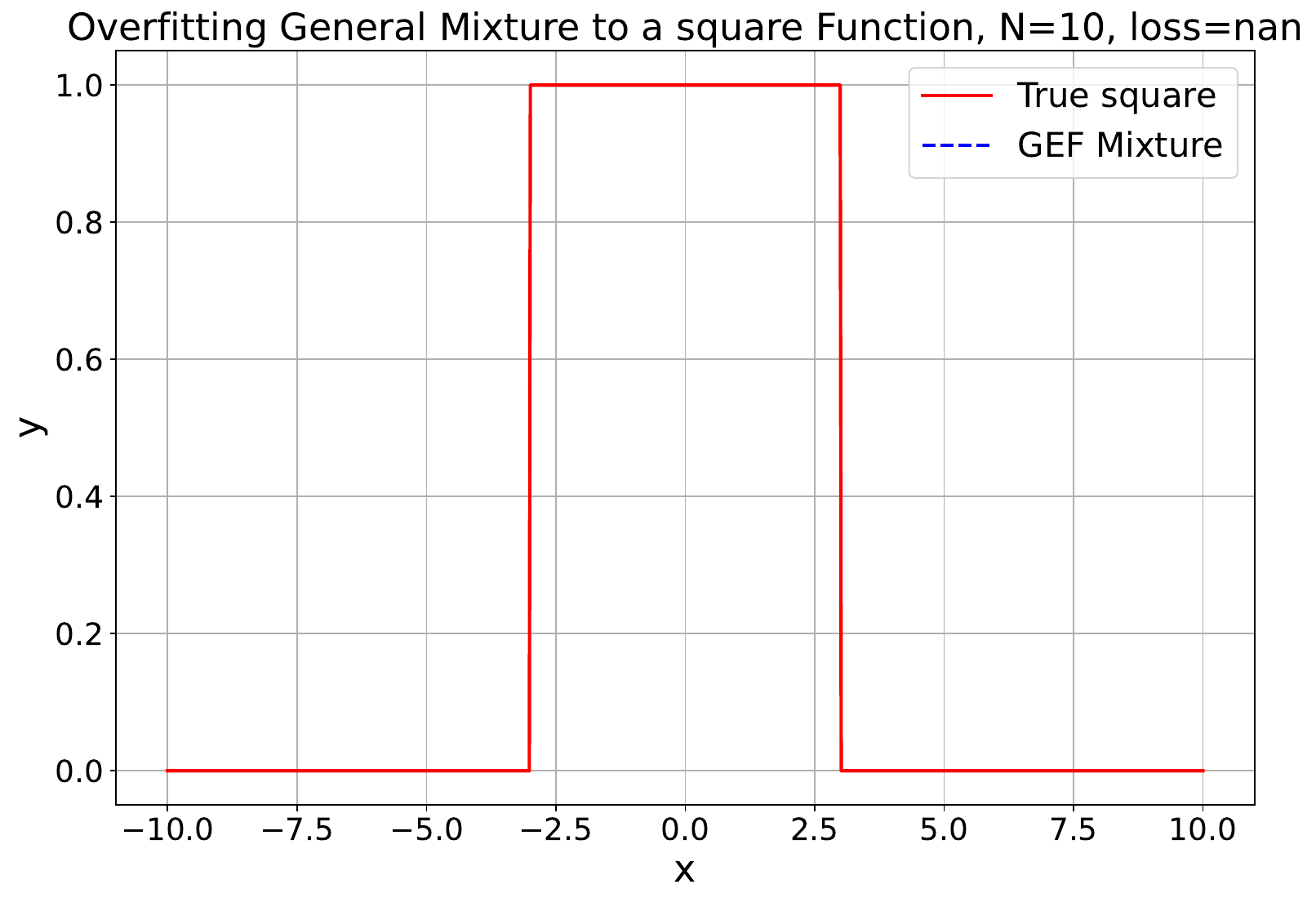}\\ 
    \includegraphics[width=0.24\linewidth]{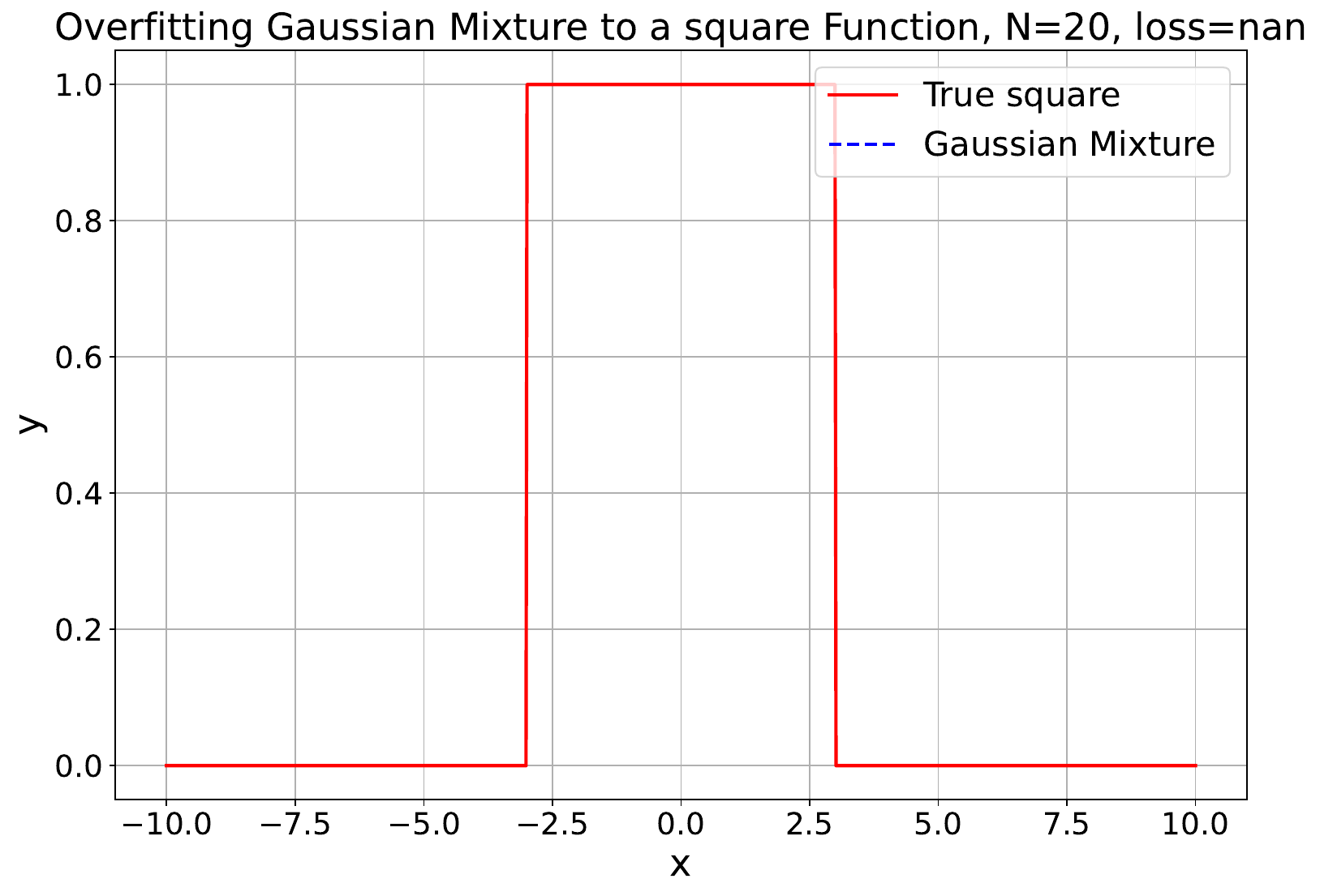} & 
    \includegraphics[width=0.24\linewidth]{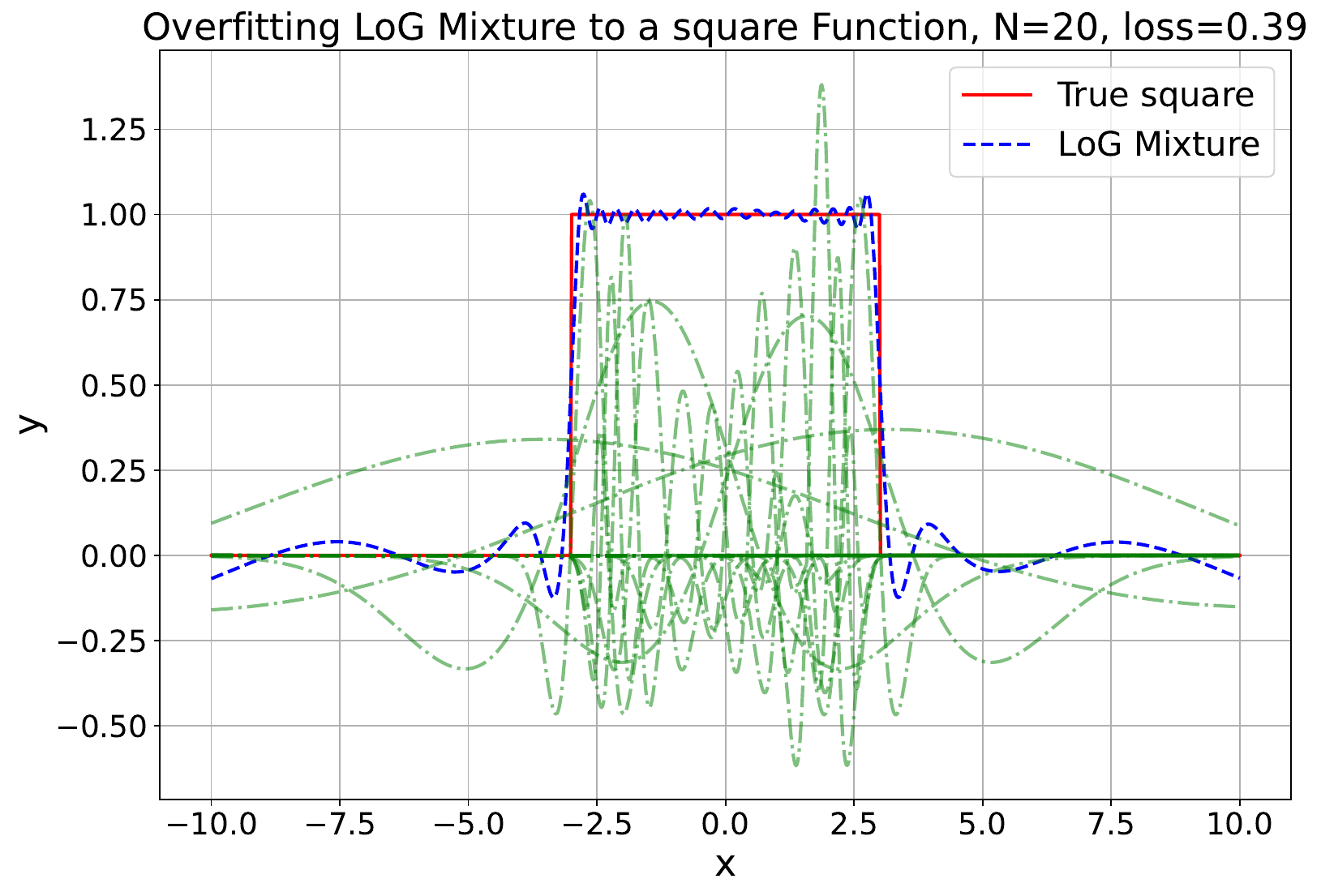} & 
    \includegraphics[width=0.24\linewidth]{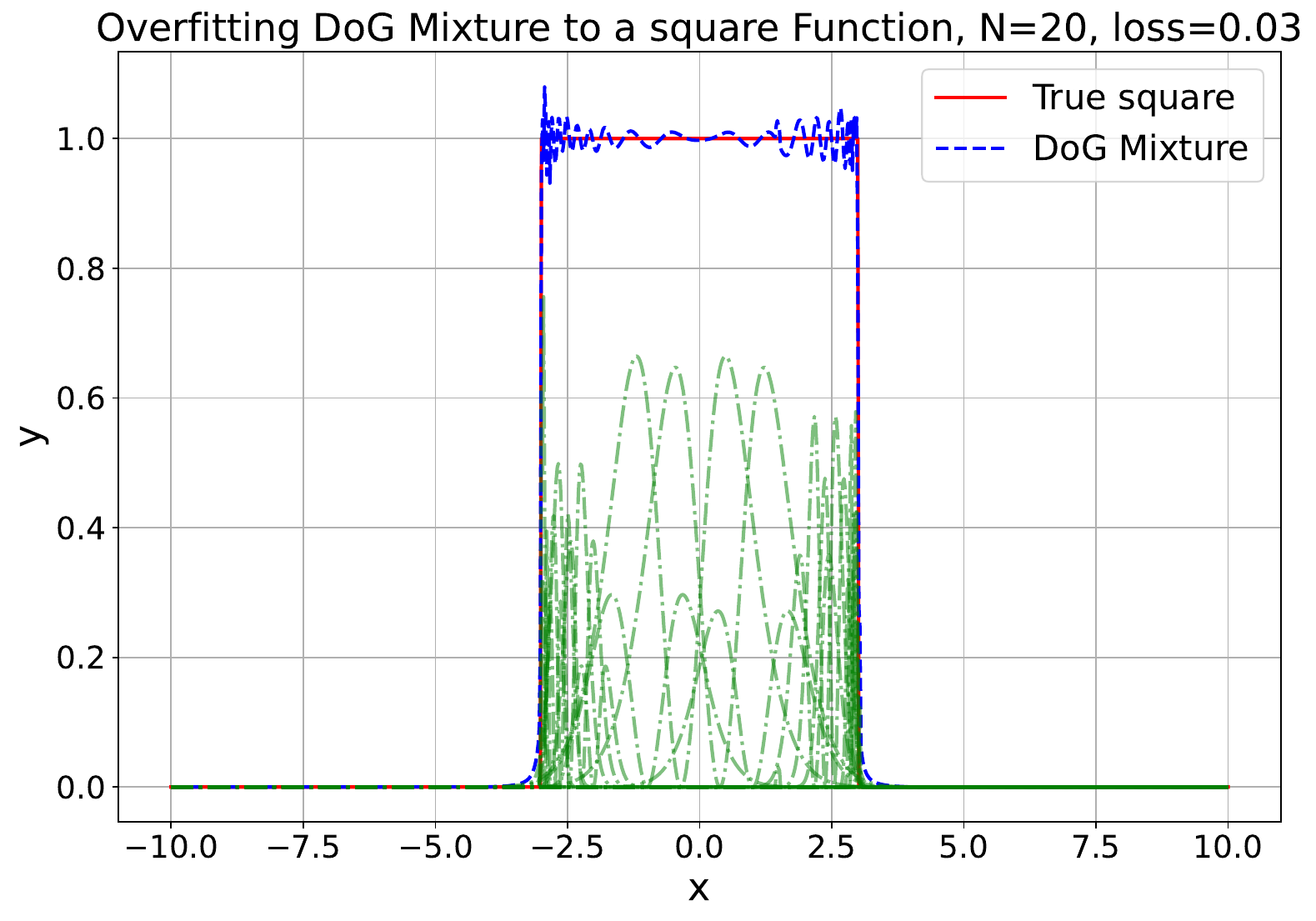} & 
    \includegraphics[width=0.24\linewidth]{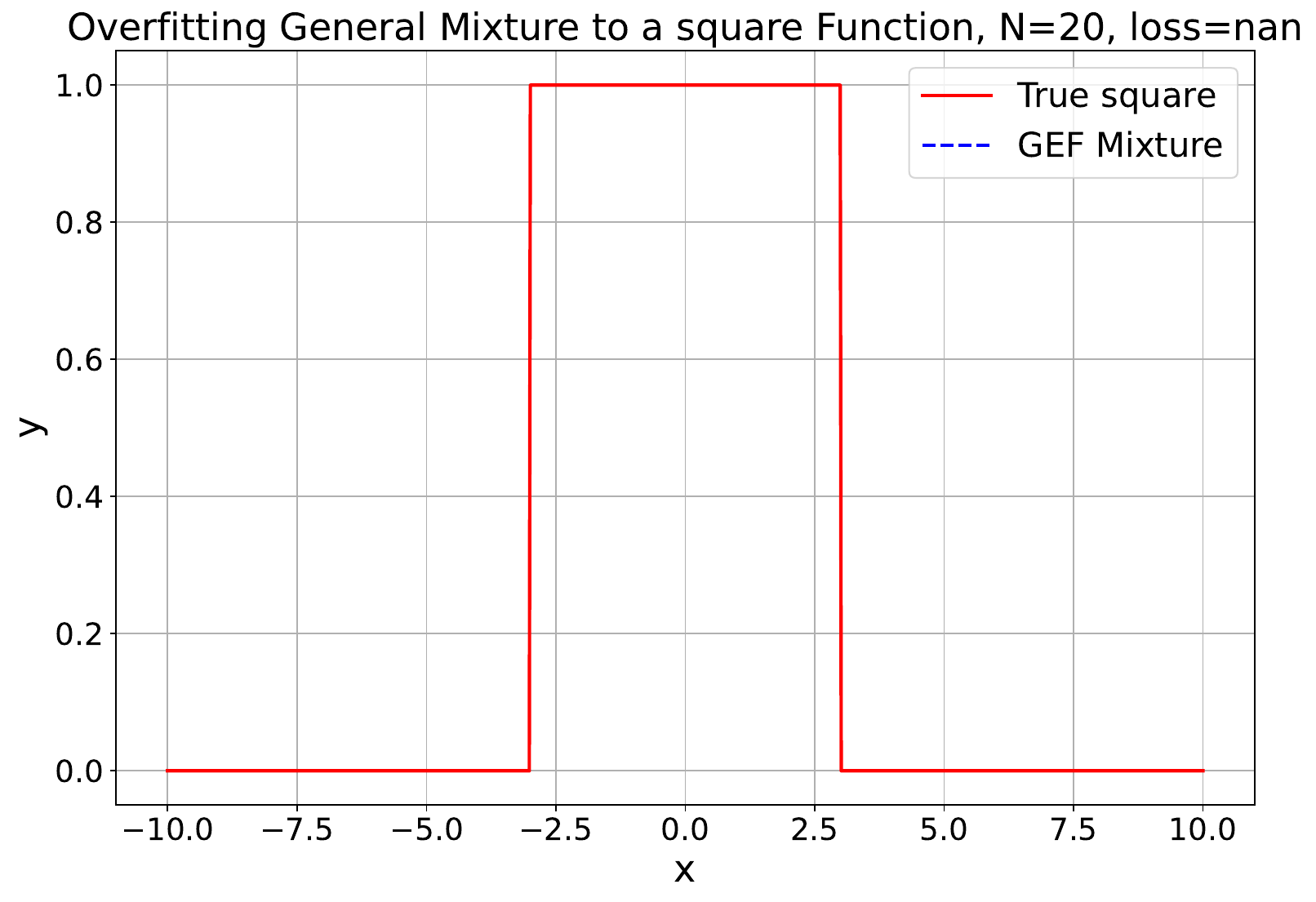}\\ 
    
    \end{tabular}
    }
    \caption{\textbf{Numerical Simulation Examples of Fitting Squares with Positive Weights Mixtures ( N= 2, 5, 8, and 10 )}. We show some fitting examples for Square signals with positive weights mixtures. The four mixtures used from left to right are Gaussians, LoG, DoG, and General mixtures. From top to bottom: N = 2, 8, and 10 components. The optimized individual components are shown in green. Some examples fail to optimize due to numerical instability in both Gaussians and GEF mixtures. Note that GEF is very efficient in fitting the Square with few components while LoG and DoG are more stable for a larger number of components. }
    \label{supfig:fitting_square_p}
    \end{figure*}
    
\begin{figure*}[h]
    \centering
    \resizebox{1.0\linewidth}{!}{
    \begin{tabular}{cccc}
    \tabcolsep=0.01cm
    Gaussian Mixture& LoG Mixture & DoG Mixture & GEF Mixture \\ 
    \includegraphics[width=0.24\linewidth]{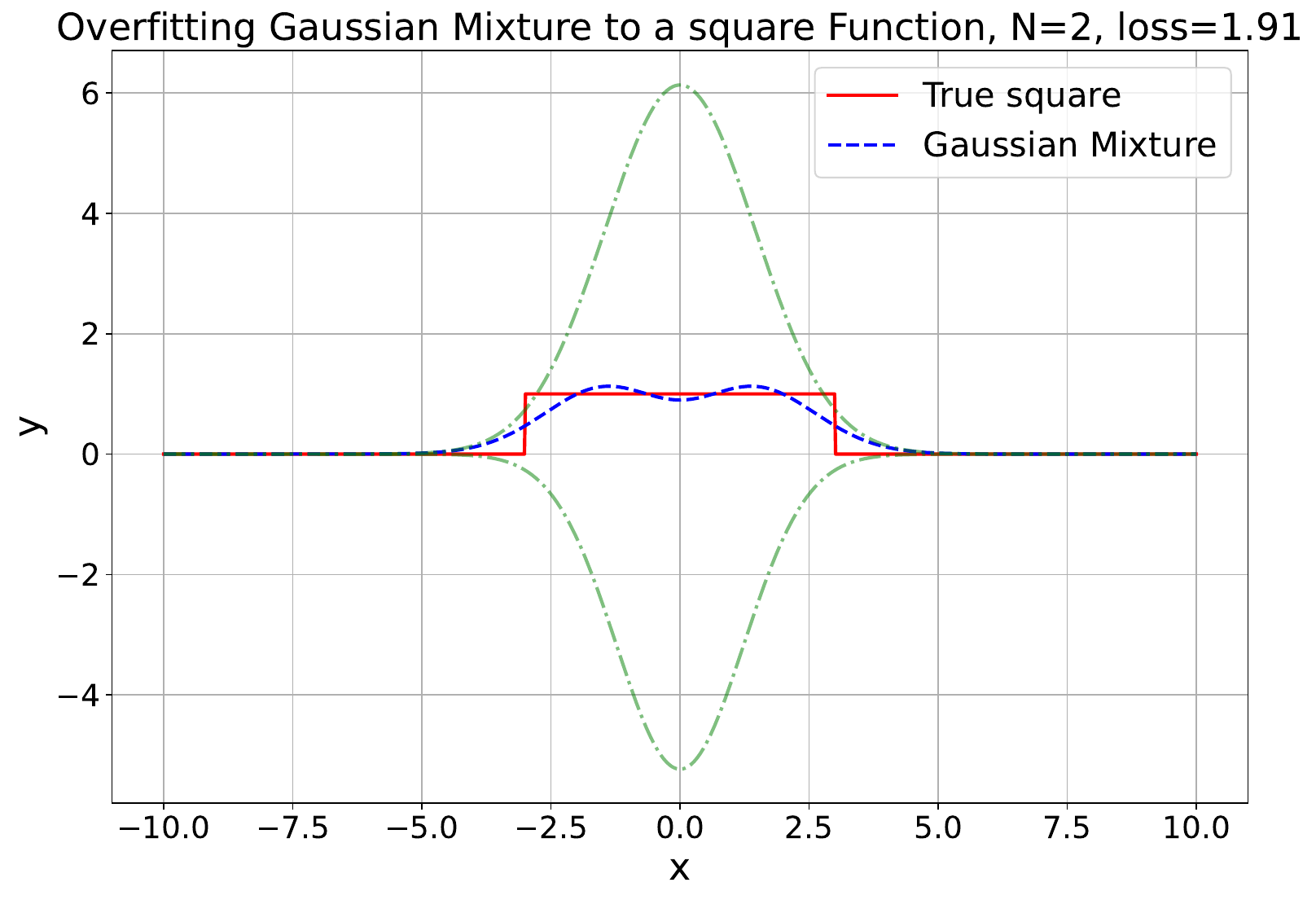} & 
    \includegraphics[width=0.24\linewidth]{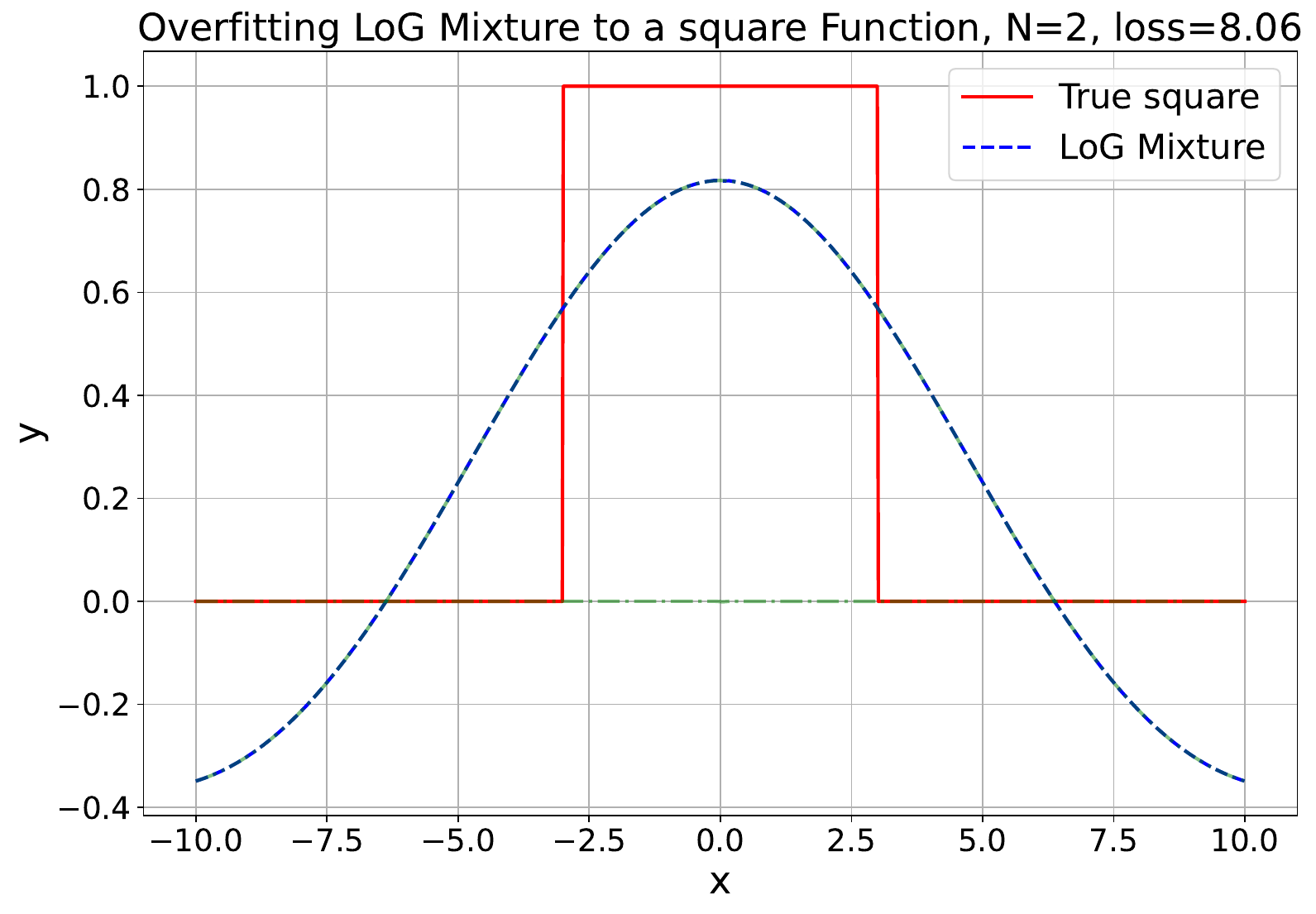} & 
    \includegraphics[width=0.24\linewidth]{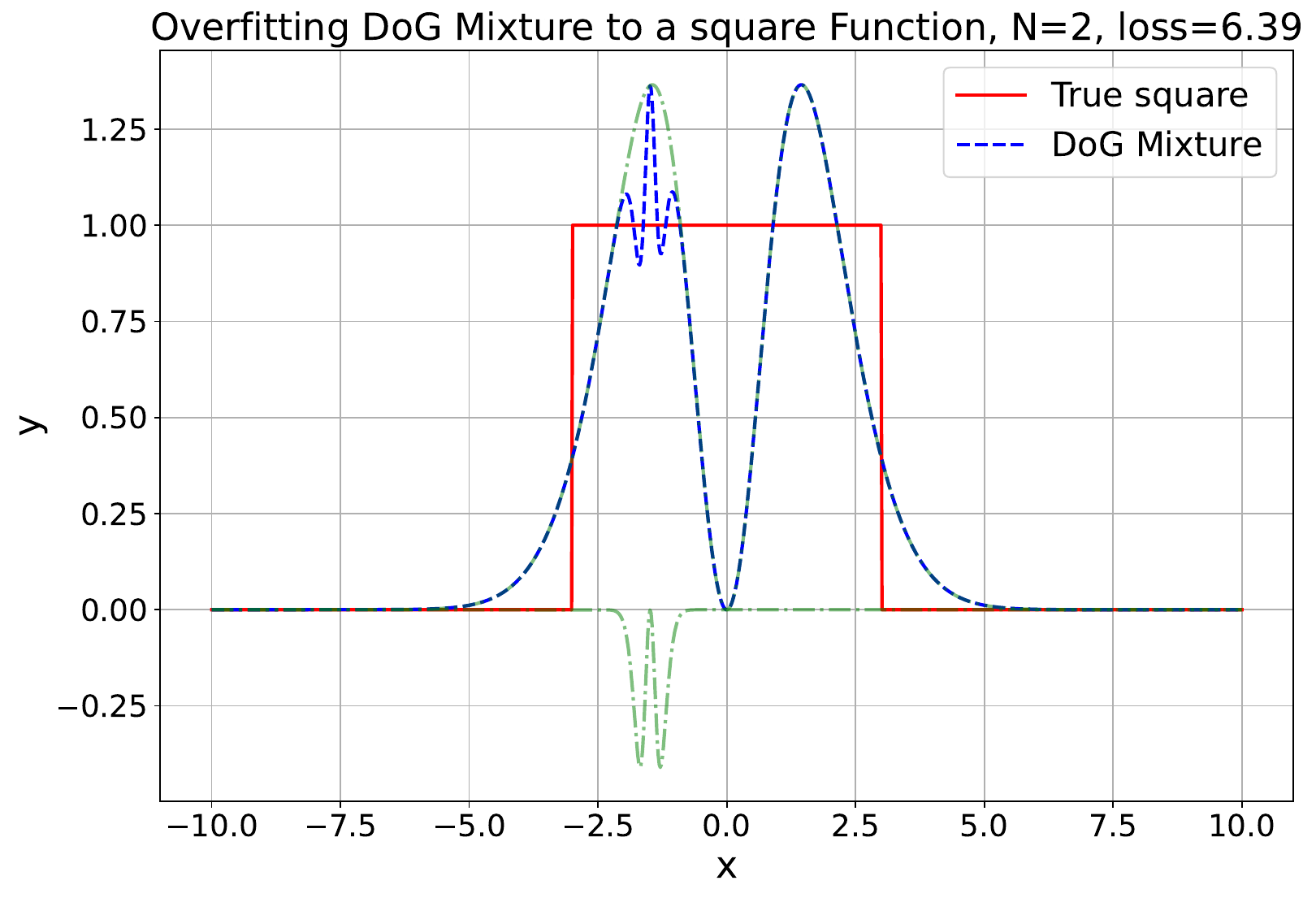} & 
    \includegraphics[width=0.24\linewidth]{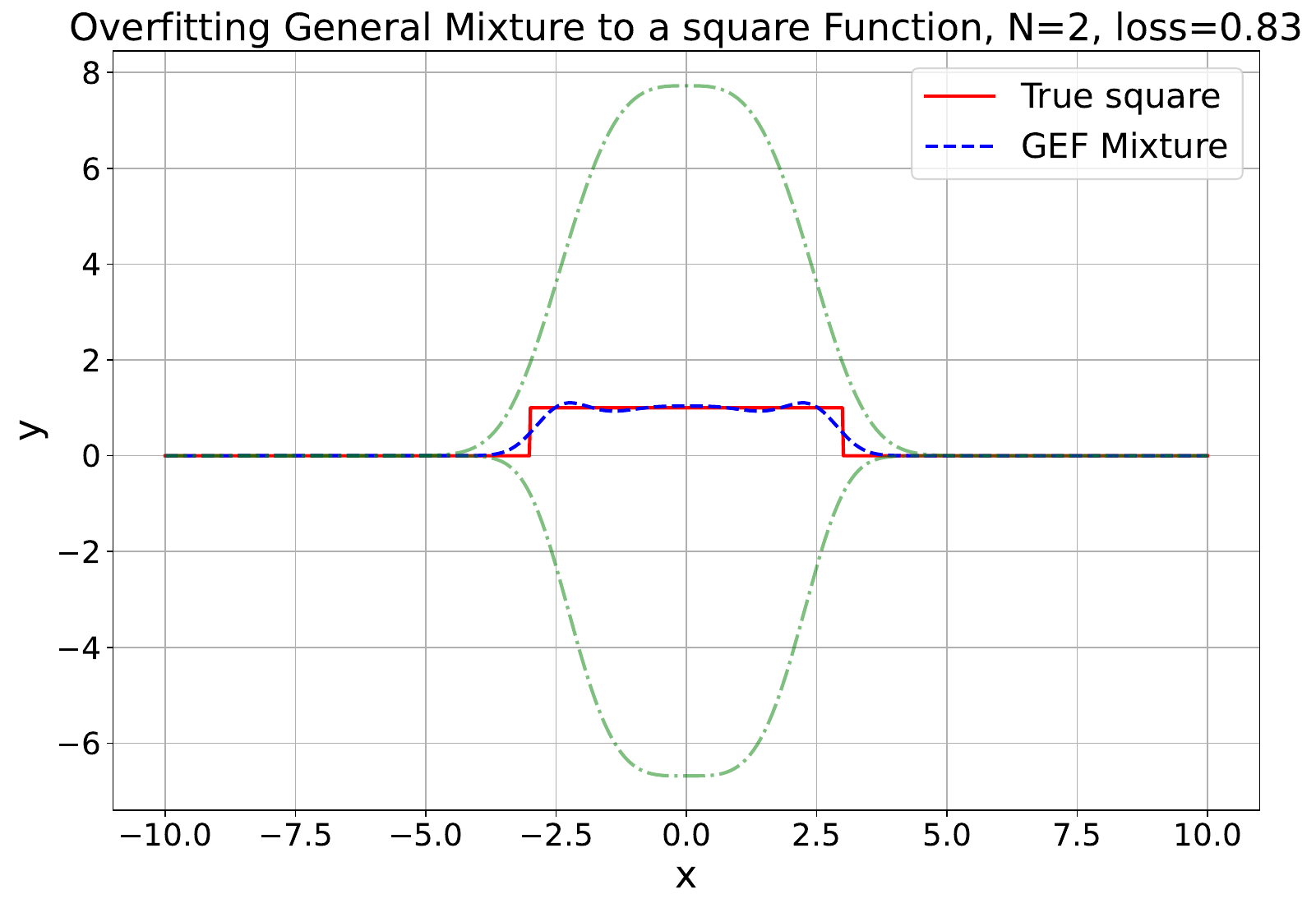}\\ 
    \includegraphics[width=0.24\linewidth]{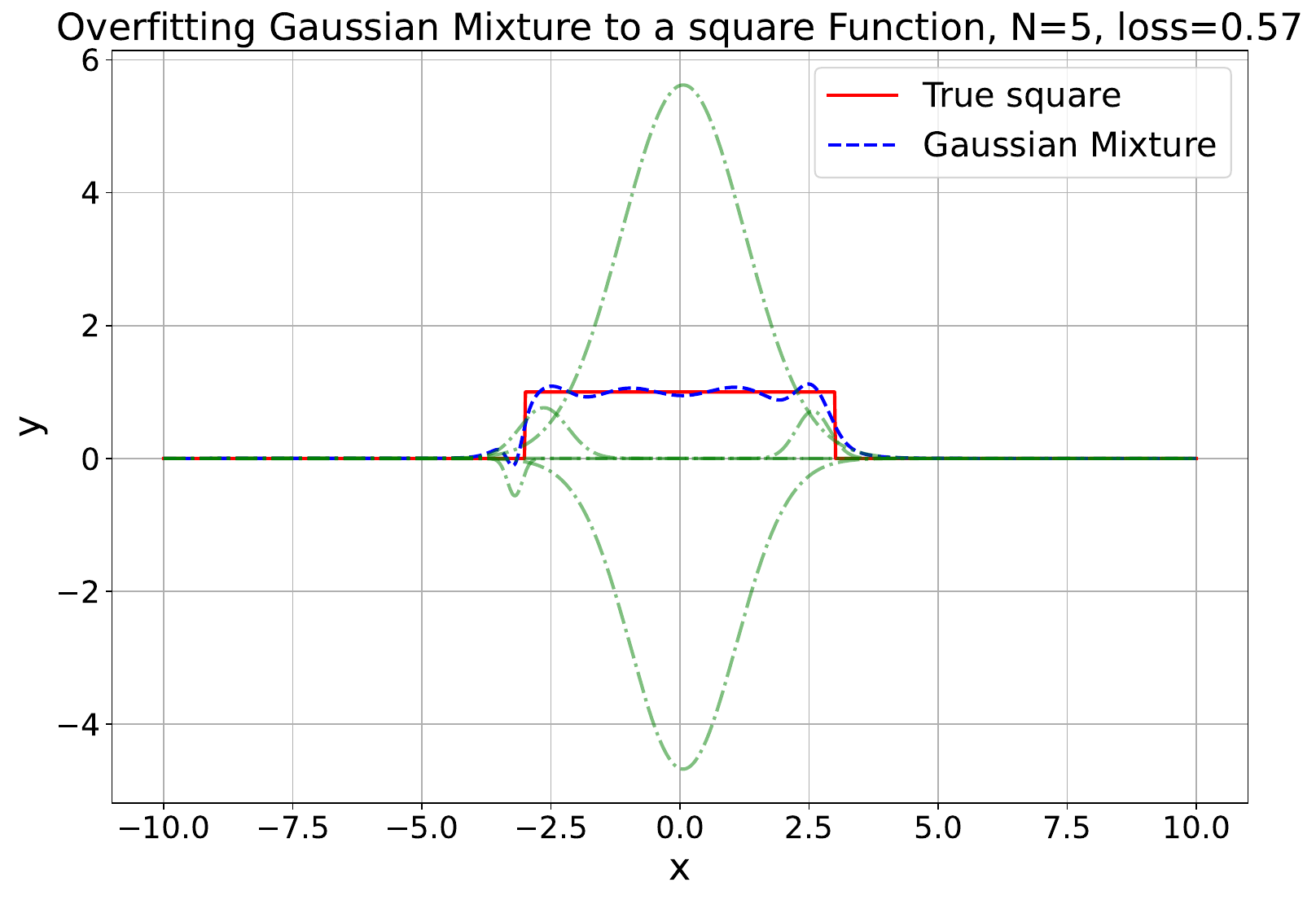} & 
    \includegraphics[width=0.24\linewidth]{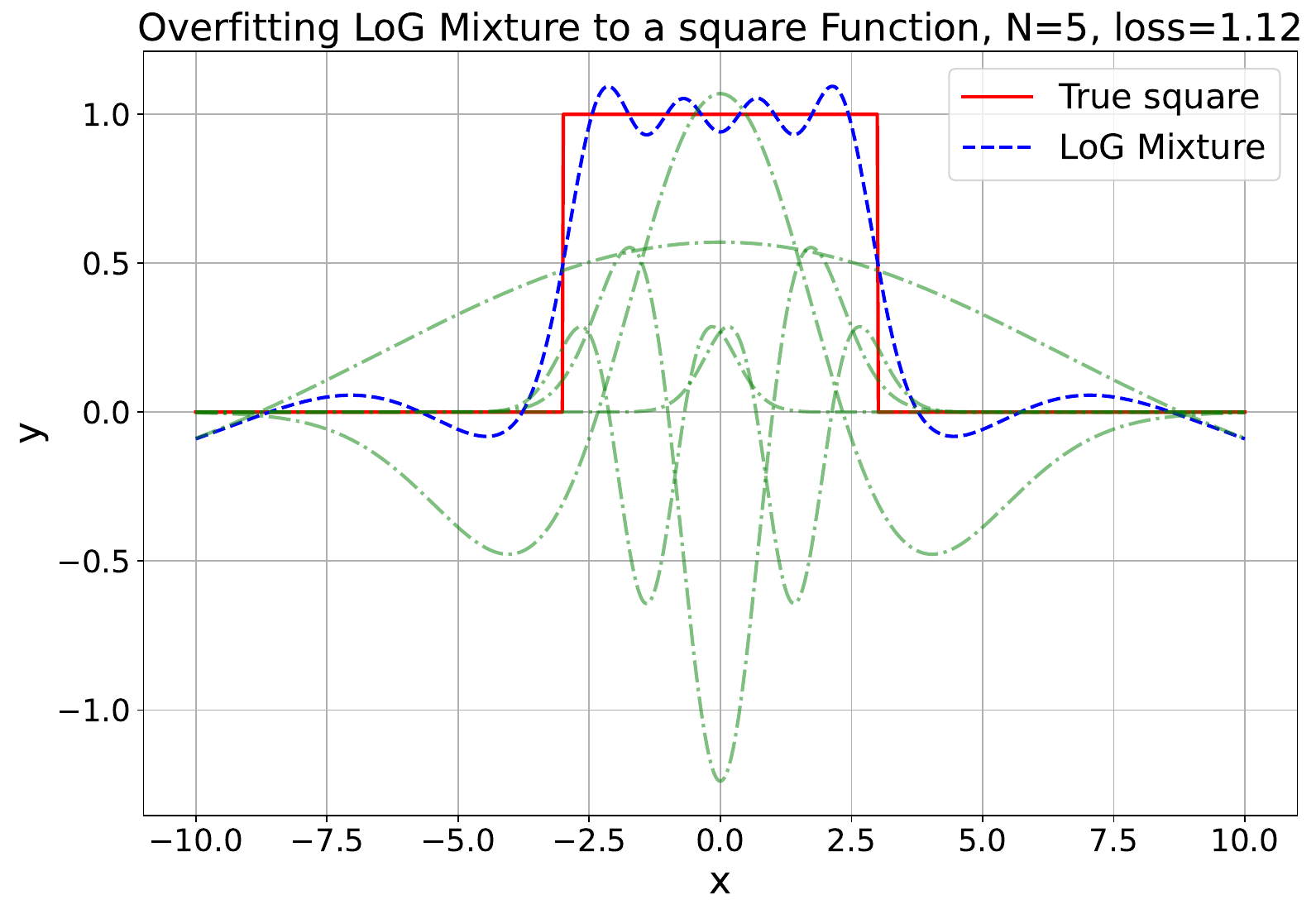} & 
    \includegraphics[width=0.24\linewidth]{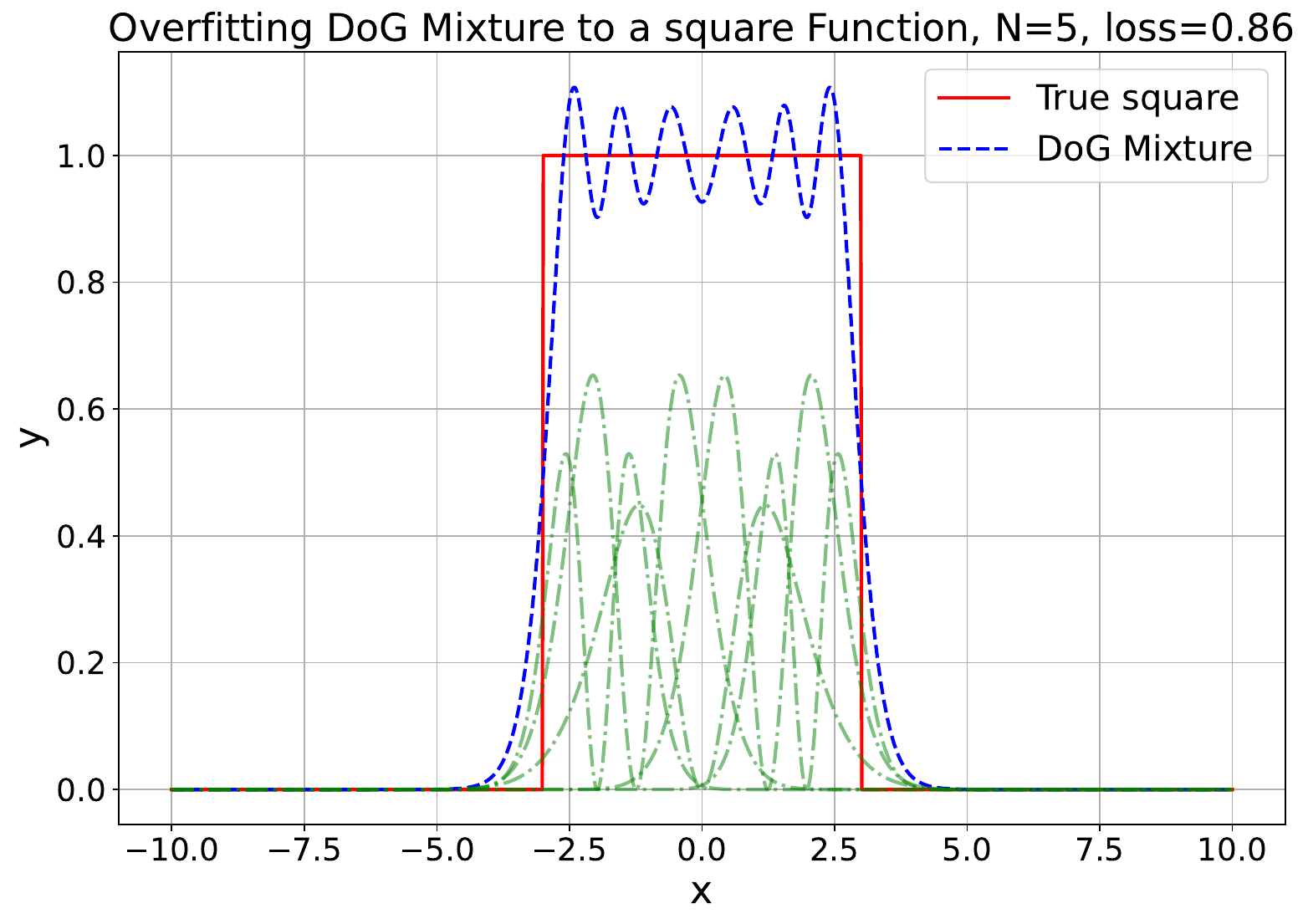} & 
    \includegraphics[width=0.24\linewidth]{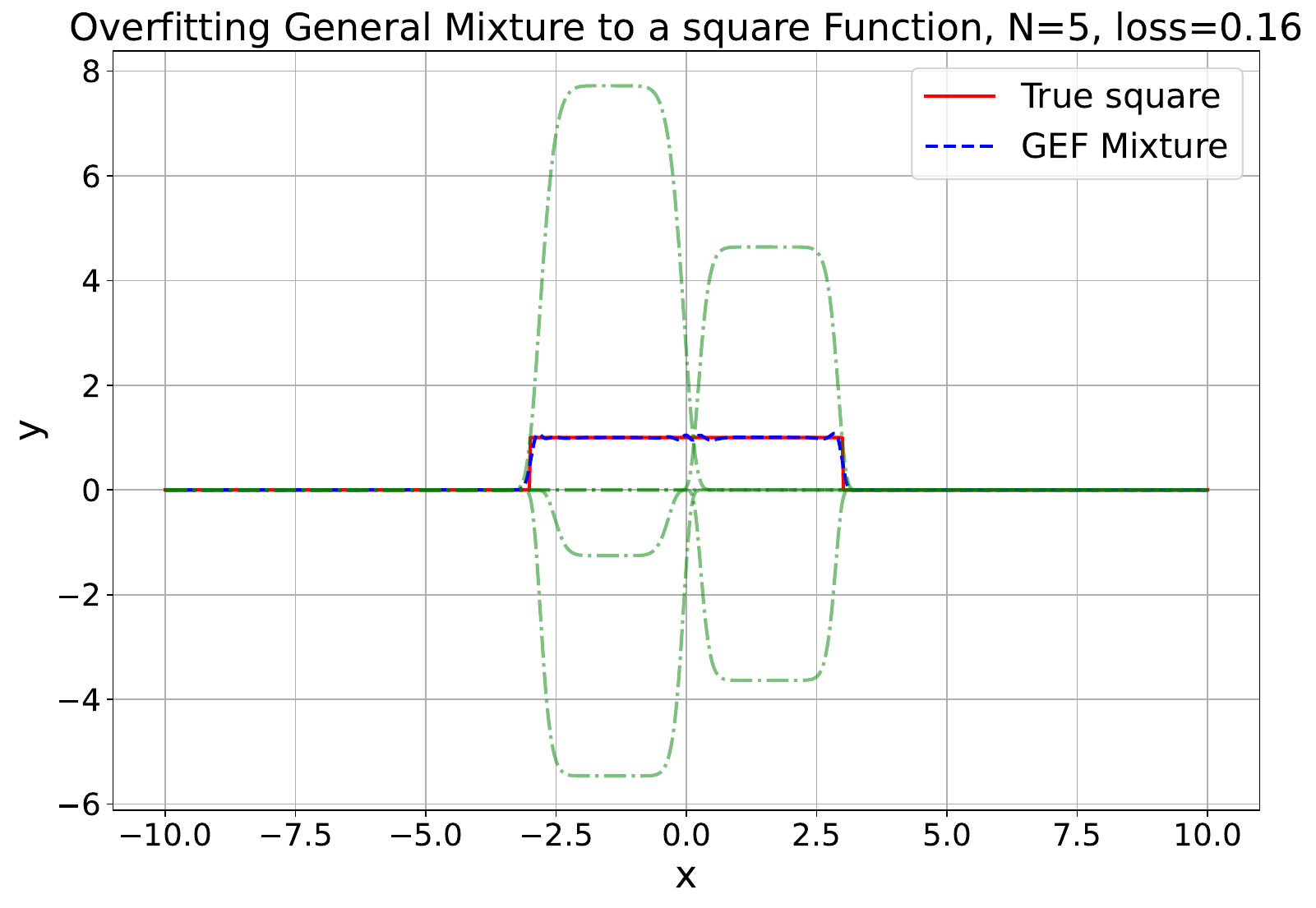}\\ 
    \includegraphics[width=0.24\linewidth]{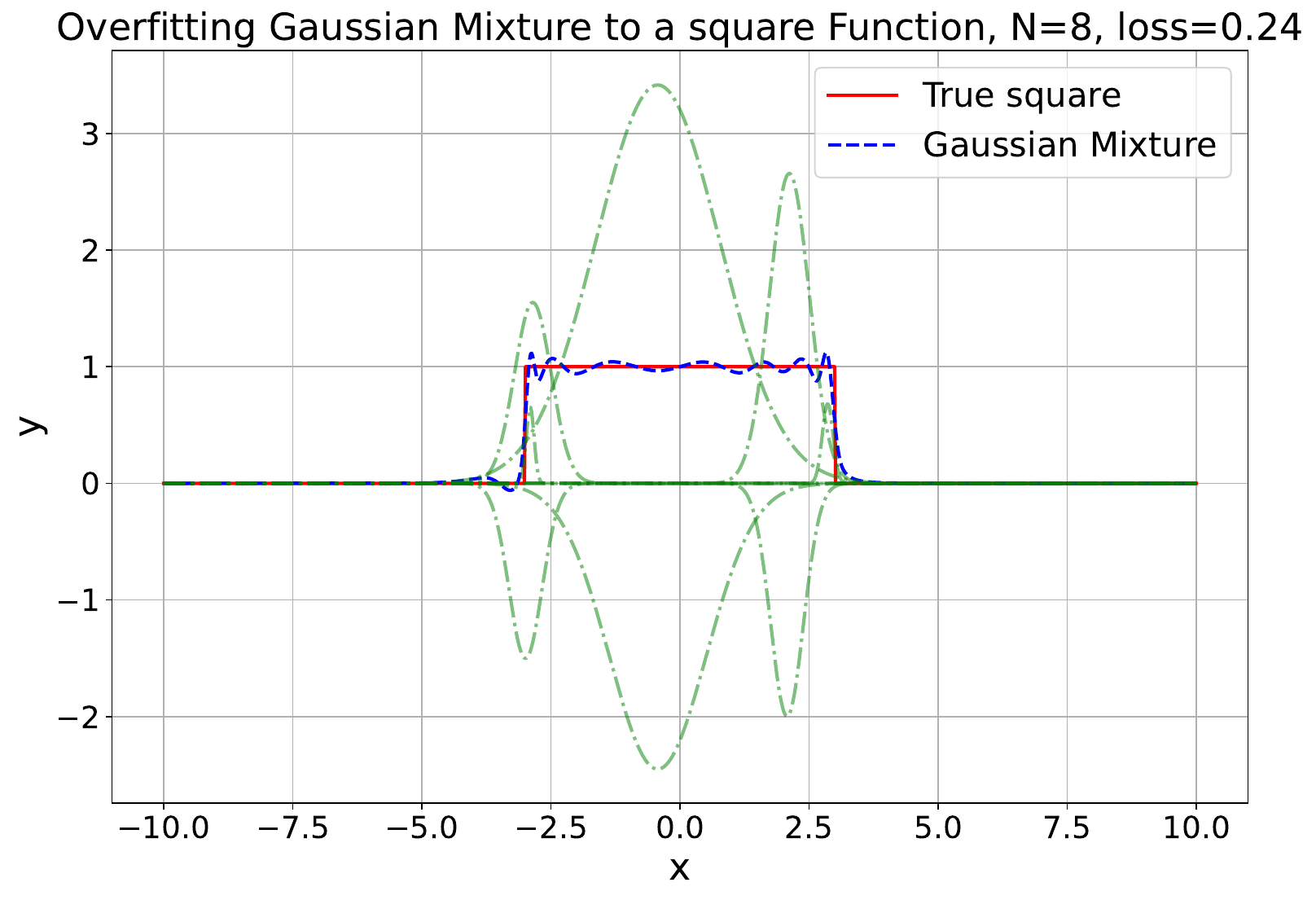} & 
    \includegraphics[width=0.24\linewidth]{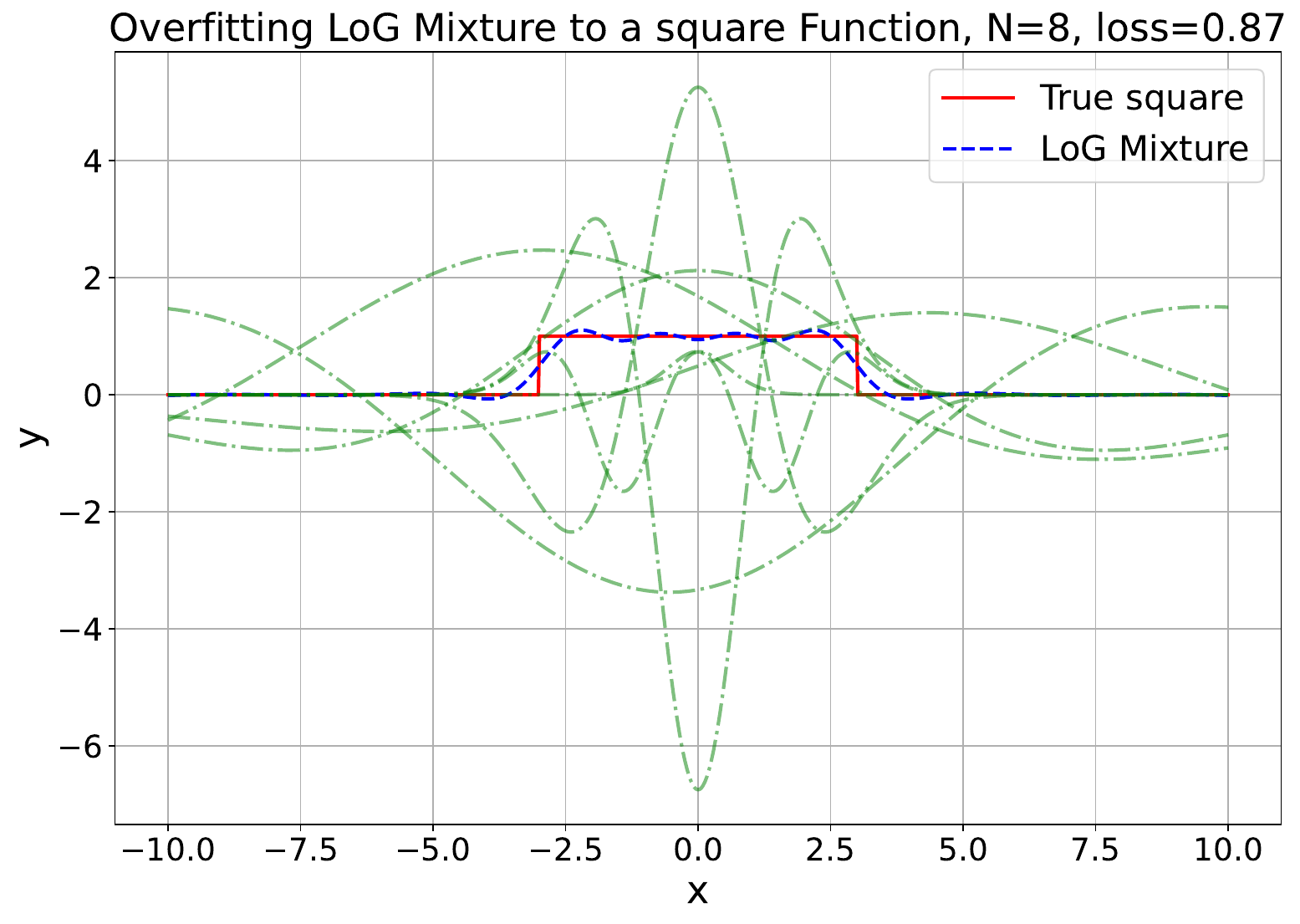} & 
    \includegraphics[width=0.24\linewidth]{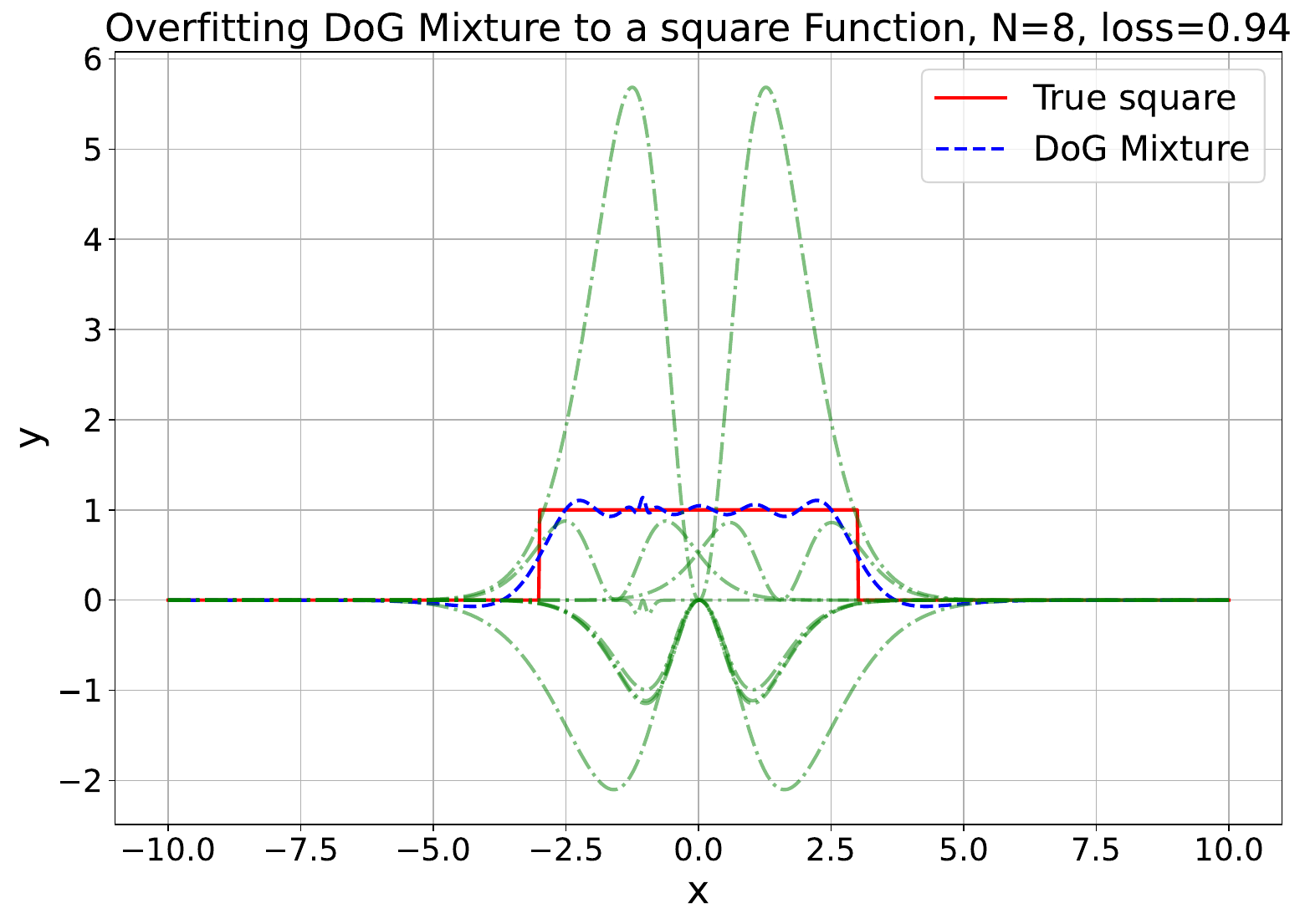} & 
    \includegraphics[width=0.24\linewidth]{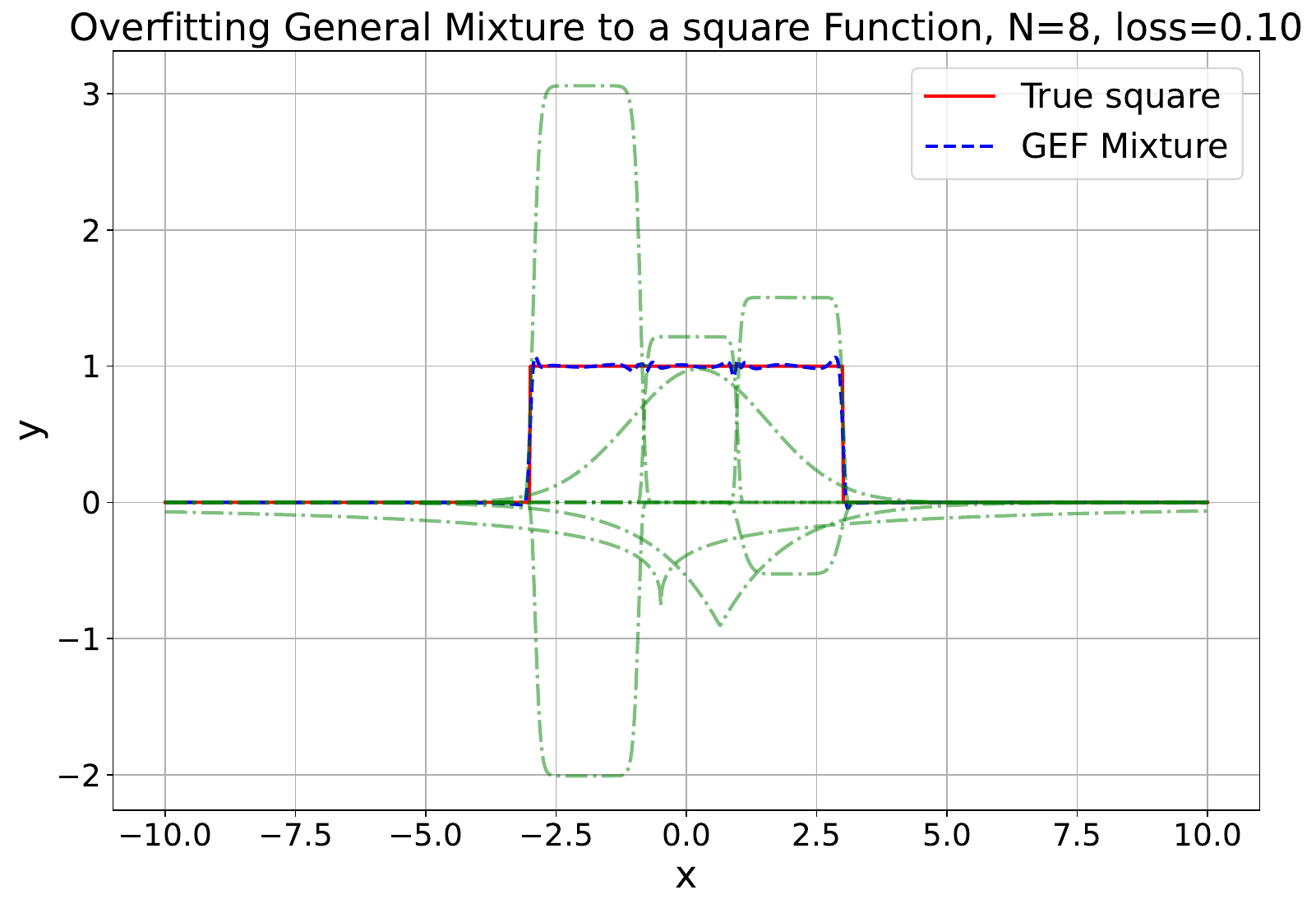}\\ 
    \includegraphics[width=0.24\linewidth]{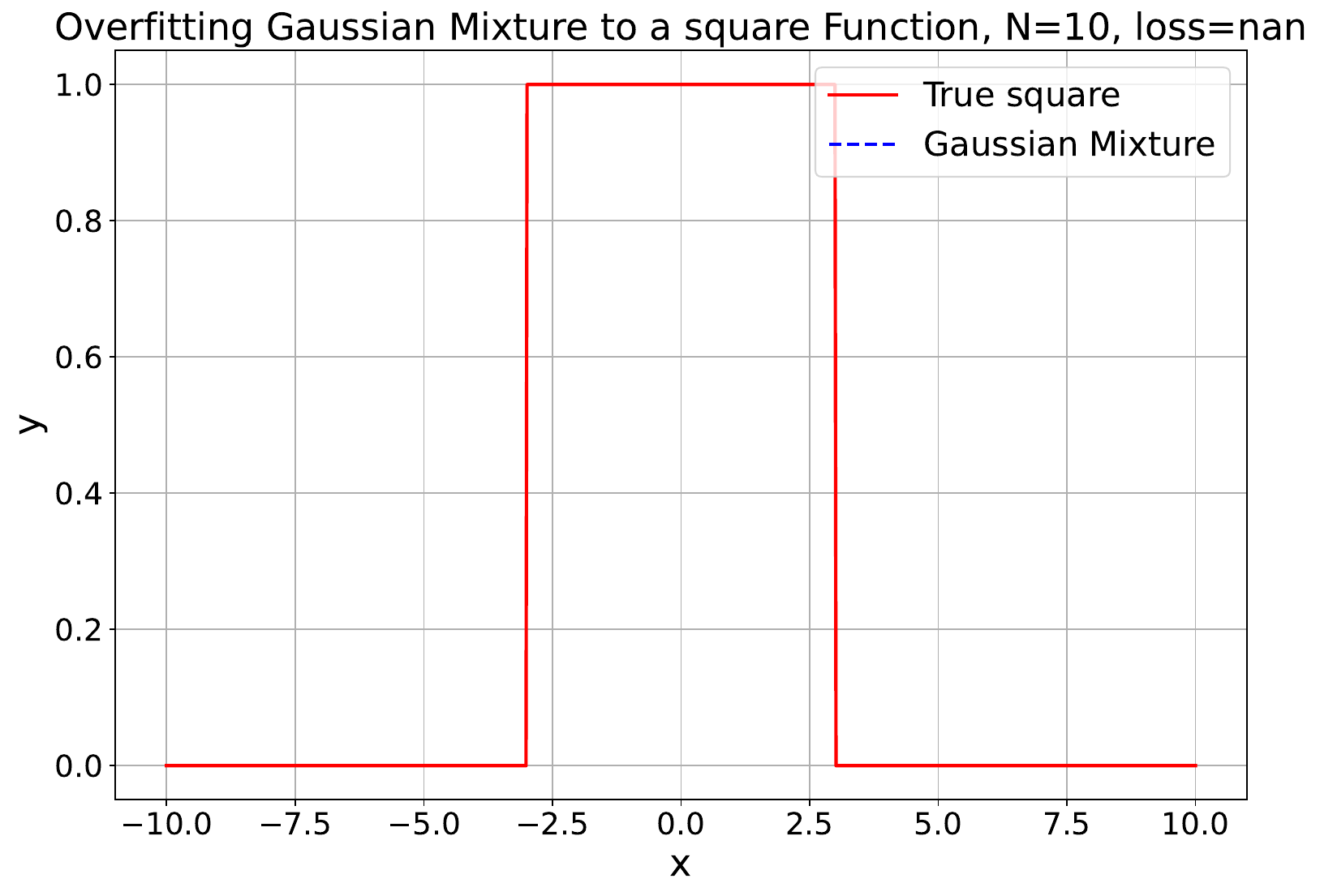} & 
    \includegraphics[width=0.24\linewidth]{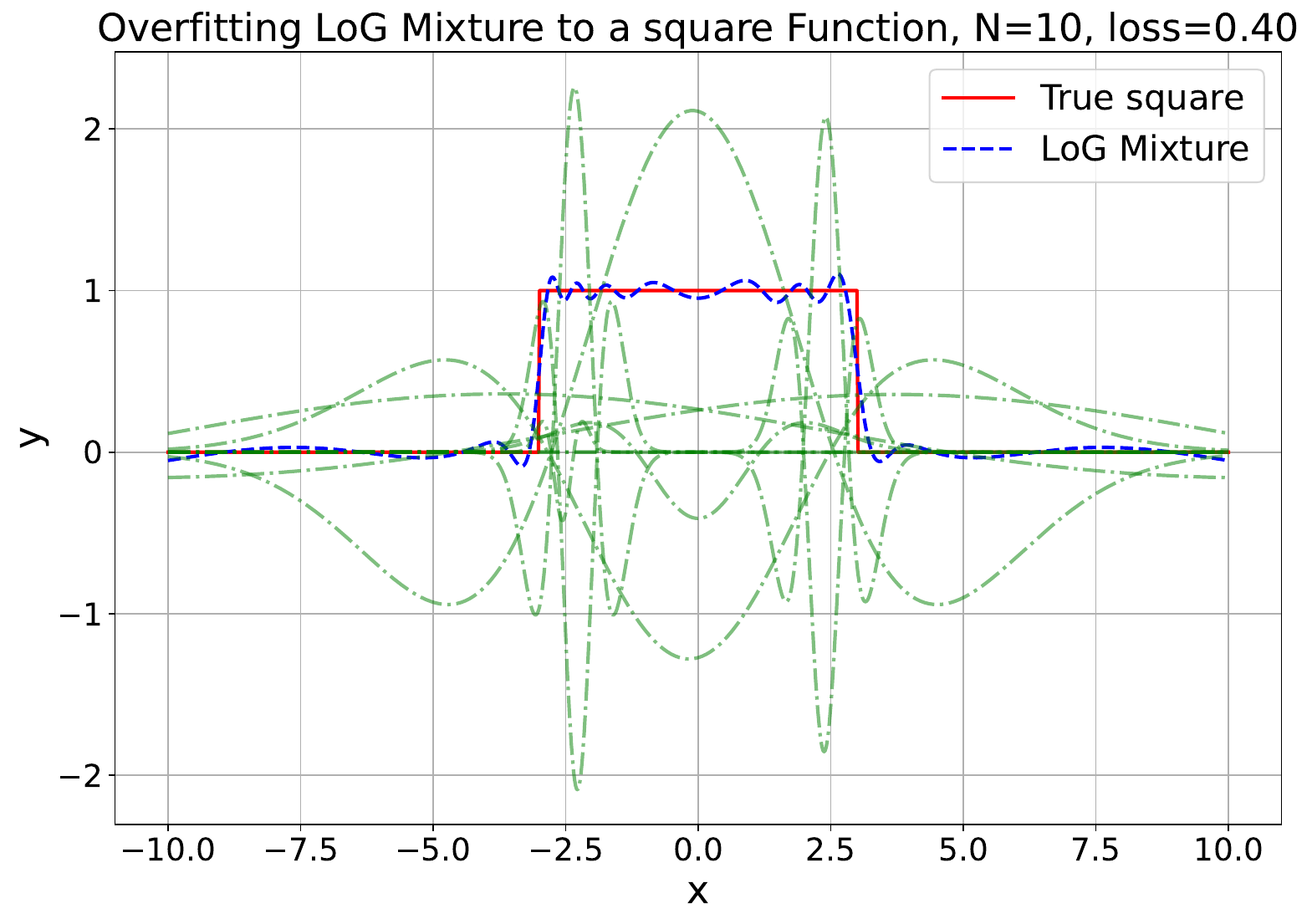} & 
    \includegraphics[width=0.24\linewidth]{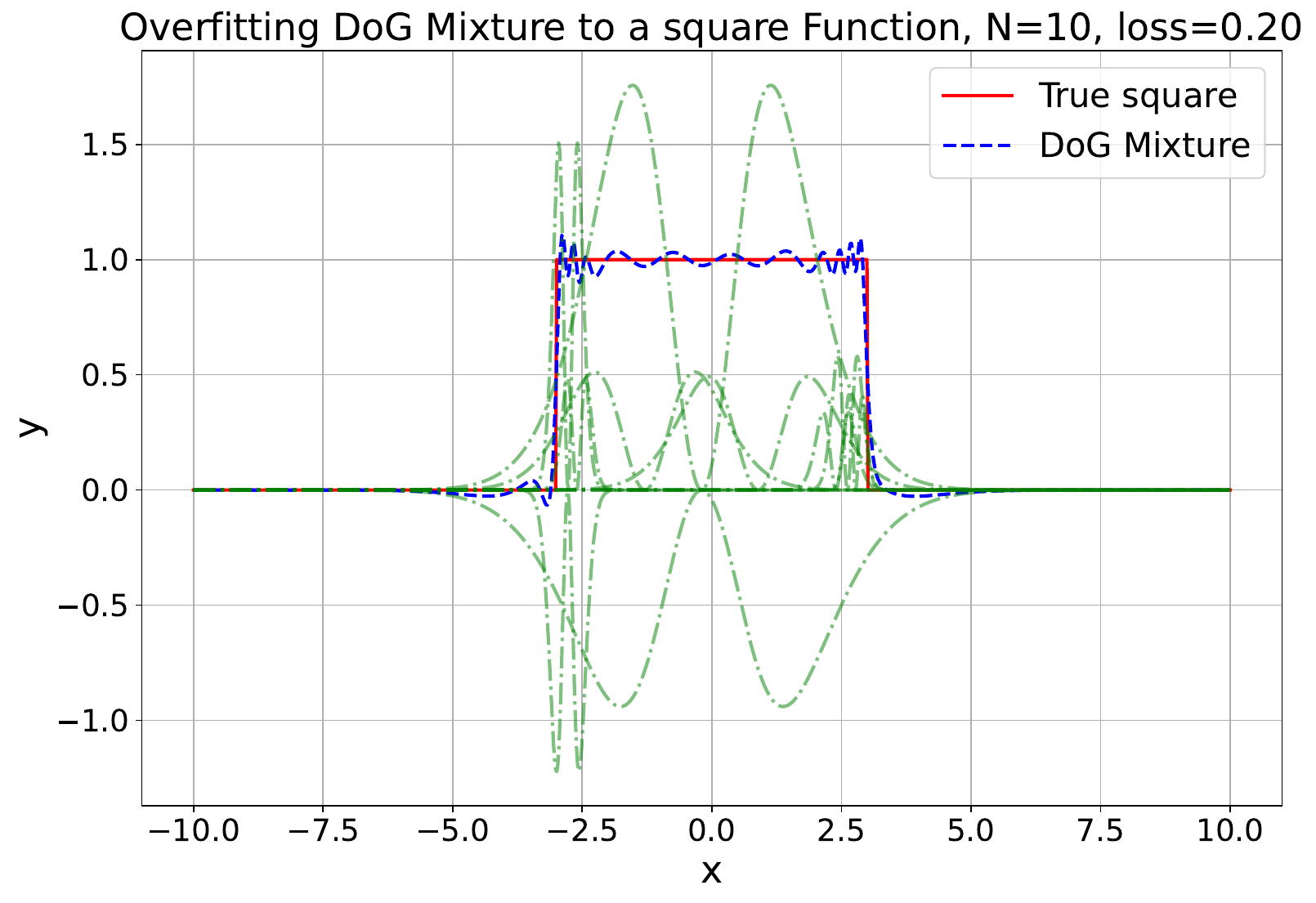} & 
    \includegraphics[width=0.24\linewidth]{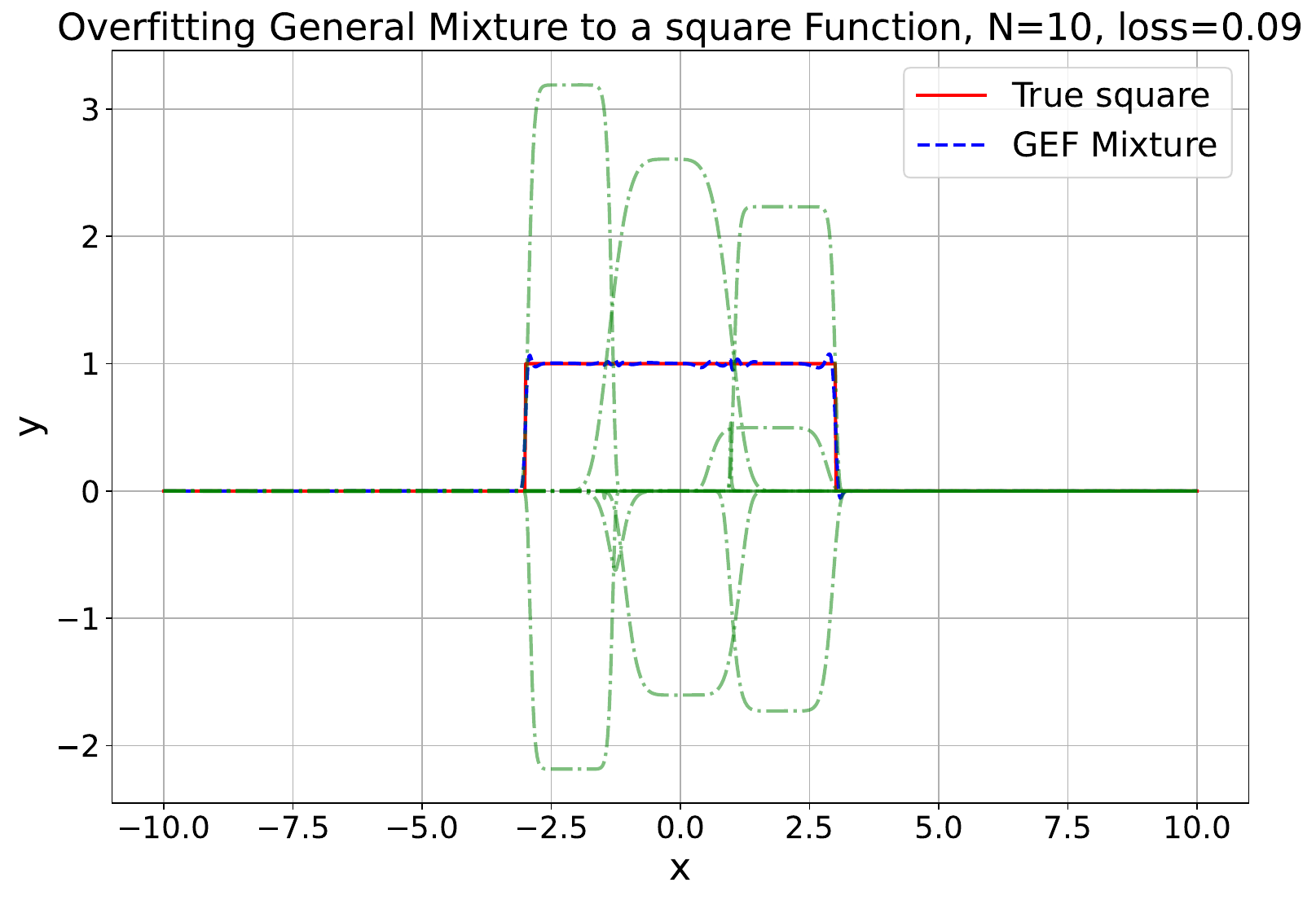}\\ 
    \includegraphics[width=0.24\linewidth]{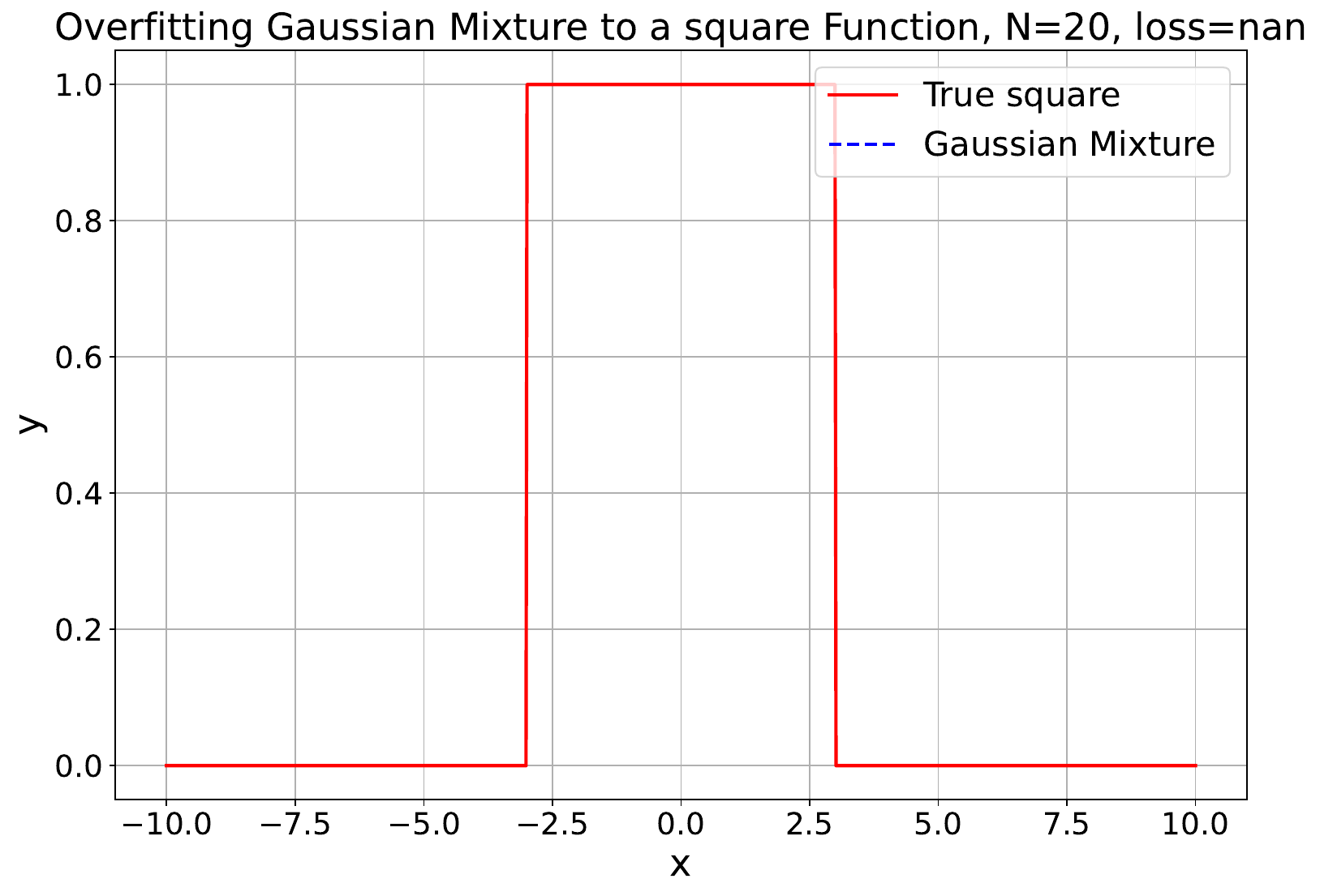} & 
    \includegraphics[width=0.24\linewidth]{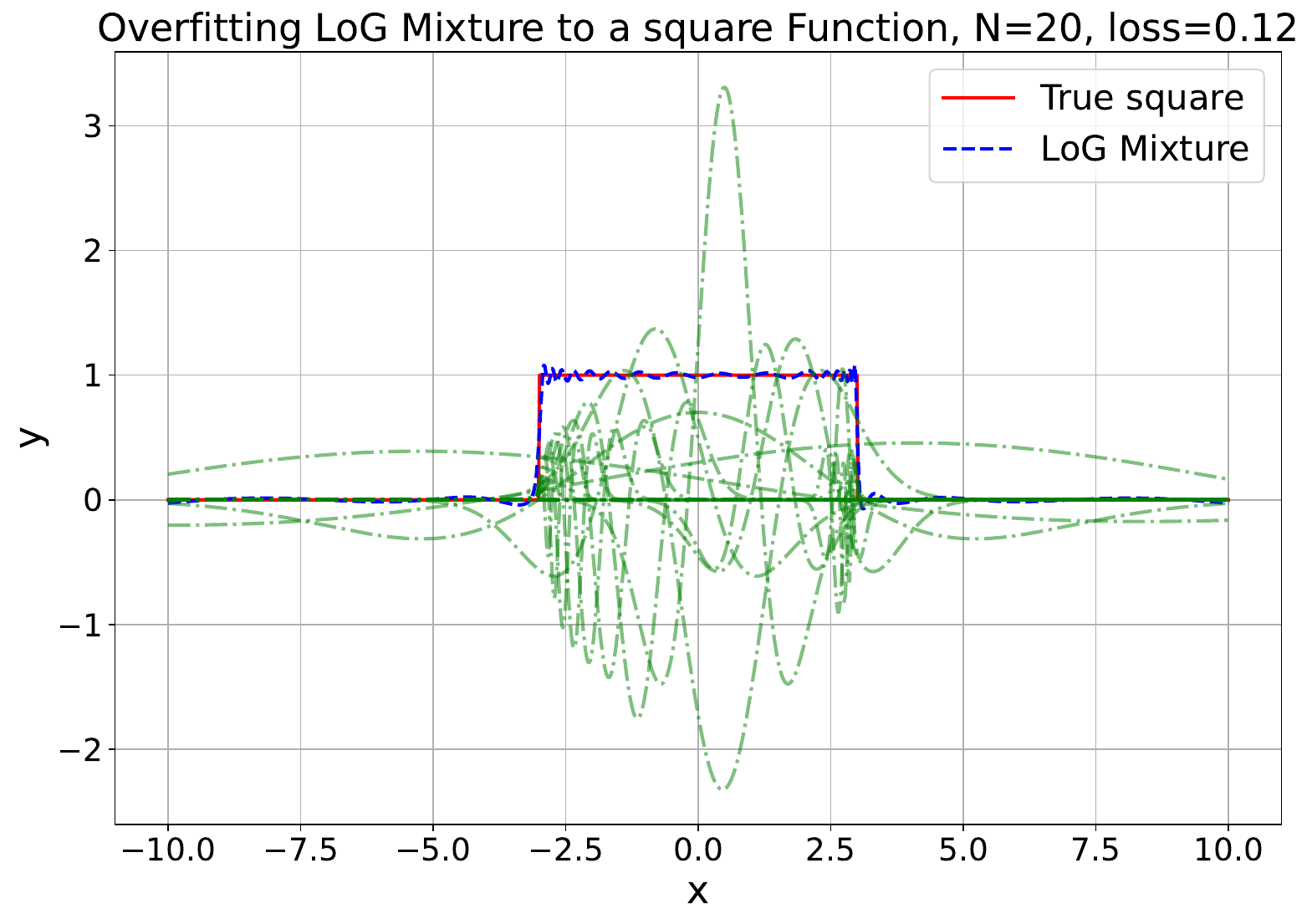} & 
    \includegraphics[width=0.24\linewidth]{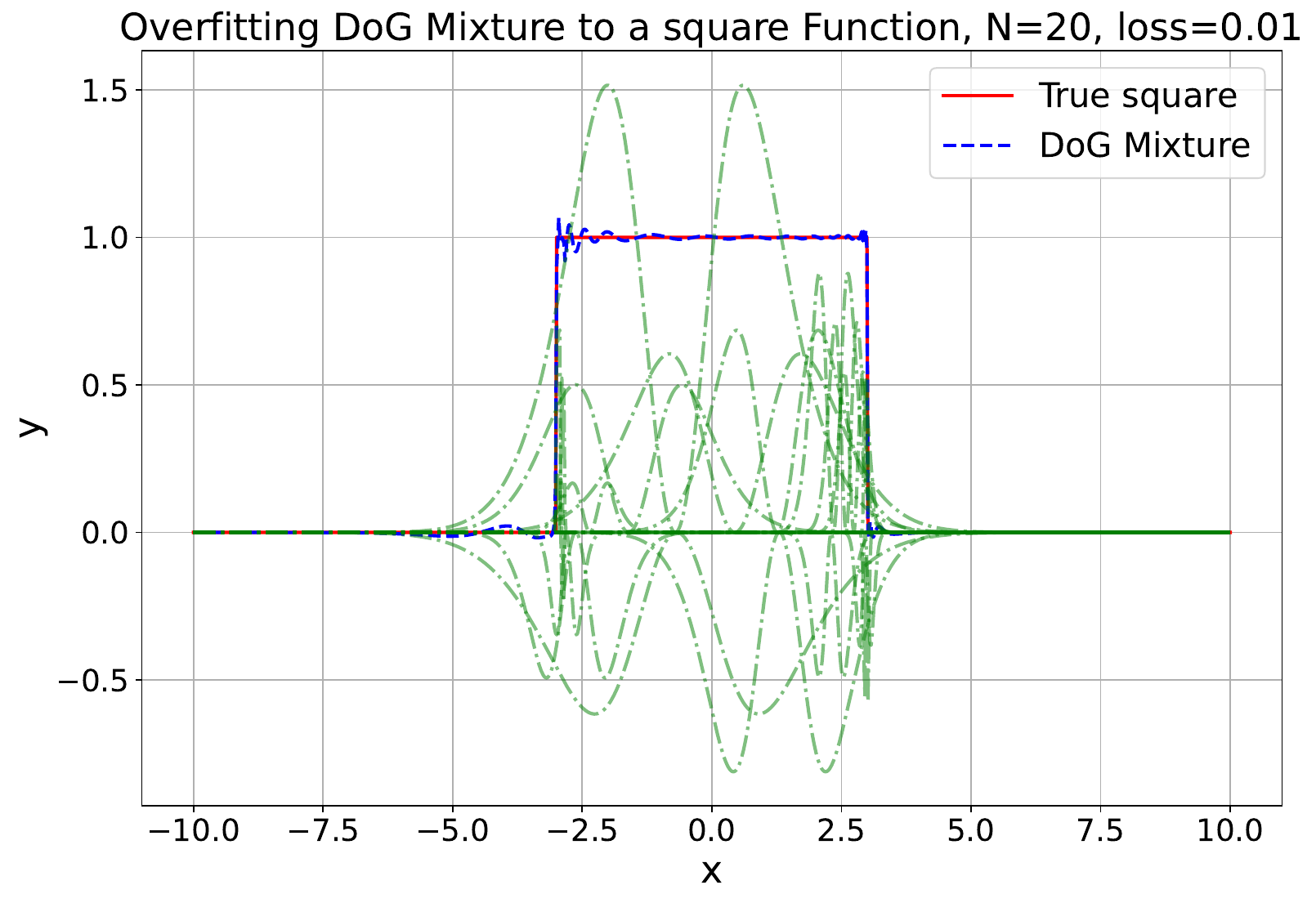} & 
    \includegraphics[width=0.24\linewidth]{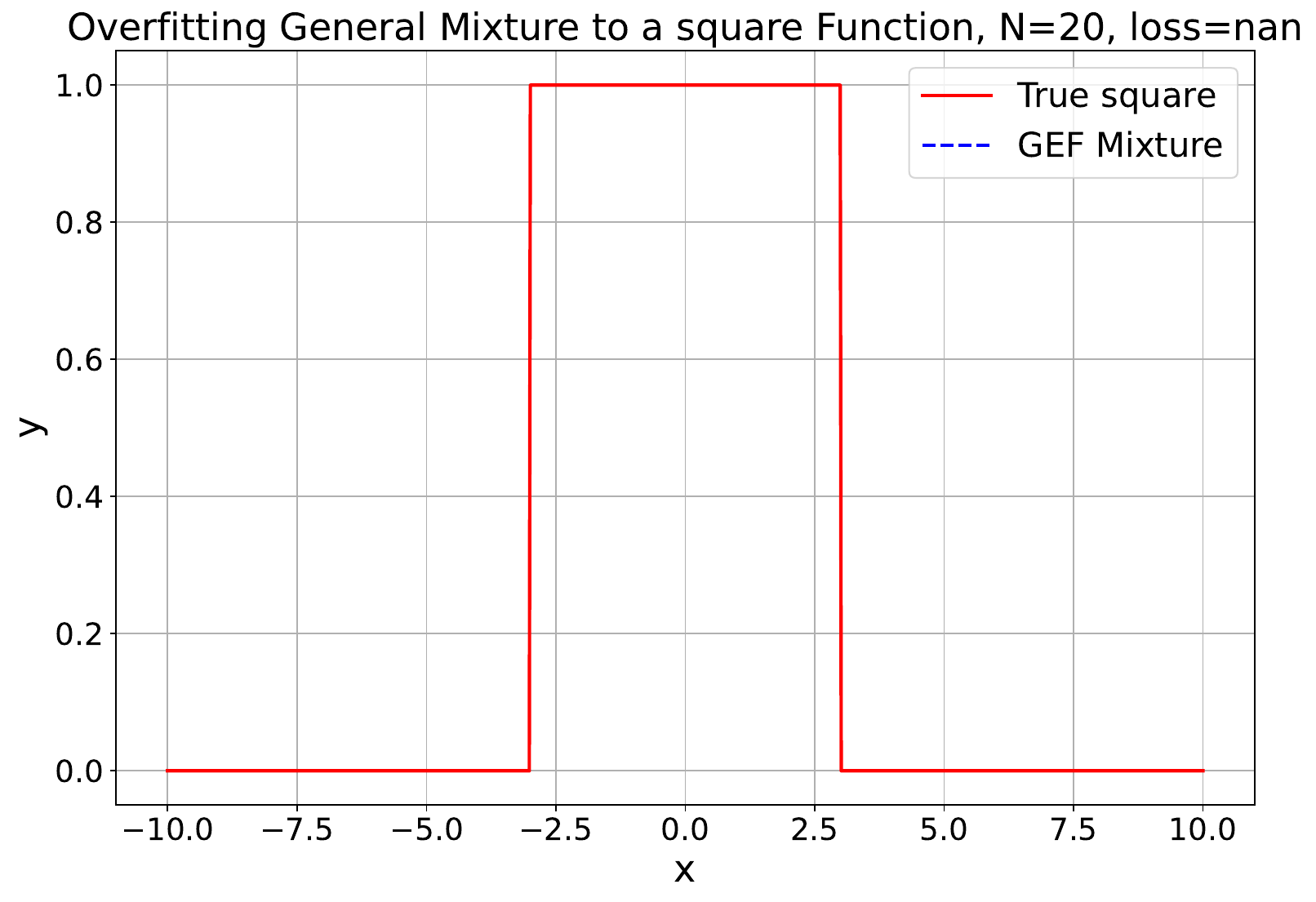}\\ 
    
    \end{tabular}
    }
    \caption{\textbf{Numerical Simulation Examples of Fitting Squares with Real Weights Mixtures ( N= 2, 5, 8, and 10 )}. We show some fitting examples for Square signals with Real weights mixtures (can be negative). The four mixtures used from left to right are Gaussians, LoG, DoG, and General mixtures. From top to bottom: N = 2, 8, and 10 components. The optimized individual components are shown in green. Some examples fail to optimize due to numerical instability in both Gaussians and GEF mixtures. Note that GEF is very efficient in fitting the Square with few components while LoG and DoG are more stable for a larger number of components. }
    \label{supfig:fitting_square_N}
    \end{figure*}
    
\begin{figure*}[h]
    \centering
    \resizebox{1.0\linewidth}{!}{
    \begin{tabular}{cccc}
    \tabcolsep=0.01cm
    Gaussian Mixture& LoG Mixture & DoG Mixture & GEF Mixture \\ 
    \includegraphics[width=0.24\linewidth]{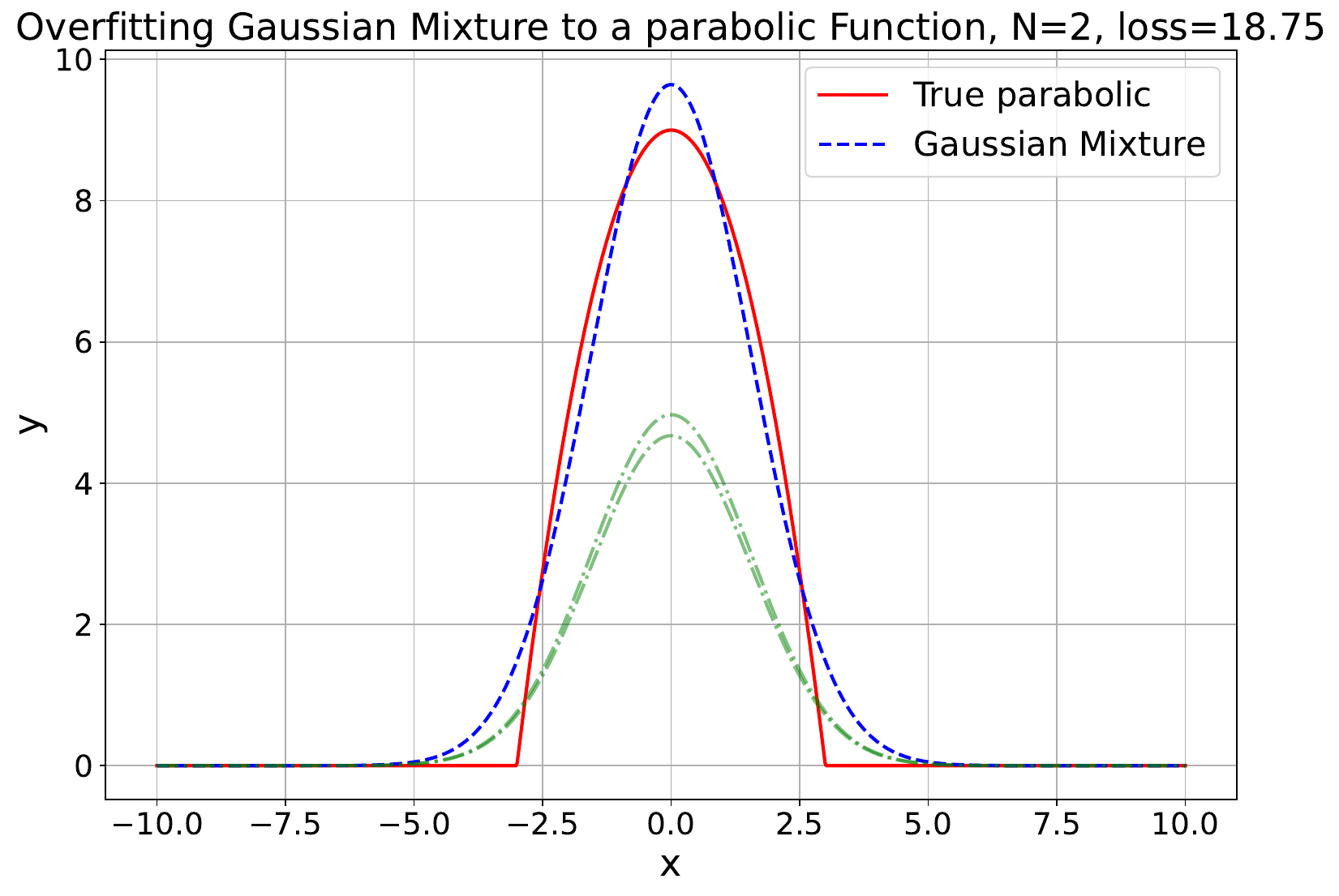} & 
    \includegraphics[width=0.24\linewidth]{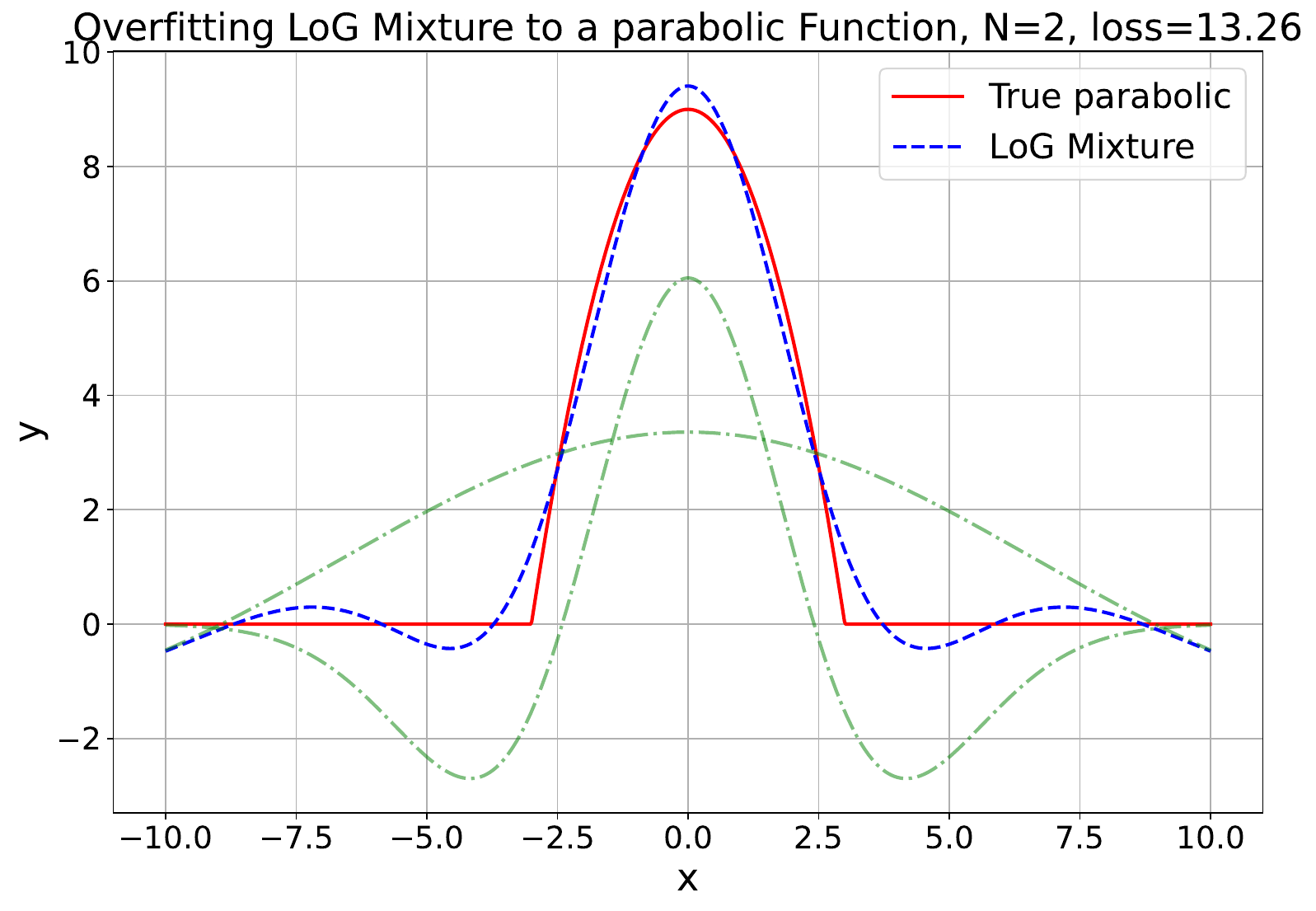} & 
    \includegraphics[width=0.24\linewidth]{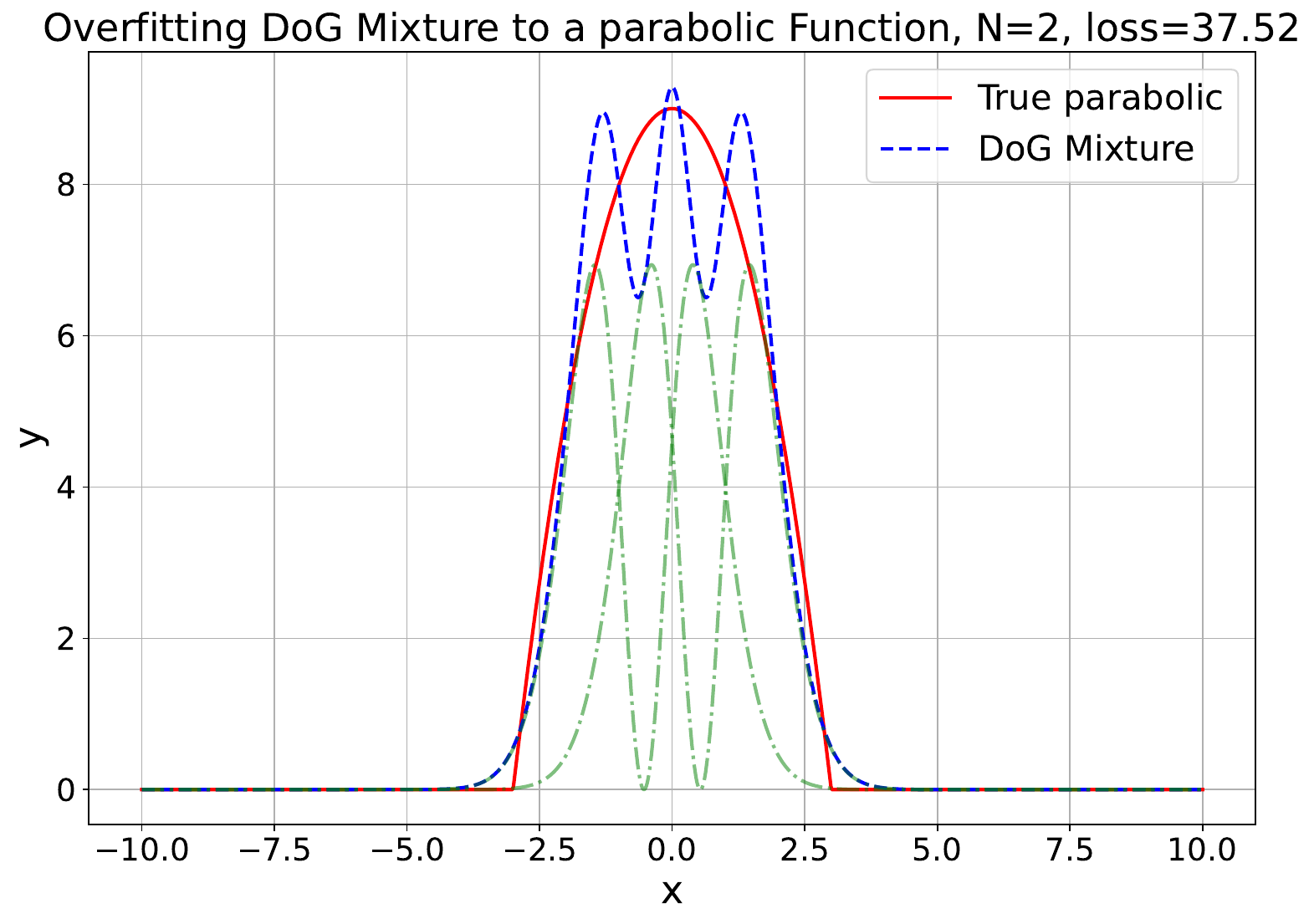} & 
    \includegraphics[width=0.24\linewidth]{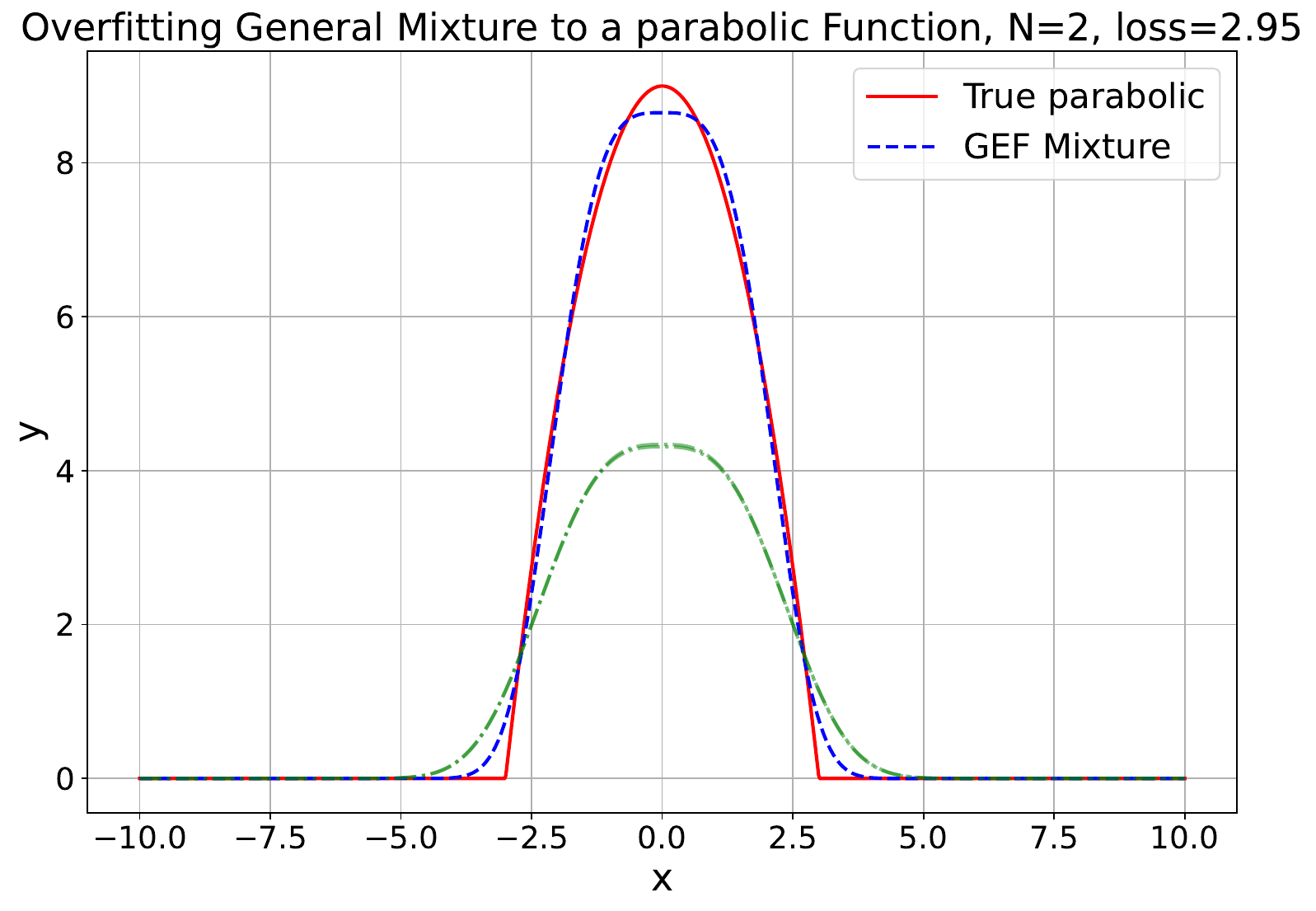}\\ 
    \includegraphics[width=0.24\linewidth]{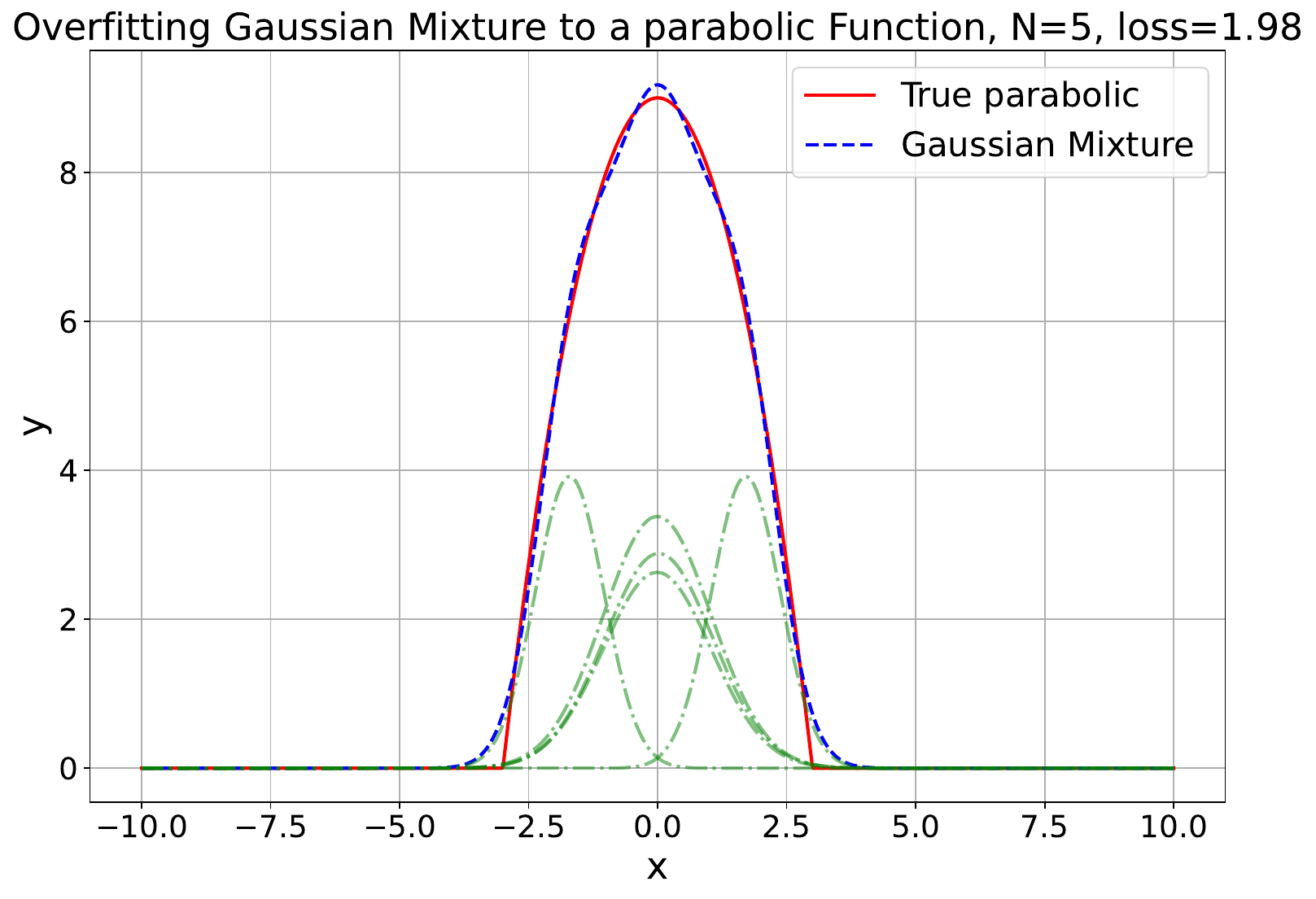} & 
    \includegraphics[width=0.24\linewidth]{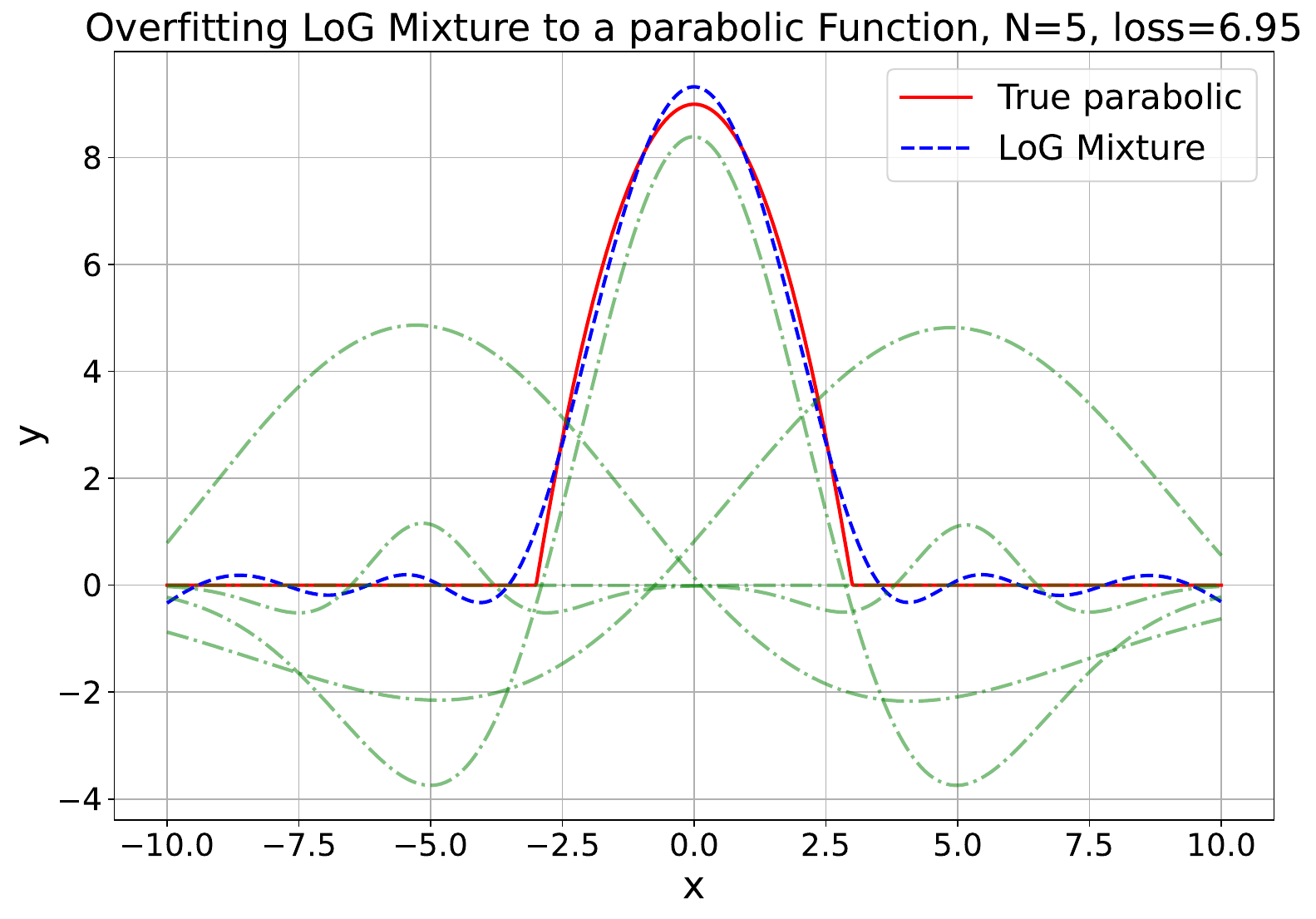} & 
    \includegraphics[width=0.24\linewidth]{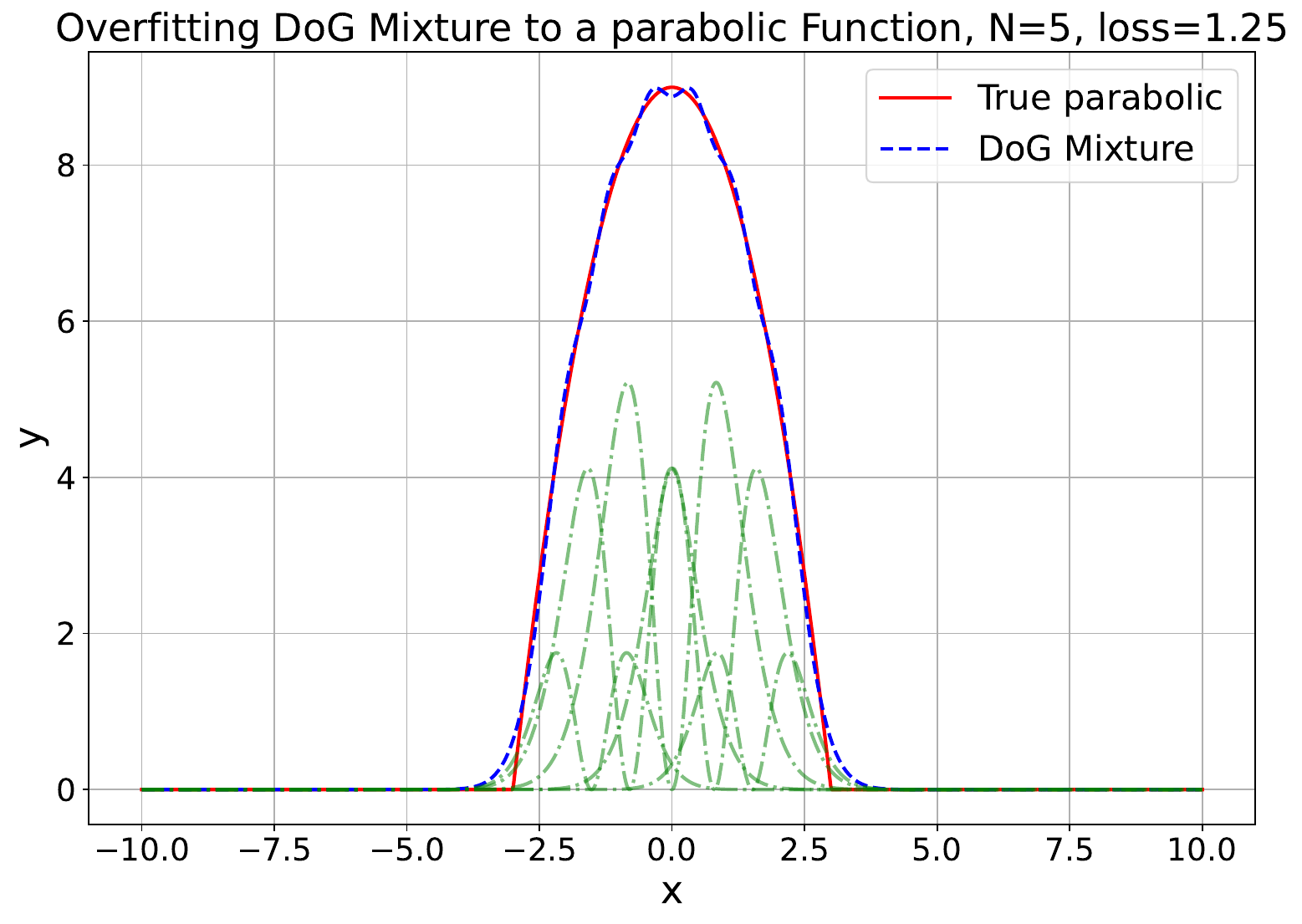} & 
    \includegraphics[width=0.24\linewidth]{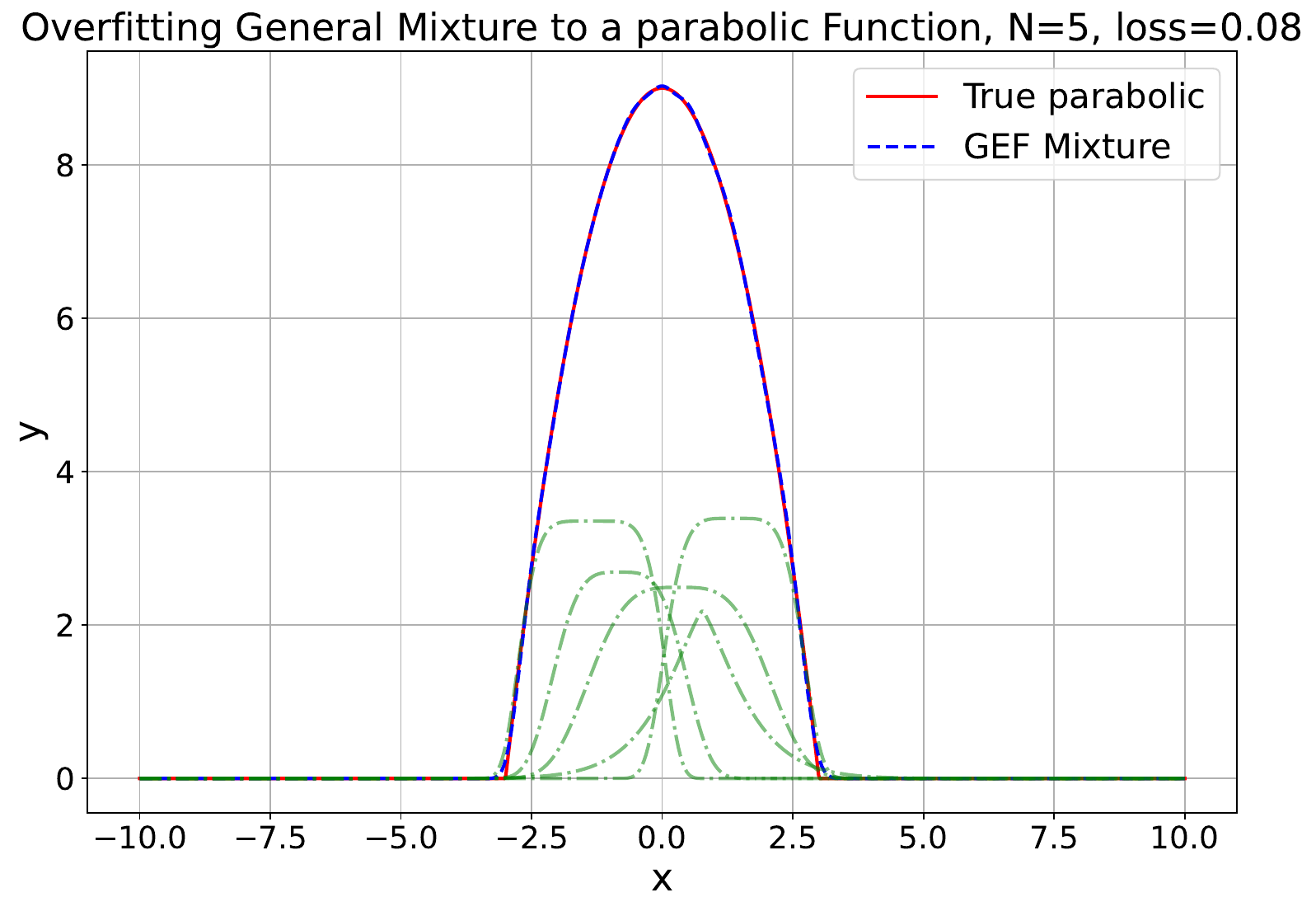}\\ 
    \includegraphics[width=0.24\linewidth]{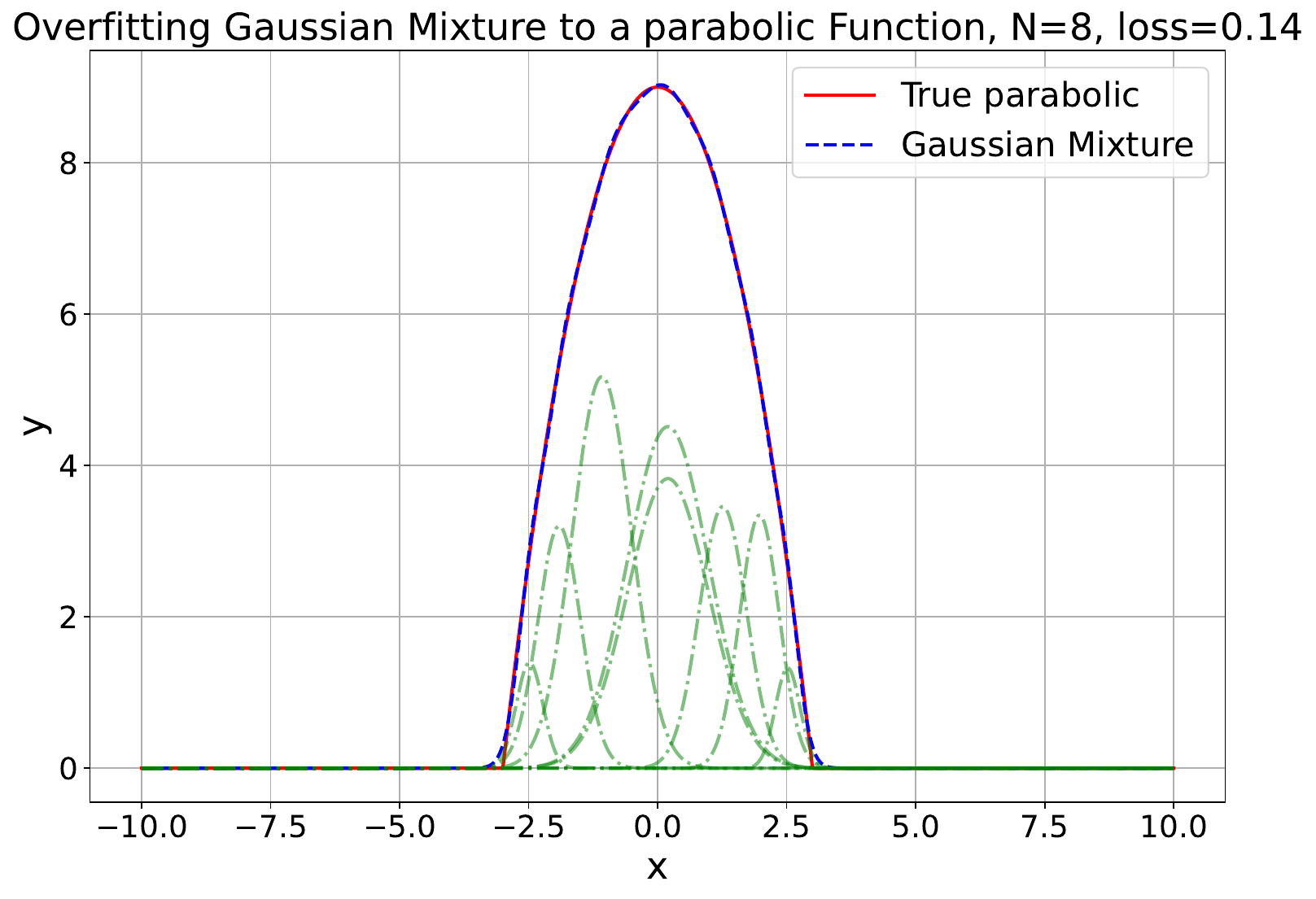} & 
    \includegraphics[width=0.24\linewidth]{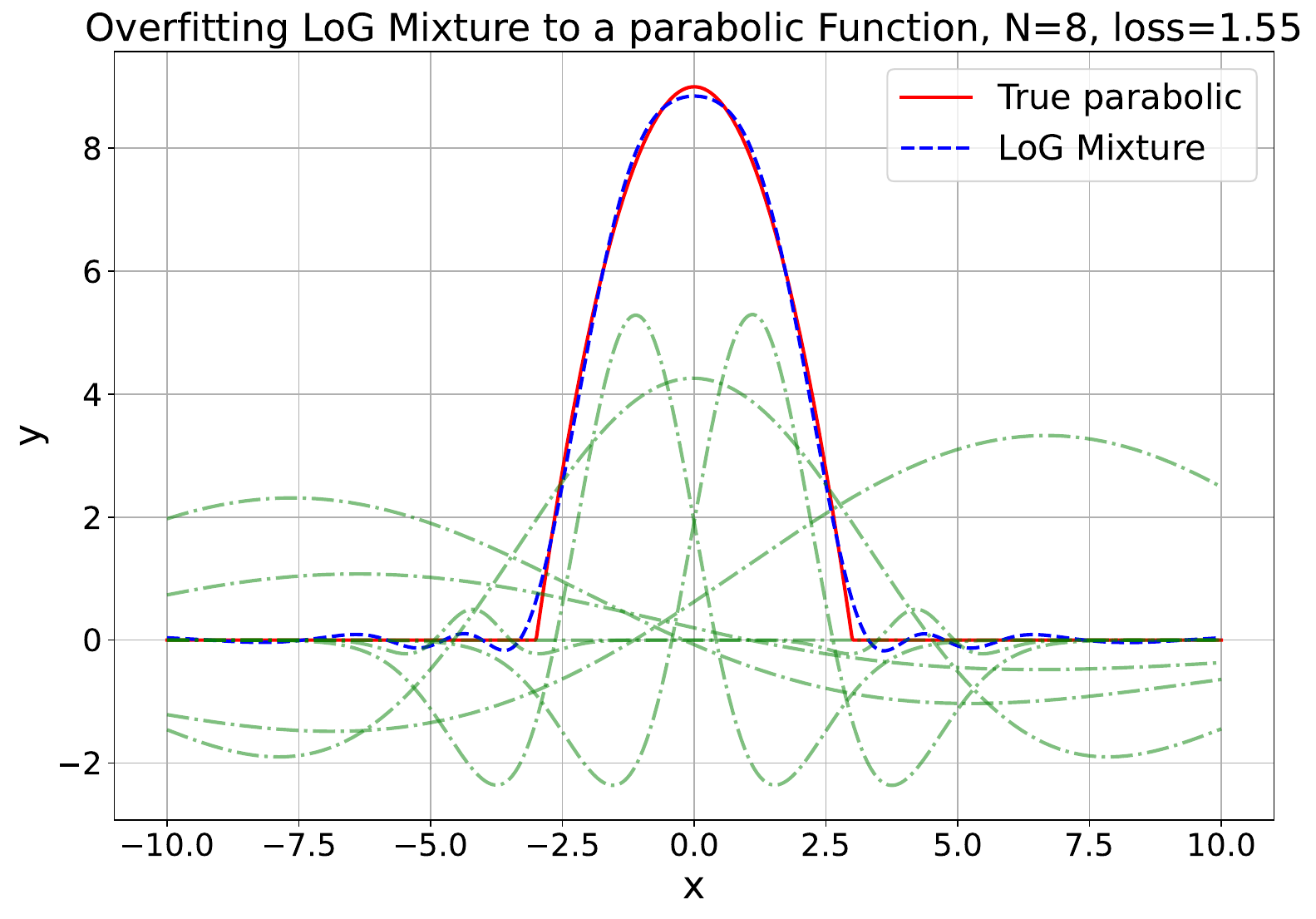} & 
    \includegraphics[width=0.24\linewidth]{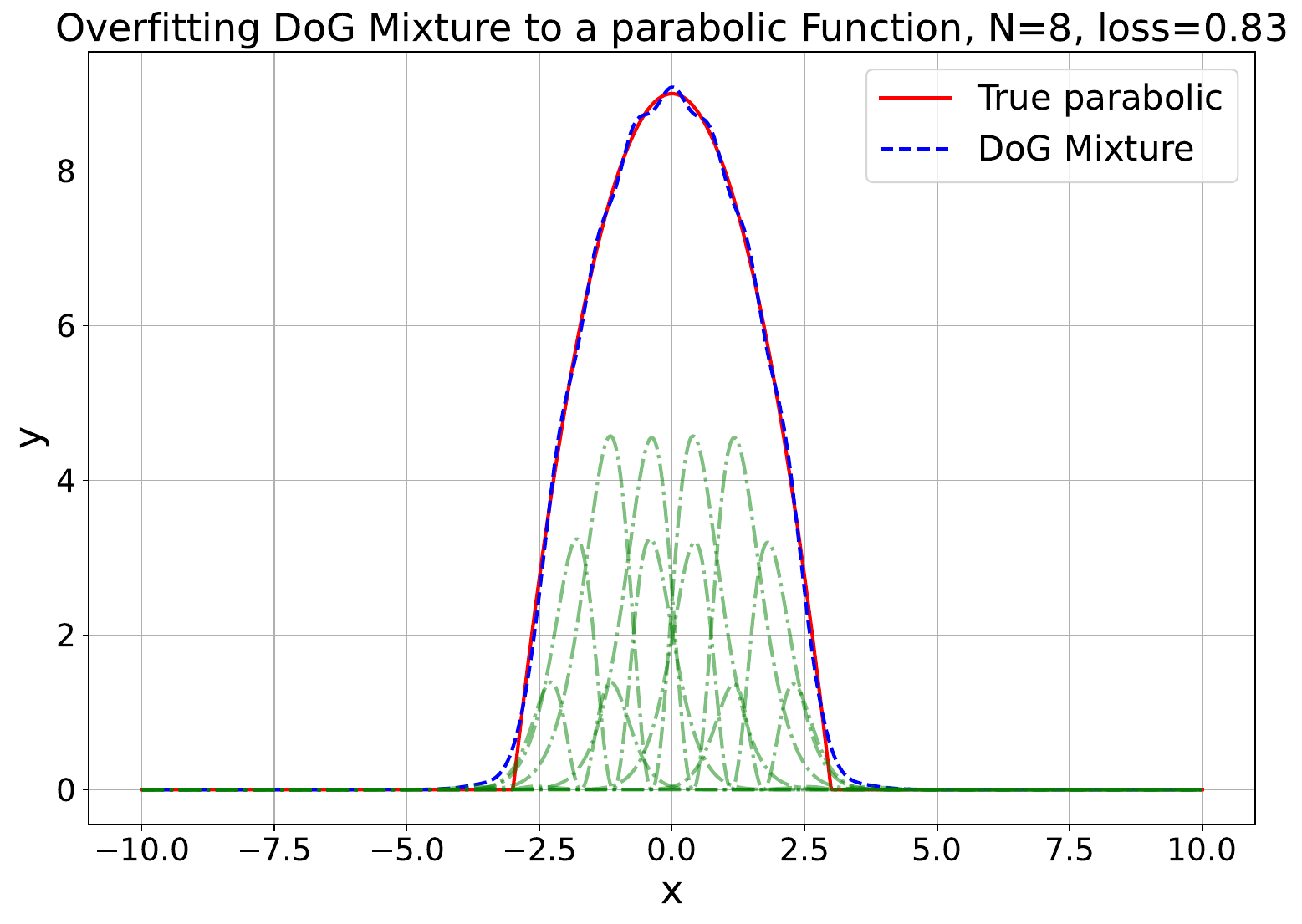} & 
    \includegraphics[width=0.24\linewidth]{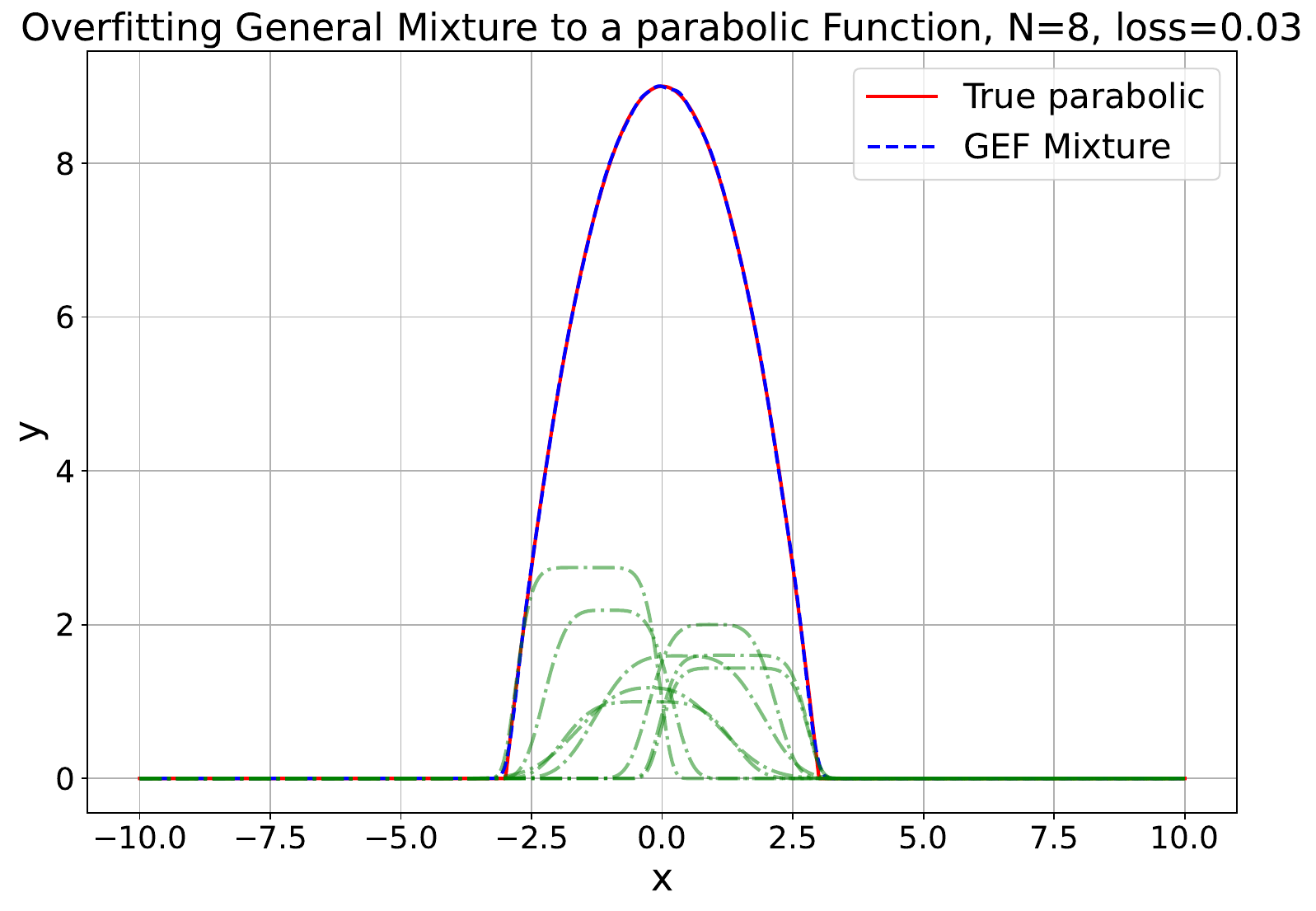}\\ 
    \includegraphics[width=0.24\linewidth]{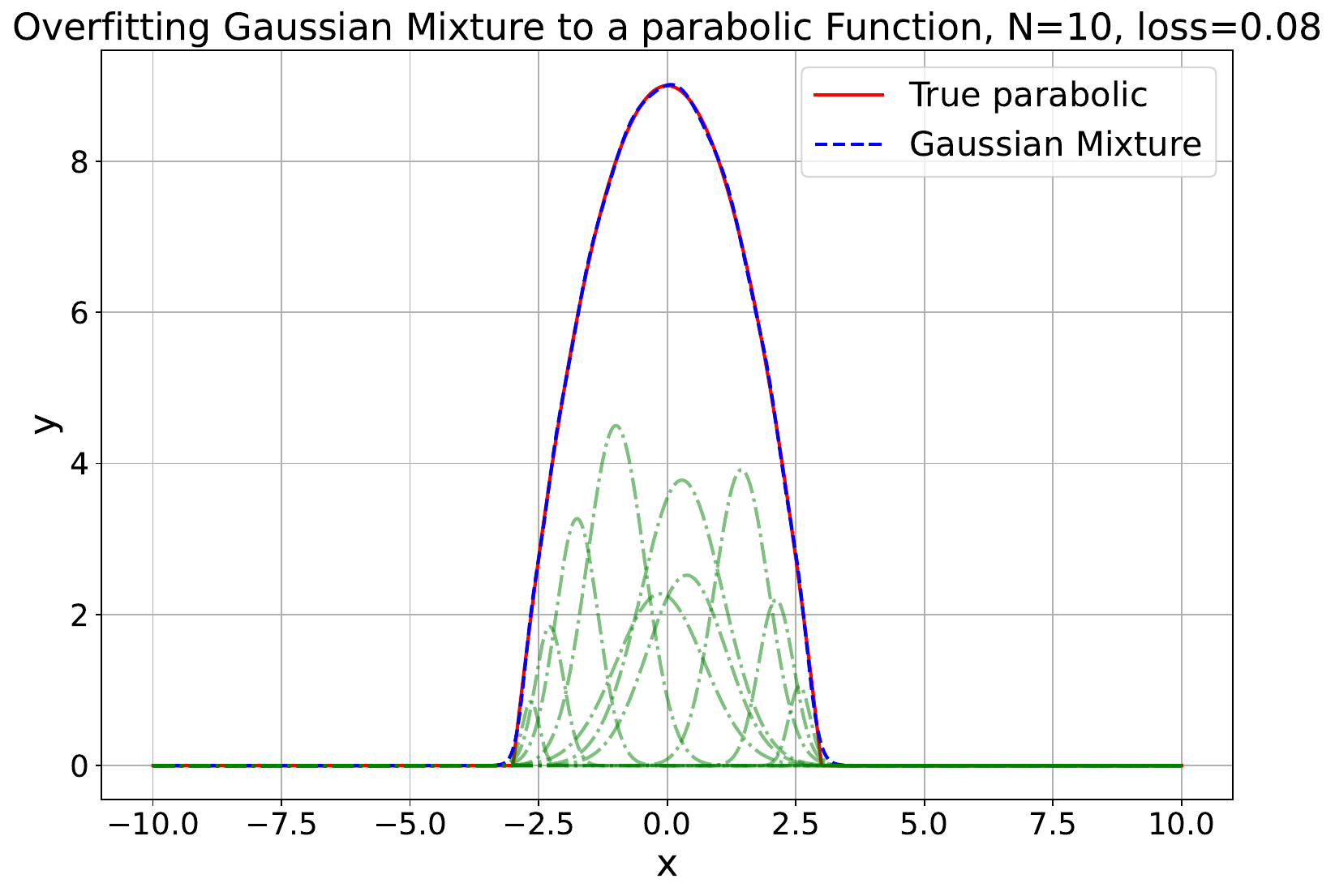} & 
    \includegraphics[width=0.24\linewidth]{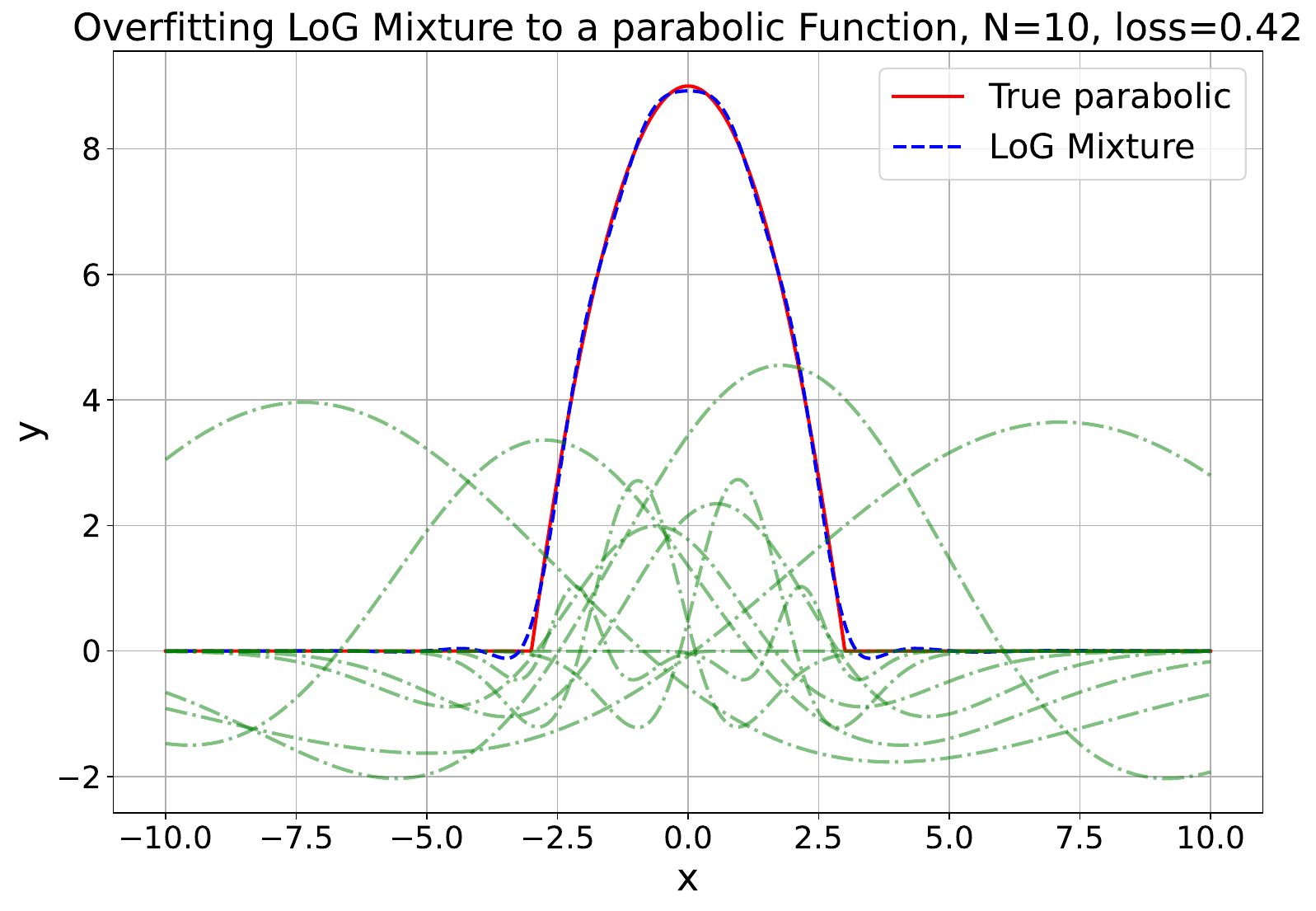} & 
    \includegraphics[width=0.24\linewidth]{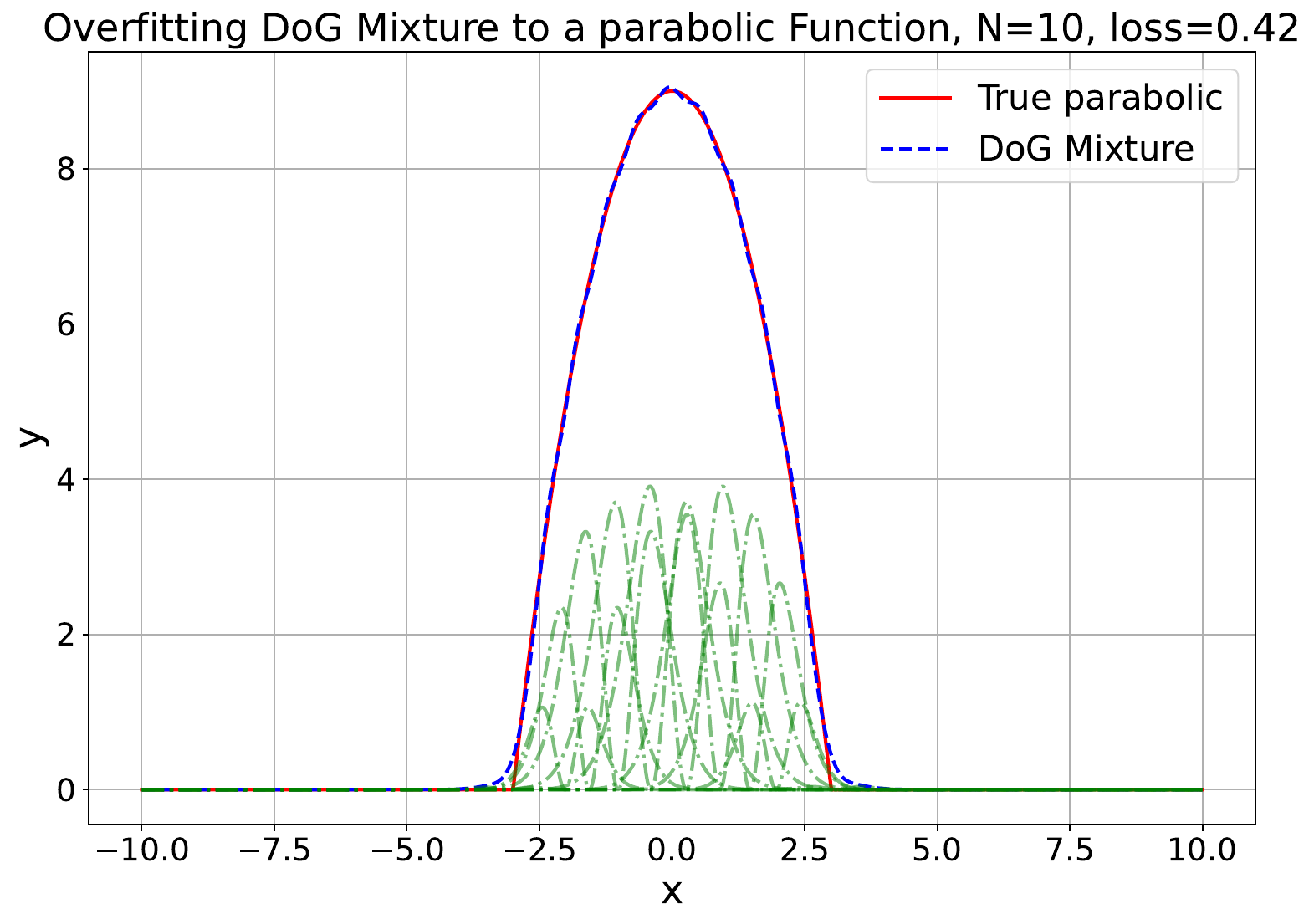} & 
    \includegraphics[width=0.24\linewidth]{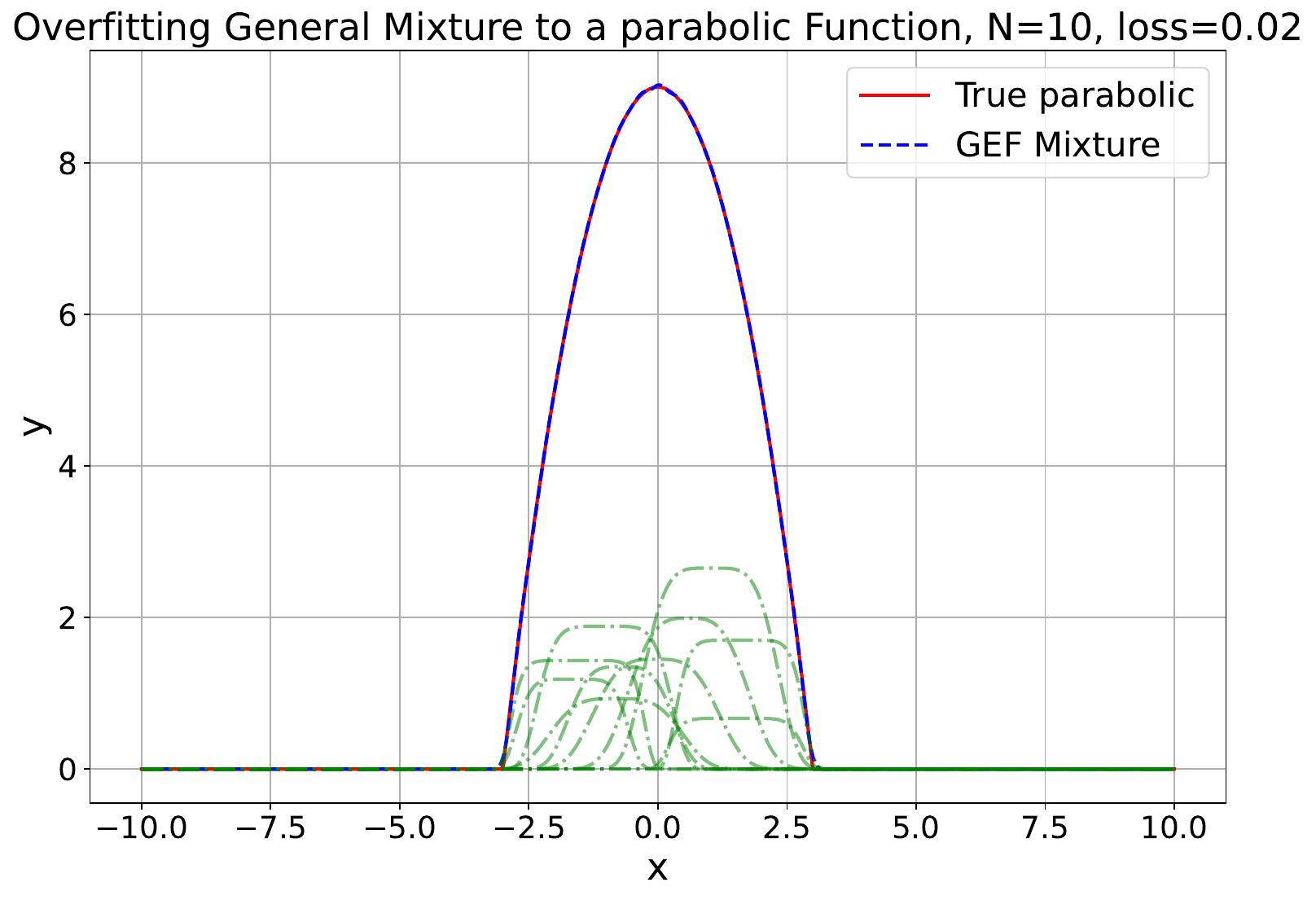}\\ 
    \includegraphics[width=0.24\linewidth]{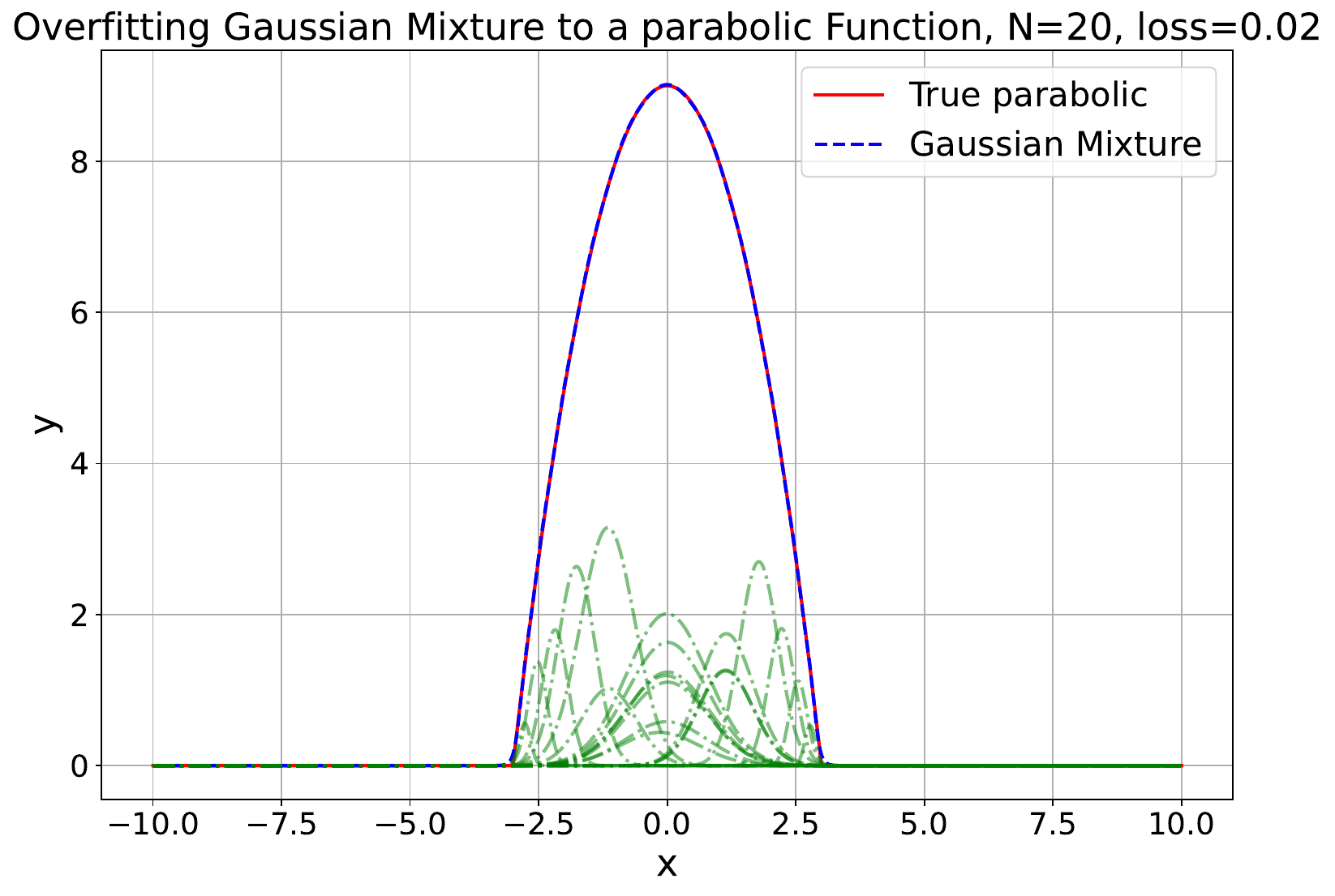} & 
    \includegraphics[width=0.24\linewidth]{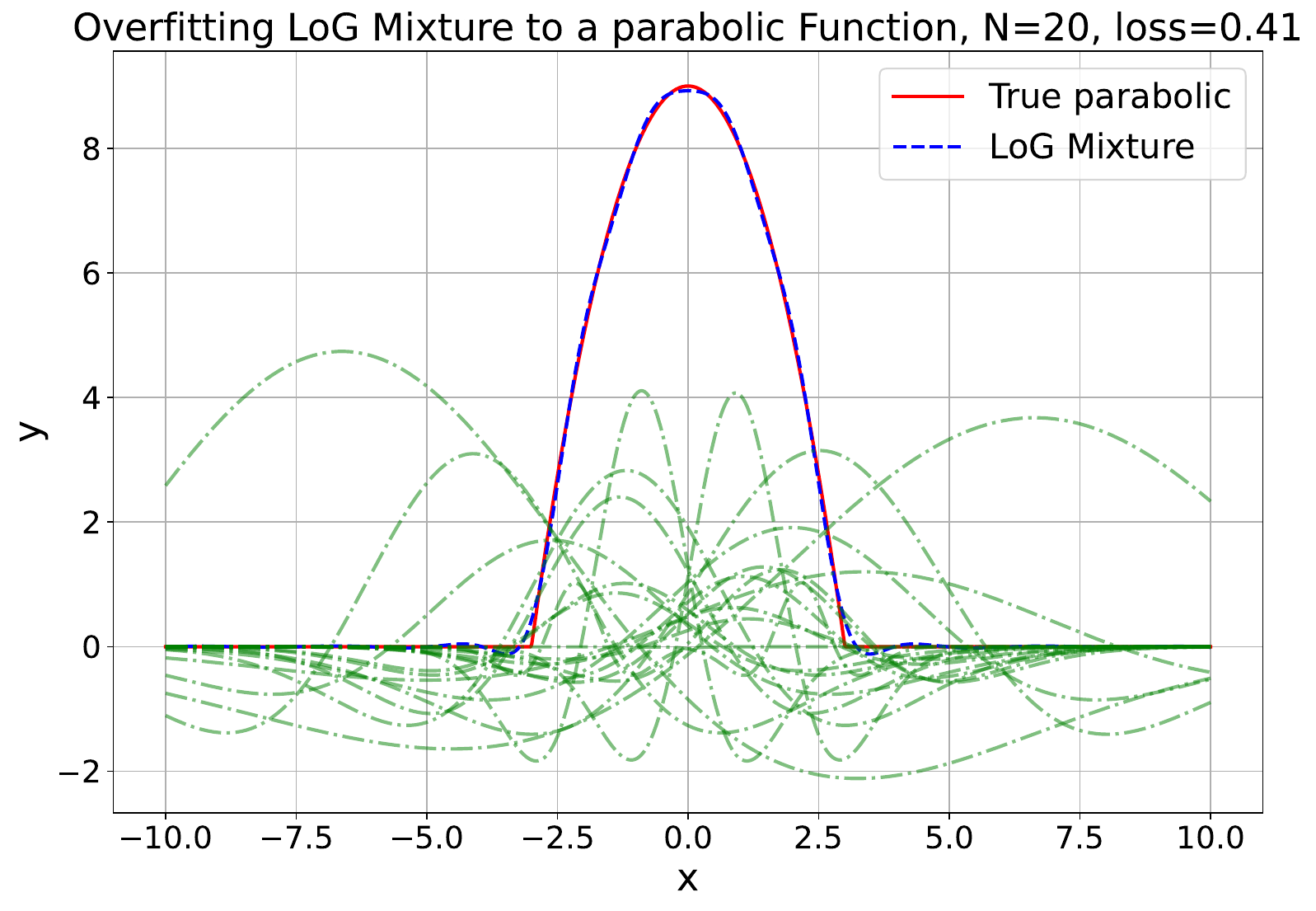} & 
    \includegraphics[width=0.24\linewidth]{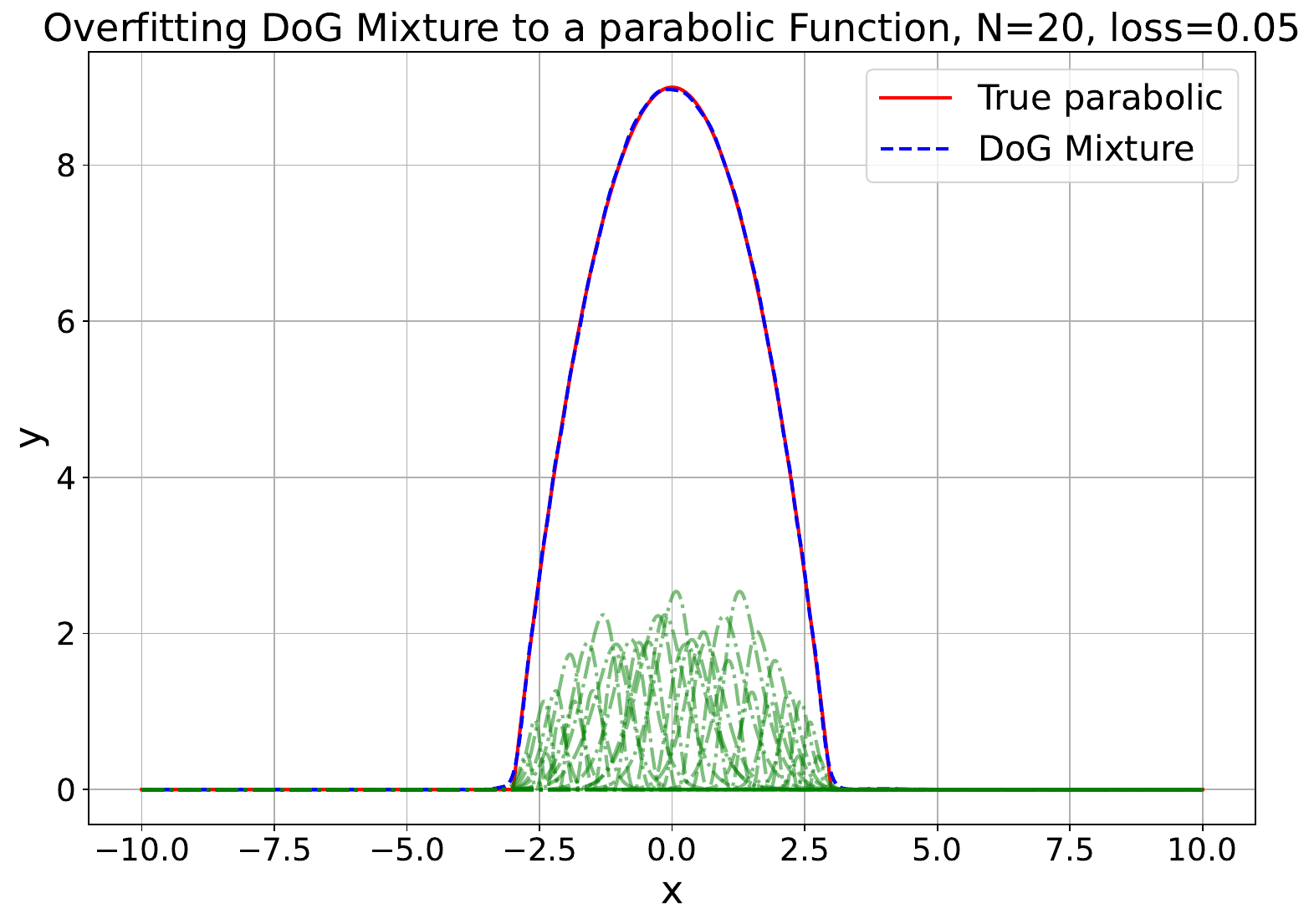} & 
    \includegraphics[width=0.24\linewidth]{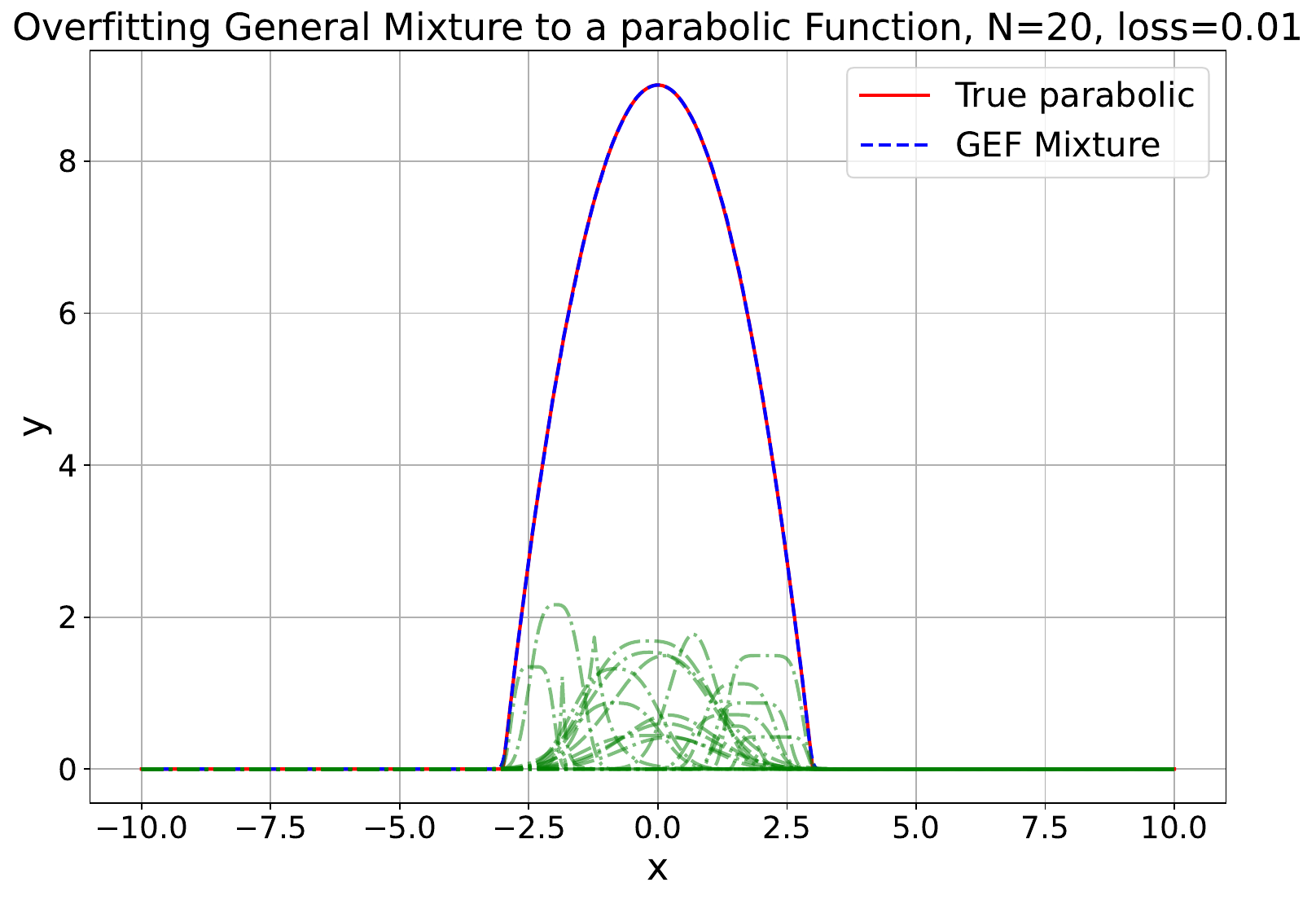}\\ 
    
    \end{tabular}
    }
    \caption{\textbf{Numerical Simulation Examples of Fitting parabolics with Positive Weights Mixtures ( N= 2, 5, 8, and 10 )}. We show some fitting examples for parabolic signals with positive weights mixtures. The four mixtures used from left to right are Gaussians, LoG, DoG, and General mixtures. From top to bottom: N = 2, 8, and 10 components. The optimized individual components are shown in green. Some examples fail to optimize due to numerical instability in both Gaussians and GEF mixtures. Note that GEF is very efficient in fitting the parabolic with few components while LoG and DoG are more stable for a larger number of components. }
    \label{supfig:fitting_parabolic_p}
    \end{figure*}
    
\begin{figure*}[h]
    \centering
    \resizebox{1.0\linewidth}{!}{
    \begin{tabular}{cccc}
    \tabcolsep=0.01cm
    Gaussian Mixture& LoG Mixture & DoG Mixture & GEF Mixture \\ 
    \includegraphics[width=0.24\linewidth]{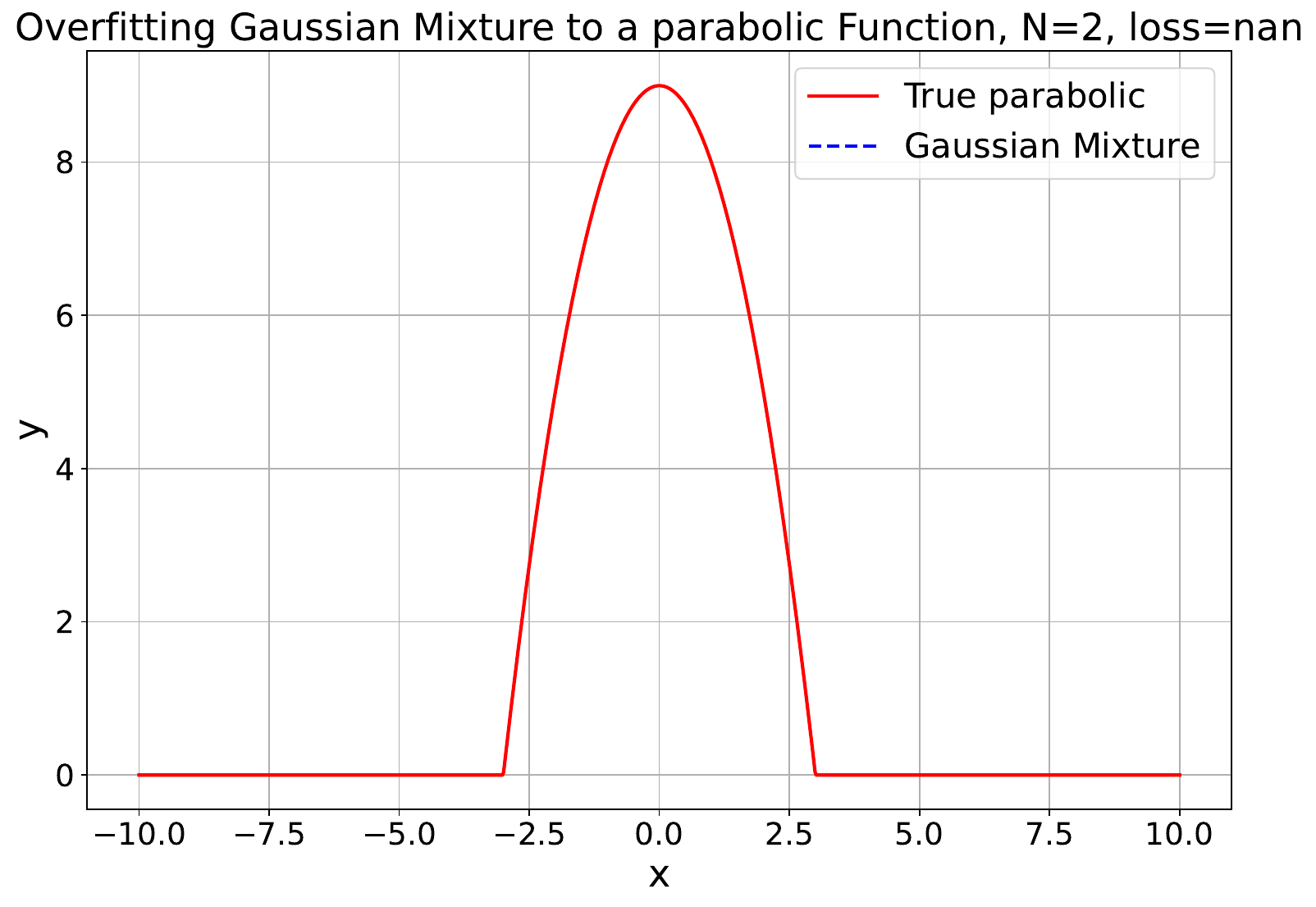} & 
    \includegraphics[width=0.24\linewidth]{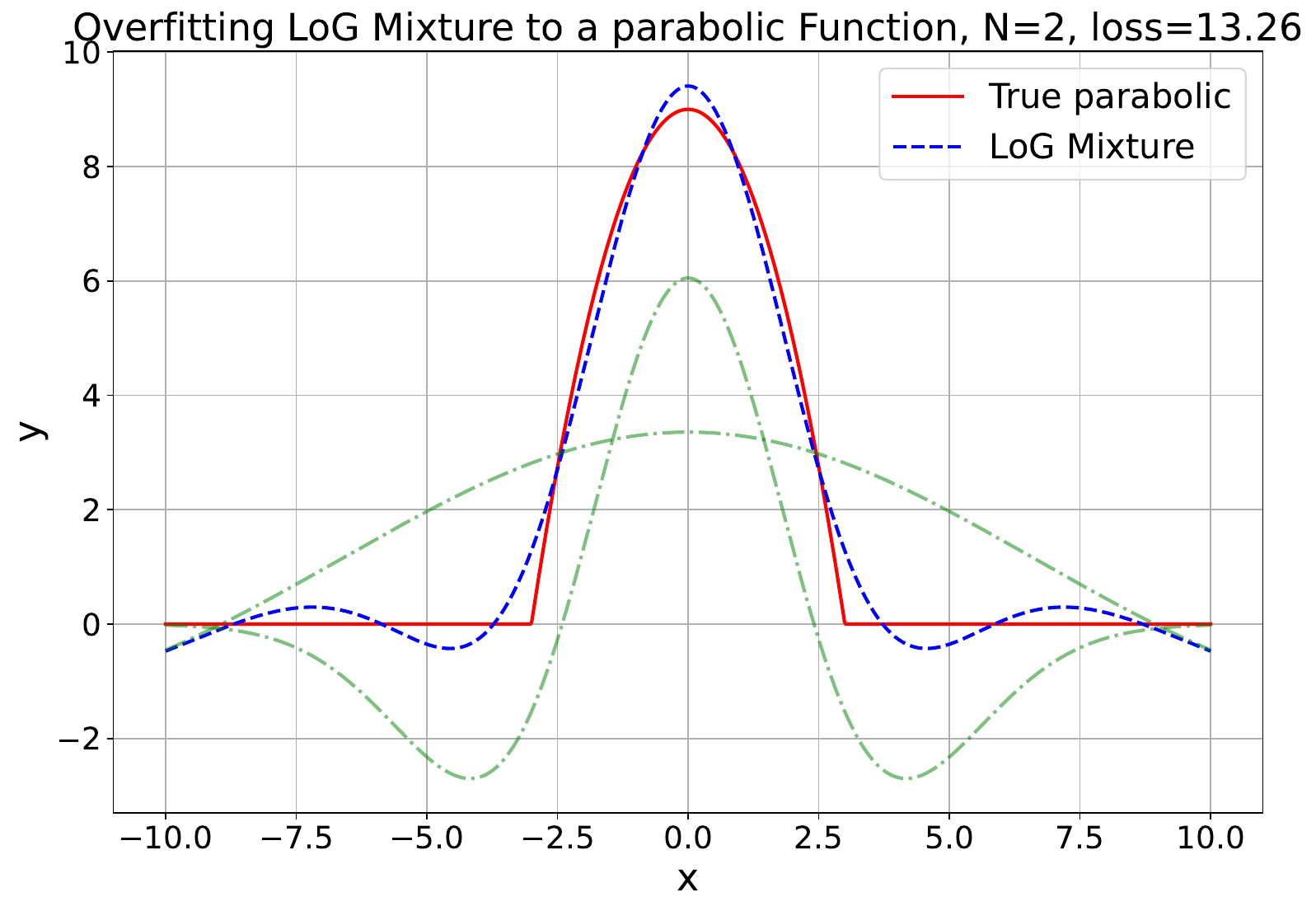} & 
    \includegraphics[width=0.24\linewidth]{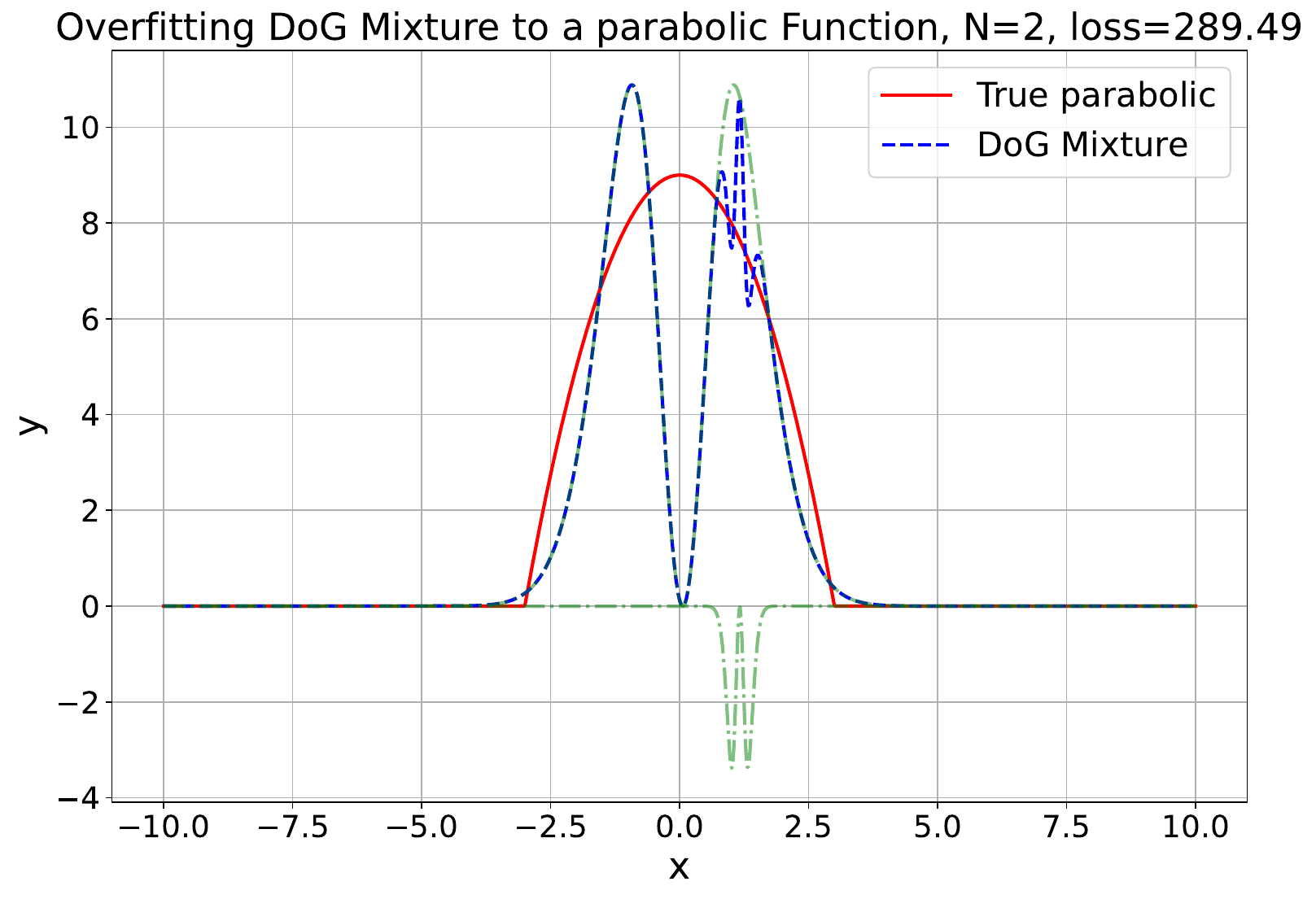} & 
    \includegraphics[width=0.24\linewidth]{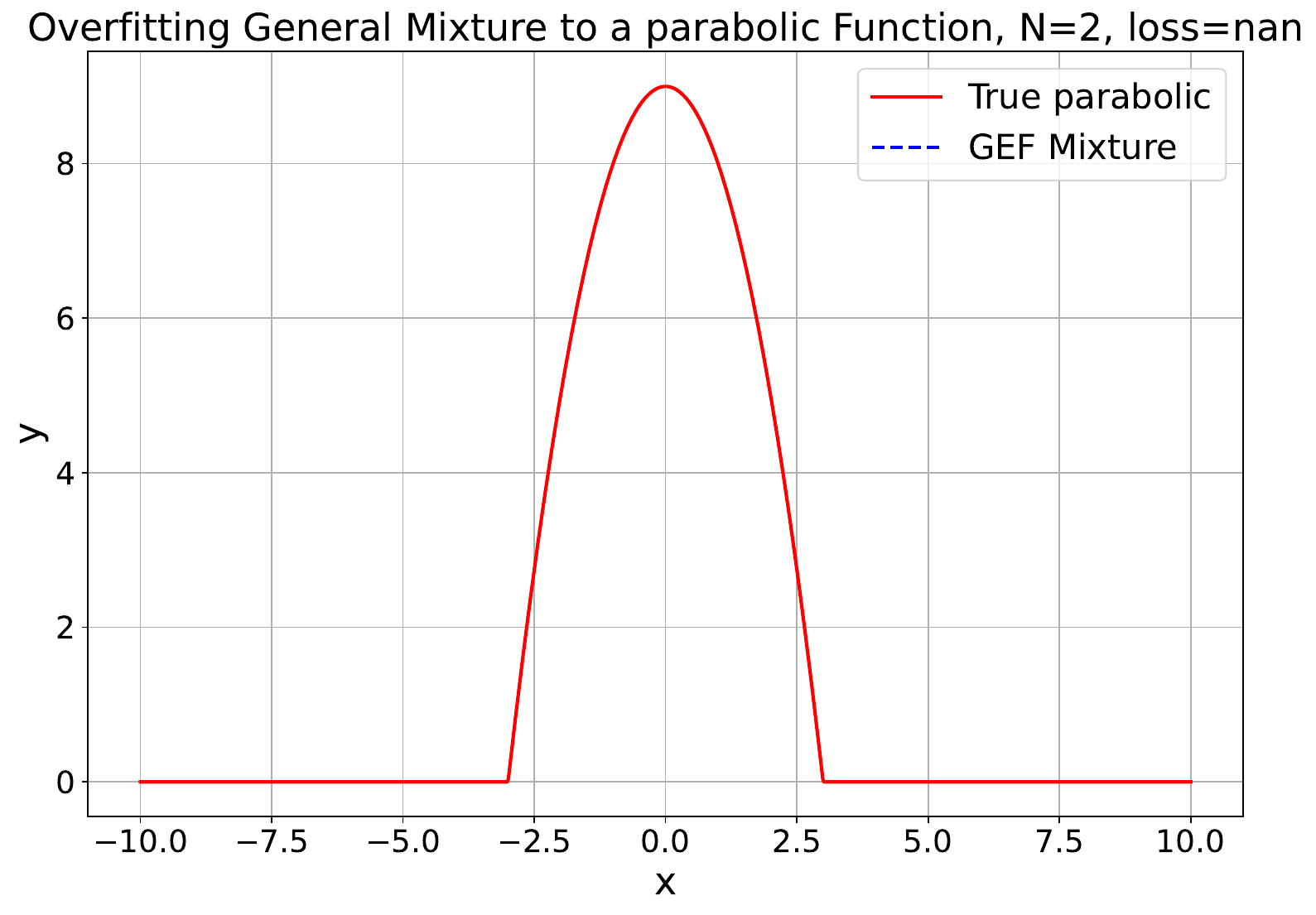}\\ 
    \includegraphics[width=0.24\linewidth]{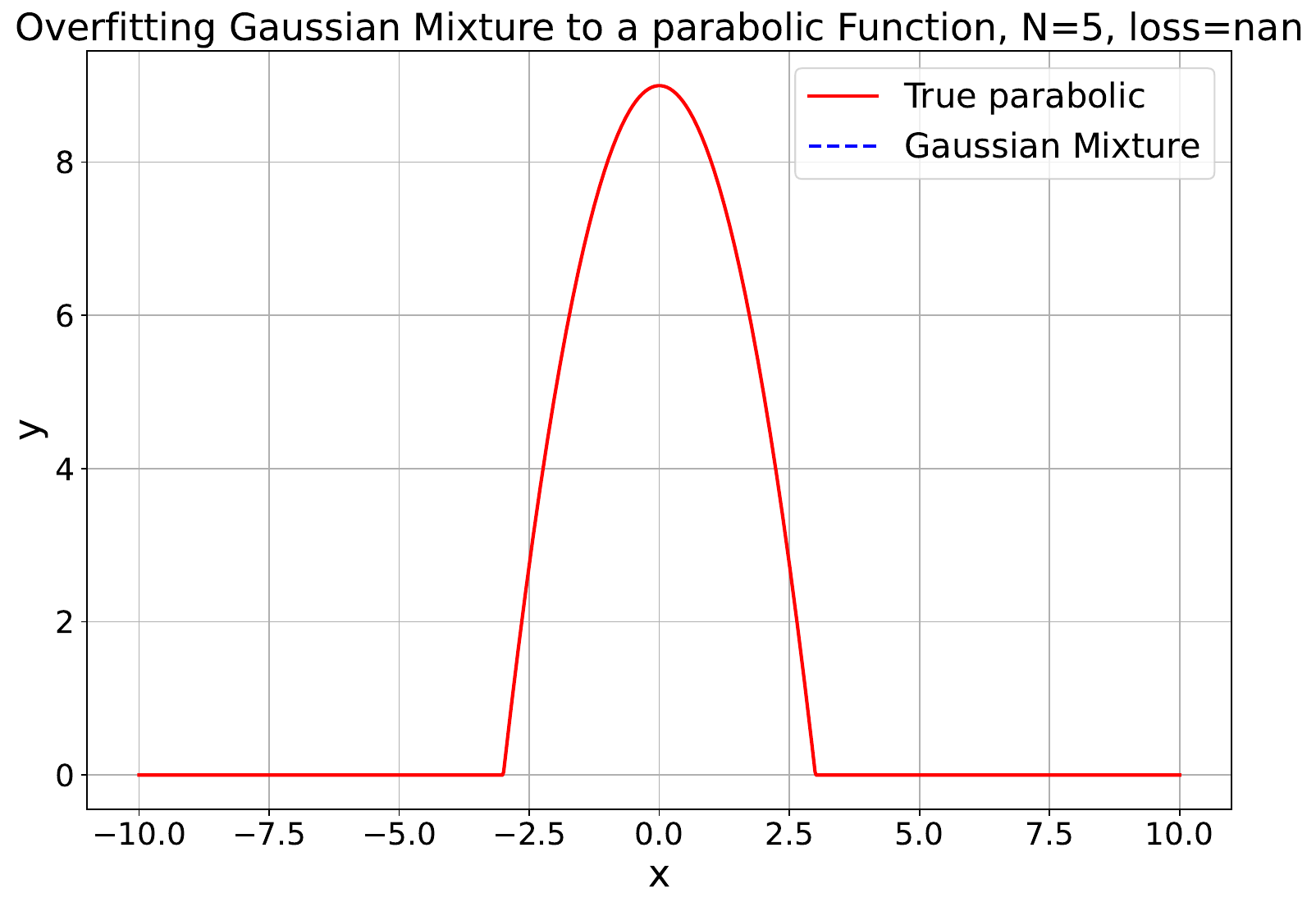} & 
    \includegraphics[width=0.24\linewidth]{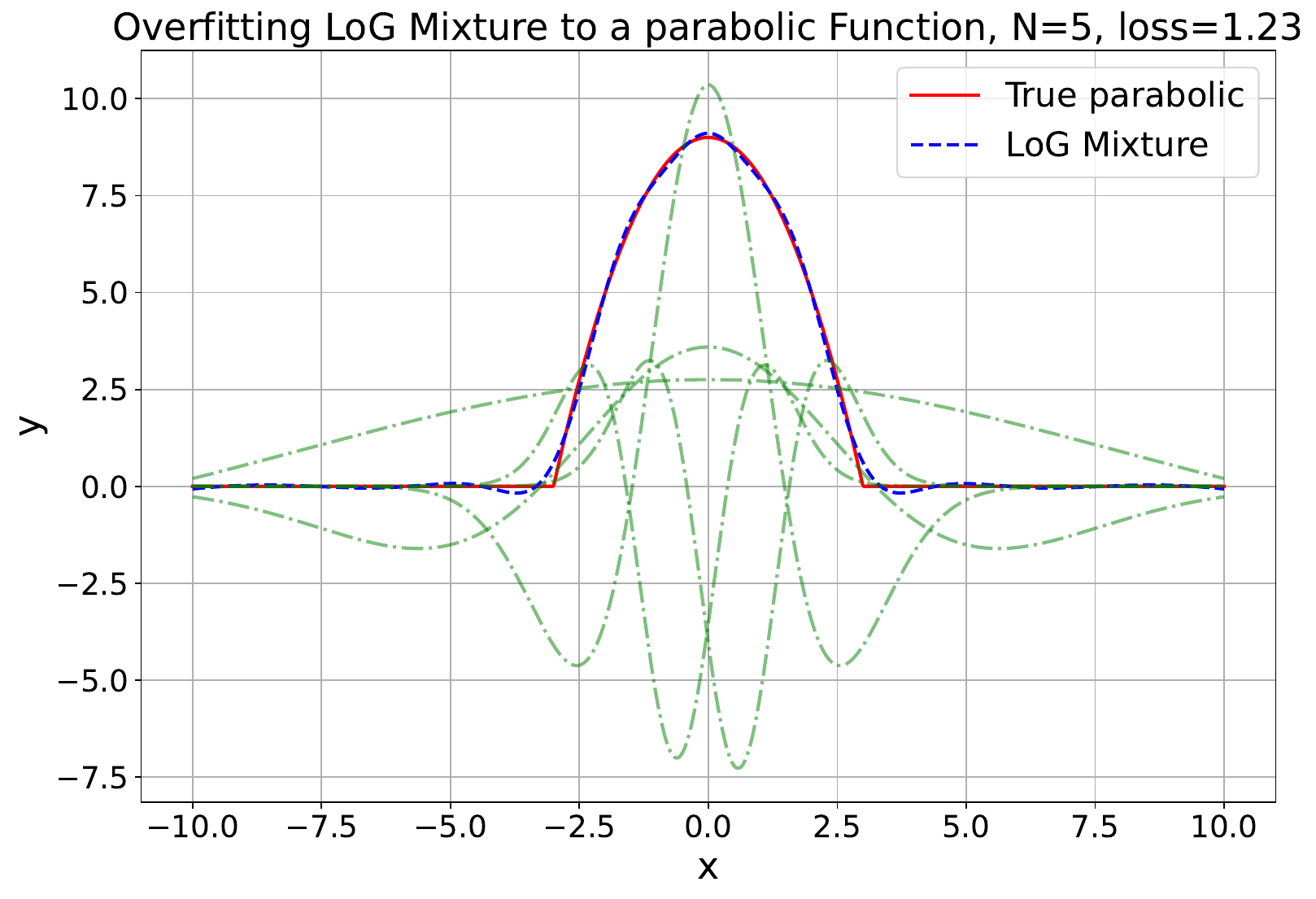} & 
    \includegraphics[width=0.24\linewidth]{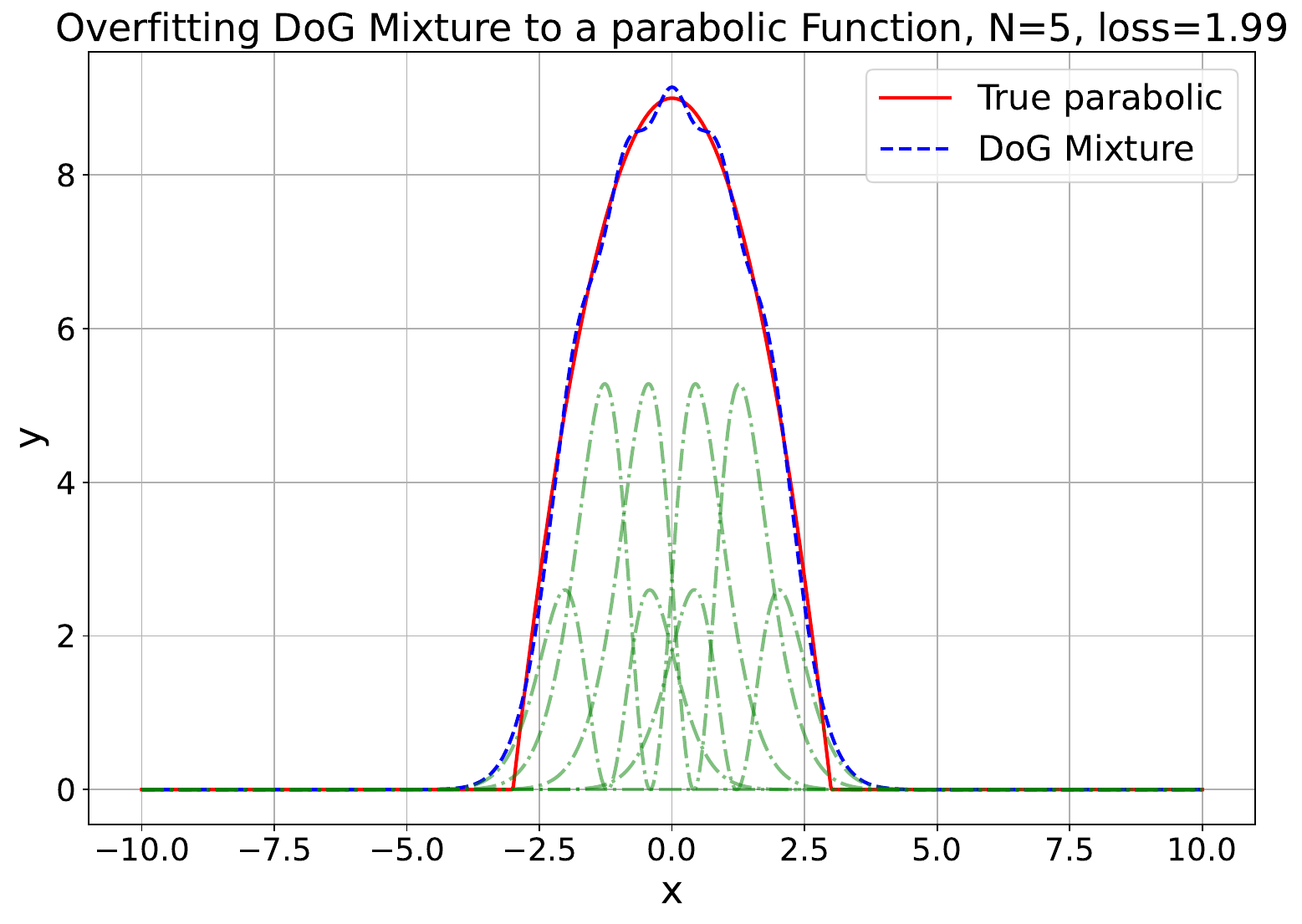} & 
    \includegraphics[width=0.24\linewidth]{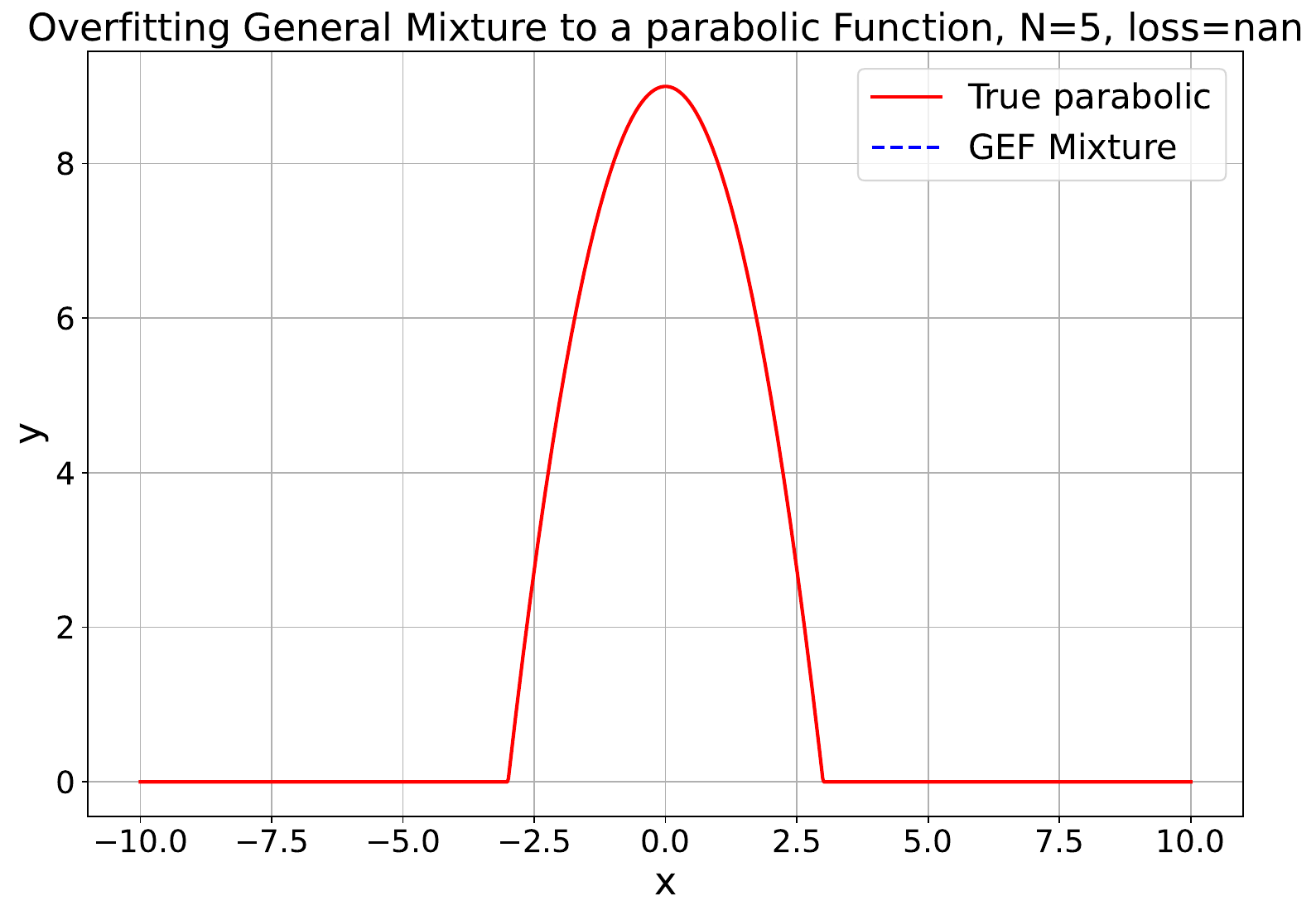}\\ 
    \includegraphics[width=0.24\linewidth]{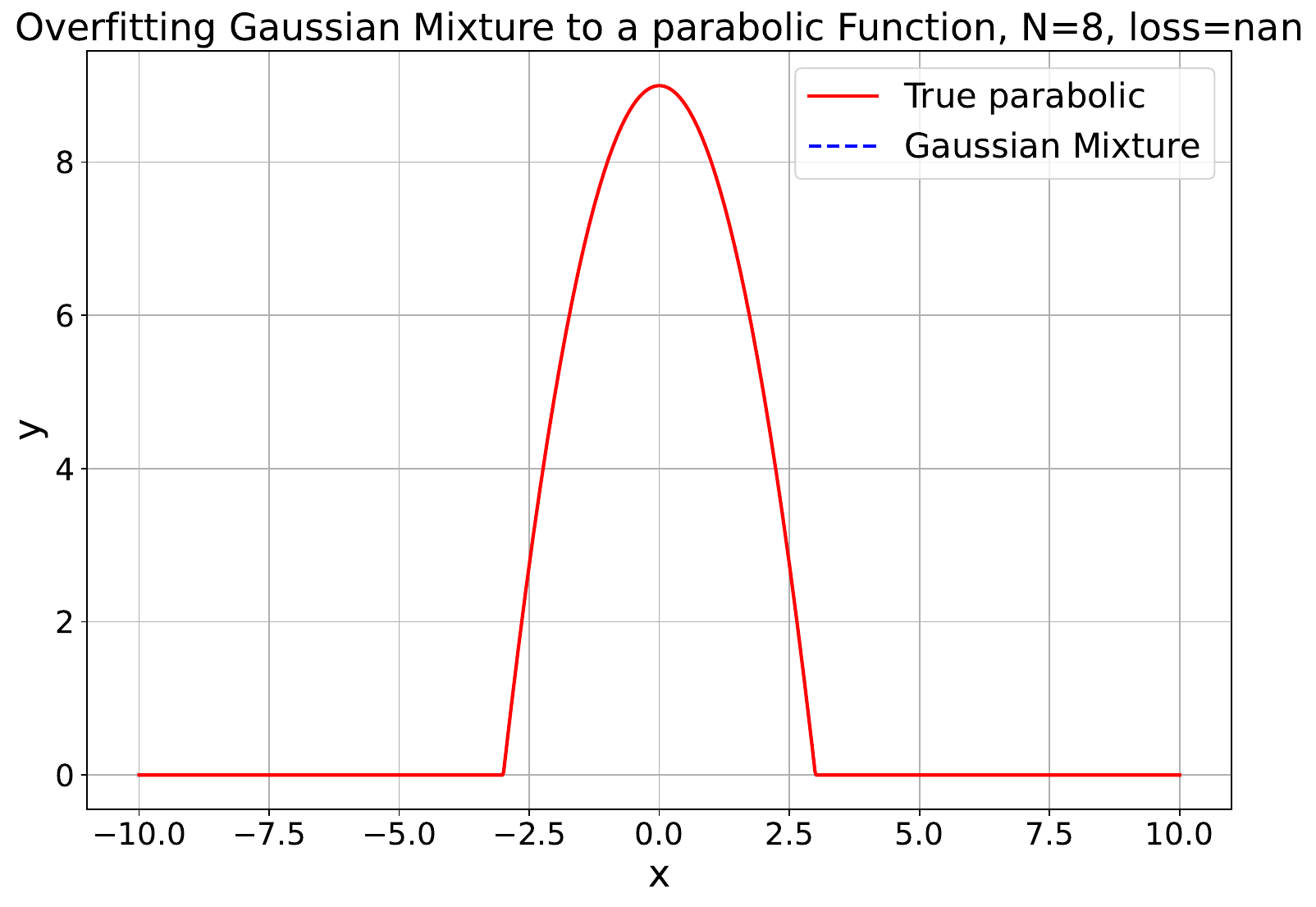} & 
    \includegraphics[width=0.24\linewidth]{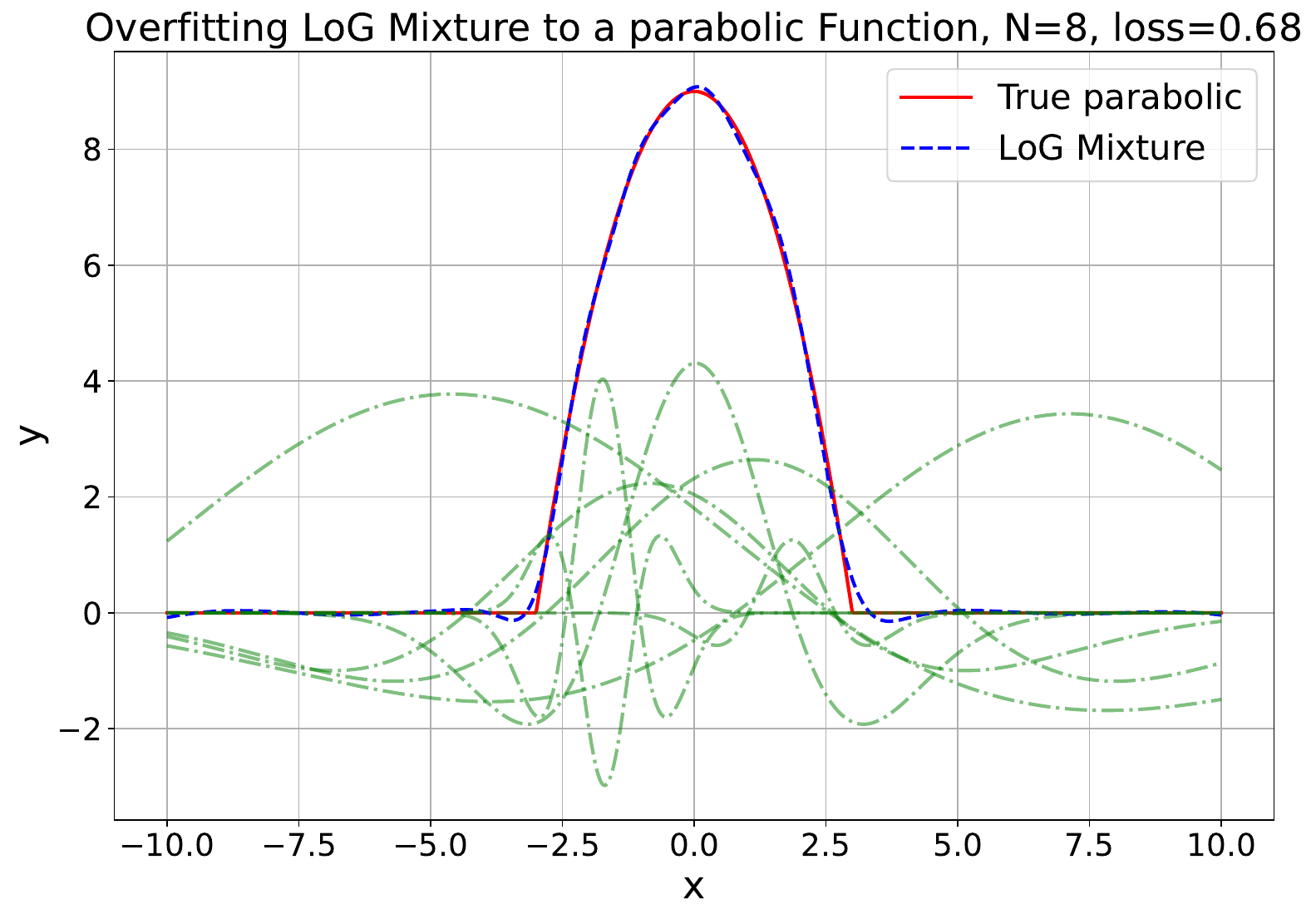} & 
    \includegraphics[width=0.24\linewidth]{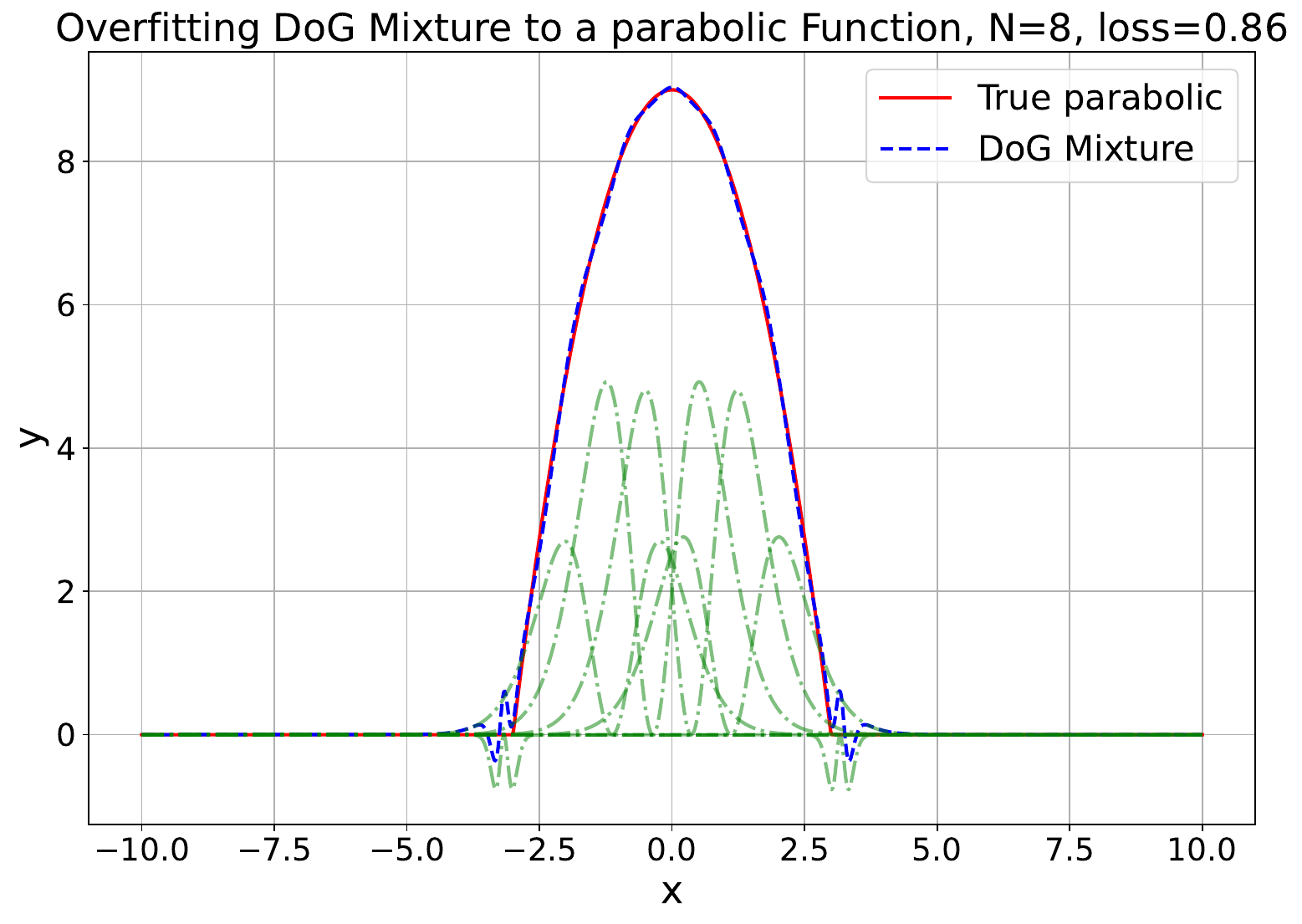} & 
    \includegraphics[width=0.24\linewidth]{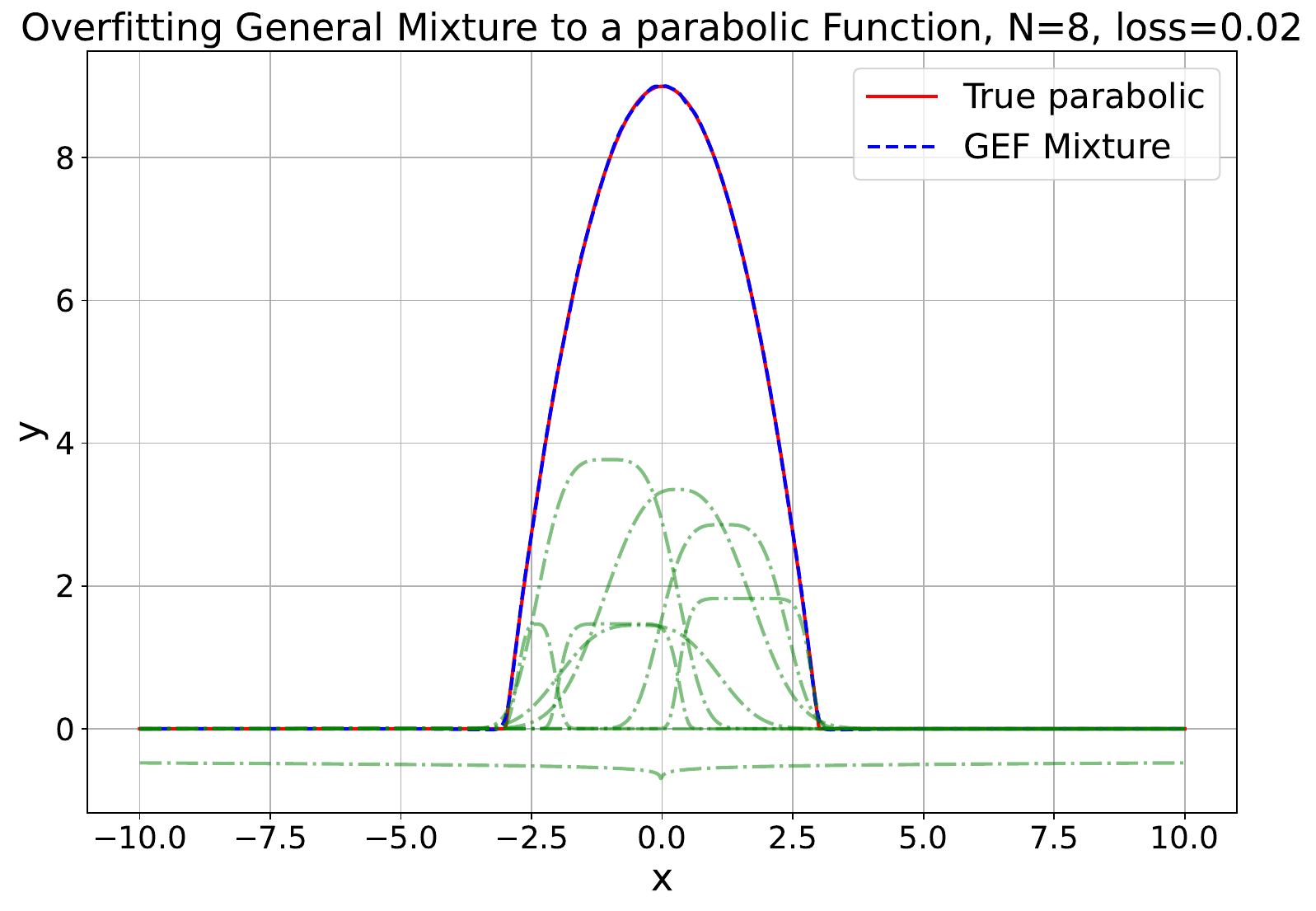}\\ 
    \includegraphics[width=0.24\linewidth]{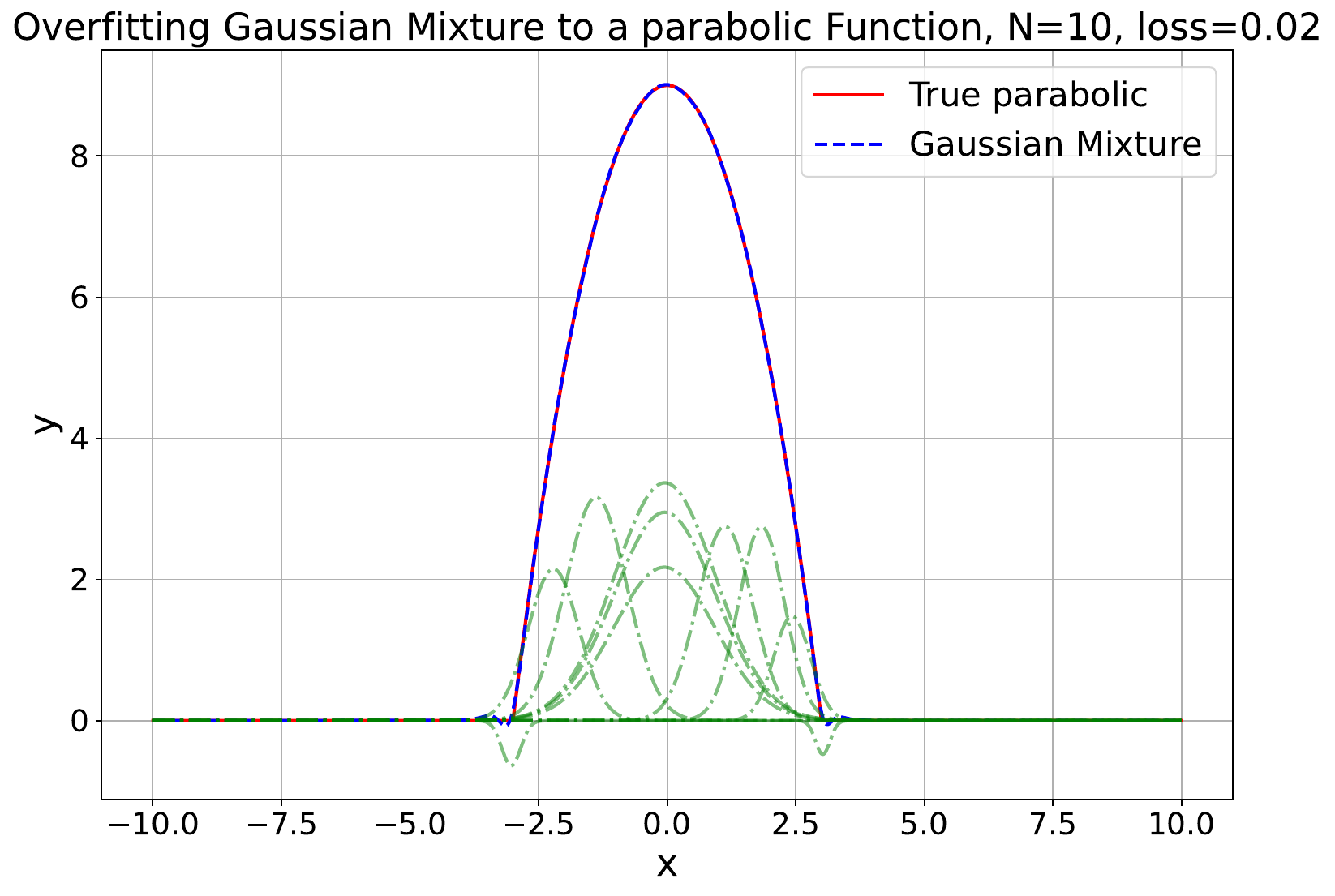} & 
    \includegraphics[width=0.24\linewidth]{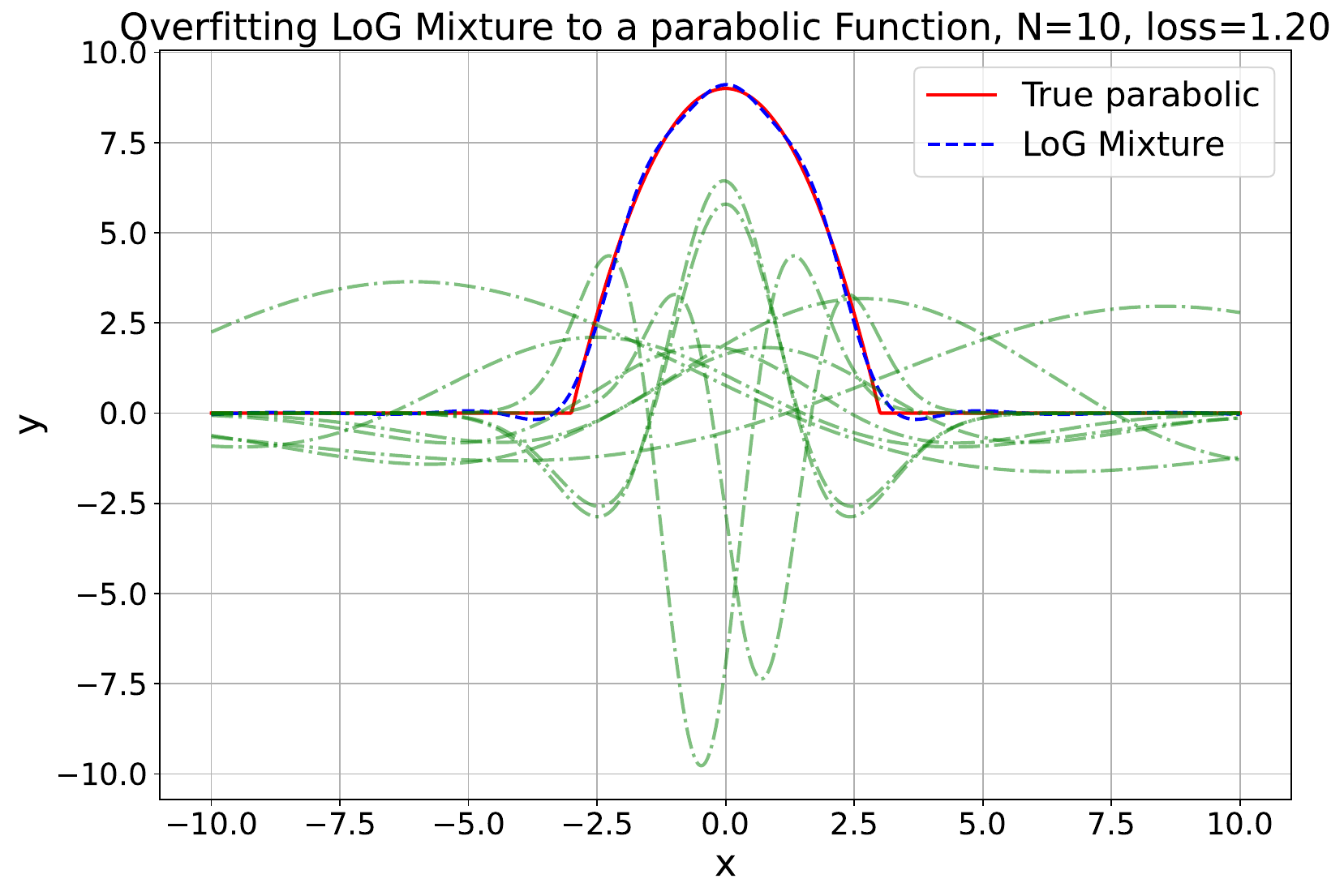} & 
    \includegraphics[width=0.24\linewidth]{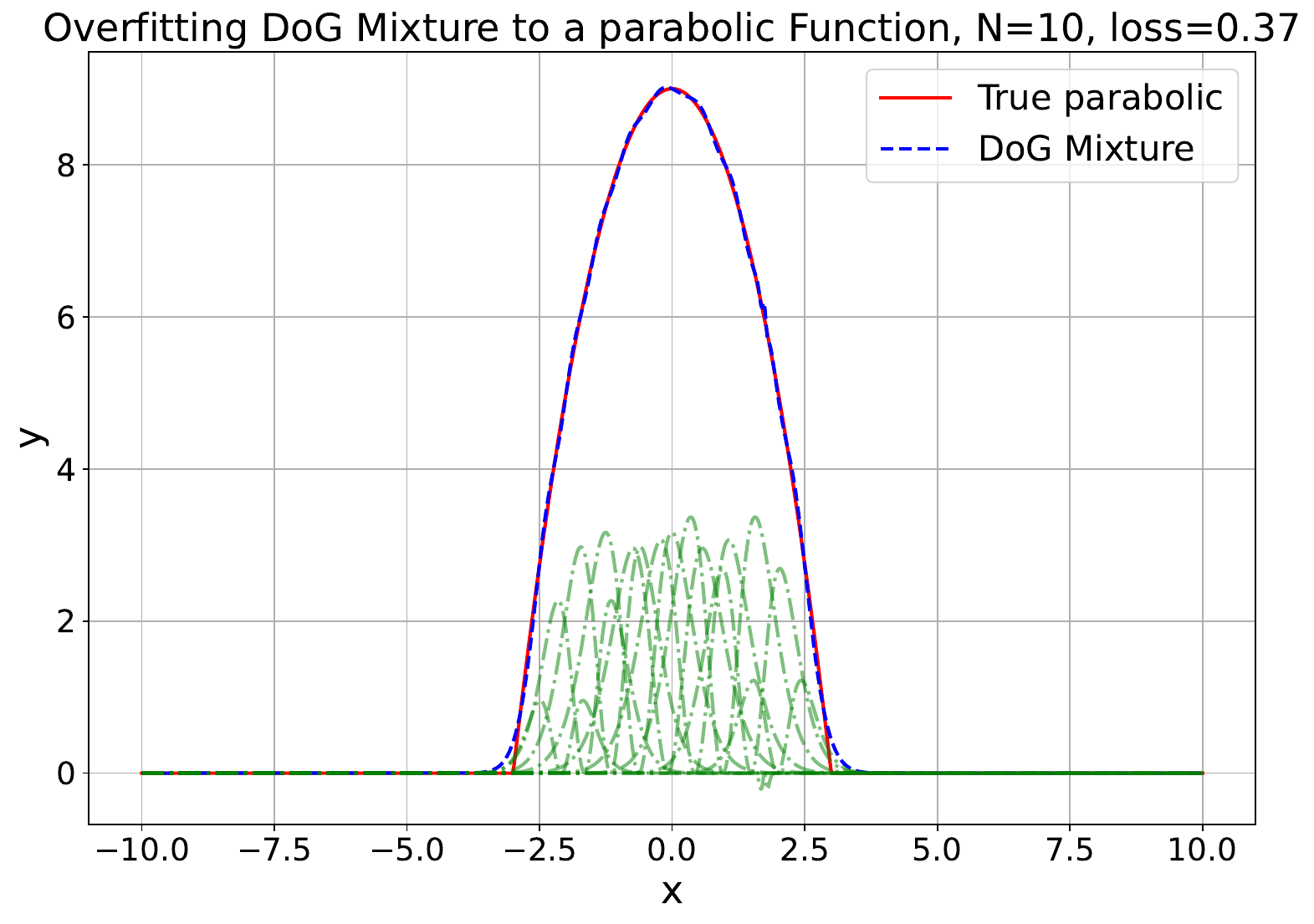} & 
    \includegraphics[width=0.24\linewidth]{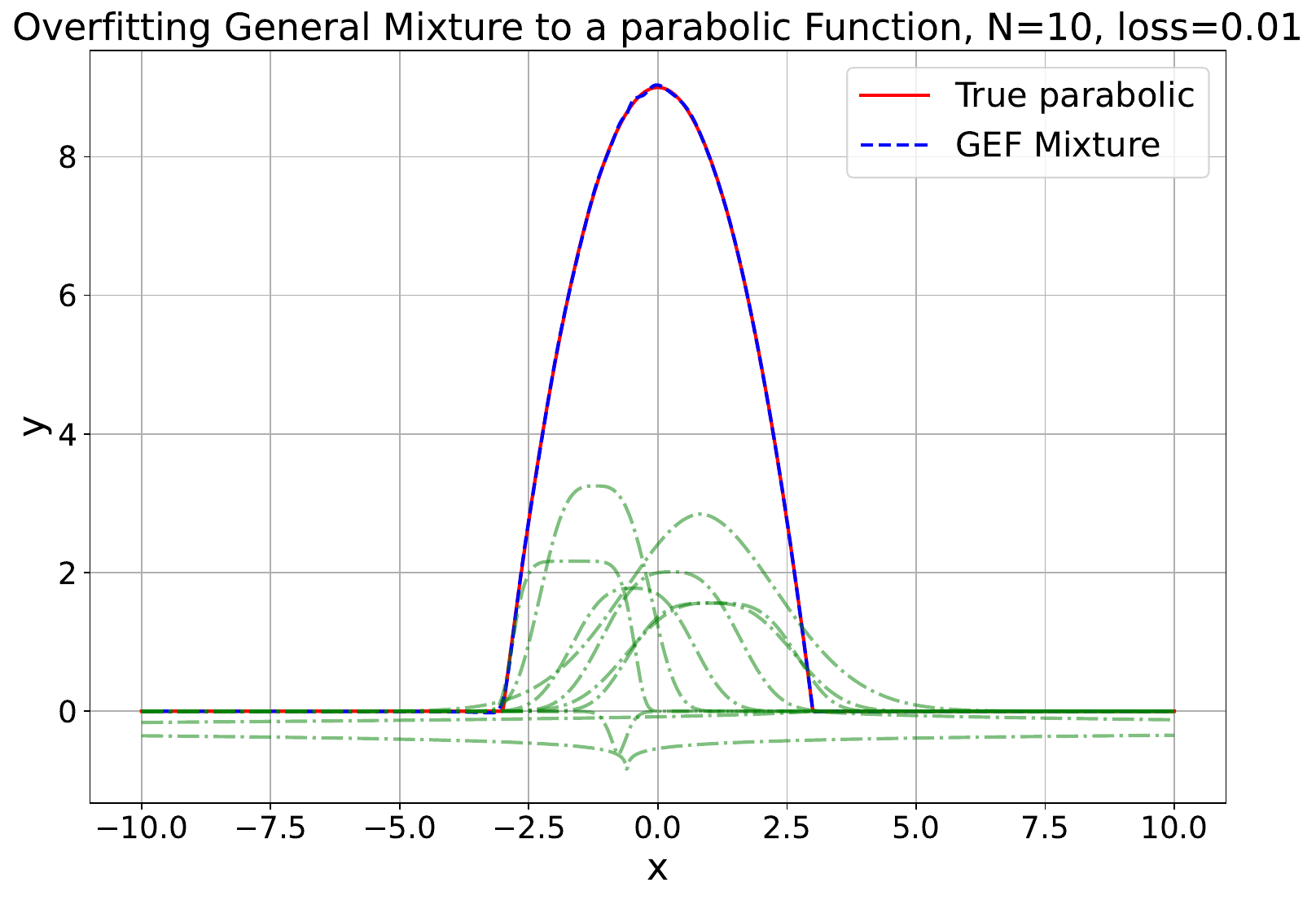}\\ 
    \includegraphics[width=0.24\linewidth]{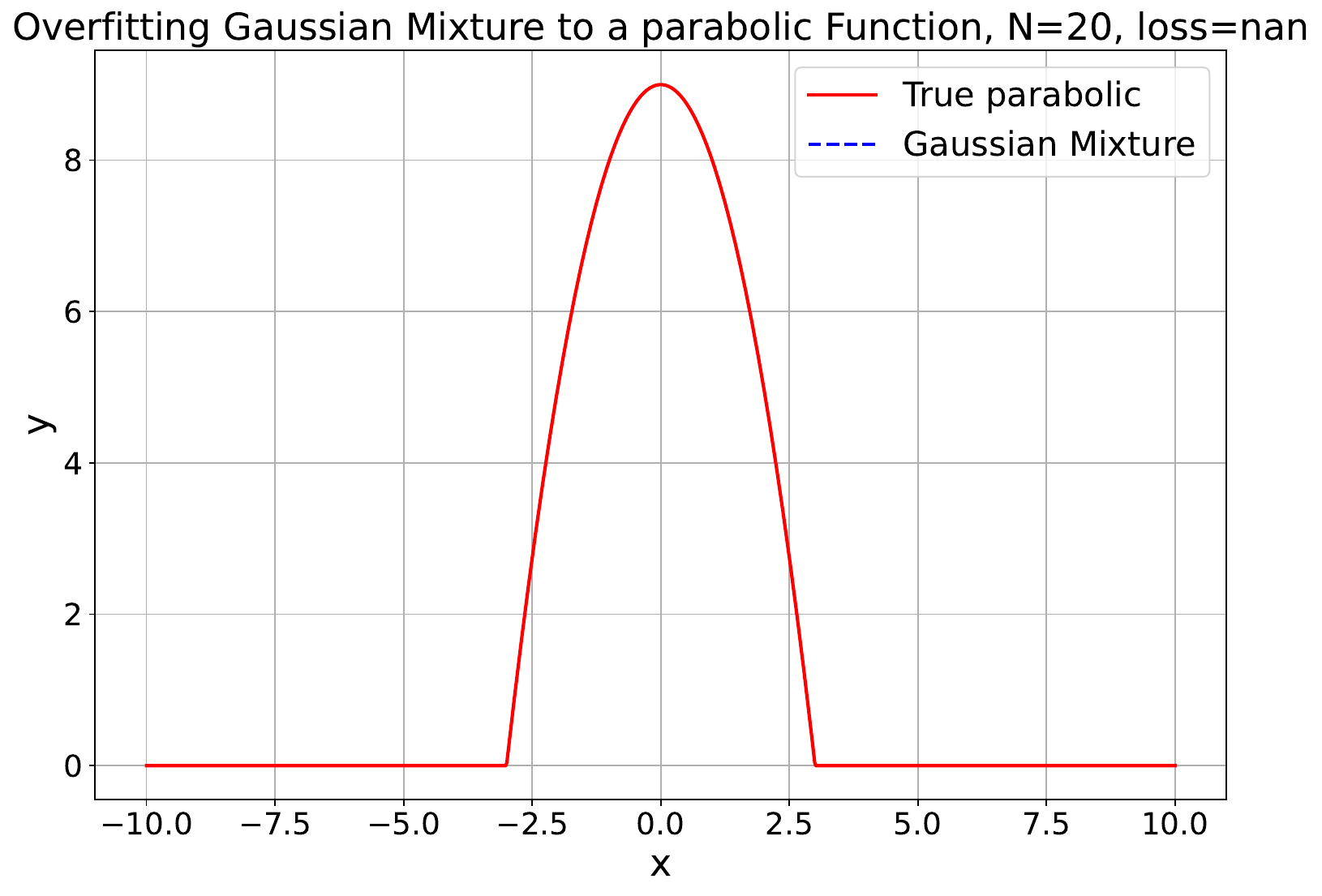} & 
    \includegraphics[width=0.24\linewidth]{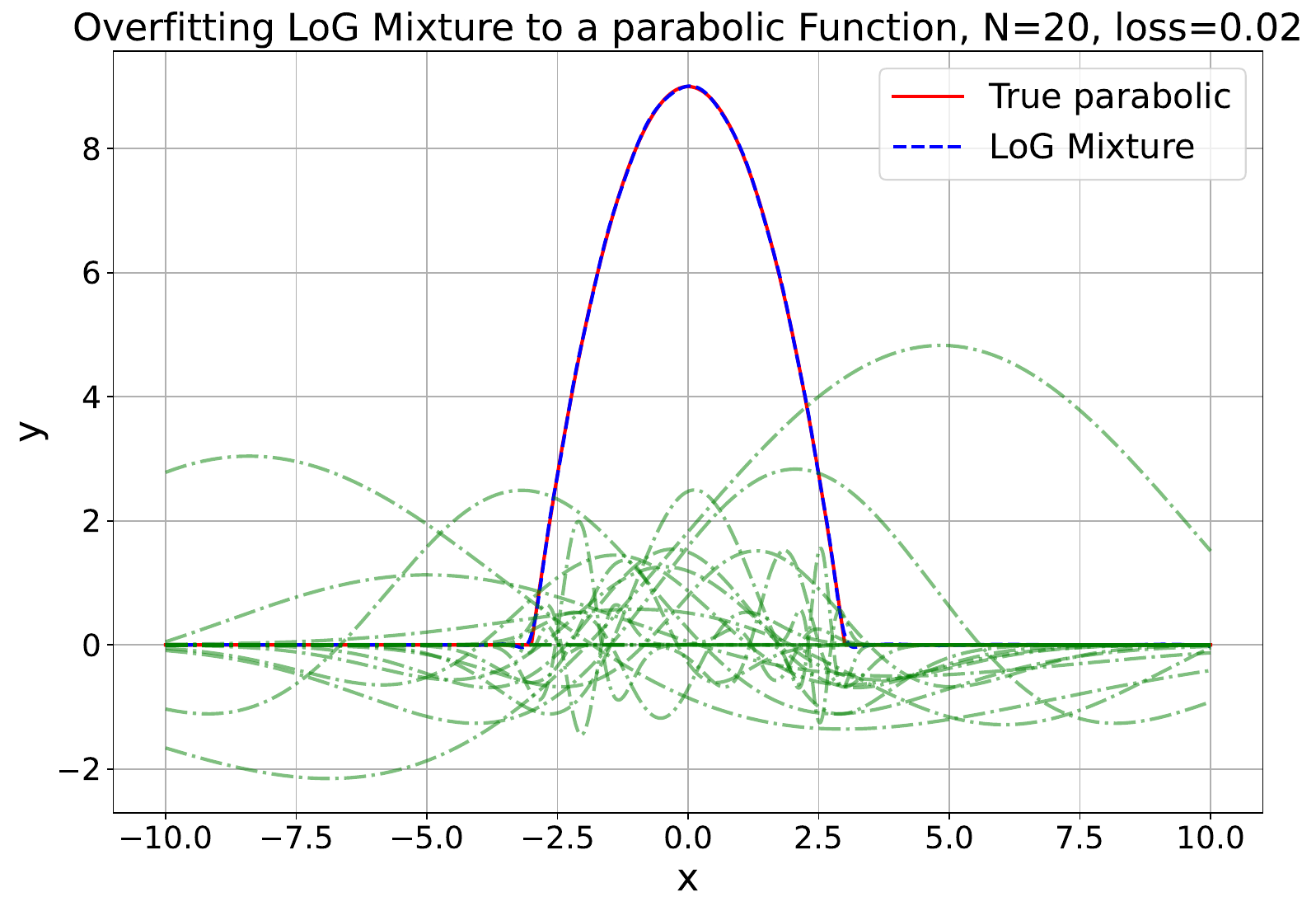} & 
    \includegraphics[width=0.24\linewidth]{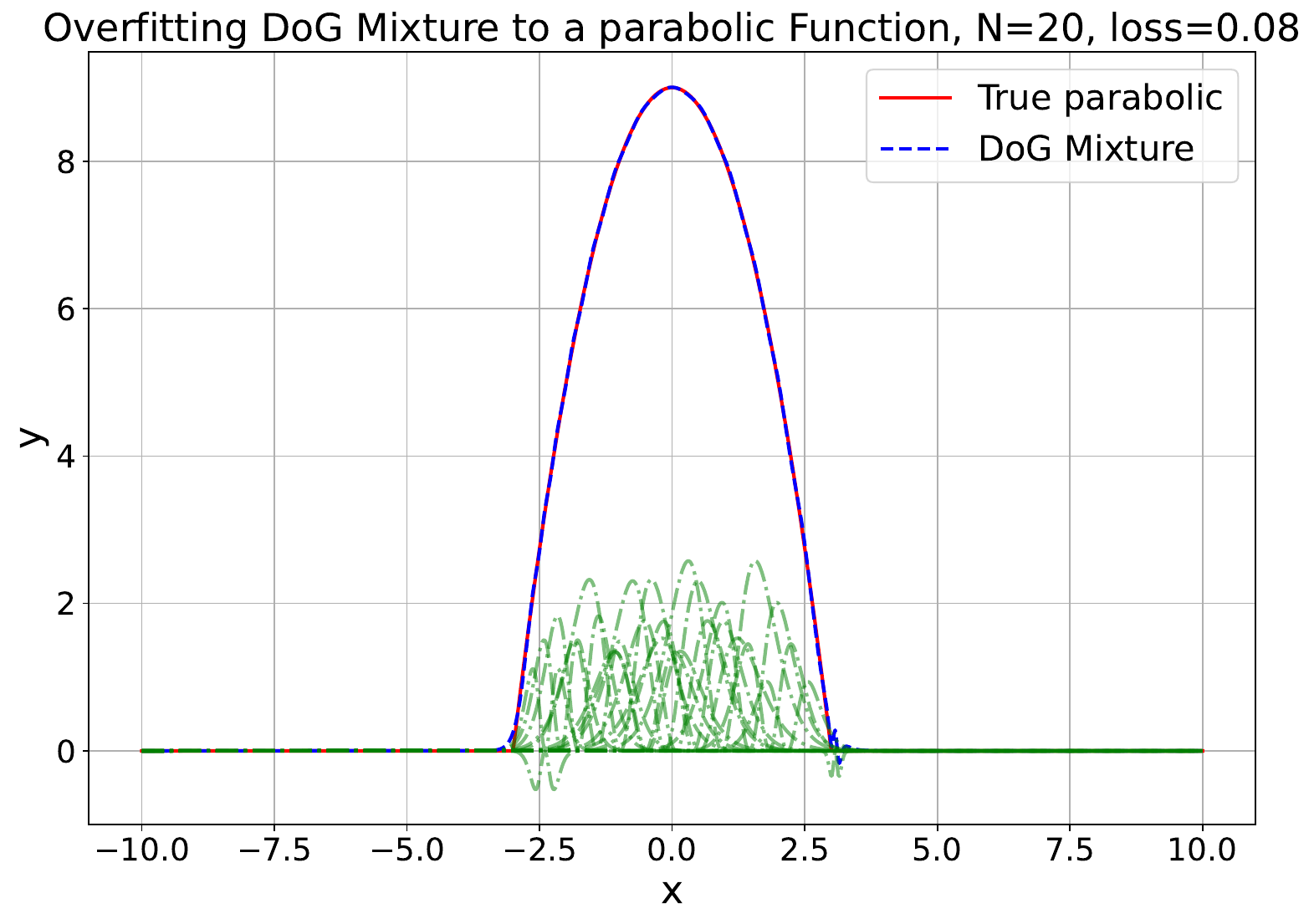} & 
    \includegraphics[width=0.24\linewidth]{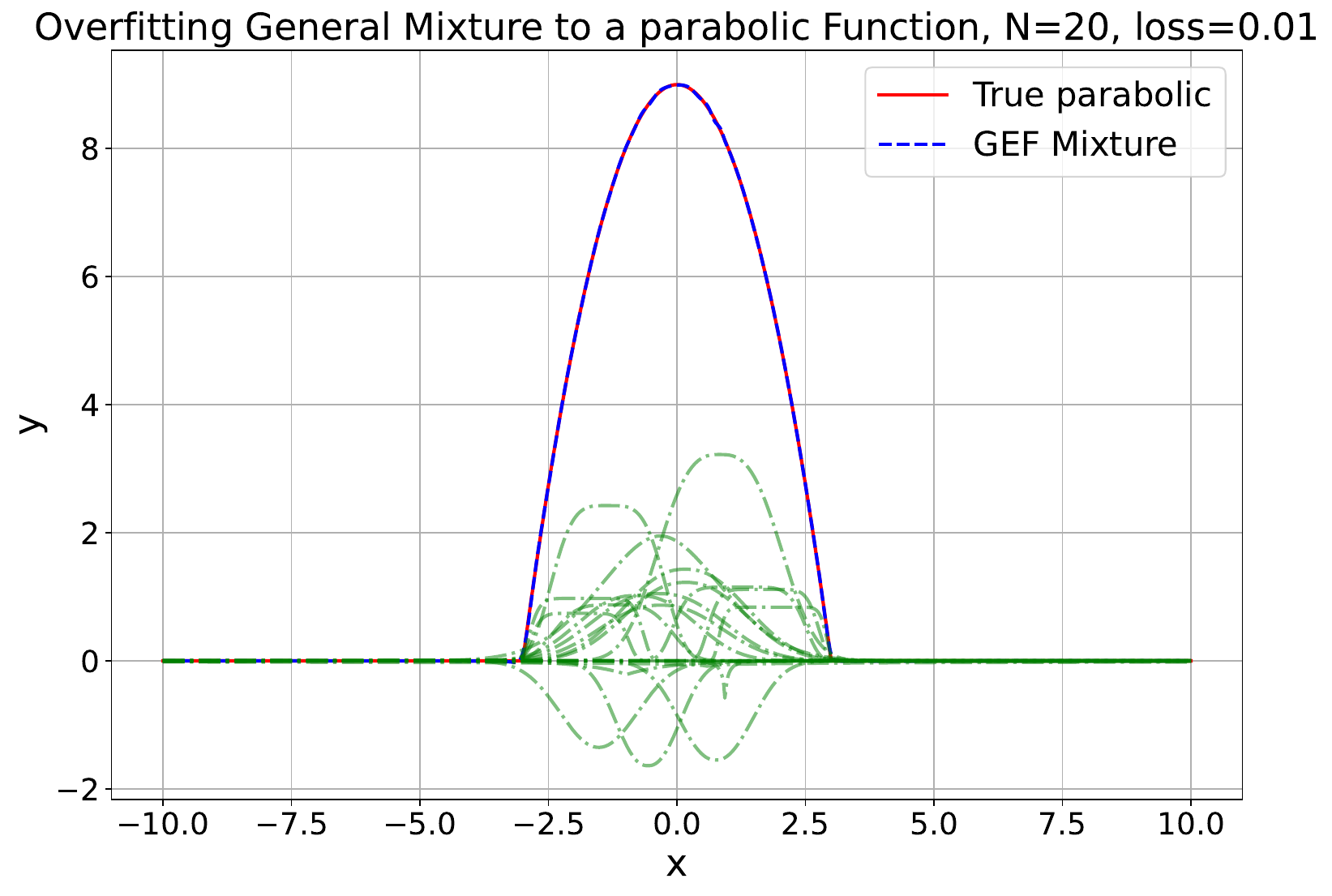}\\ 
    
    \end{tabular}
    }
    \caption{\textbf{Numerical Simulation Examples of Fitting Parabolics with Real Weights Mixtures ( N= 2, 5, 8, and 10 )}. We show some fitting examples for parabolic signals with Real weights mixtures (can be negative). The four mixtures used from left to right are Gaussians, LoG, DoG, and General mixtures. From top to bottom: N = 2, 8, and 10 components. The optimized individual components are shown in green. Some examples fail to optimize due to numerical instability in both Gaussians and GEF mixtures. Note that GEF is very efficient in fitting the parabolic with few components while LoG and DoG are more stable for a larger number of components. }
    \label{supfig:fitting_parabolic_N}
    \end{figure*}
    
\begin{figure*}[h]
    \centering
    \resizebox{1.0\linewidth}{!}{
    \begin{tabular}{cccc}
    \tabcolsep=0.01cm
    Gaussian Mixture& LoG Mixture & DoG Mixture & GEF Mixture \\ 
    \includegraphics[width=0.24\linewidth]{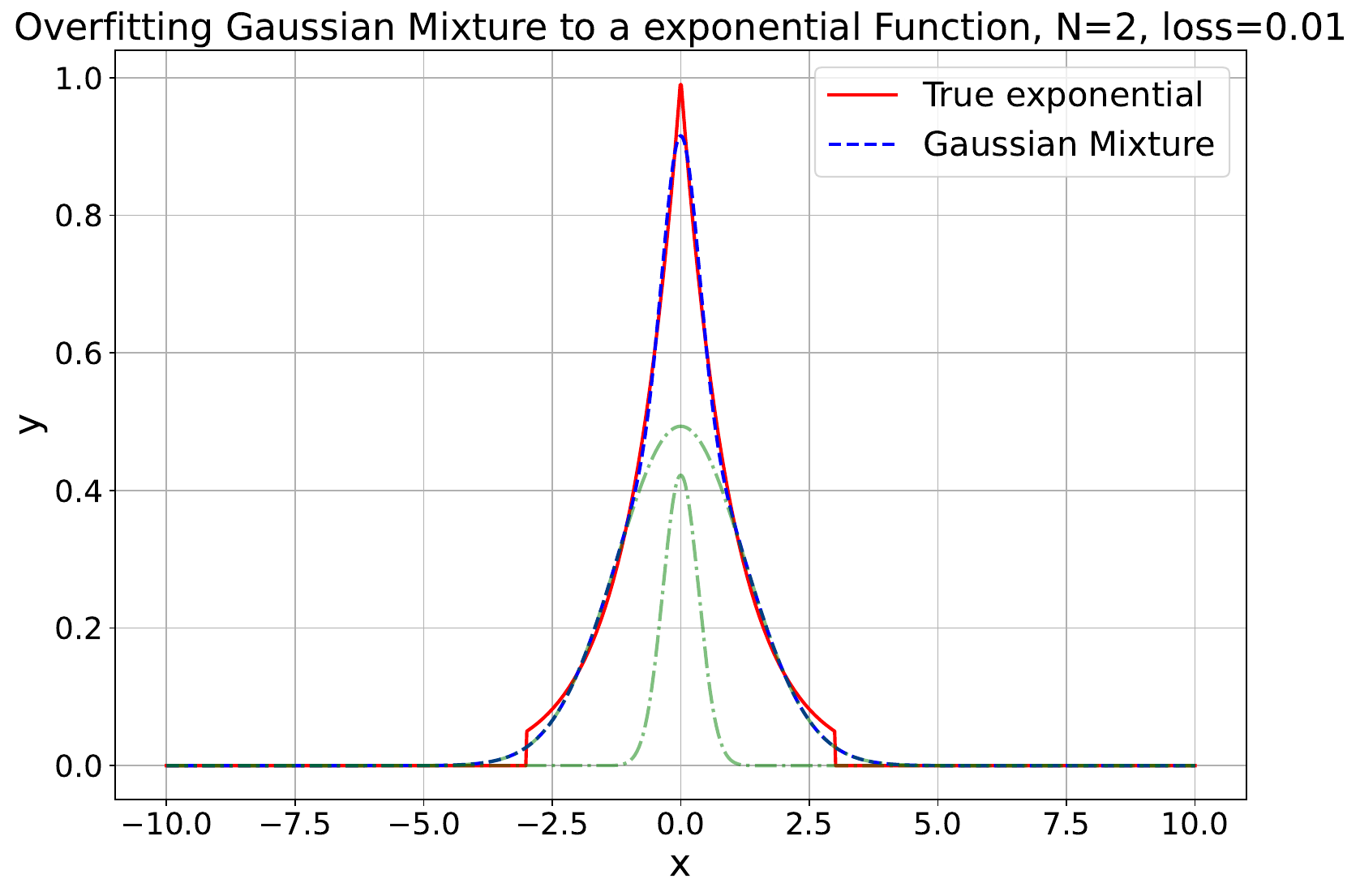} & 
    \includegraphics[width=0.24\linewidth]{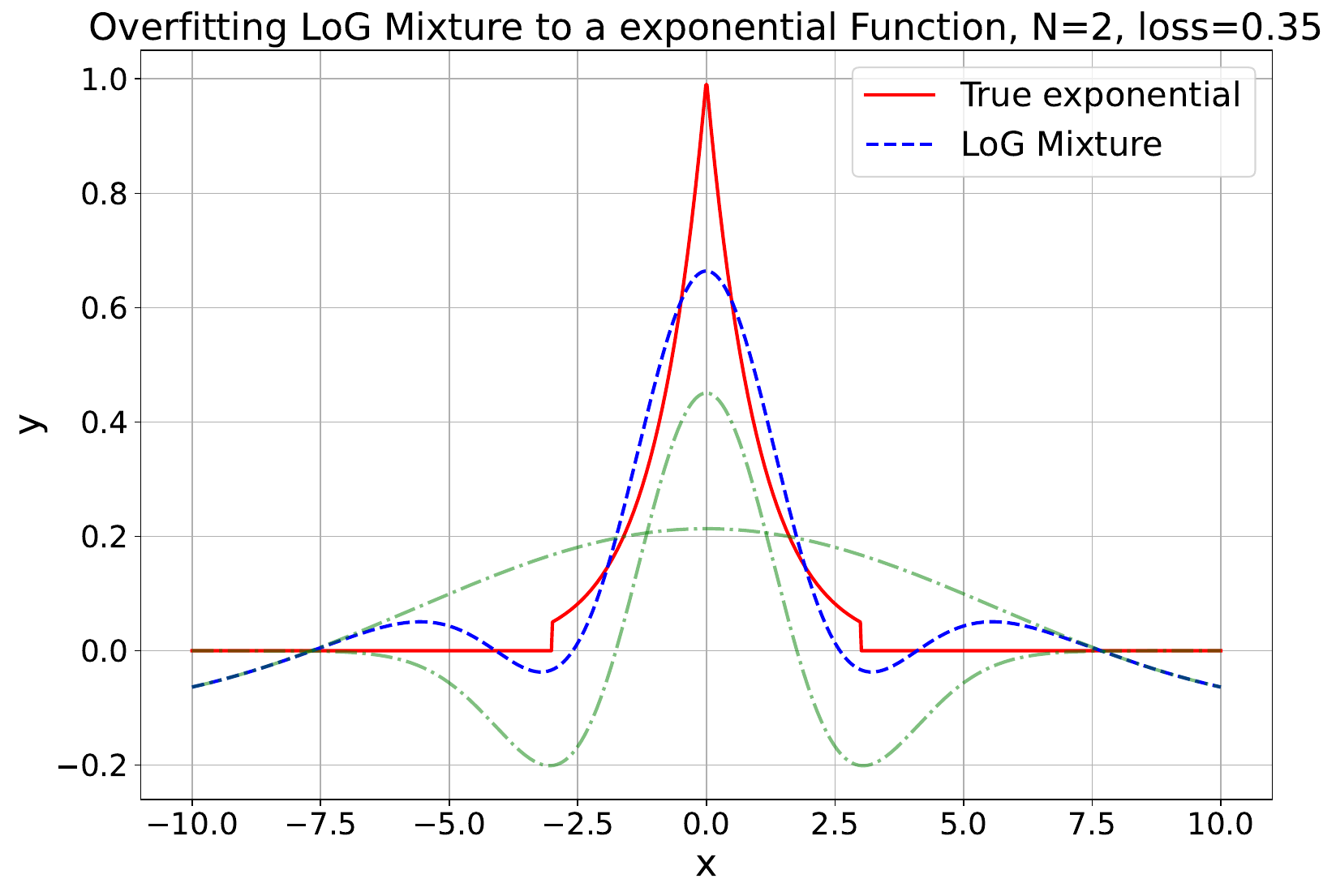} & 
    \includegraphics[width=0.24\linewidth]{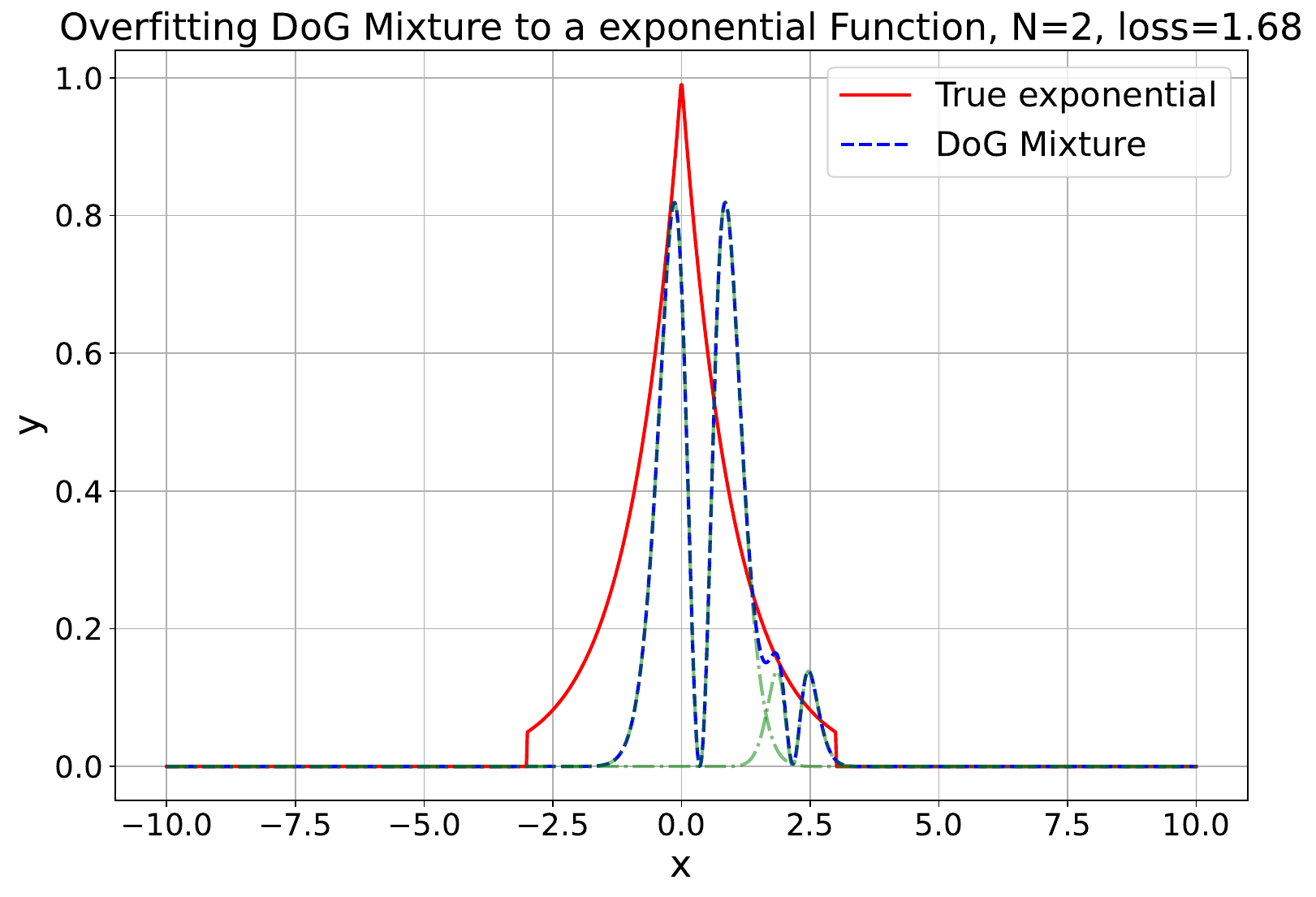} & 
    \includegraphics[width=0.24\linewidth]{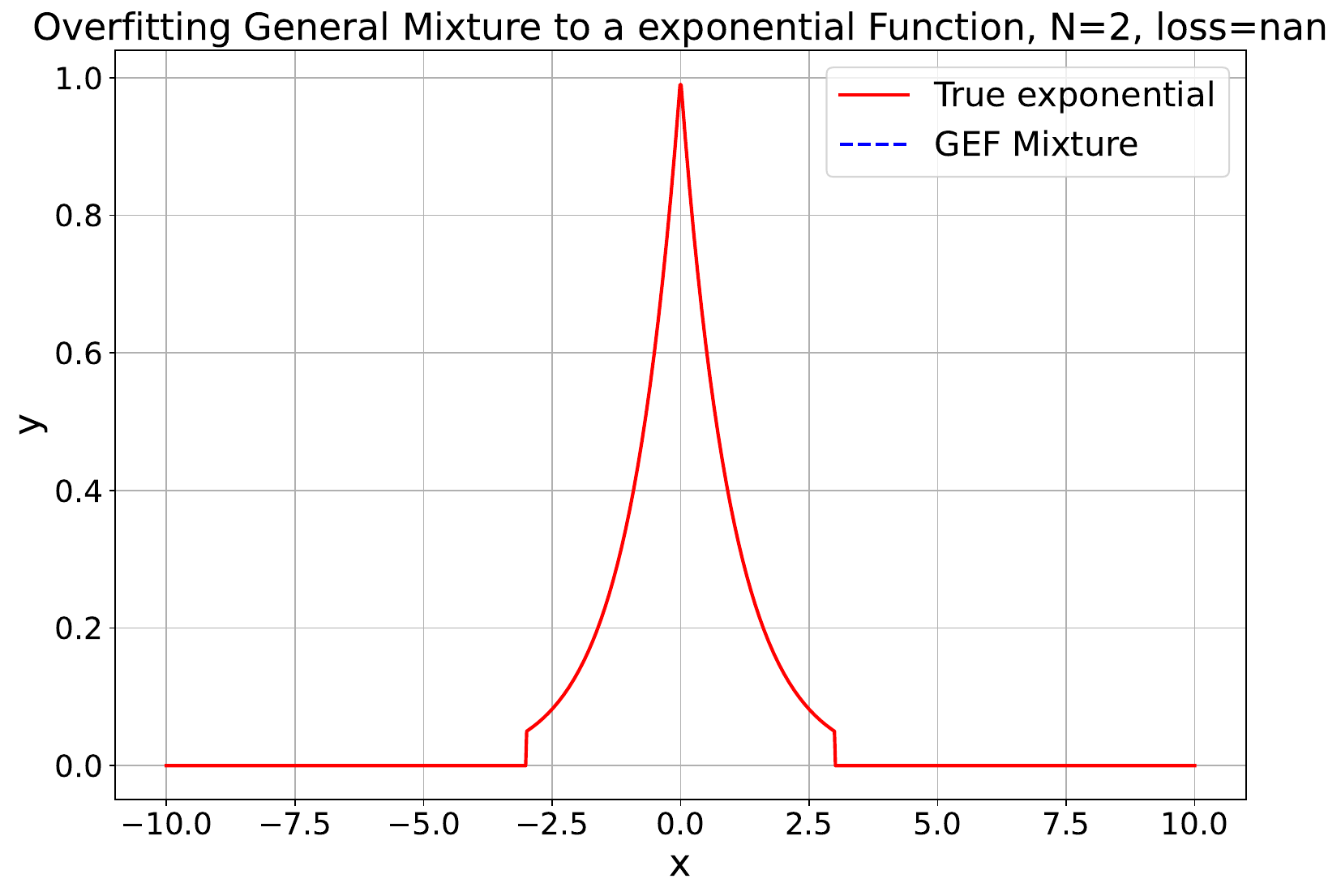}\\ 
    \includegraphics[width=0.24\linewidth]{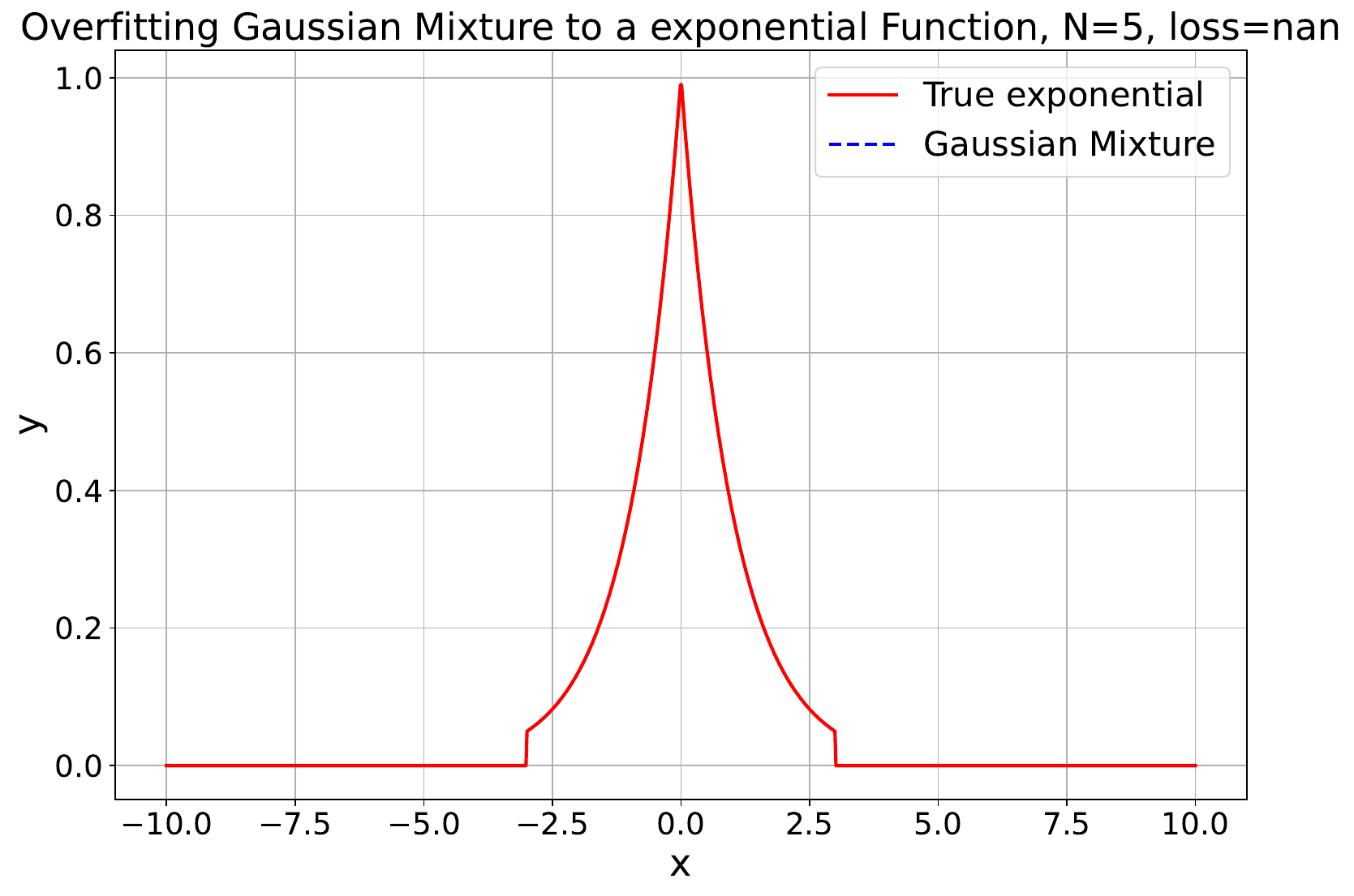} & 
    \includegraphics[width=0.24\linewidth]{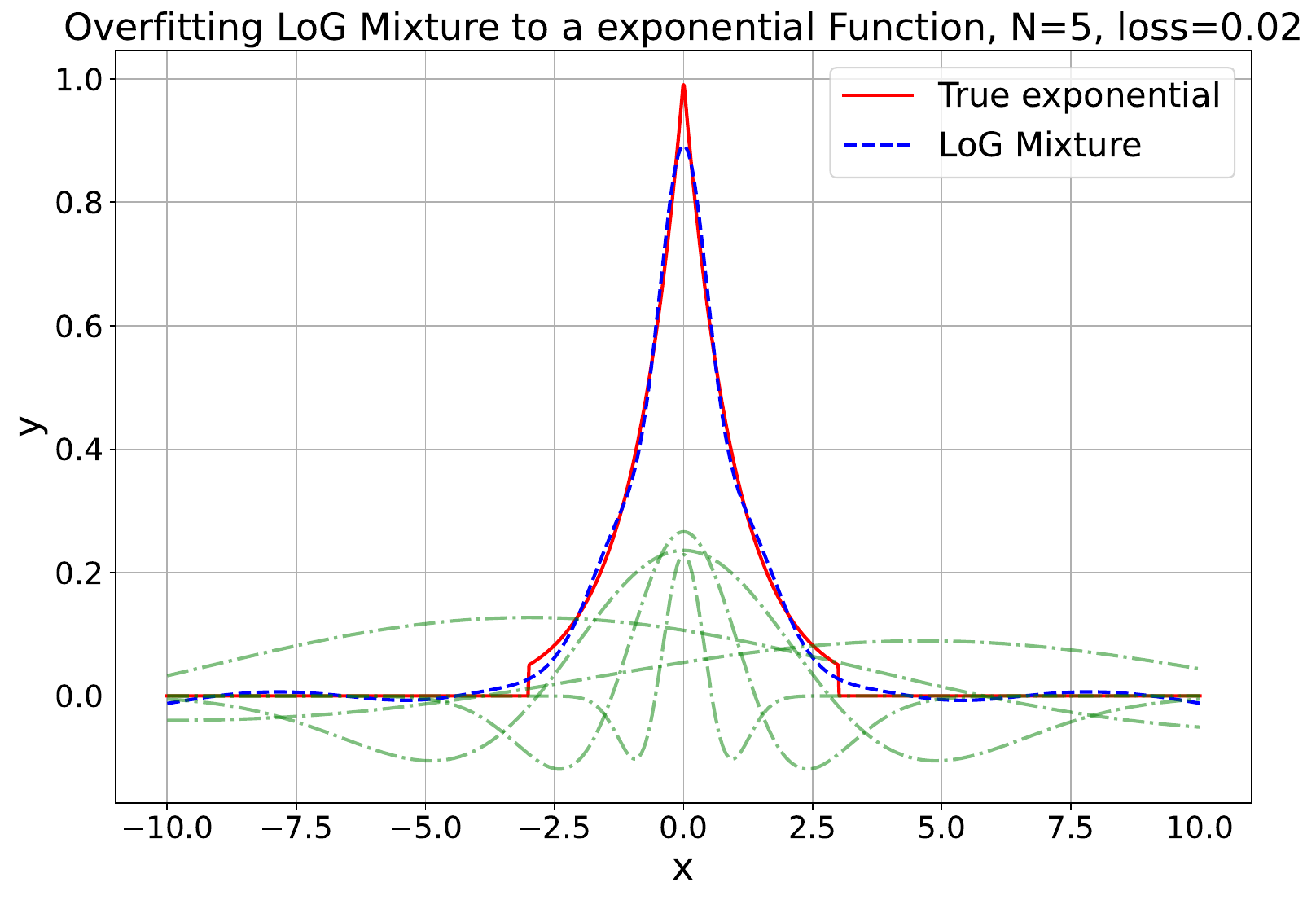} & 
    \includegraphics[width=0.24\linewidth]{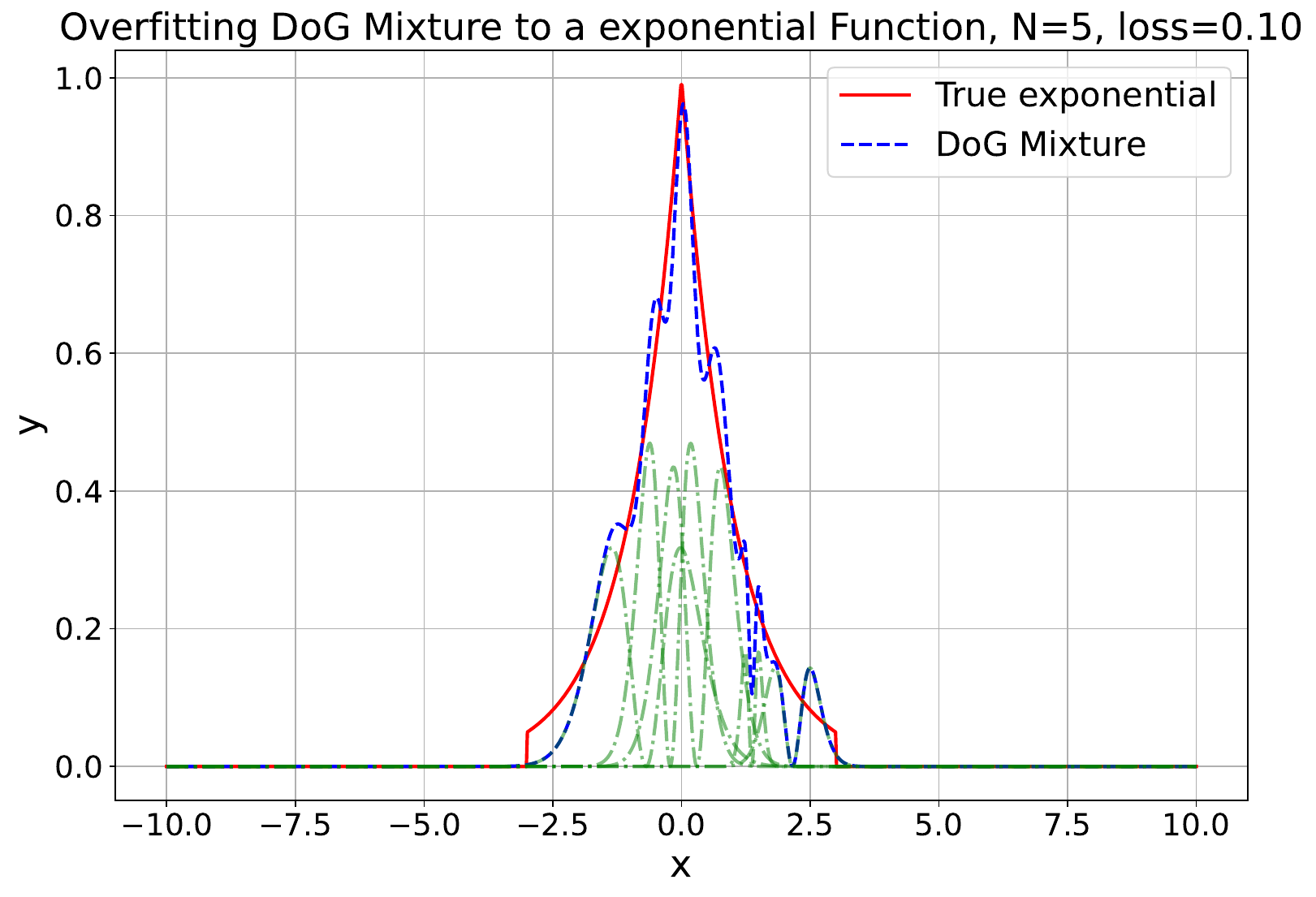} & 
    \includegraphics[width=0.24\linewidth]{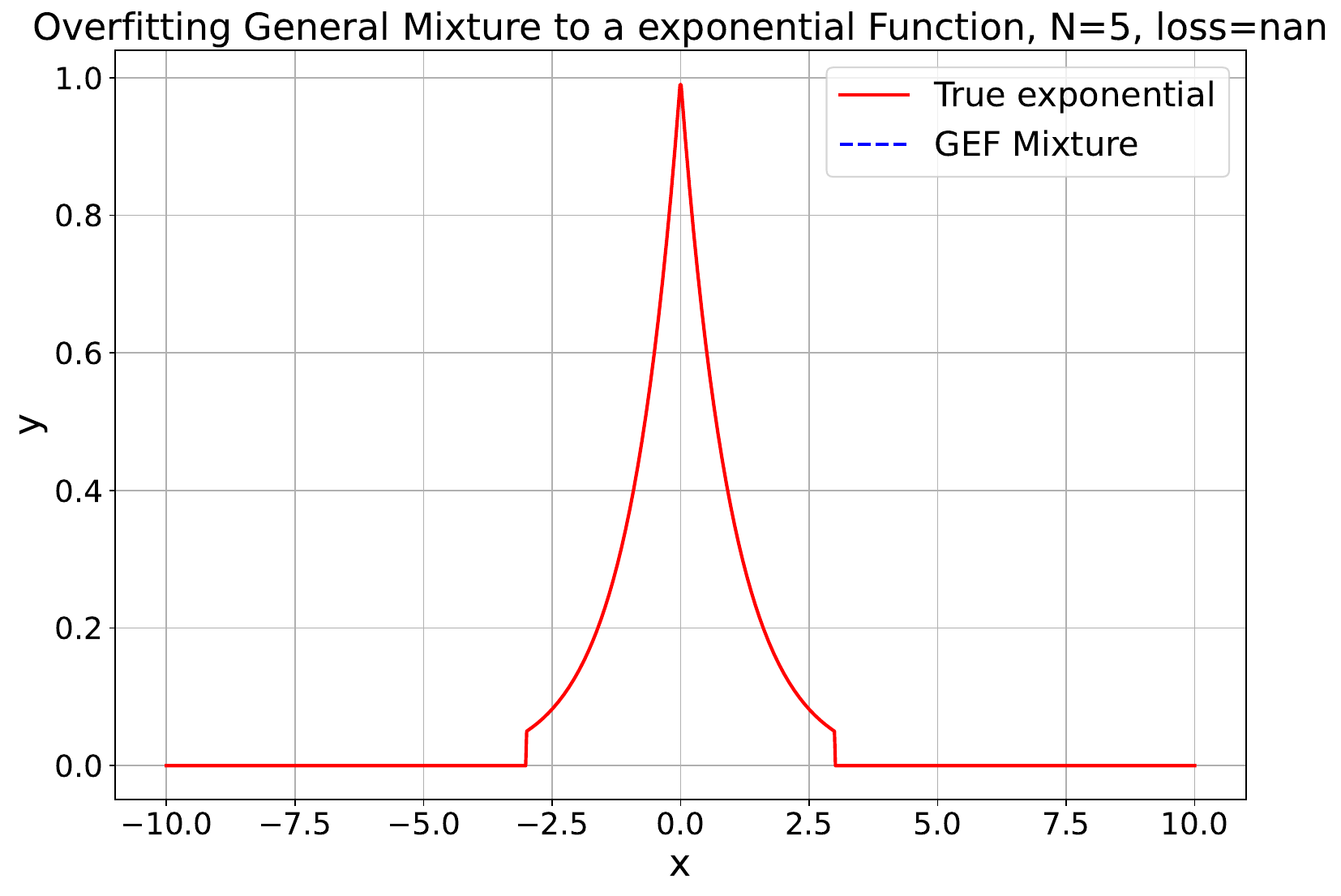}\\ 
    \includegraphics[width=0.24\linewidth]{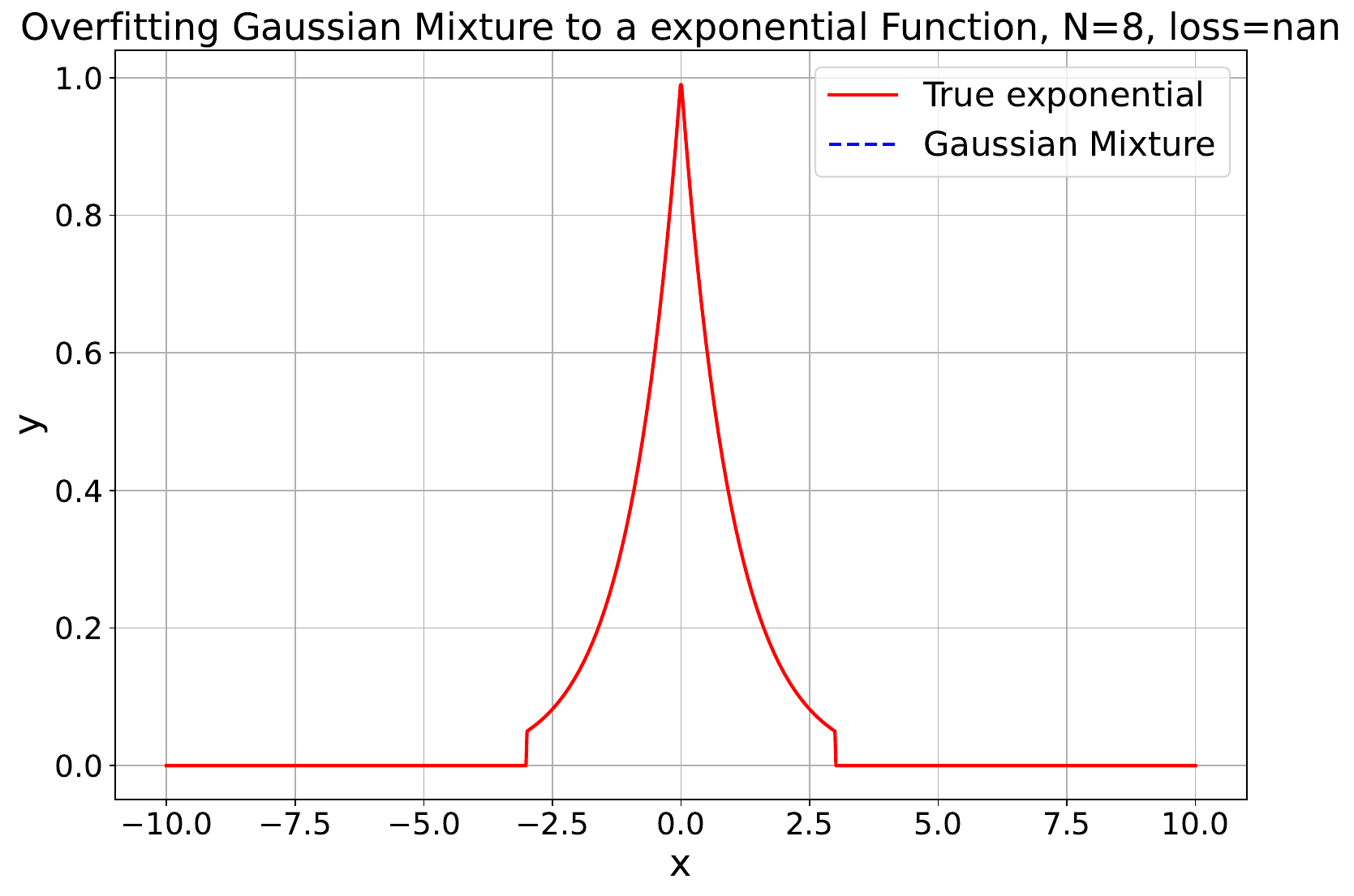} & 
    \includegraphics[width=0.24\linewidth]{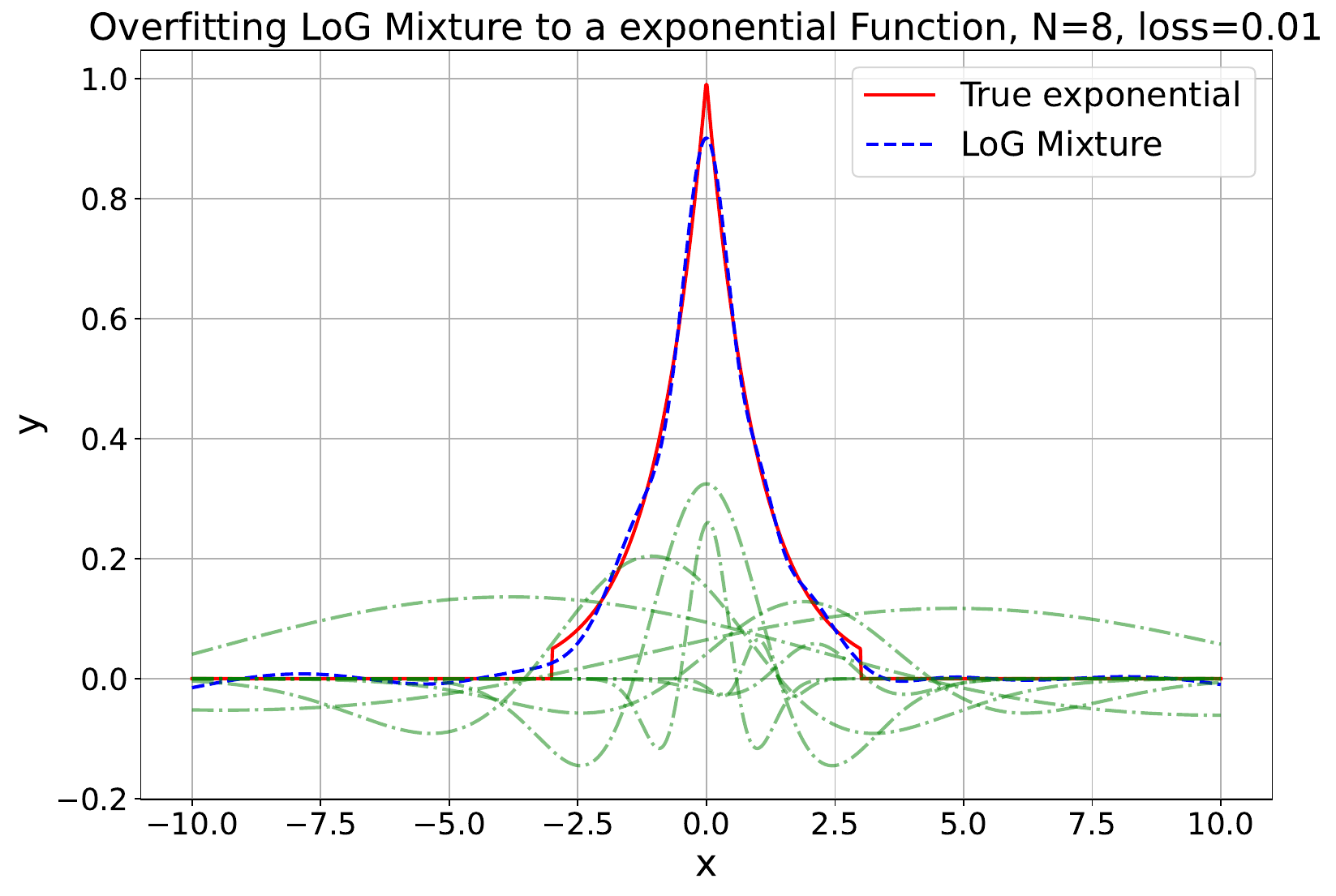} & 
    \includegraphics[width=0.24\linewidth]{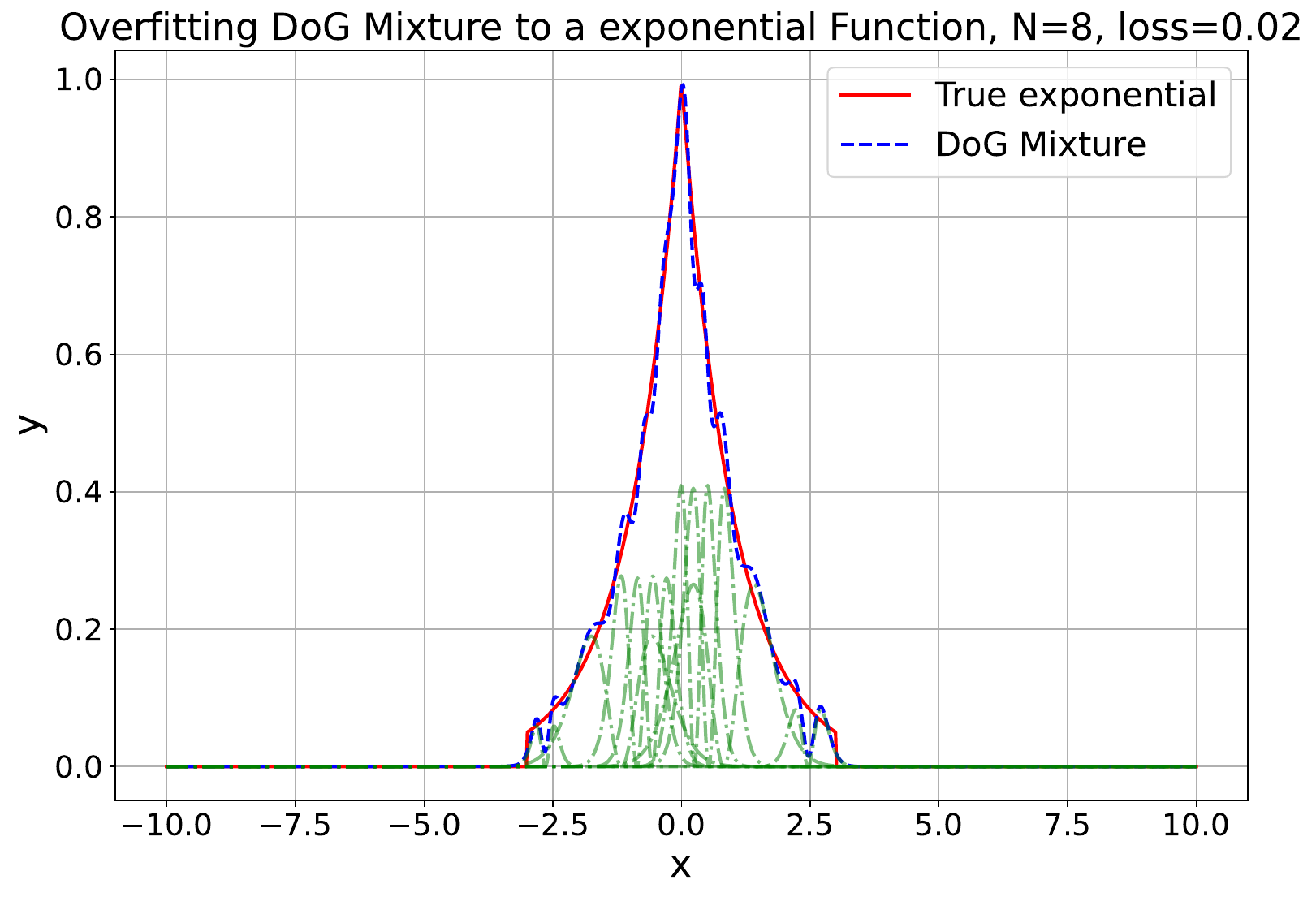} & 
    \includegraphics[width=0.24\linewidth]{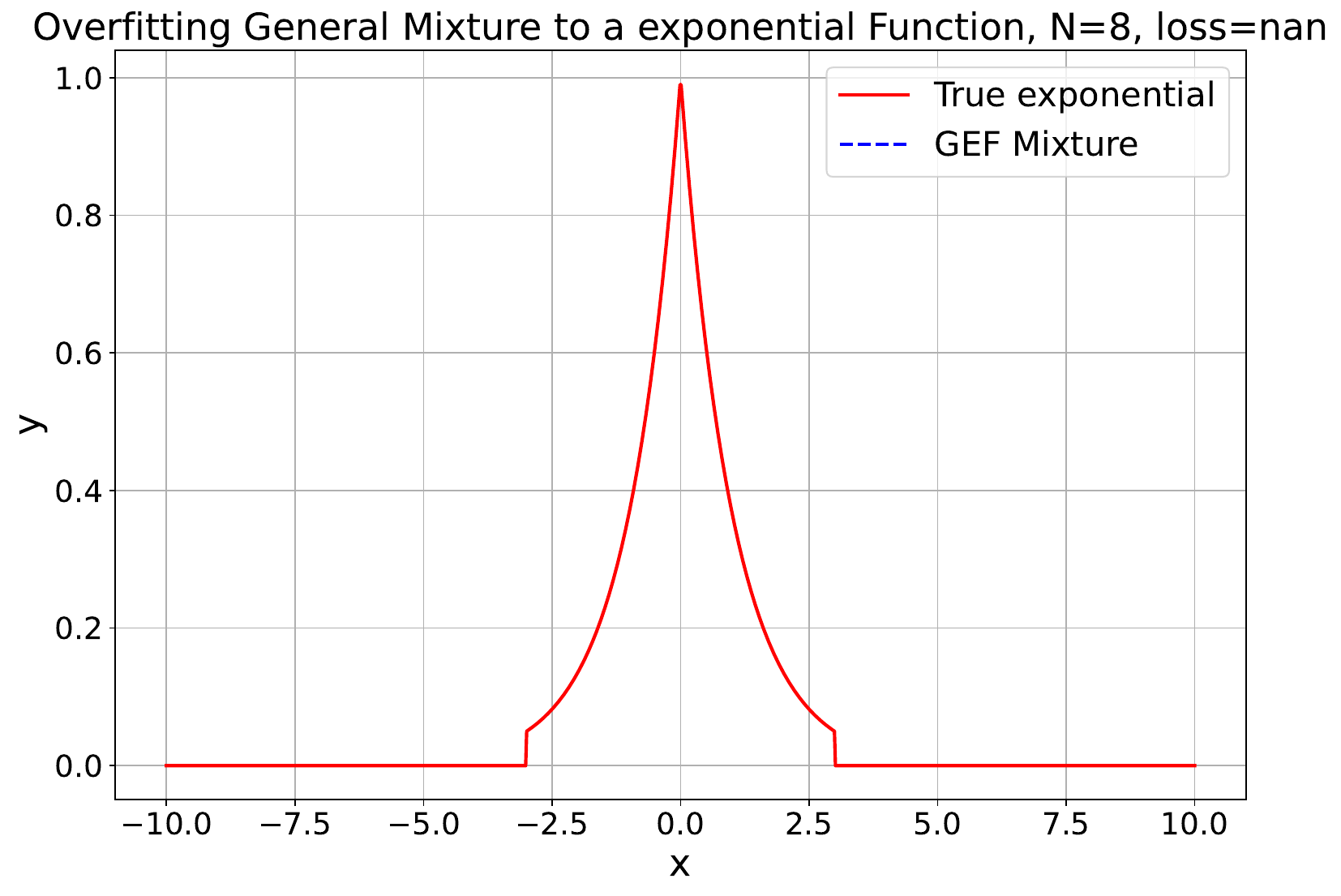}\\ 
    \includegraphics[width=0.24\linewidth]{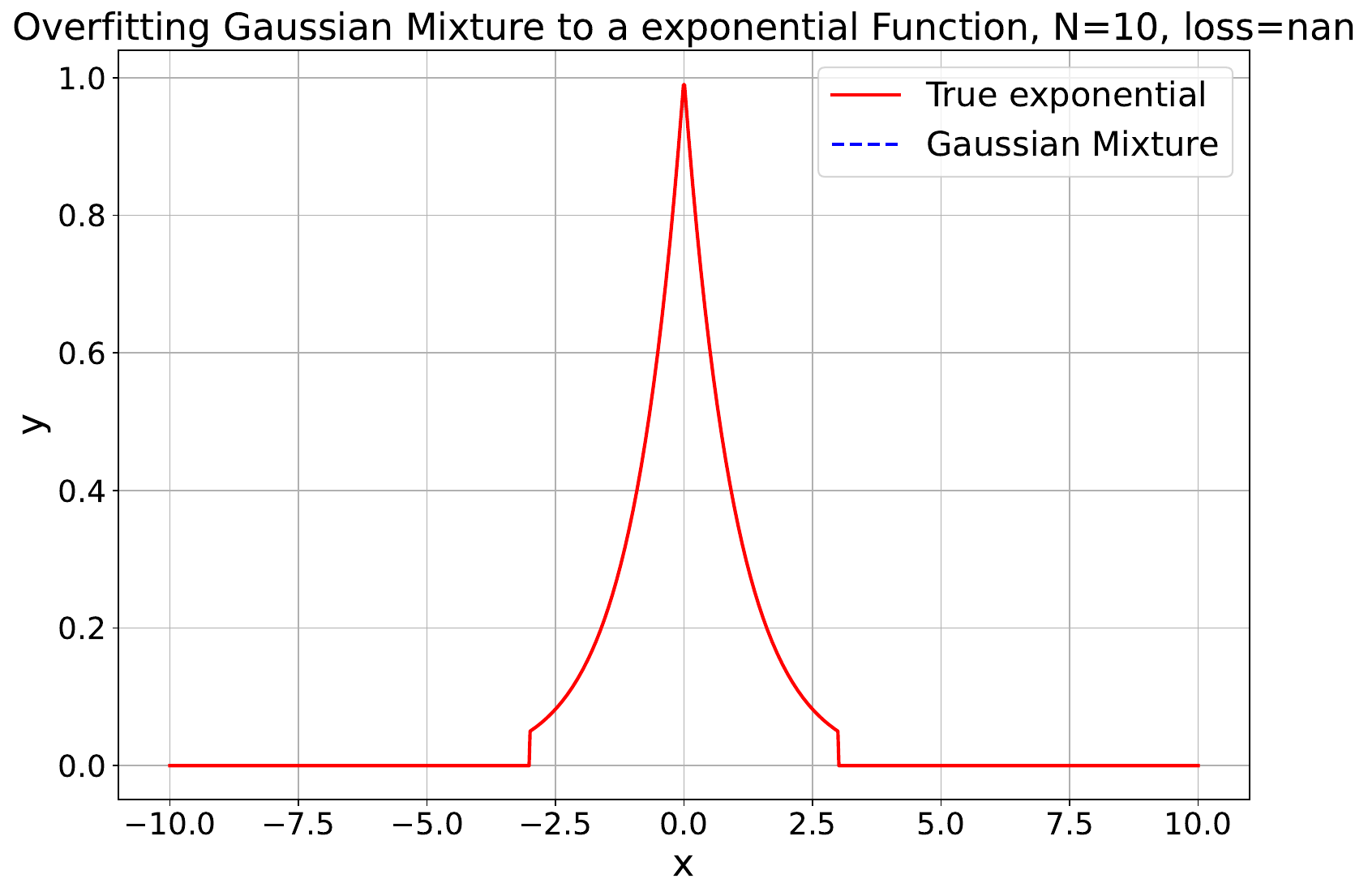} & 
    \includegraphics[width=0.24\linewidth]{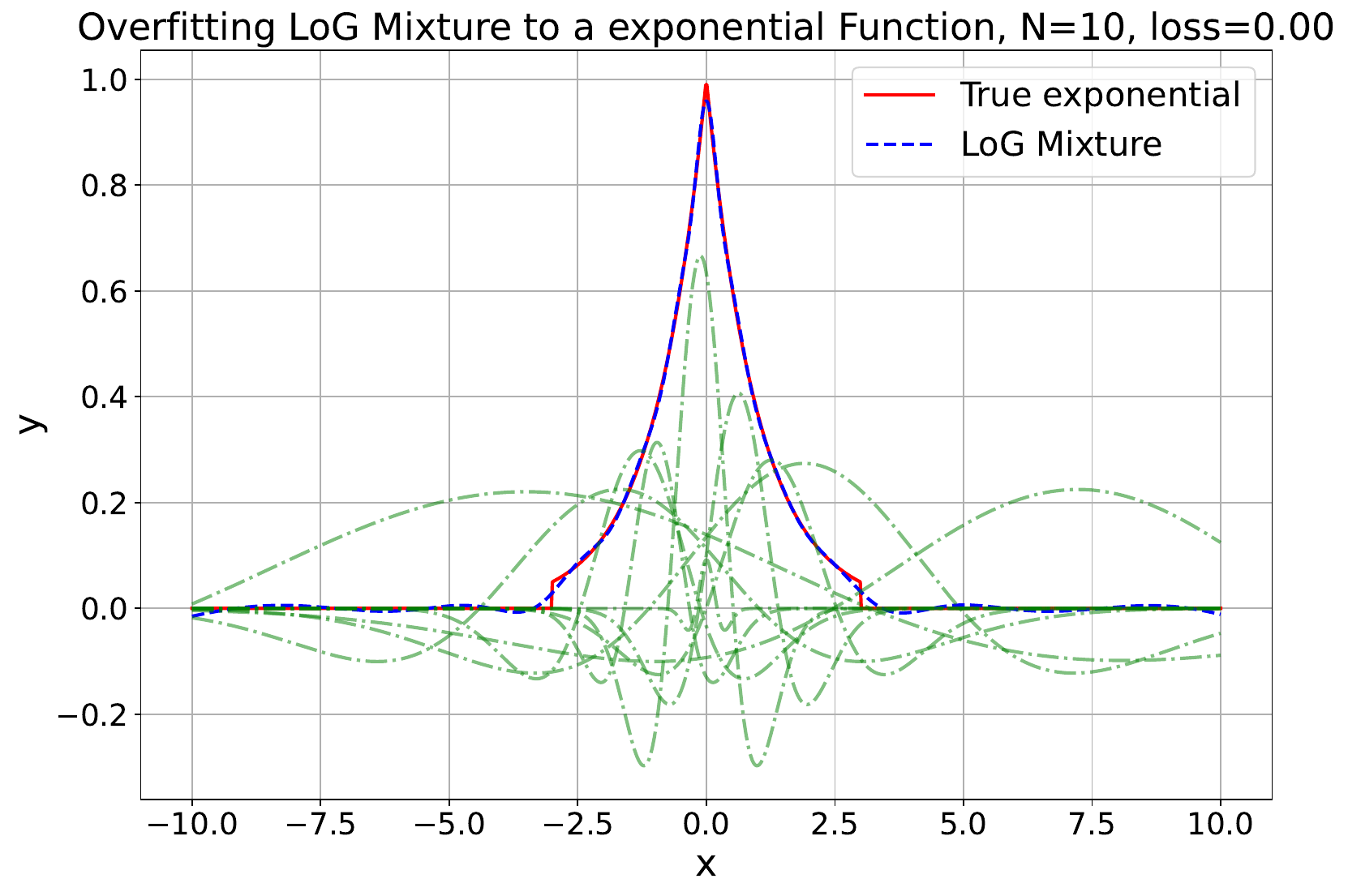} & 
    \includegraphics[width=0.24\linewidth]{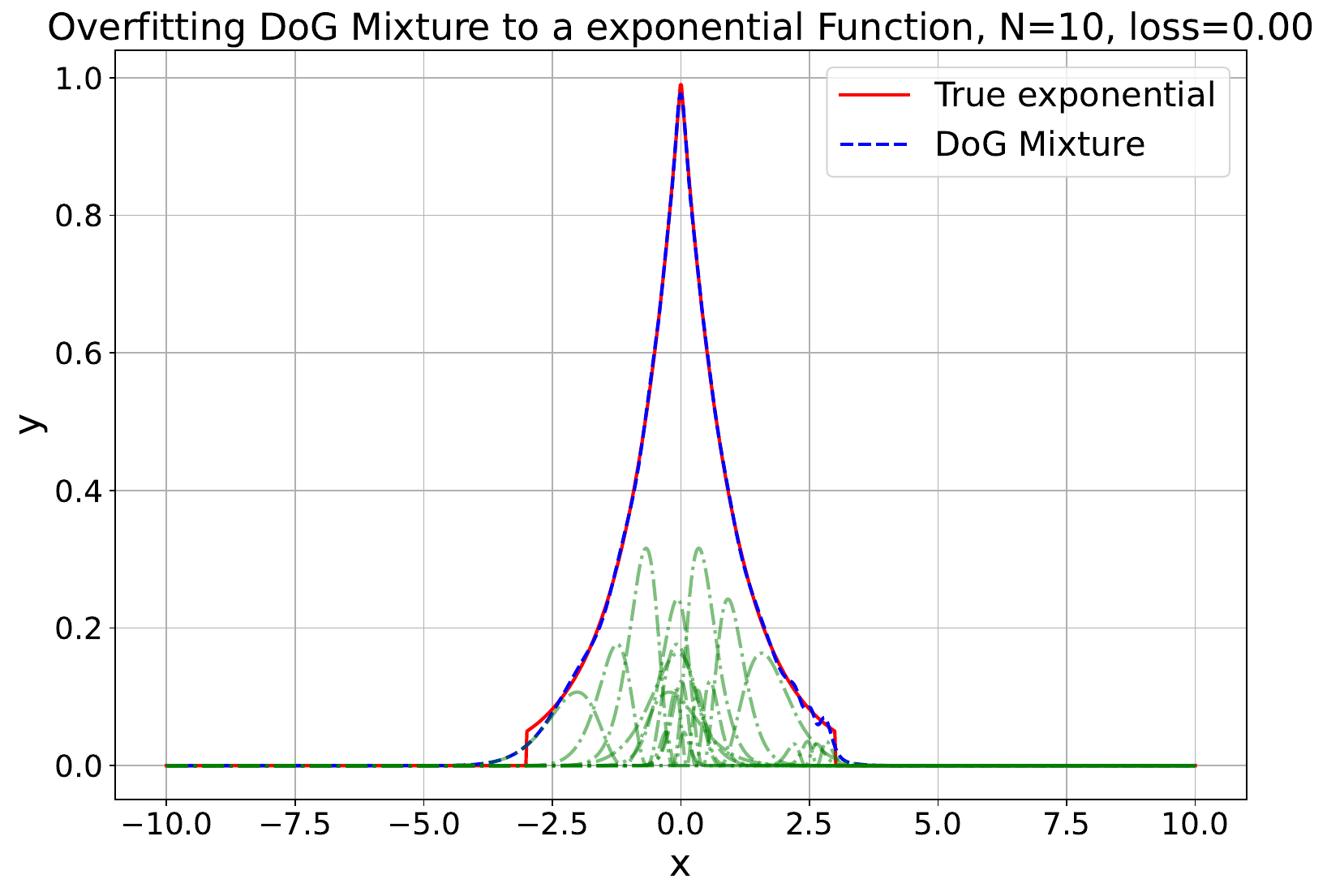} & 
    \includegraphics[width=0.24\linewidth]{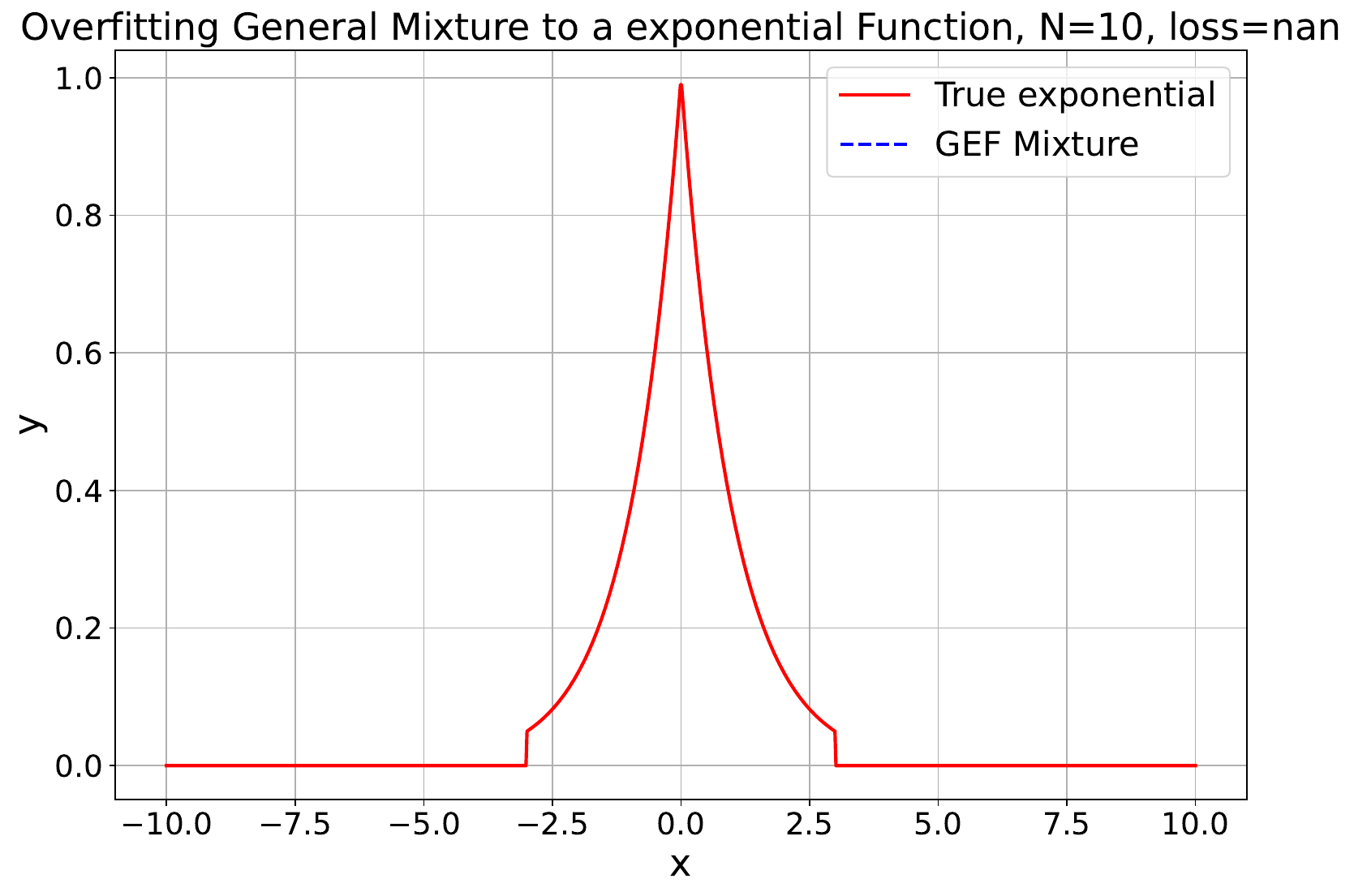}\\ 
    \includegraphics[width=0.24\linewidth]{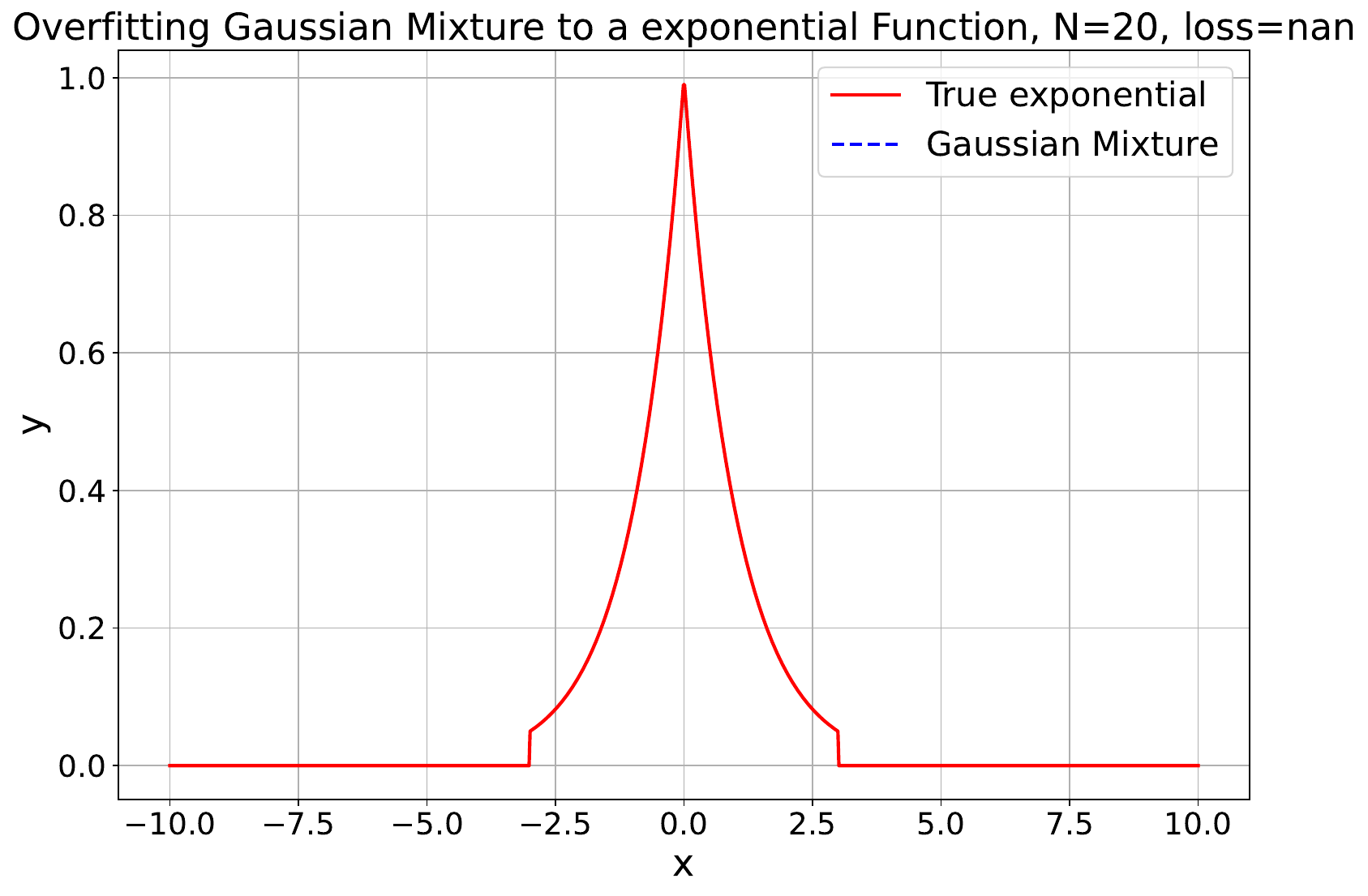} & 
    \includegraphics[width=0.24\linewidth]{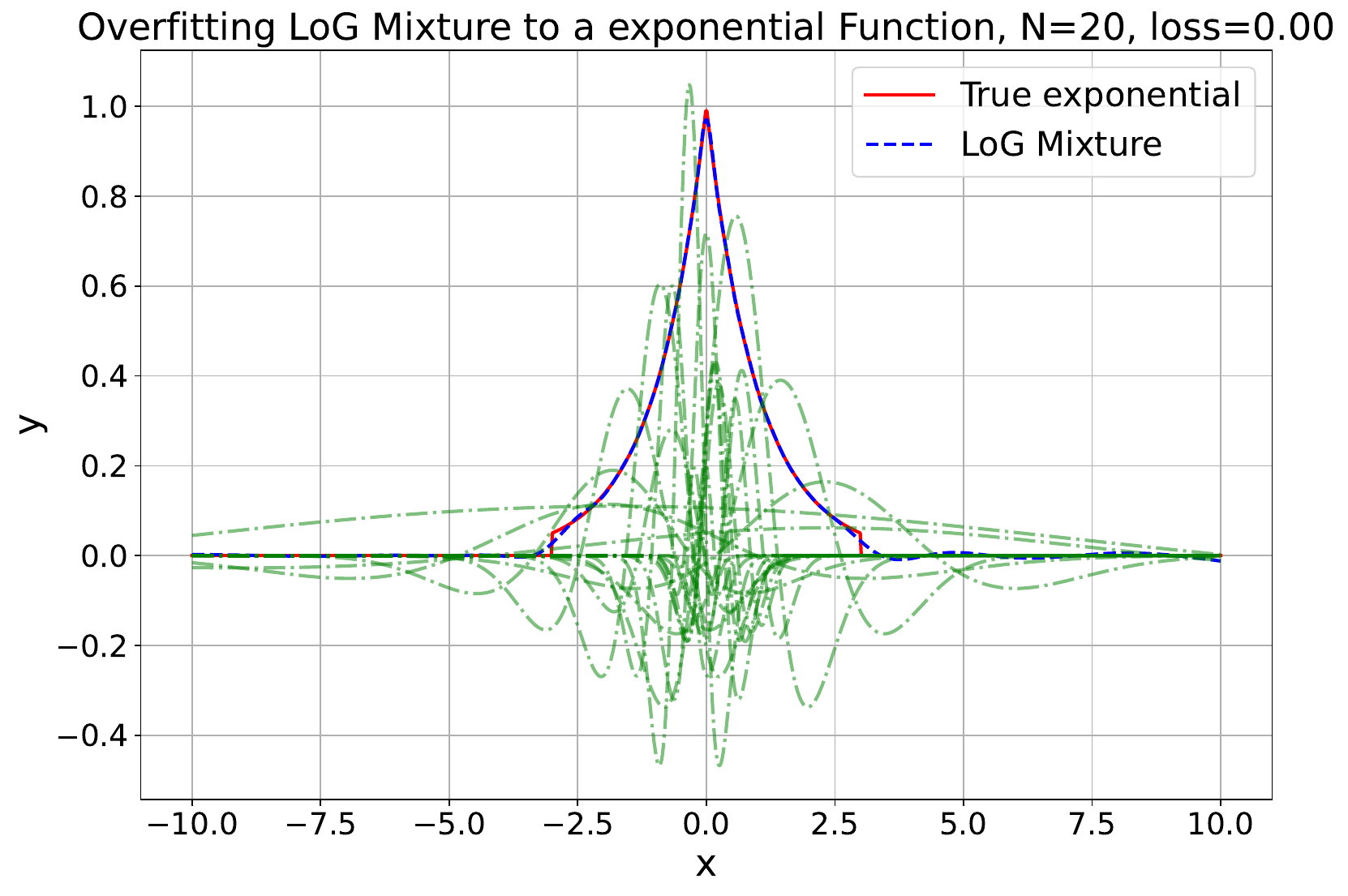} & 
    \includegraphics[width=0.24\linewidth]{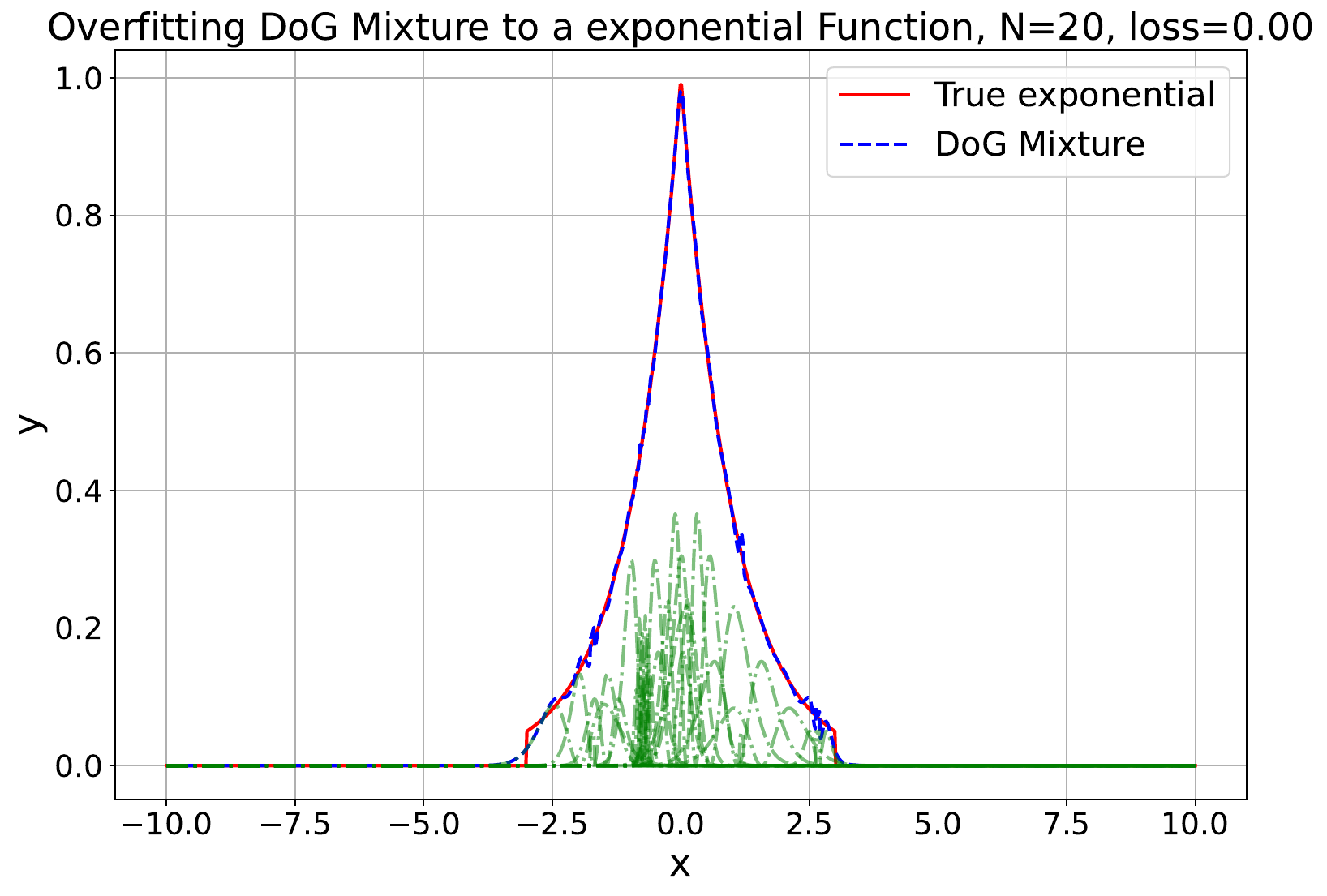} & 
    \includegraphics[width=0.24\linewidth]{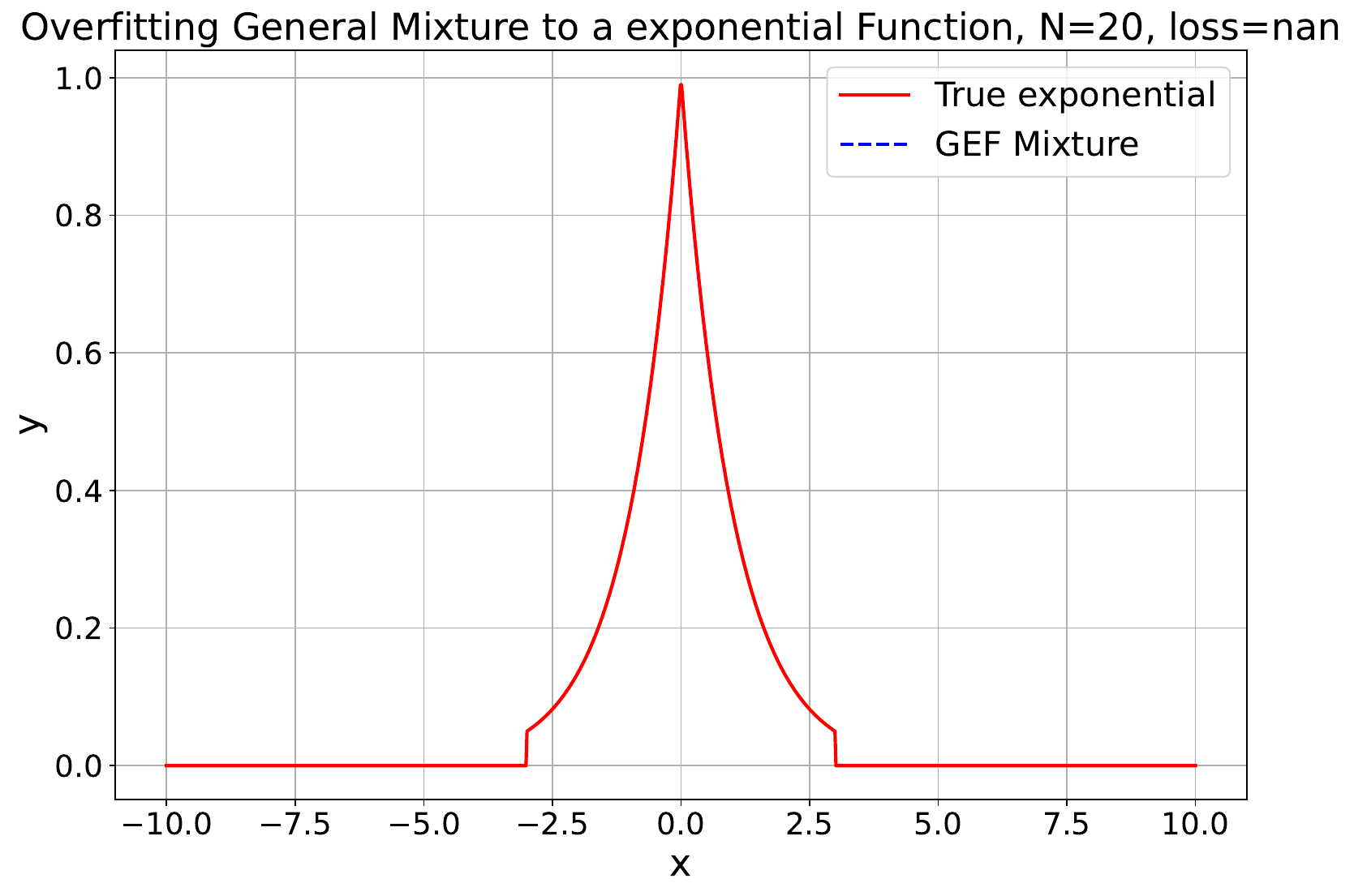}\\ 
    
    \end{tabular}
    }
    \caption{\textbf{Numerical Simulation Examples of Fitting Exponentials with Positive Weights Mixtures ( N= 2, 5, 8, and 10 )}. We show some fitting examples for exponential signals with positive weight mixtures. The four mixtures used from left to right are Gaussians, LoG, DoG, and General mixtures. From top to bottom: N = 2, 8, and 10 components. The optimized individual components are shown in green. Some examples fail to optimize due to numerical instability in both Gaussians and GEF mixtures. Note that GEF is very efficient in fitting the exponential with few components while LoG and DoG are more stable for a larger number of components. }
    \label{supfig:fitting_exponential_p}
    \end{figure*}
    
\begin{figure*}[h]
    \centering
    \resizebox{1.0\linewidth}{!}{
    \begin{tabular}{cccc}
    \tabcolsep=0.01cm
    Gaussian Mixture& LoG Mixture & DoG Mixture & GEF Mixture \\ 
    \includegraphics[width=0.24\linewidth]{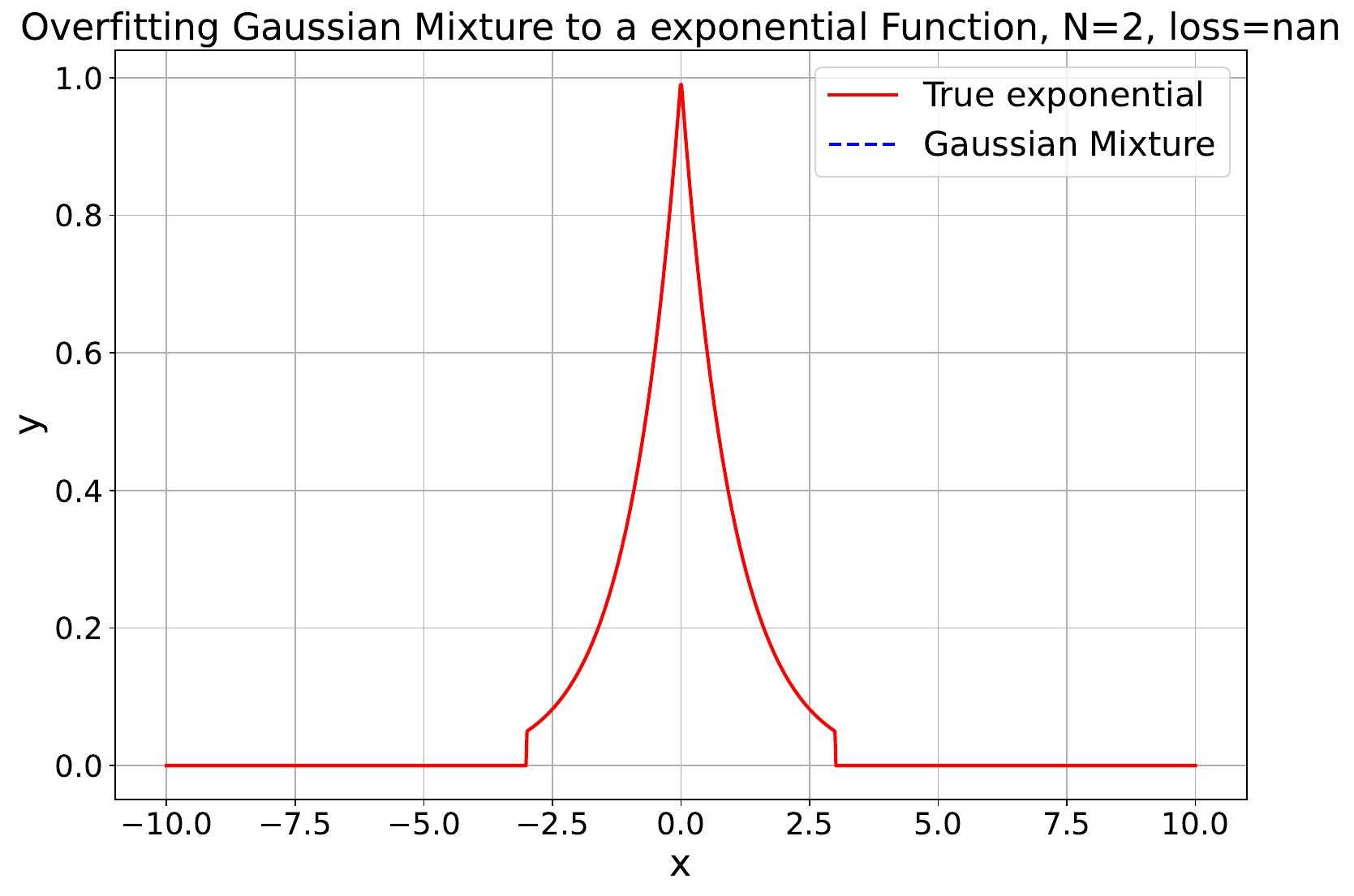} & 
    \includegraphics[width=0.24\linewidth]{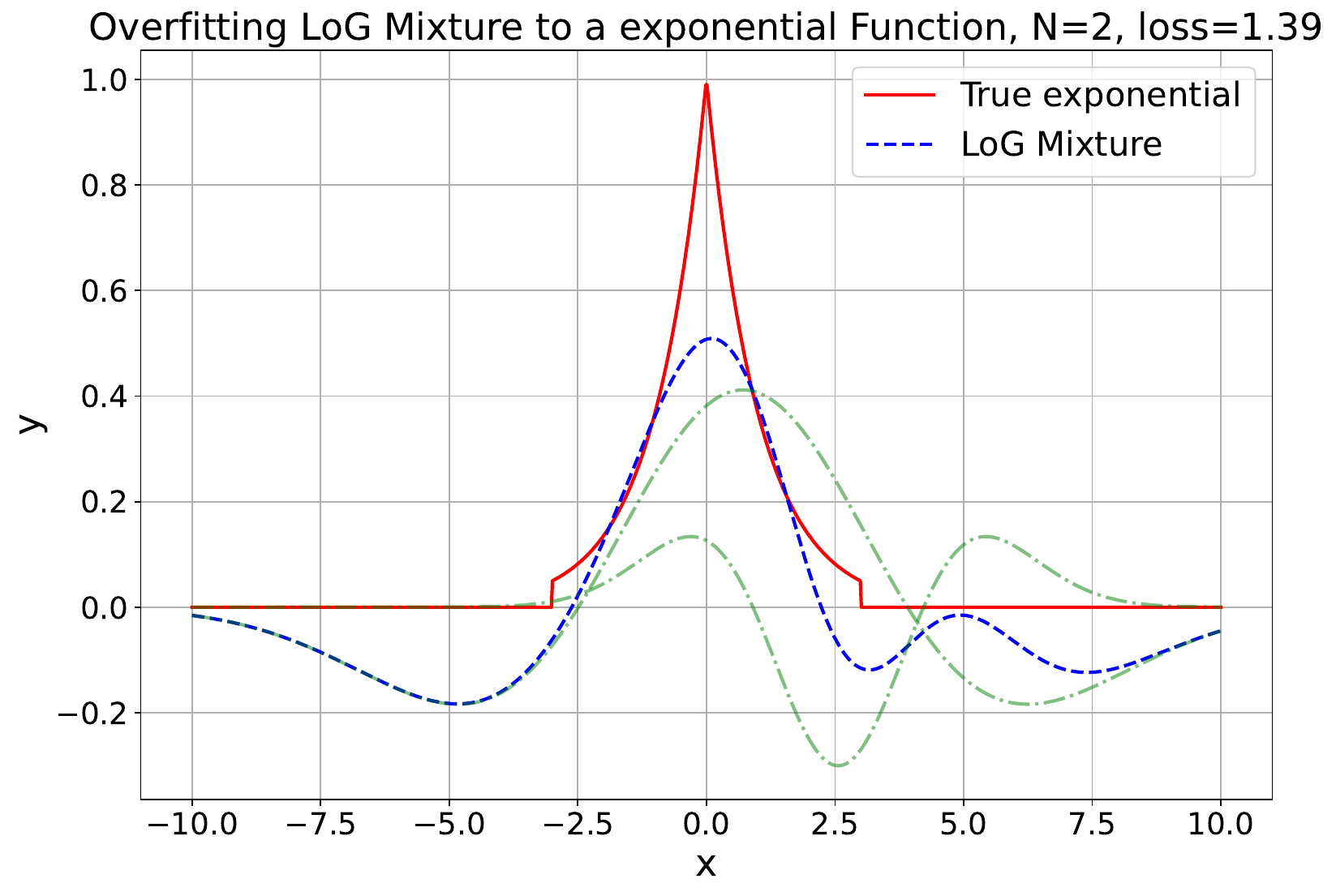} & 
    \includegraphics[width=0.24\linewidth]{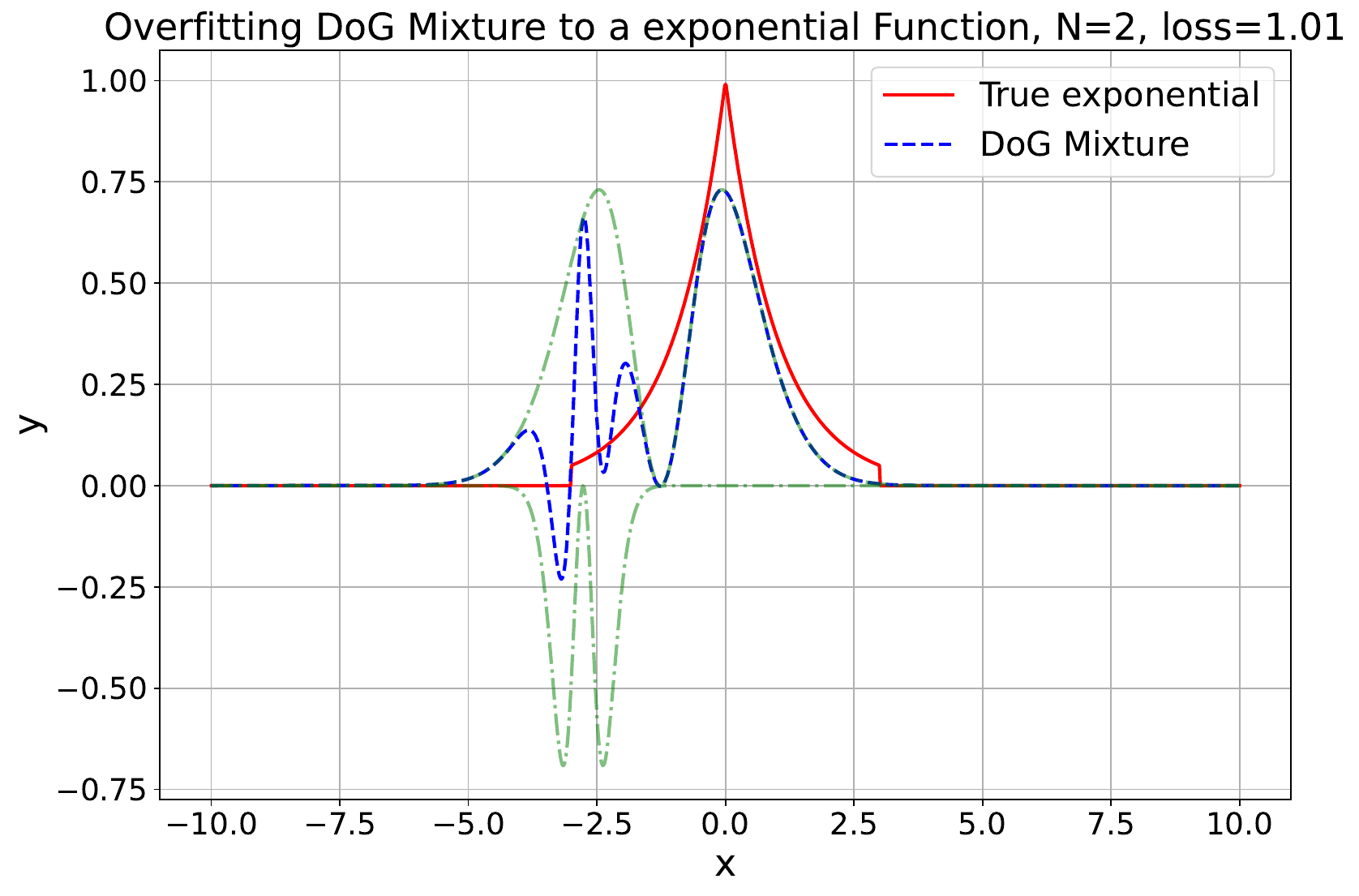} & 
    \includegraphics[width=0.24\linewidth]{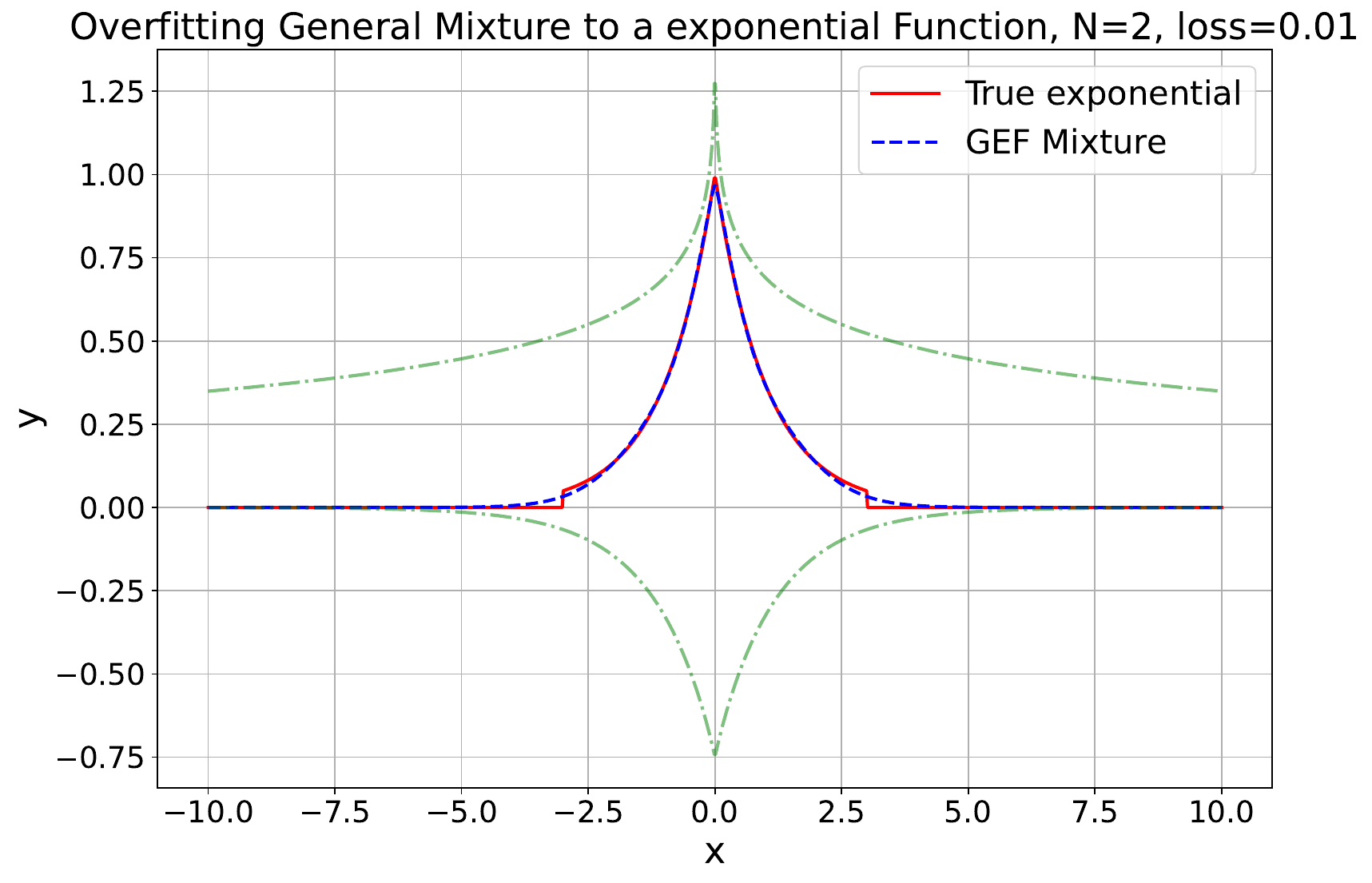}\\ 
    \includegraphics[width=0.24\linewidth]{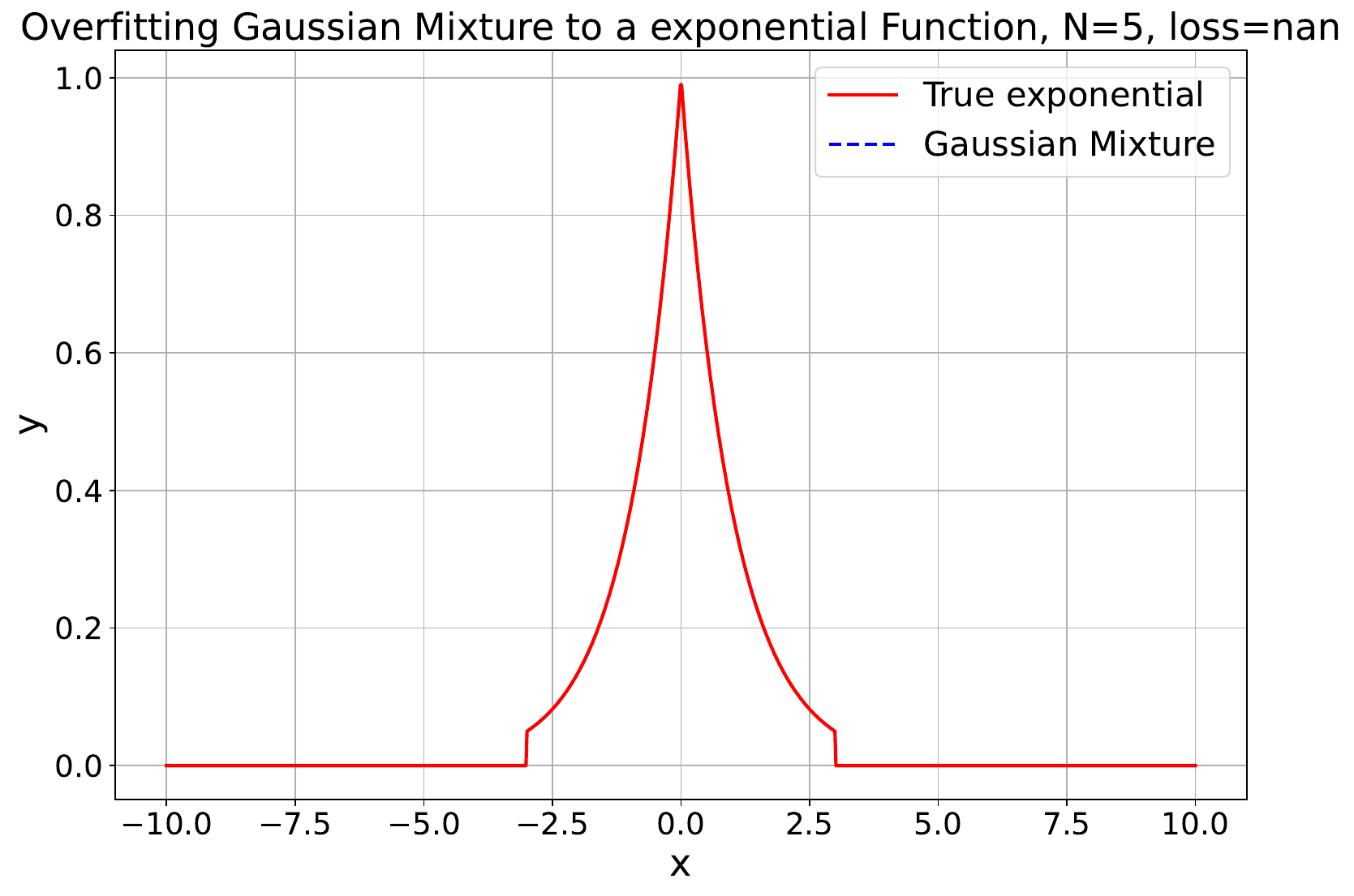} & 
    \includegraphics[width=0.24\linewidth]{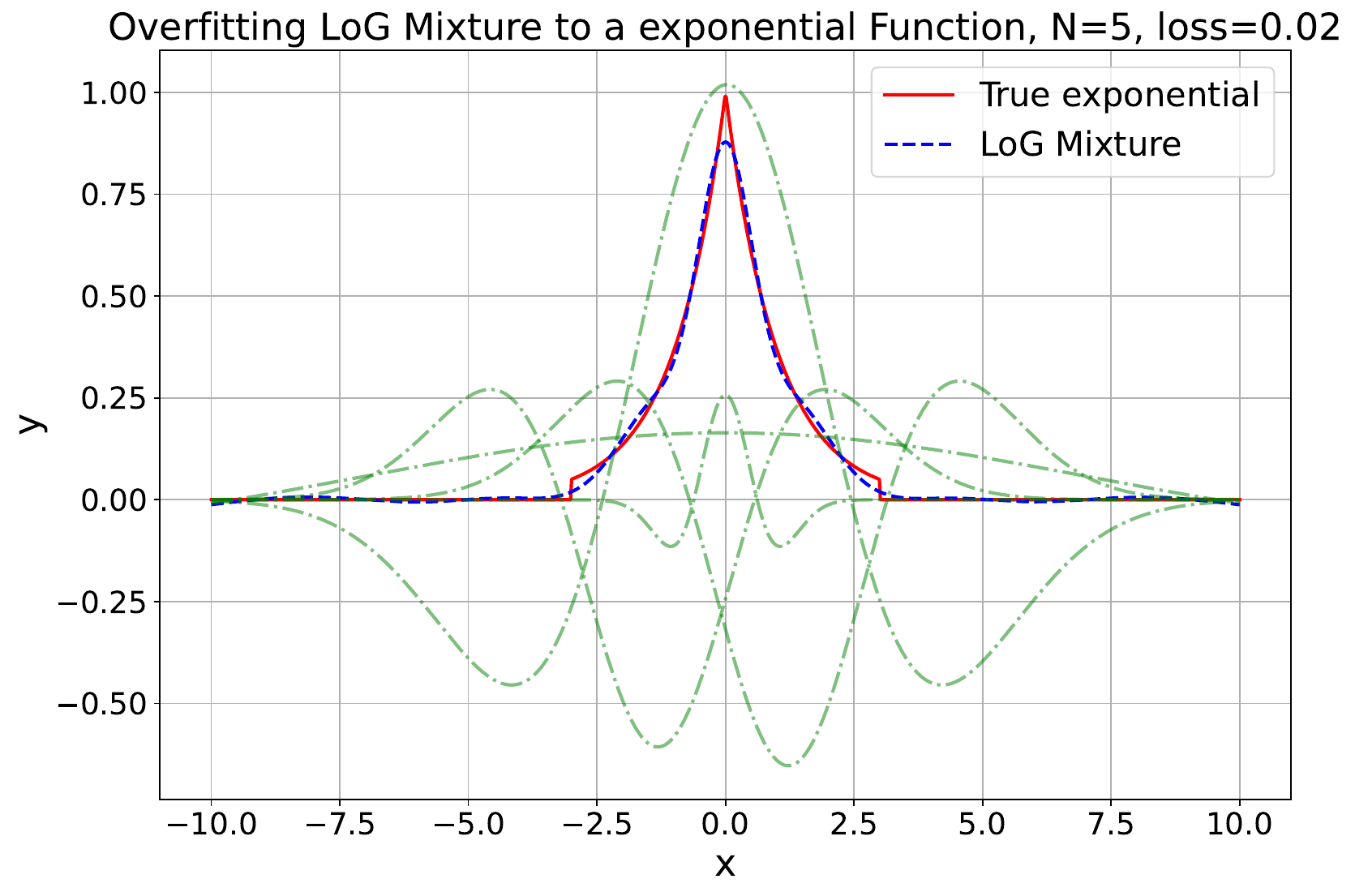} & 
    \includegraphics[width=0.24\linewidth]{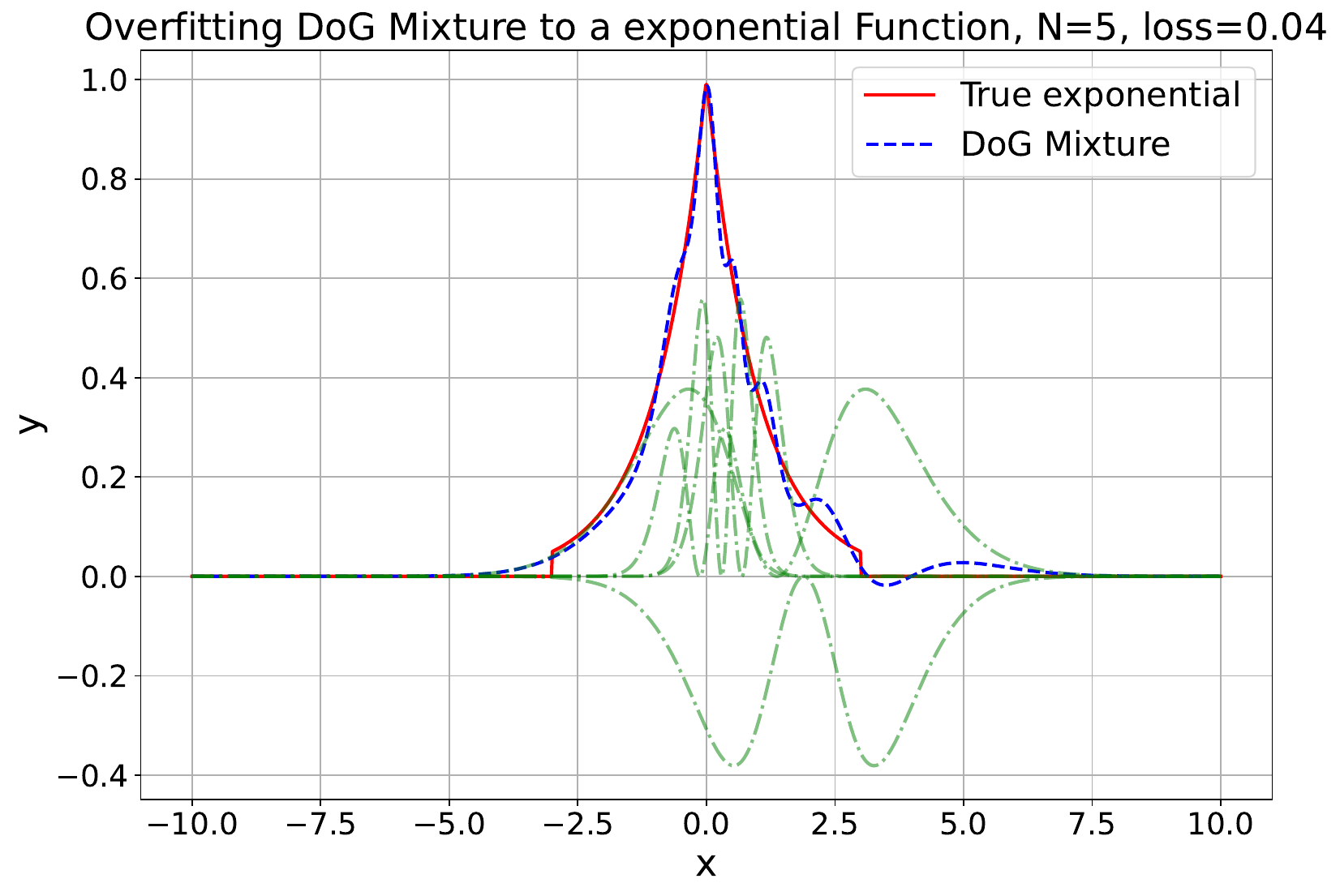} & 
    \includegraphics[width=0.24\linewidth]{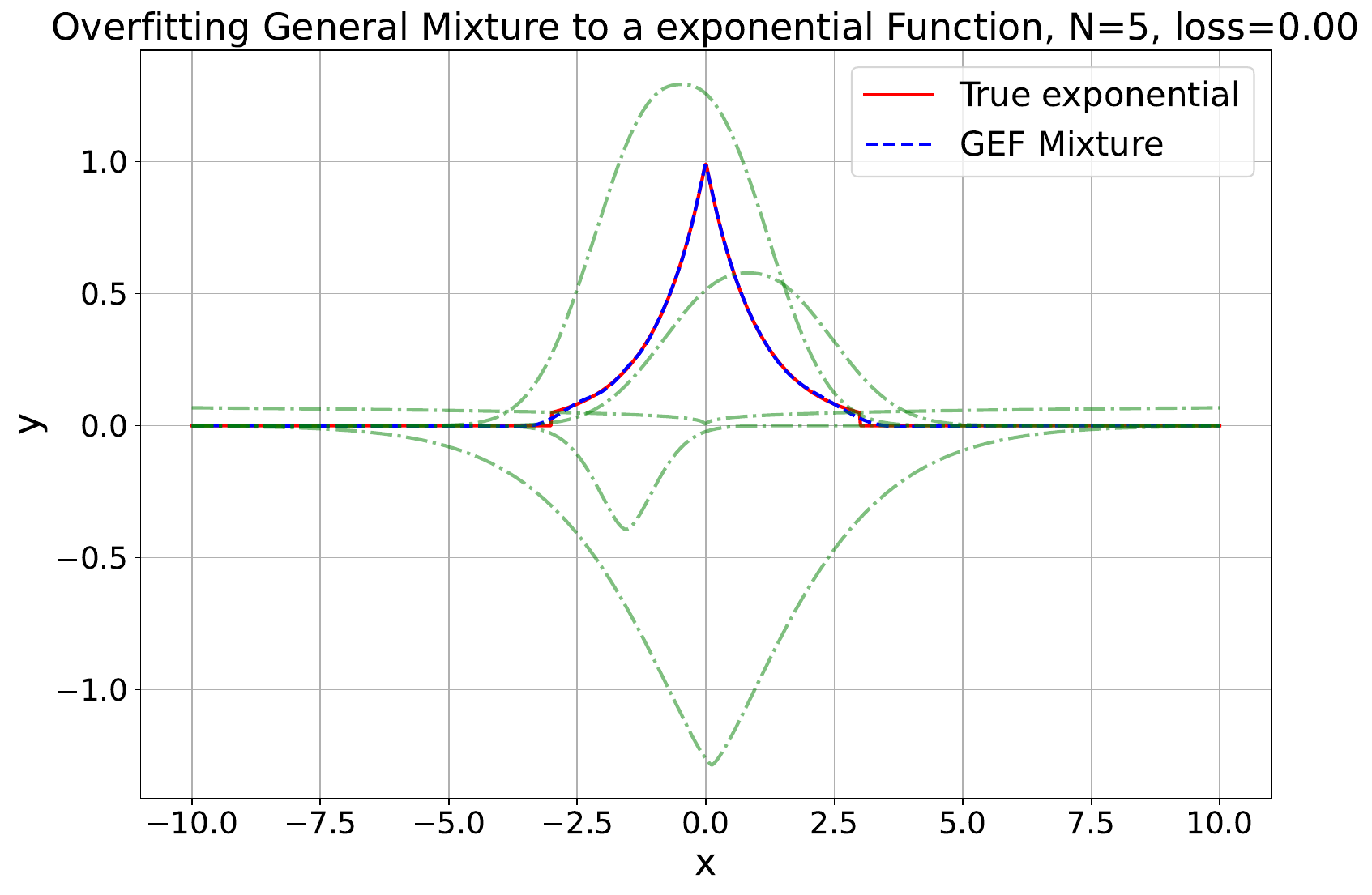}\\ 
    \includegraphics[width=0.24\linewidth]{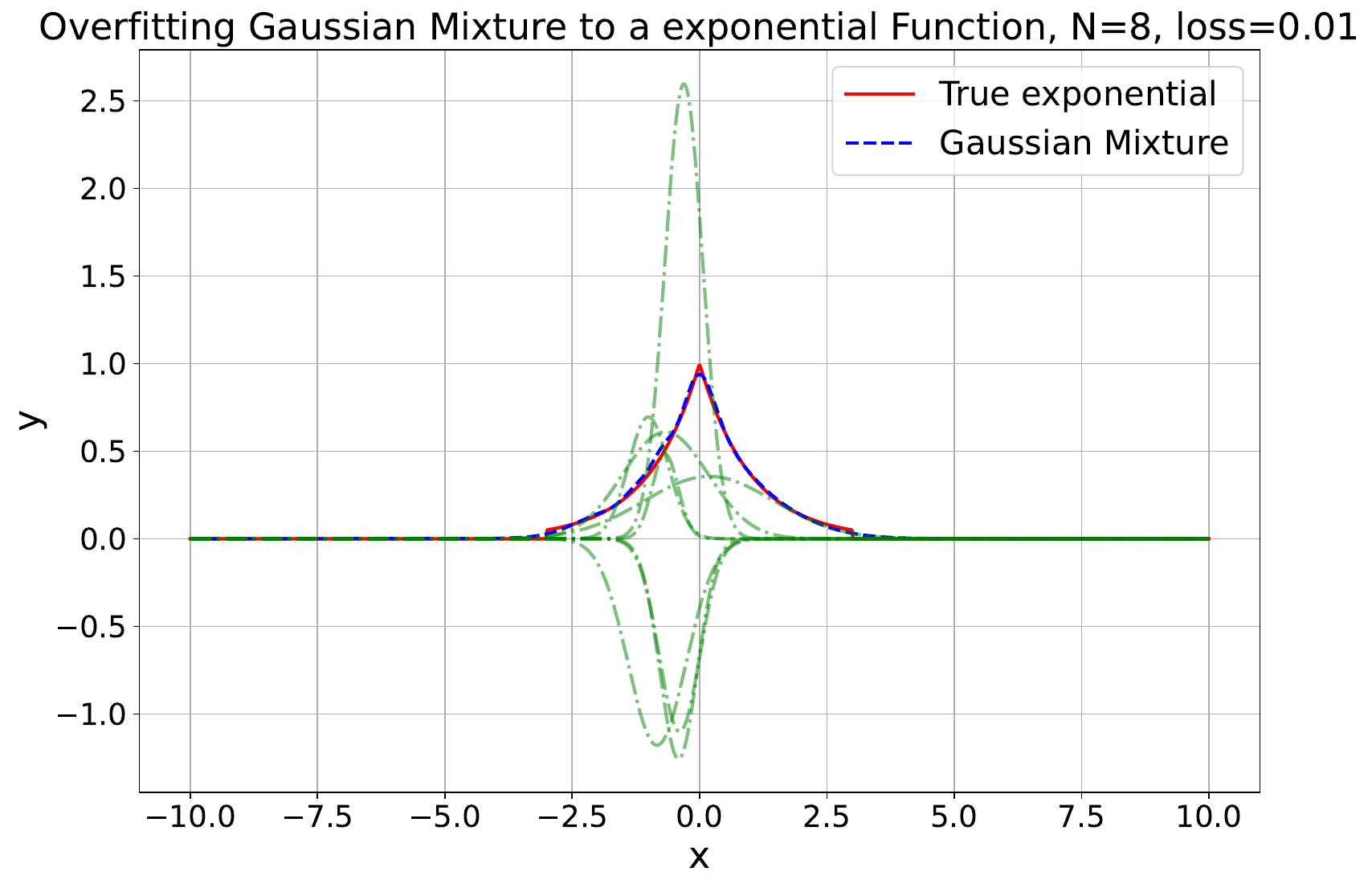} & 
    \includegraphics[width=0.24\linewidth]{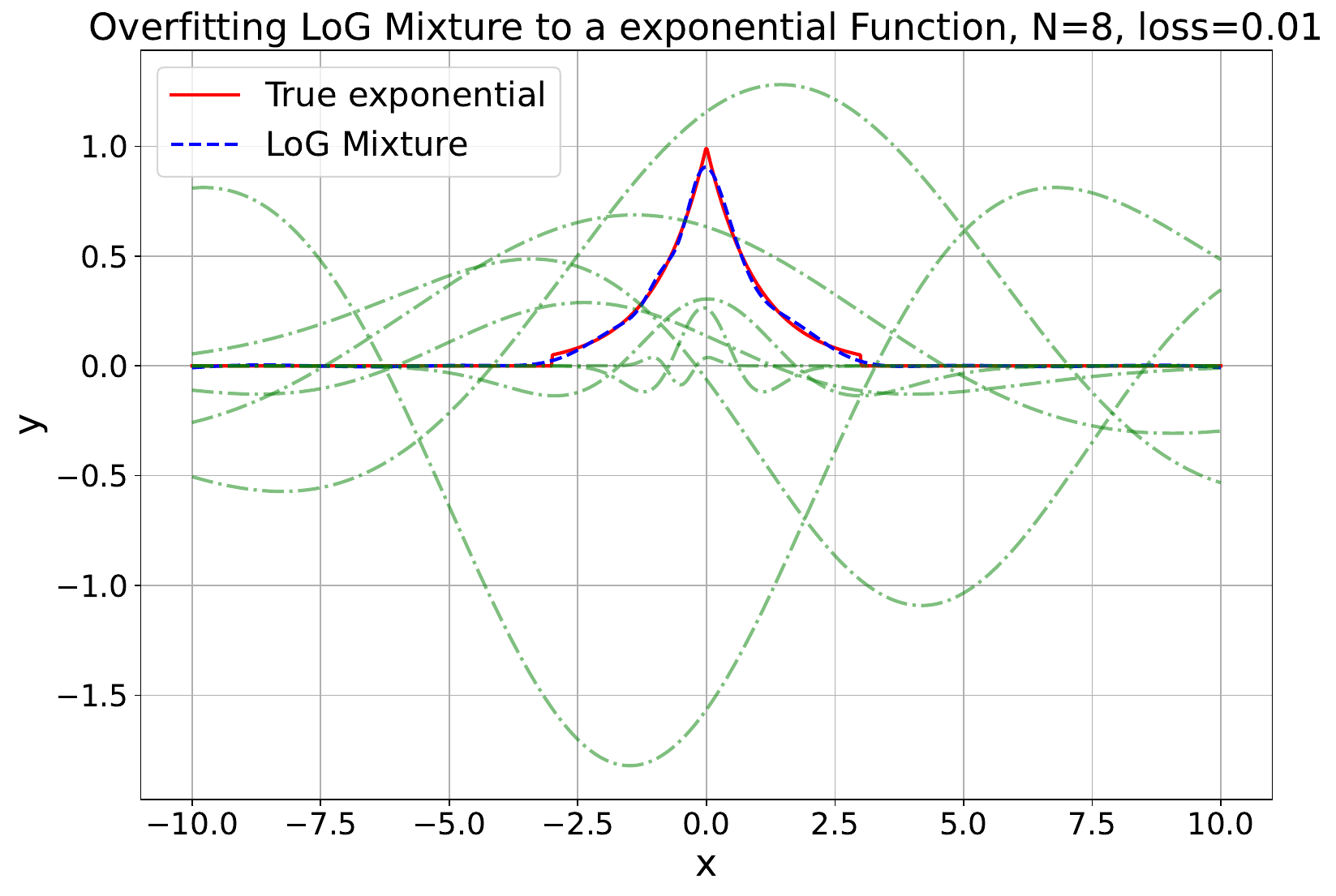} & 
    \includegraphics[width=0.24\linewidth]{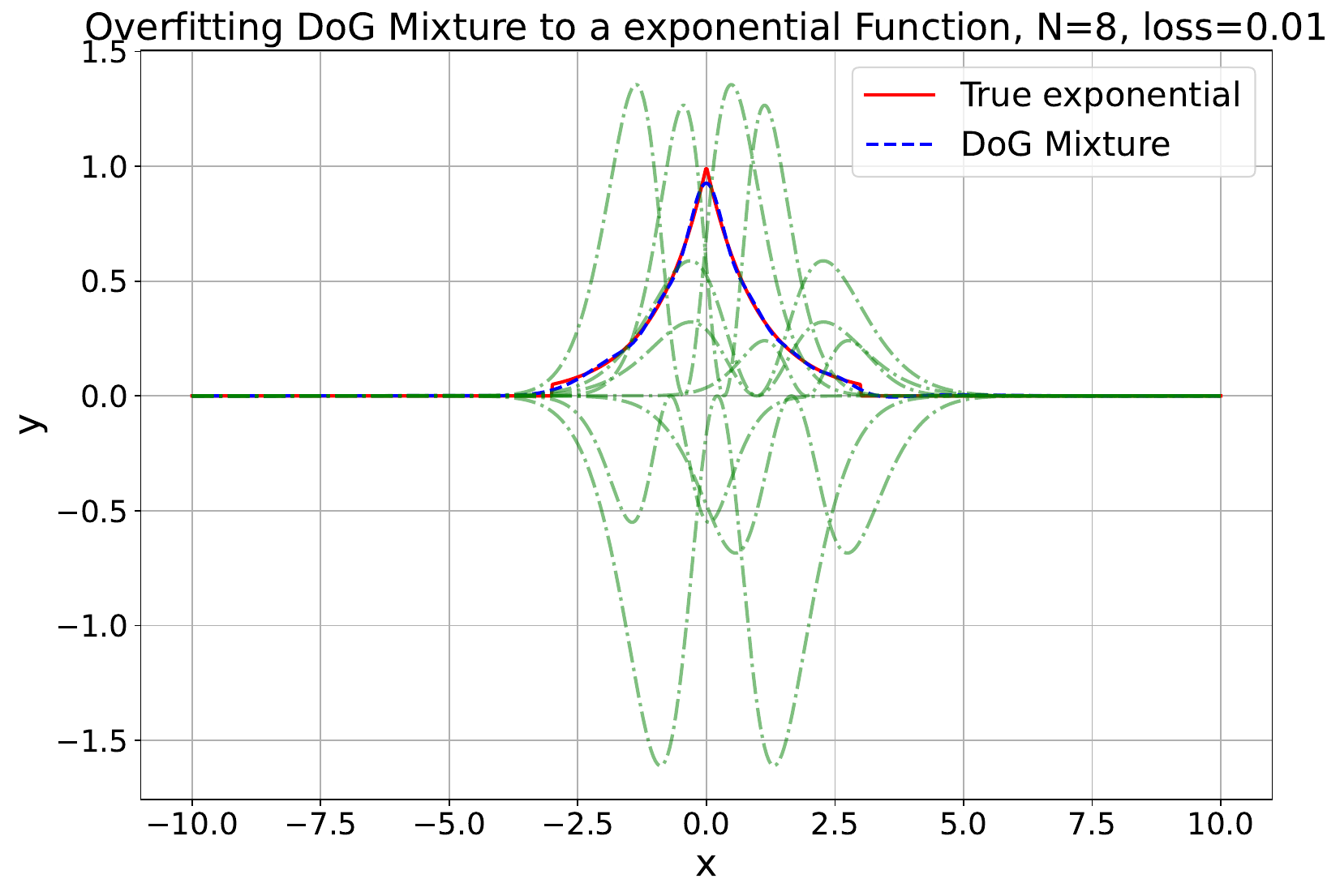} & 
    \includegraphics[width=0.24\linewidth]{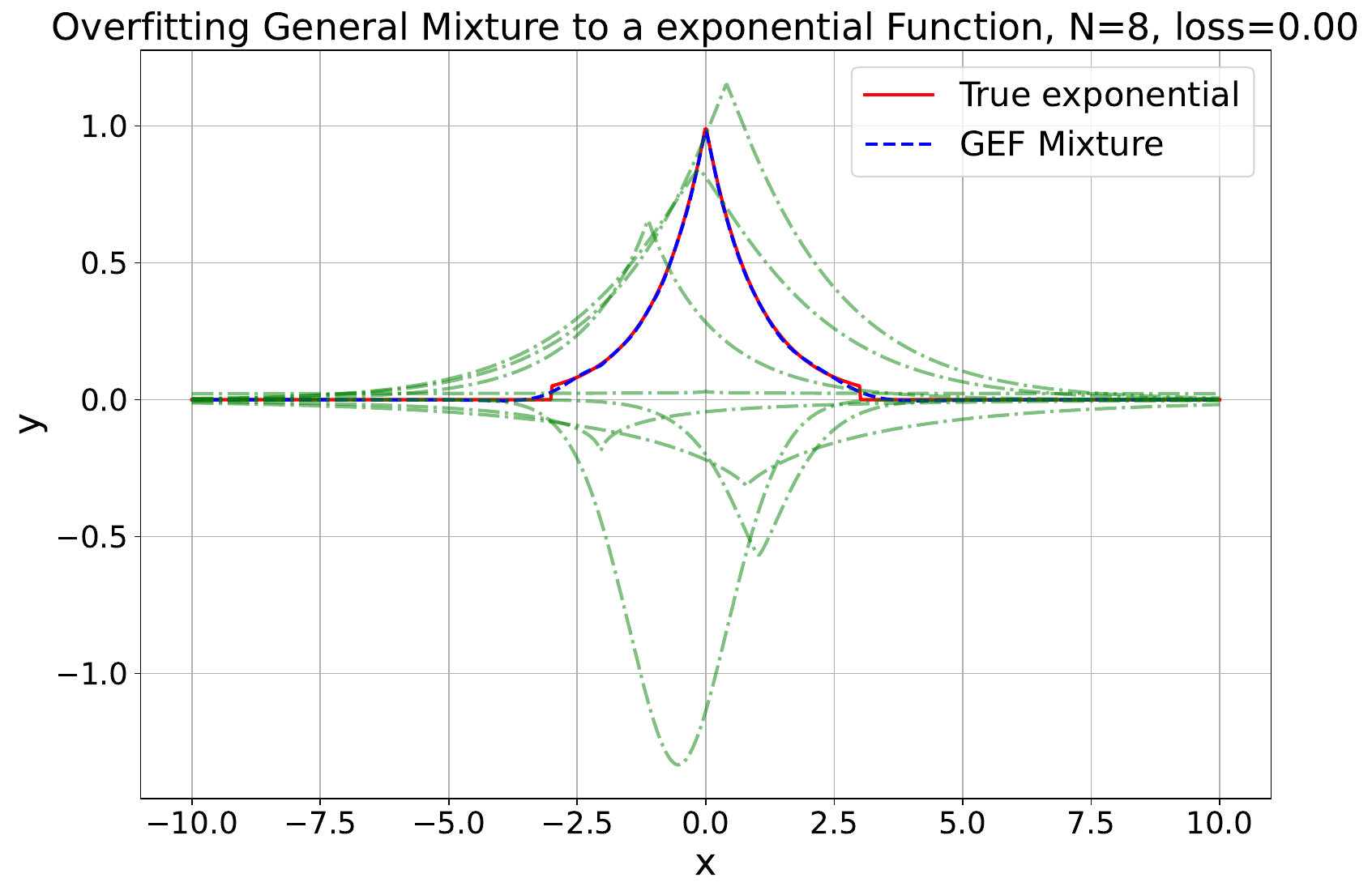}\\ 
    \includegraphics[width=0.24\linewidth]{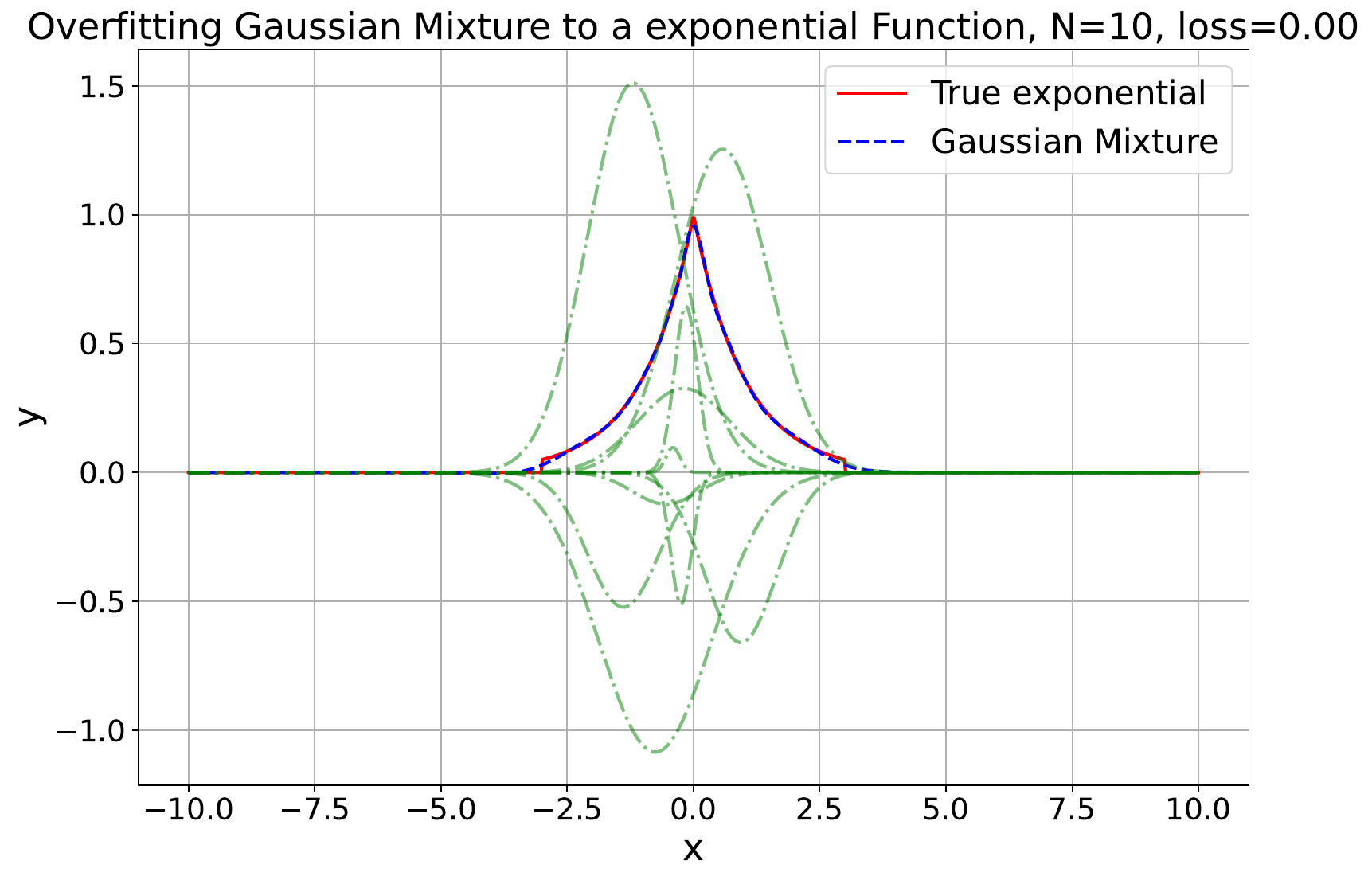} & 
    \includegraphics[width=0.24\linewidth]{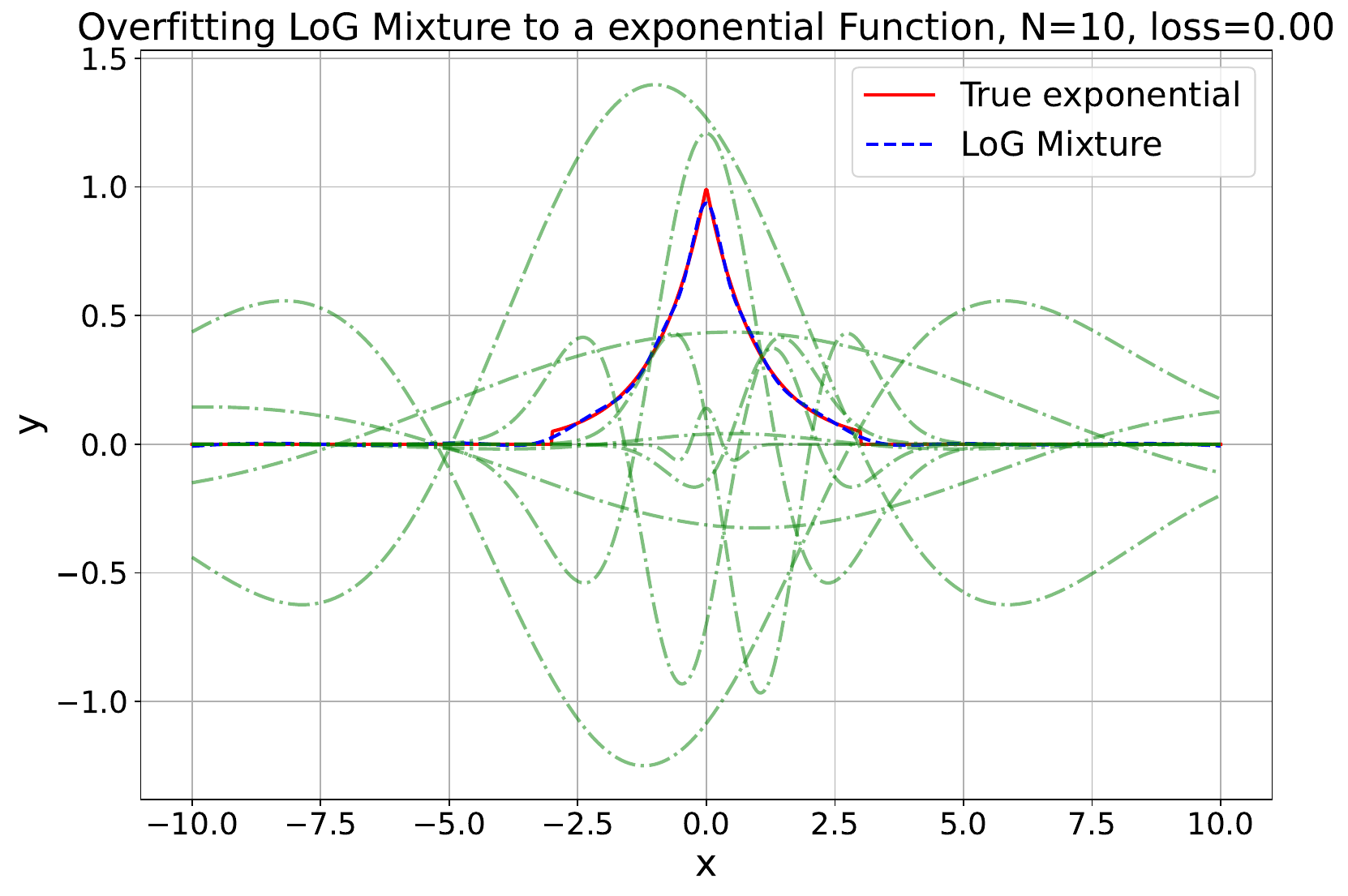} & 
    \includegraphics[width=0.24\linewidth]{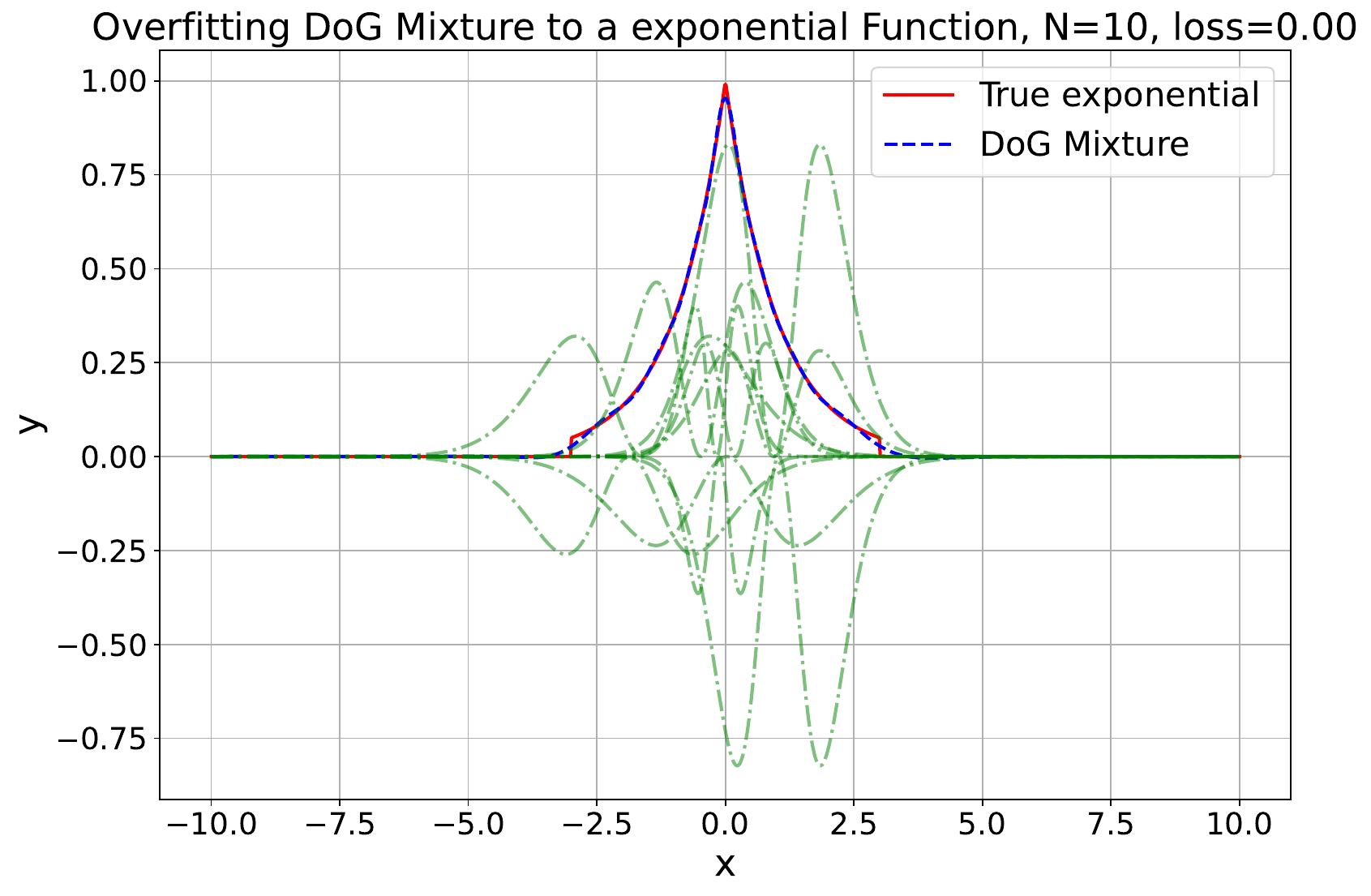} & 
    \includegraphics[width=0.24\linewidth]{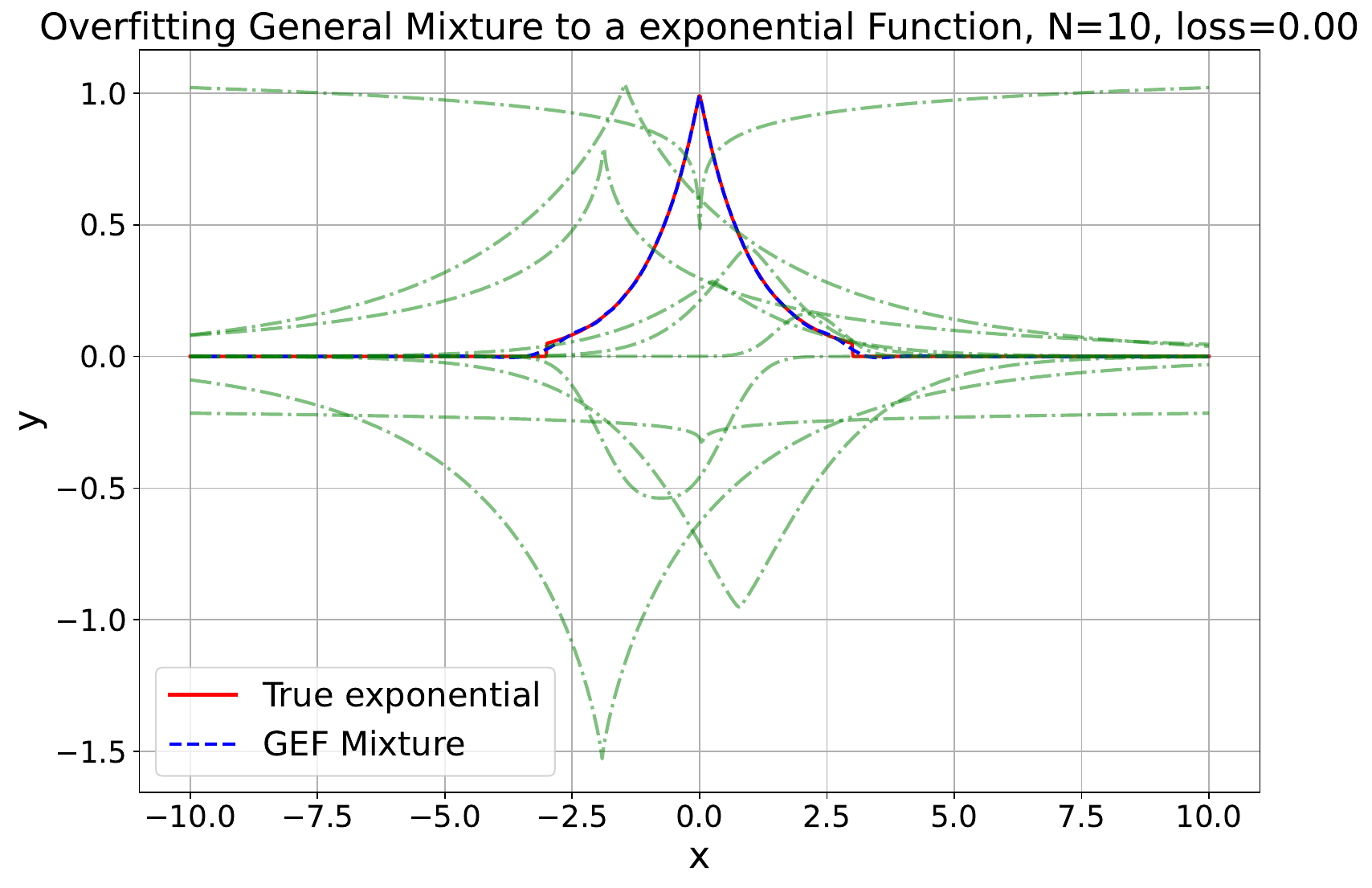}\\ 
    \includegraphics[width=0.24\linewidth]{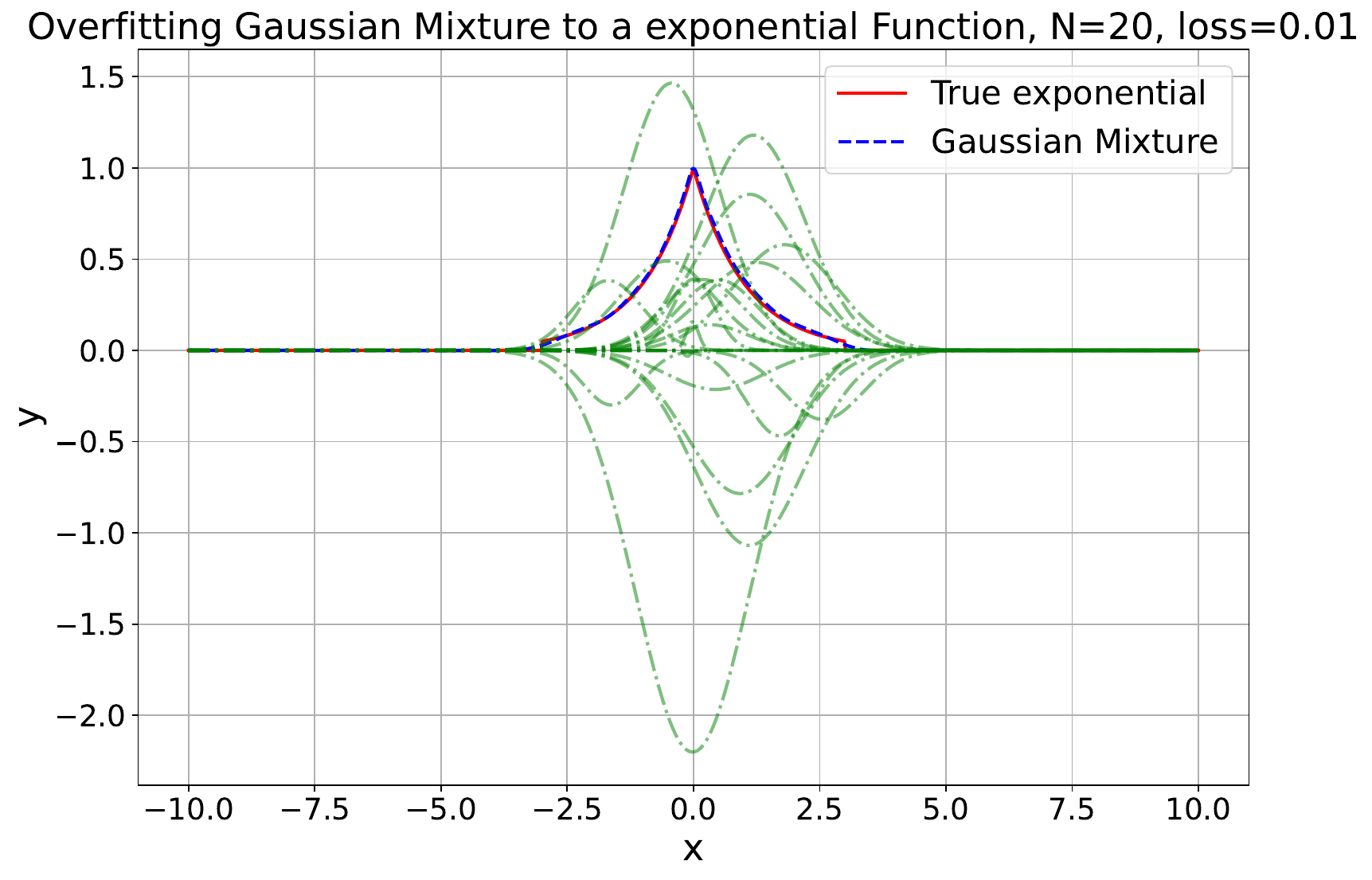} & 
    \includegraphics[width=0.24\linewidth]{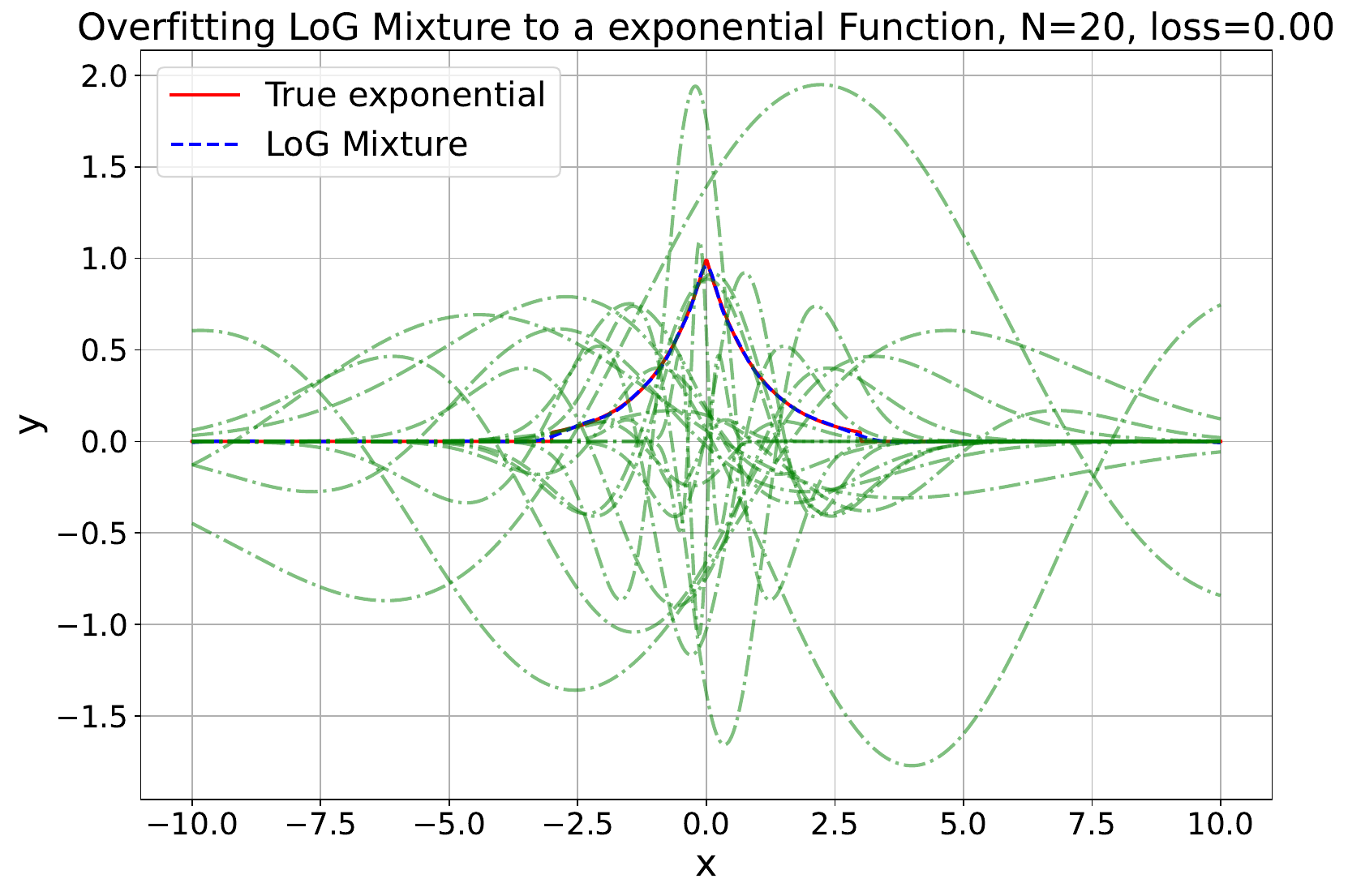} & 
    \includegraphics[width=0.24\linewidth]{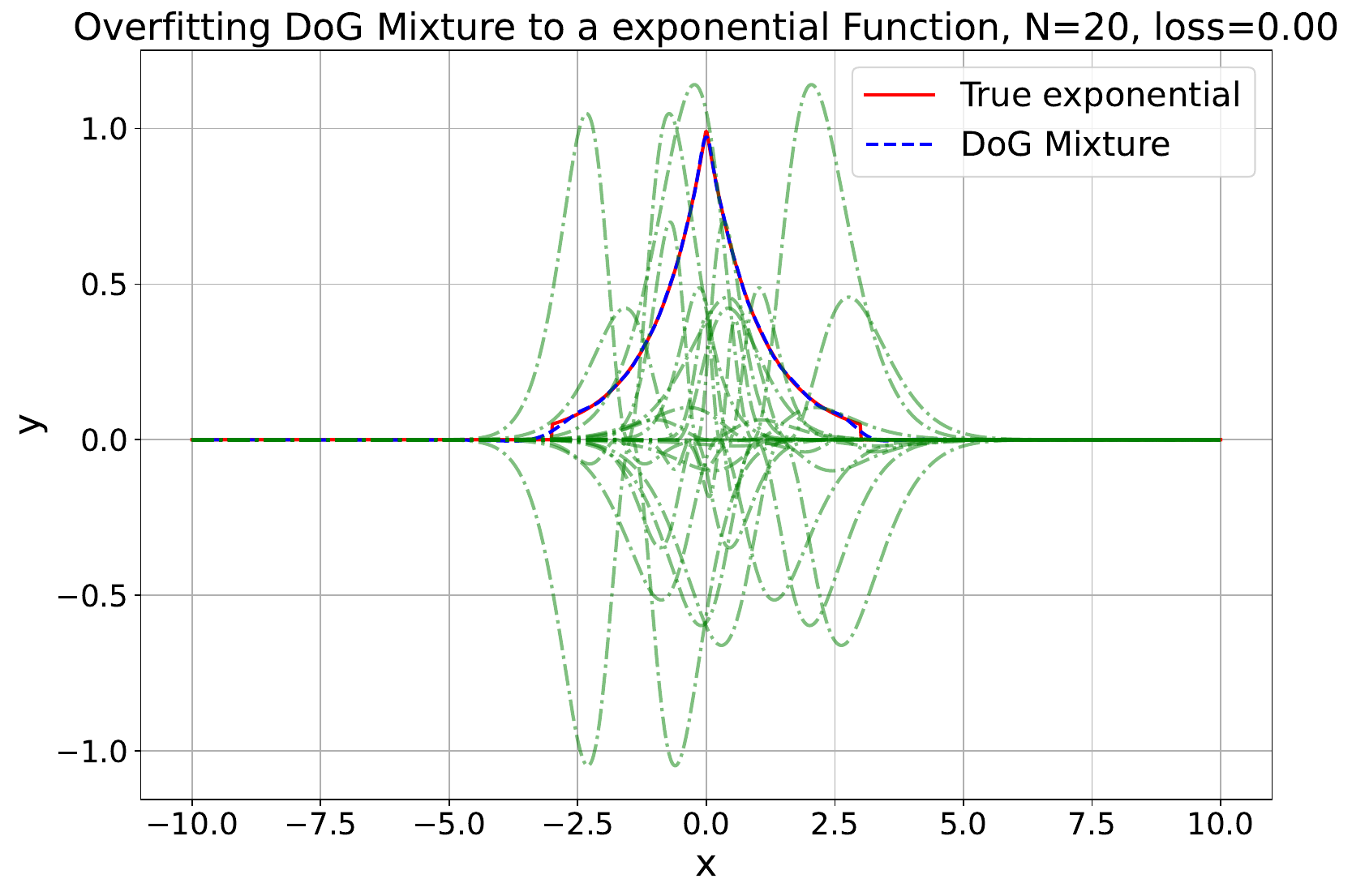} & 
    \includegraphics[width=0.24\linewidth]{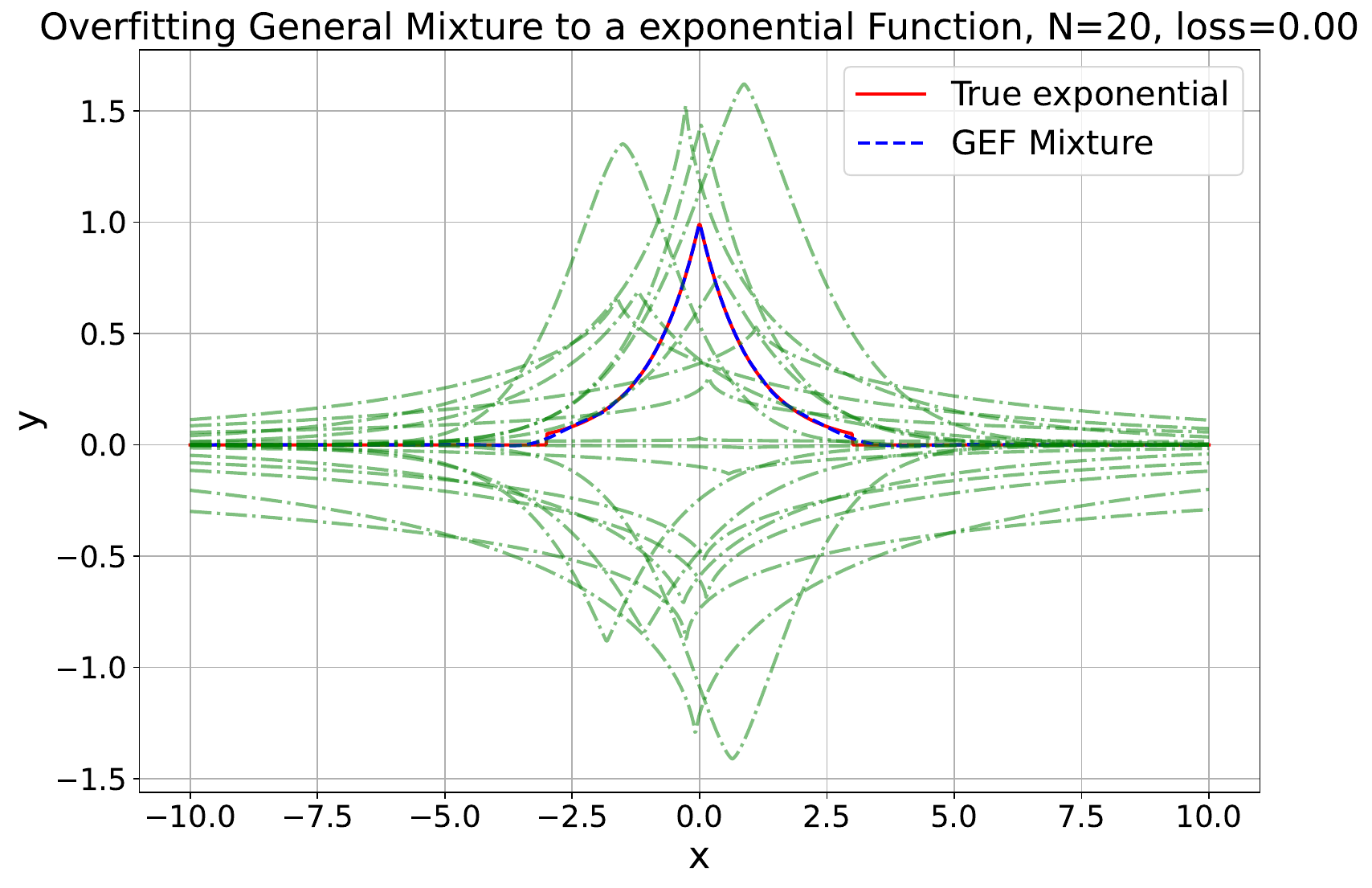}\\ 
    
    \end{tabular}
    }
    \caption{\textbf{Numerical Simulation Examples of Fitting Exponentials with Real Weights Mixtures ( N= 2, 5, 8, and 10 )}. We show some fitting examples for exponential signals with Real weights mixtures (can be negative). The four mixtures used from left to right are Gaussians, LoG, DoG, and General mixtures. From top to bottom: N = 2, 8, and 10 components. The optimized individual components are shown in green. Some examples fail to optimize due to numerical instability in both Gaussians and GEF mixtures. Note that GEF is very efficient in fitting the exponential with few components while LoG and DoG are more stable for a larger number of components. }
    \label{supfig:fitting_exponential_N}
    \end{figure*}
    
\begin{figure*}[h]
    \centering
    \resizebox{1.0\linewidth}{!}{
    \begin{tabular}{cccc}
    \tabcolsep=0.01cm
    Gaussian Mixture& LoG Mixture & DoG Mixture & GEF Mixture \\ 
    \includegraphics[width=0.24\linewidth]{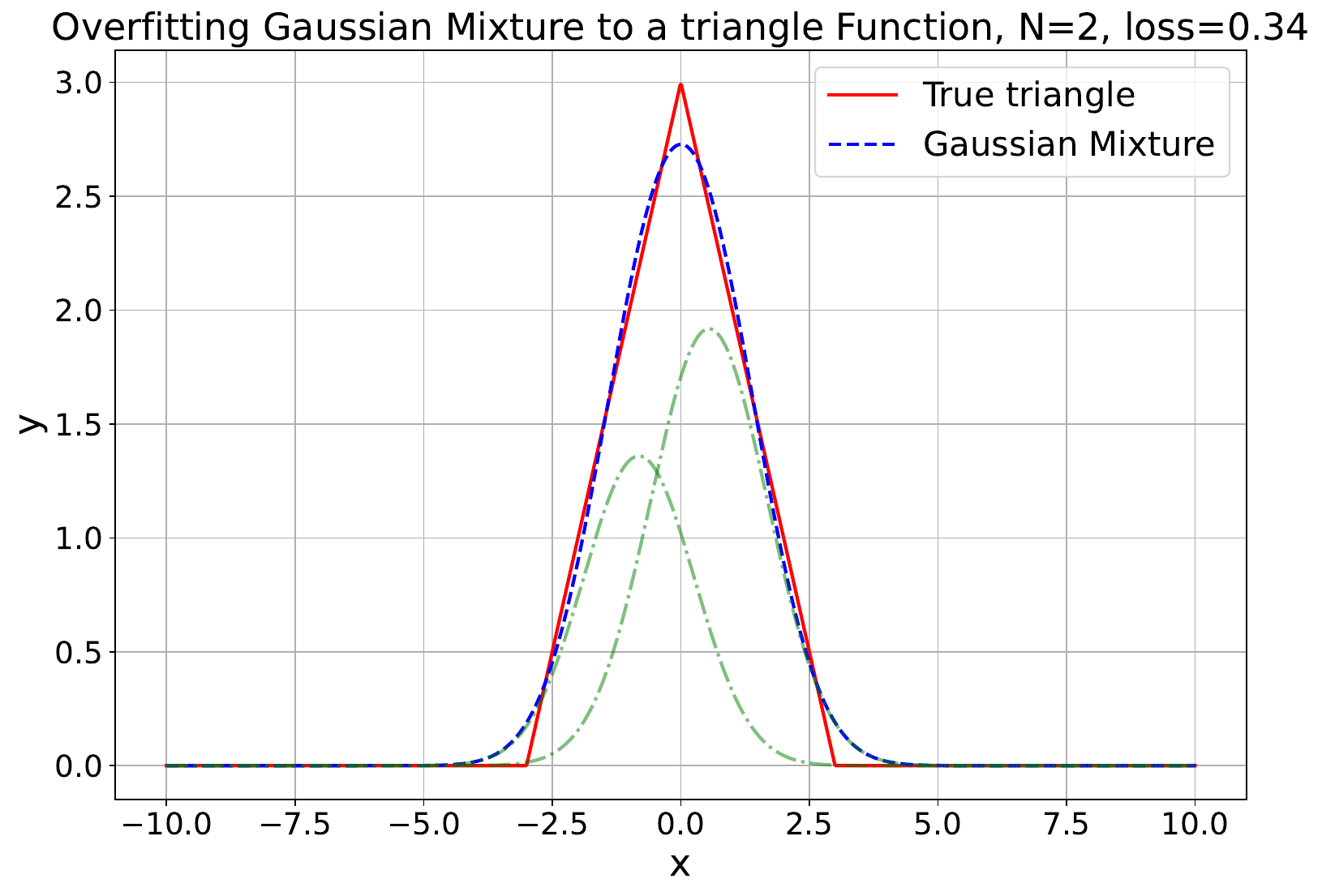} & 
    \includegraphics[width=0.24\linewidth]{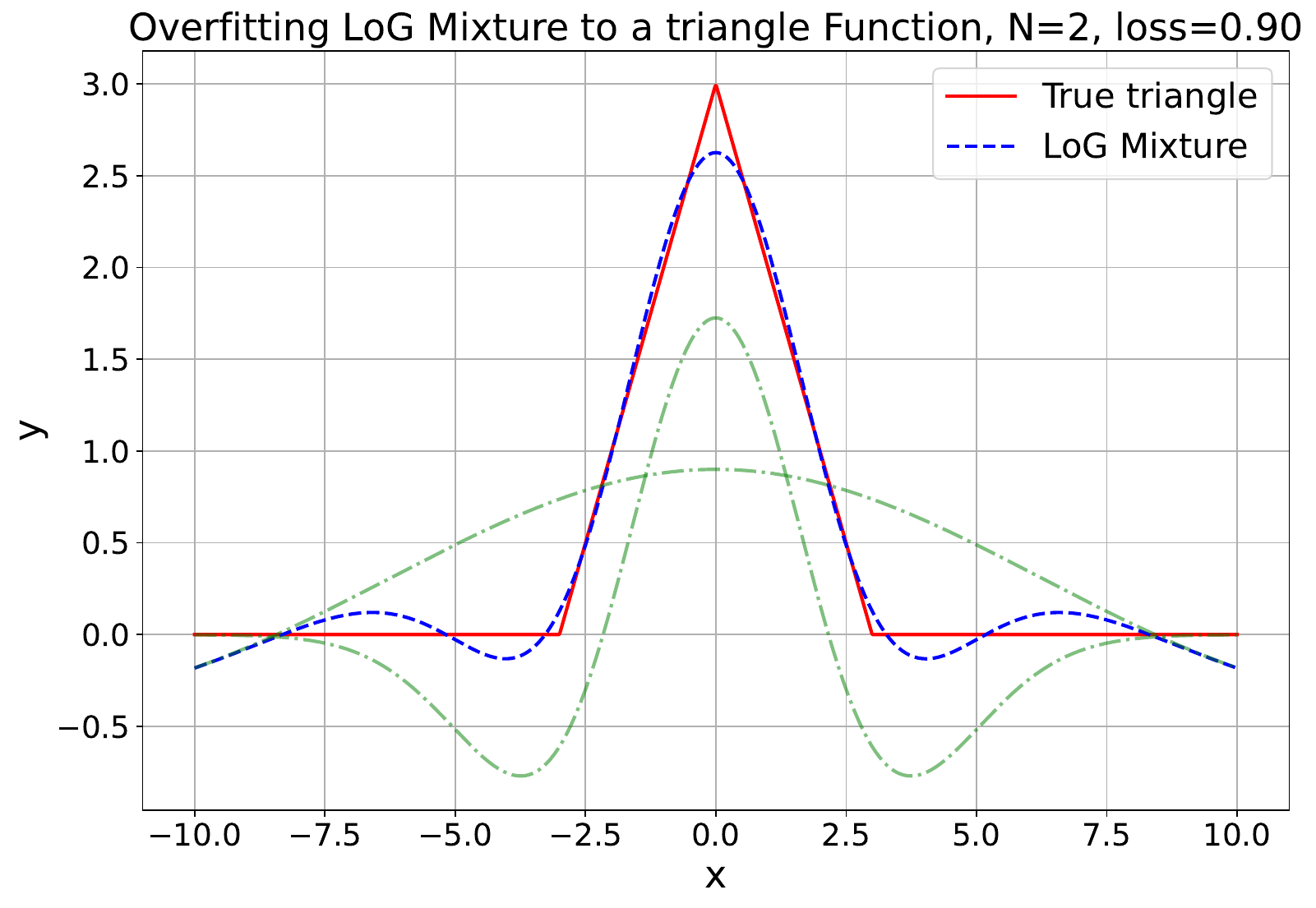} & 
    \includegraphics[width=0.24\linewidth]{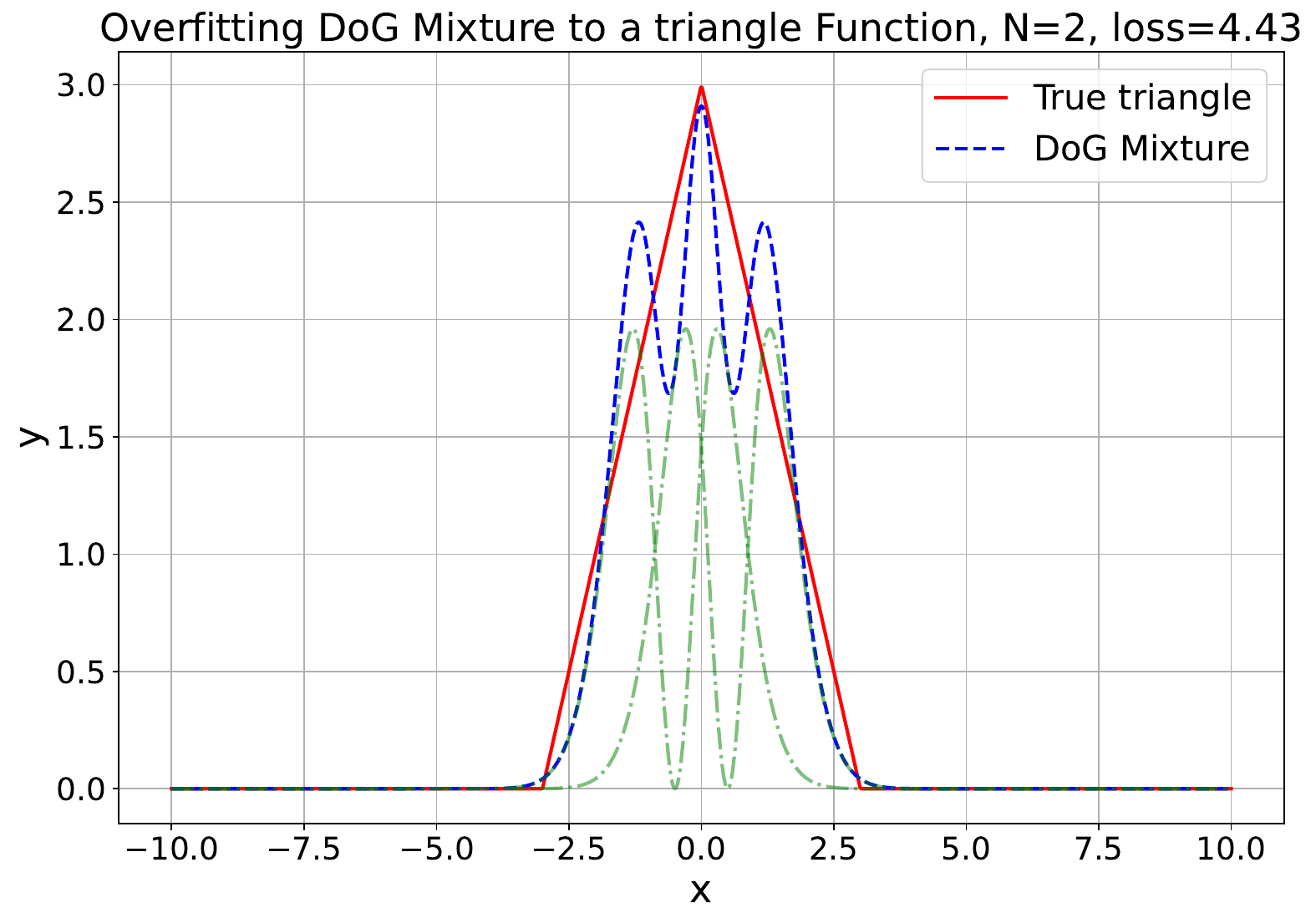} & 
    \includegraphics[width=0.24\linewidth]{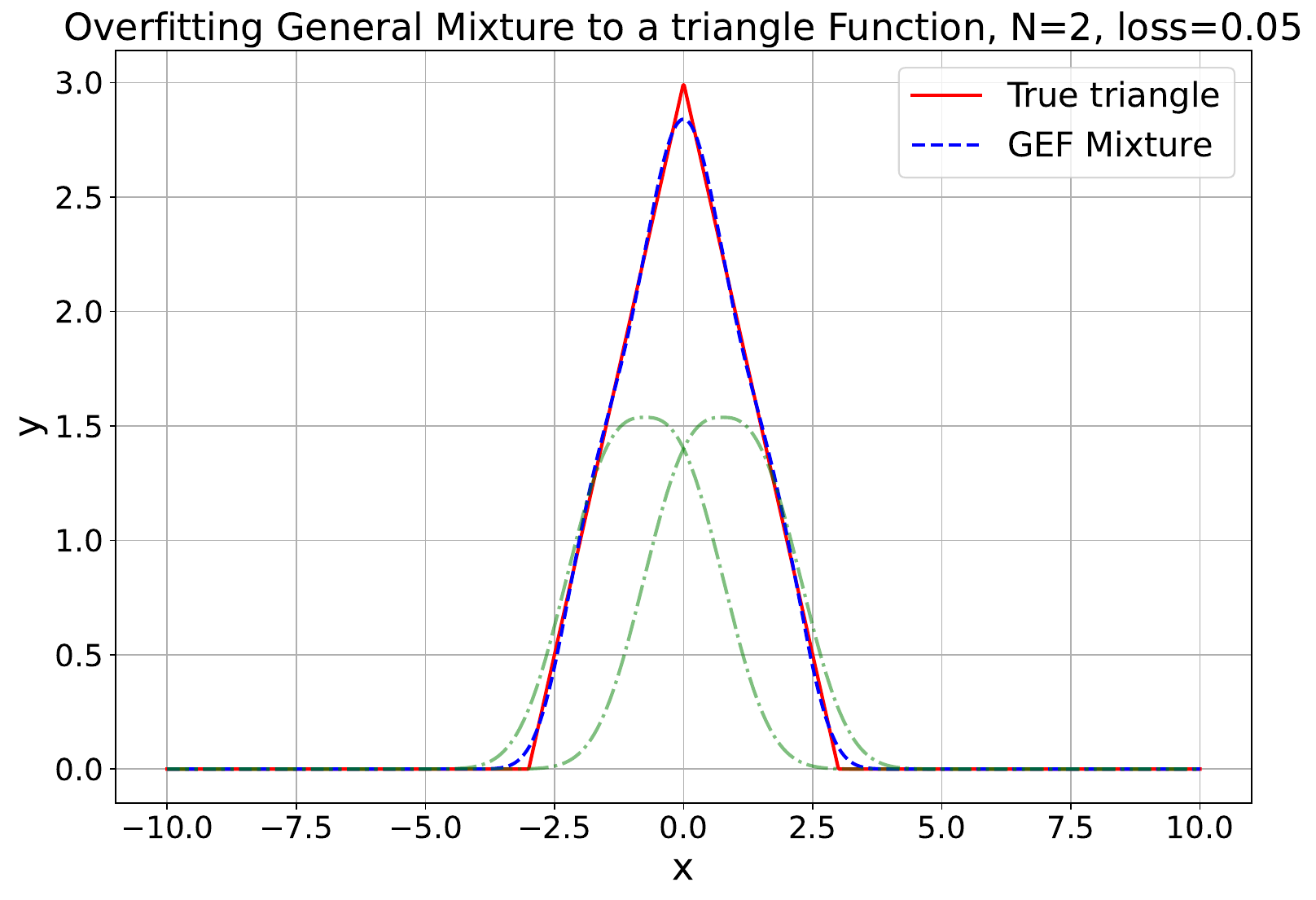}\\ 
    \includegraphics[width=0.24\linewidth]{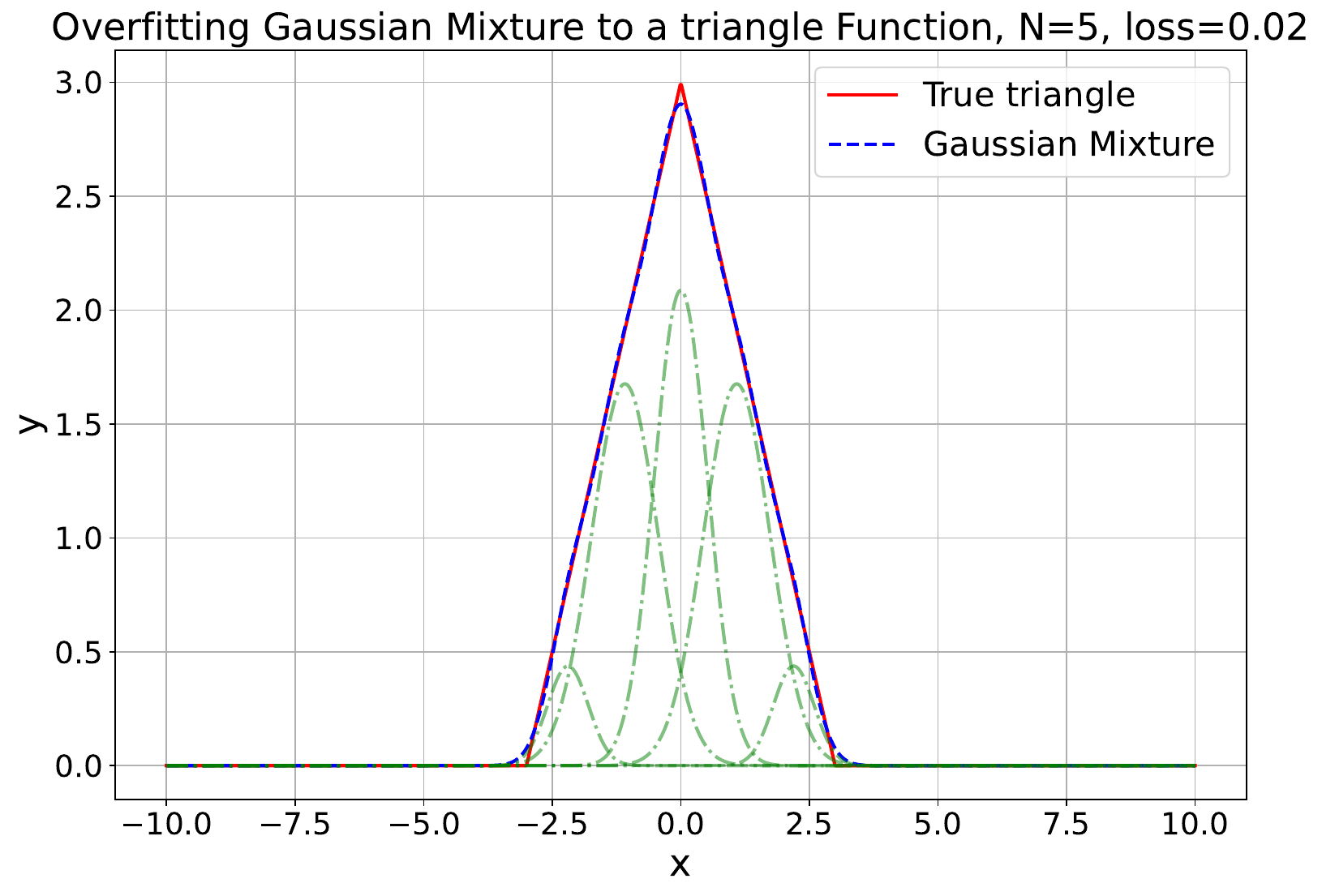} & 
    \includegraphics[width=0.24\linewidth]{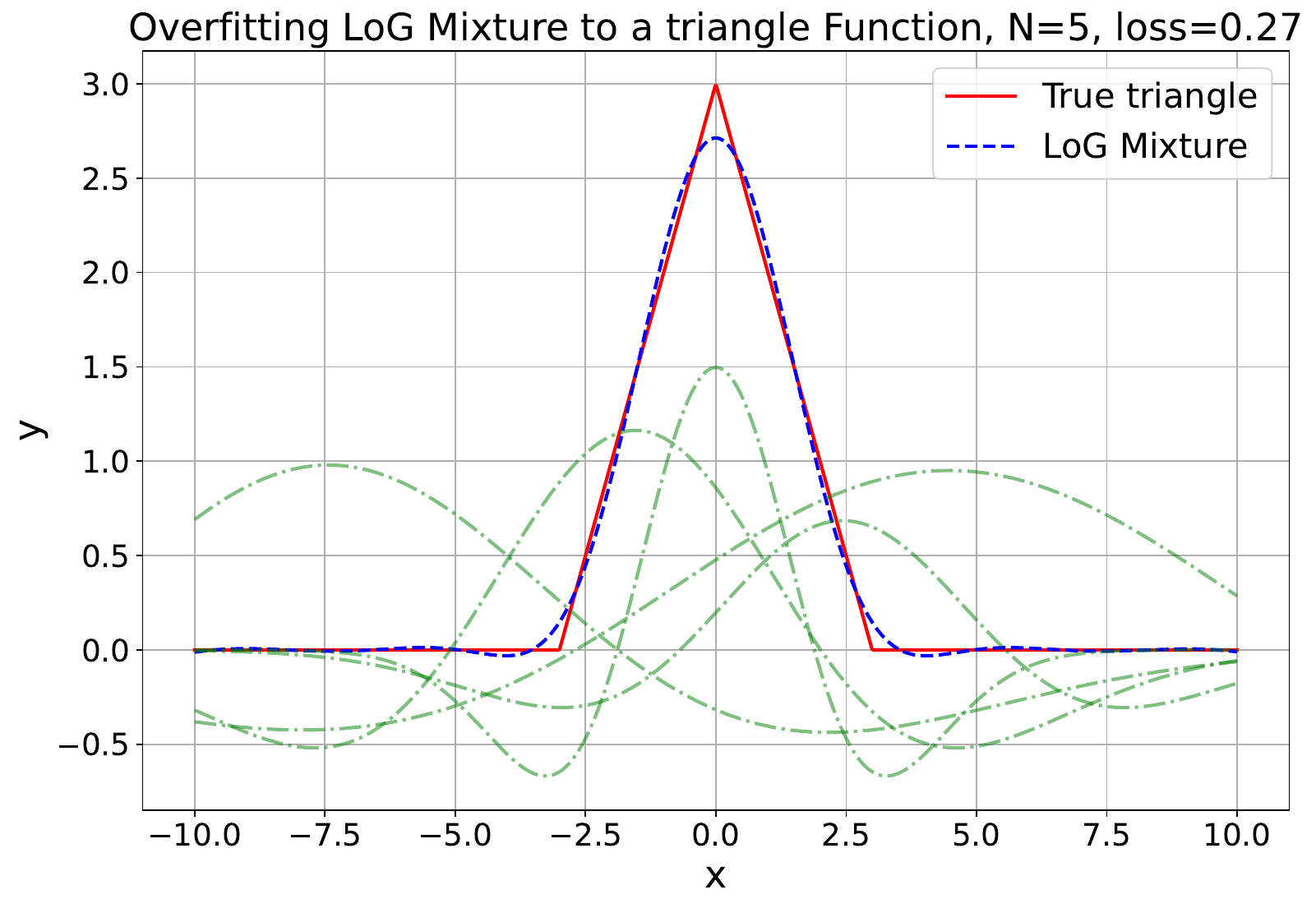} & 
    \includegraphics[width=0.24\linewidth]{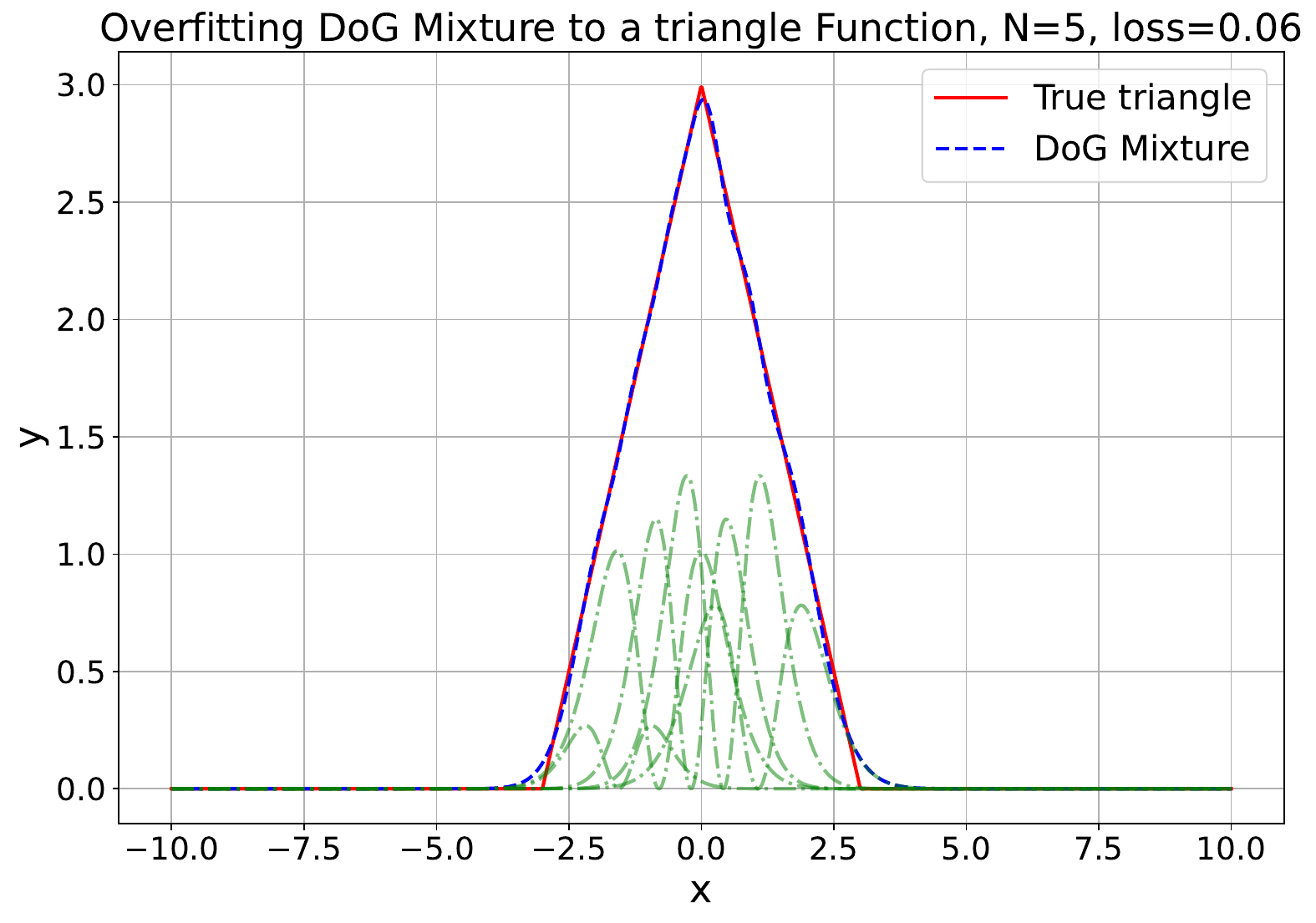} & 
    \includegraphics[width=0.24\linewidth]{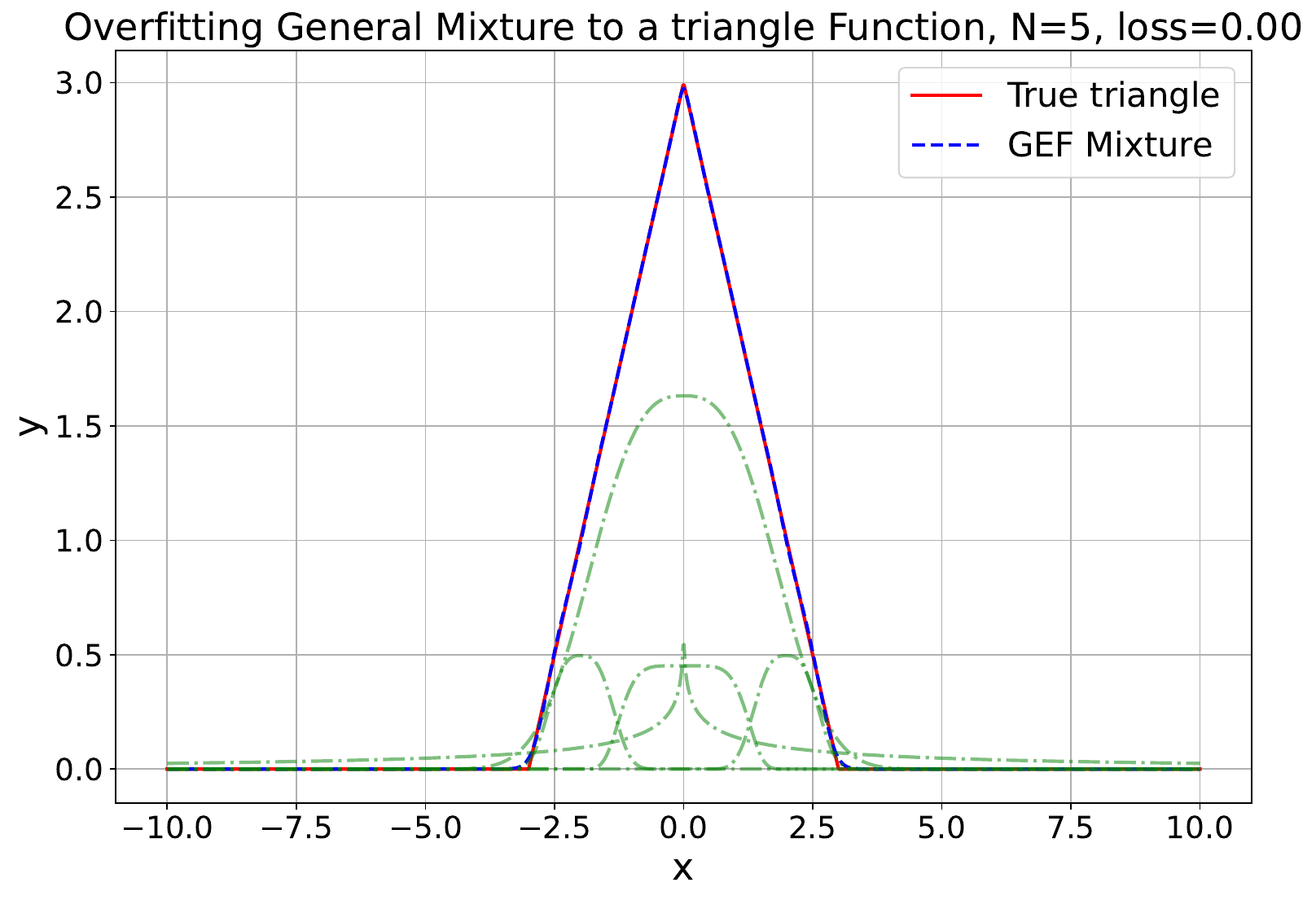}\\ 
    \includegraphics[width=0.24\linewidth]{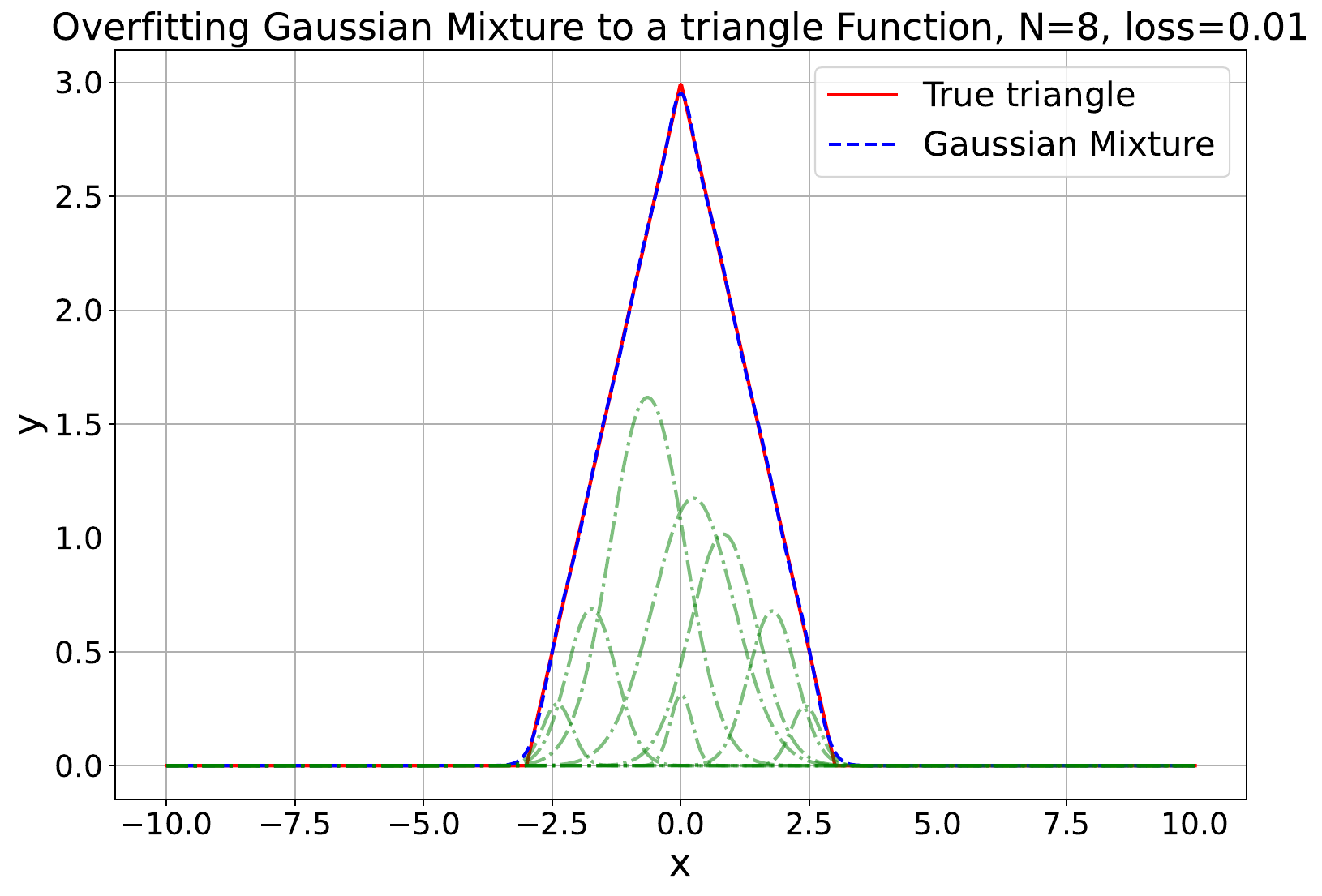} & 
    \includegraphics[width=0.24\linewidth]{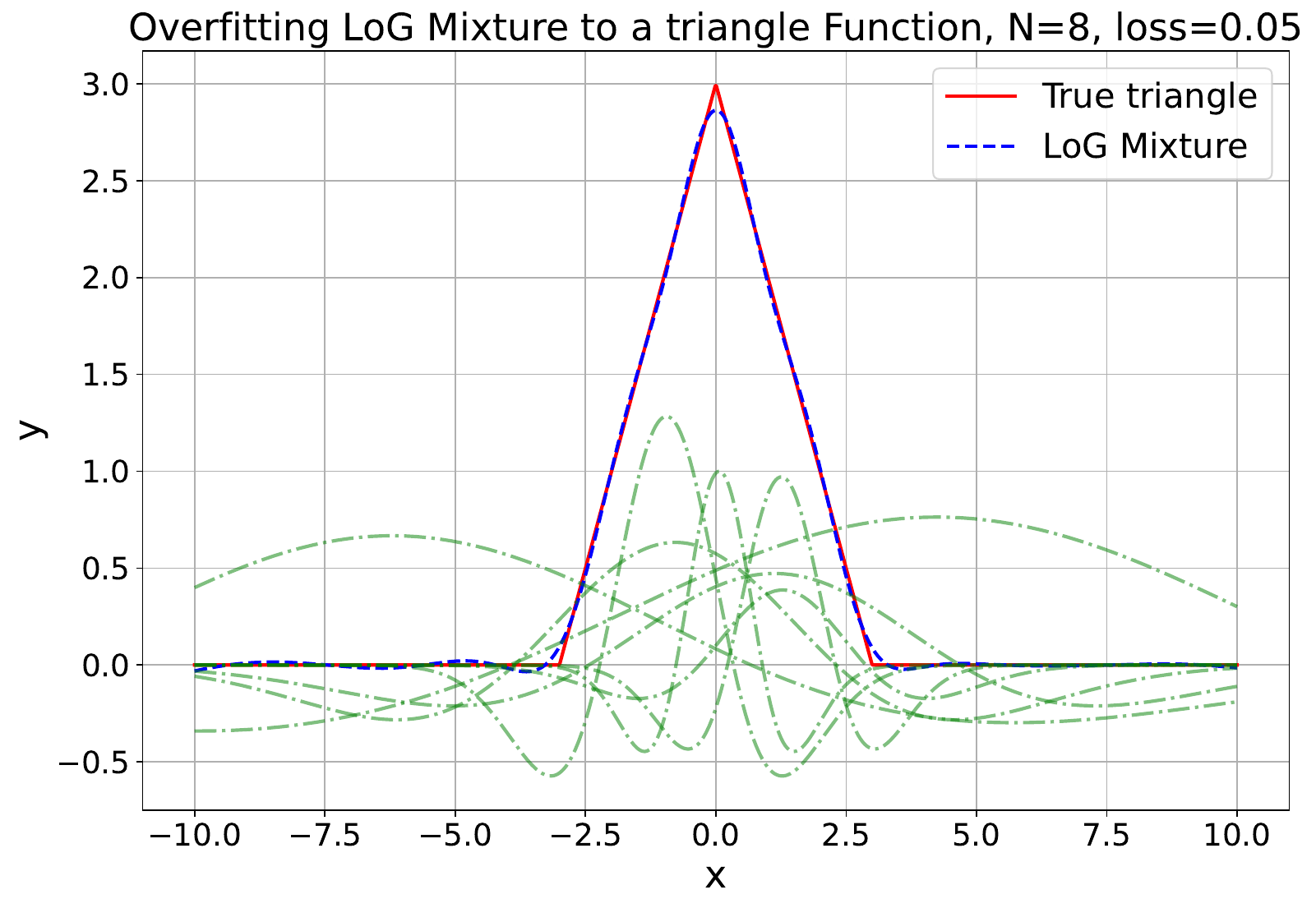} & 
    \includegraphics[width=0.24\linewidth]{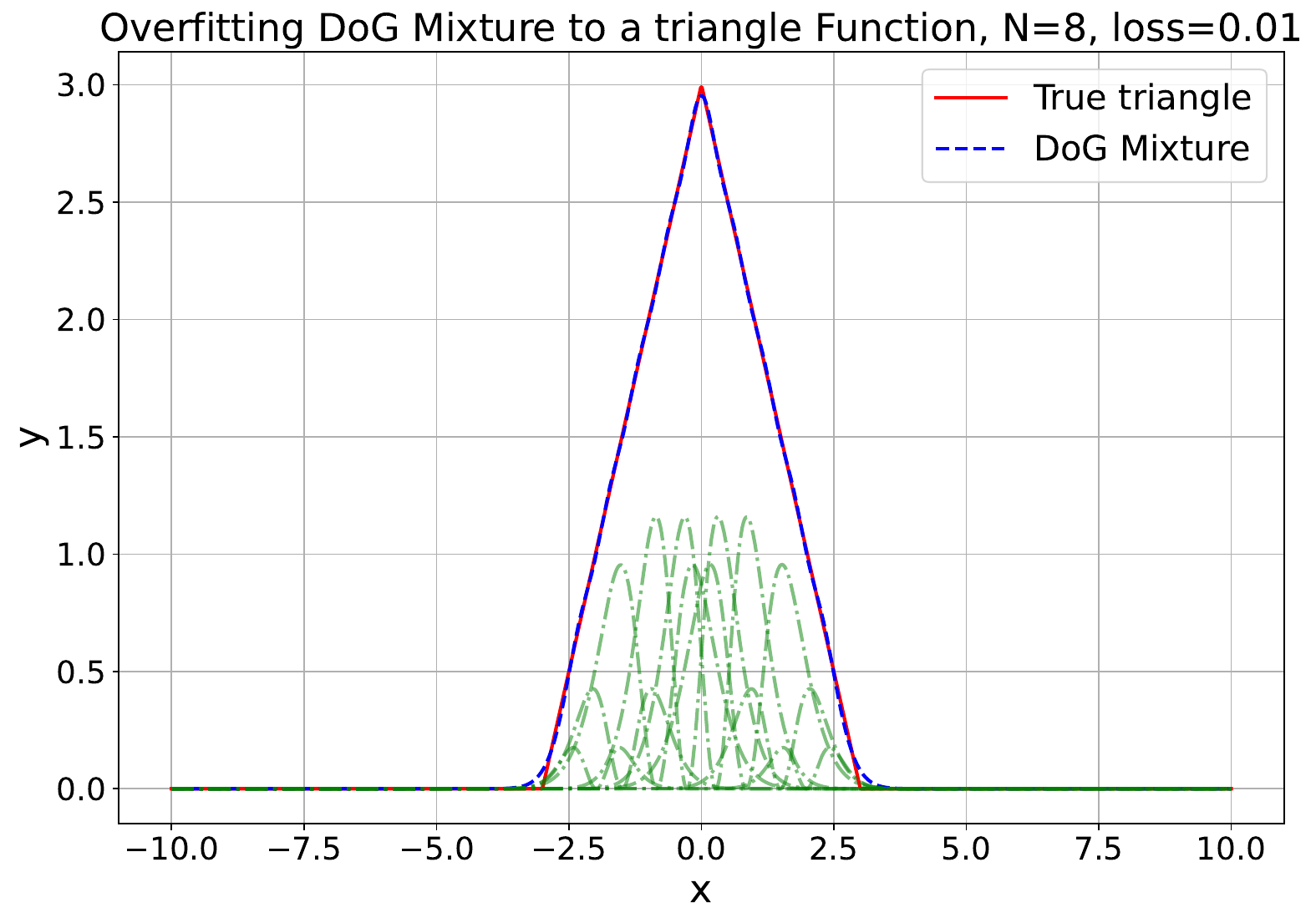} & 
    \includegraphics[width=0.24\linewidth]{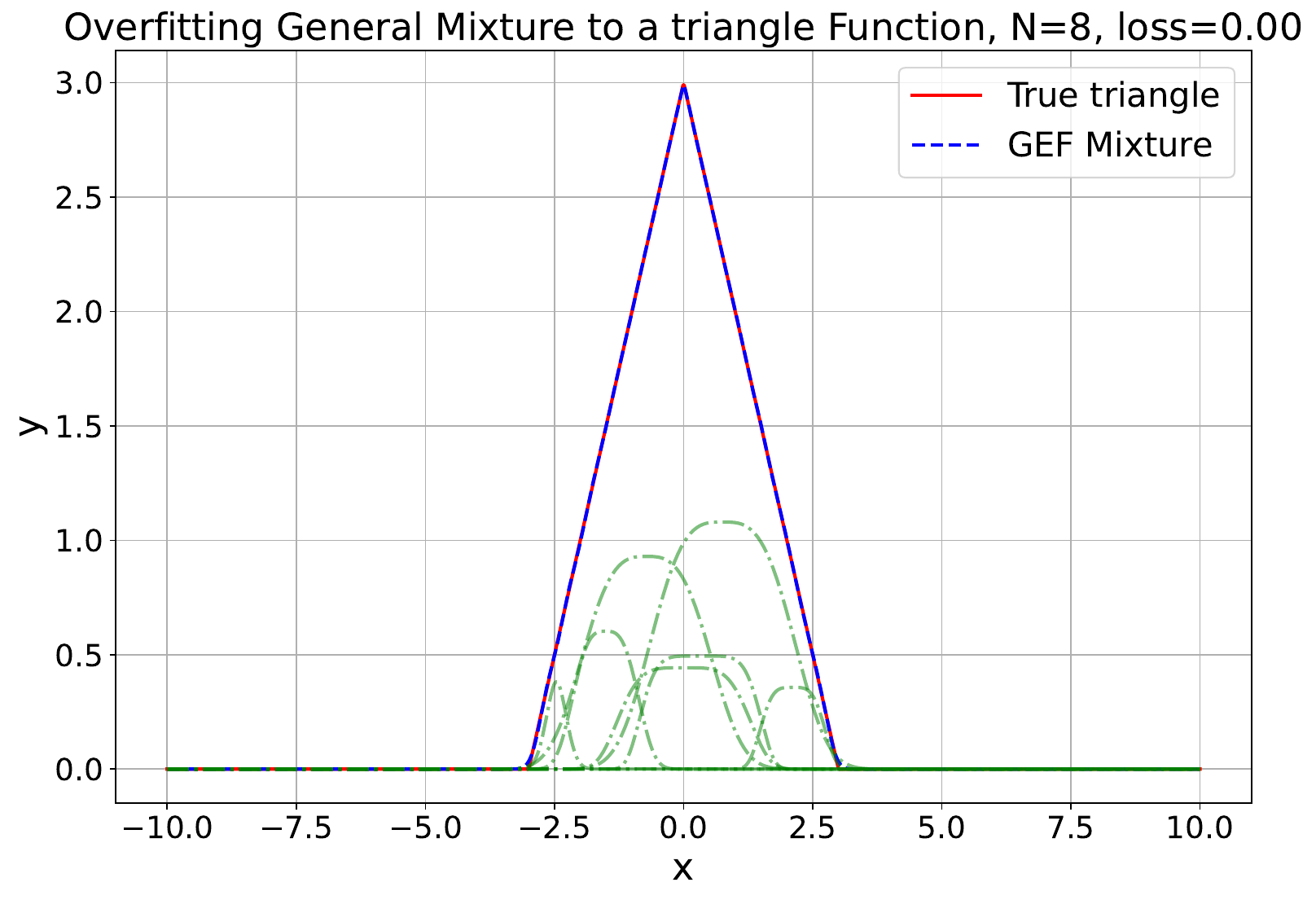}\\ 
    \includegraphics[width=0.24\linewidth]{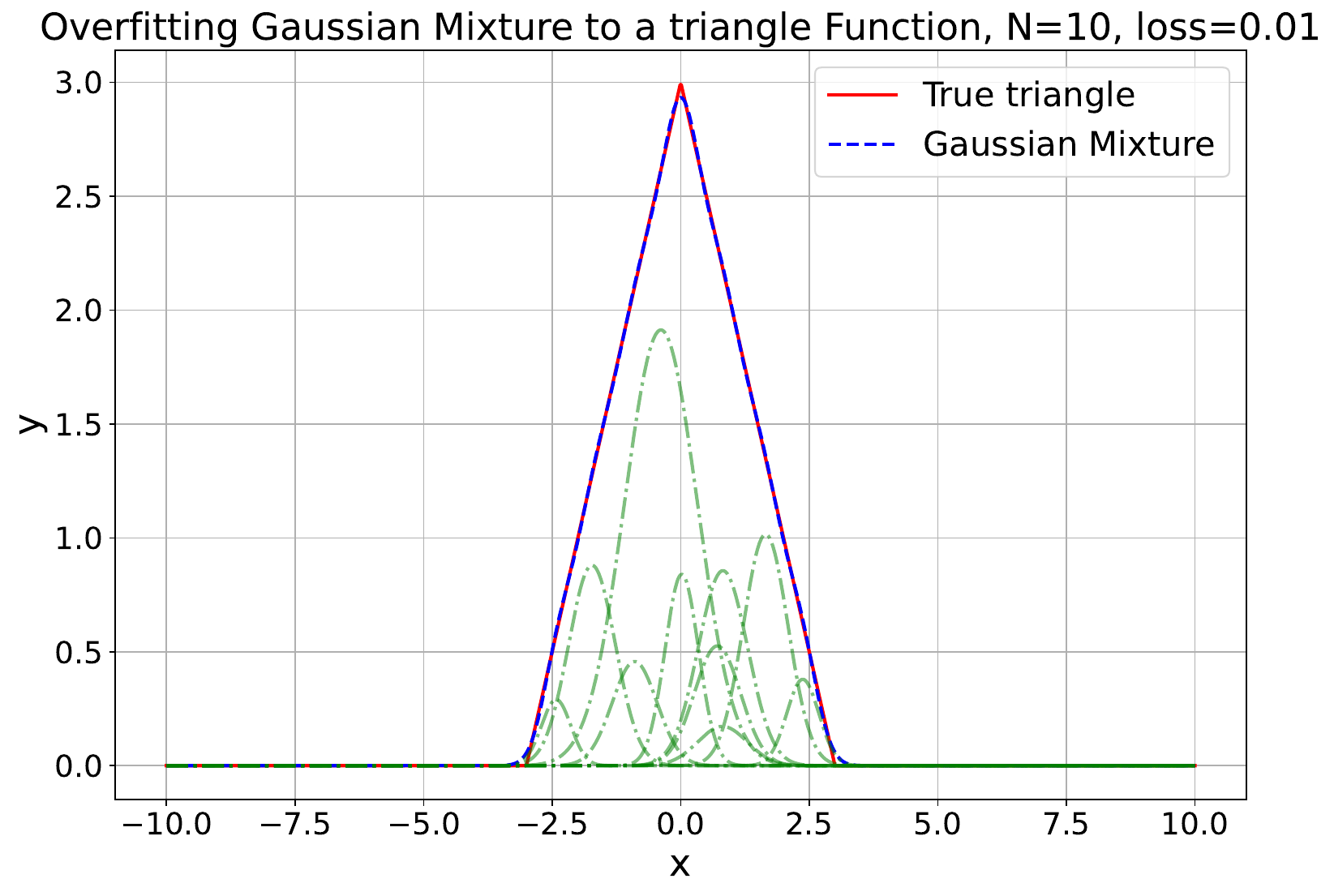} & 
    \includegraphics[width=0.24\linewidth]{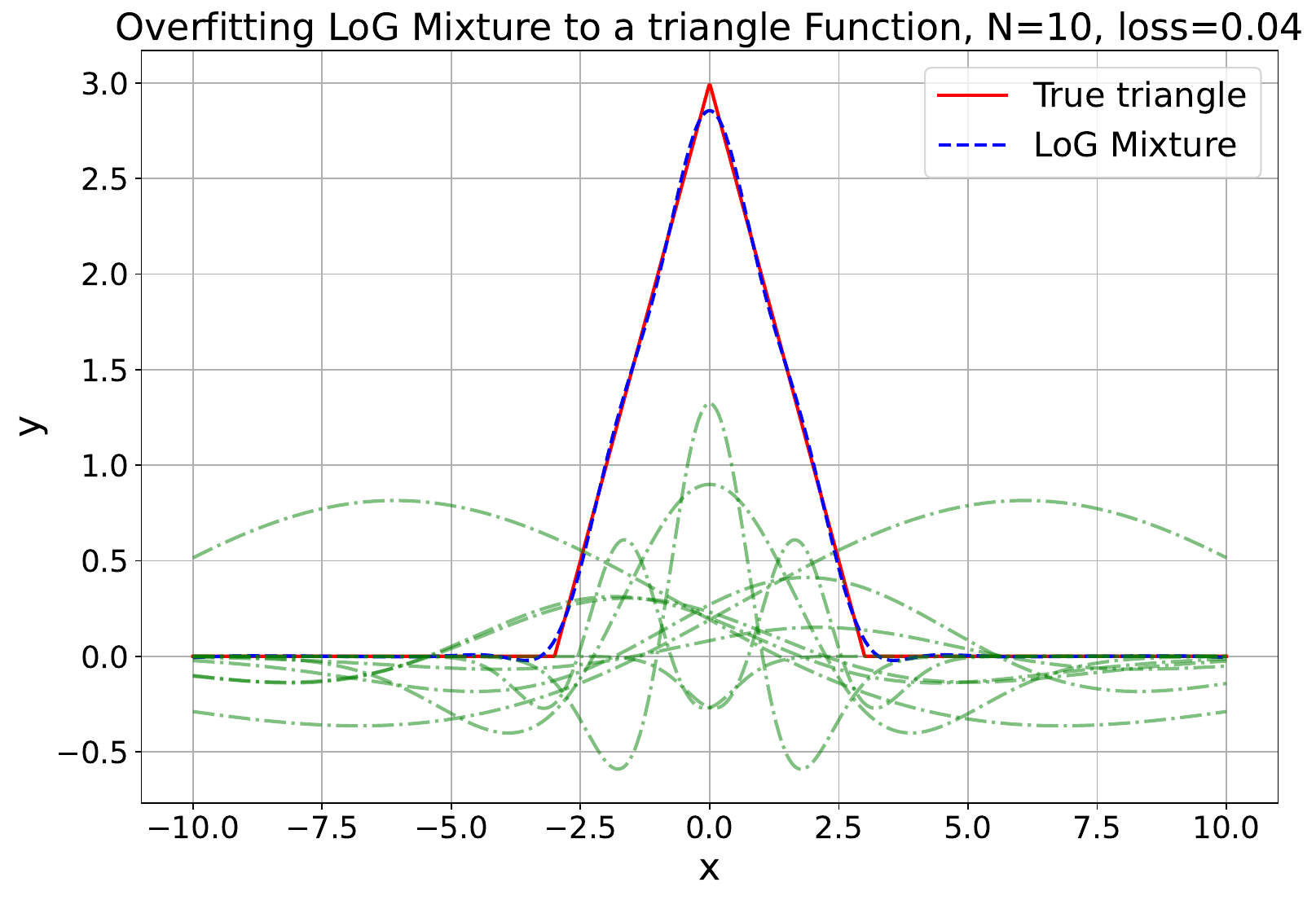} & 
    \includegraphics[width=0.24\linewidth]{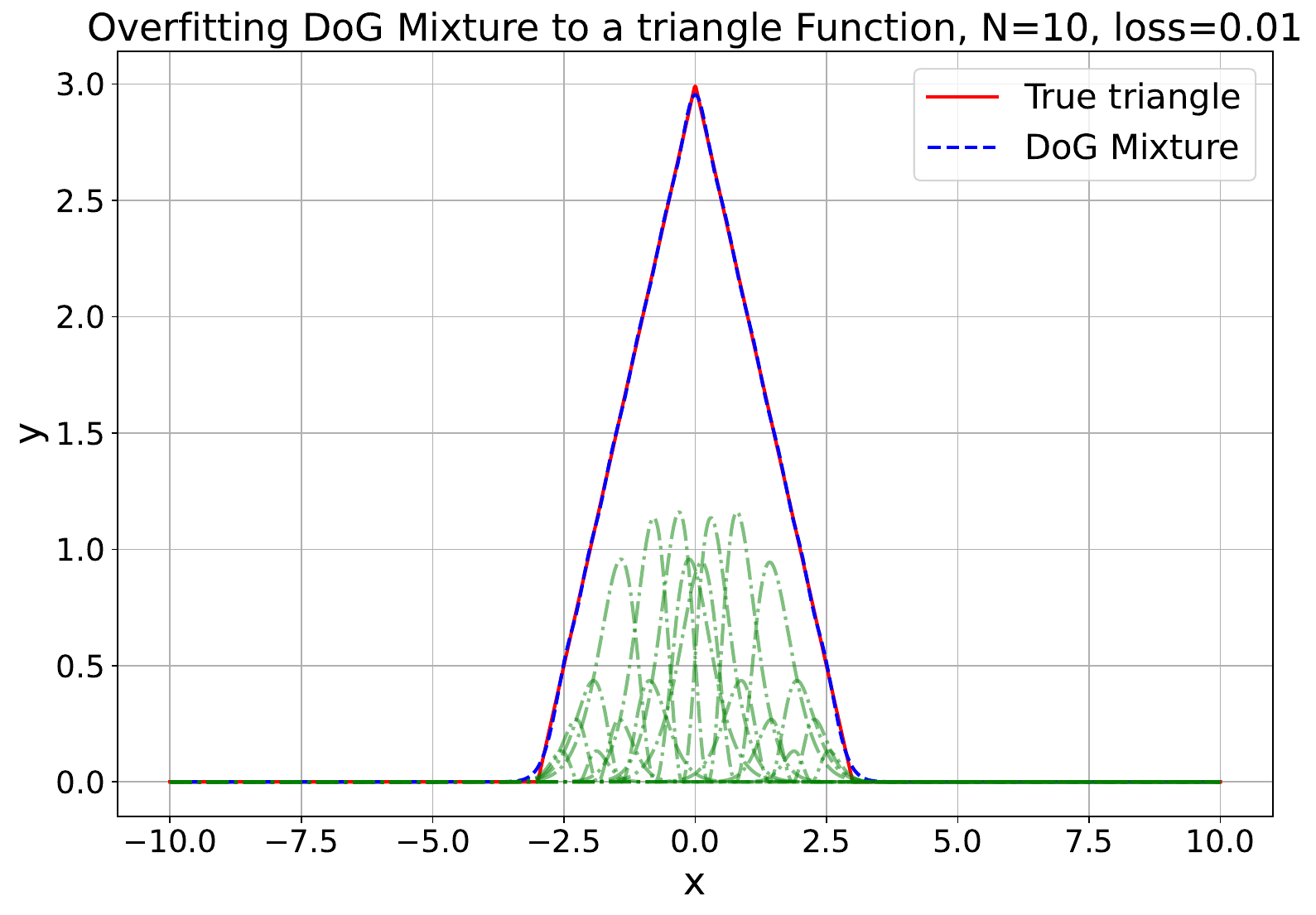} & 
    \includegraphics[width=0.24\linewidth]{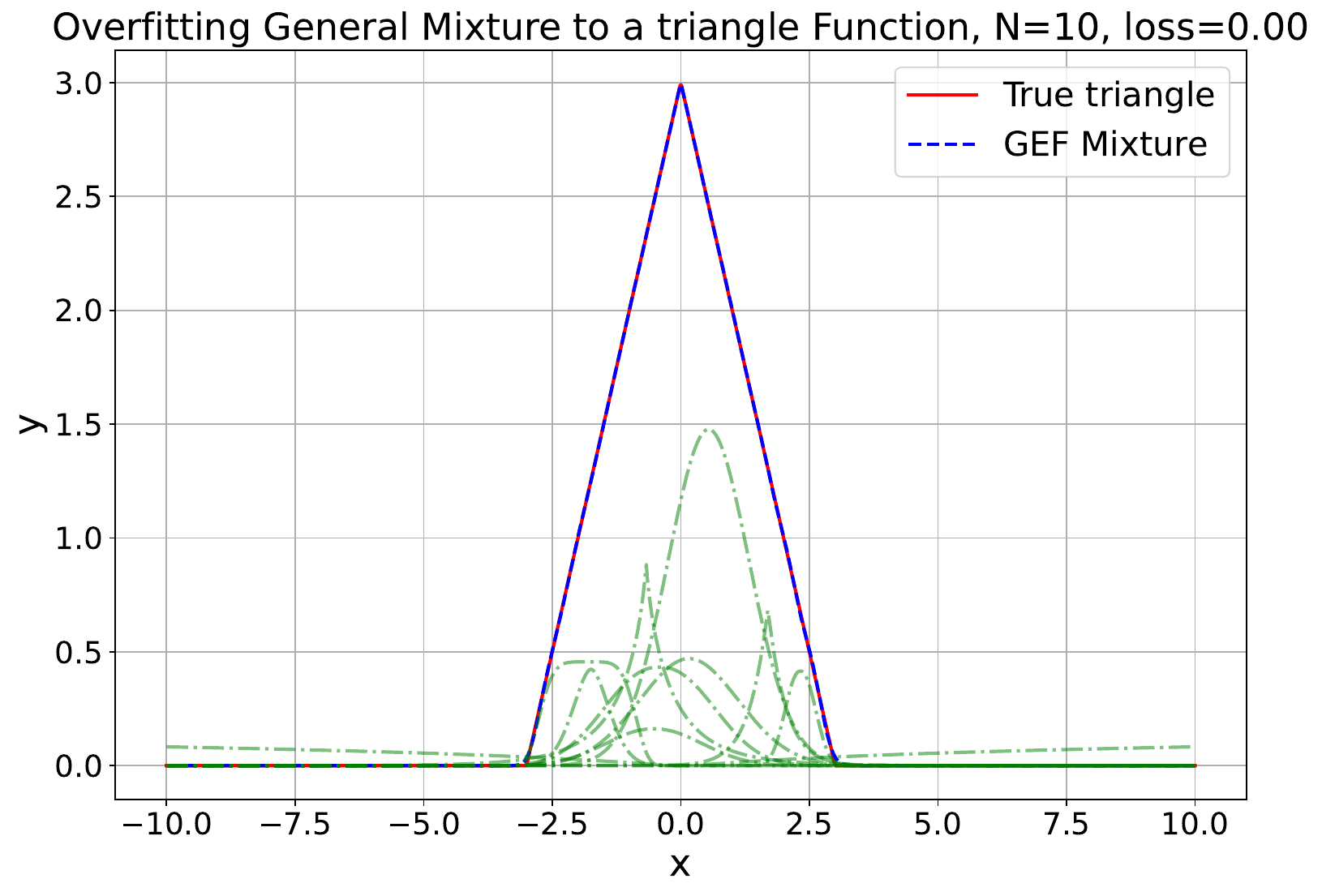}\\ 
    \includegraphics[width=0.24\linewidth]{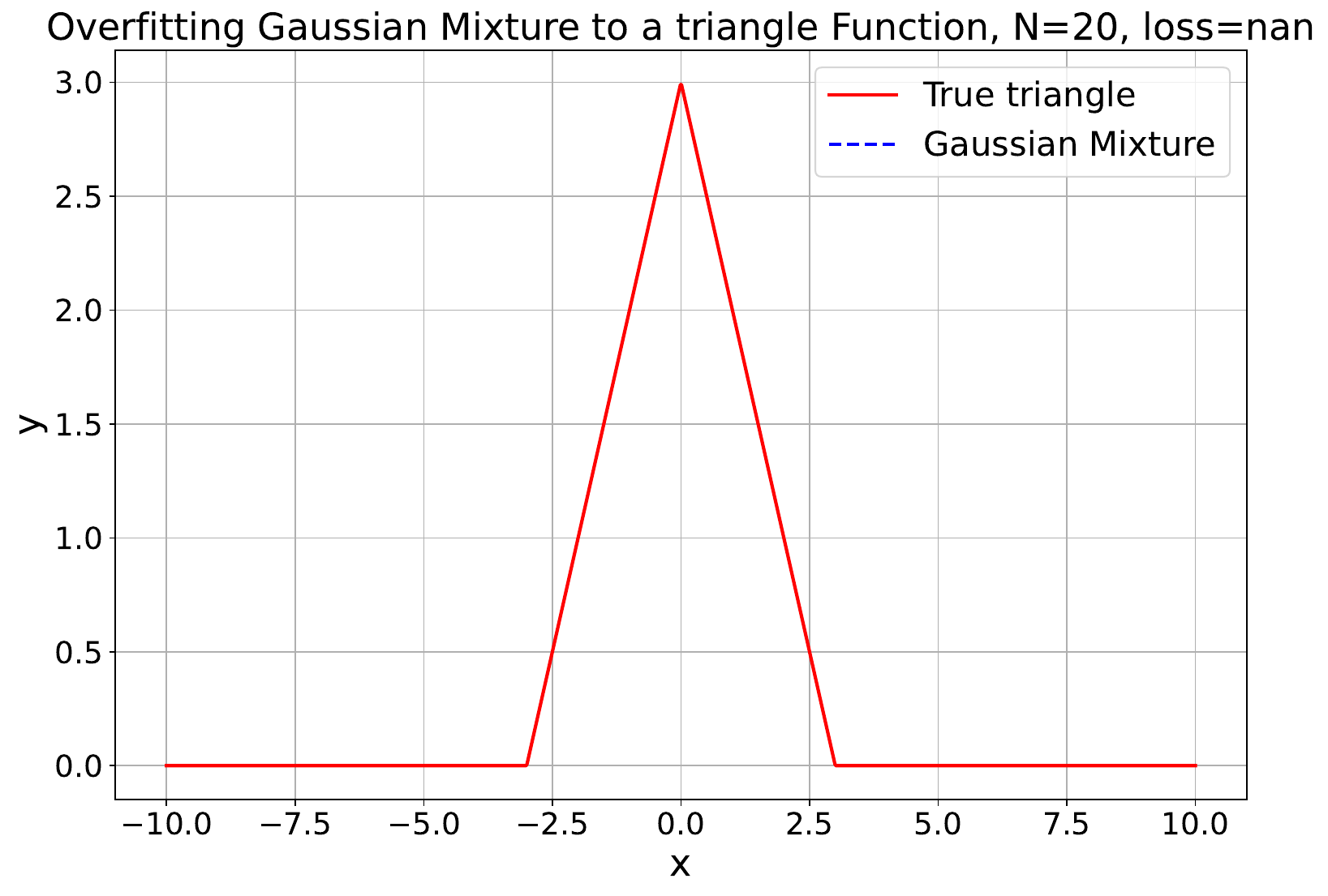} & 
    \includegraphics[width=0.24\linewidth]{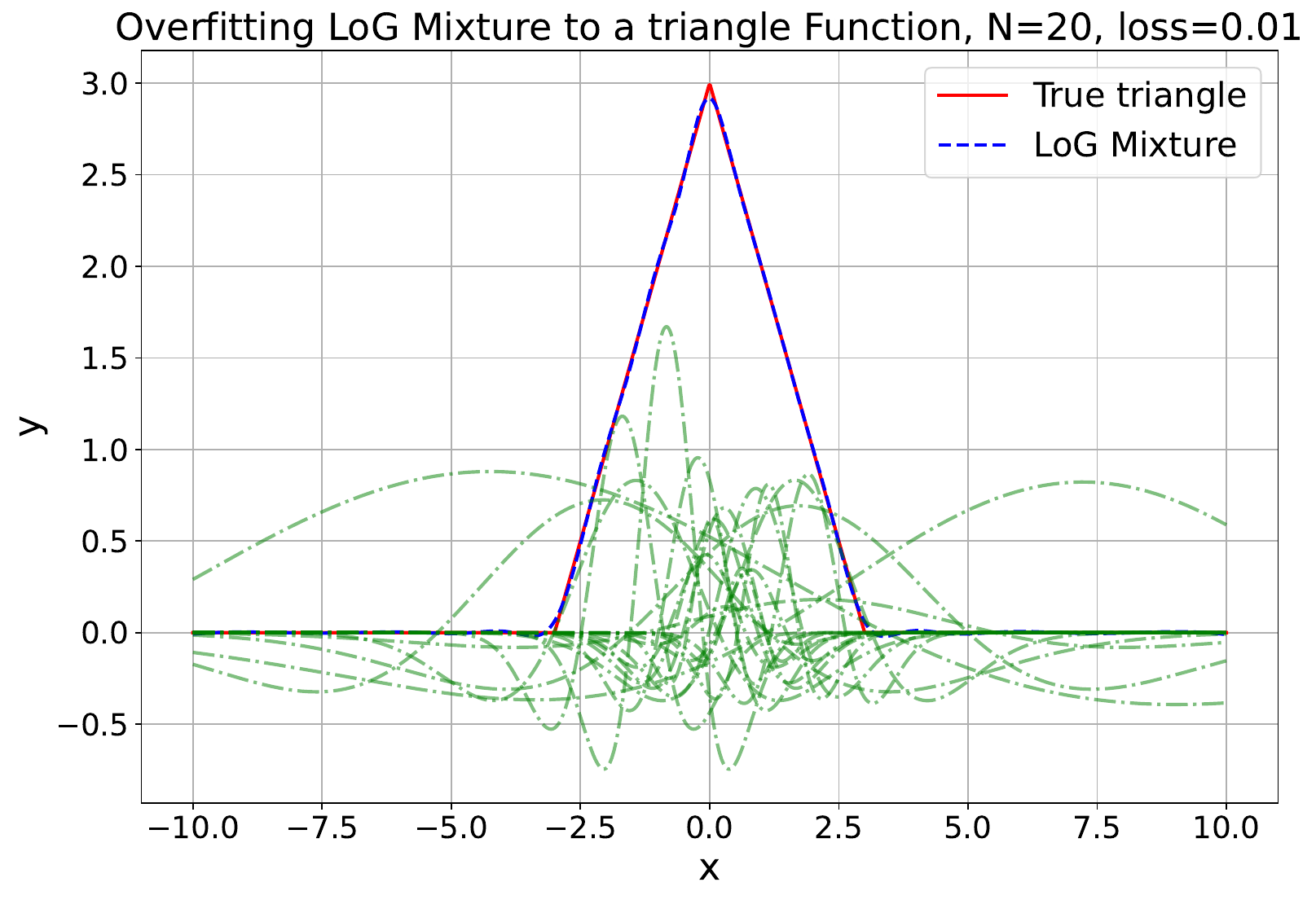} & 
    \includegraphics[width=0.24\linewidth]{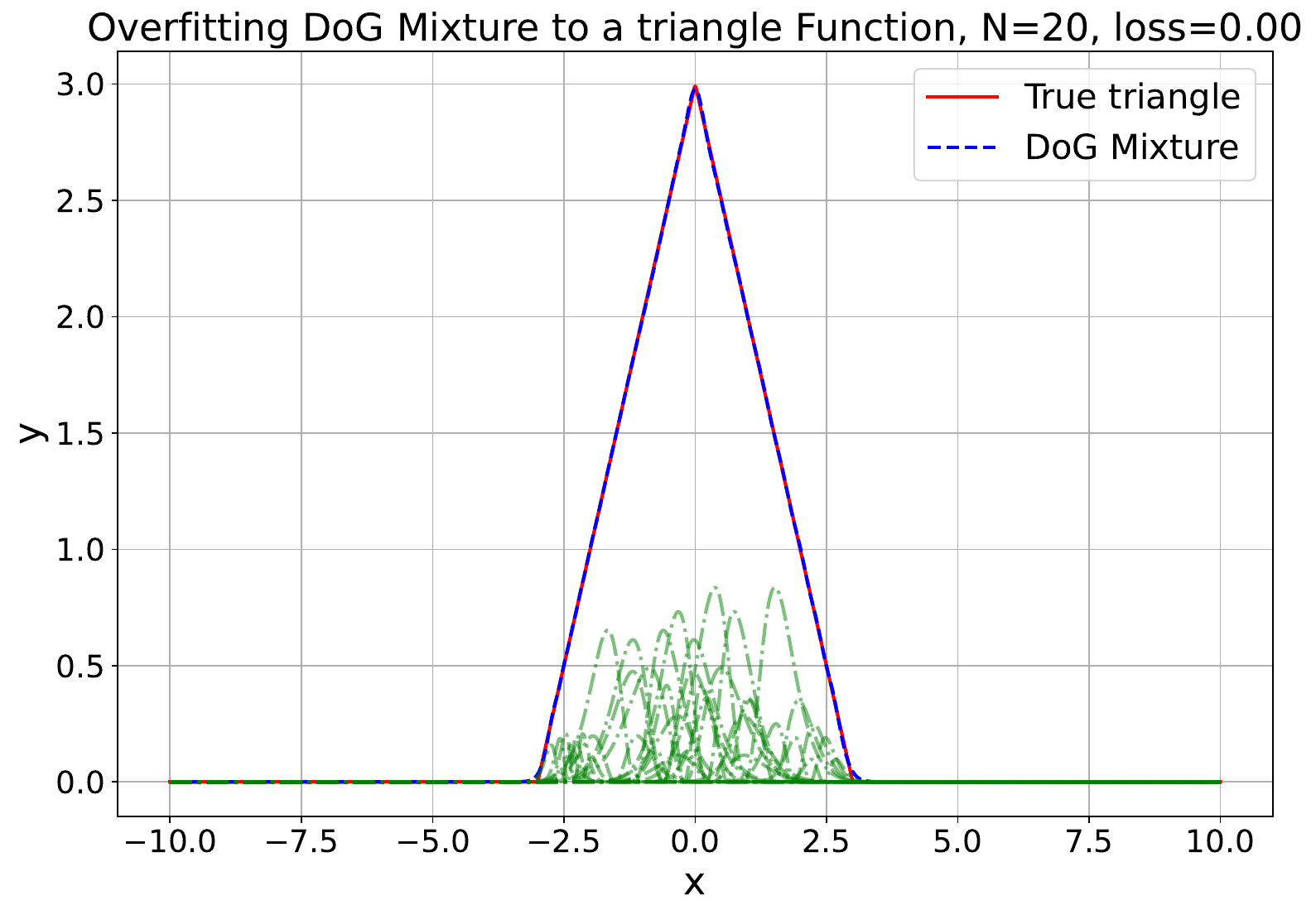} & 
    \includegraphics[width=0.24\linewidth]{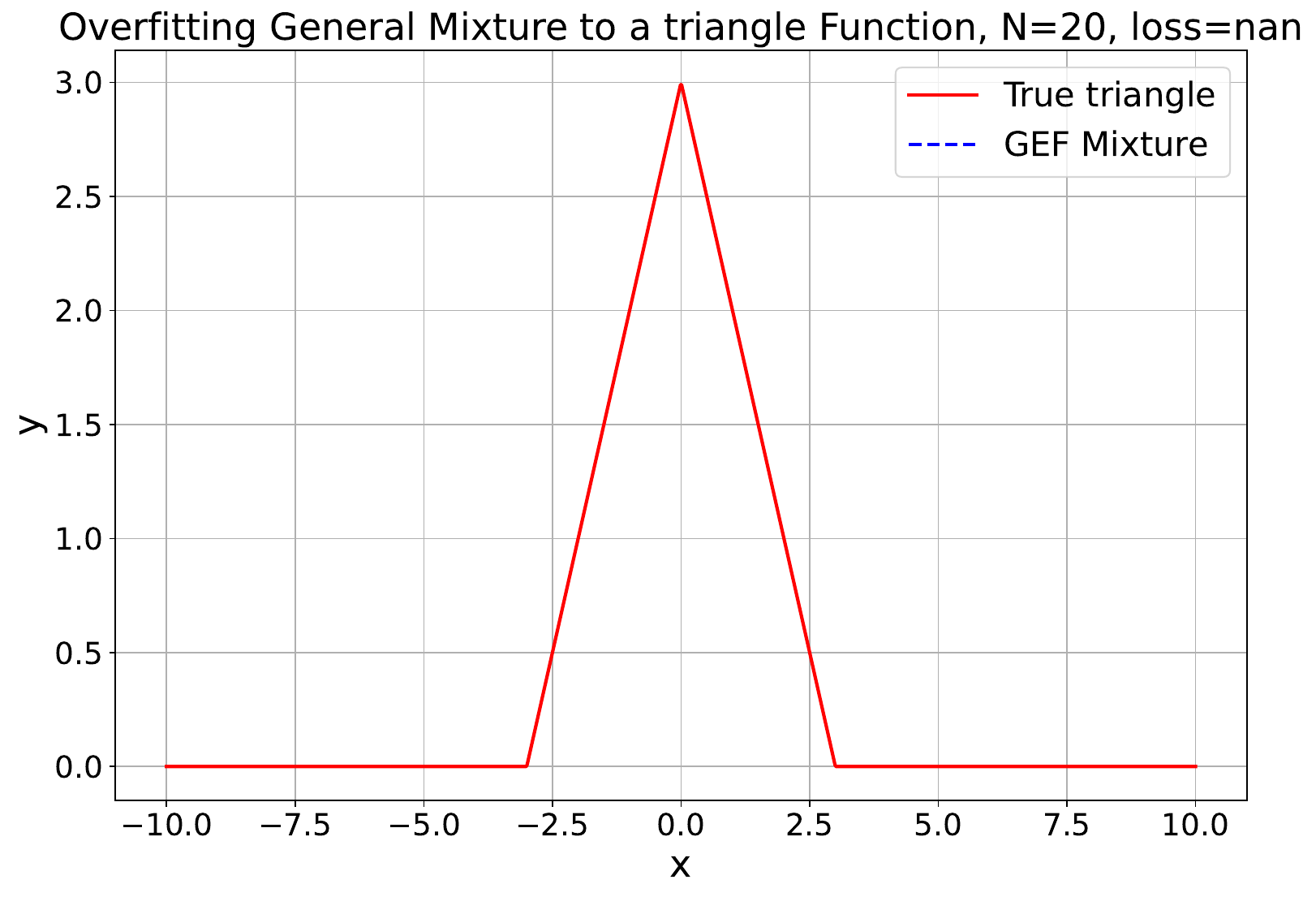}\\ 
    
    \end{tabular}
    }
    \caption{\textbf{Numerical Simulation Examples of Fitting Triangles with Positive Weights Mixtures ( N= 2, 5, 8, and 10 )}. We show some fitting examples for triangle signals with positive weight mixtures. The four mixtures used from left to right are Gaussians, LoG, DoG, and General mixtures. From top to bottom: N = 2, 8, and 10 components. The optimized individual components are shown in green. Some examples fail to optimize due to numerical instability in both Gaussians and GEF mixtures. Note that GEF is very efficient in fitting the triangle with few components while LoG and DoG are more stable for a larger number of components. }
    \label{supfig:fitting_triangle_p}
    \end{figure*}
    
\begin{figure*}[h]
    \centering
    \resizebox{1.0\linewidth}{!}{
    \begin{tabular}{cccc}
    \tabcolsep=0.01cm
    Gaussian Mixture& LoG Mixture & DoG Mixture & GEF Mixture \\ 
    \includegraphics[width=0.24\linewidth]{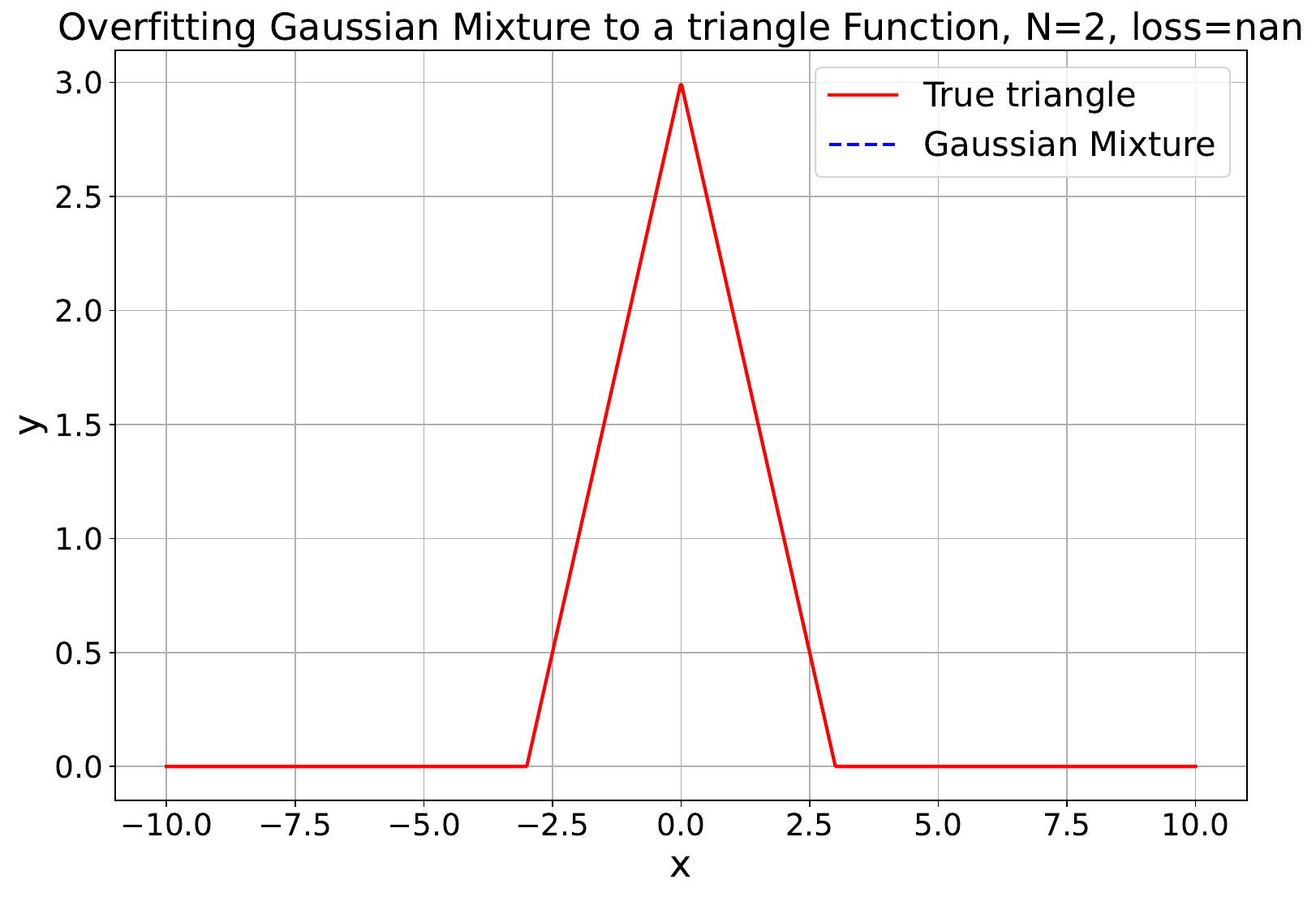} & 
    \includegraphics[width=0.24\linewidth]{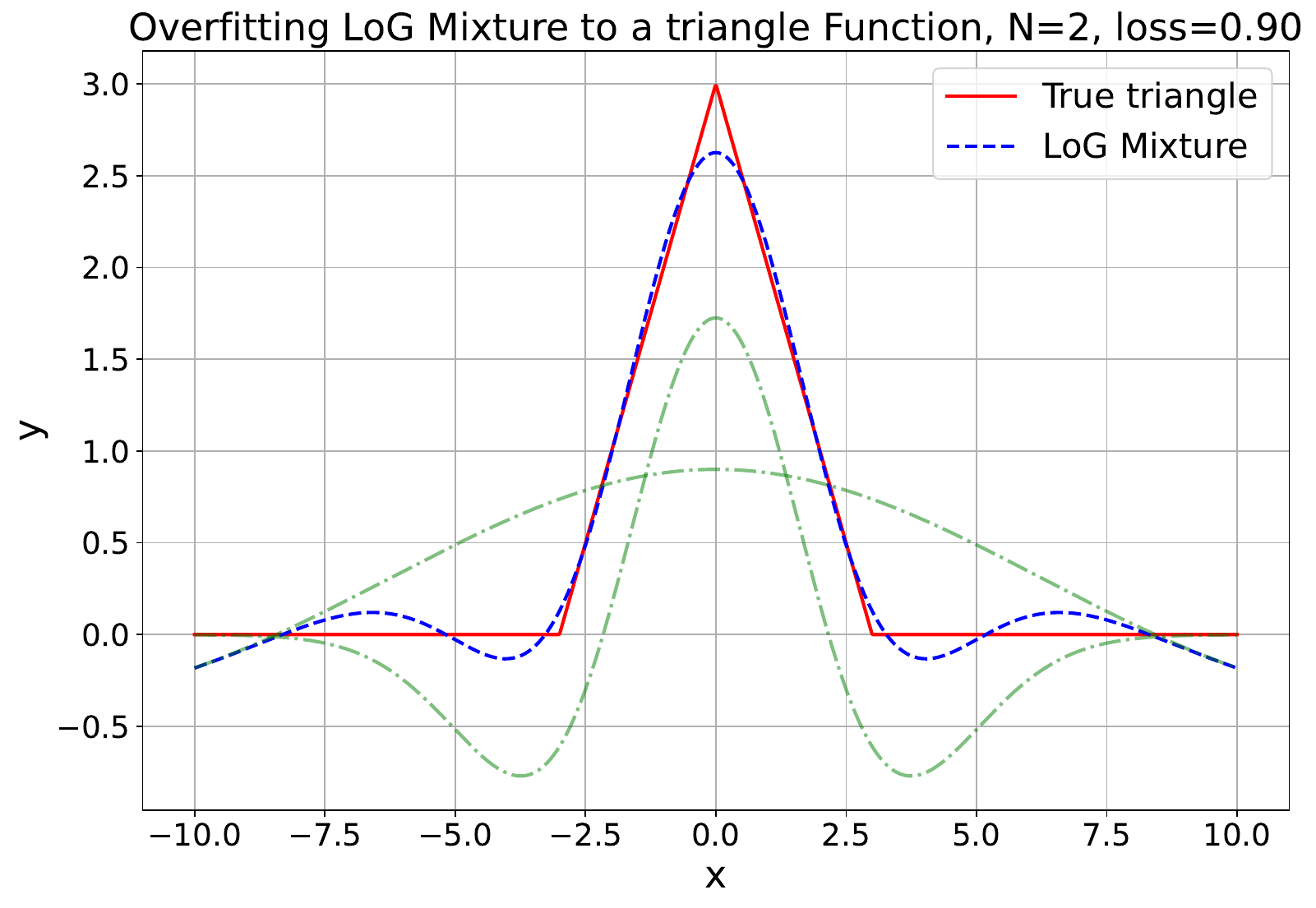} & 
    \includegraphics[width=0.24\linewidth]{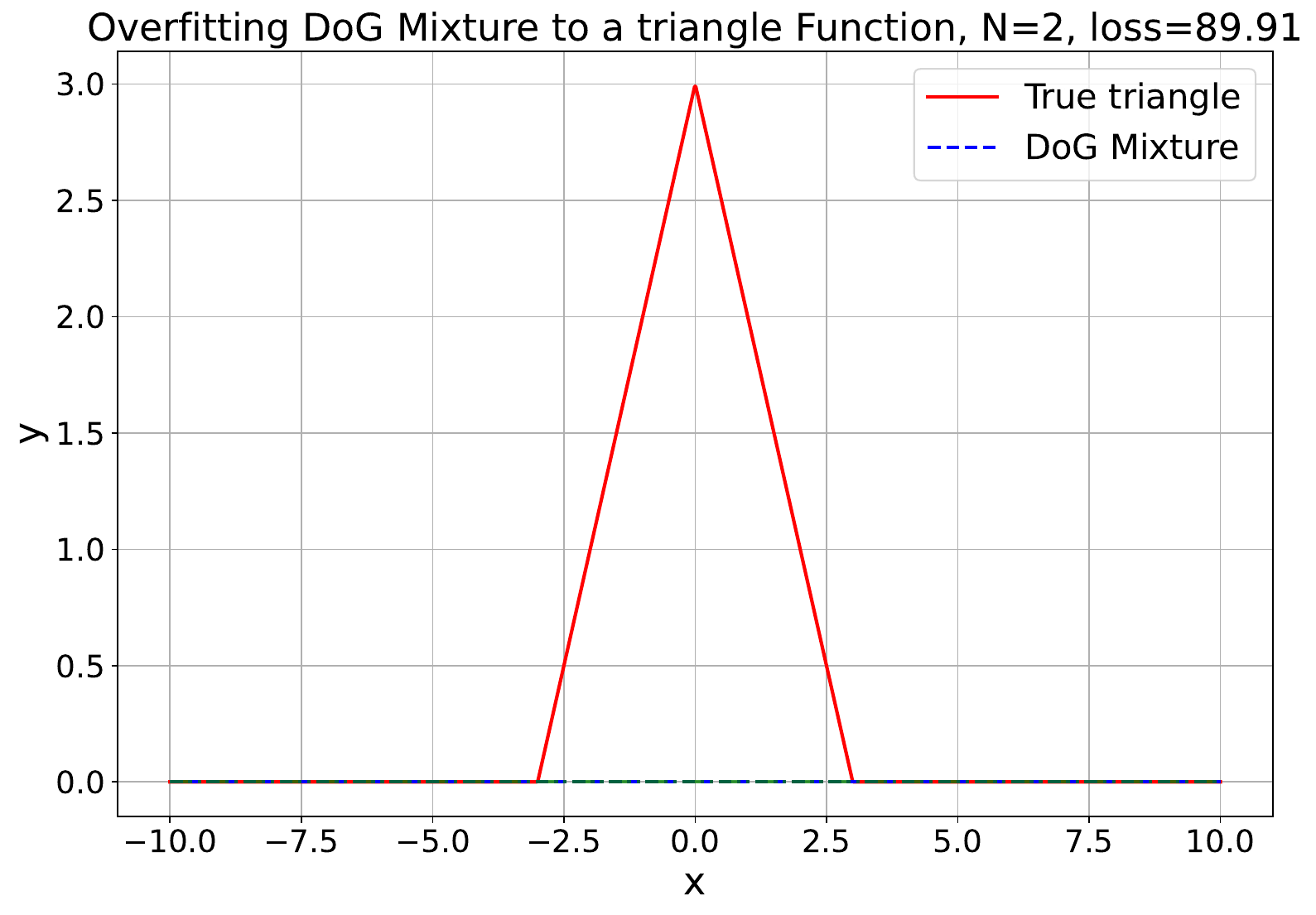} & 
    \includegraphics[width=0.24\linewidth]{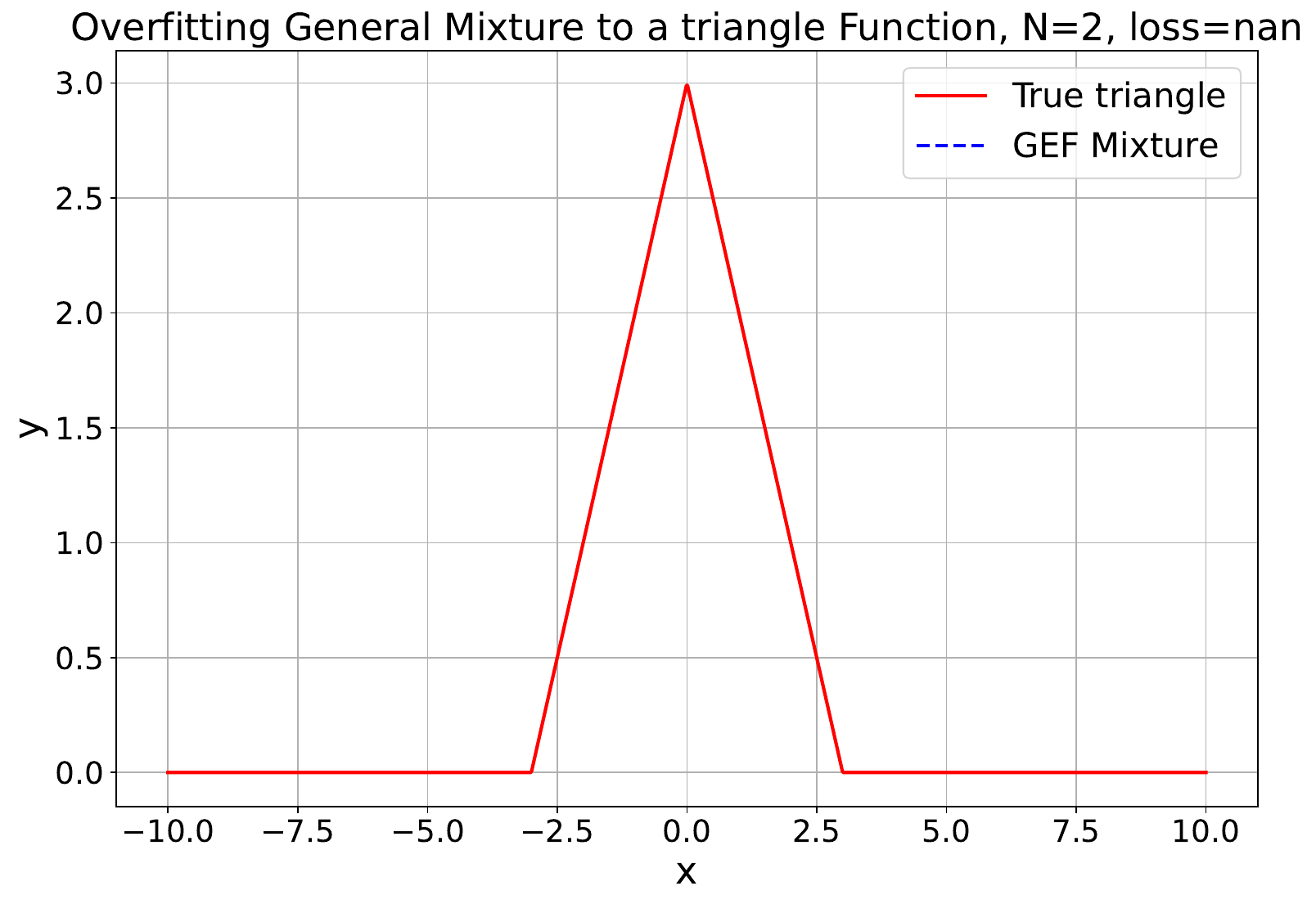}\\ 
    \includegraphics[width=0.24\linewidth]{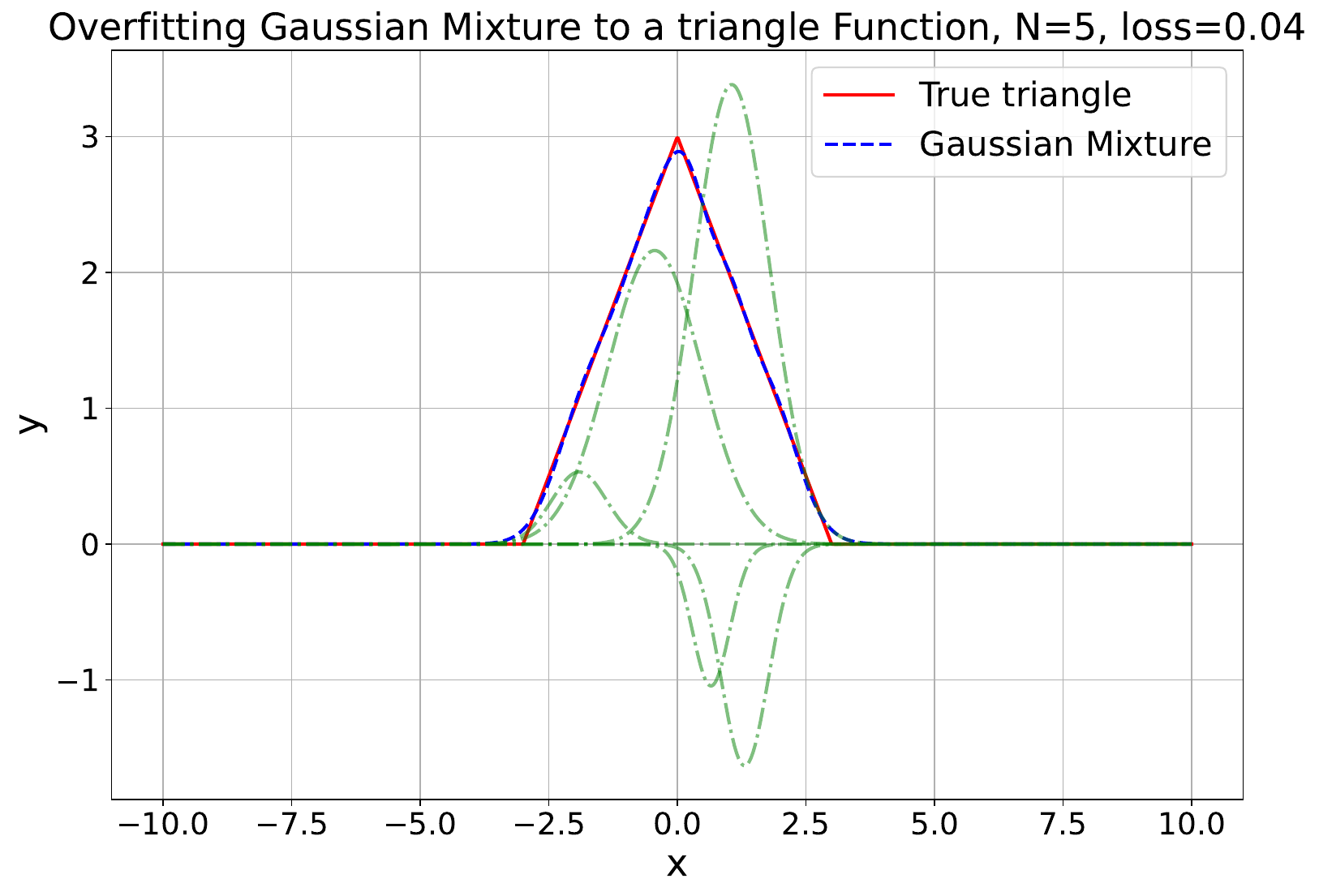} & 
    \includegraphics[width=0.24\linewidth]{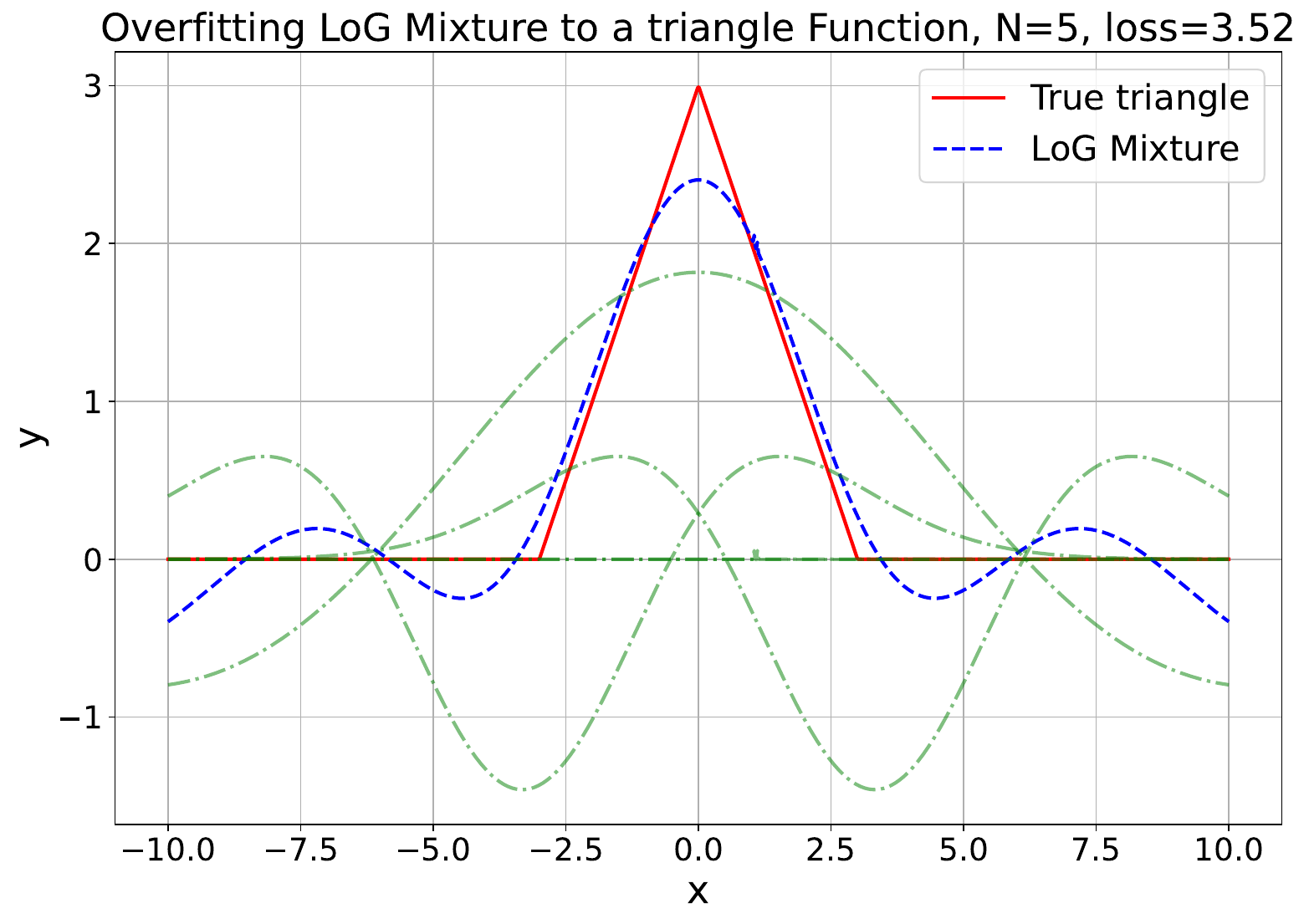} & 
    \includegraphics[width=0.24\linewidth]{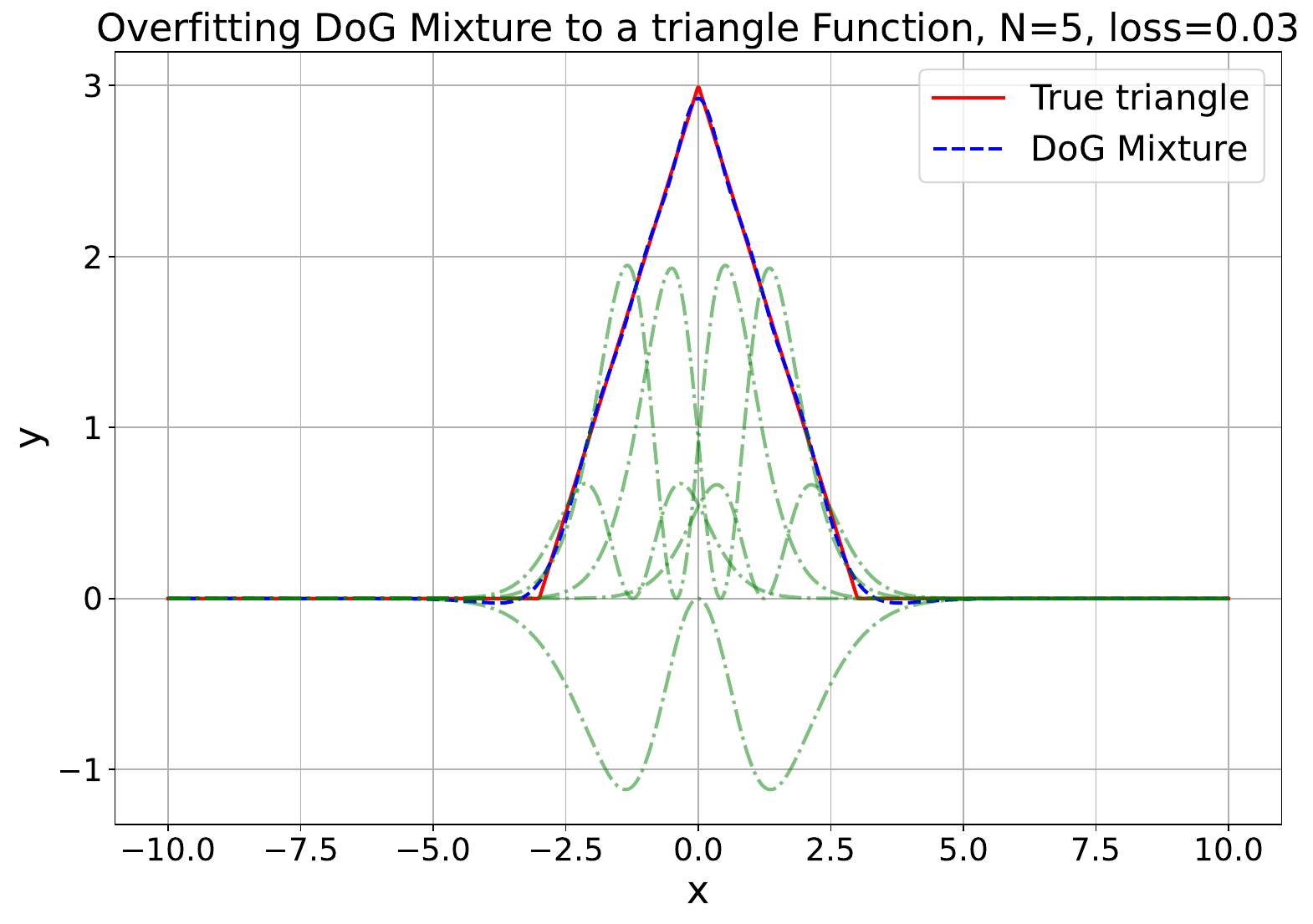} & 
    \includegraphics[width=0.24\linewidth]{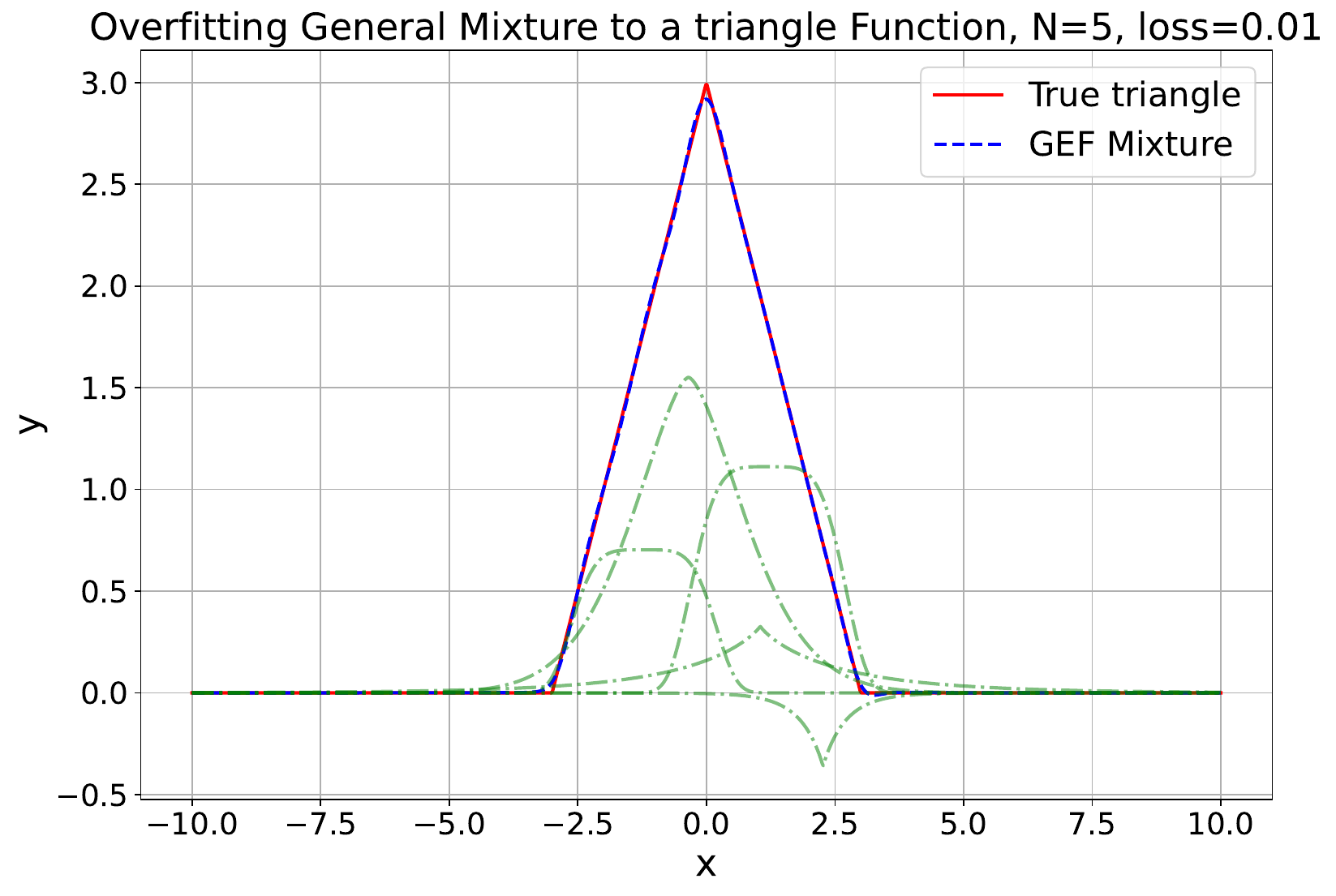}\\ 
    \includegraphics[width=0.24\linewidth]{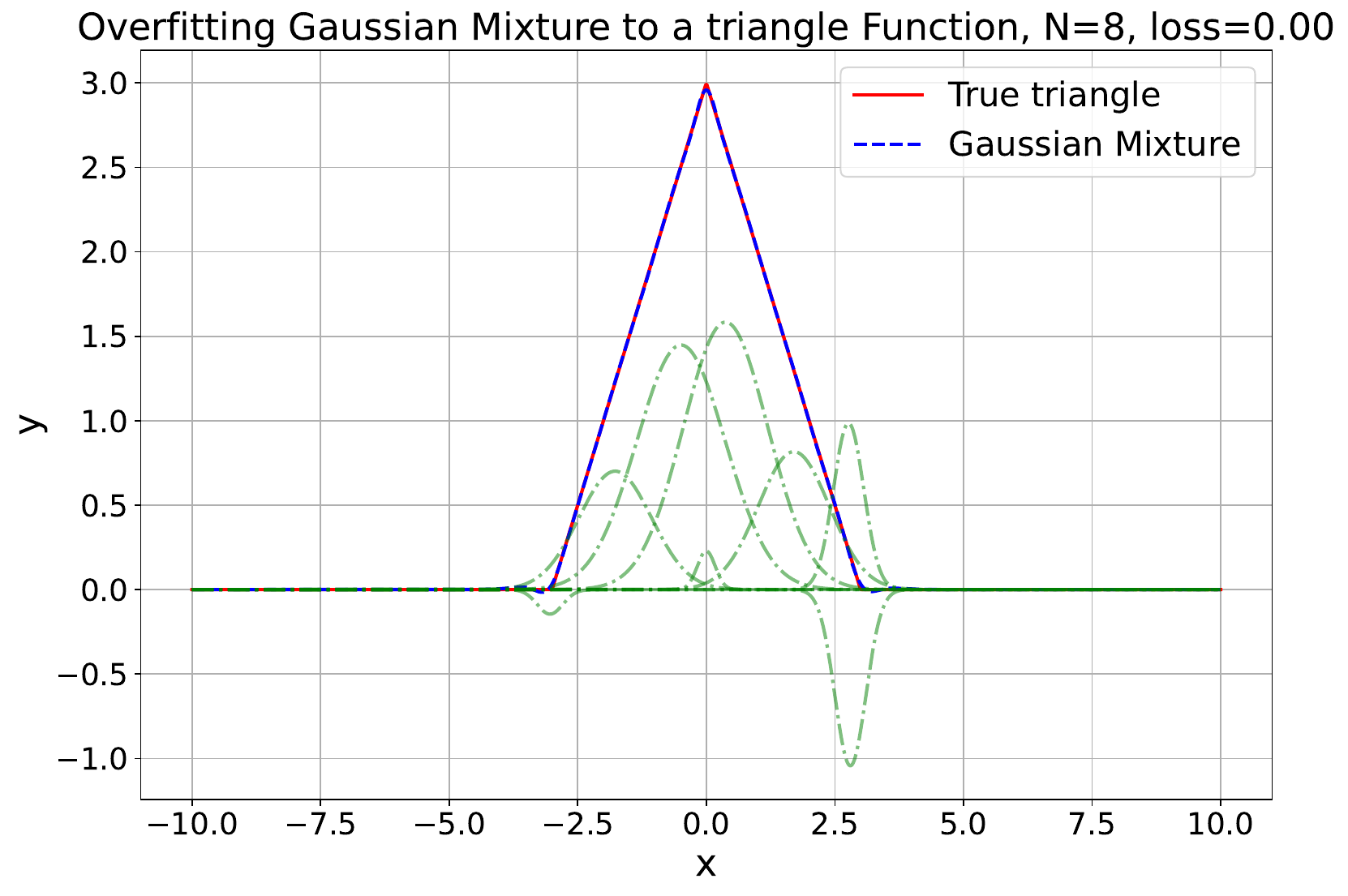} & 
    \includegraphics[width=0.24\linewidth]{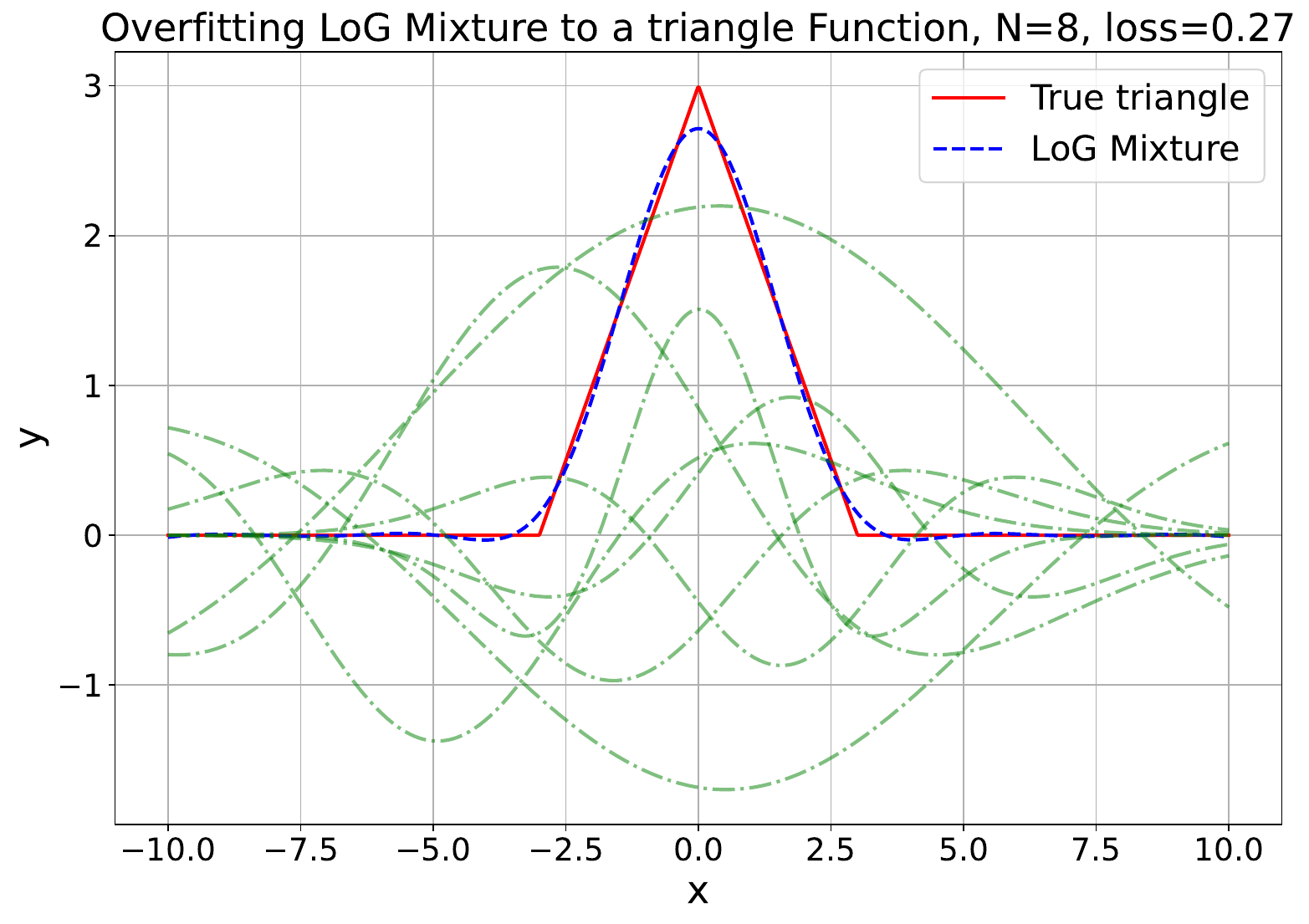} & 
    \includegraphics[width=0.24\linewidth]{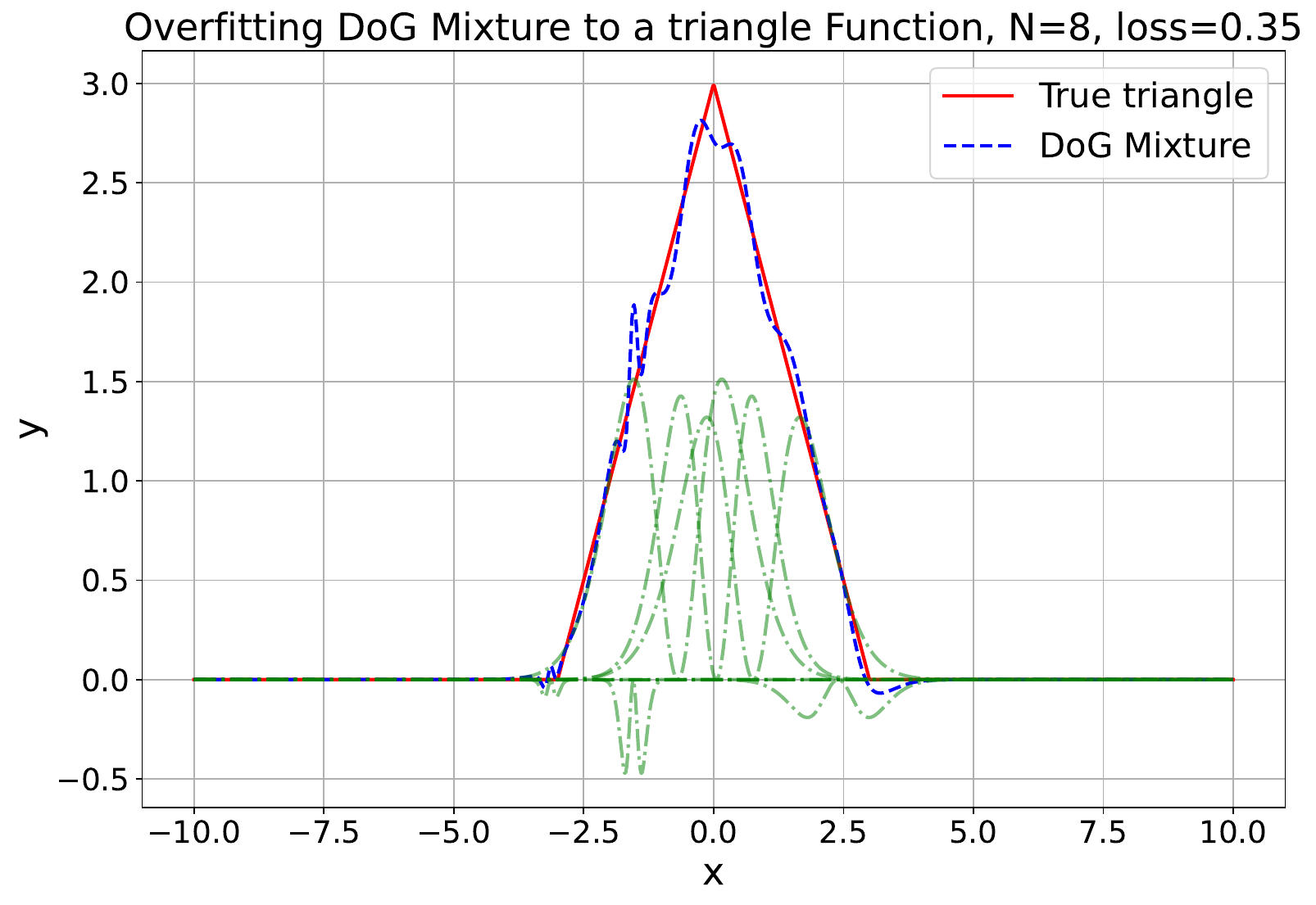} & 
    \includegraphics[width=0.24\linewidth]{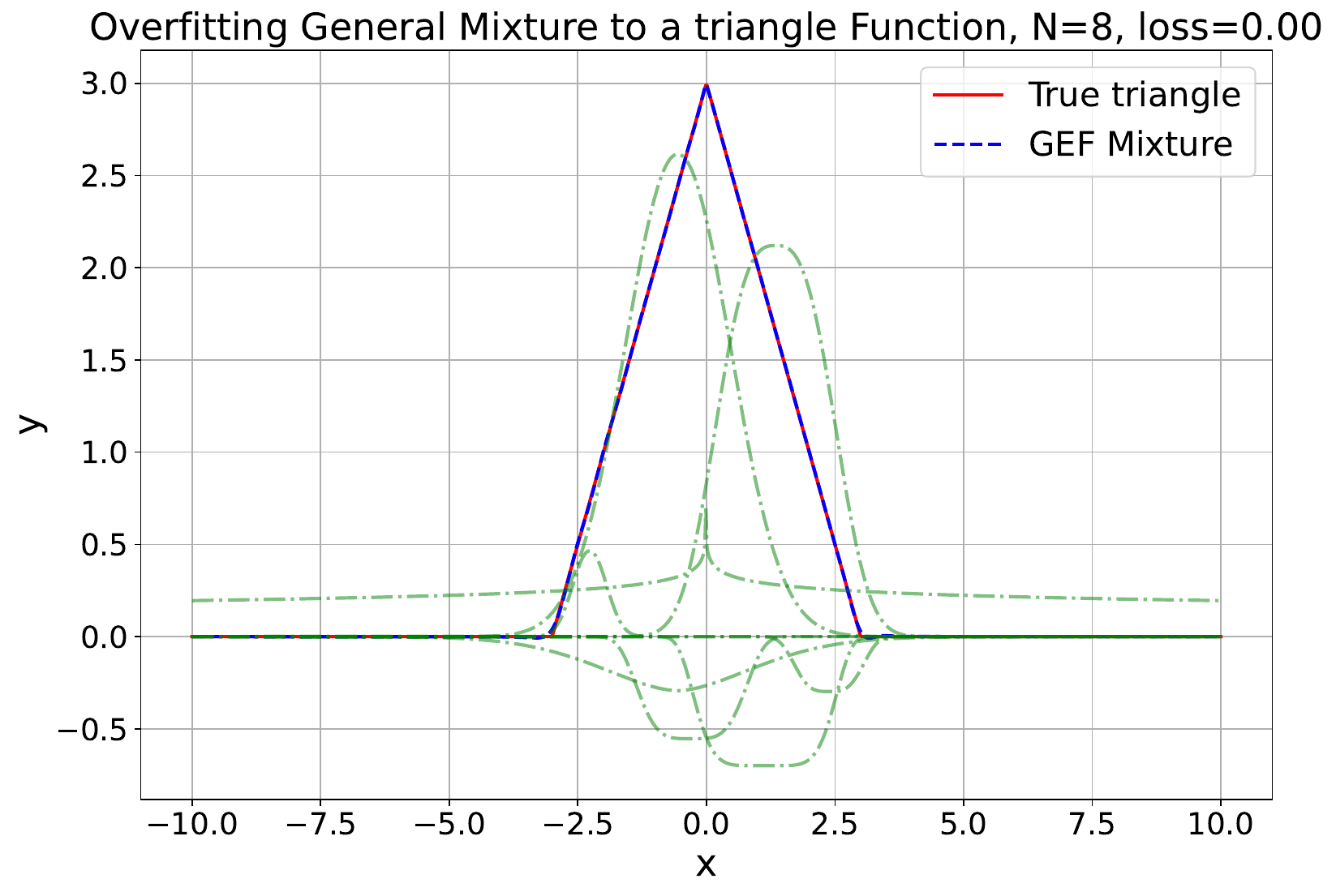}\\ 
    \includegraphics[width=0.24\linewidth]{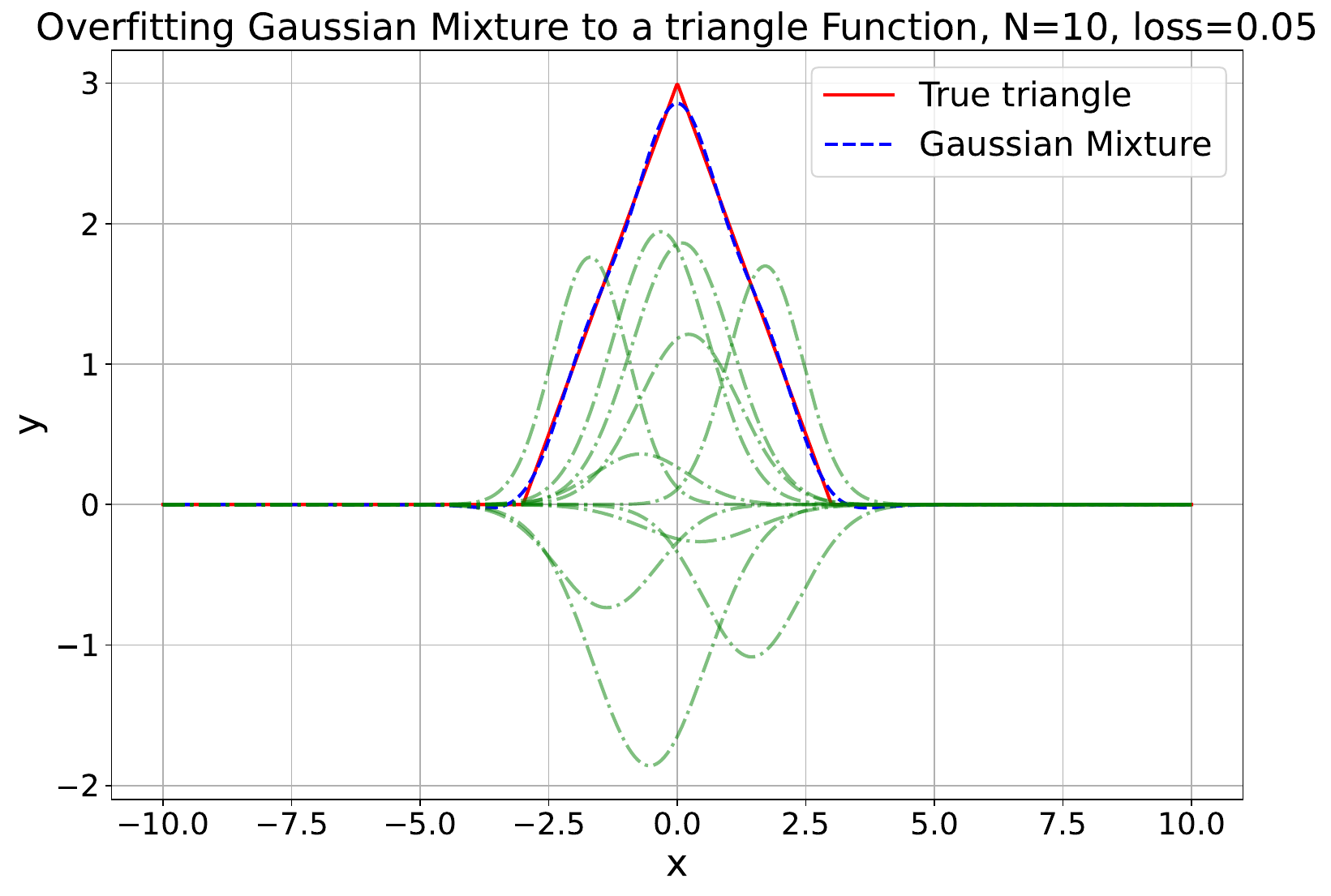} & 
    \includegraphics[width=0.24\linewidth]{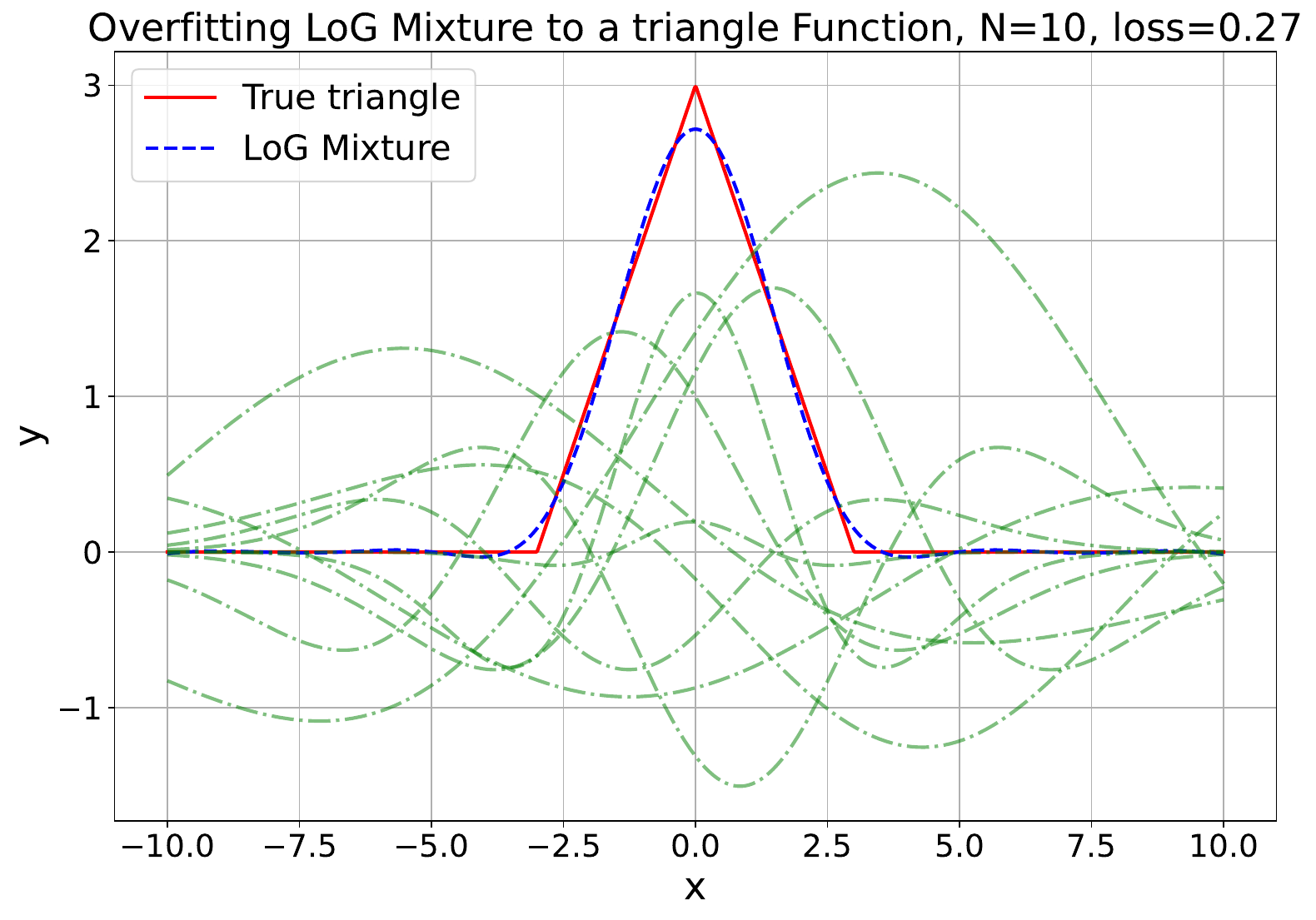} & 
    \includegraphics[width=0.24\linewidth]{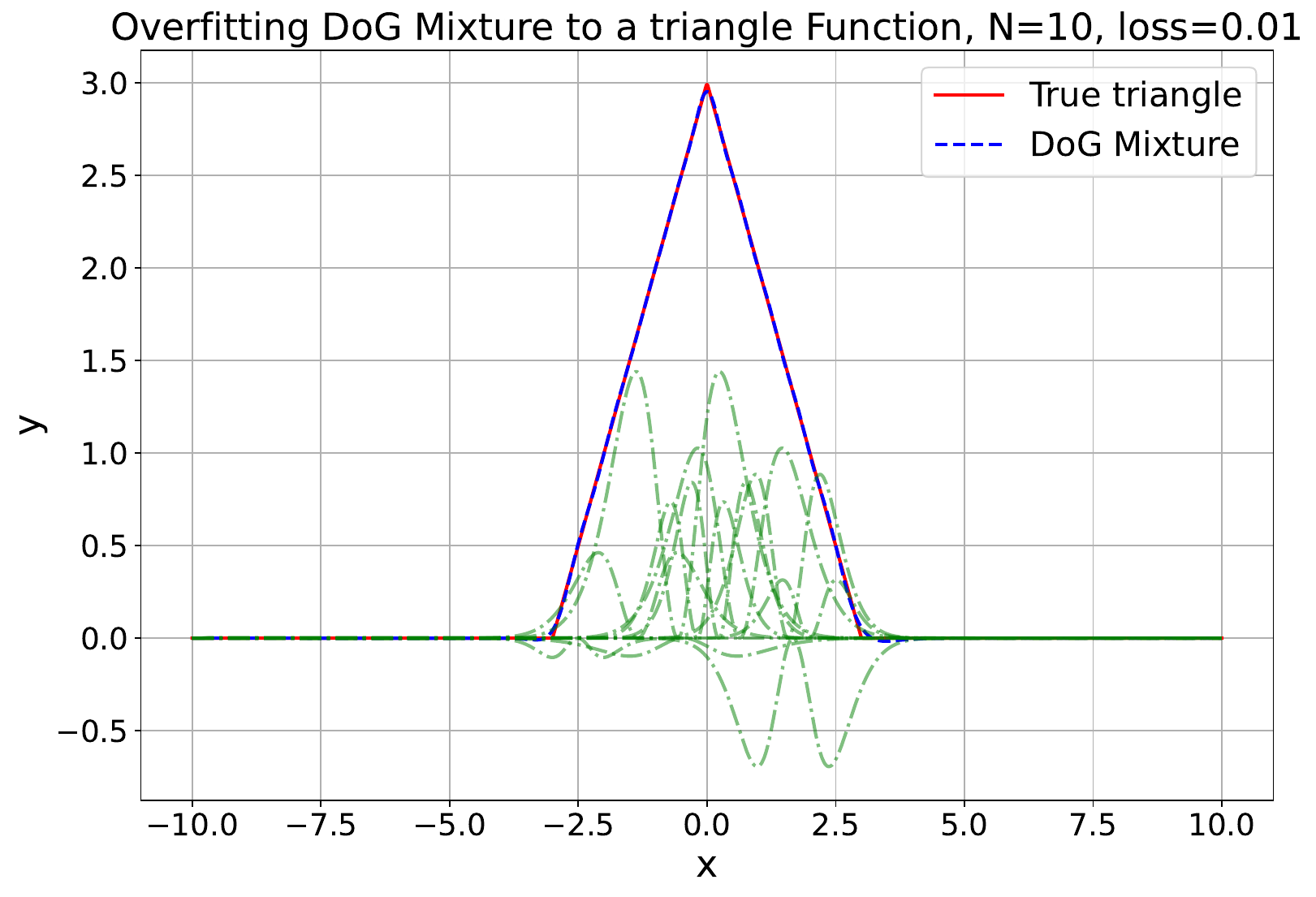} & 
    \includegraphics[width=0.24\linewidth]{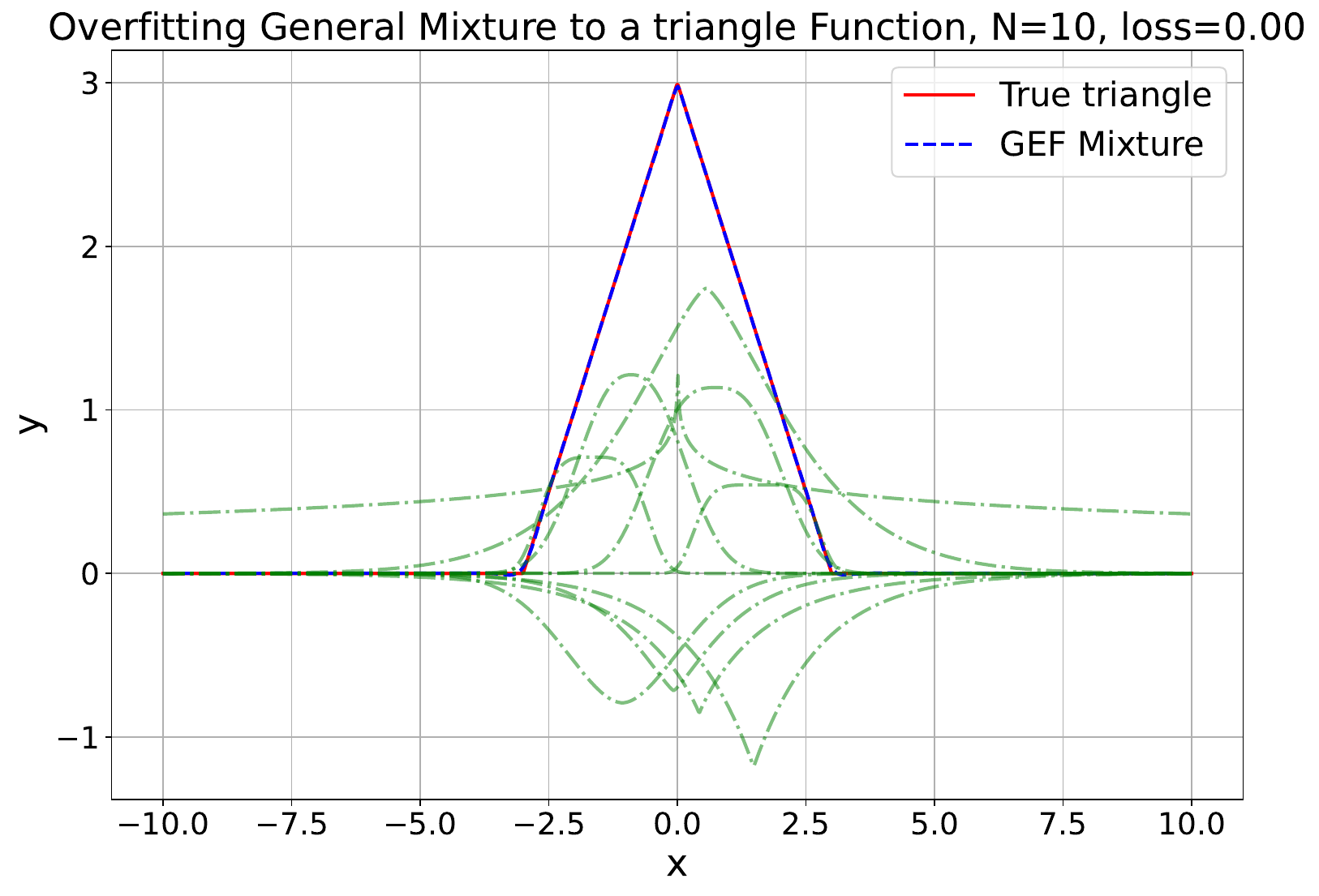}\\ 
    \includegraphics[width=0.24\linewidth]{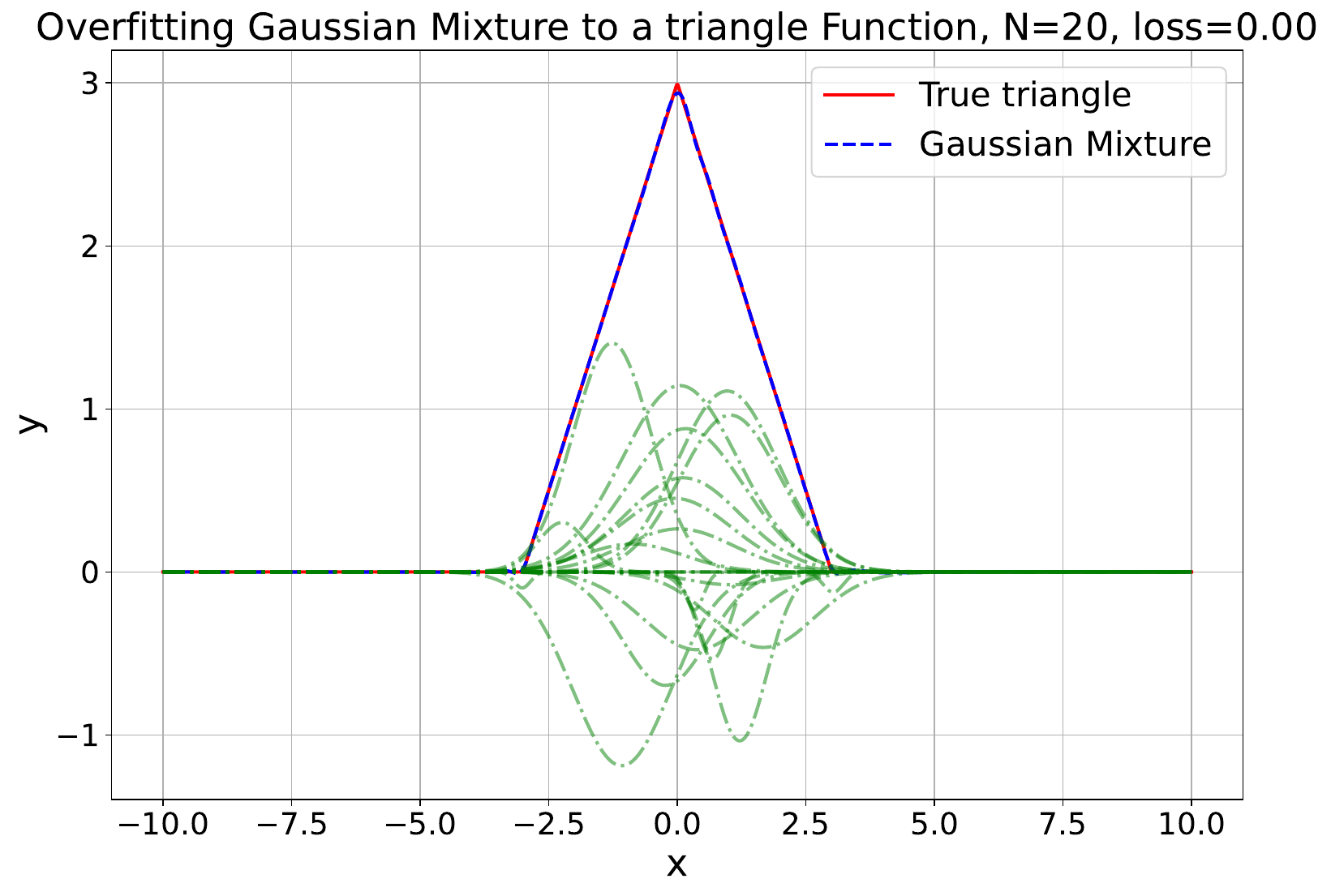} & 
    \includegraphics[width=0.24\linewidth]{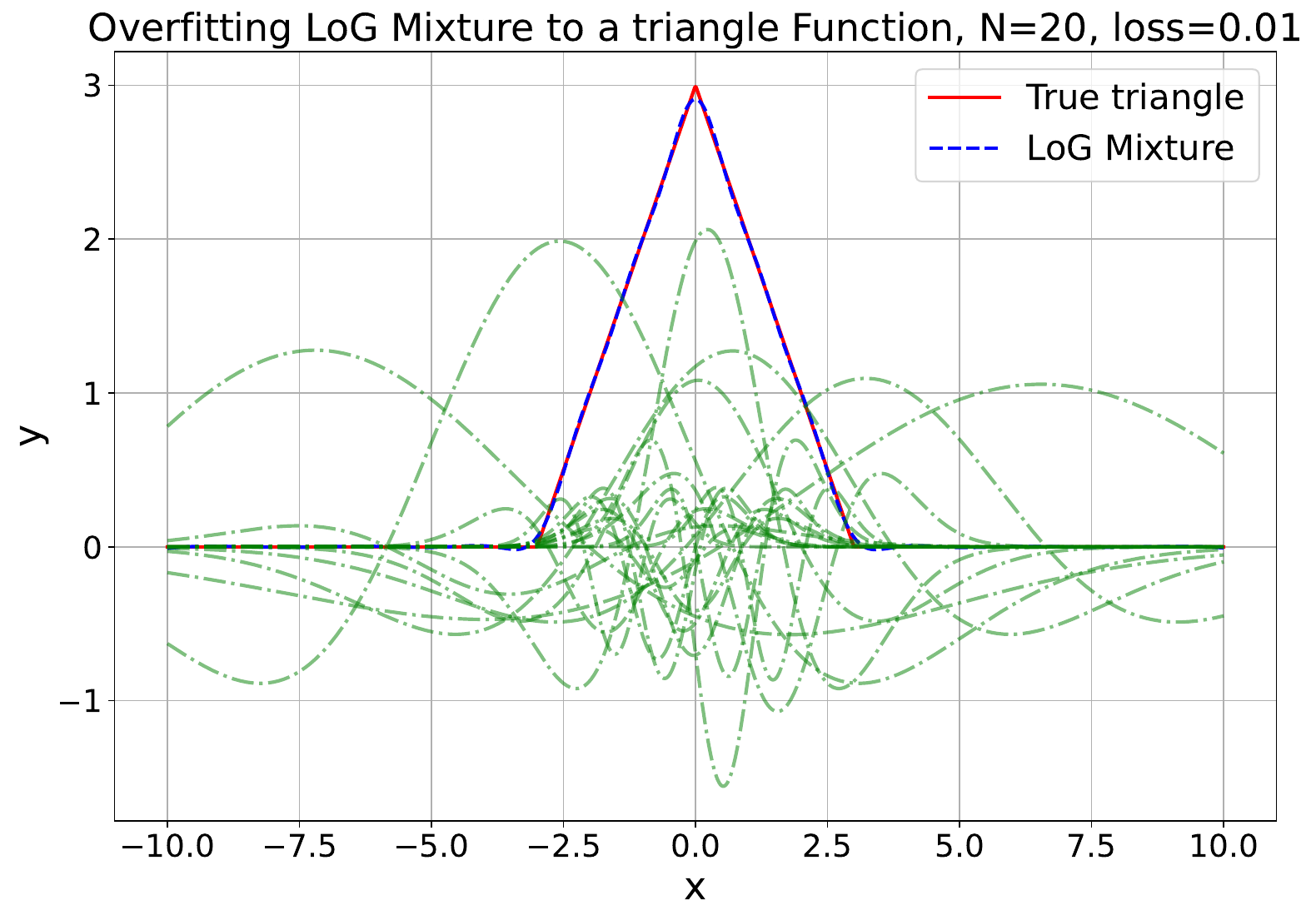} & 
    \includegraphics[width=0.24\linewidth]{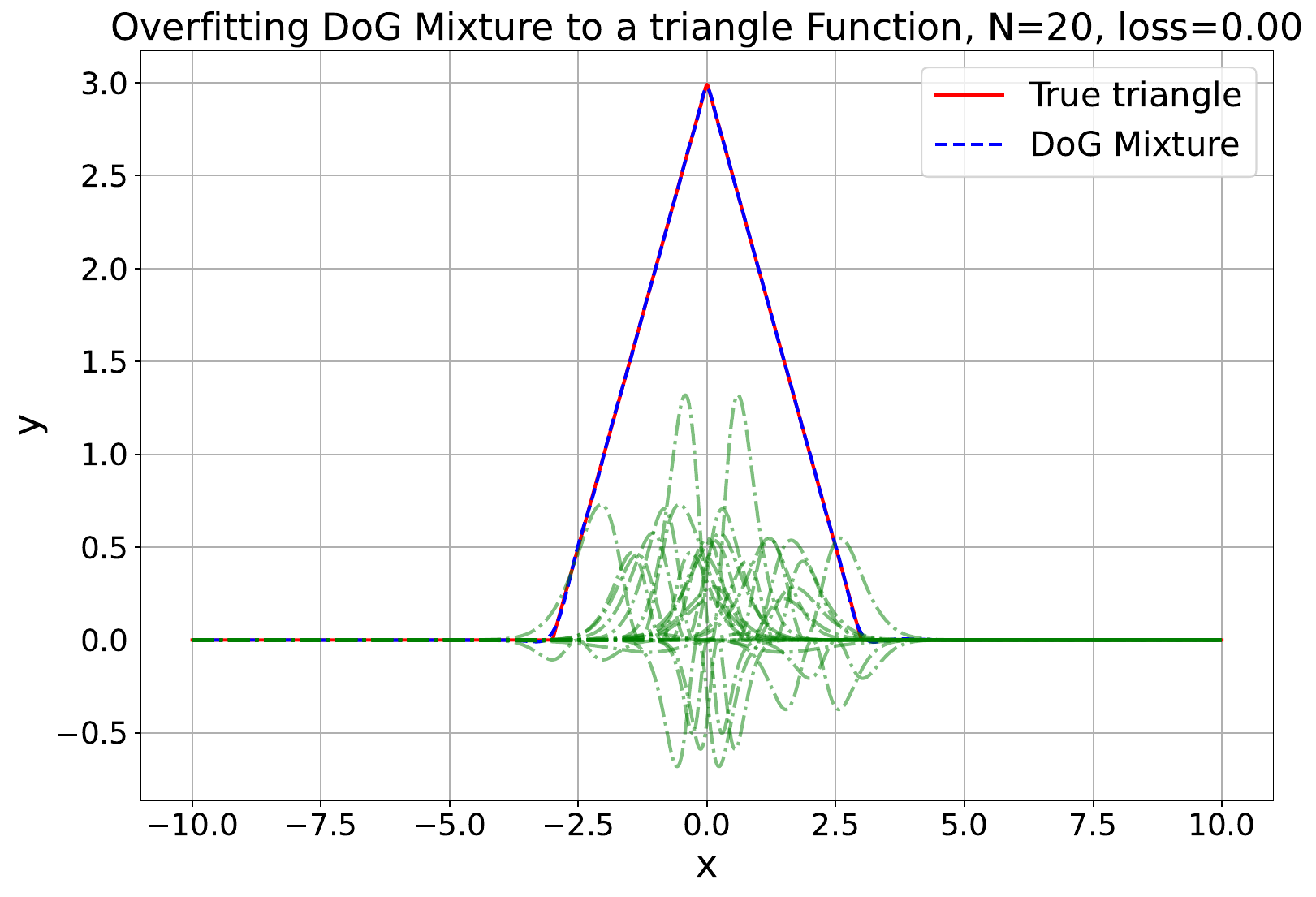} & 
    \includegraphics[width=0.24\linewidth]{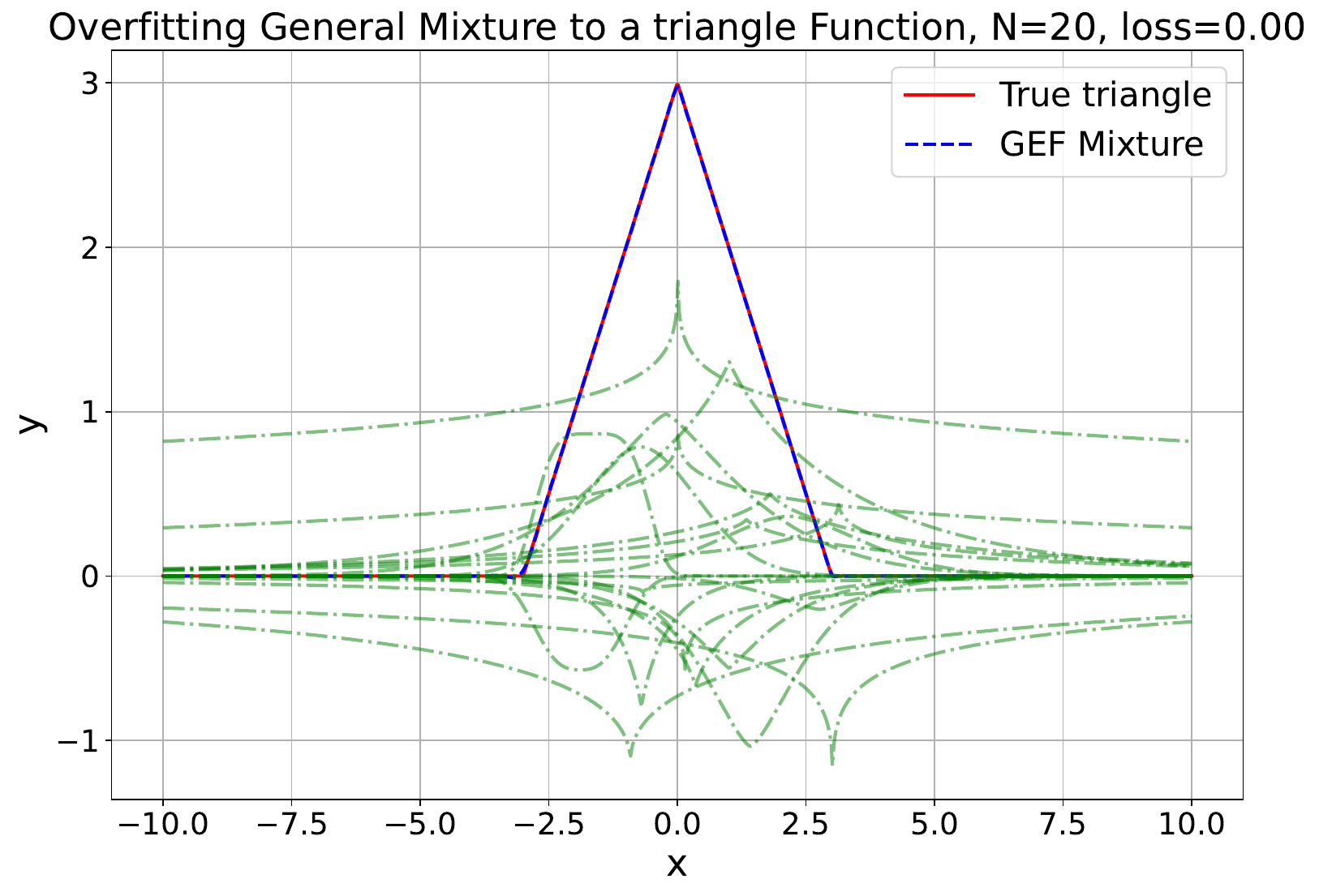}\\ 
    
    \end{tabular}
    }
    \caption{\textbf{Numerical Simulation Examples of Fitting Triangles with Real Weights Mixtures ( N= 2, 5, 8, and 10 )}. We show some fitting examples for triangle signals with Real weights mixtures (can be negative). The four mixtures used from left to right are Gaussians, LoG, DoG, and General mixtures. From top to bottom: N = 2, 8, and 10 components. The optimized individual components are shown in green. Some examples fail to optimize due to numerical instability in both Gaussians and GEF mixtures. Note that GEF is very efficient in fitting the triangle with few components while LoG and DoG are more stable for a larger number of components. }
    \label{supfig:fitting_triangle_N}
    \end{figure*}
    
\begin{figure*}[h]
    \centering
    \resizebox{1.0\linewidth}{!}{
    \begin{tabular}{cccc}
    \tabcolsep=0.01cm
    Gaussian Mixture& LoG Mixture & DoG Mixture & GEF Mixture \\ 
    \includegraphics[width=0.24\linewidth]{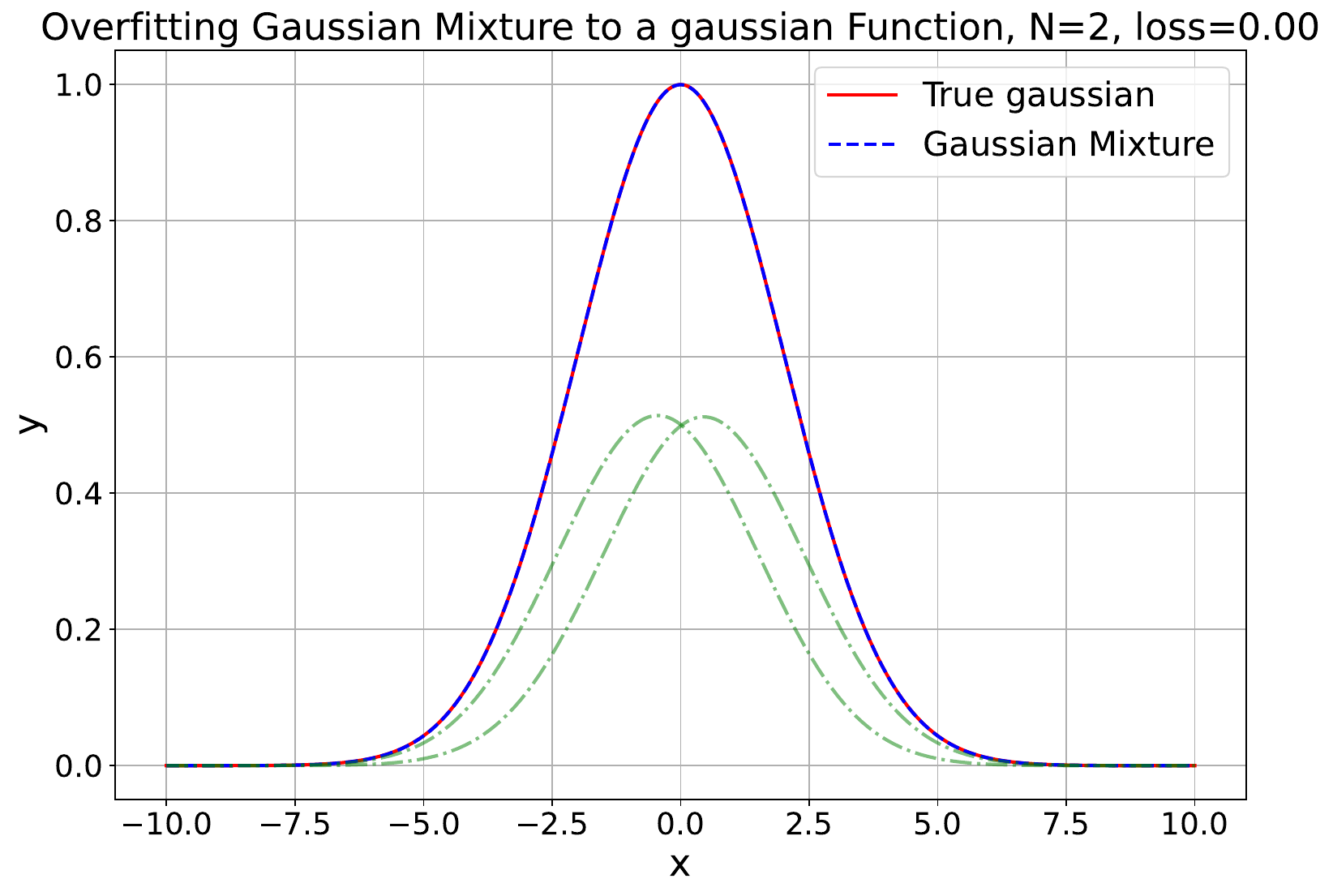} & 
    \includegraphics[width=0.24\linewidth]{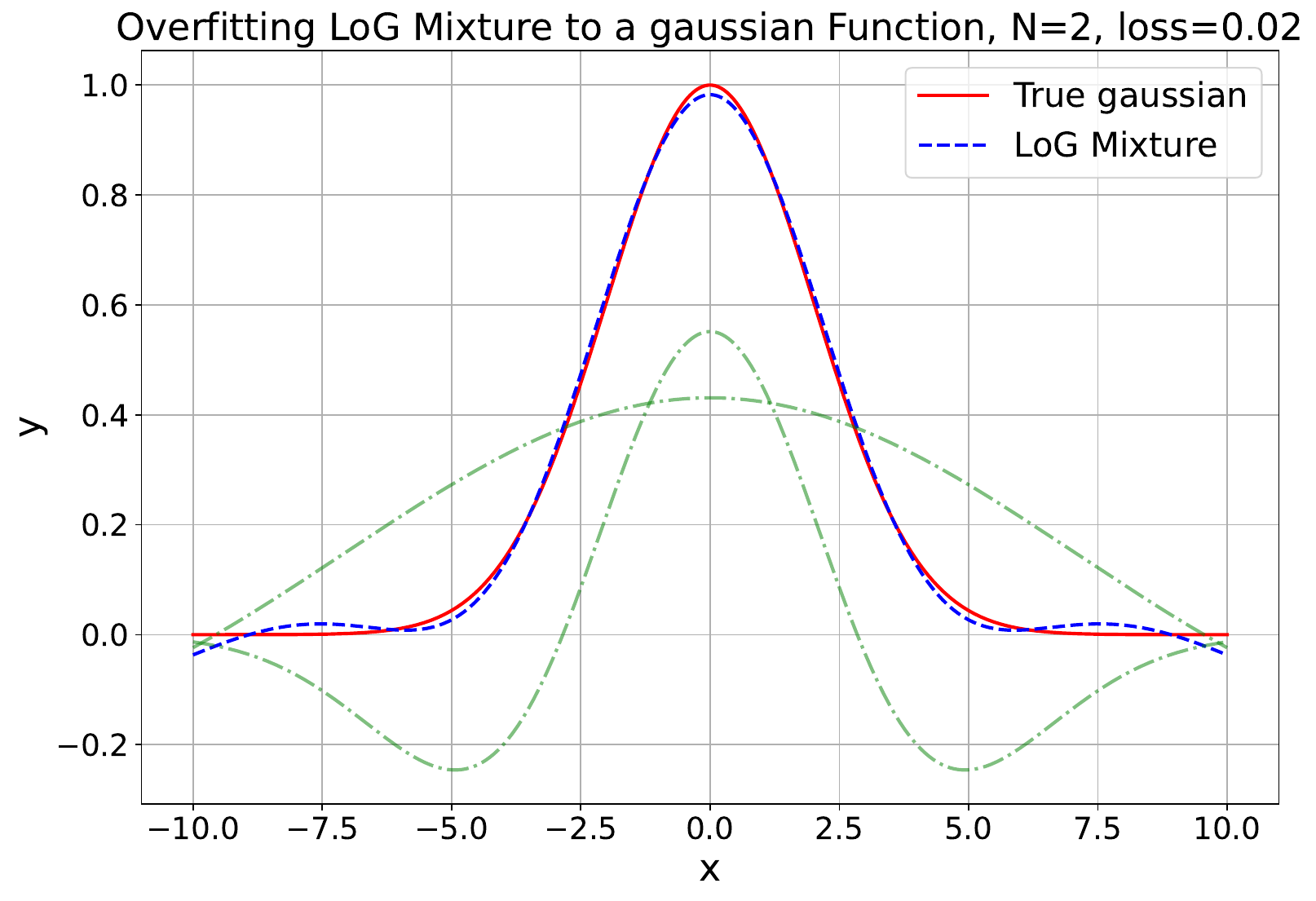} & 
    \includegraphics[width=0.24\linewidth]{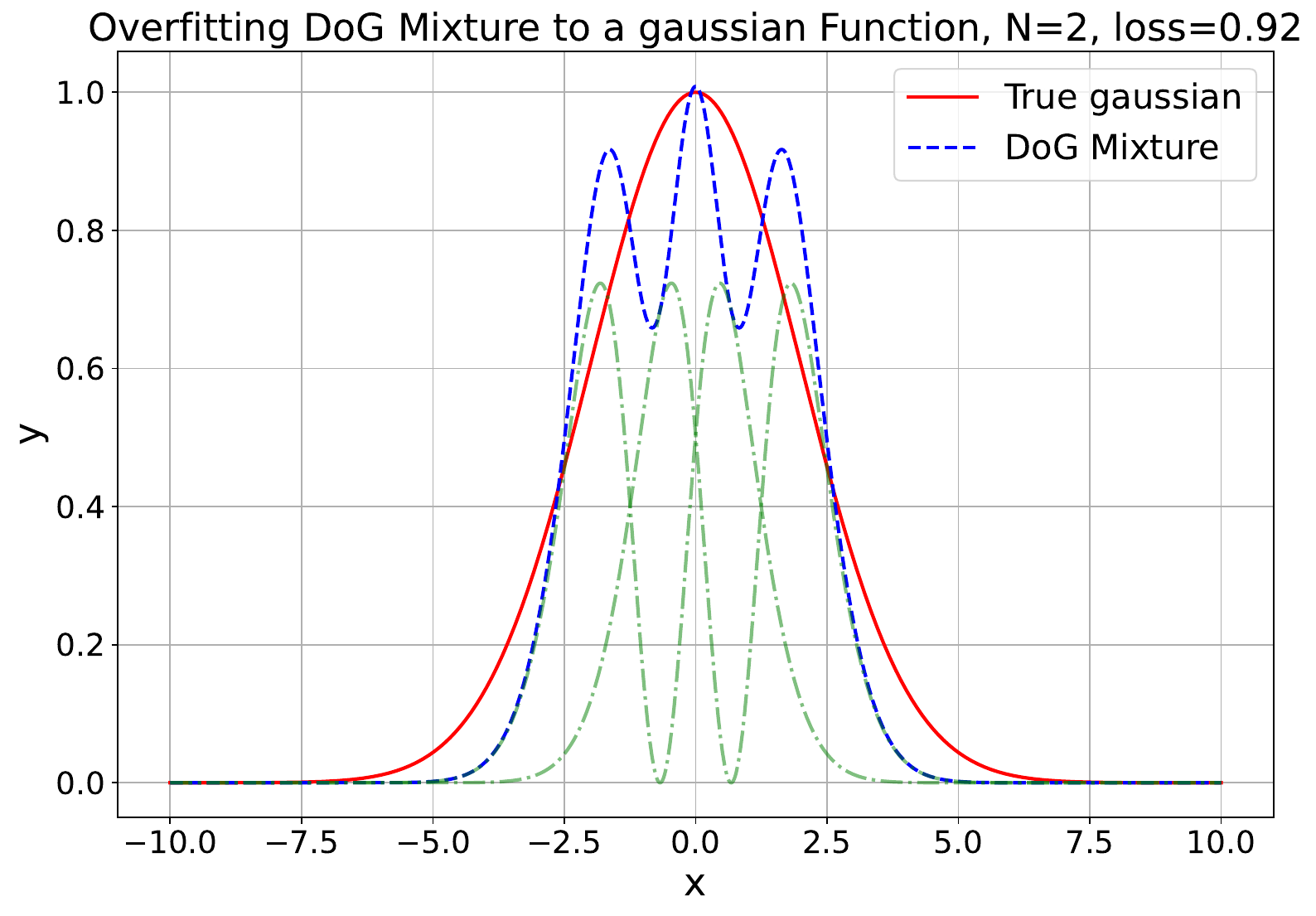} & 
    \includegraphics[width=0.24\linewidth]{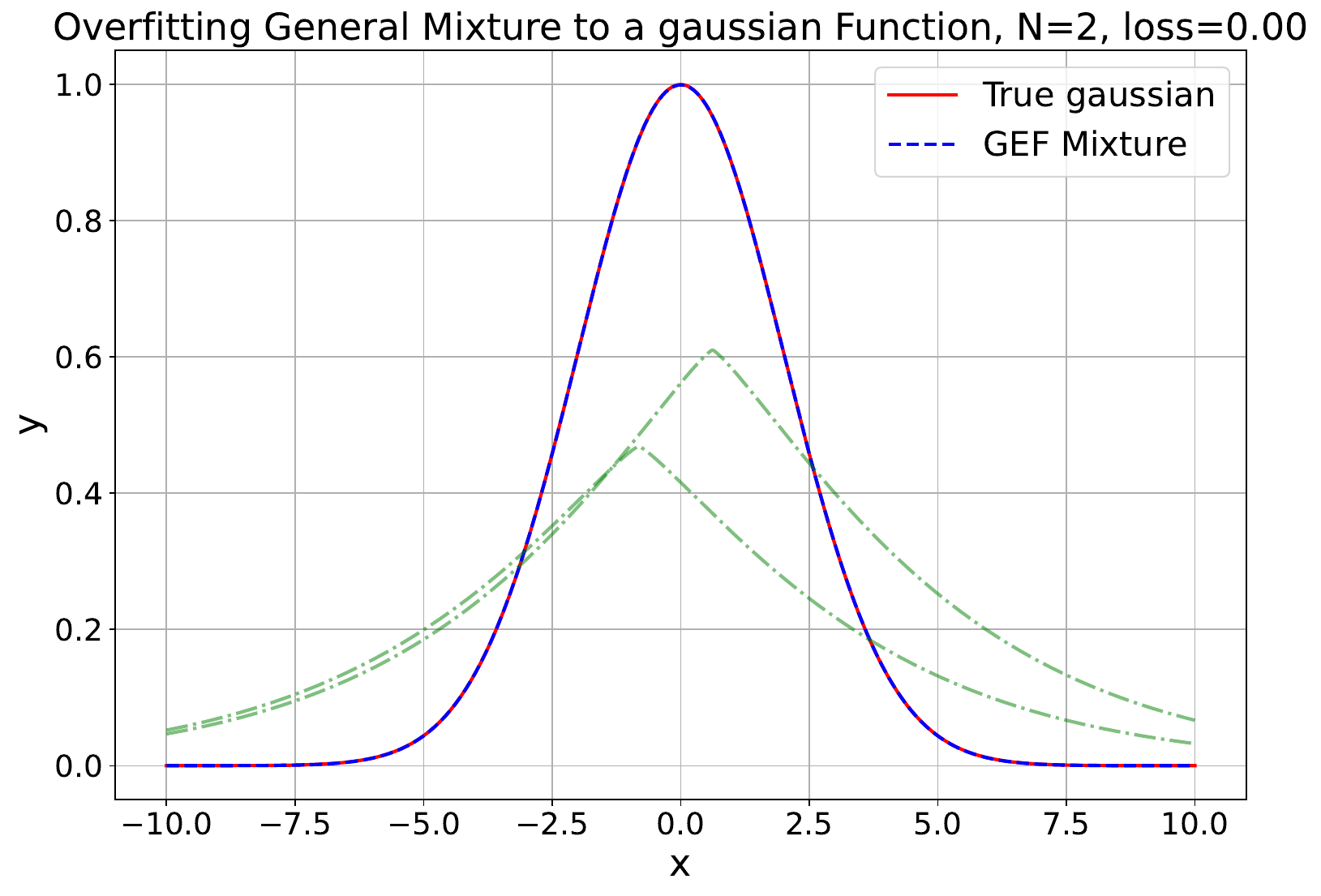}\\ 
    \includegraphics[width=0.24\linewidth]{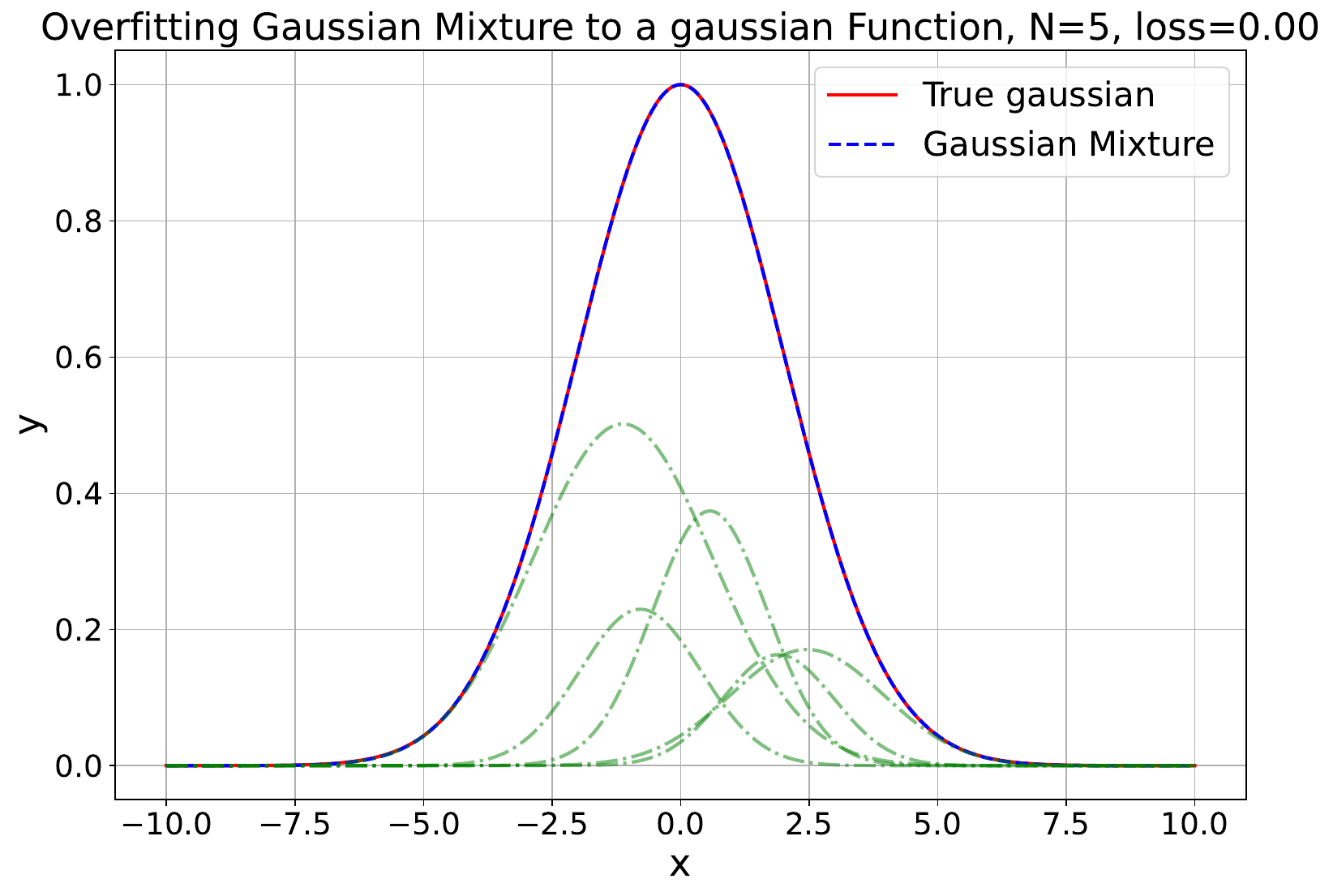} & 
    \includegraphics[width=0.24\linewidth]{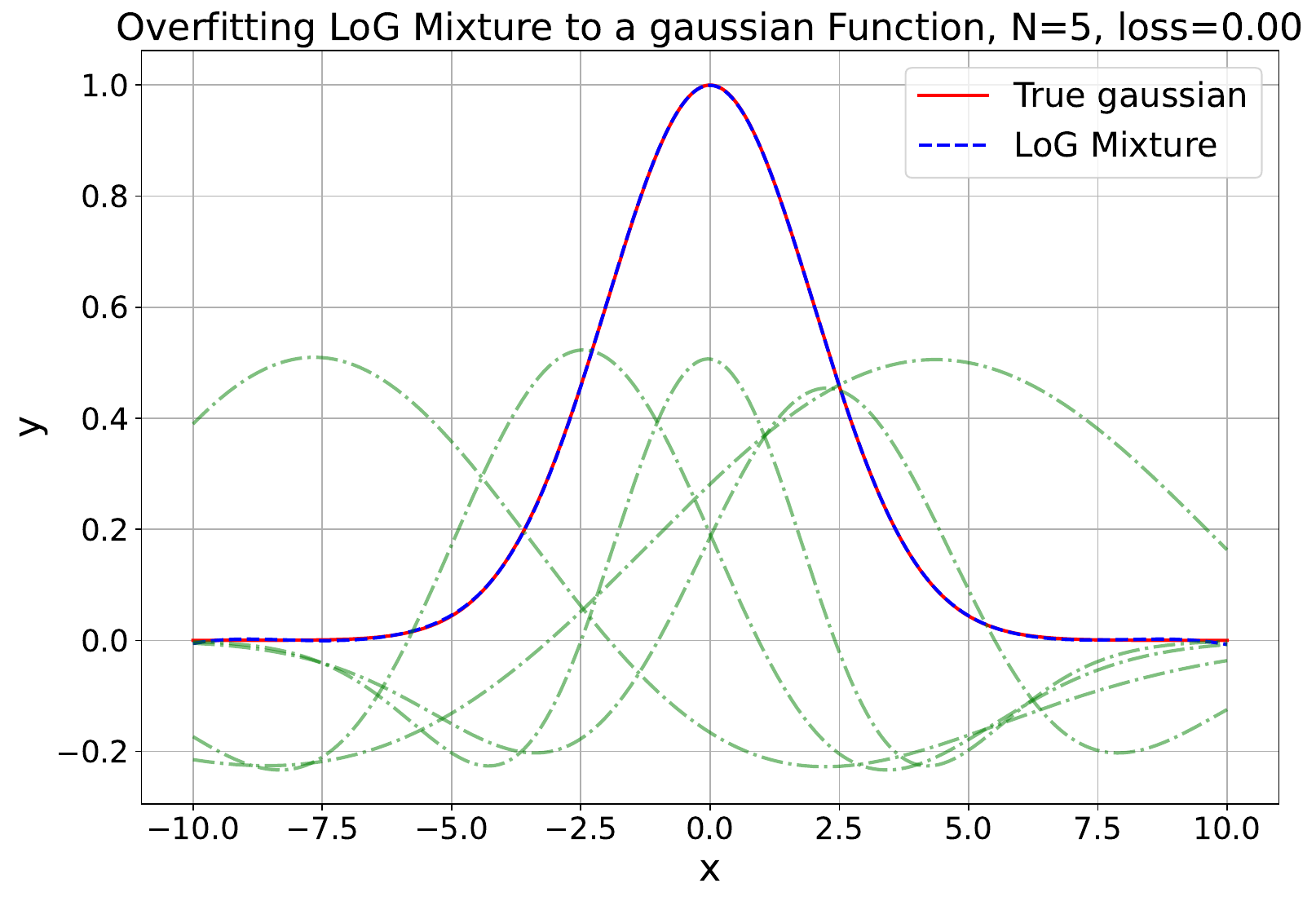} & 
    \includegraphics[width=0.24\linewidth]{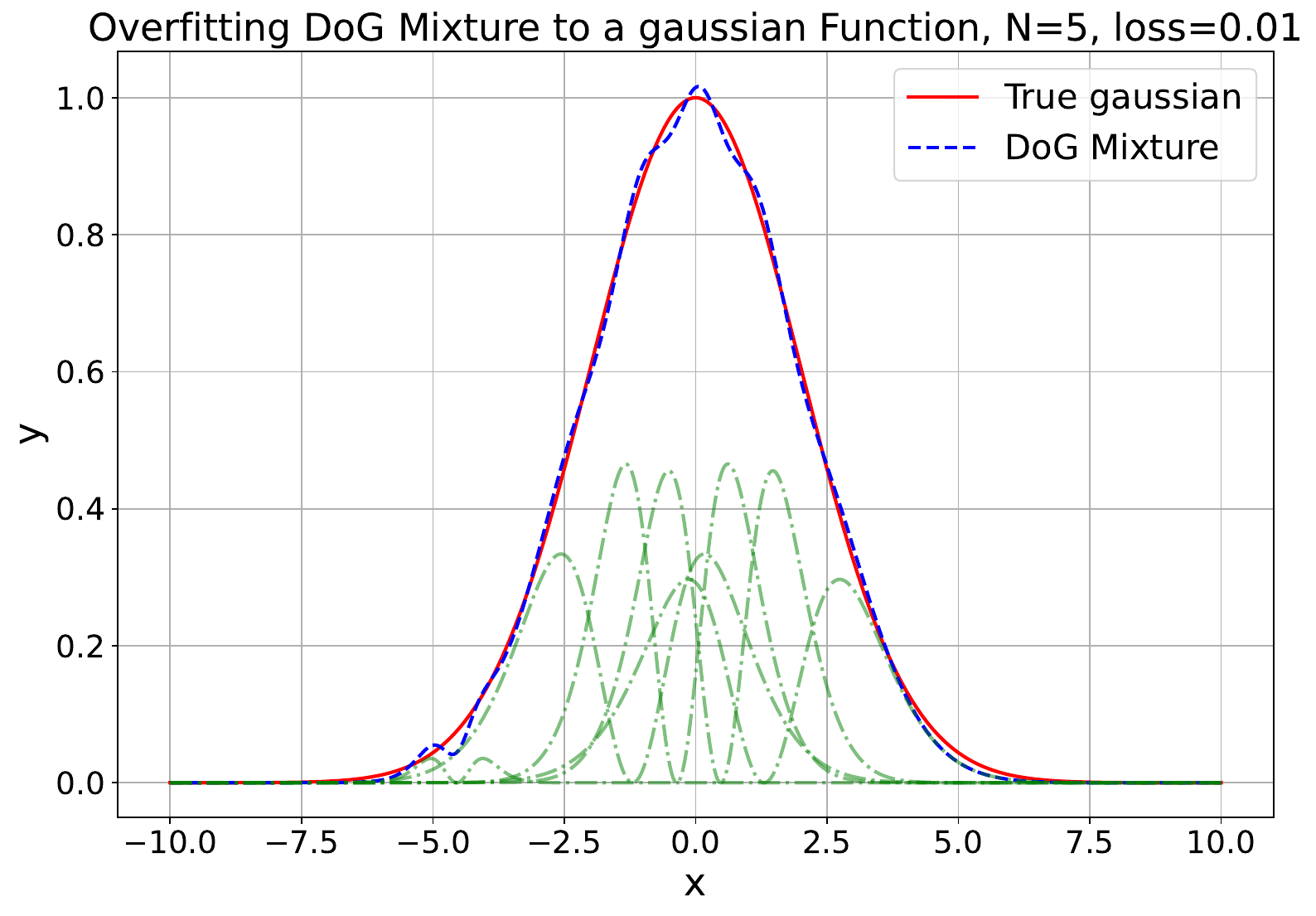} & 
    \includegraphics[width=0.24\linewidth]{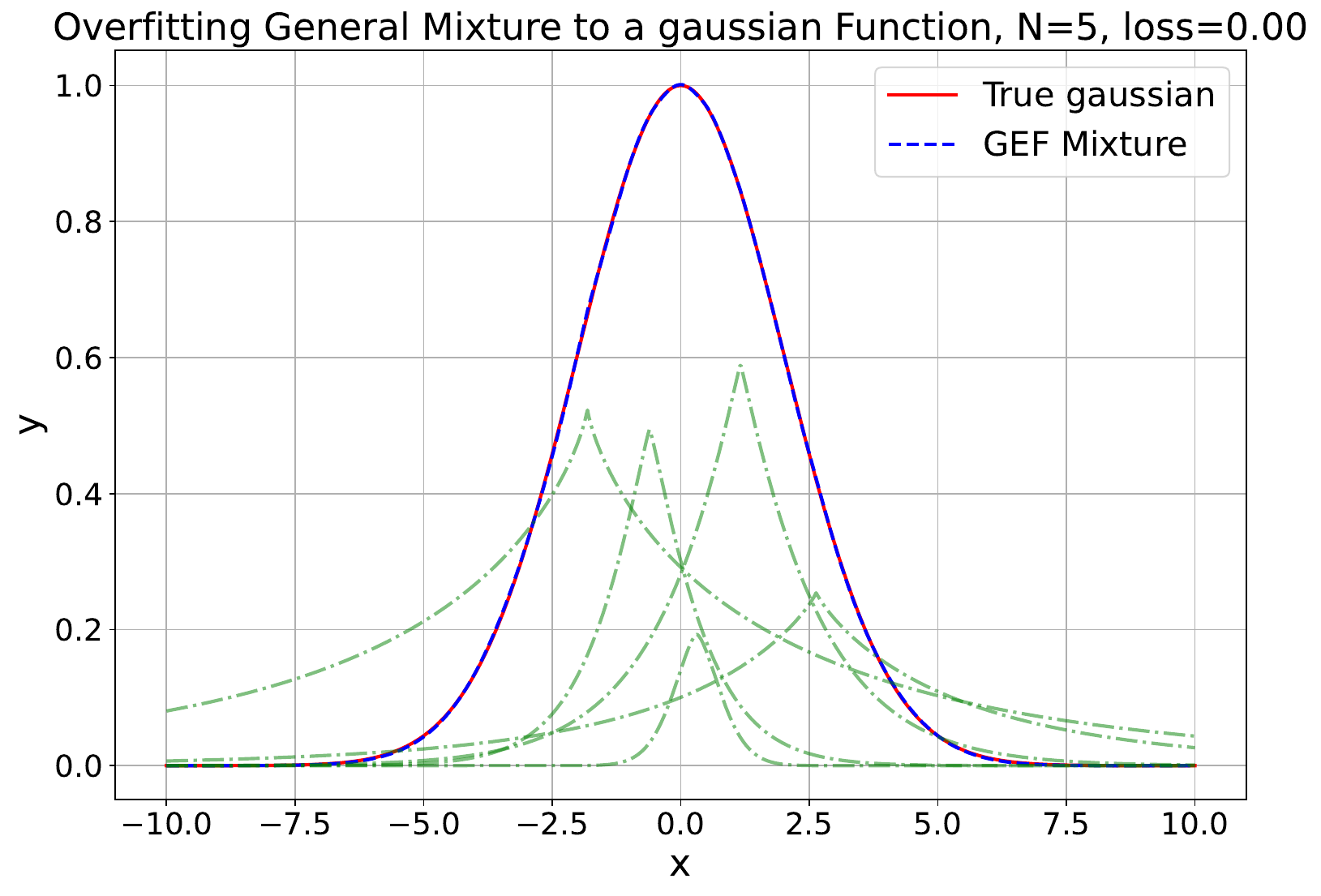}\\ 
    \includegraphics[width=0.24\linewidth]{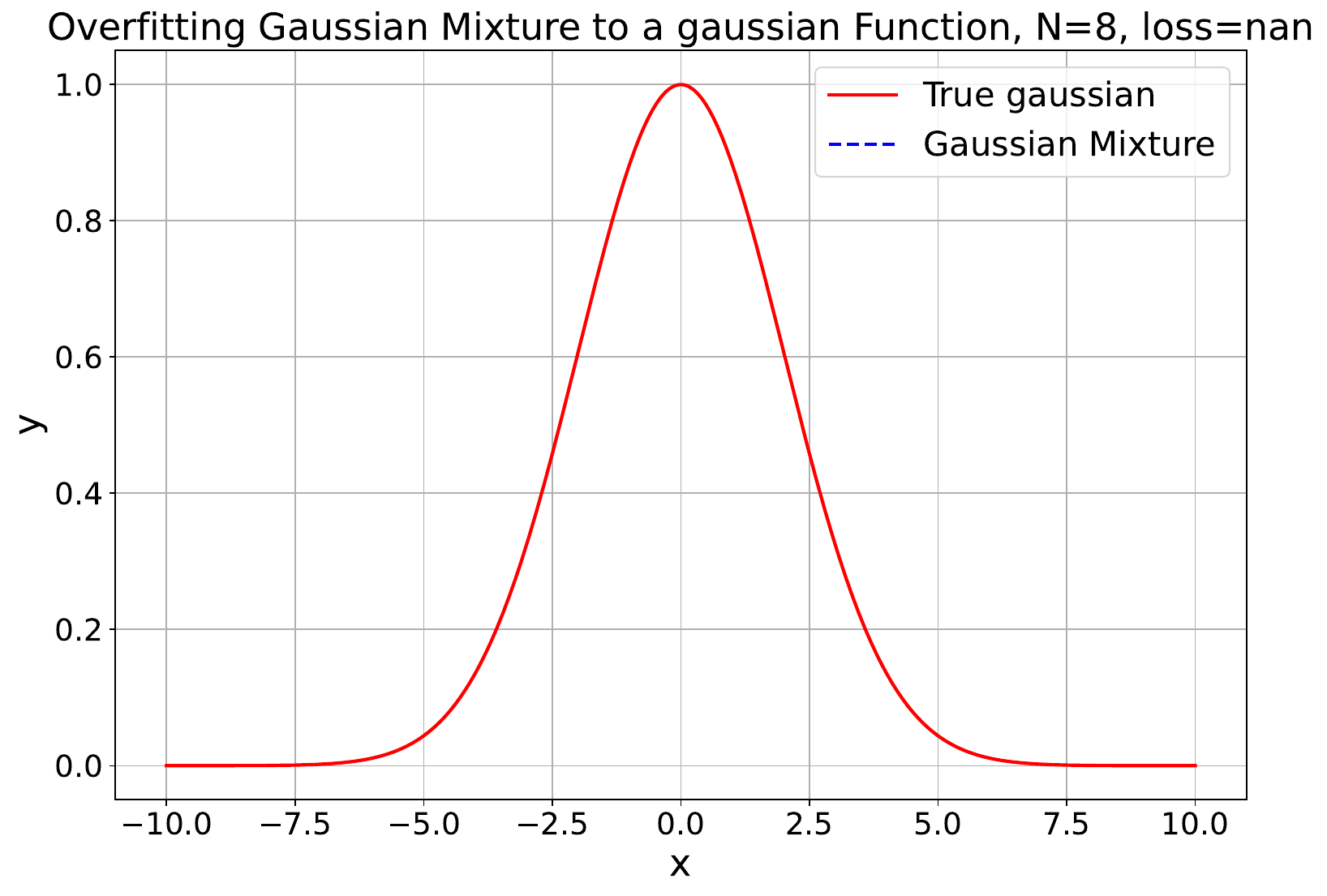} & 
    \includegraphics[width=0.24\linewidth]{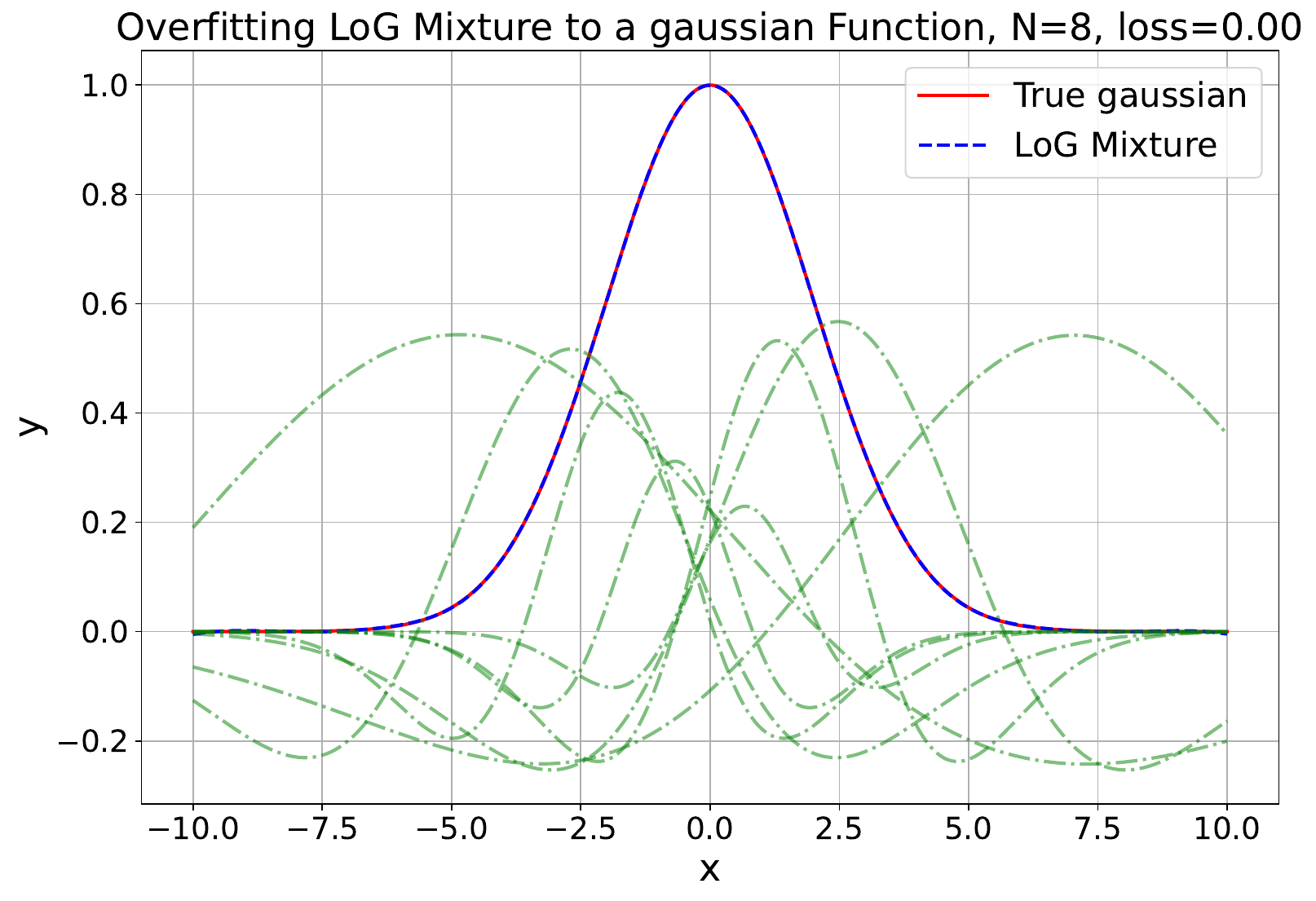} & 
    \includegraphics[width=0.24\linewidth]{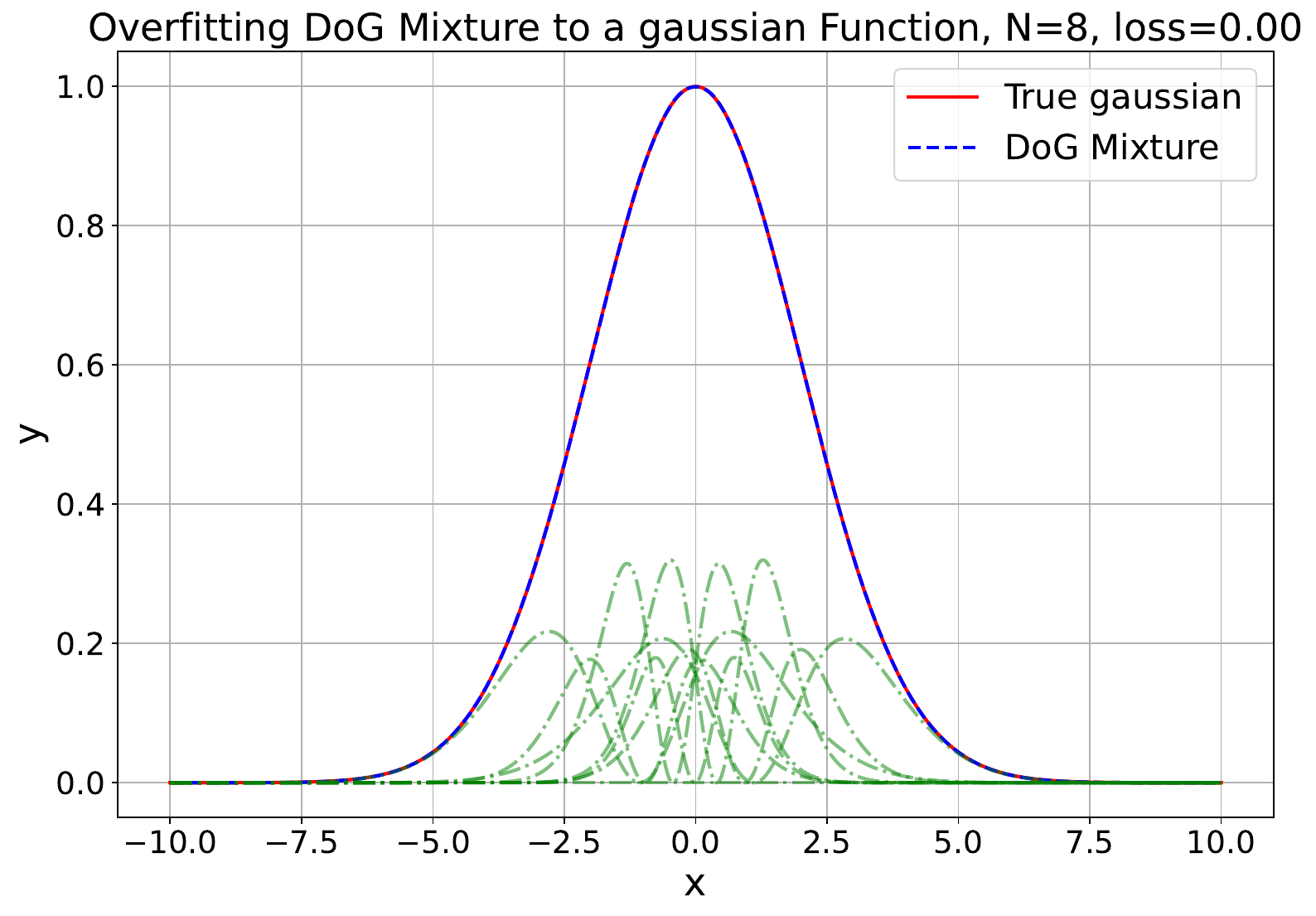} & 
    \includegraphics[width=0.24\linewidth]{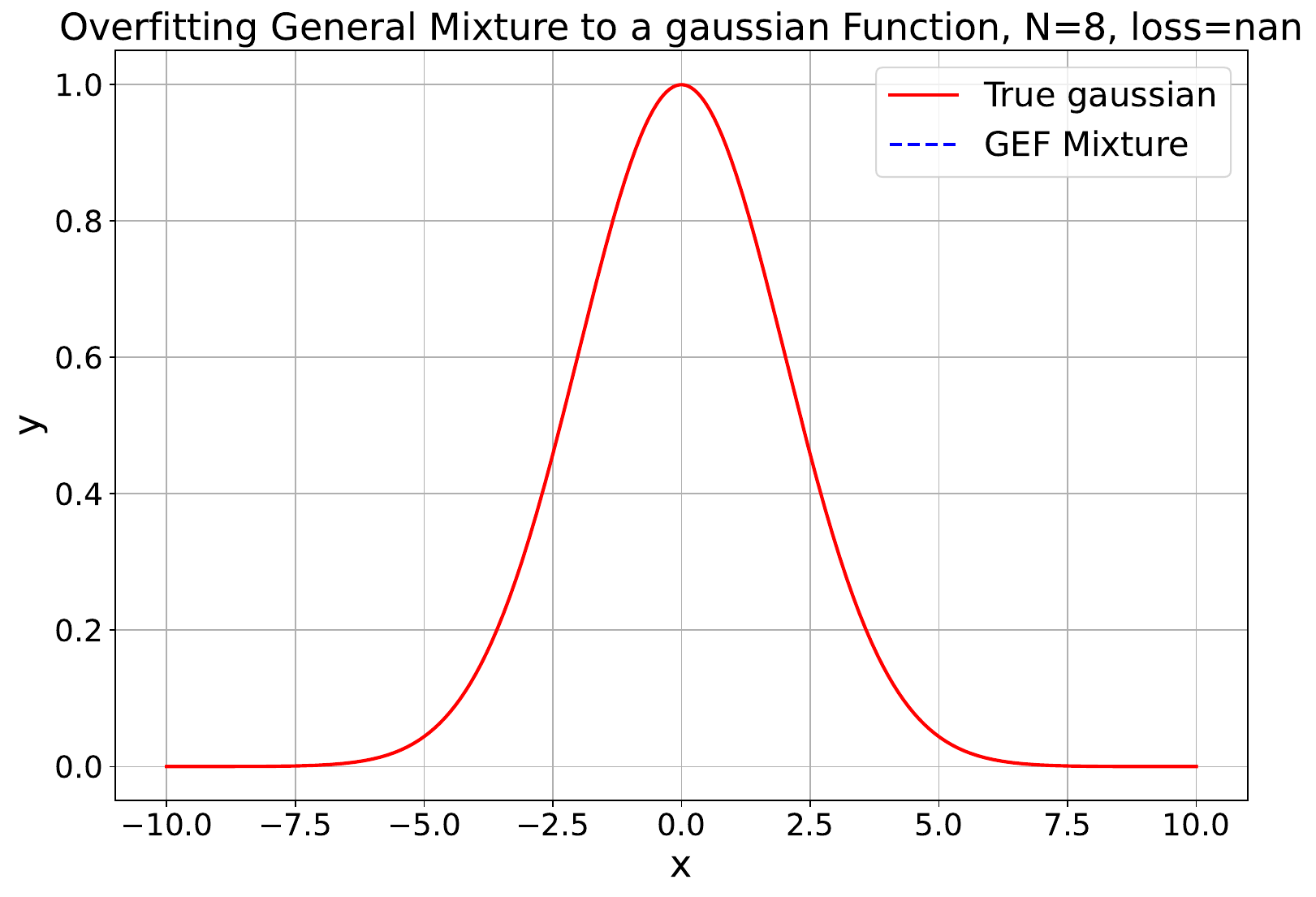}\\ 
    \includegraphics[width=0.24\linewidth]{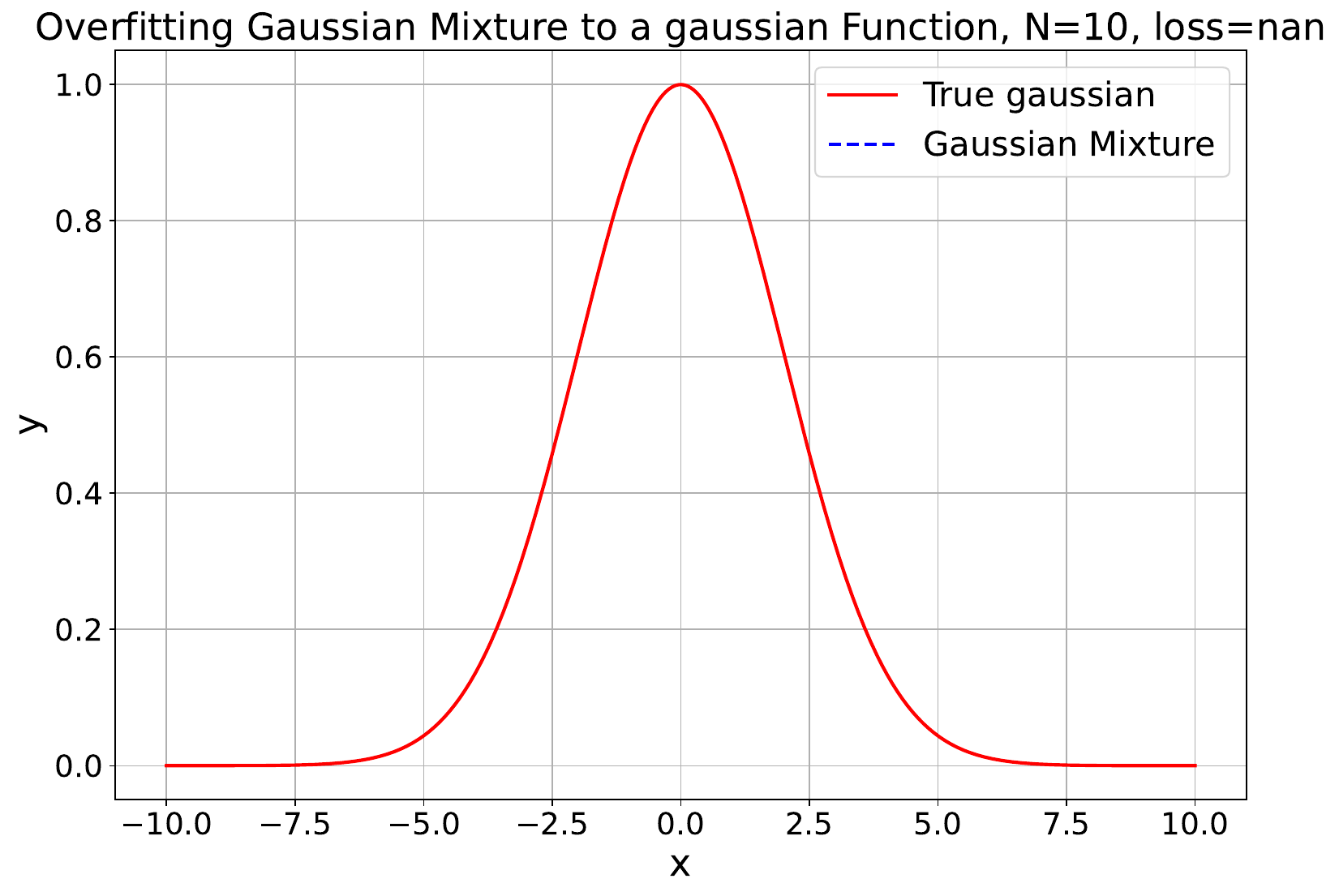} & 
    \includegraphics[width=0.24\linewidth]{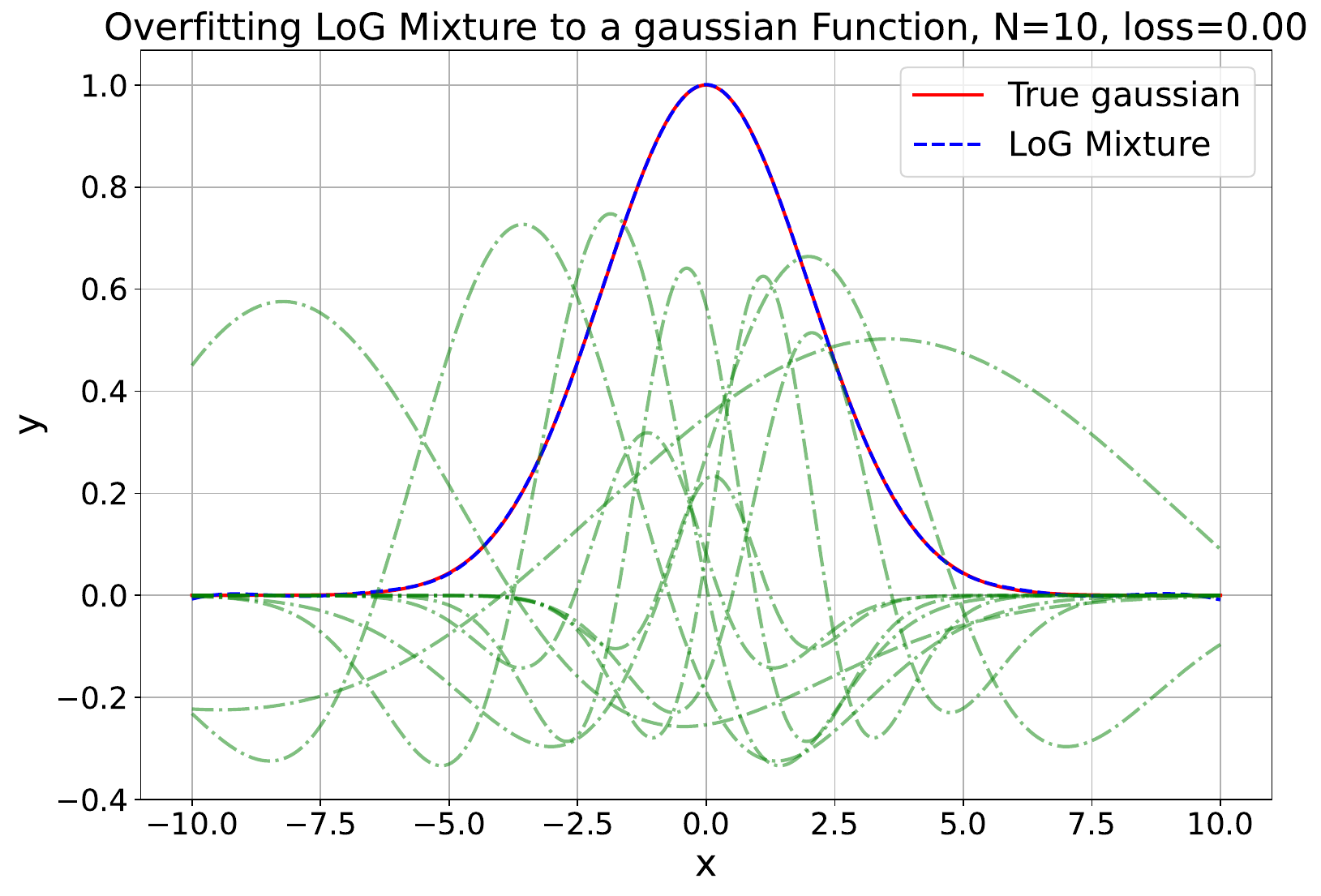} & 
    \includegraphics[width=0.24\linewidth]{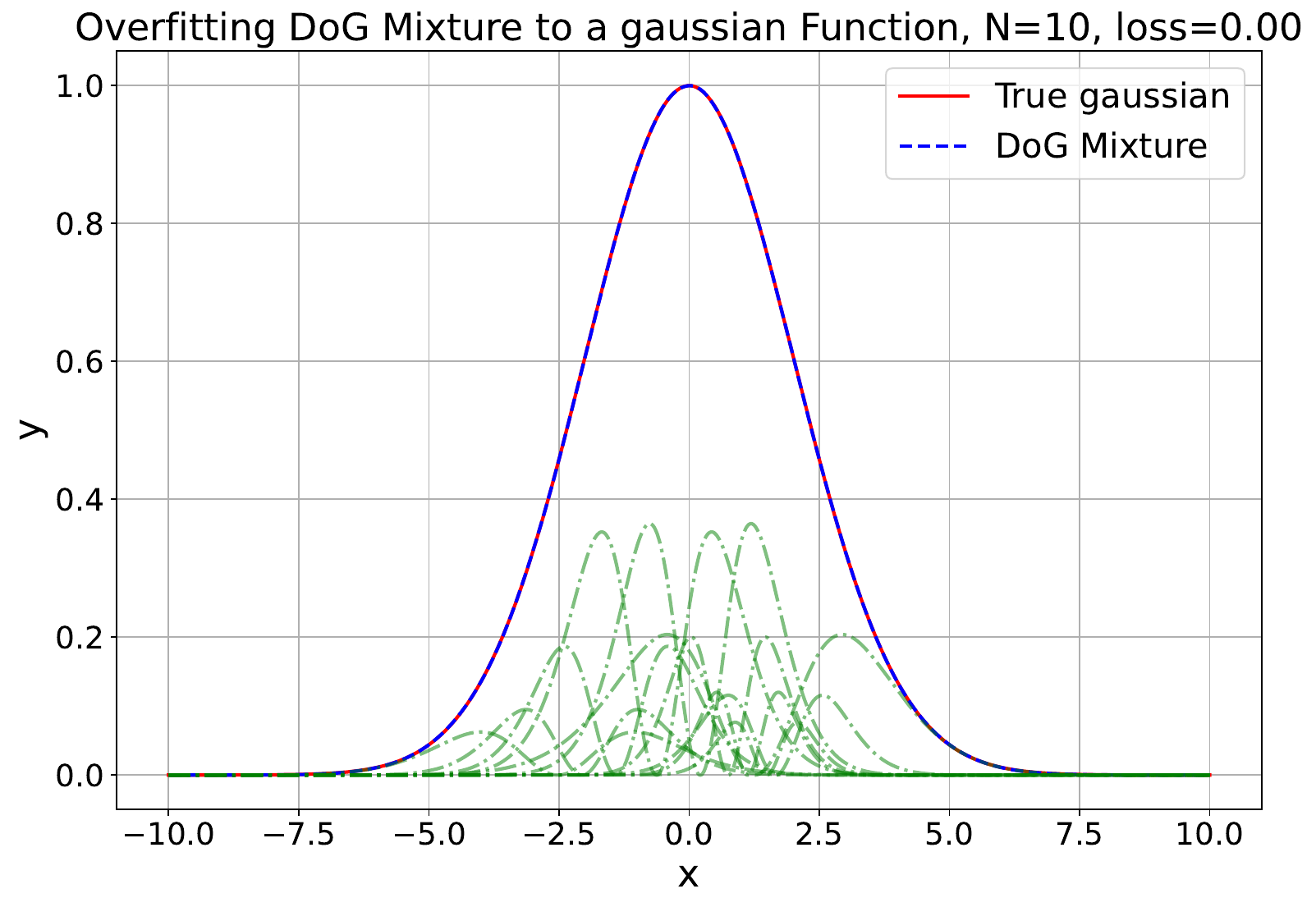} & 
    \includegraphics[width=0.24\linewidth]{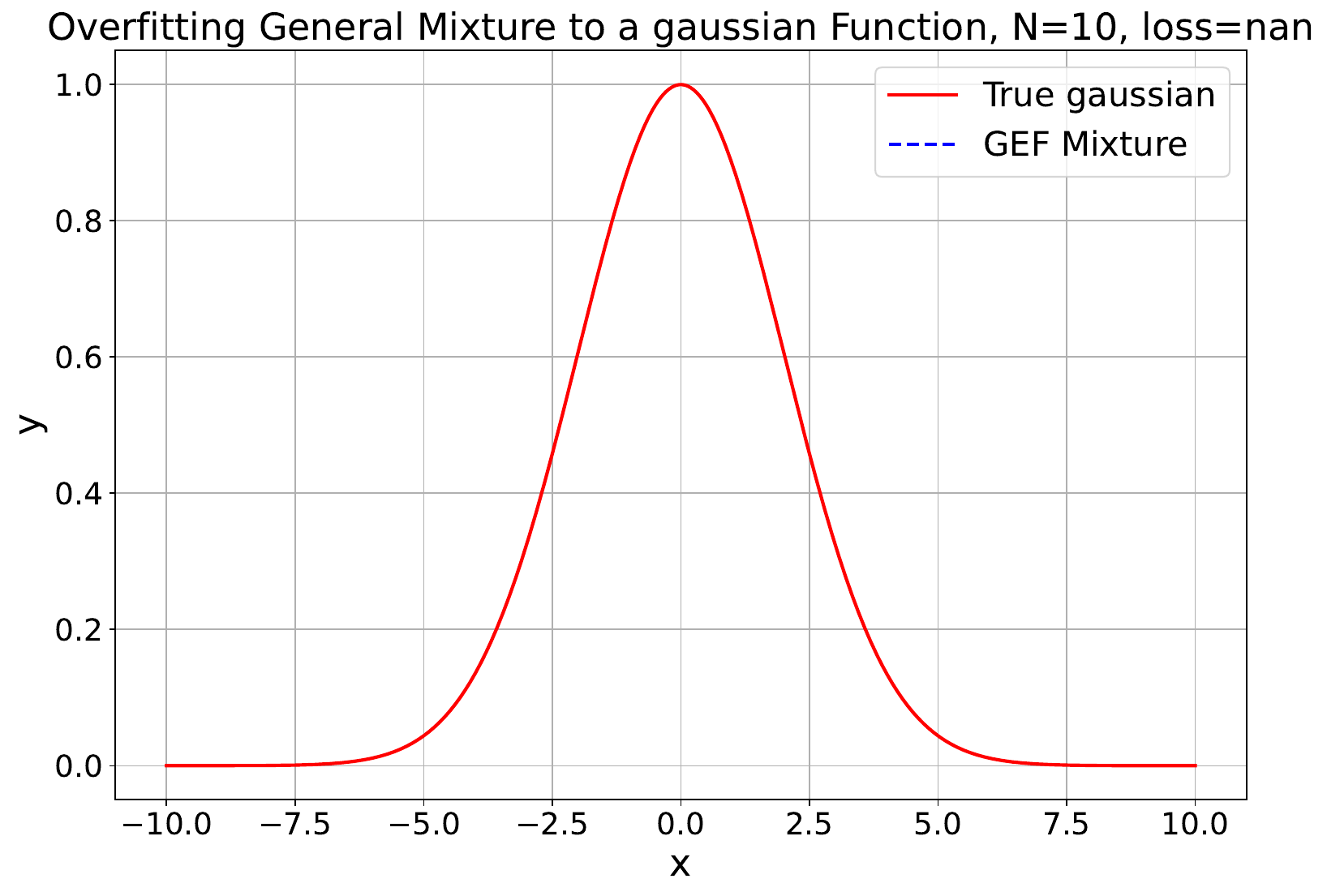}\\ 
    \includegraphics[width=0.24\linewidth]{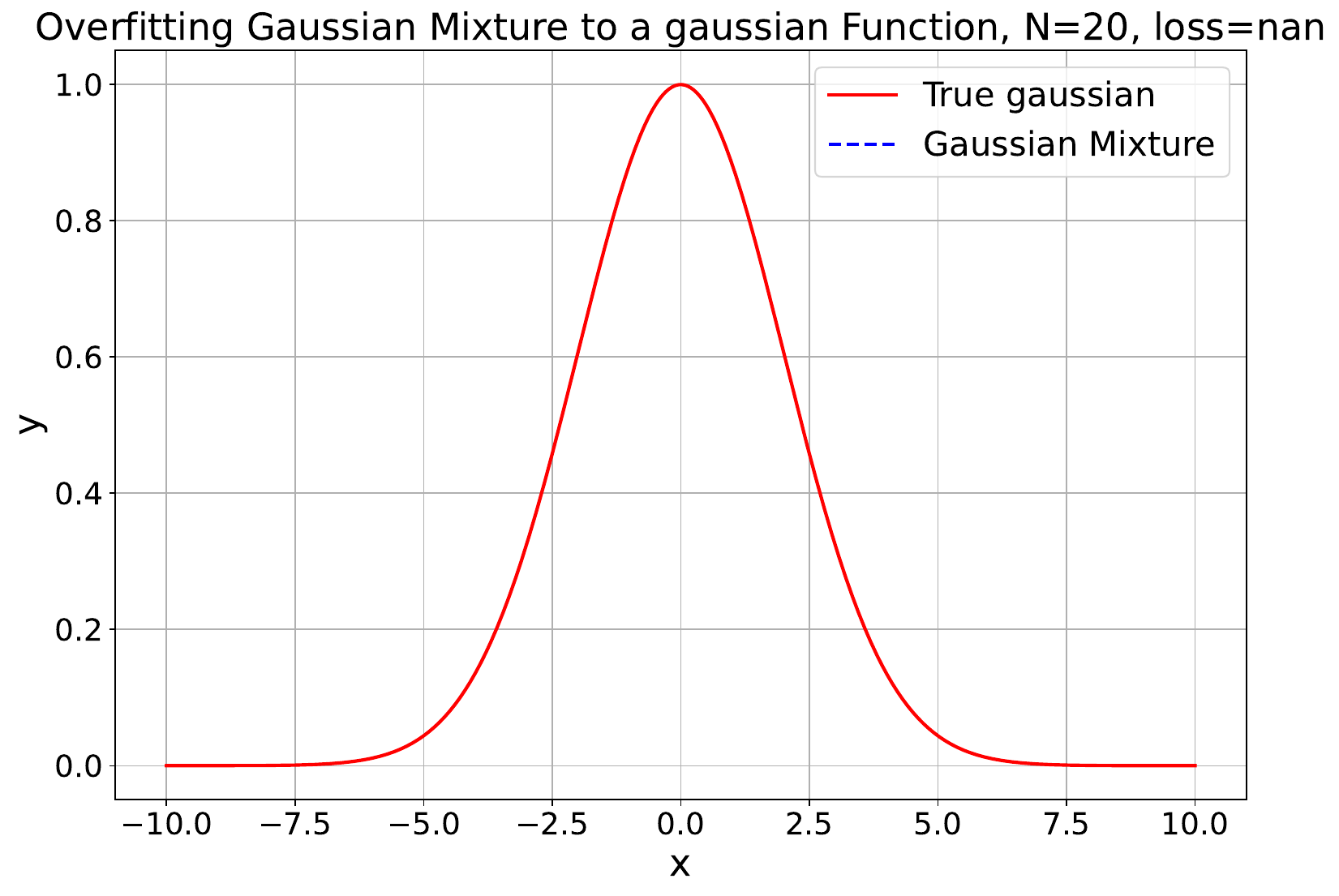} & 
    \includegraphics[width=0.24\linewidth]{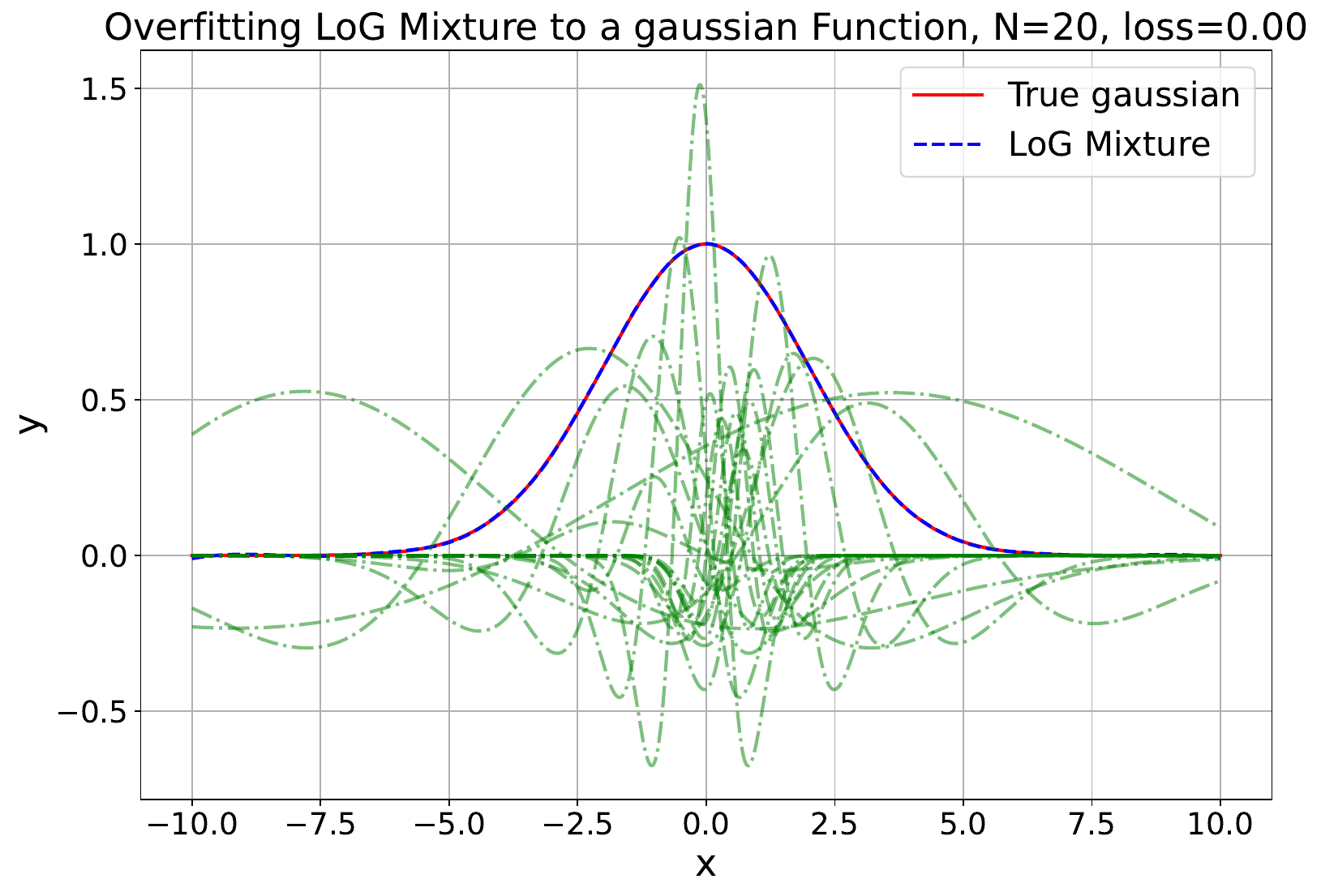} & 
    \includegraphics[width=0.24\linewidth]{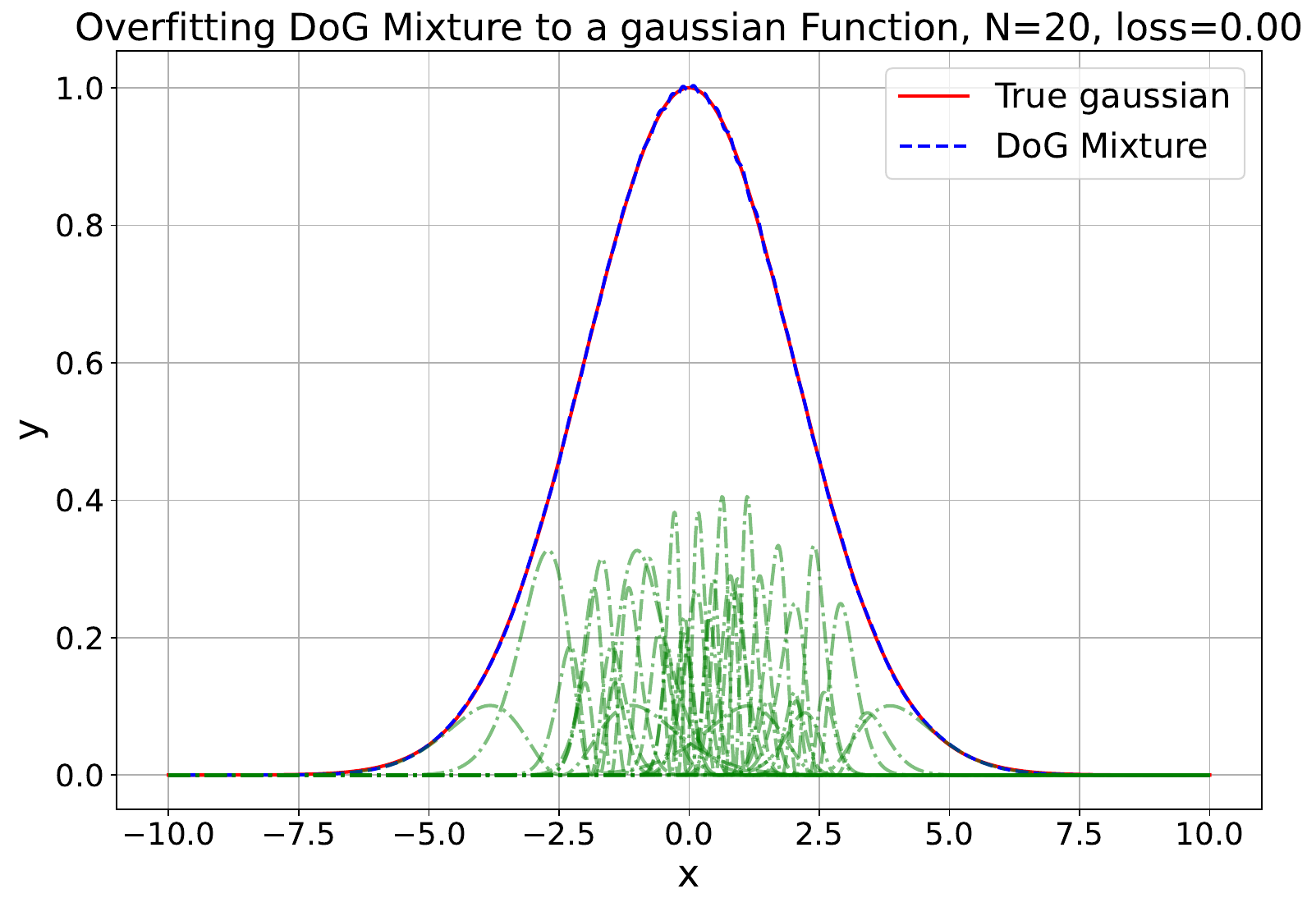} & 
    \includegraphics[width=0.24\linewidth]{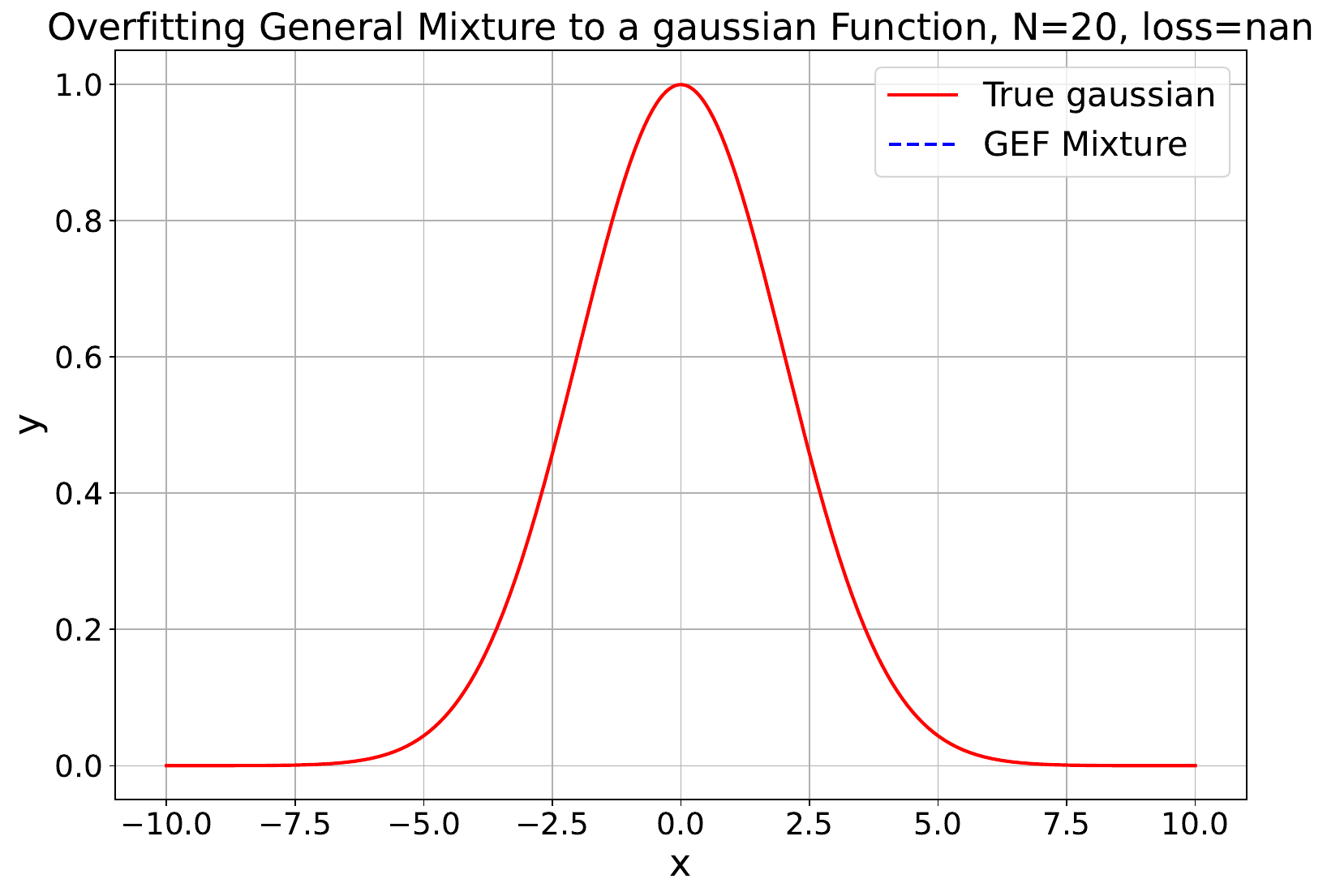}\\ 
    
    \end{tabular}
    }
    \caption{\textbf{Numerical Simulation Examples of Fitting Gaussians with Positive Weights Mixtures ( N= 2, 5, 8, and 10 )}. We show some fitting examples for Gaussian signals with positive weight mixtures. The four mixtures used from left to right are Gaussians, LoG, DoG, and General mixtures. From top to bottom: N = 2, 8, and 10 components. The optimized individual components are shown in green. Some examples fail to optimize due to numerical instability in both Gaussians and GEF mixtures. Note that GEF is very efficient in fitting the Gaussian with few components while LoG and DoG are more stable for a larger number of components. }
    \label{supfig:fitting_gaussian_p}
    \end{figure*}
    
\begin{figure*}[h]
    \centering
    \resizebox{1.0\linewidth}{!}{
    \begin{tabular}{cccc}
    \tabcolsep=0.01cm
    Gaussian Mixture& LoG Mixture & DoG Mixture & GEF Mixture \\ 
    \includegraphics[width=0.24\linewidth]{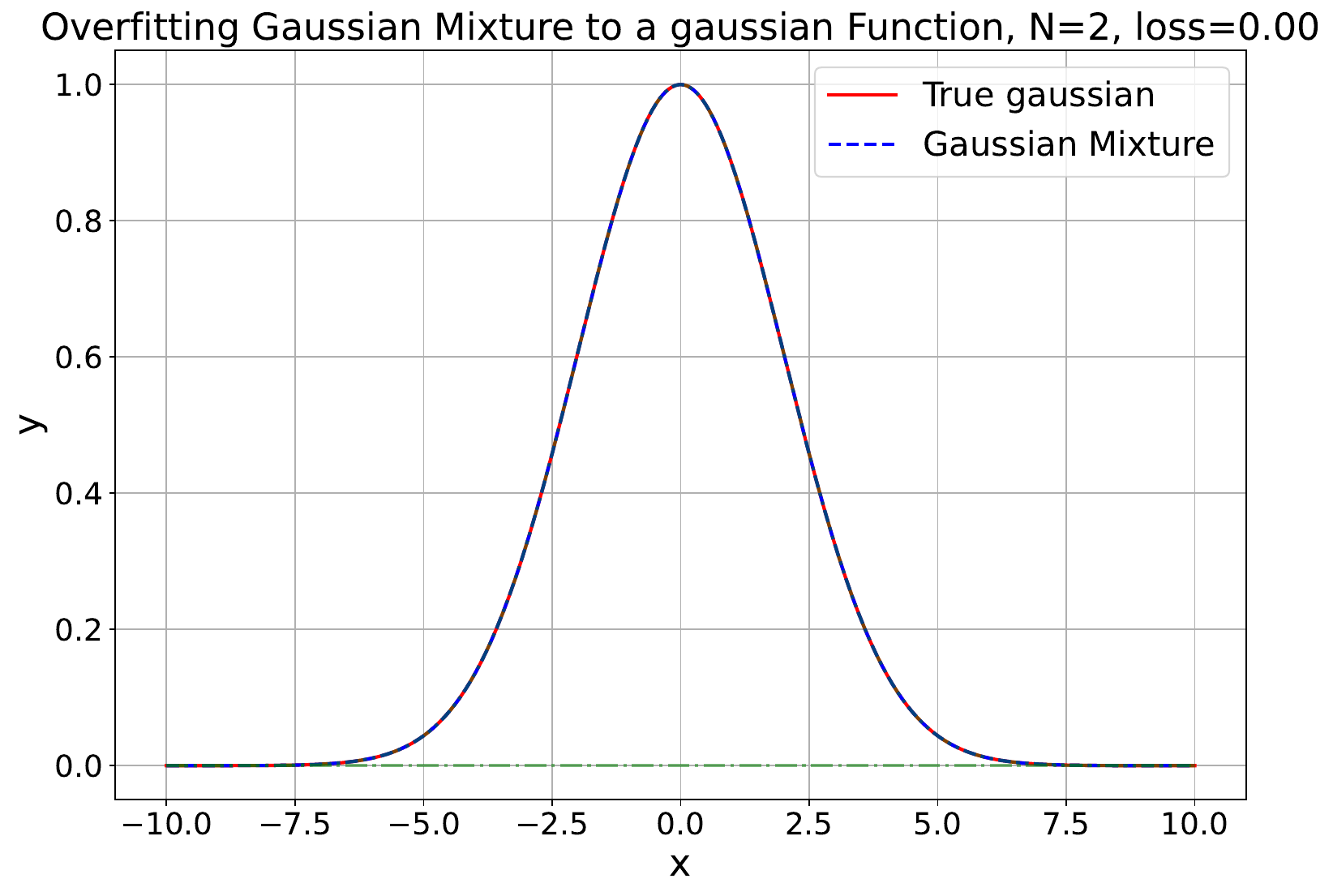} & 
    \includegraphics[width=0.24\linewidth]{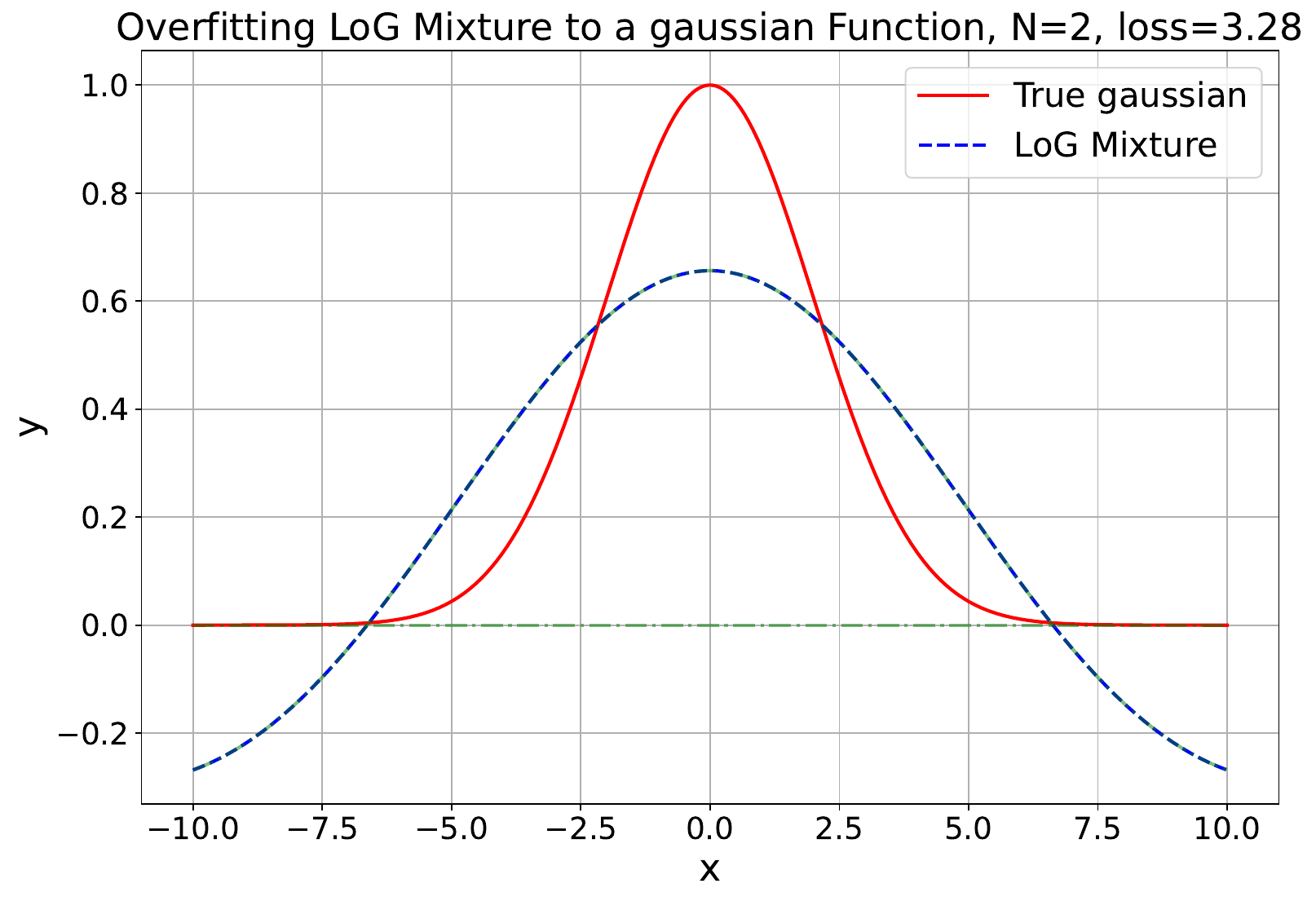} & 
    \includegraphics[width=0.24\linewidth]{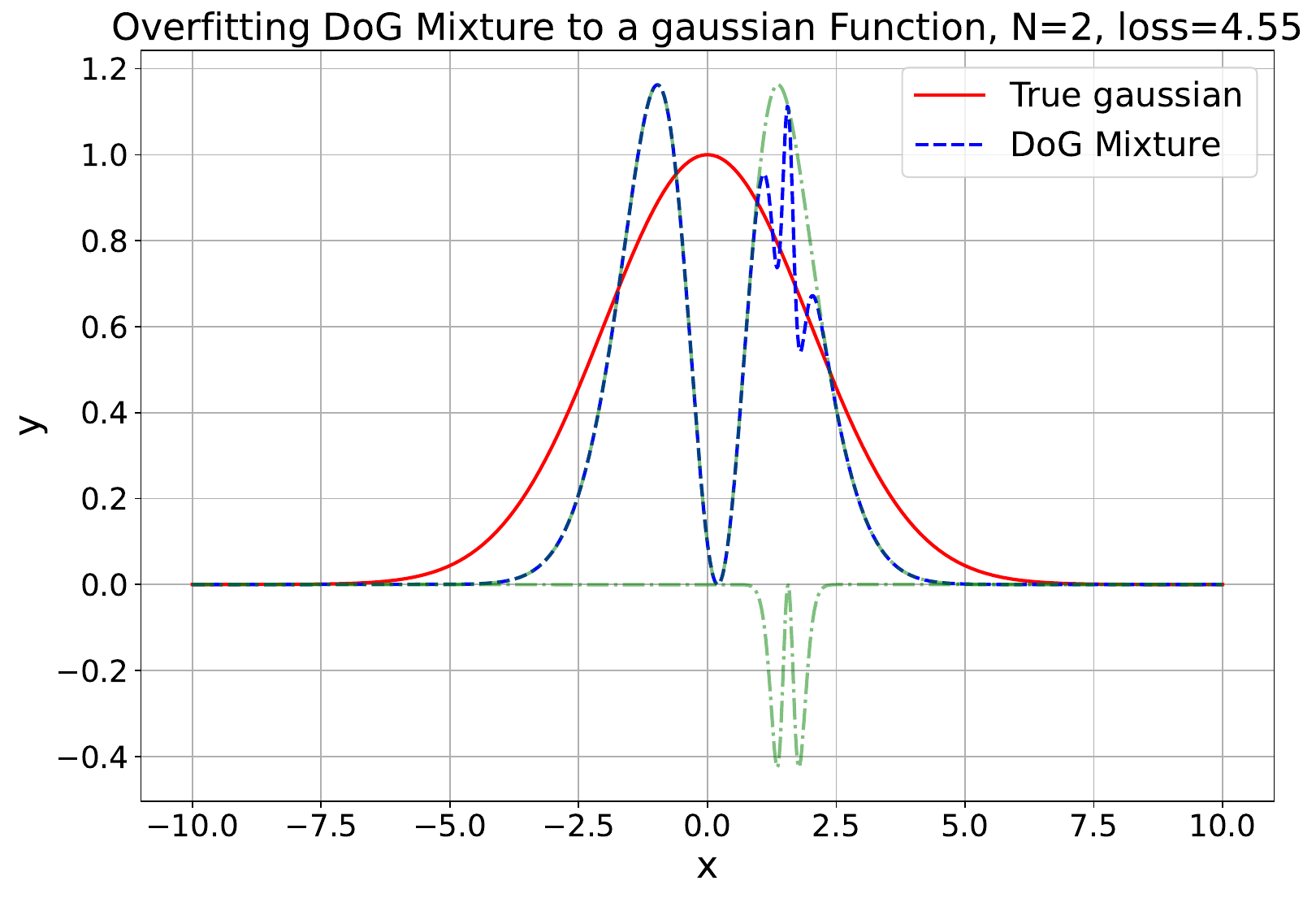} & 
    \includegraphics[width=0.24\linewidth]{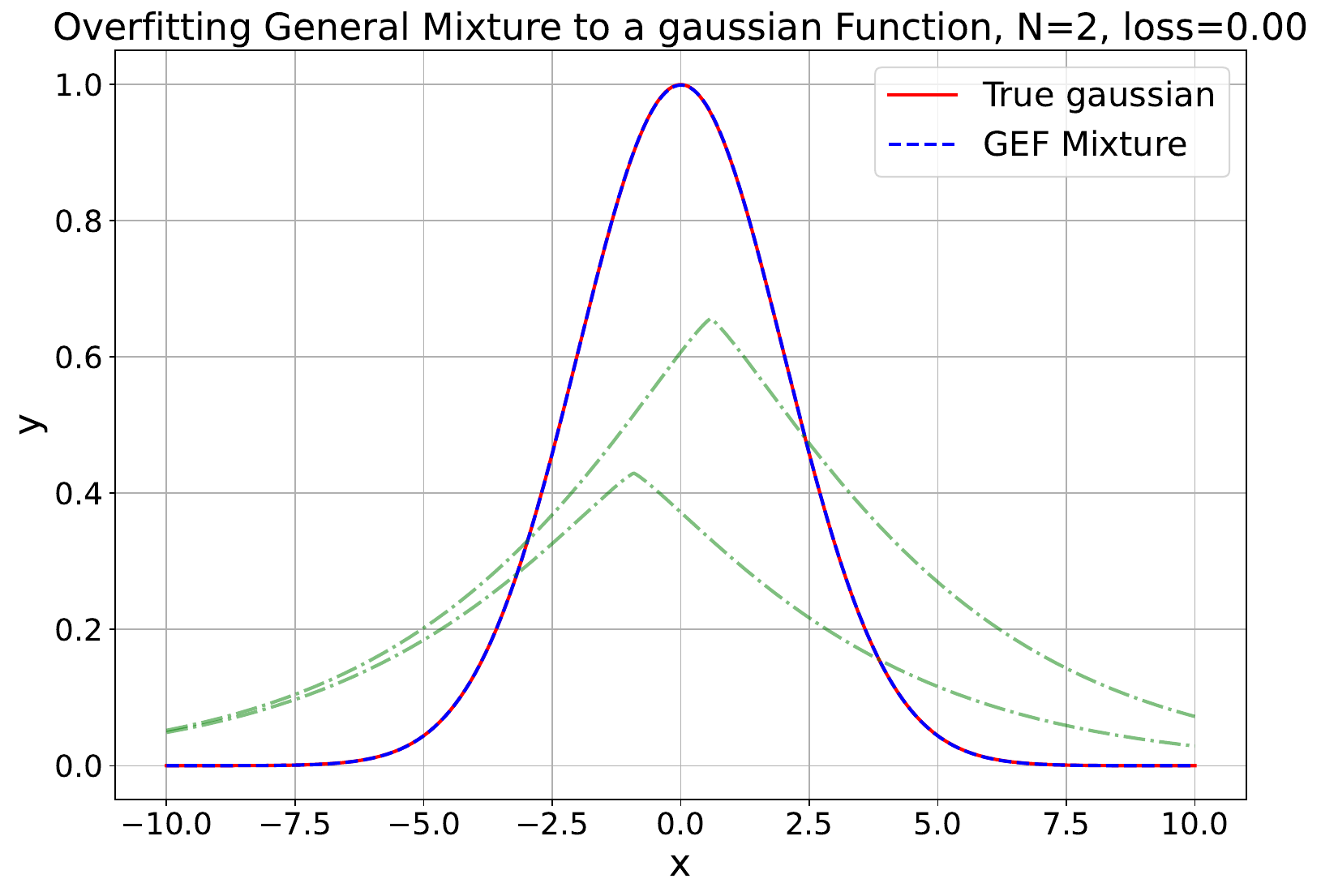}\\ 
    \includegraphics[width=0.24\linewidth]{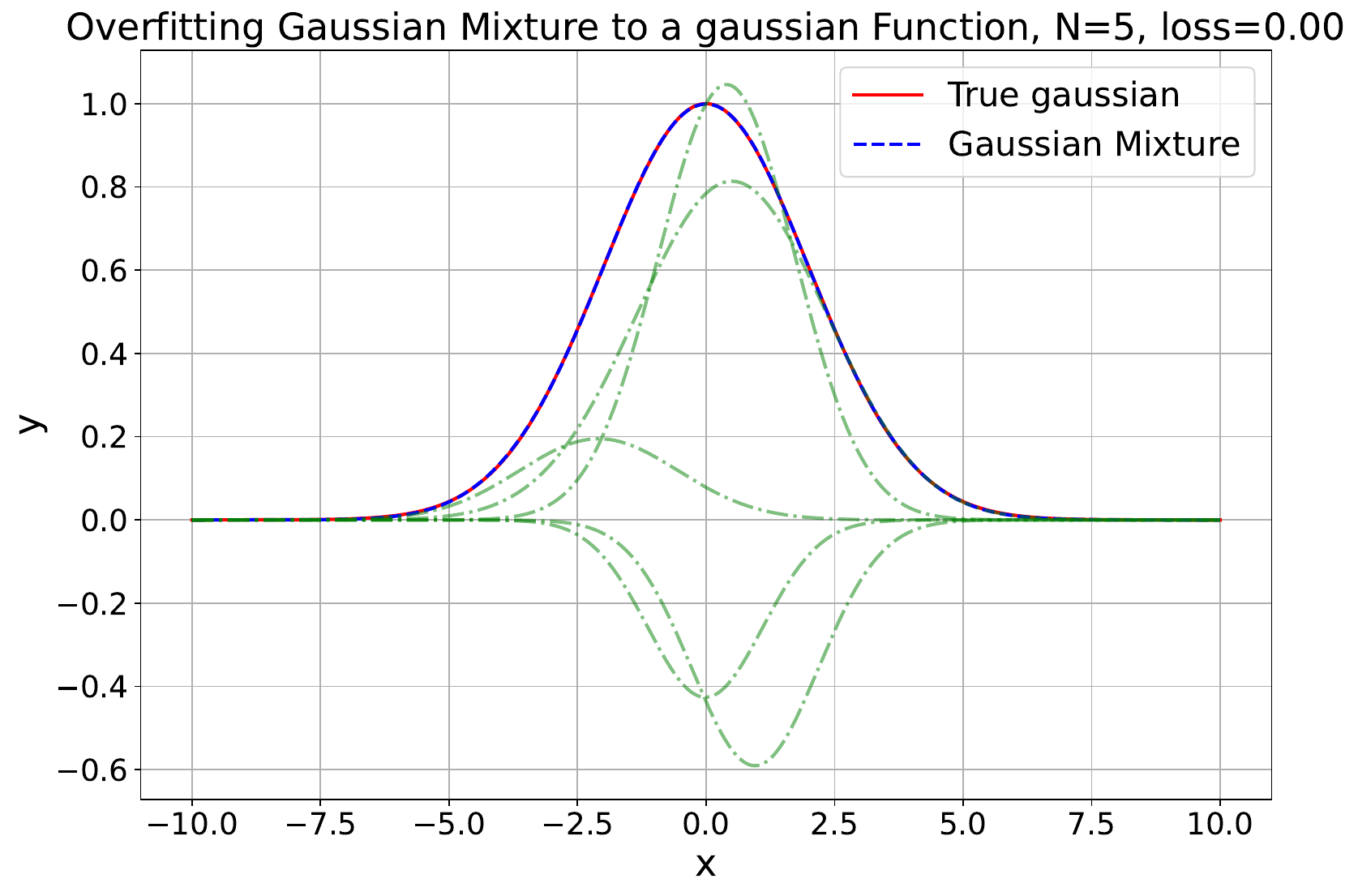} & 
    \includegraphics[width=0.24\linewidth]{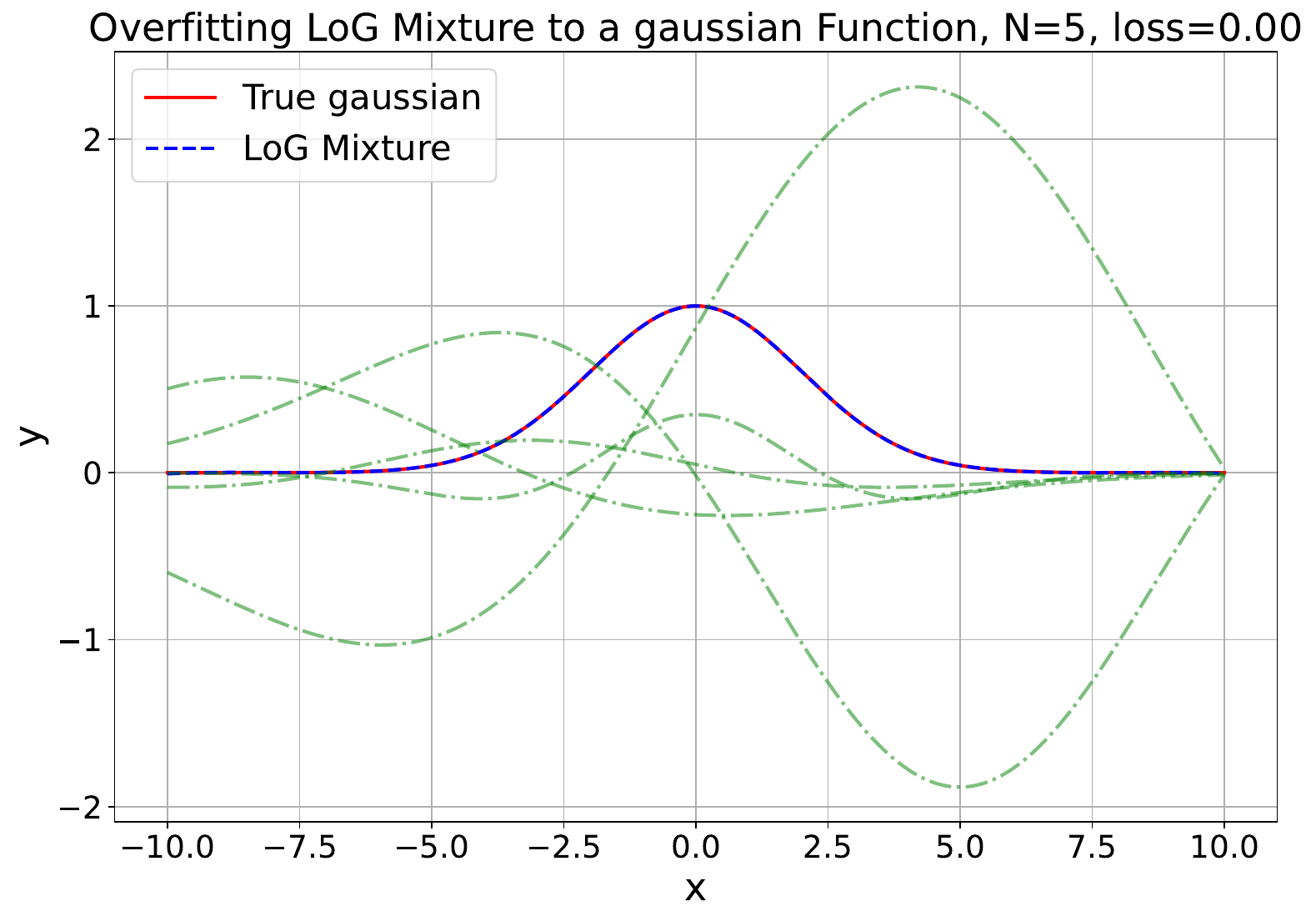} & 
    \includegraphics[width=0.24\linewidth]{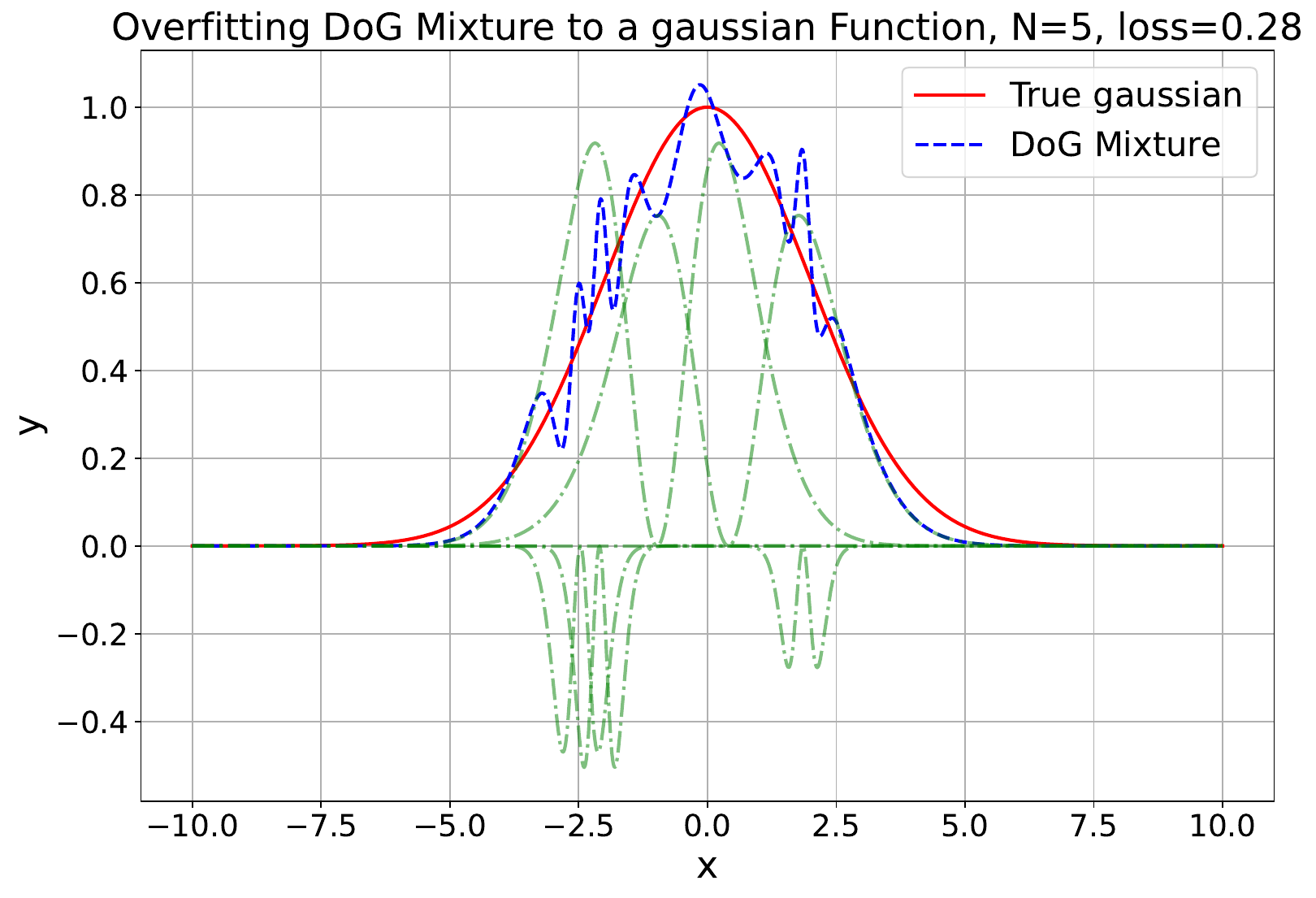} & 
    \includegraphics[width=0.24\linewidth]{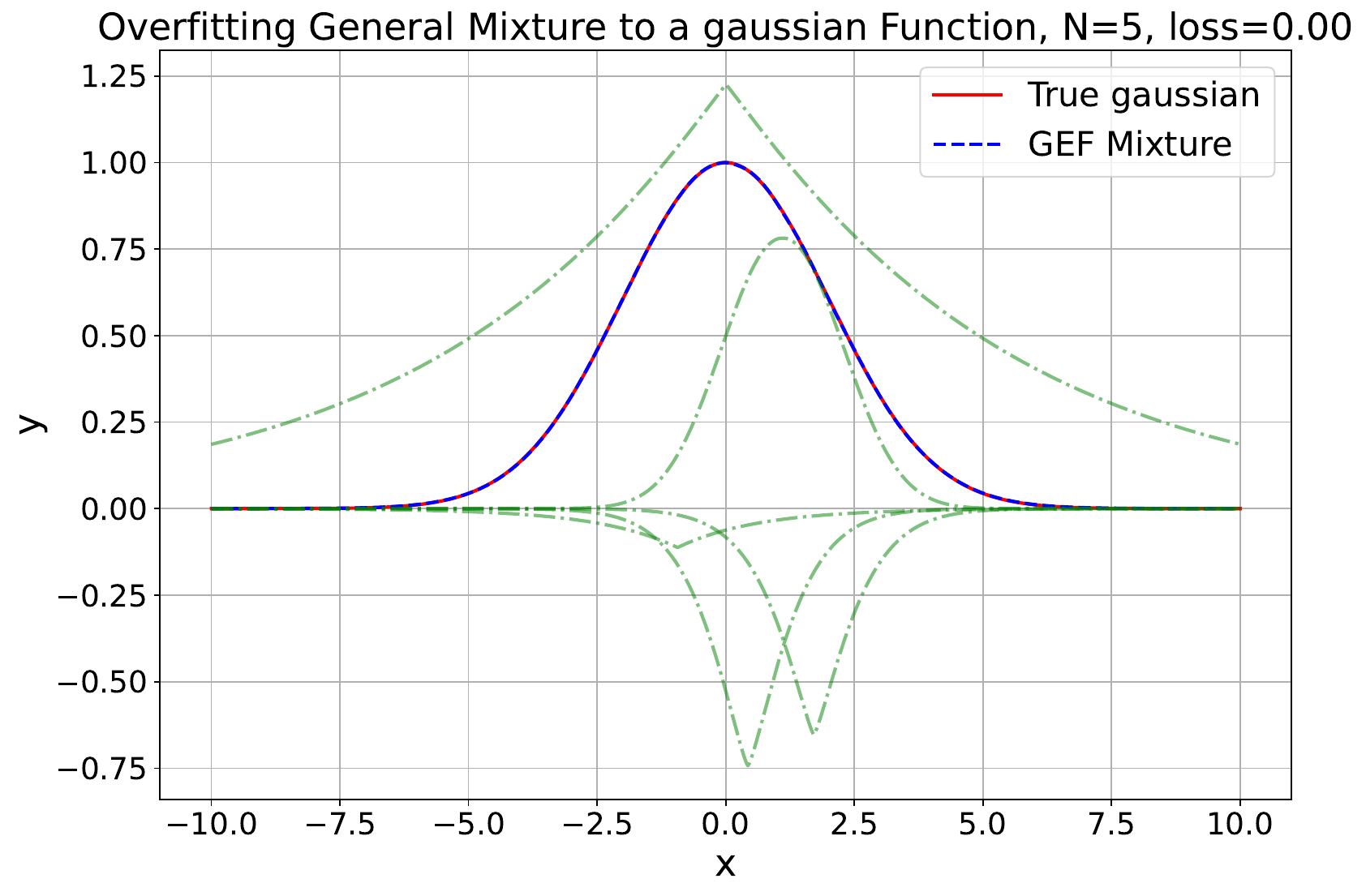}\\ 
    \includegraphics[width=0.24\linewidth]{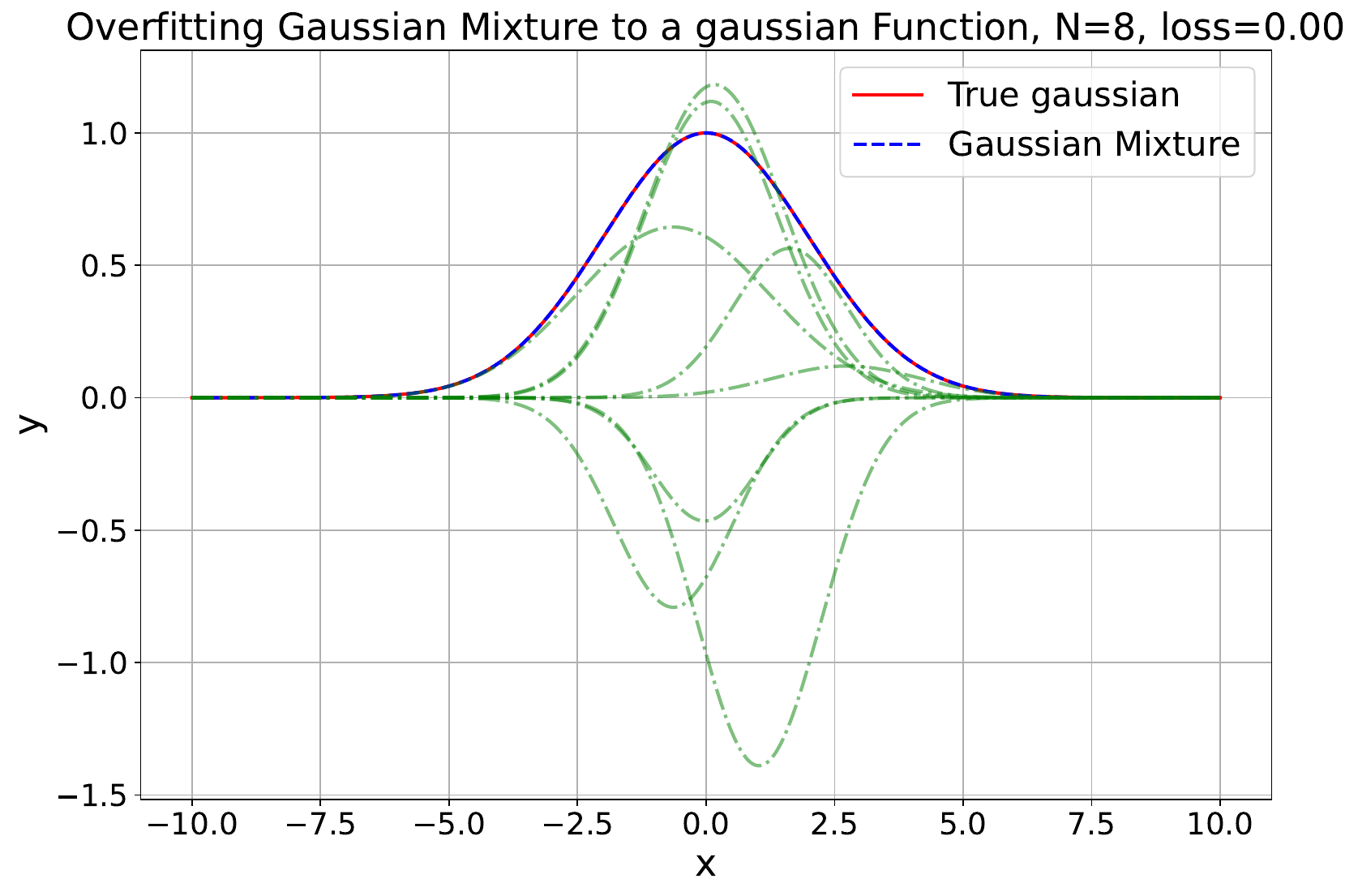} & 
    \includegraphics[width=0.24\linewidth]{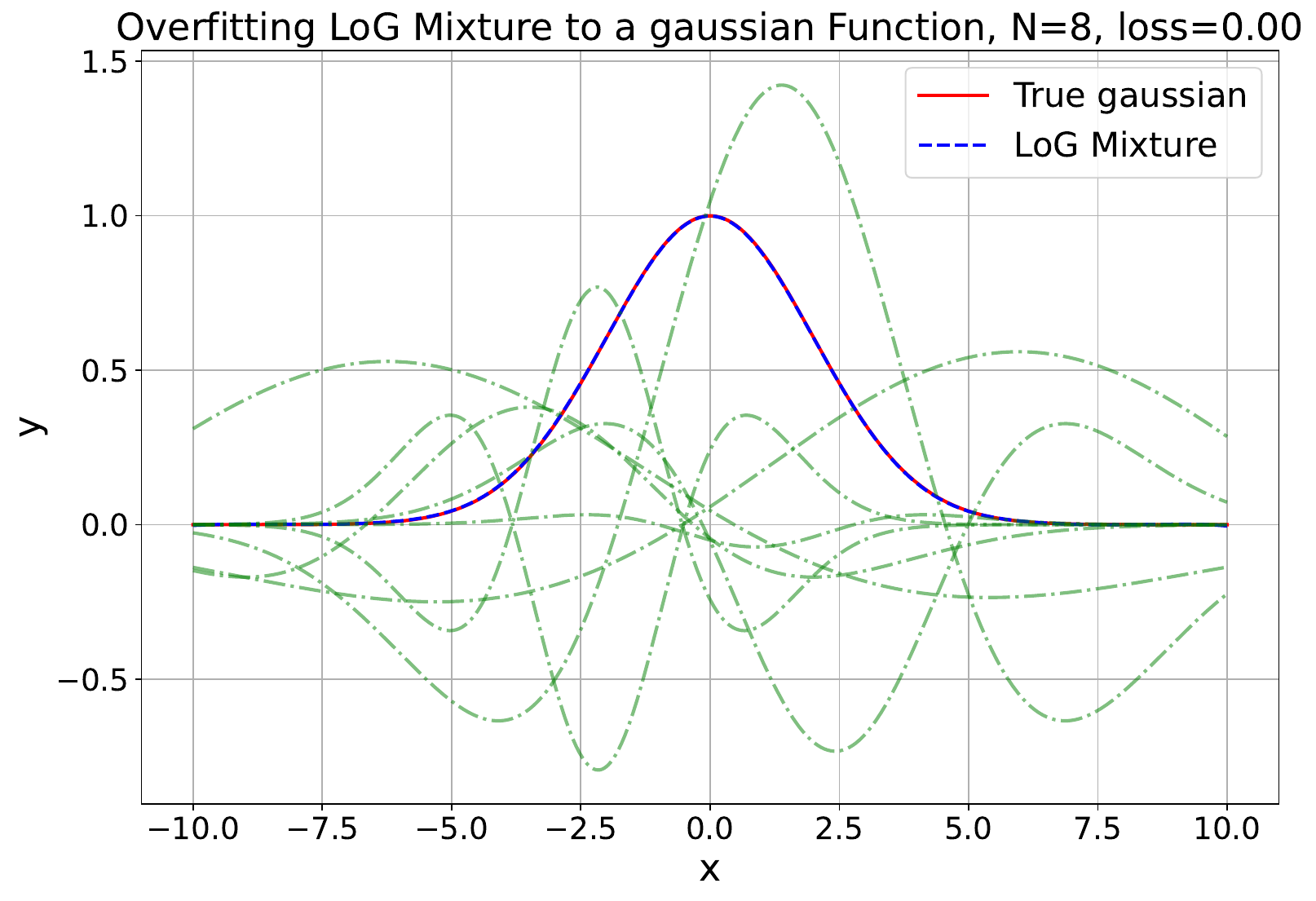} & 
    \includegraphics[width=0.24\linewidth]{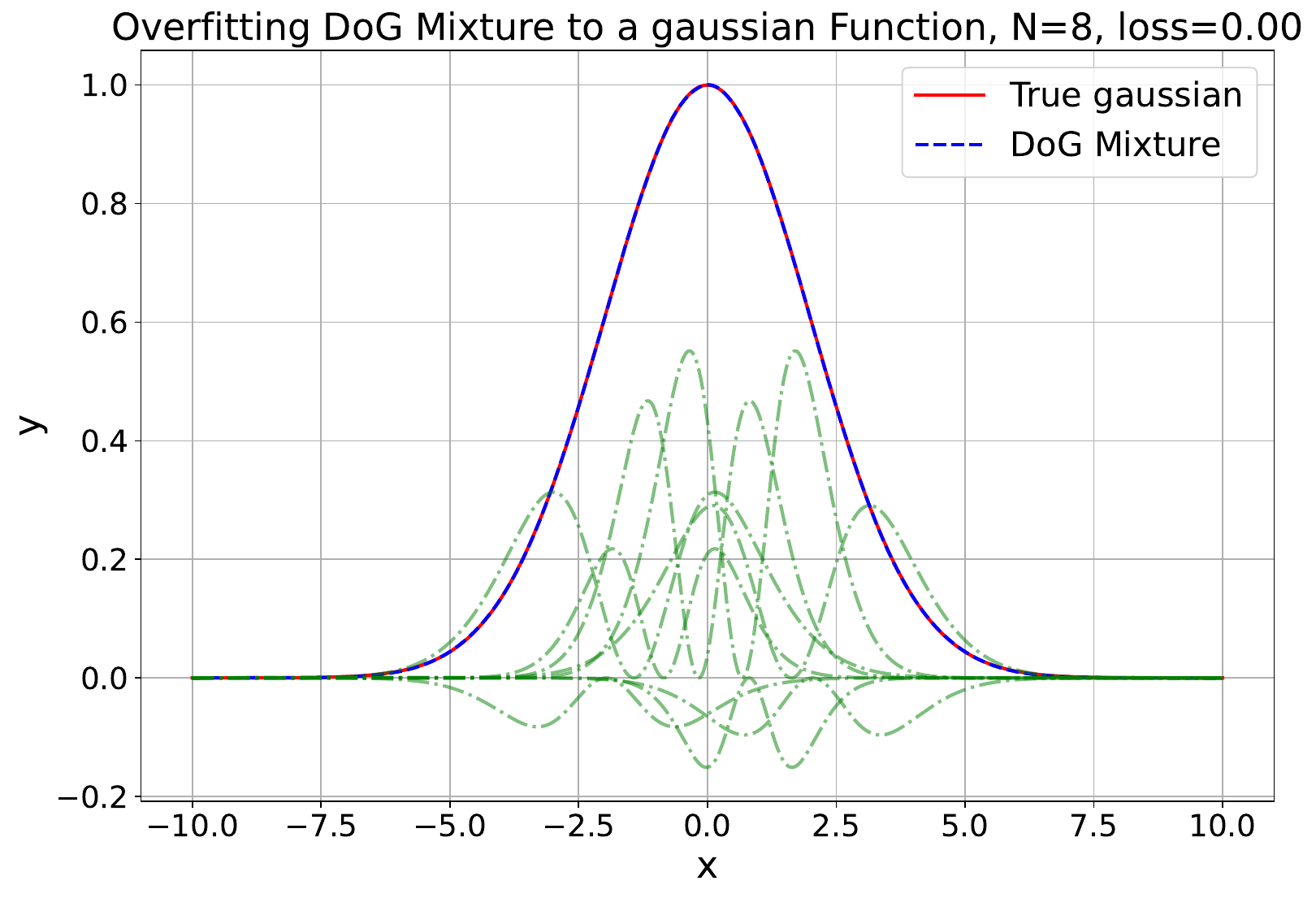} & 
    \includegraphics[width=0.24\linewidth]{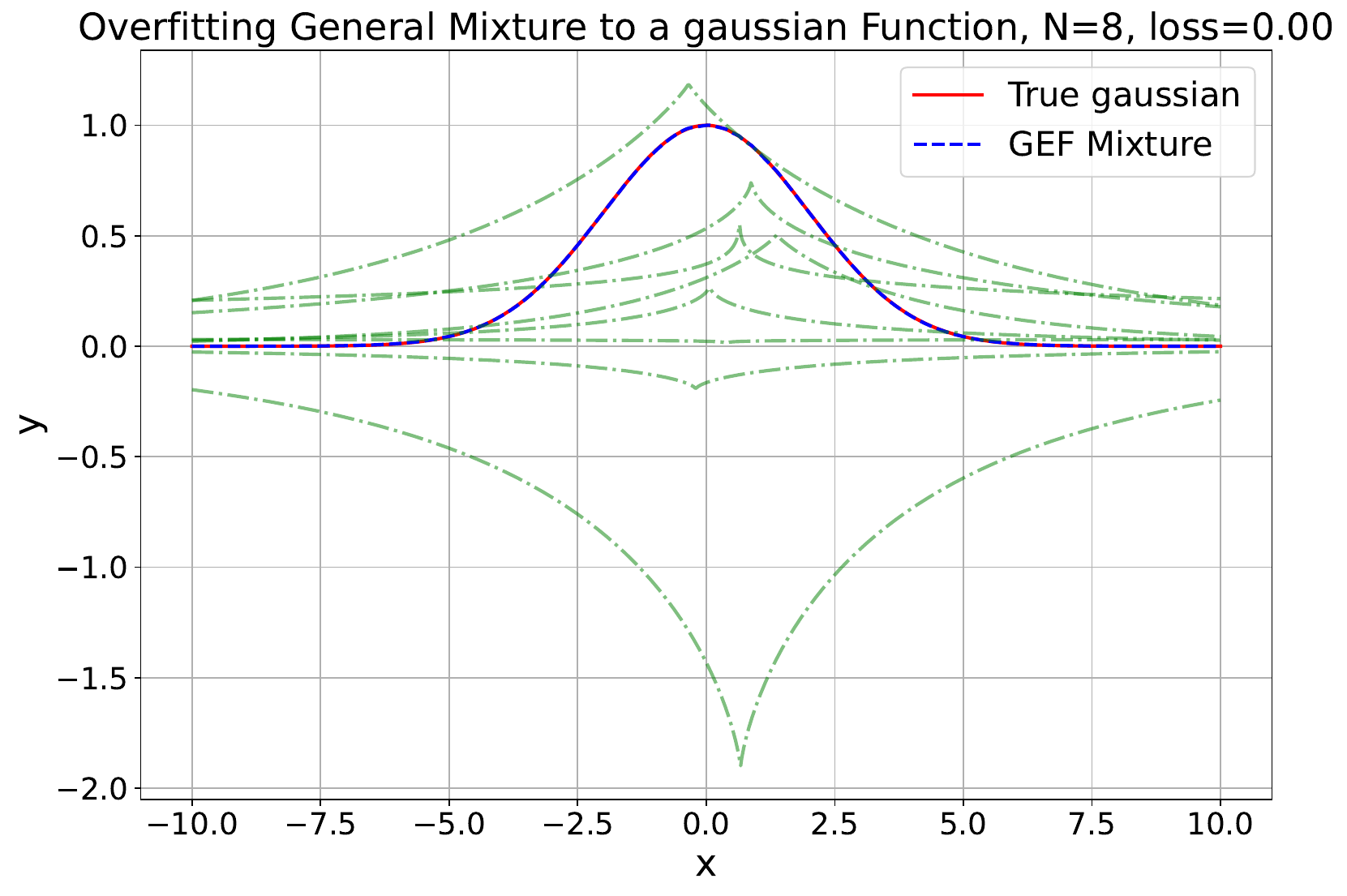}\\ 
    \includegraphics[width=0.24\linewidth]{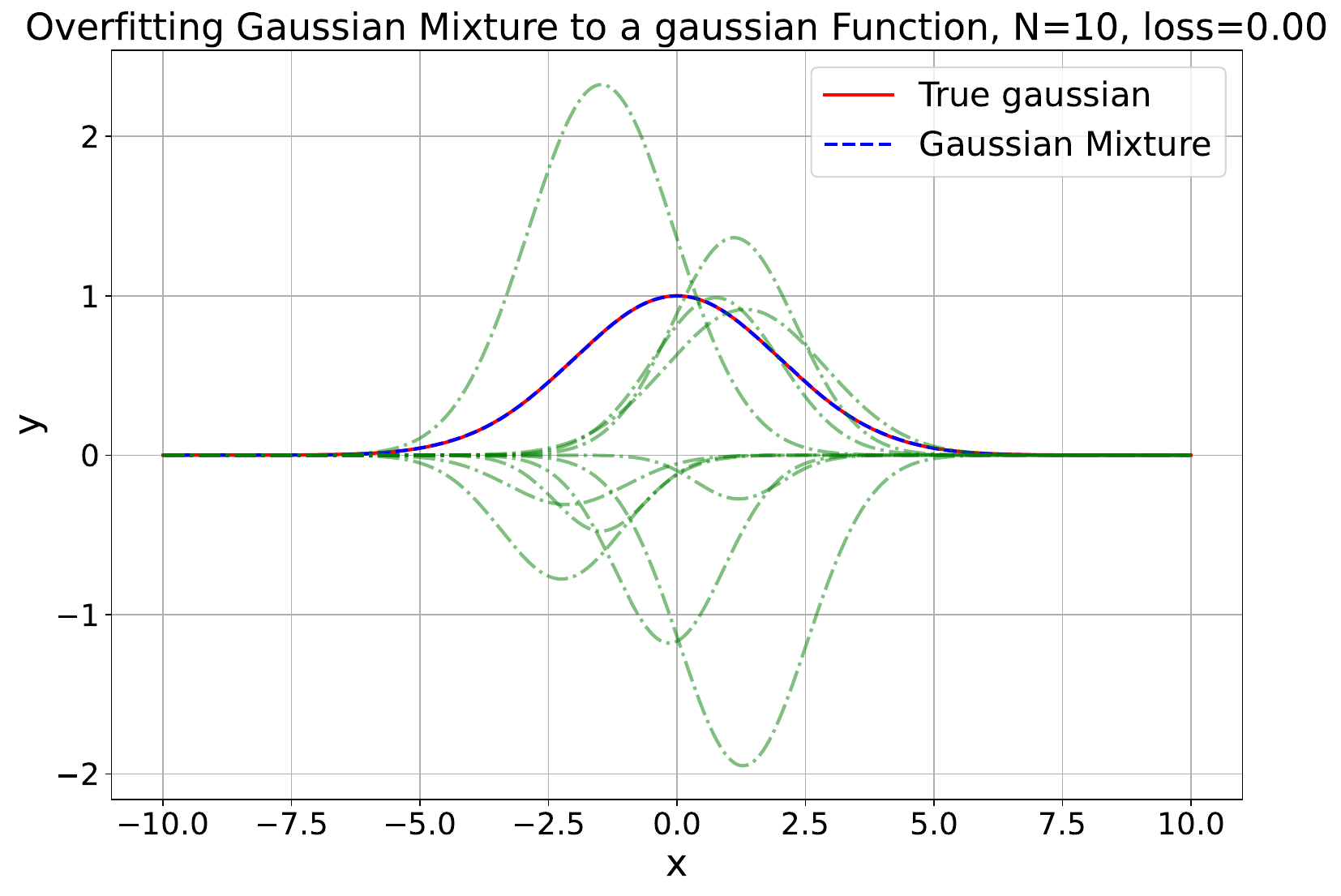} & 
    \includegraphics[width=0.24\linewidth]{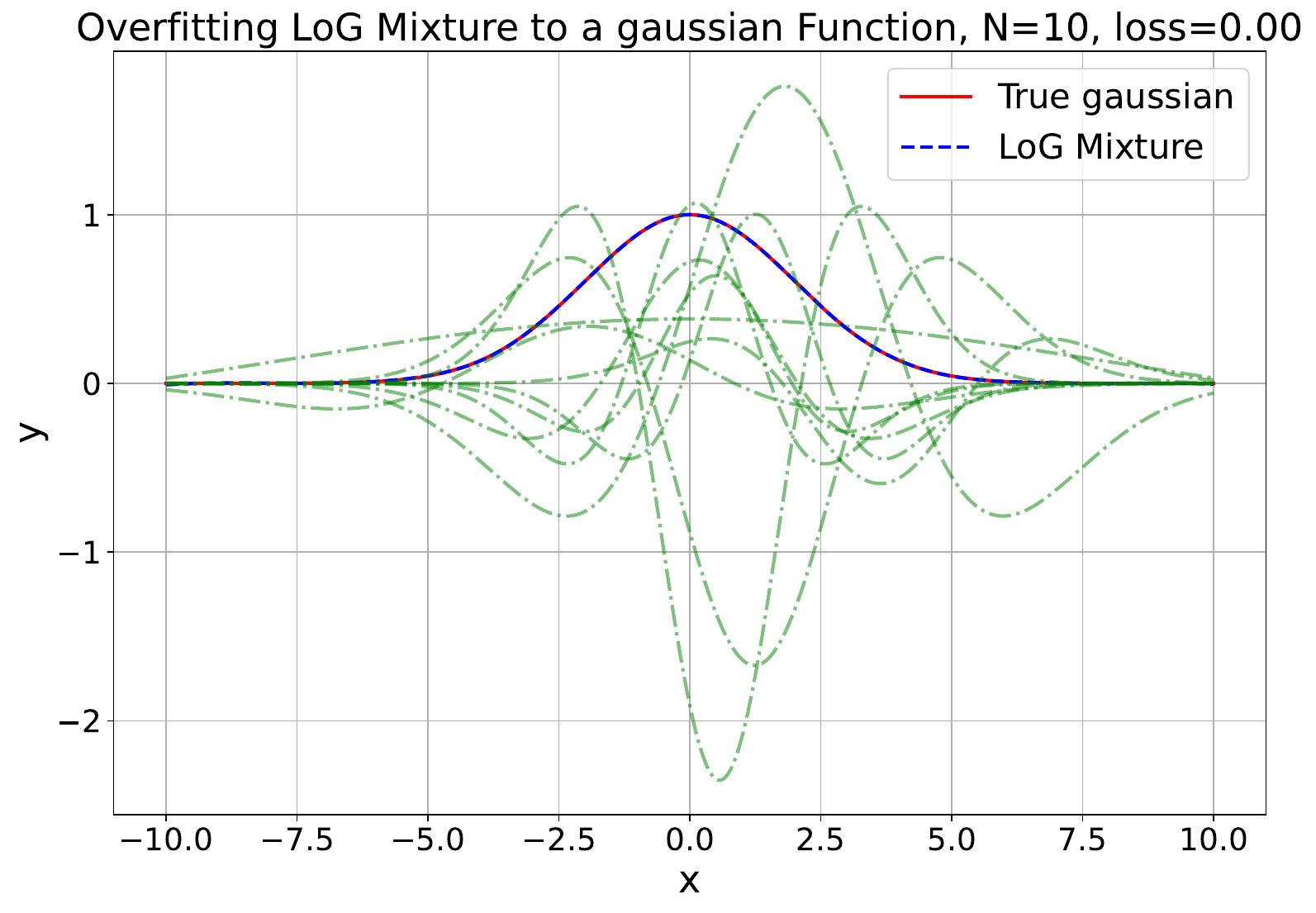} & 
    \includegraphics[width=0.24\linewidth]{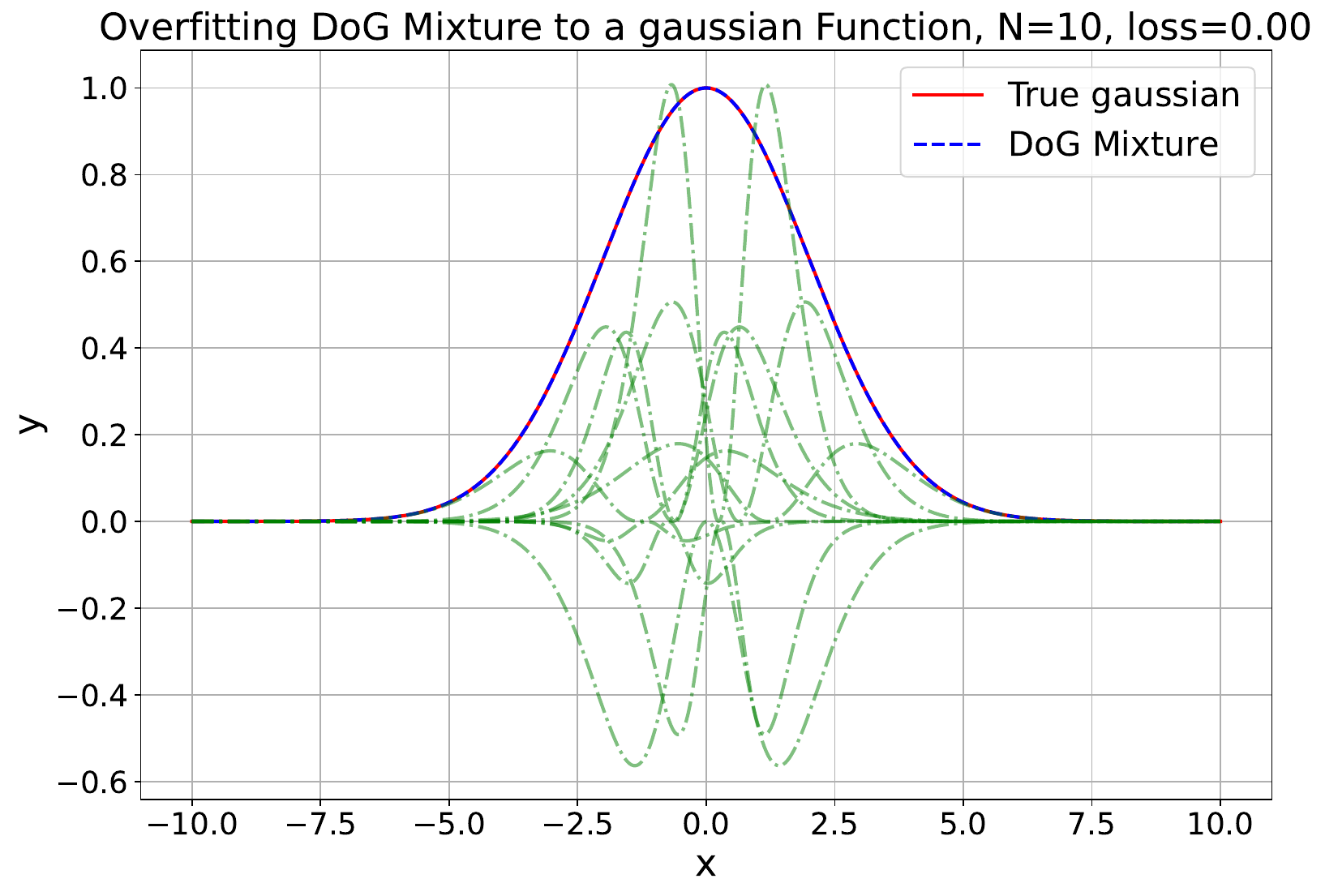} & 
    \includegraphics[width=0.24\linewidth]{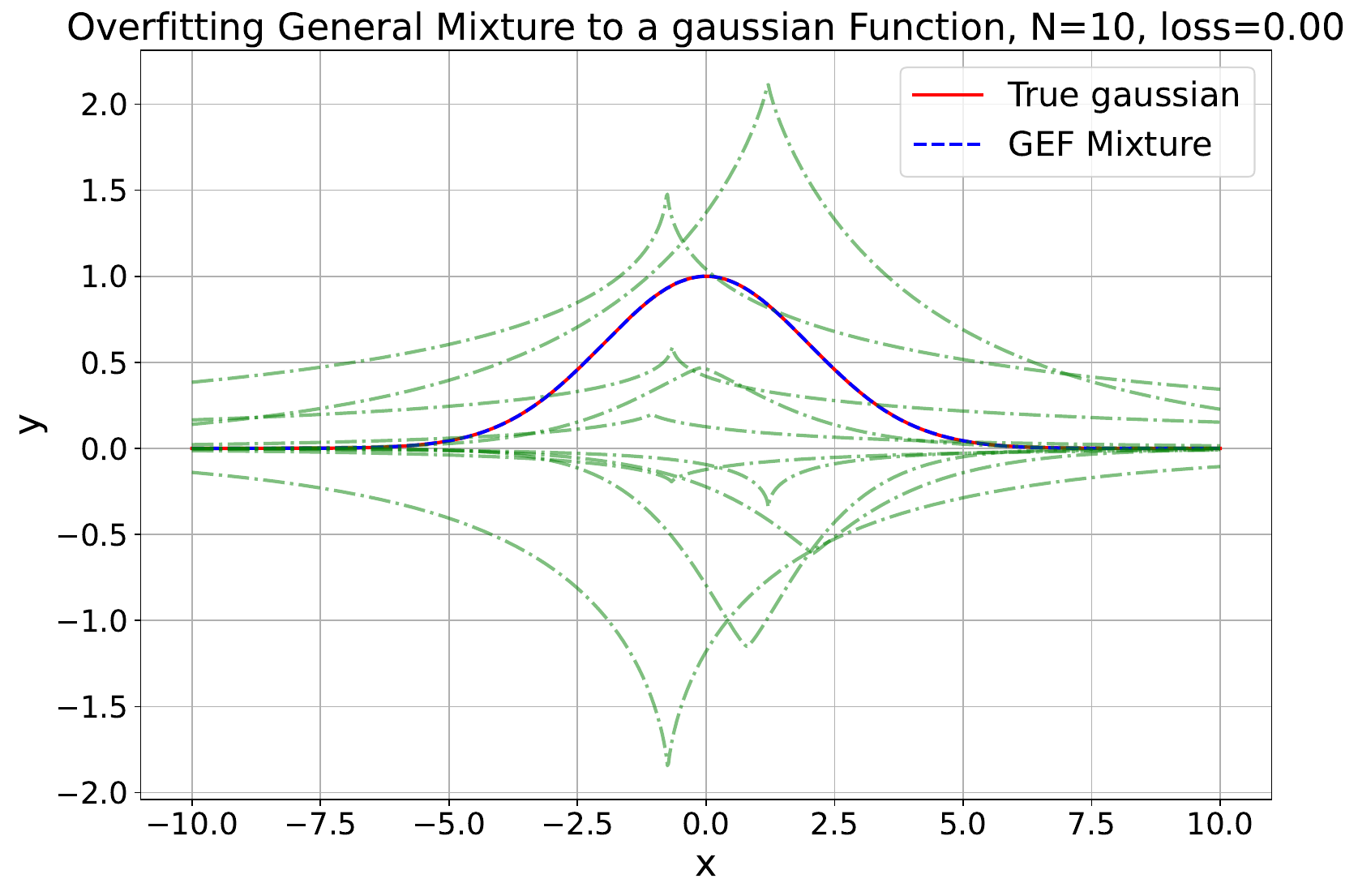}\\ 
    \includegraphics[width=0.24\linewidth]{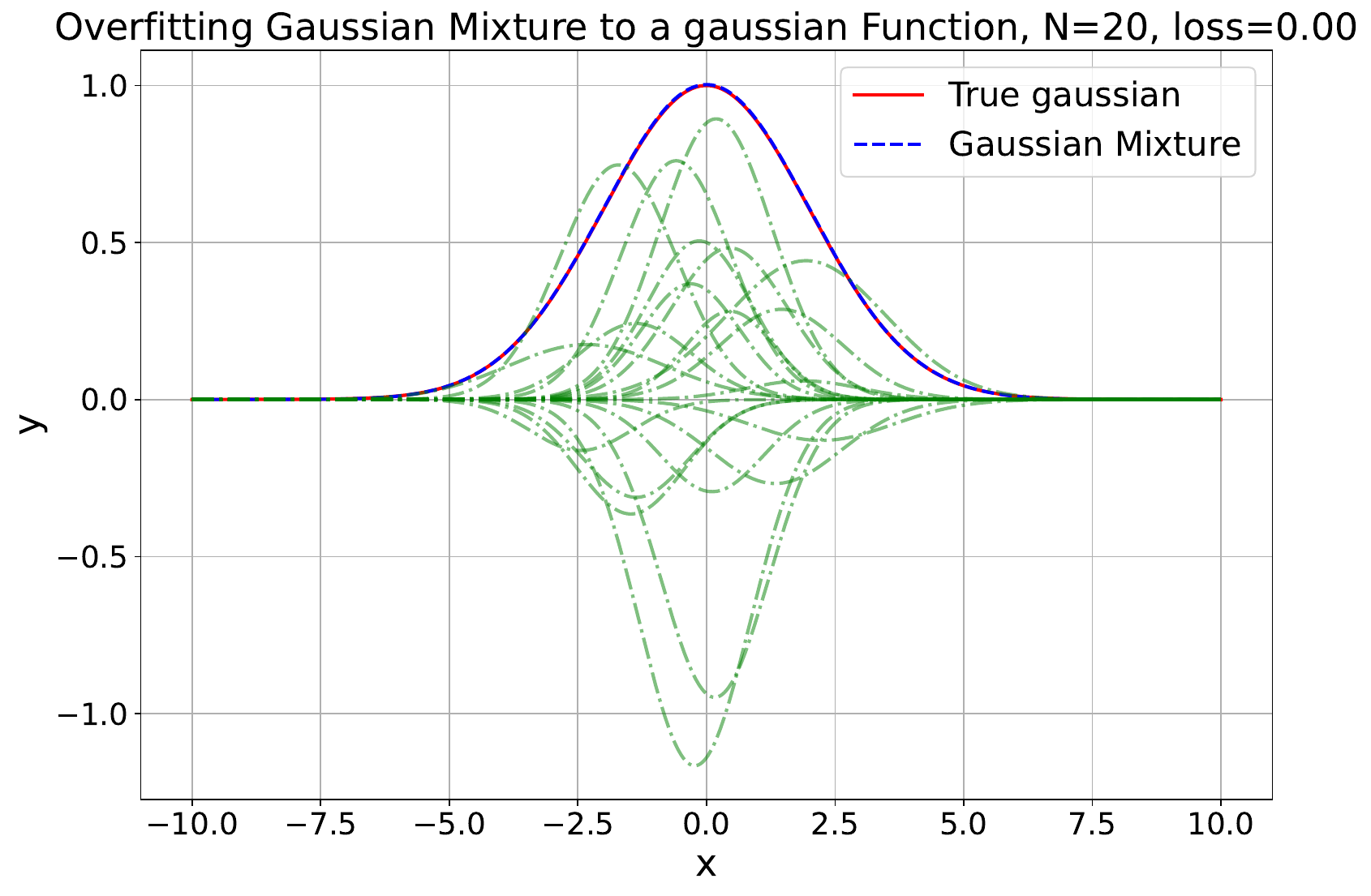} & 
    \includegraphics[width=0.24\linewidth]{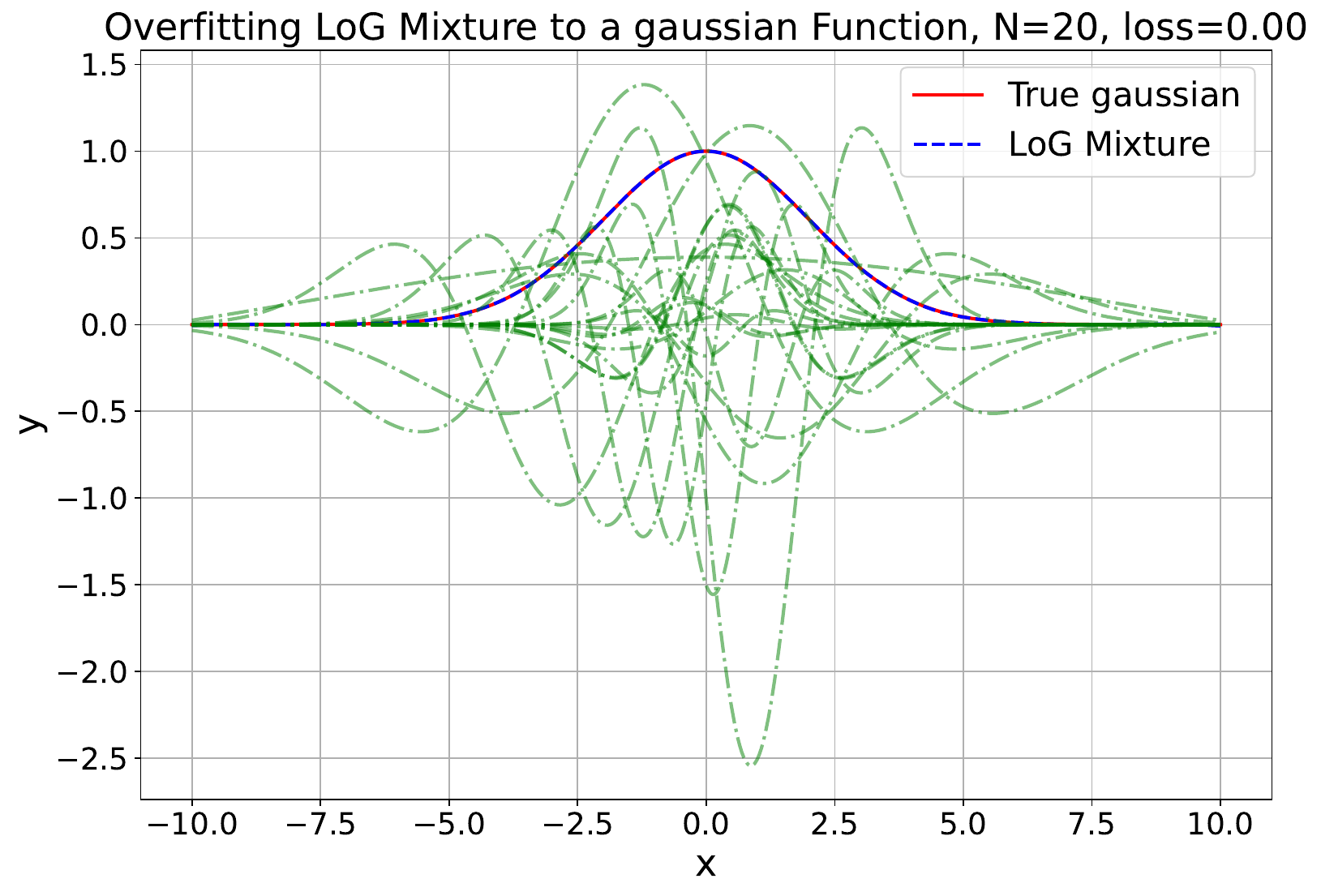} & 
    \includegraphics[width=0.24\linewidth]{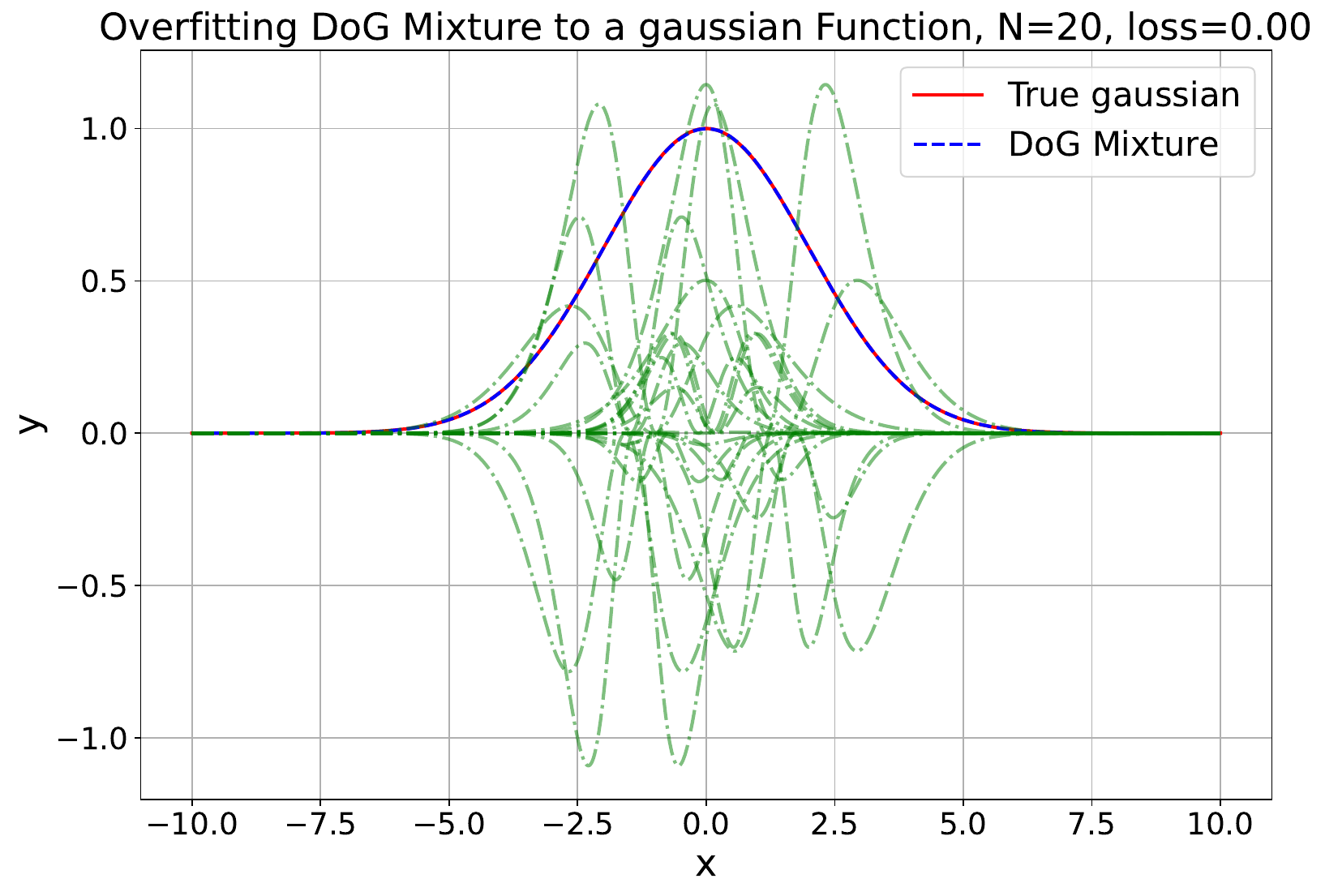} & 
    \includegraphics[width=0.24\linewidth]{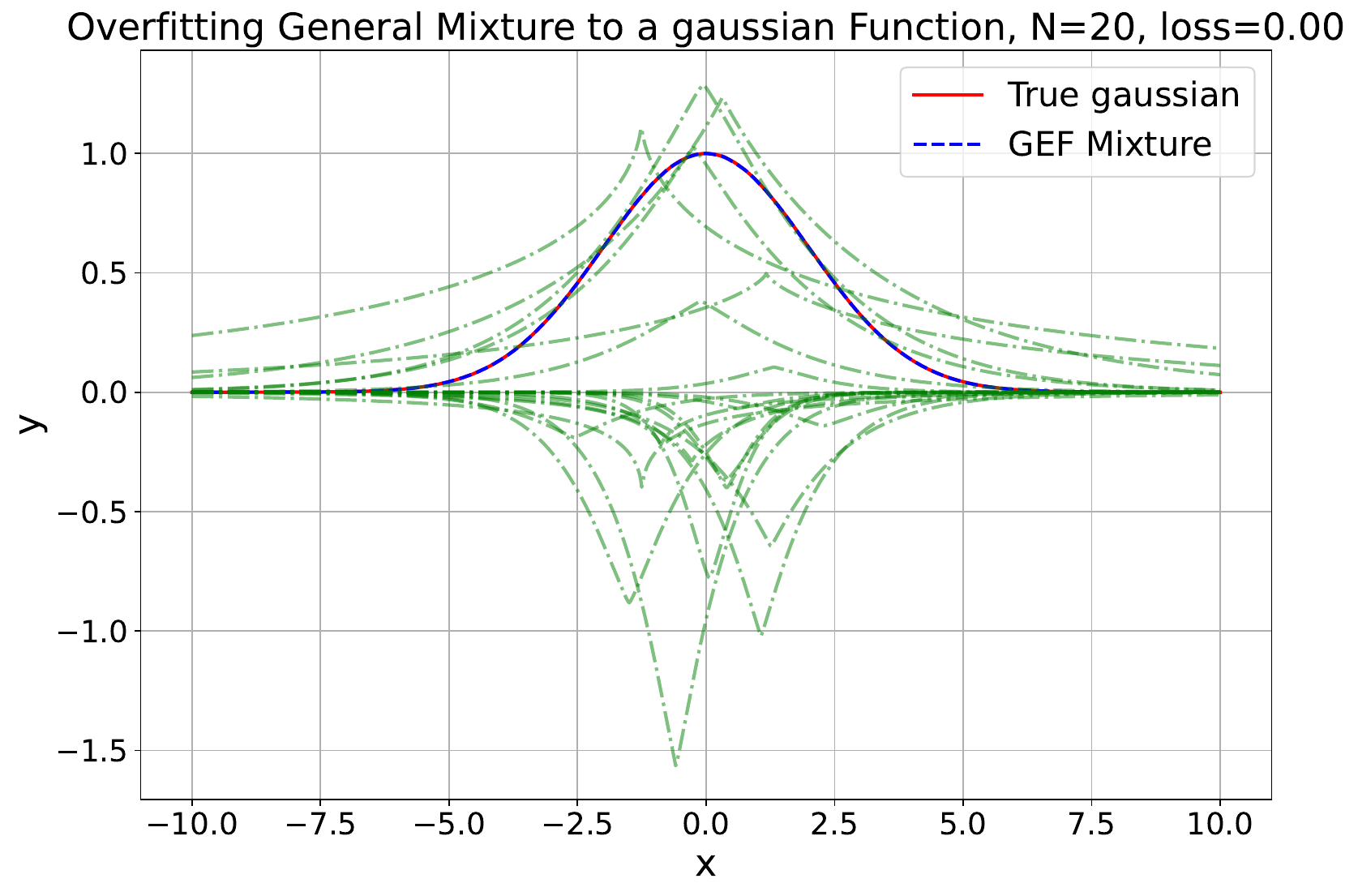}\\ 
    
    \end{tabular}
    }
    \caption{\textbf{Numerical Simulation Examples of Fitting Gaussians with Real Weights Mixtures ( N= 2, 5, 8, and 10 )}. We show some fitting examples for Gaussian signals with Real weights mixtures (can be negative). The four mixtures used from left to right are Gaussians, LoG, DoG, and General mixtures. From top to bottom: N = 2, 8, and 10 components. The optimized individual components are shown in green. Some examples fail to optimize due to numerical instability in both Gaussians and GEF mixtures. Note that GEF is very efficient in fitting the Gaussian with few components while LoG and DoG are more stable for a larger number of components. }
    \label{supfig:fitting_gaussian_N}
    \end{figure*}
    
\begin{figure*}[h]
    \centering
    \resizebox{1.0\linewidth}{!}{
    \begin{tabular}{cccc}
    \tabcolsep=0.01cm
    Gaussian Mixture& LoG Mixture & DoG Mixture & GEF Mixture \\ 
    \includegraphics[width=0.24\linewidth]{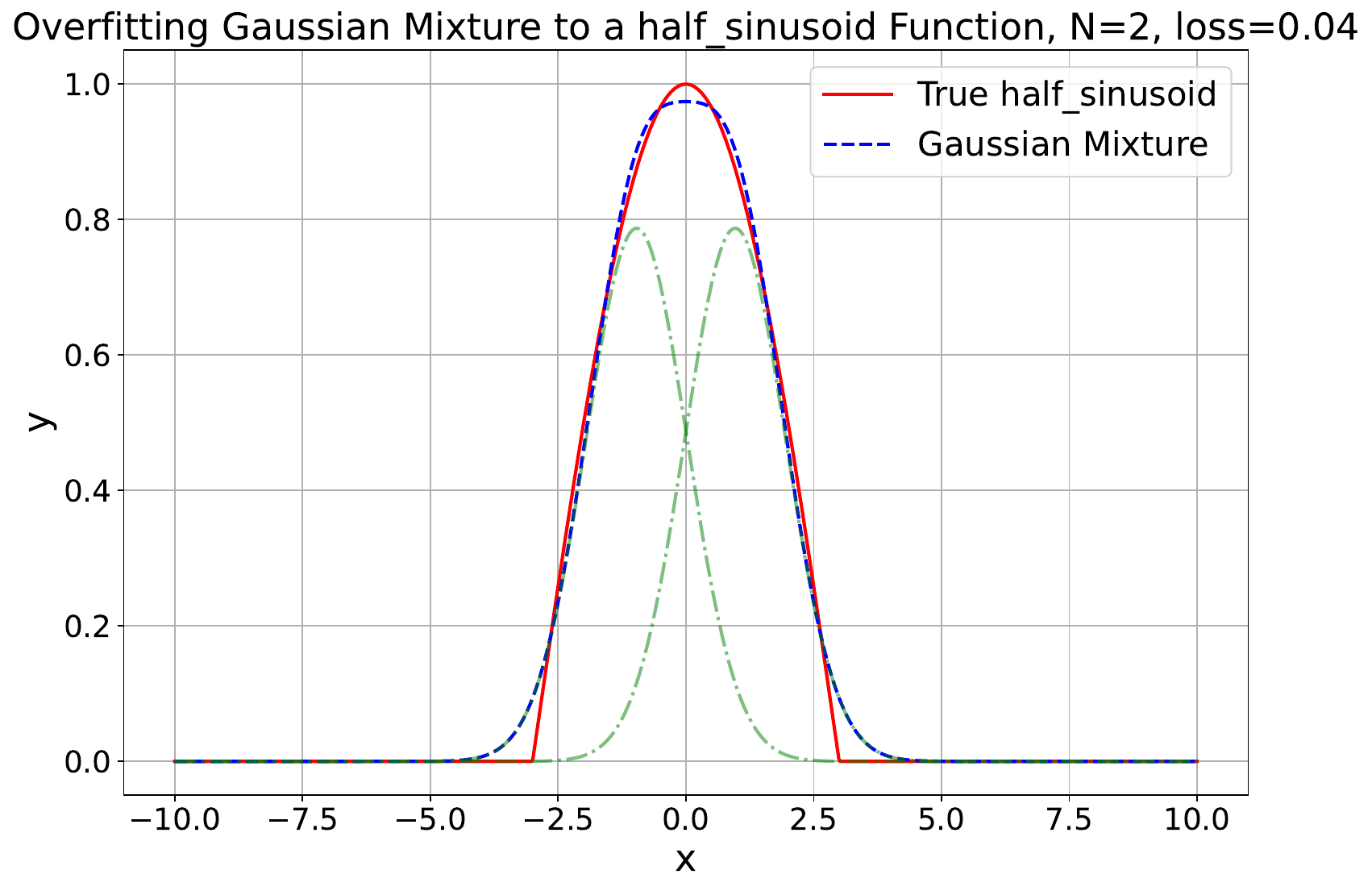} & 
    \includegraphics[width=0.24\linewidth]{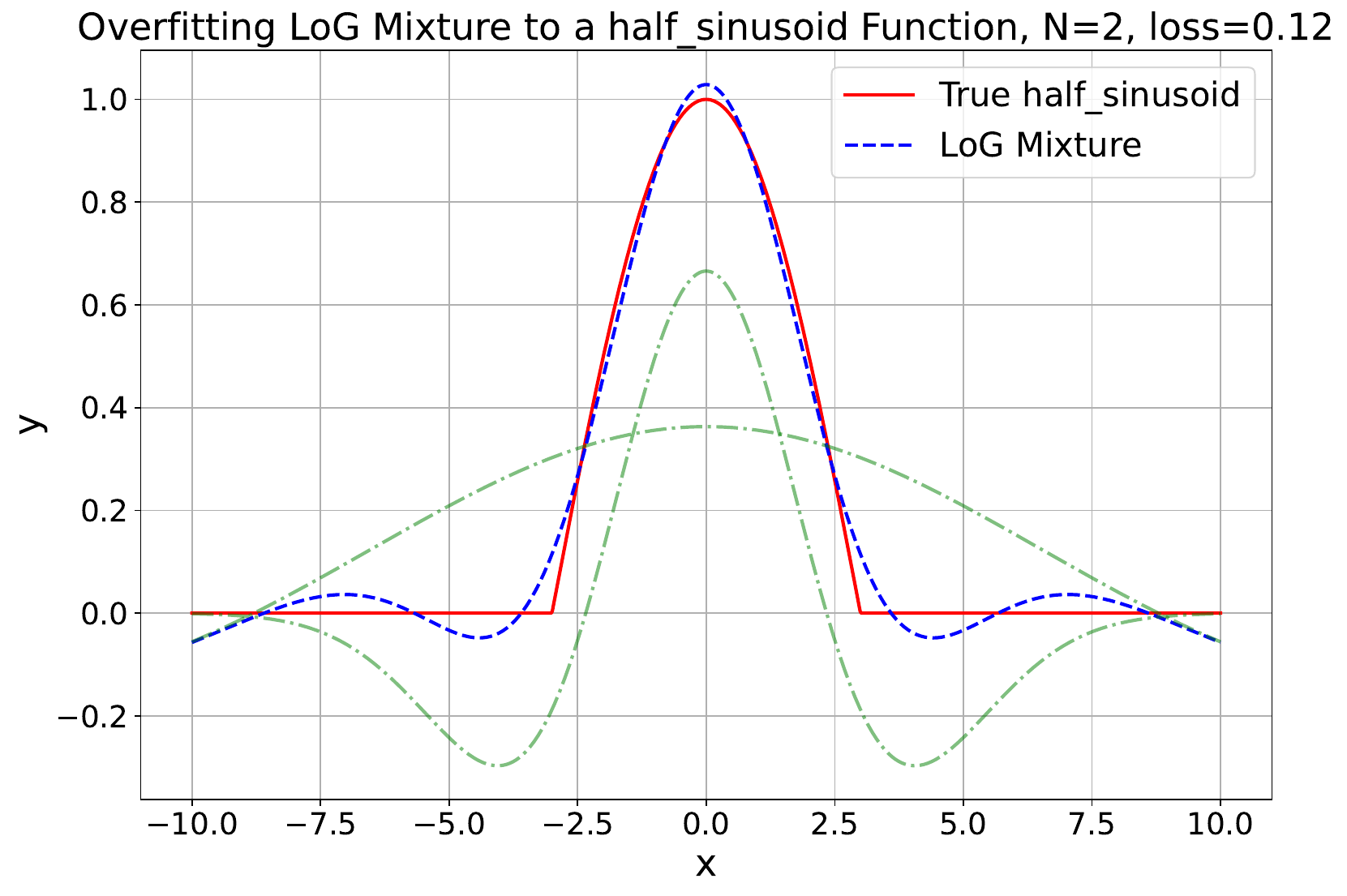} & 
    \includegraphics[width=0.24\linewidth]{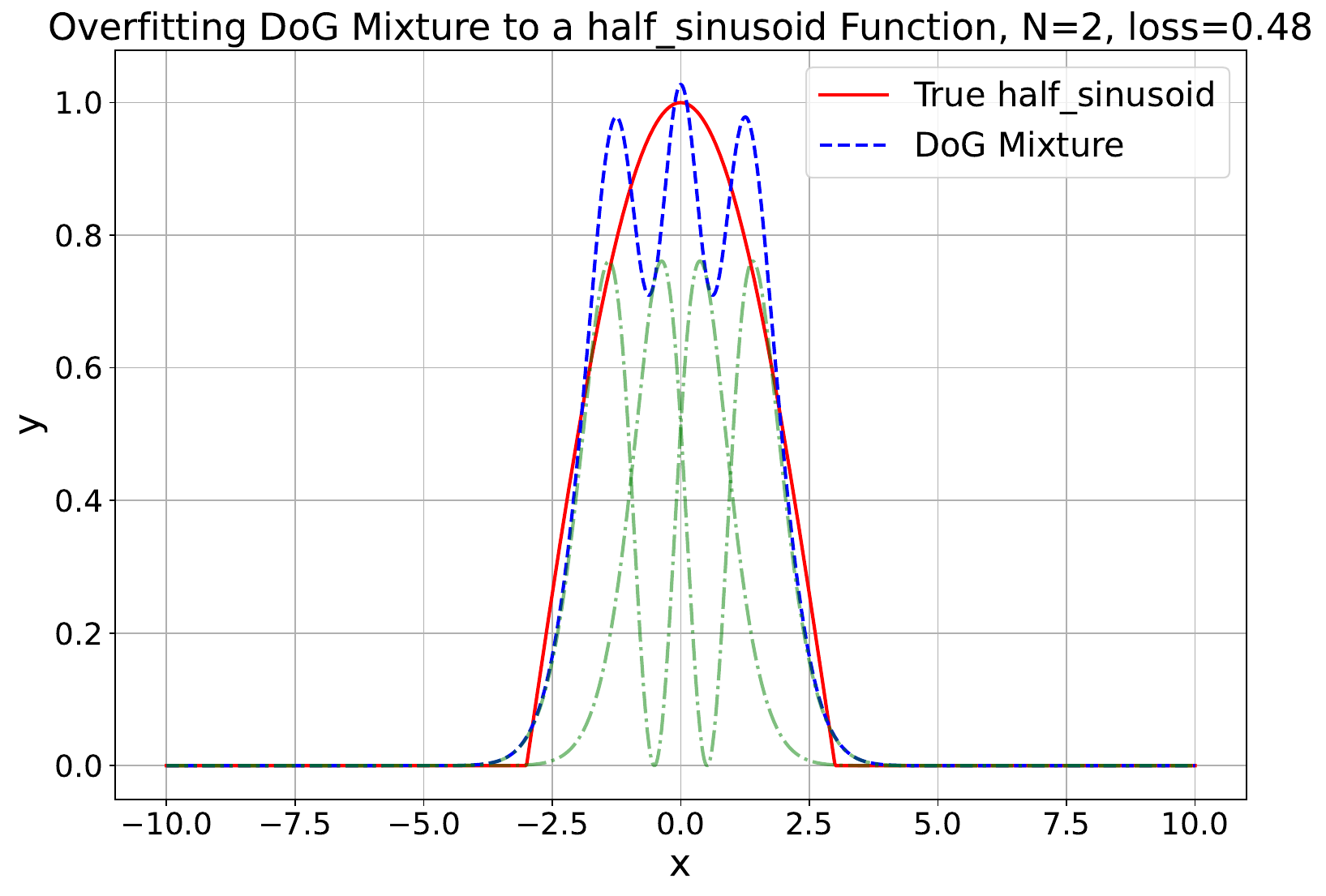} & 
    \includegraphics[width=0.24\linewidth]{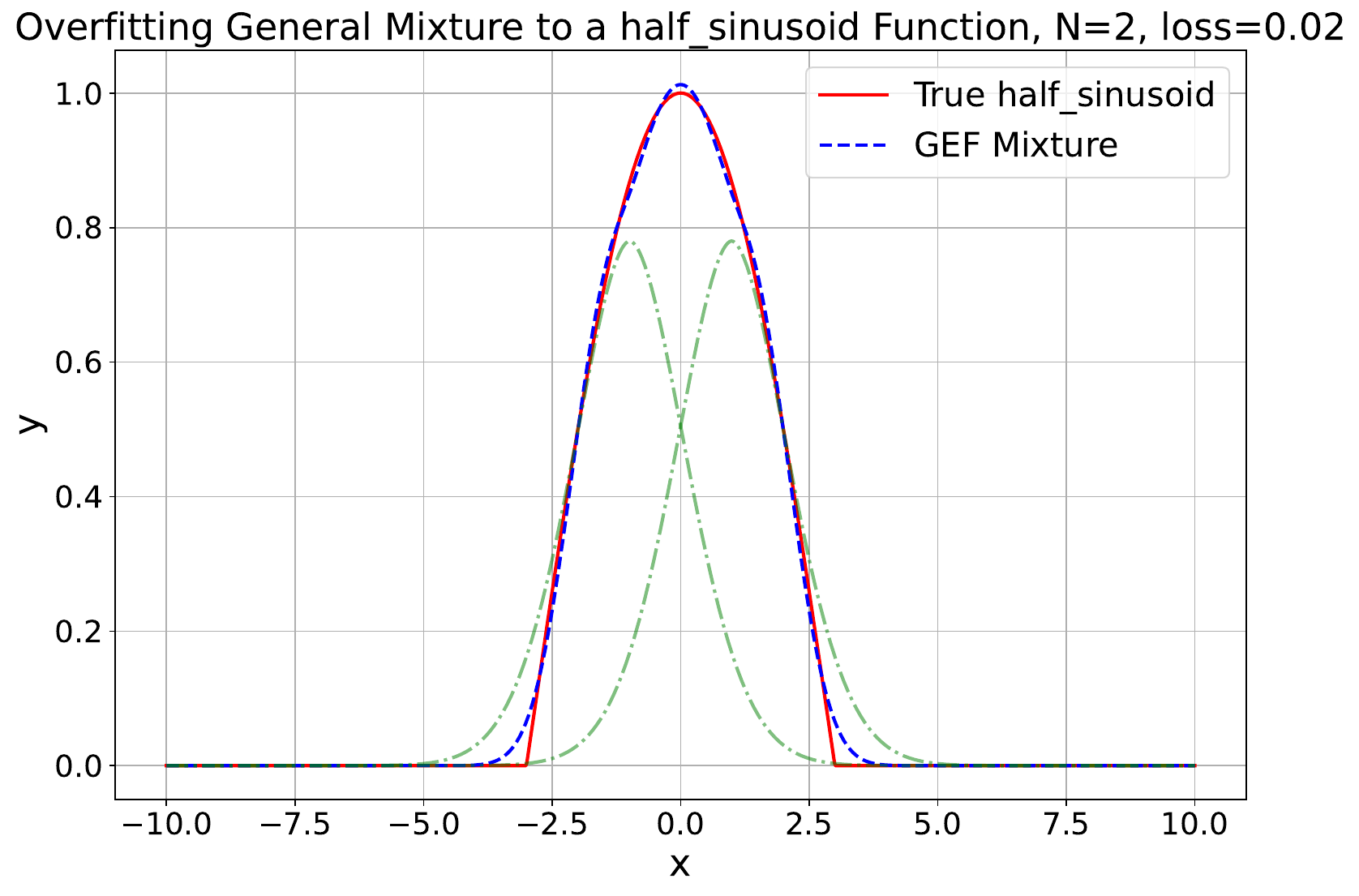}\\ 
    \includegraphics[width=0.24\linewidth]{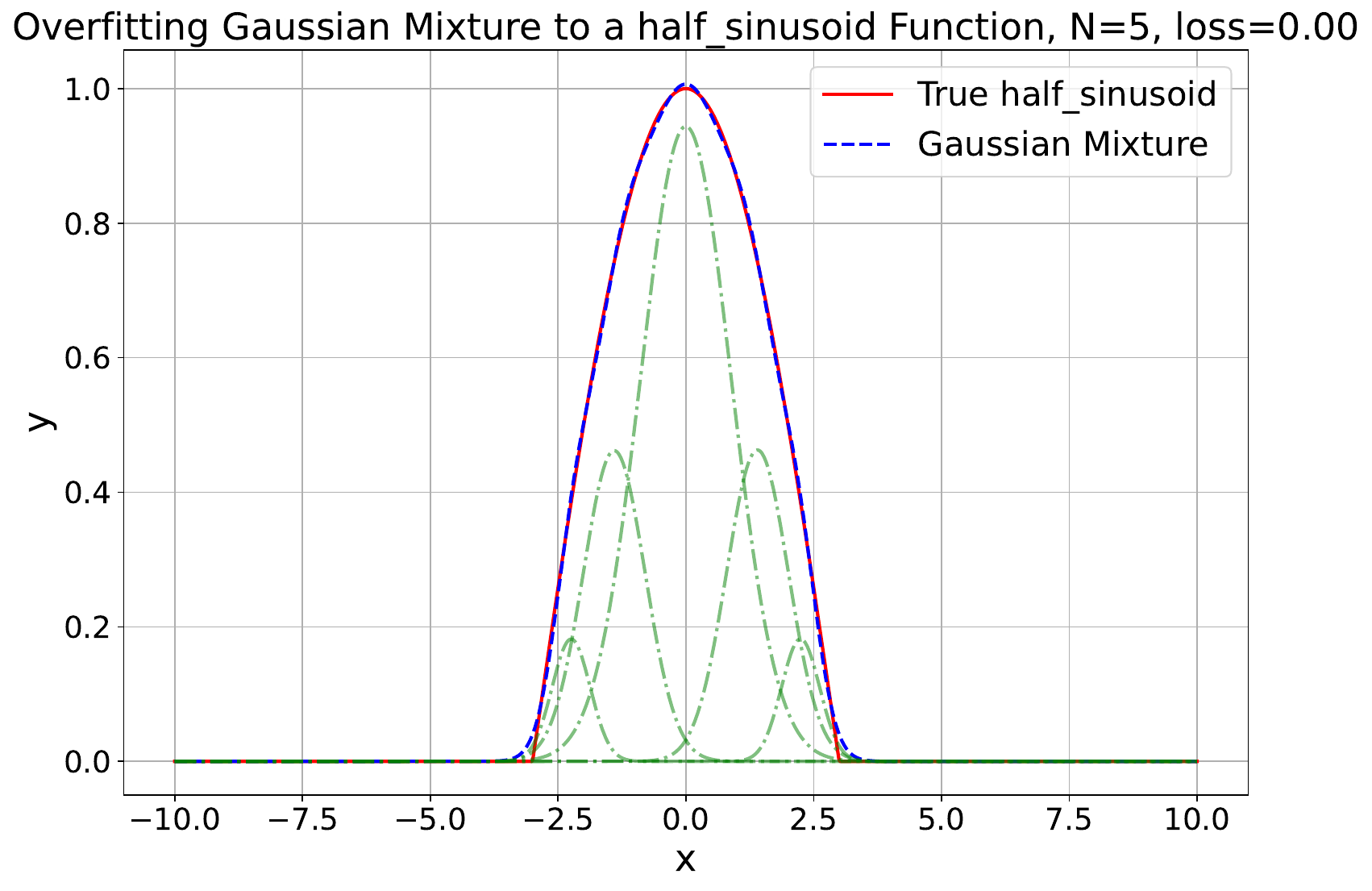} & 
    \includegraphics[width=0.24\linewidth]{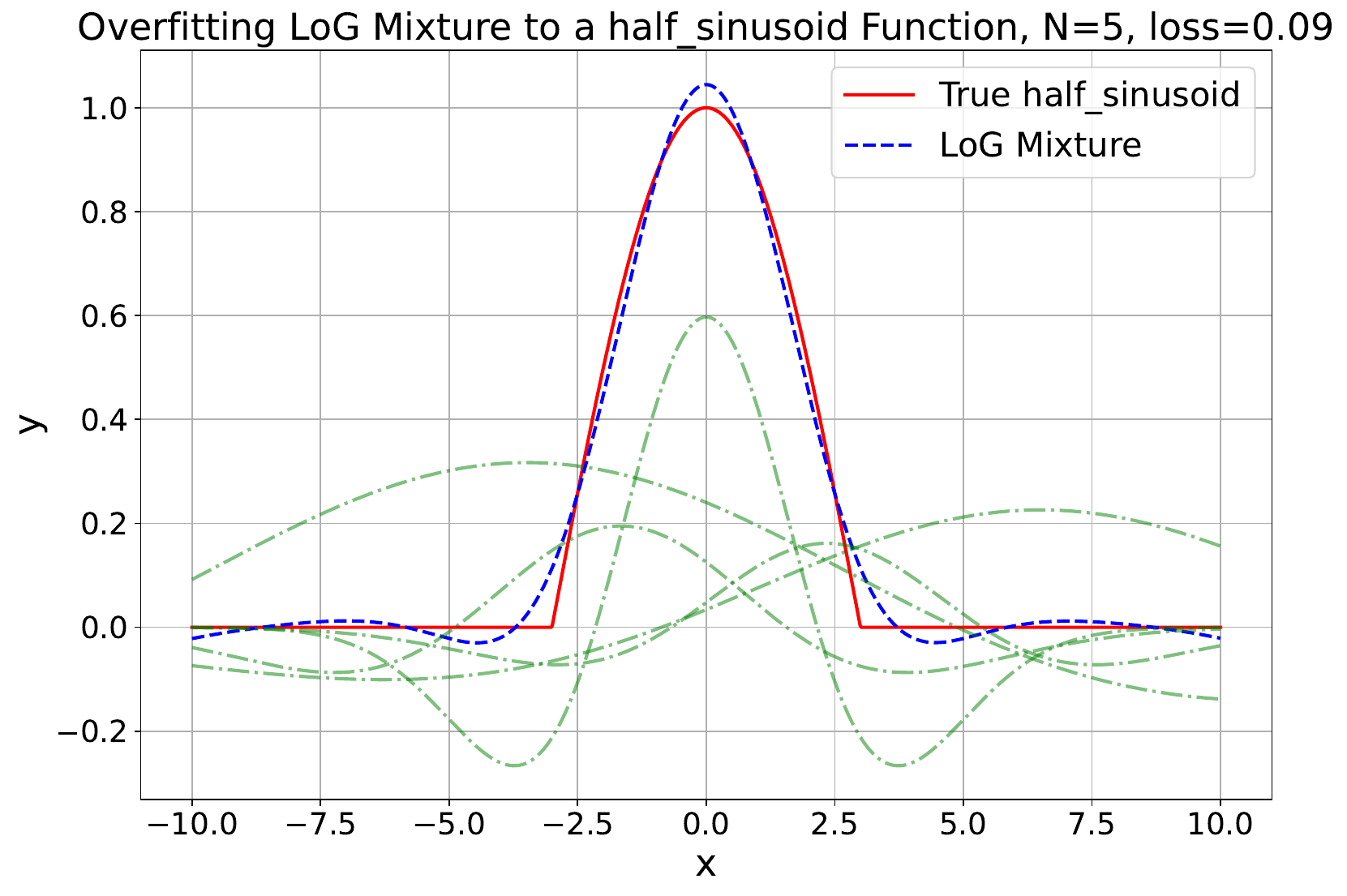} & 
    \includegraphics[width=0.24\linewidth]{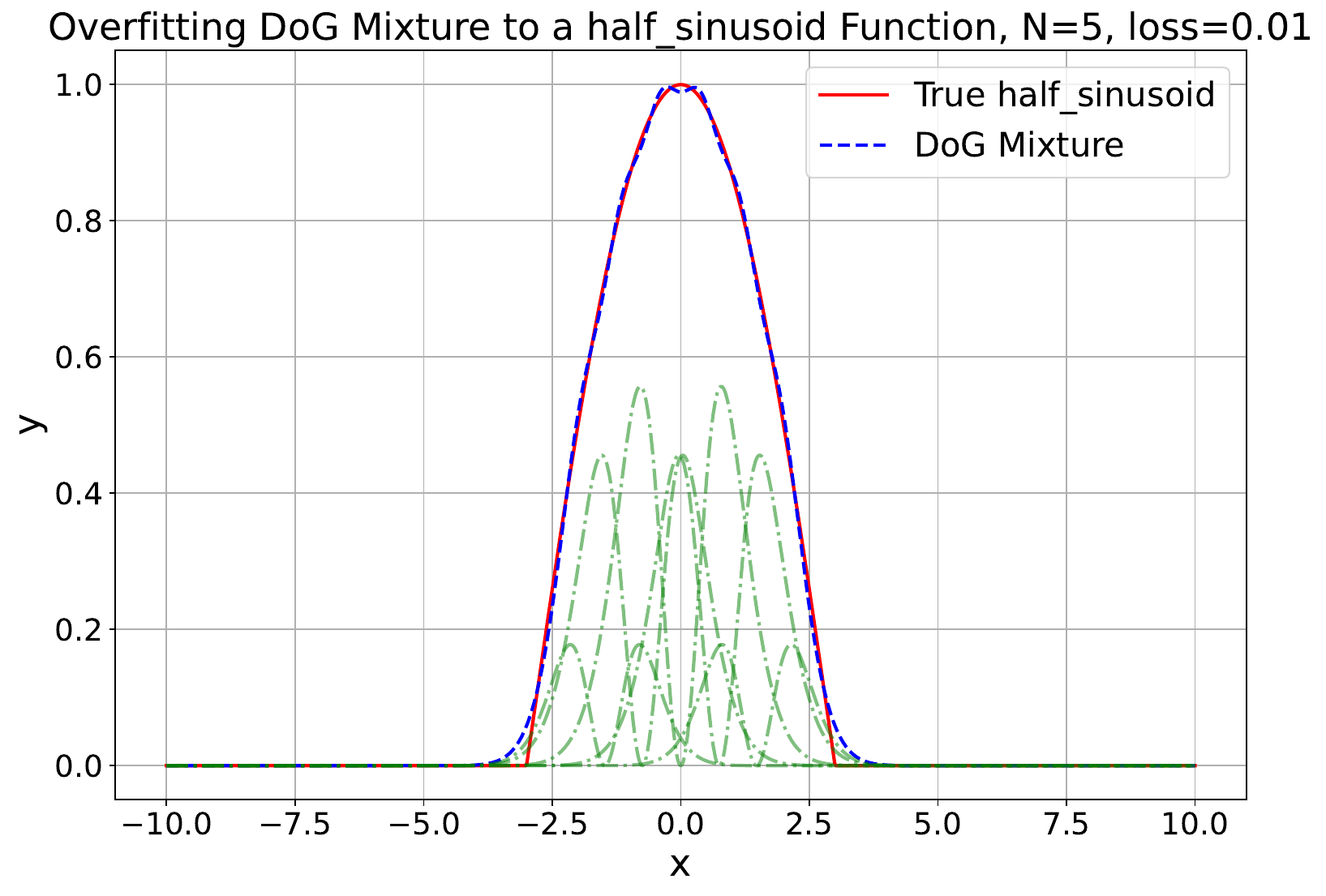} & 
    \includegraphics[width=0.24\linewidth]{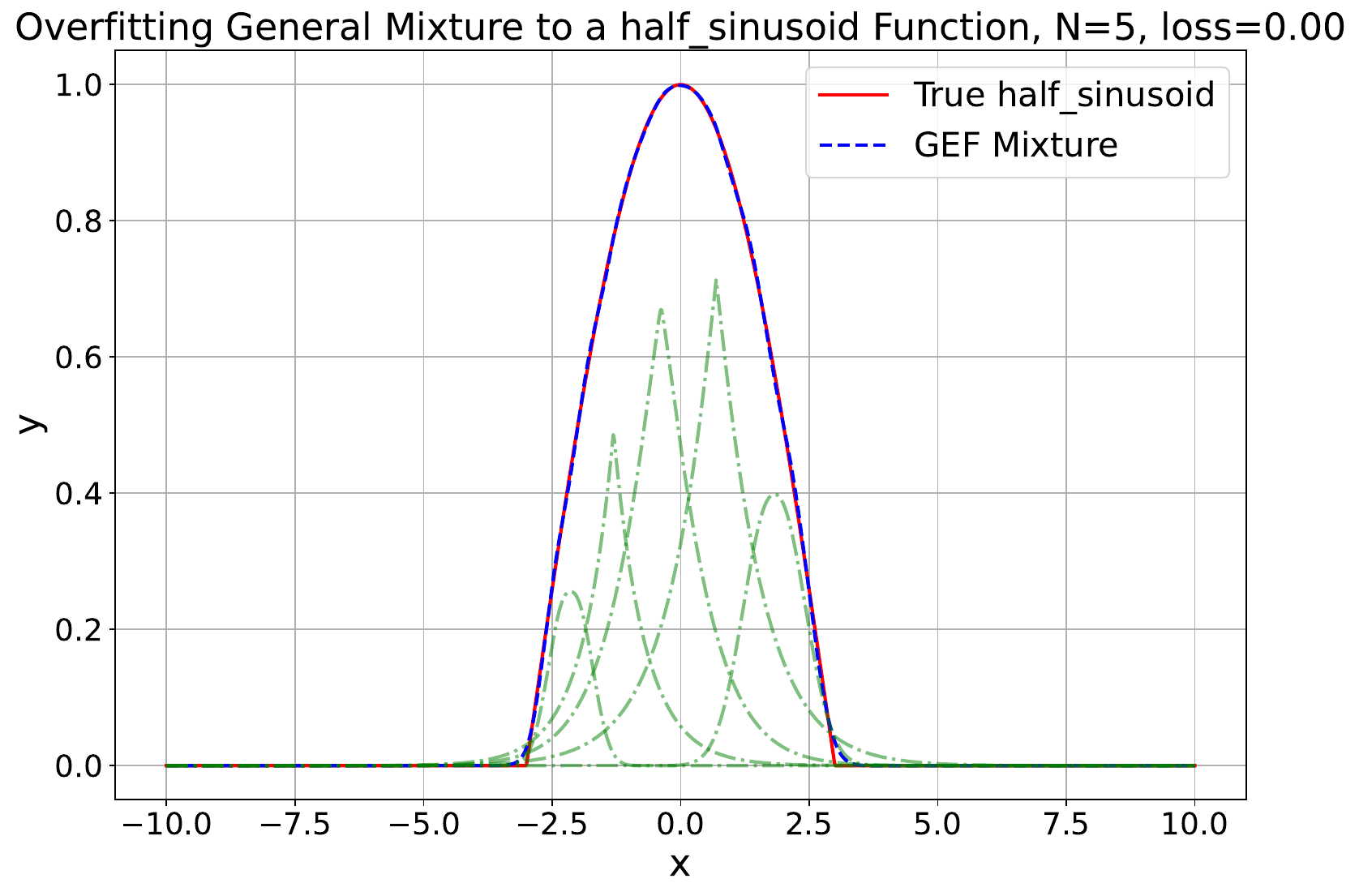}\\ 
    \includegraphics[width=0.24\linewidth]{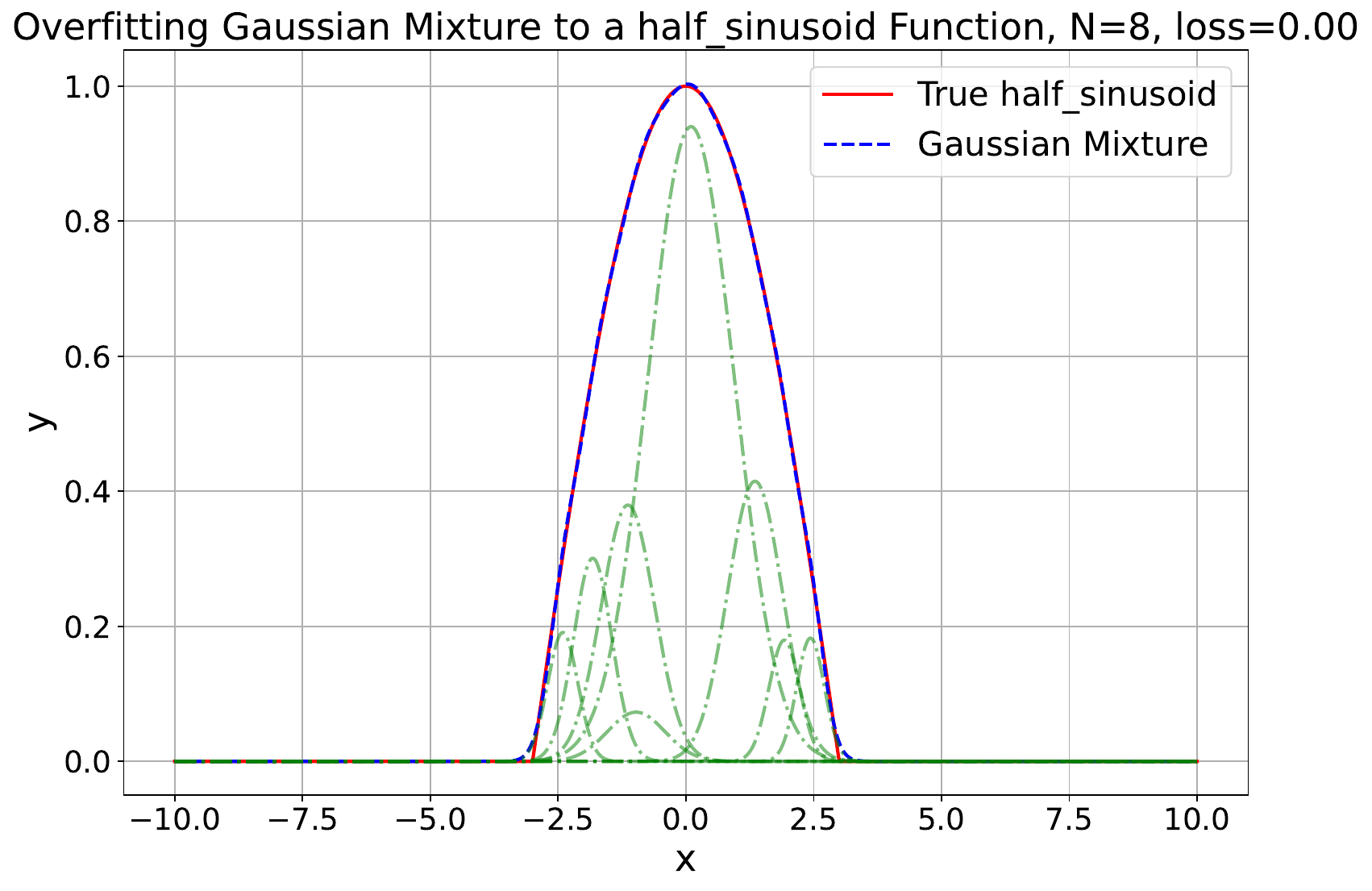} & 
    \includegraphics[width=0.24\linewidth]{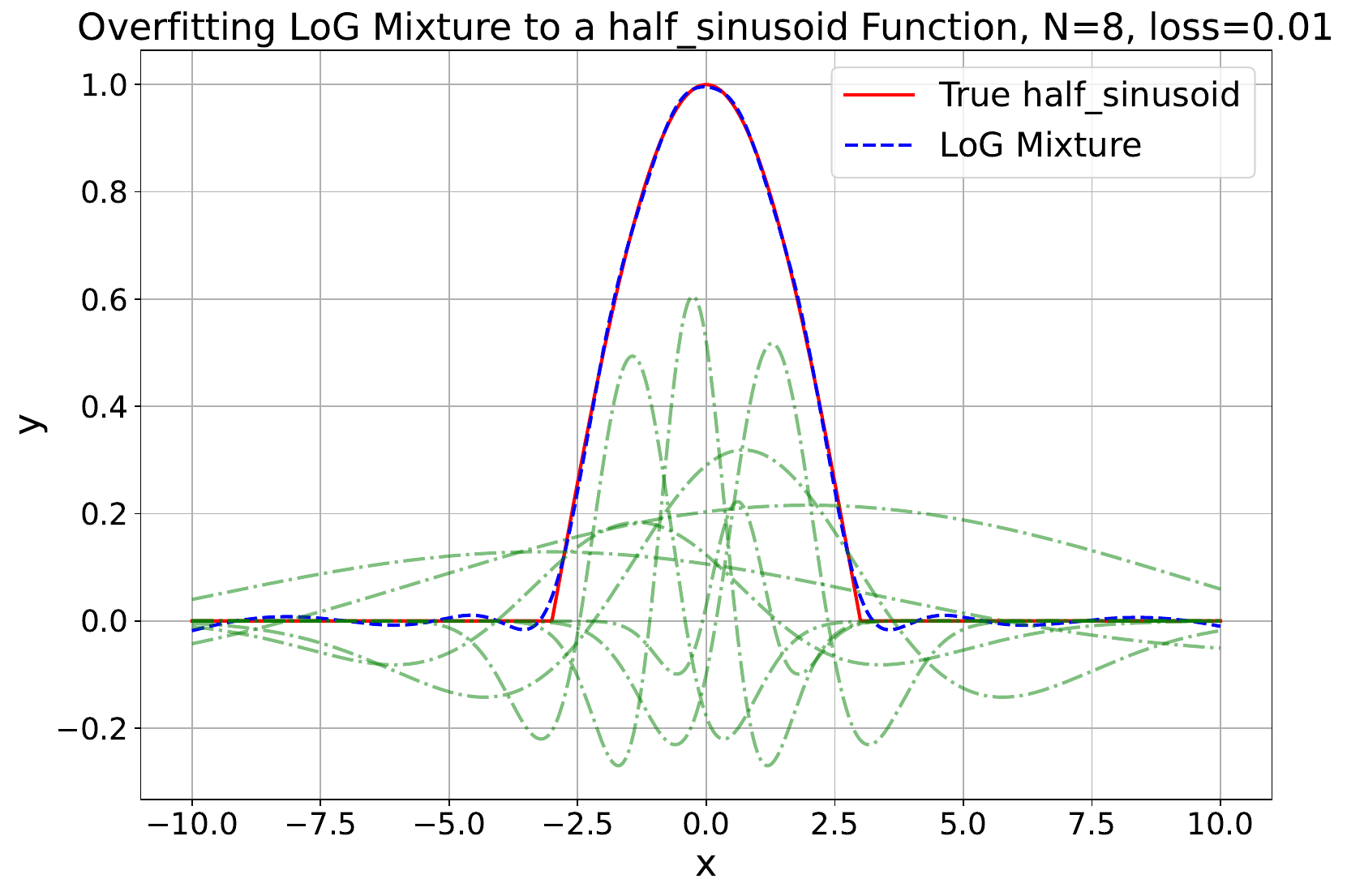} & 
    \includegraphics[width=0.24\linewidth]{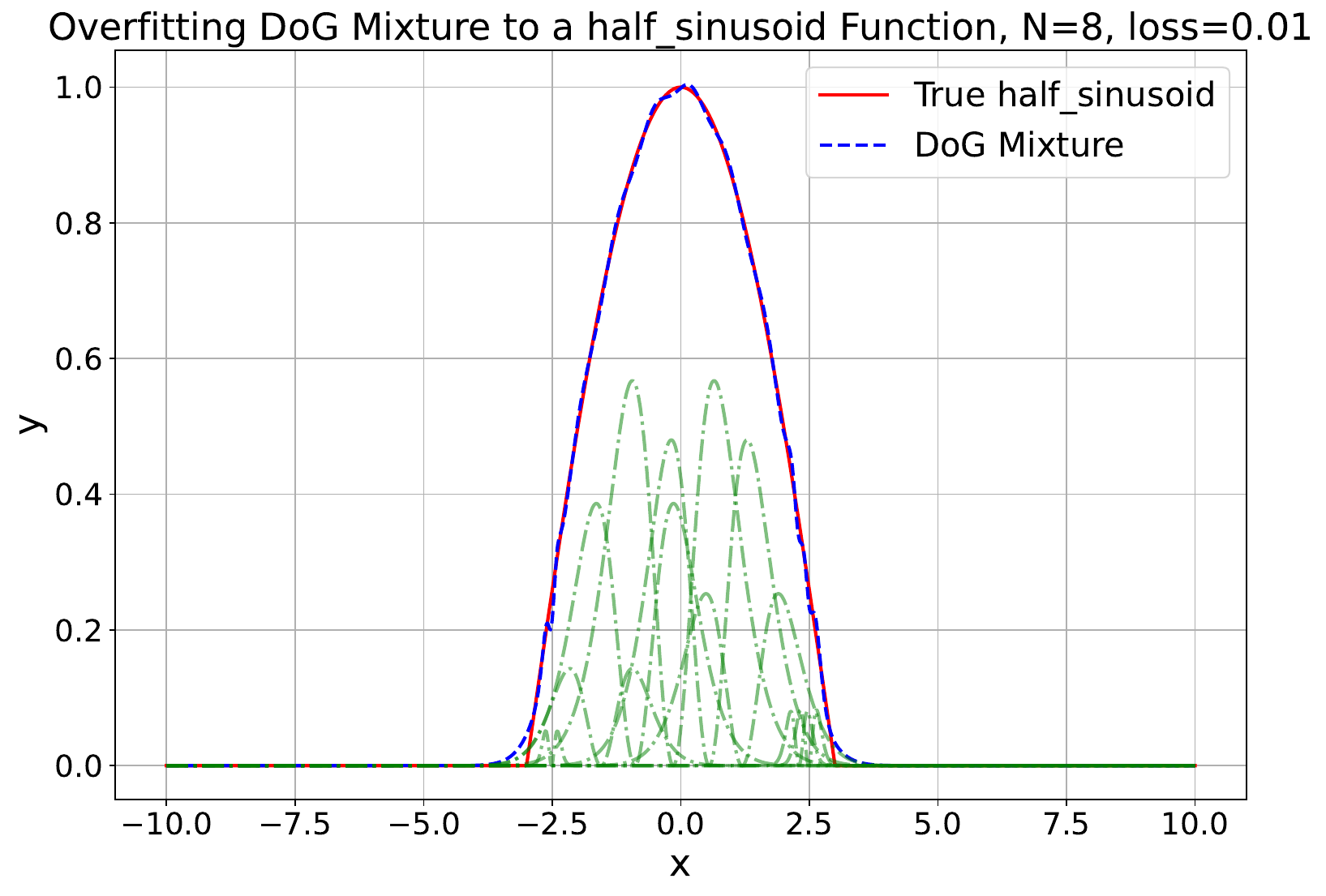} & 
    \includegraphics[width=0.24\linewidth]{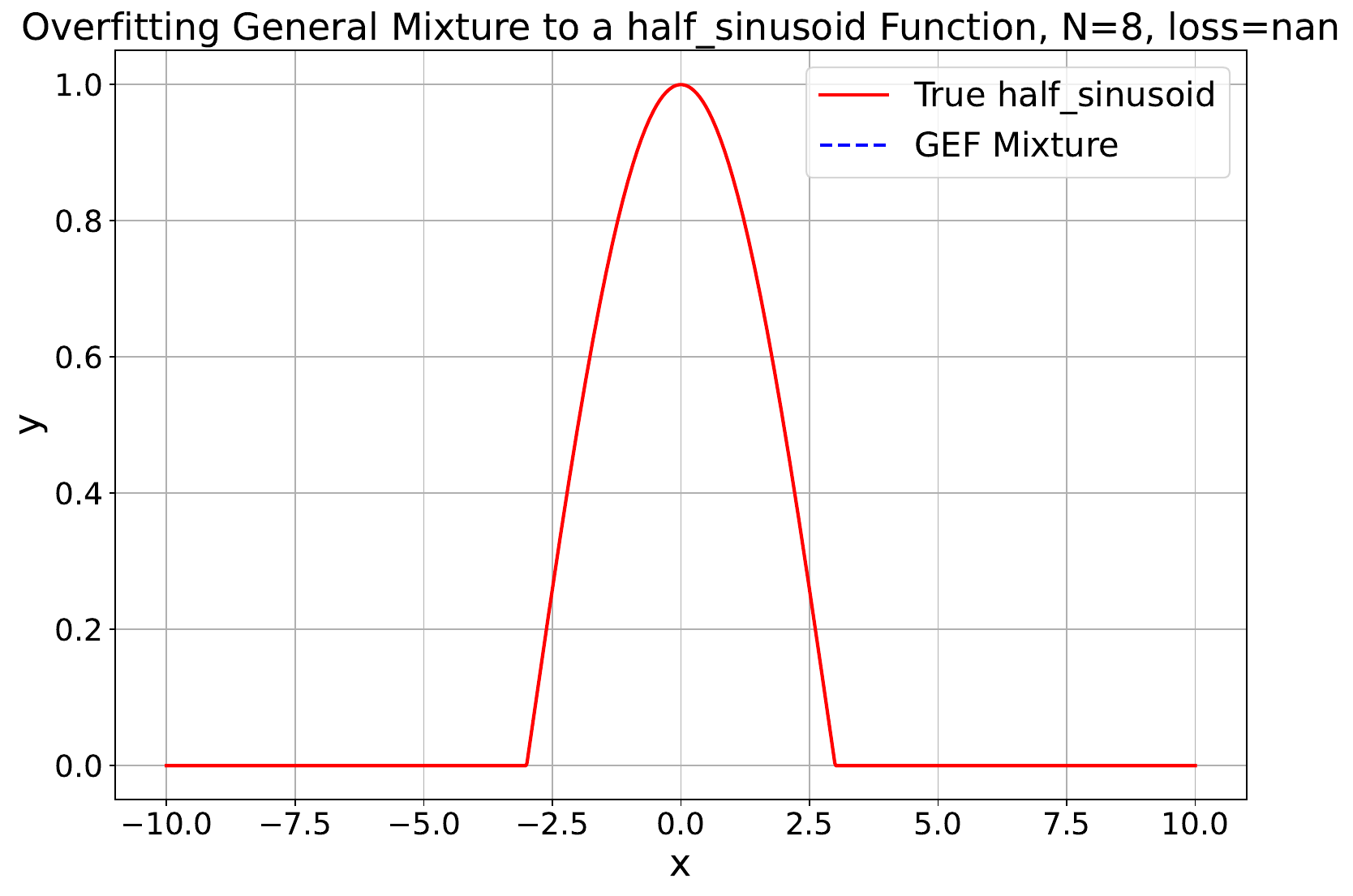}\\ 
    \includegraphics[width=0.24\linewidth]{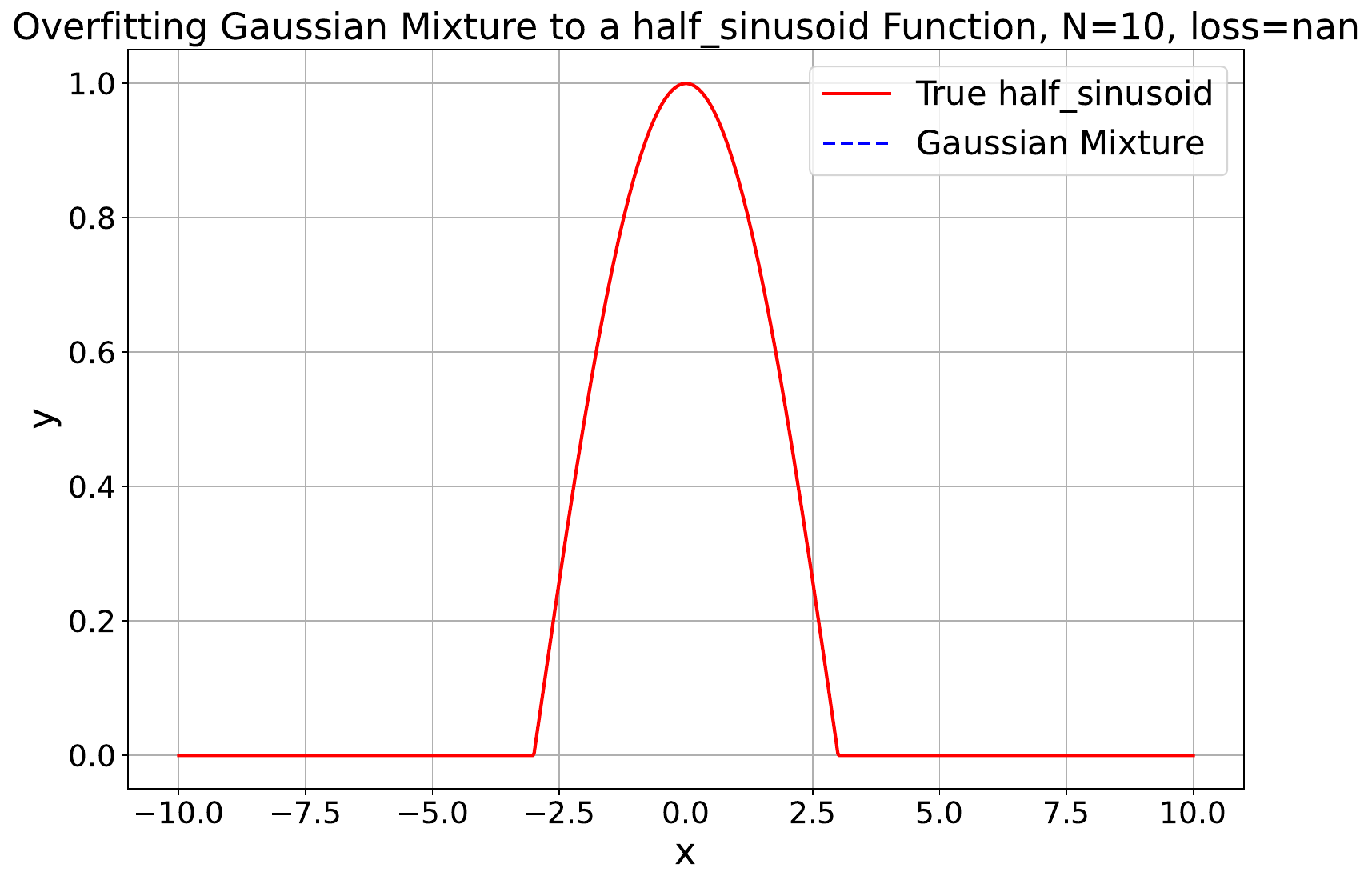} & 
    \includegraphics[width=0.24\linewidth]{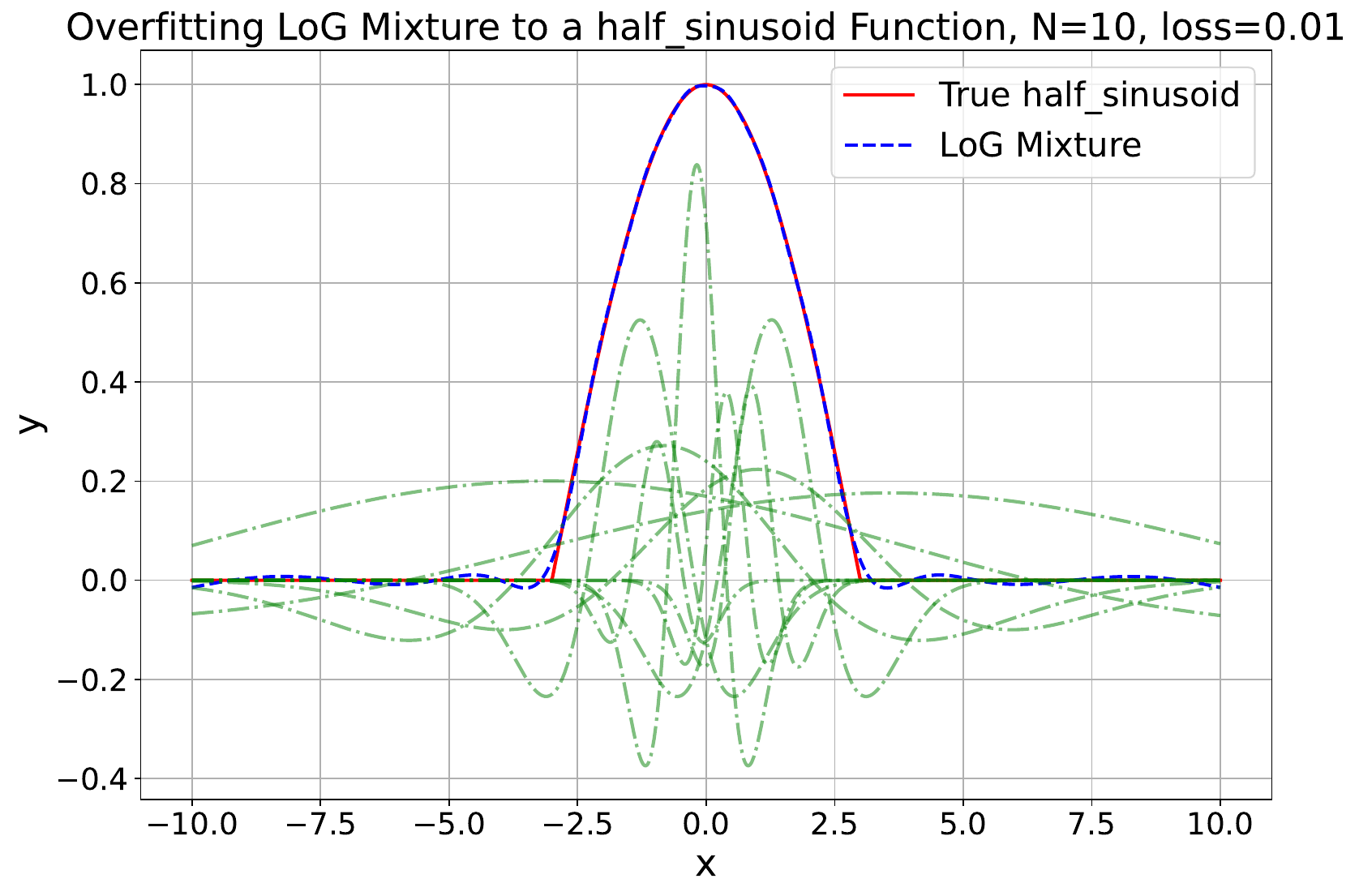} & 
    \includegraphics[width=0.24\linewidth]{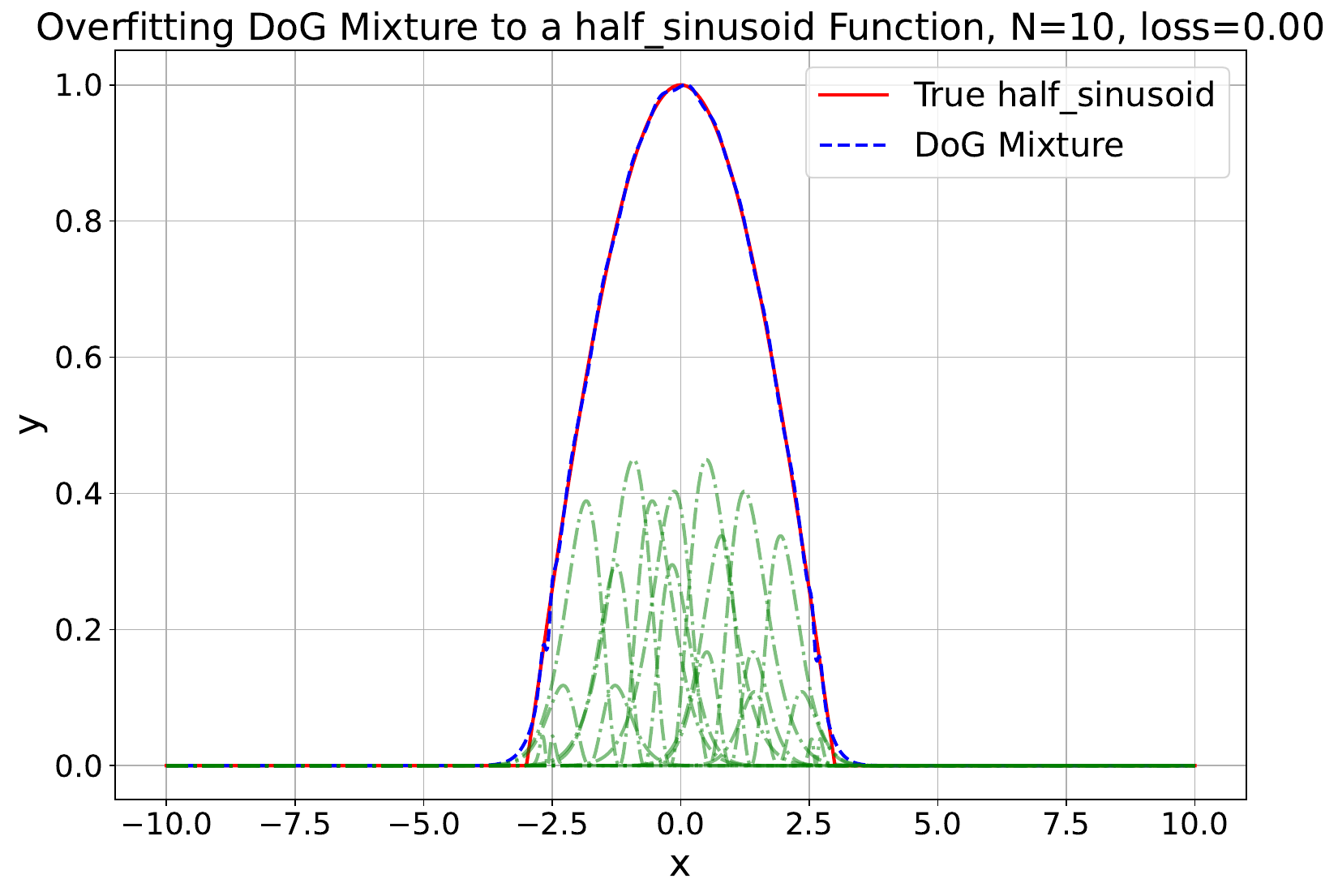} & 
    \includegraphics[width=0.24\linewidth]{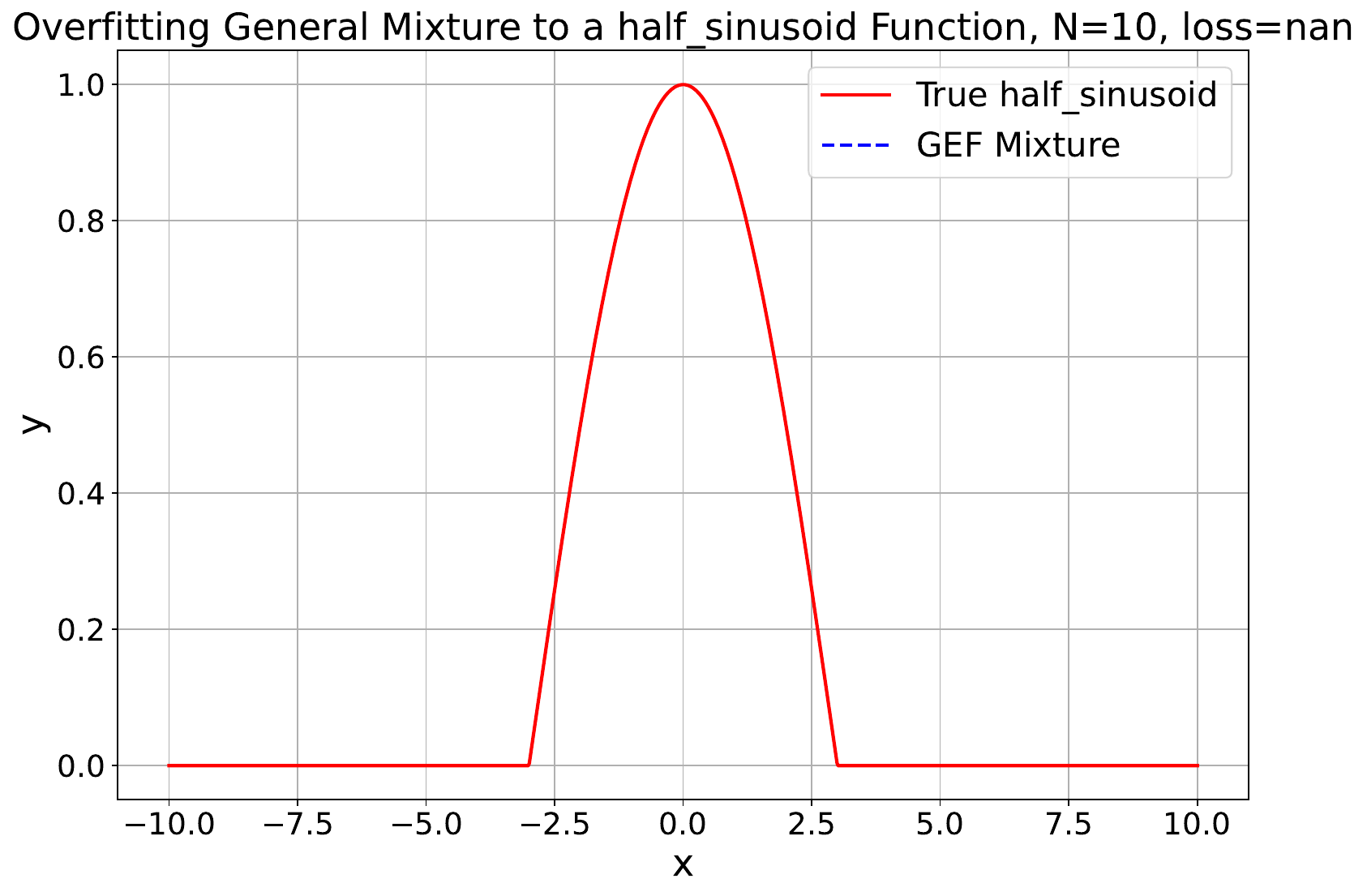}\\ 
    \includegraphics[width=0.24\linewidth]{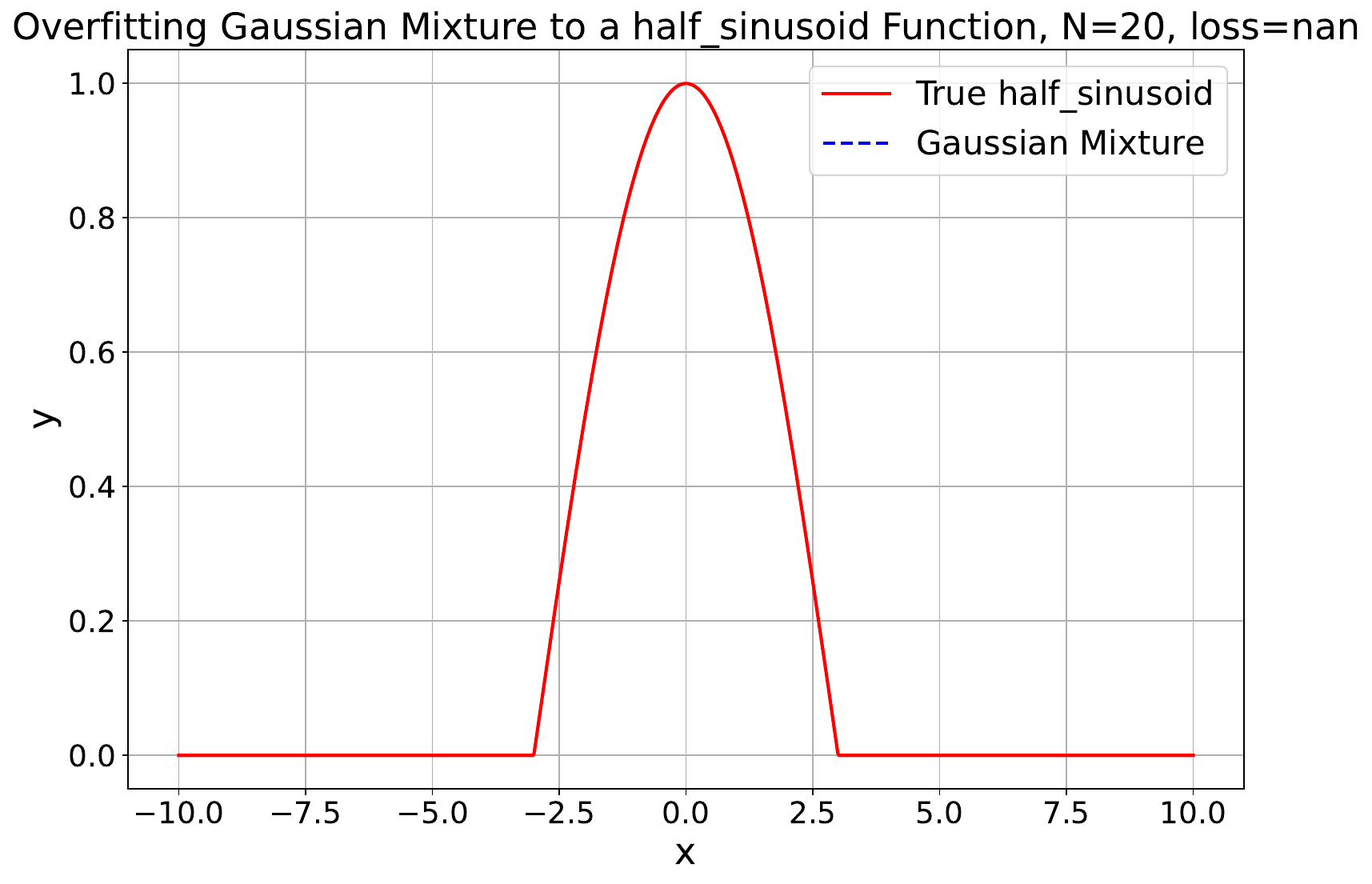} & 
    \includegraphics[width=0.24\linewidth]{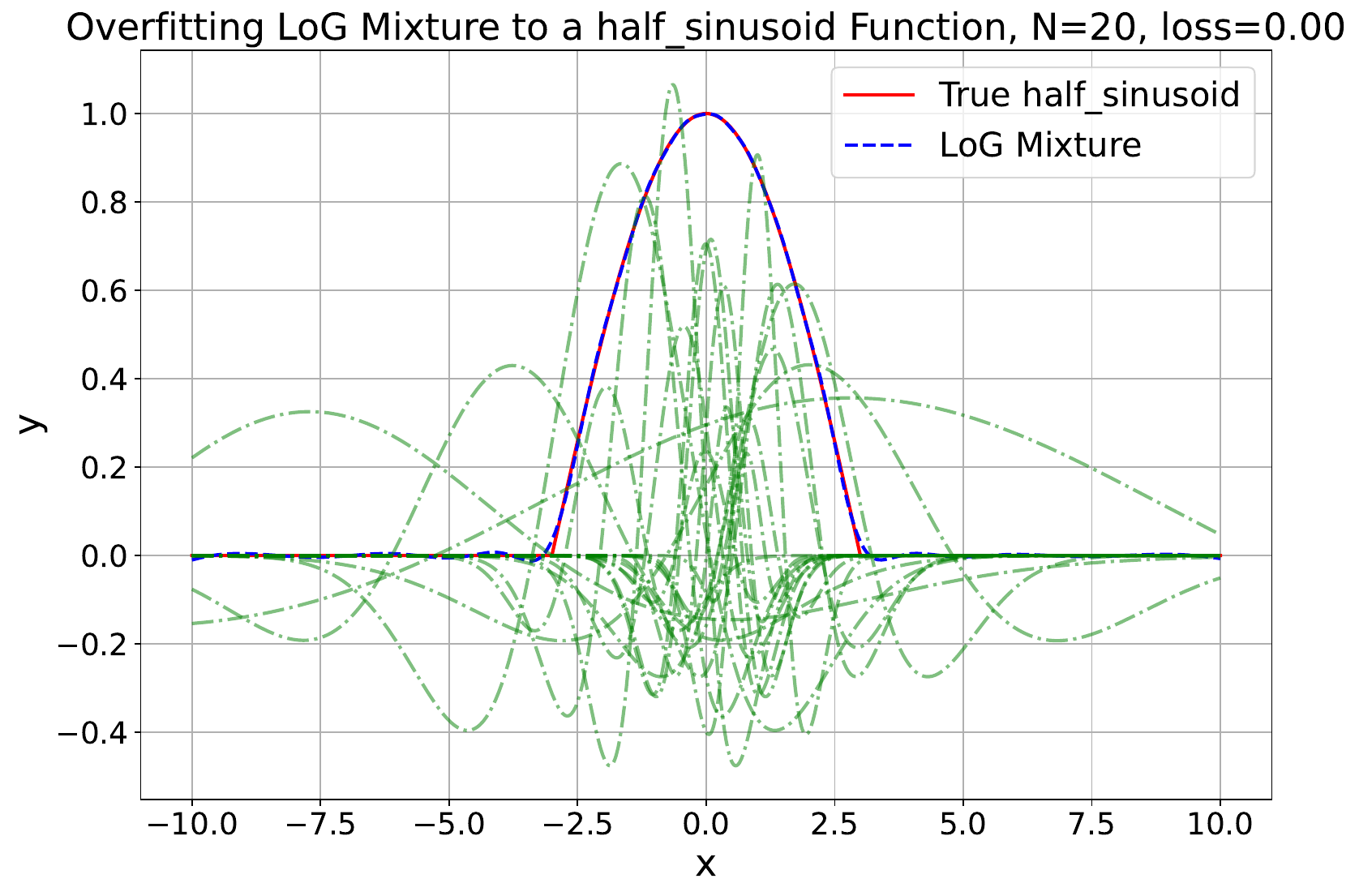} & 
    \includegraphics[width=0.24\linewidth]{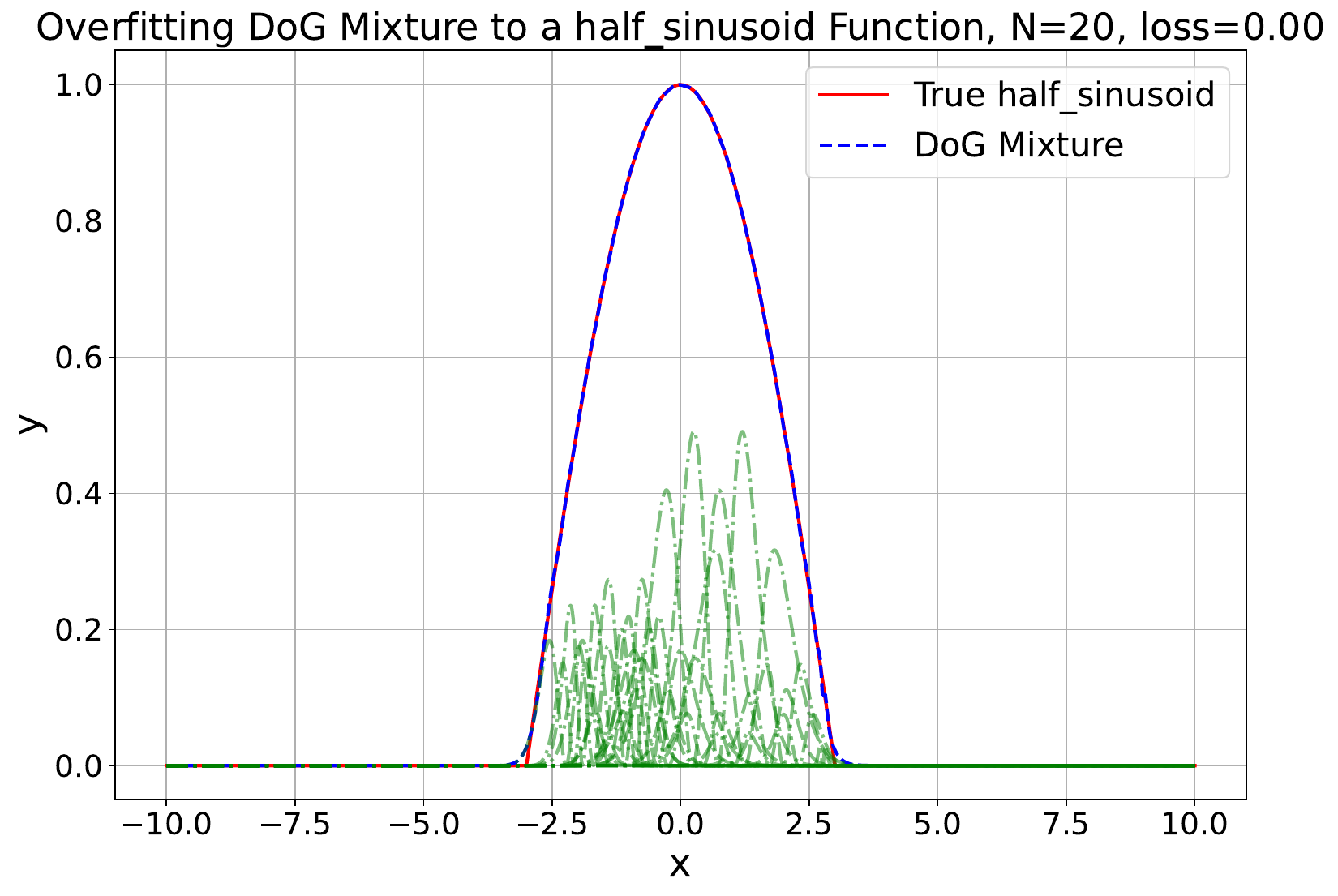} & 
    \includegraphics[width=0.24\linewidth]{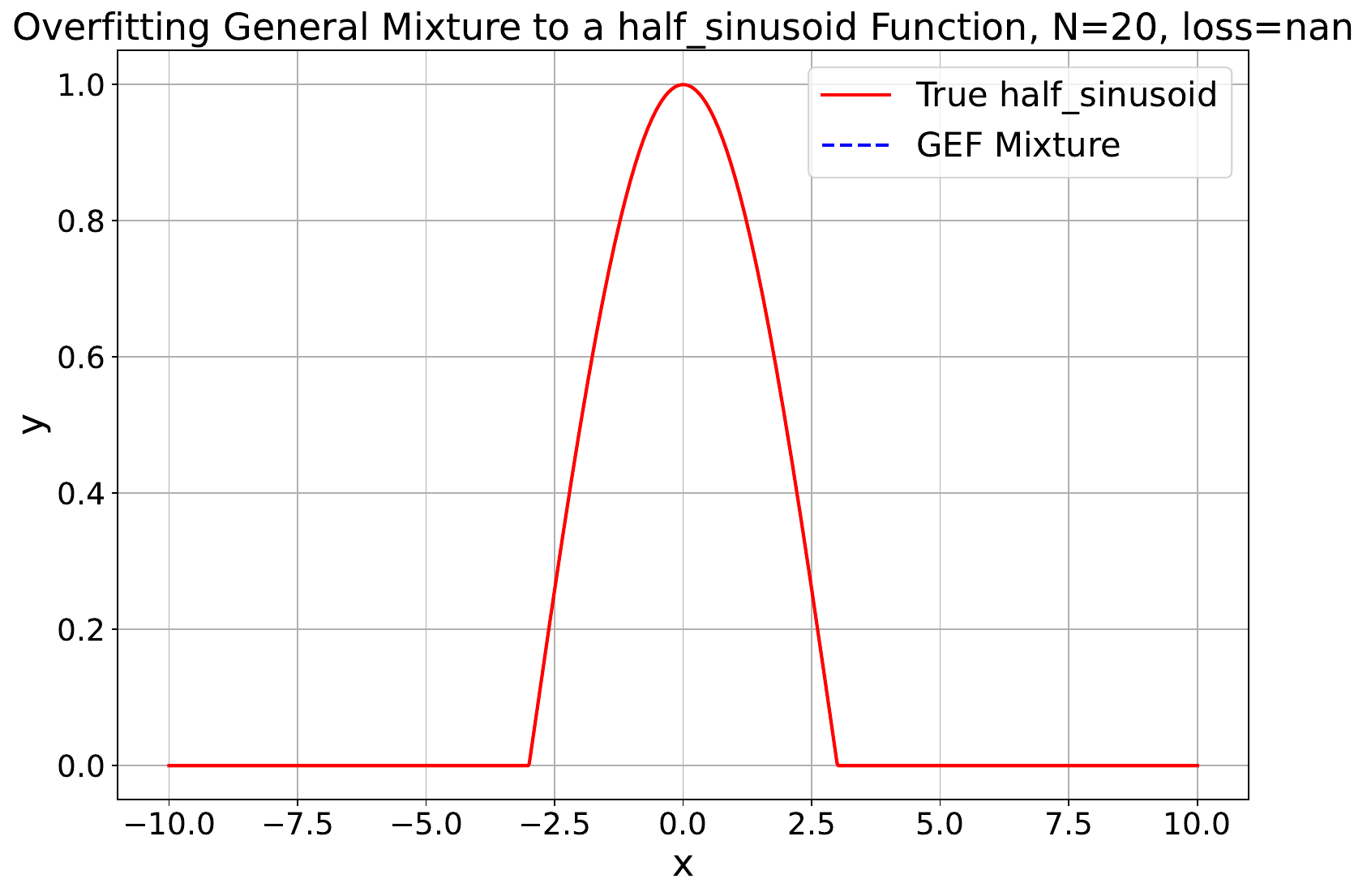}\\ 
    
    \end{tabular}
    }
    \caption{\textbf{Numerical Simulation Examples of Fitting Half sinusoids with Positive Weights Mixtures ( N= 2, 5, 8, and 10 )}. We show some fitting examples for half sinusoid signals with positive weights mixtures. The four mixtures used from left to right are Gaussians, LoG, DoG, and General mixtures. From top to bottom: N = 2, 8, and 10 components. The optimized individual components are shown in green. Some examples fail to optimize due to numerical instability in both Gaussians and GEF mixtures. Note that GEF is very efficient in fitting the half sinusoid with few components while LoG and DoG are more stable for a larger number of components. }
    \label{supfig:fitting_half_sinusoid_p}
    \end{figure*}
    
\begin{figure*}[h]
    \centering
    \resizebox{1.0\linewidth}{!}{
    \begin{tabular}{cccc}
    \tabcolsep=0.01cm
    Gaussian Mixture& LoG Mixture & DoG Mixture & GEF Mixture \\ 
    \includegraphics[width=0.24\linewidth]{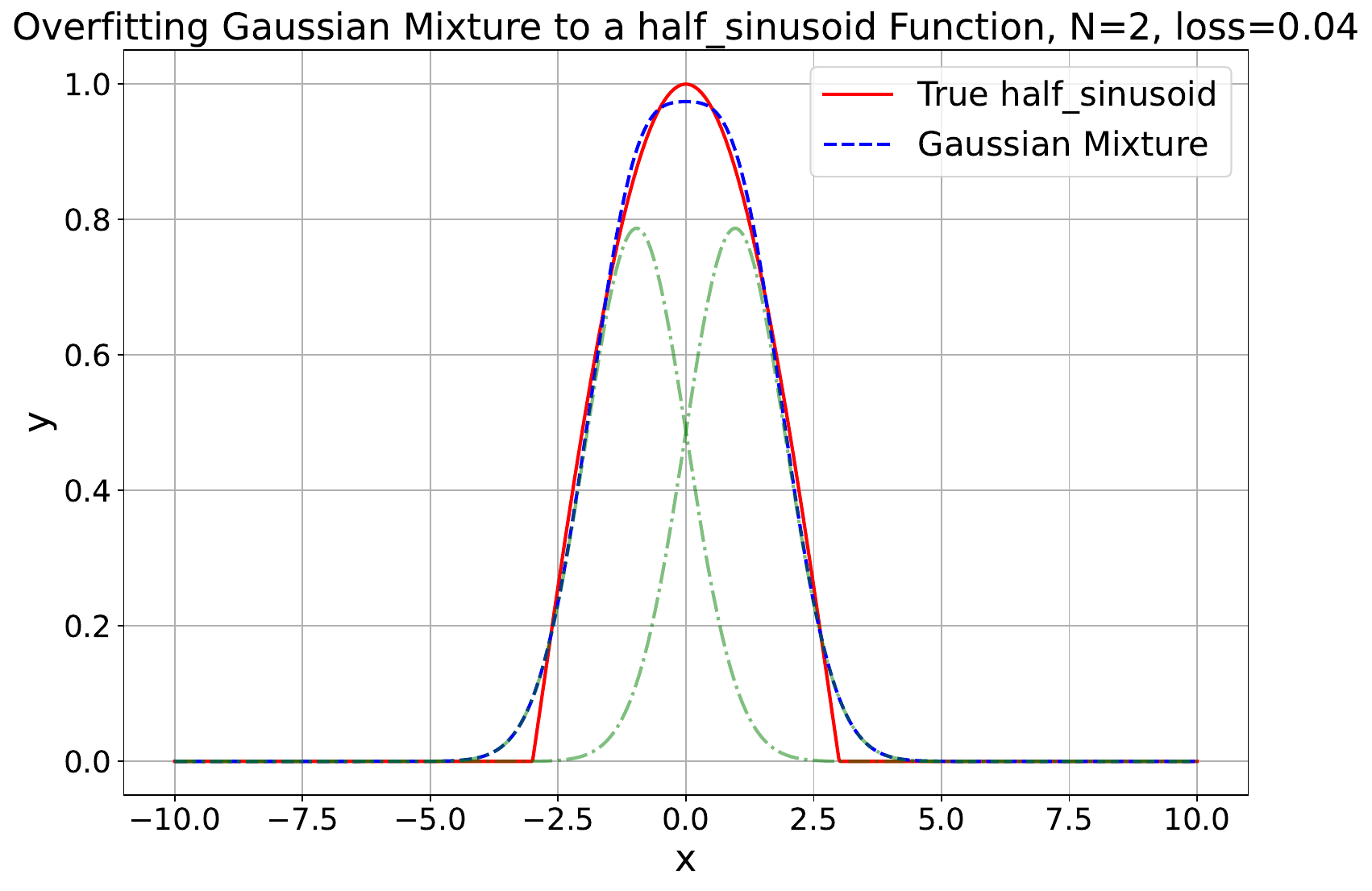} & 
    \includegraphics[width=0.24\linewidth]{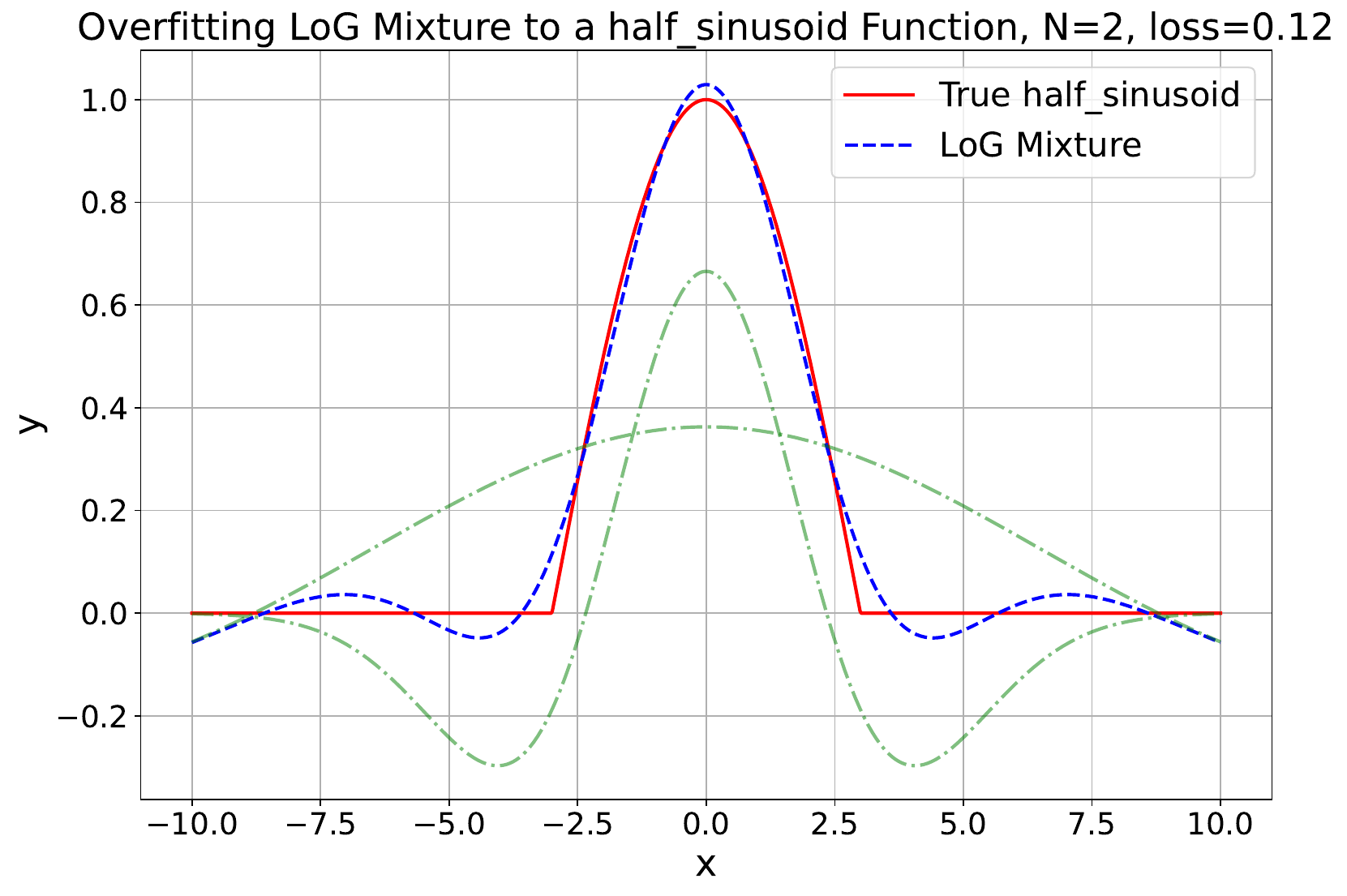} & 
    \includegraphics[width=0.24\linewidth]{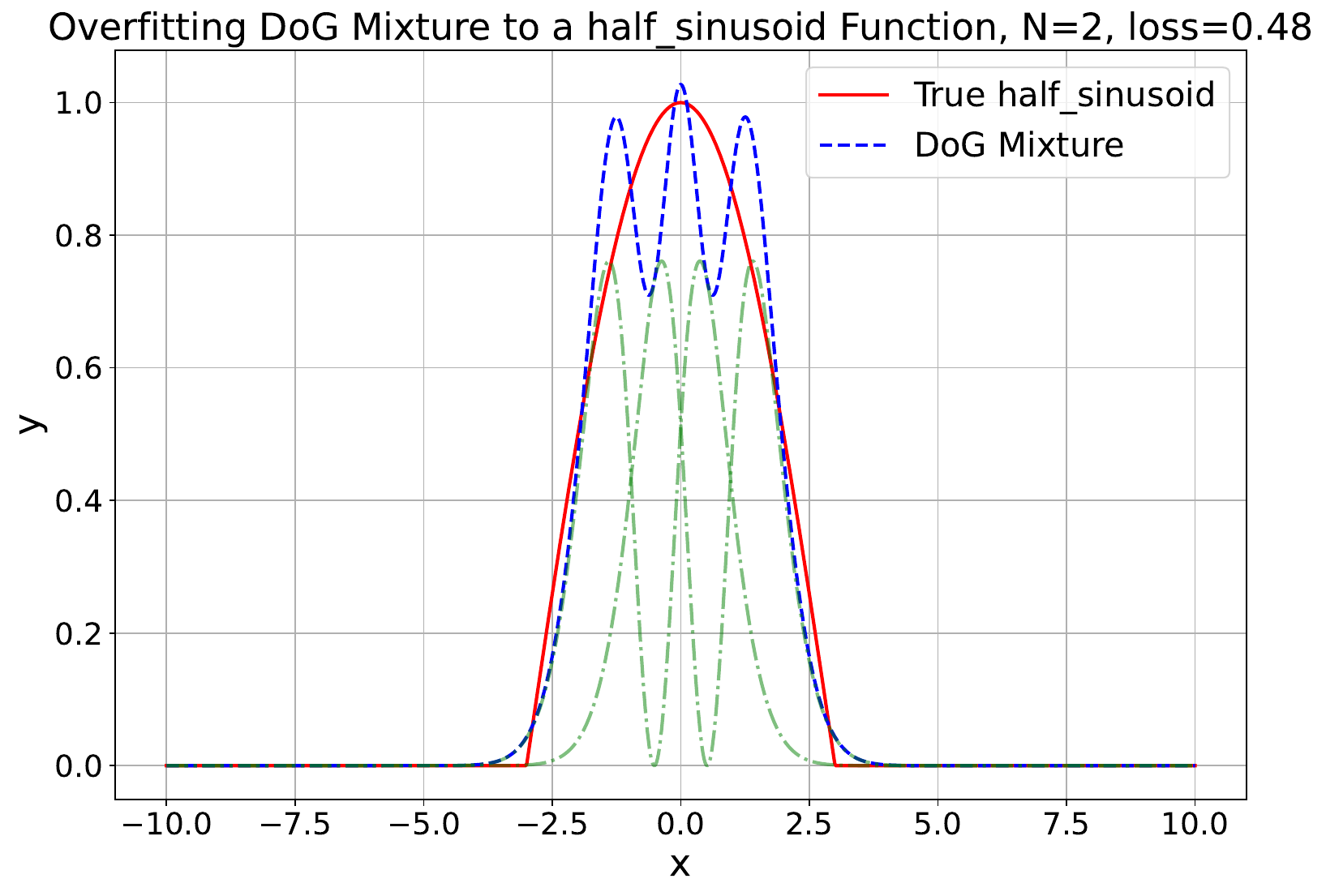} & 
    \includegraphics[width=0.24\linewidth]{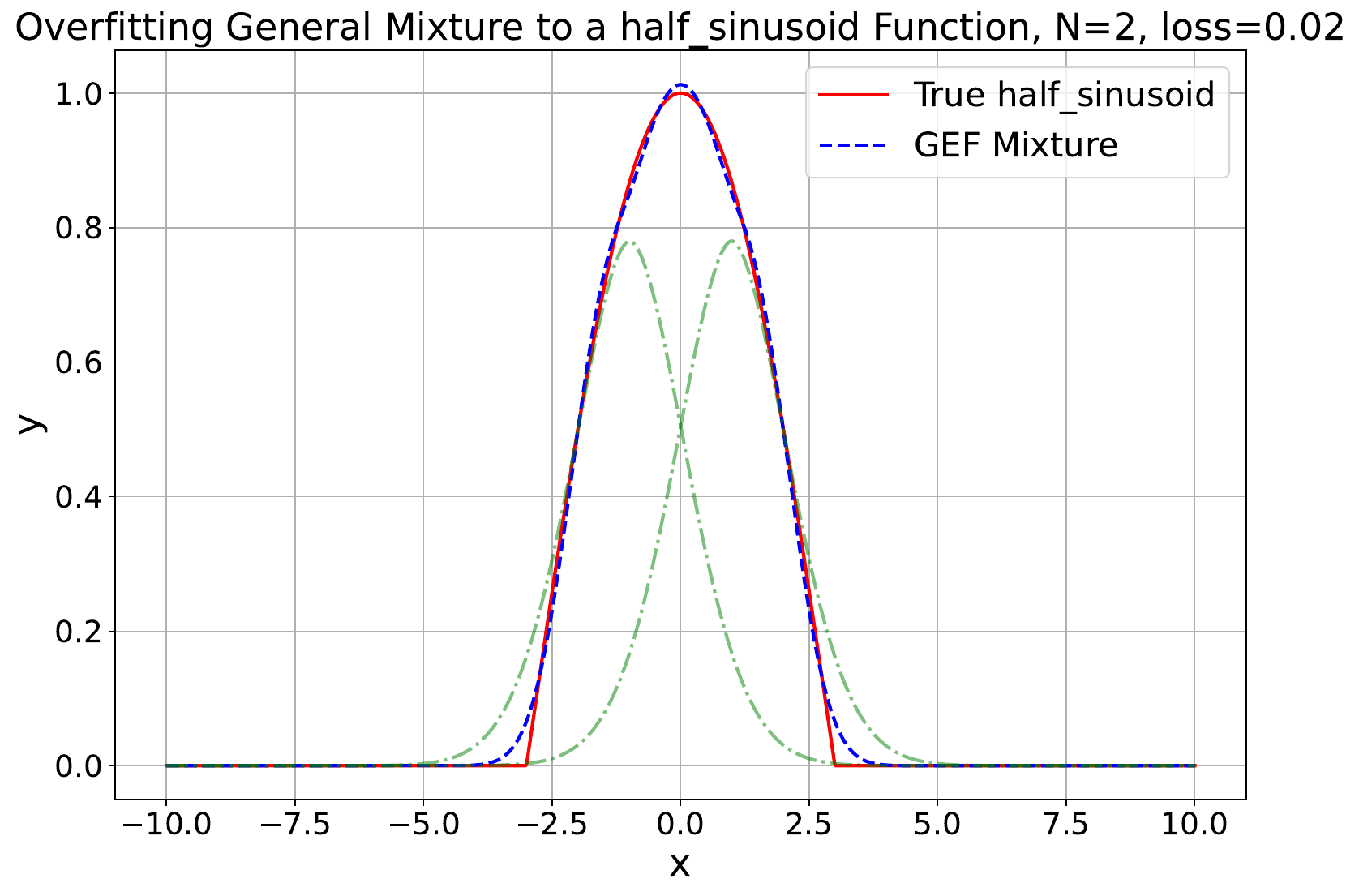}\\ 
    \includegraphics[width=0.24\linewidth]{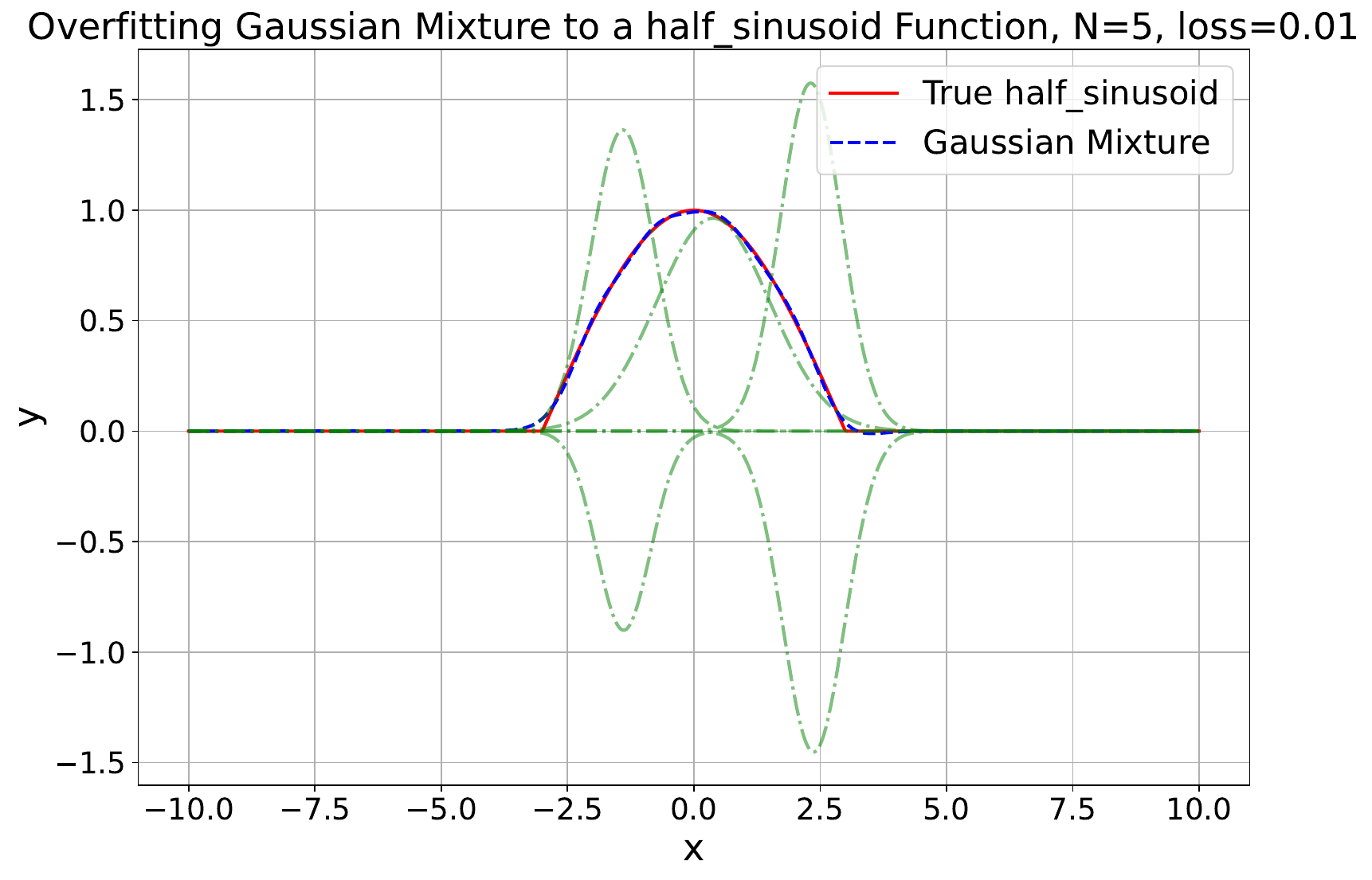} & 
    \includegraphics[width=0.24\linewidth]{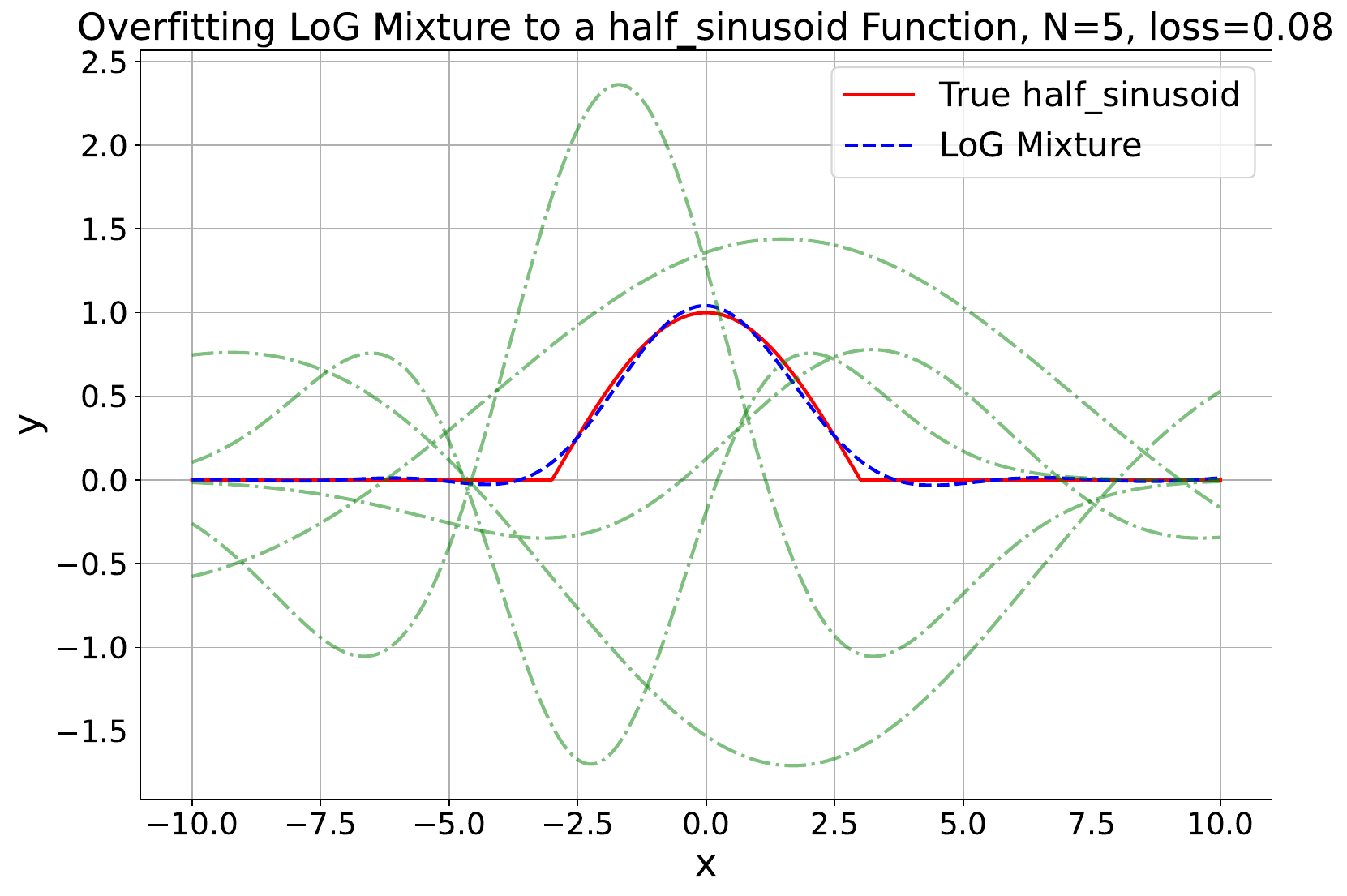} & 
    \includegraphics[width=0.24\linewidth]{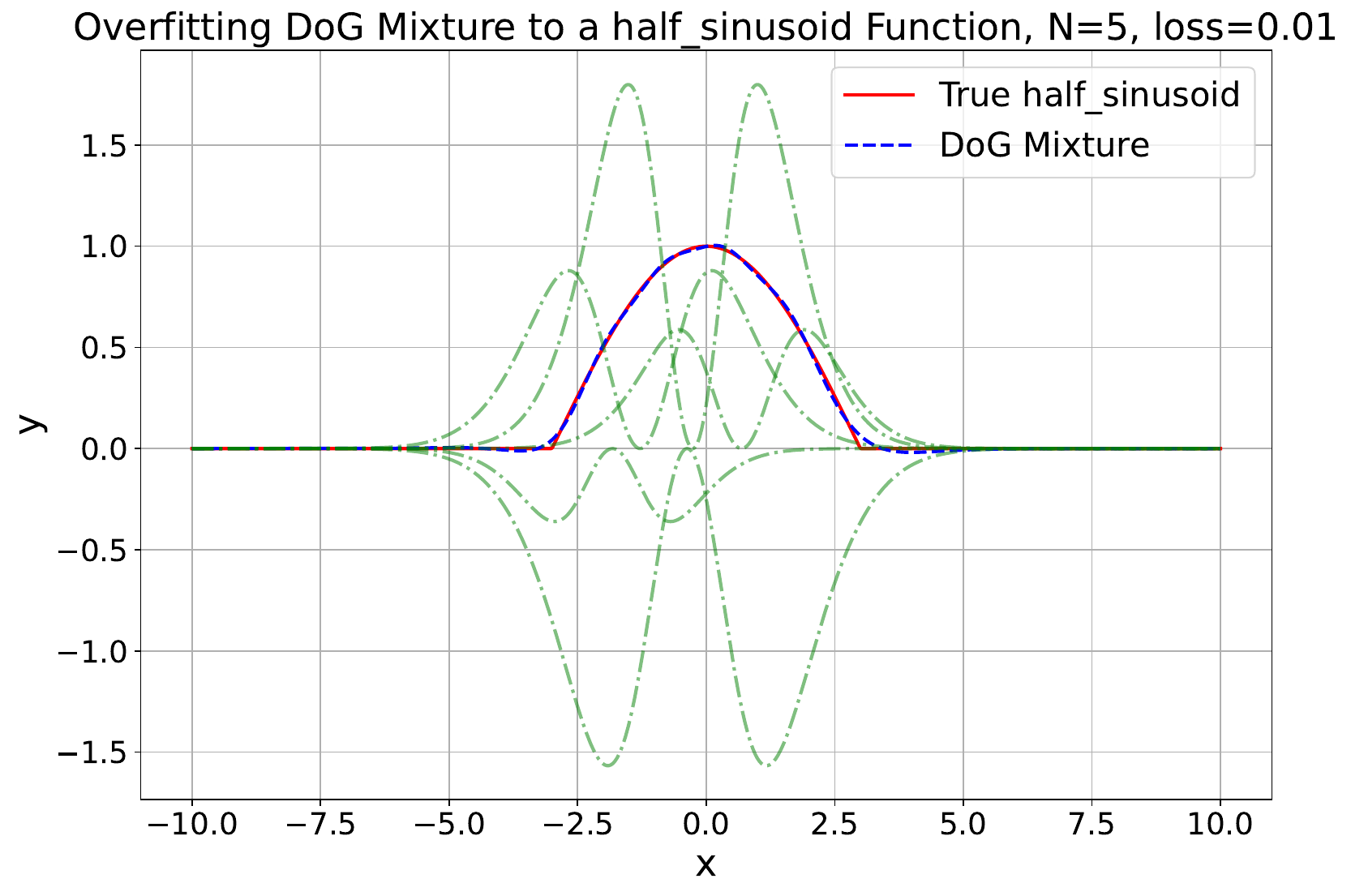} & 
    \includegraphics[width=0.24\linewidth]{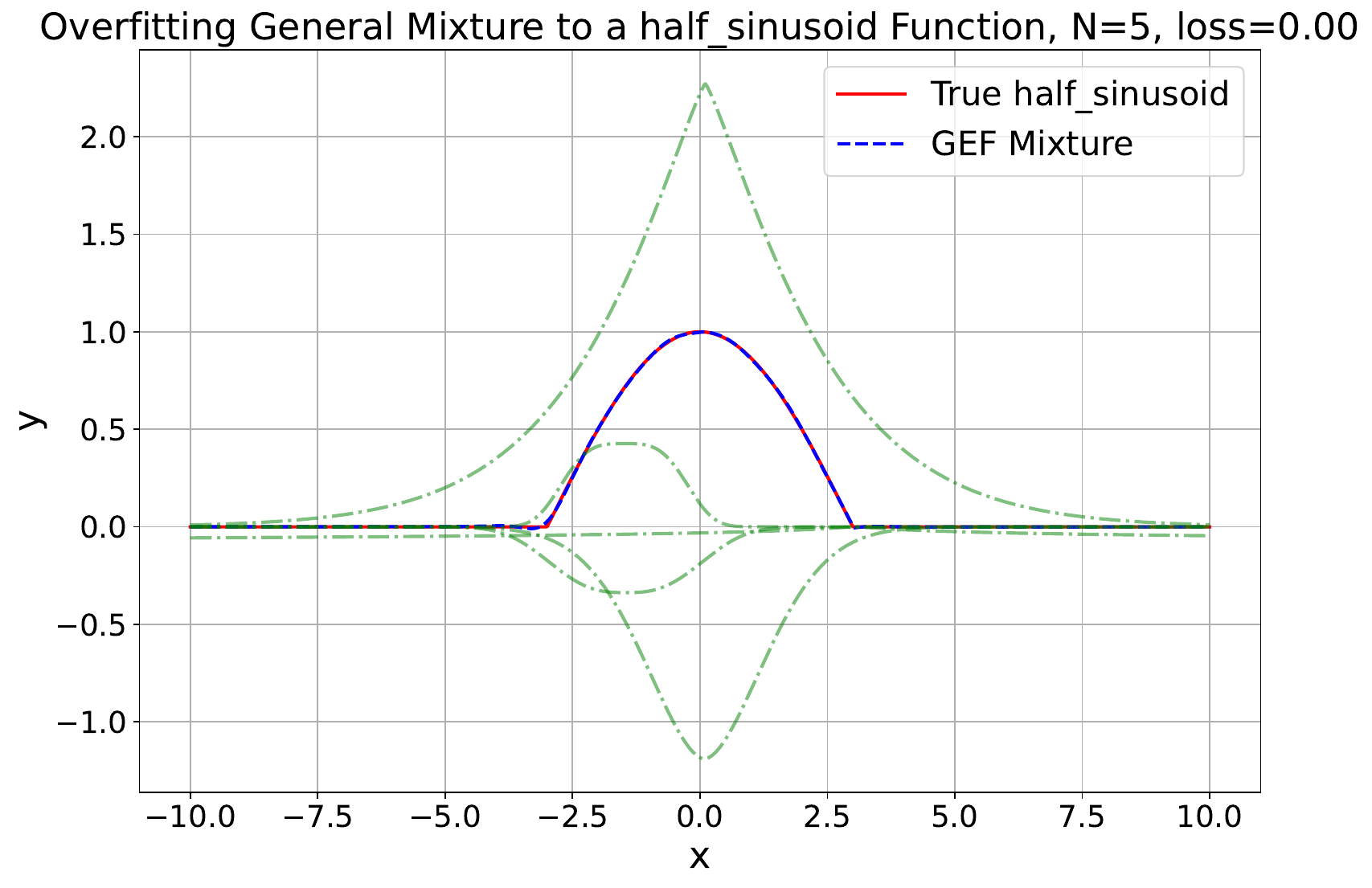}\\ 
    \includegraphics[width=0.24\linewidth]{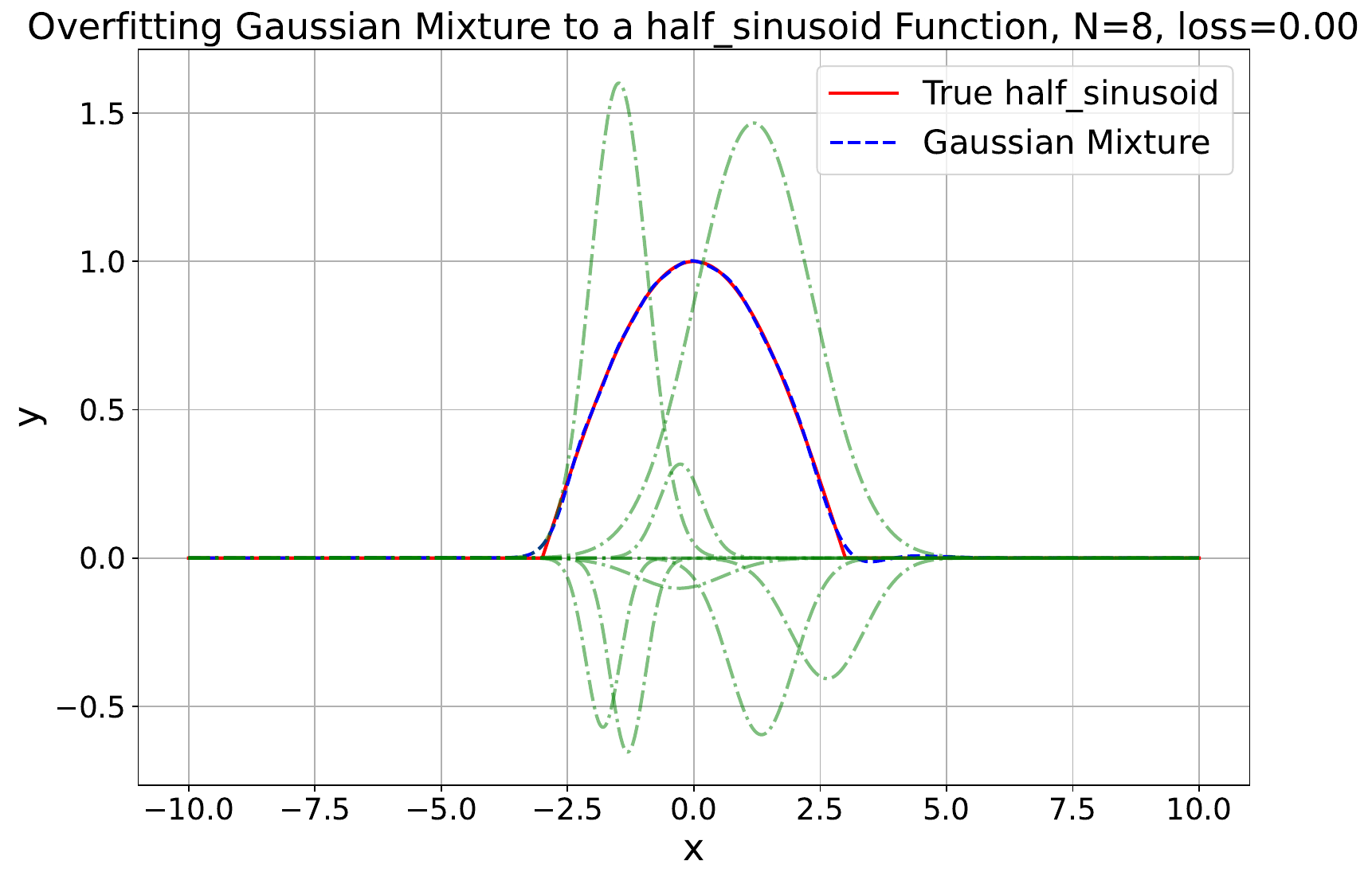} & 
    \includegraphics[width=0.24\linewidth]{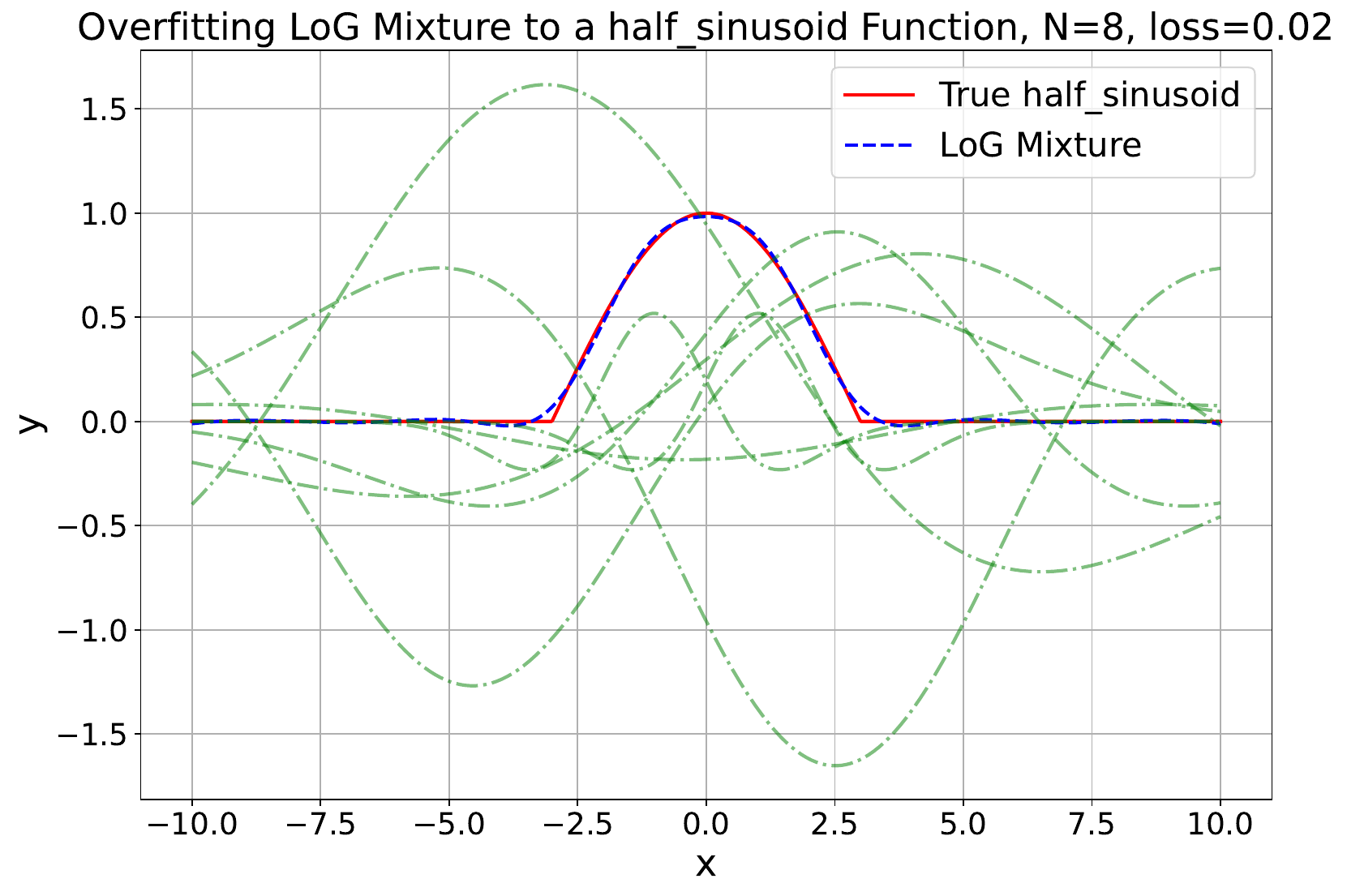} & 
    \includegraphics[width=0.24\linewidth]{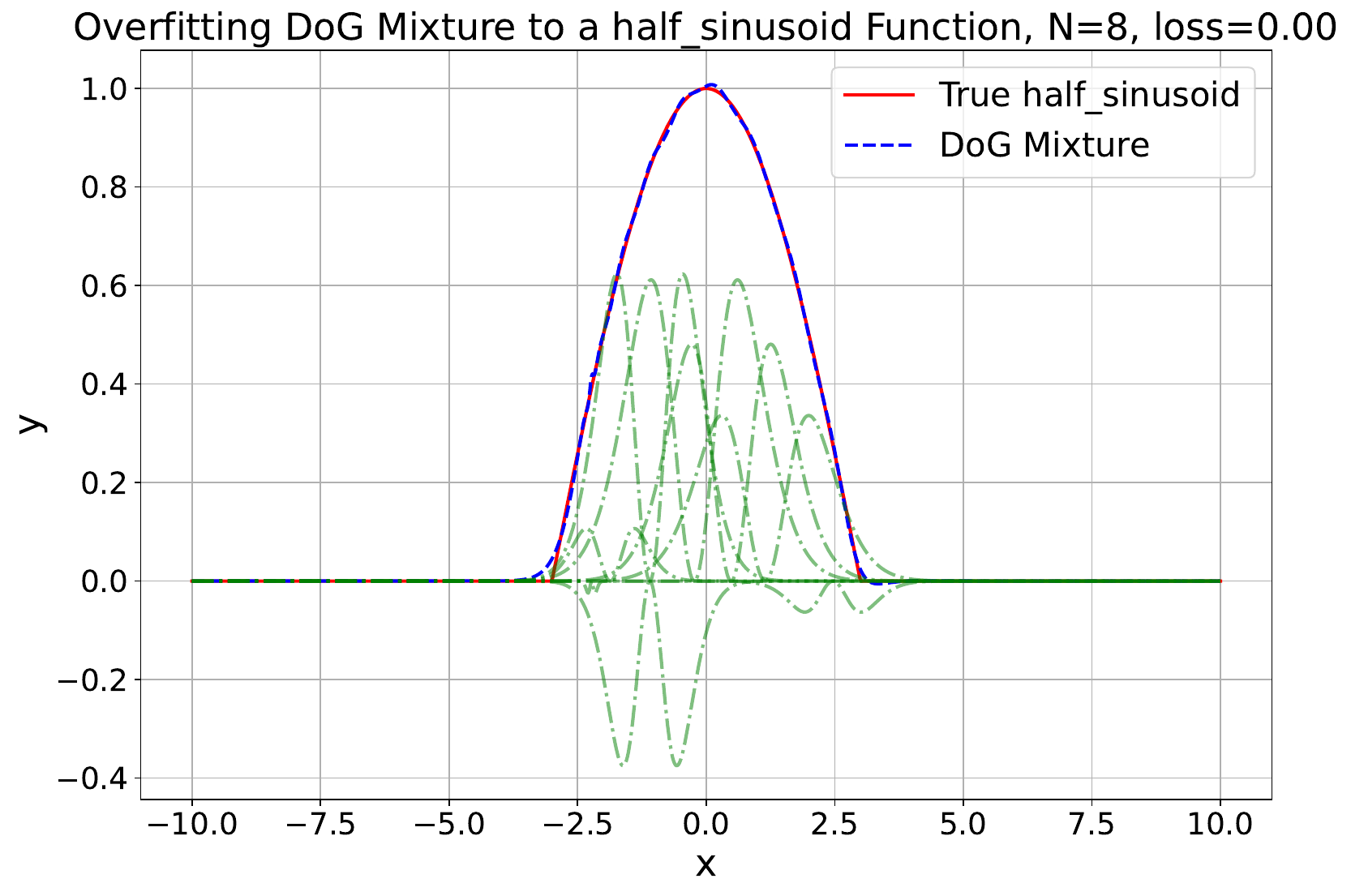} & 
    \includegraphics[width=0.24\linewidth]{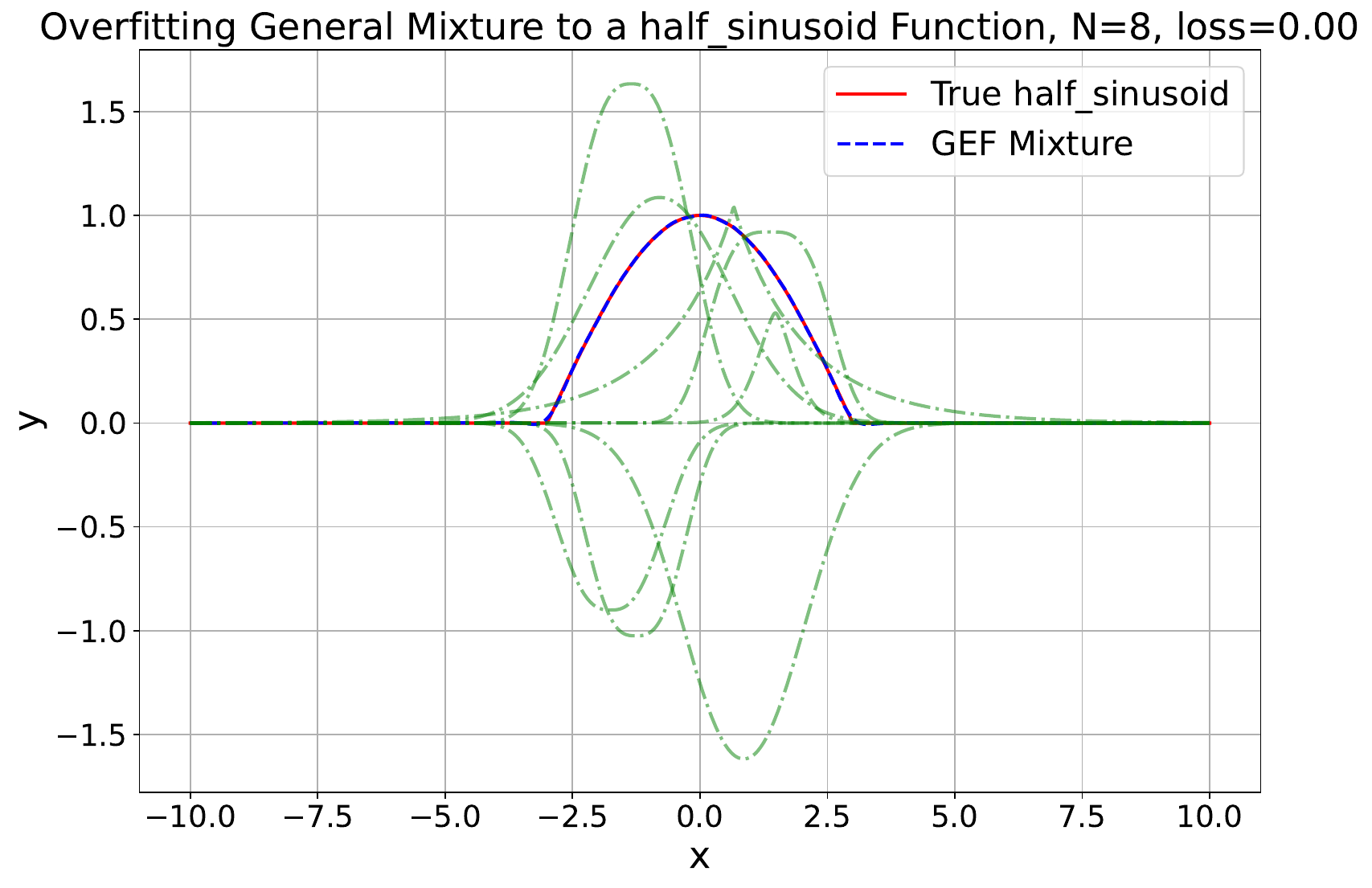}\\ 
    \includegraphics[width=0.24\linewidth]{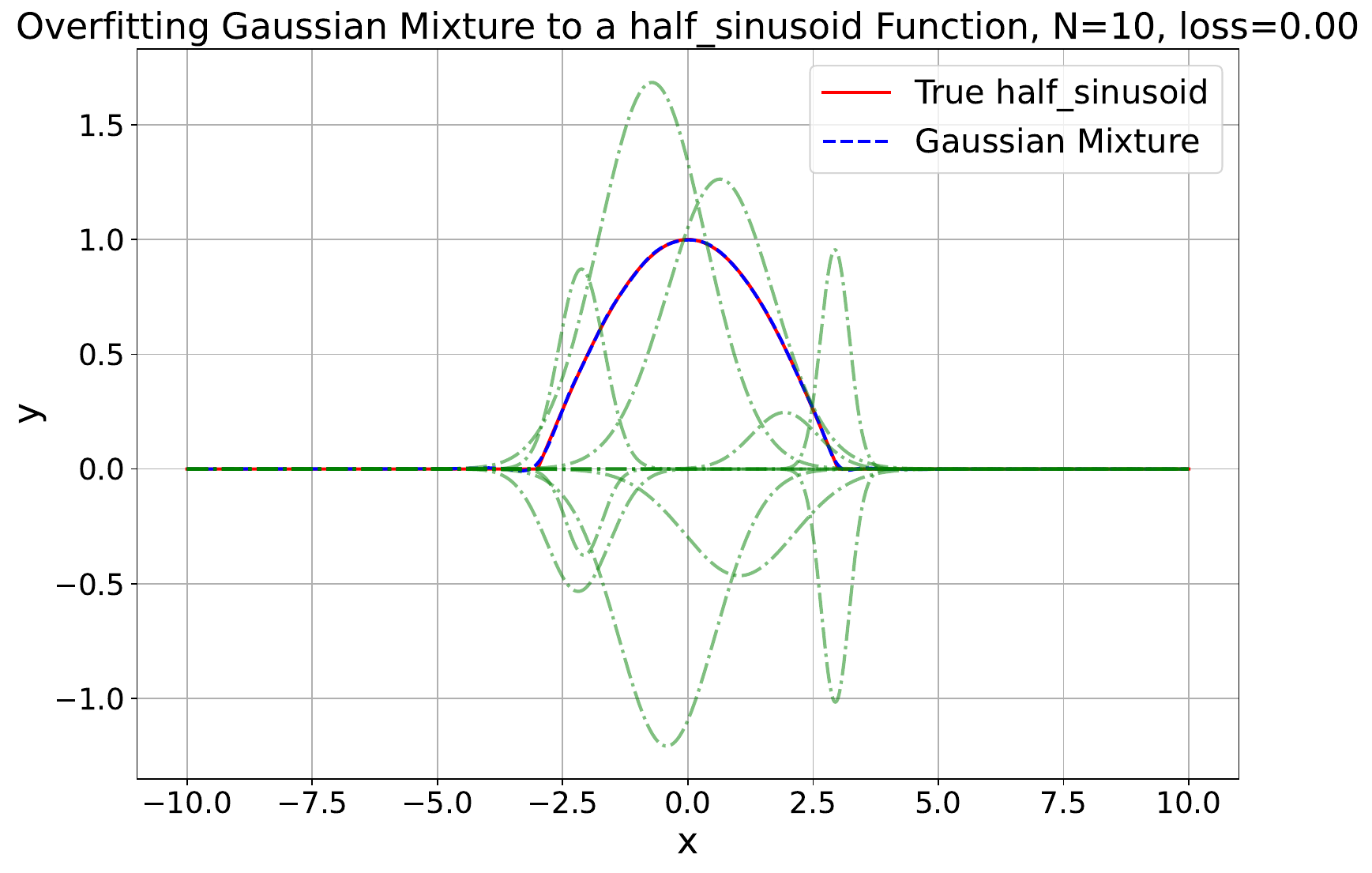} & 
    \includegraphics[width=0.24\linewidth]{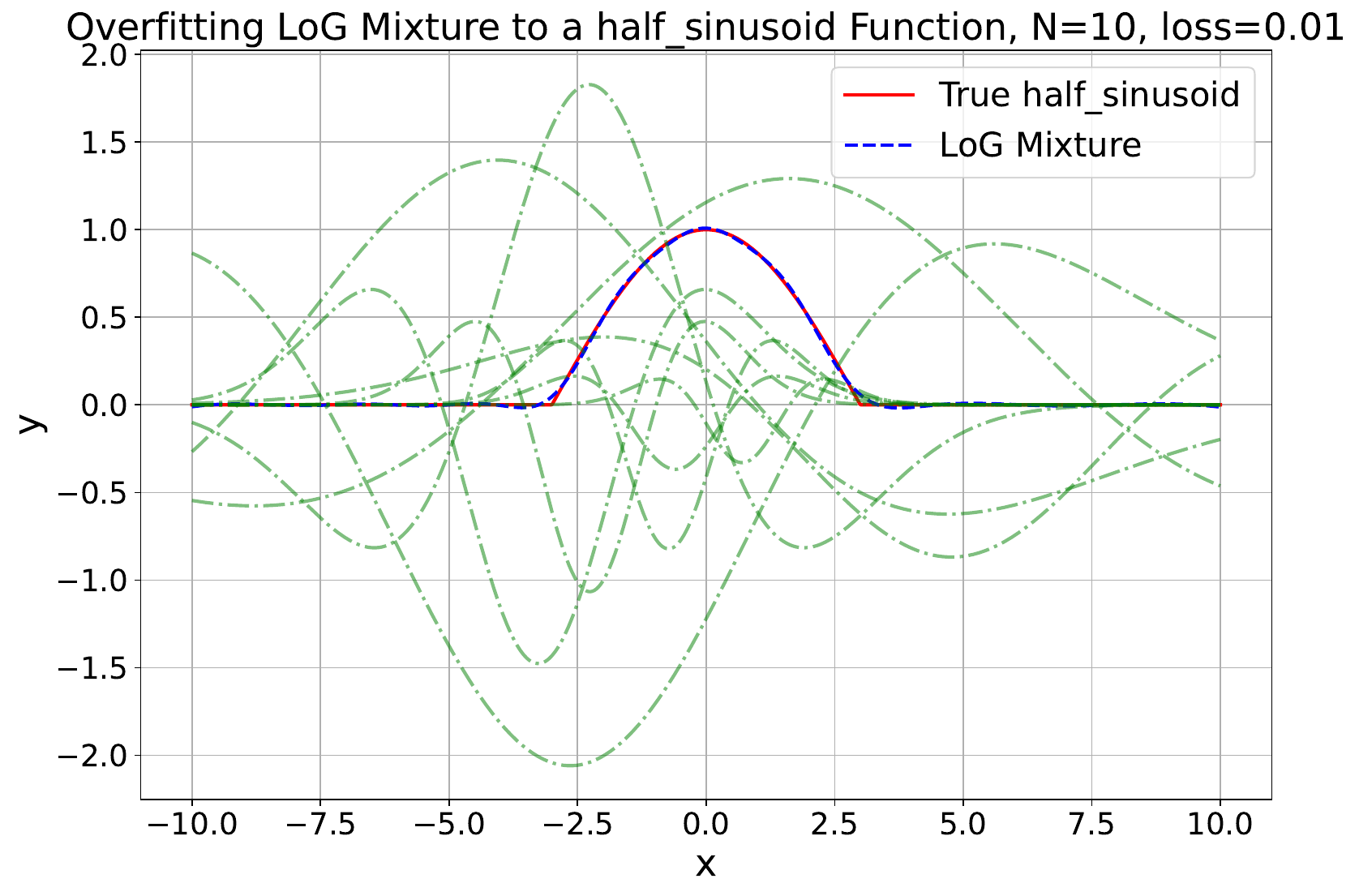} & 
    \includegraphics[width=0.24\linewidth]{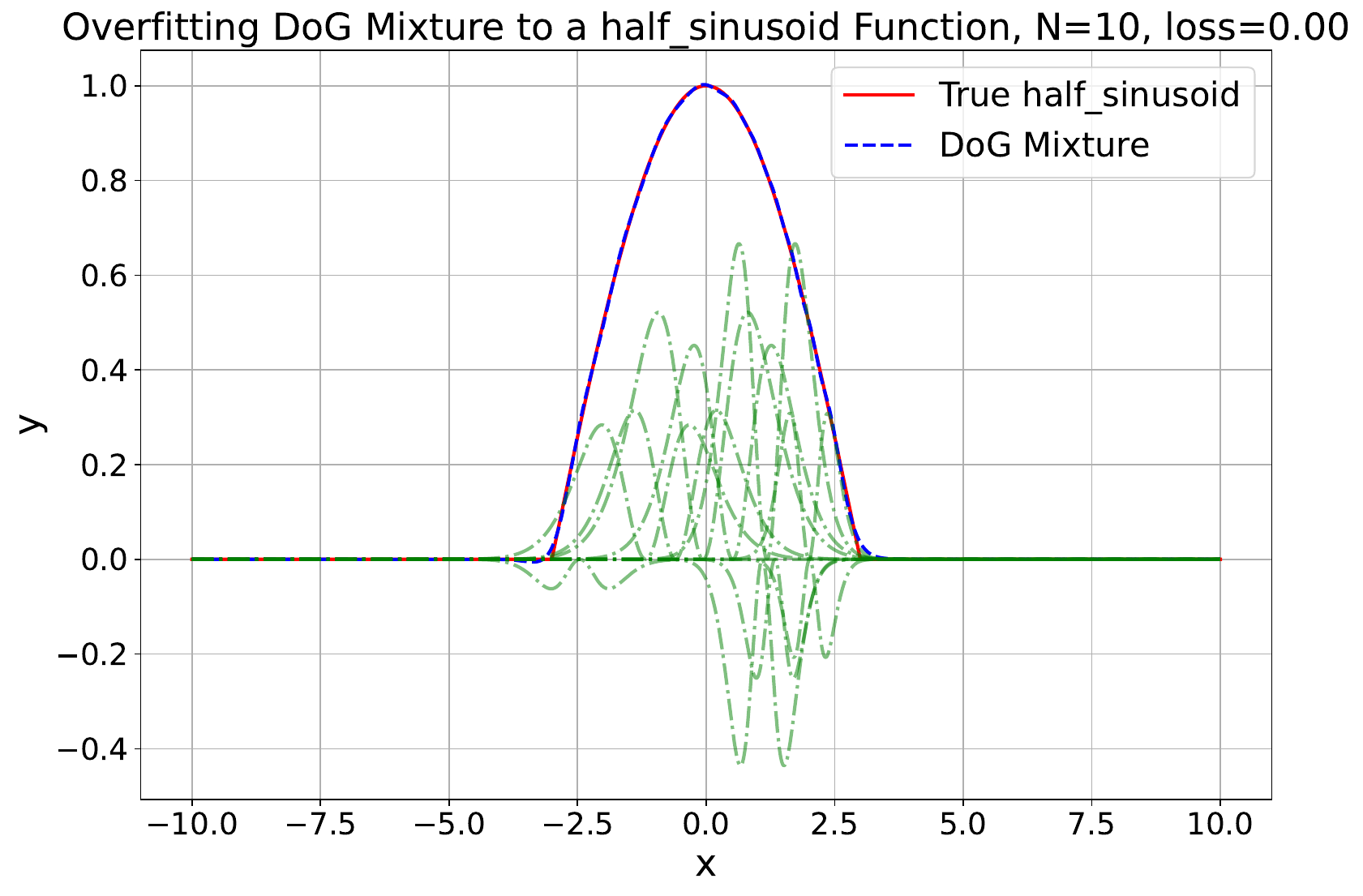} & 
    \includegraphics[width=0.24\linewidth]{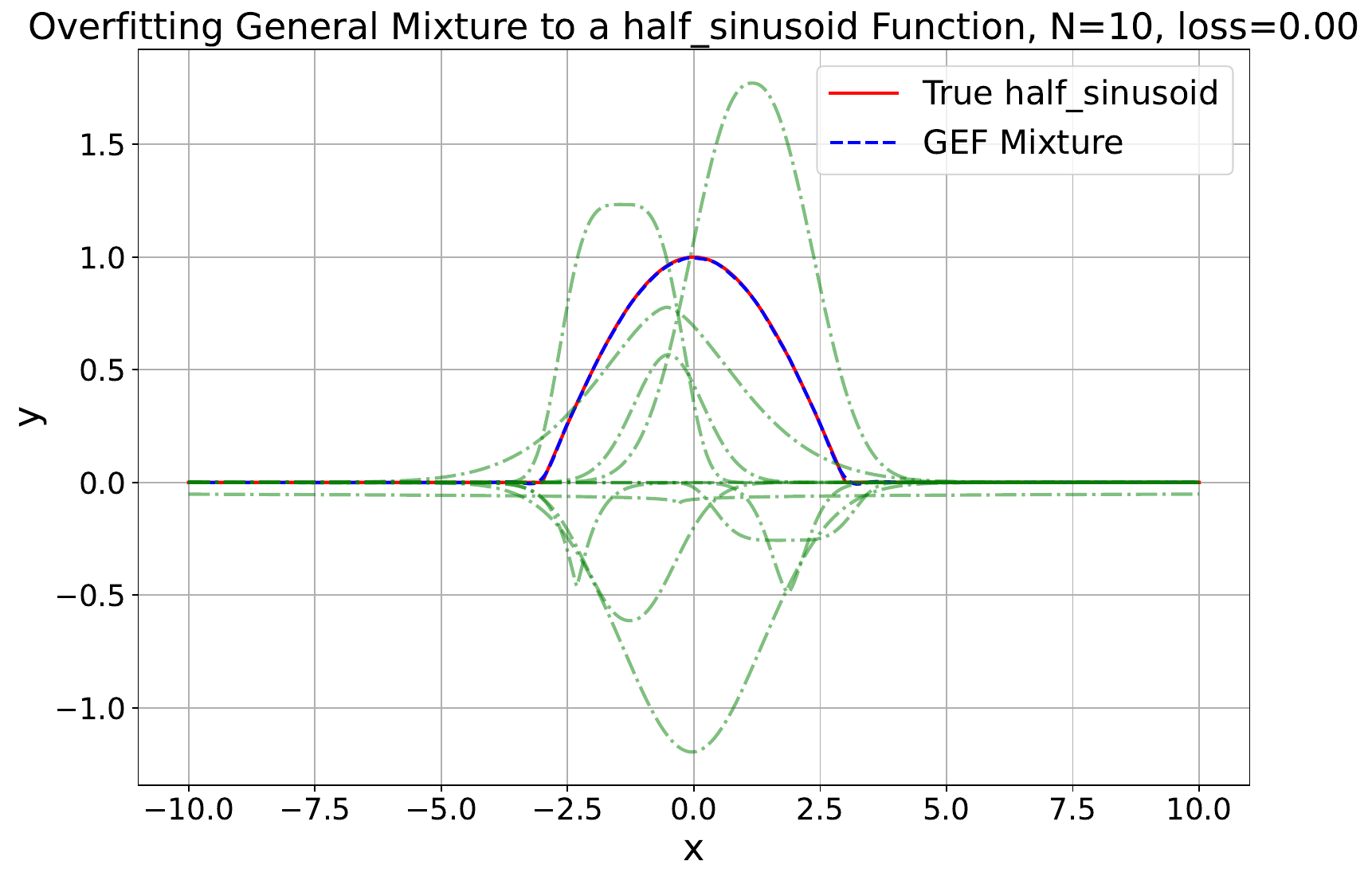}\\ 
    \includegraphics[width=0.24\linewidth]{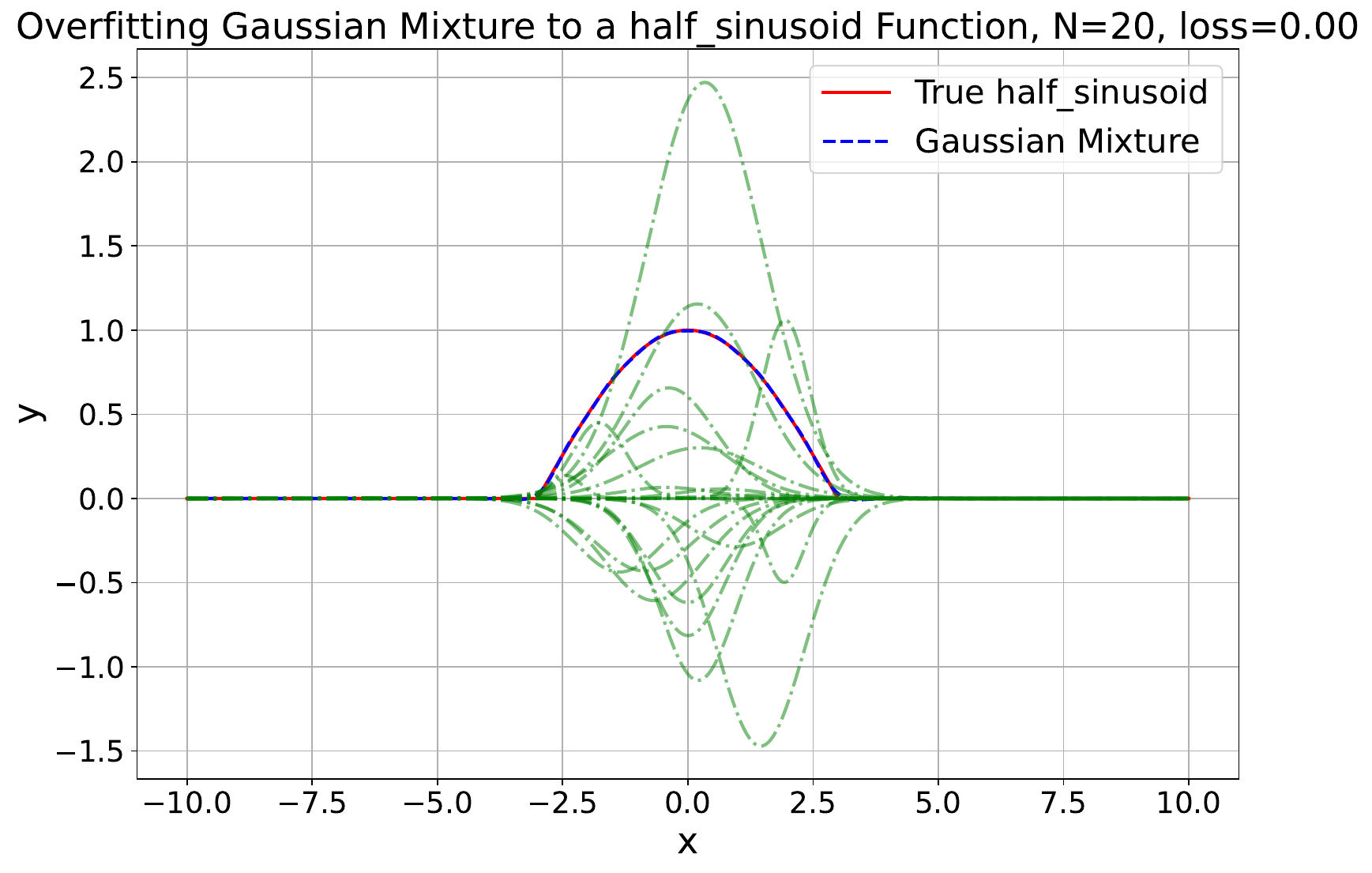} & 
    \includegraphics[width=0.24\linewidth]{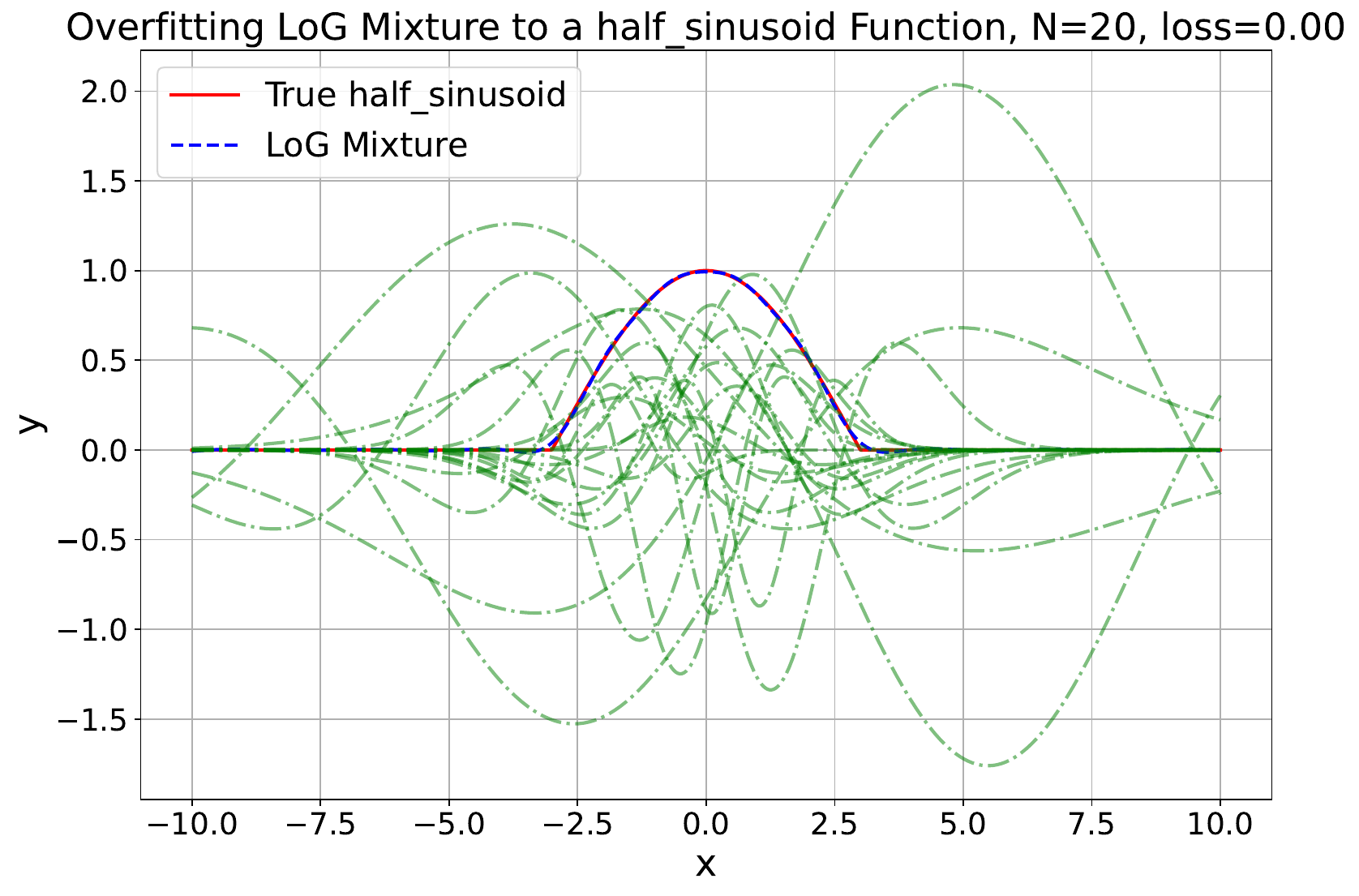} & 
    \includegraphics[width=0.24\linewidth]{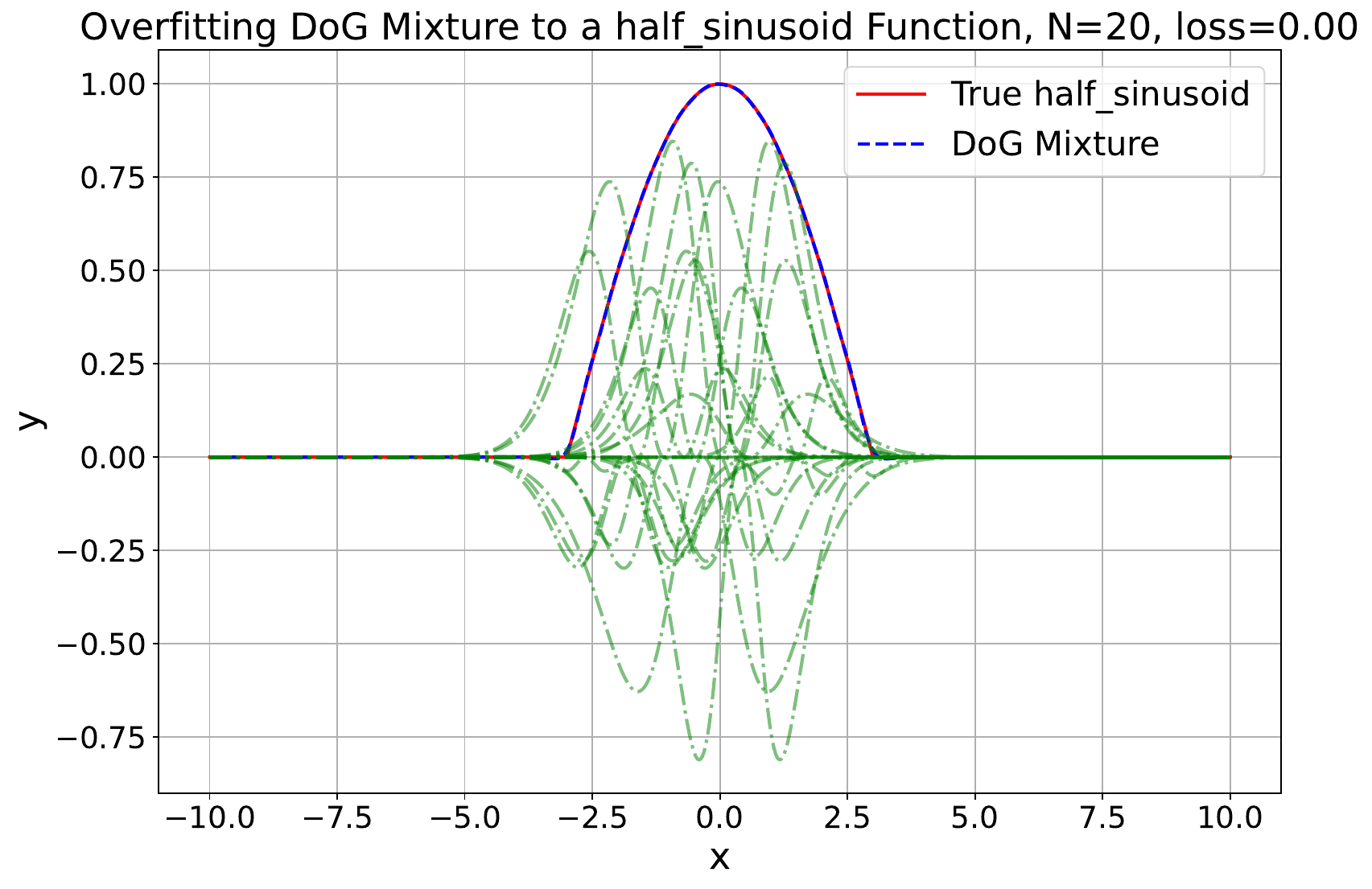} & 
    \includegraphics[width=0.24\linewidth]{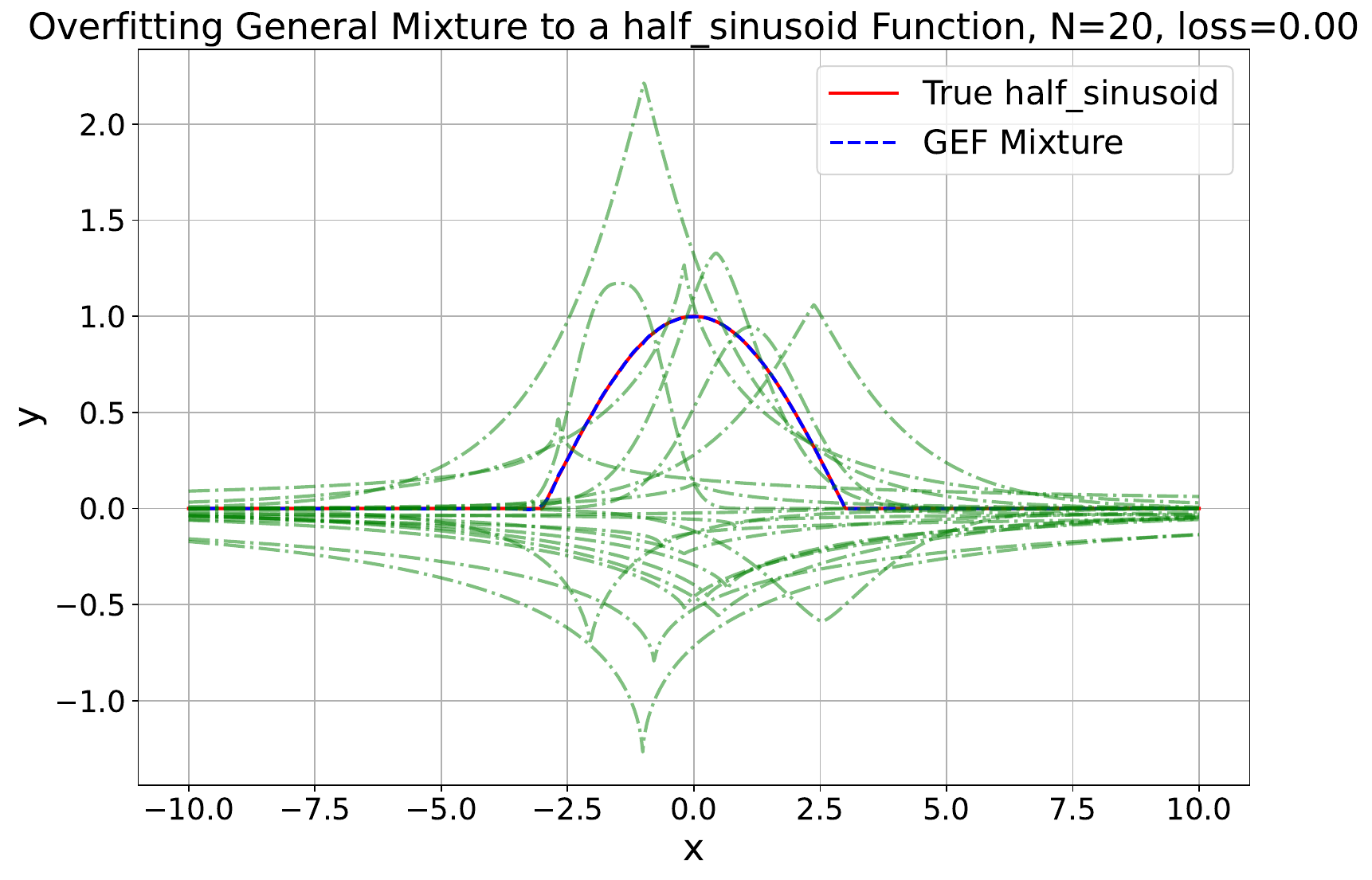}\\ 
    
    \end{tabular}
    }
    \caption{\textbf{Numerical Simulation Examples of Fitting Half sinusoids with Real Weights Mixtures ( N= 2, 5, 8, and 10 )}. We show some fitting examples for half sinusoid signals with Real weights mixtures (can be negative). The four mixtures used from left to right are Gaussians, LoG, DoG, and General mixtures. From top to bottom: N = 2, 8, and 10 components. The optimized individual components are shown in green. Some examples fail to optimize due to numerical instability in both Gaussians and GEF mixtures. Note that GEF is very efficient in fitting the half sinusoid with few components while LoG and DoG are more stable for a larger number of components. }
    \label{supfig:fitting_half_sinusoid_N}
    \end{figure*}

\clearpage \clearpage
\section{Genealized Exponential Splatting Details}\label{secsup:splatting} 
\subsection{Upper Bound on the Boundary View-Dependant Error in the Approximate GES Rasterization}
Given the Generalized Exponential Splatting (GES) function defined in \eqlabel{\eqref{eq:ges}} and our approximate rasterization given by \eqlabel{\eqref{eq:transimttance},\ref{eq:effective}, and \ref{eq:probablity}}, we seek to establish an upper bound on the error of our approximation in \methodname rendering. Since it is very difficult to estimate the error accumulated in each individual pixel from \eqlabel{\eqref{eq:transimttance}}, we seek to estimate the error directly on each splatting component affecting the energy of all passing rays. 

Let us consider a simple 2D case with symmetrical components as in \figlabel{\ref{fig:explaination}}. The error between the scaled Gaussian component and the original \methodname component is related to the energy loss of rays and can be represented by simply estimating the \textit{ratio} $\eta$ between the area difference and the area of the scaled Gaussian. Here we will show we can estimate an upper bound on $\eta$ relative to the area of each component. 

For the worst-case scenario when $\beta \rightarrow \infty$, we consider two non-overlapping conditions for the approximation: one where the square is the outer shape and one where the circle covers the square. The side length of the square is $2r$ for the former case and $2r/\sqrt{2}$ for the latter case. The radius $r$ of the circle is determined by the effective projected variance $\alpha$ from \eqlabel{\eqref{eq:effective}}.
For a square with side length $2r$ and a circle with radius $r$, we have: $
A_{\text{square}} = 4r^2, A_{\text{circle}} = \pi r^2.$
For a square with side length $2r/\sqrt{2}$, the area is:$
A_{\text{square, covered}} = 2r^2.$

The area difference $\Delta A$ is:
\begin{align}
\Delta A_{\text{square larger}} &= A_{\text{square}} - A_{\text{circle}} = 4r^2 - \pi r^2, \\
\Delta A_{\text{circle larger}} &= A_{\text{circle}} - A_{\text{square, covered}} = \pi r^2 - 2r^2.
\end{align}

The ratio of the difference in areas to the area of the inner shape, denoted as $\eta$, is bounded by:
\begin{align}
\eta_{\text{square larger}} &= \frac{\Delta A_{\text{square larger}}}{A_{\text{circle}}} = \frac{4r^2 - \pi r^2}{\pi r^2} \approx 0.2732, \\
\eta_{\text{circle larger}} &= \frac{\Delta A_{\text{circle larger}}}{A_{\text{circle}}} = \frac{\pi r^2 - 2r^2}{\pi r^2} \approx 0.3634.
\end{align}

Due to the PDF normalization constraint in GND \cite{generlizedgaussian}, the approximation followed in \eqlabel{\eqref{eq:effective}, and \ref{eq:probablity}} will always ensure $\eta_{\text{square larger}} \leq \eta \leq \eta_{\text{circle larger}}$. Thus, our target ratio $\eta$ when using our approximate scaling of variance based on $\beta$ should be within the range $0.2732 \leq \eta \leq 0.3634$. This implies in the worst case, our \methodname approximation will result in 36.34\% energy error in the lost energy of all rays passing through \textit{all} the splatting components. In practice, the error will be much smaller due to the large number of components and the small scale of all the splatting components.

\subsection{Implementation Details} 
\label{secsup:implementation-details}
Note that the $\text{DoG}$ in \eqlabel{\eqref{eq:mask}} will be very large when $\sigma_2$ is large, so we downsample the ground truth image by a factor `$\text{scale}_{\text{im,freq}}$` and upsample the mask $M_\omega$ similarly before calculating the loss in \eqlabel{\eqref{eq:laplace}}.
In the implementation of our Generalized Exponential Splatting (\methodname) approach, we fine-tuned several hyperparameters to optimize the performance. The following list details the specific values and purposes of each parameter in our implementation:

\begin{itemize}
    \item Iterations: The algorithm ran for a total of 40,000 iterations.
    \item Learning Rates:
    \begin{itemize}
        \item Initial position learning rate ($ \text{lr}_{\text{pos, init}} $) was set to 0.00016.
        \item Final position learning rate ($ \text{lr}_{\text{pos, final}} $) was reduced to 0.0000016.
        \item Learning rate delay multiplier ($ \text{lr}_{\text{delay mult}} $) was set to 0.01.
        \item Maximum steps for position learning rate ($ \text{lr}_{\text{pos, max steps}} $) were set to 30,000.
    \end{itemize}
    \item Other Learning Rates:
    \begin{itemize}
        \item Feature learning rate ($ \text{lr}_{\text{feature}} $) was 0.0025.
        \item Opacity learning rate ($ \text{lr}_{\text{opacity}} $) was 0.05.
        \item Shape and rotation learning rates ($ \text{lr}_{\text{shape}} $ and $ \text{lr}_{\text{rotation}} $) were both set to 0.001.
        \item Scaling learning rate ($ \text{lr}_{\text{scaling}} $) was 0.005.
    \end{itemize}
    \item Density and Pruning Parameters:
    \begin{itemize}
        \item Percentage of dense points ($ \text{percent}_{\text{dense}} $) was 0.01.
        \item Opacity and shape pruning thresholds were set to 0.005.
    \end{itemize}
    \item Loss Weights and Intervals:
    \begin{itemize}
        \item SSIM loss weight ($ \lambda_{\text{ssim}} $) was 0.2.
        \item Densification, opacity reset, shape reset, and shape pruning intervals were set to 100, 3000, 1000, and 100 iterations, respectively.
    \end{itemize}
    \item Densification Details:
    \begin{itemize}
        \item Densification started from iteration 500 and continued until iteration 15,000.
        \item Gradient threshold for densification was set to 0.0003.
    \end{itemize}
    \item Image Laplacian Parameters:
    \begin{itemize}
        \item Image Laplacian scale factor ($ \text{scale}_{\text{im,freq}} $) was 0.2.
        \item Weight for image Laplacian loss ($ \lambda_{\omega} $) was 0.5.
    \end{itemize}
    \item Miscellaneous:
    \begin{itemize}
        \item Strength of shape $ \rho$ was set to 0.1.
    \end{itemize}
\end{itemize}

These parameters were carefully chosen to balance the trade-off between computational efficiency and the fidelity of the synthesized views. The above hyperparameter configuration played a crucial role in the effective implementation of our \methodname approach. For implementation purposes, the modification functions have been shifted by -2 and the $\beta$ initialization is set to 0 instead of 2 ( which should not have any effect on the optimization).

\section{Additional Results and Analysis} \label{secsup:results}
\subsection{Additional Results} 
We show in \figlabel{\ref{fig:comparisons-supp}} additional \methodname results (test views) and comparisons to the ground truth and baselines.  In \figlabel{\ref{supfig:detailed}}, show PSNR, LPIPS, SSIM, and file size results for every single scene in MIPNeRF 360 dataset \cite{MipNeRF-360,ZipNerf} of our \methodname and re-running the Gaussian Splatting \cite{gaussiansplatter} baseline with the \textit{exact same} hyperparameters of our  \methodname and on different number of iterations. 

\subsection{Applying \methodname in Fast 3D Generation}\label{sec:3dgen}
\vspace{-2pt}
\methodname is adapted to modern 3D generation pipelines using score distillation sampling from a 2D text-to-image model \cite{DreamFusion}, replacing Gaussian Splatting for improved efficiency. We employ the same setup as DreamGaussian \cite{dreemgaussian}, altering only the 3D representation to \methodname. This change demonstrates \methodname's capability for real-time representation applications and memory efficiency.

For evaluation, we use datasets NeRF4 and RealFusion15 with metrics PSNR, LPIPS \cite{LPIPS}, and CLIP-similarity \cite{CLIP} following the benchmarks in Realfusion \cite{RealFusion} and Magic123 \cite{magic123}. Our \methodname exhibits swift optimization with an average runtime of 2 minutes, maintaining quality, as shown in Table \ref{tab:img3d} and \figlabel{\ref{figsup:gen3d_vis}}.

\subsection{Shape Parameters} In Table \ref{tab:ablateshape}, we explore the effect of all hyperparameters associated with the new shape parameter on novel view synthesis performance. 
We find that the optimization process is relatively robust to these changes, as it retains relatively strong performance and yields results with similar sizes. 

\minorsection{Density Gradient Threshold}
In \figlabel{\ref{fig:ablatedenity}}, we visualize the impact of modifying the density gradient threshold for splitting, using both \methodname and the standard Gaussian Splatting ( after modifying the setup for a fair comparison to \methodname). 
We see that the threshold has a significant impact on the tradeoff between performance and size, with a higher threshold decreasing size at the expense of performance. 
Notably, we see that \methodname outperforms GS across the range of density gradient thresholds, yielding similar performance while using less memory. 

\subsection{Analysing the Frequency-Modulated Image Loss}
We study the effect of the frequency-modulated loss $\mathcal{L}_{\omega}$ on the performance by varying $\lambda_{\omega}$ and show the results in Table \ref{figsup:lambda_laplace} and Table \ref{tab:ablation-general-mip}. Note that increasing $\lambda_{\omega}$ in \methodname indeed reduces the size of the file, but can affect the performance. We chose $\lambda_{\omega} =0.5$ as a middle ground between improved performance and reduced file size.  

\subsection{Visualizing the Distribution of Parameters}
We visualize the distribution of shape parameters $\beta$ in \figlabel{\ref{fig:distribution}} and the sizes of the splatting components in \figlabel{\ref{fig:sizes}}. They clearly show a smooth distribution of the components in the scene, which indicates the importance of initialization. This hints a possible future direction in this line of research.

\subsection{Typical Convergence Plots}
We show in \figlabel{\ref{fig:convergence}} examples of the convergence plots of both \methodname and Gaussians if the training continues up to 50K iterations to inspect the diminishing returns of more training. Despite requiring more iterations to converge, \methodname trains faster than Gaussians due to its smaller number of splatting components. 

\begin{table}[t!]
  \centering
  \resizebox{0.99\linewidth}{!}{%
  \begin{tabular}{@{}c|cccc}
  \toprule
  \textbf{Dataset} & \textbf{Metrics\textbackslash{}Methods} 
  & Point-E &DreamGaussian &\textbf{\methodname (Ours)}\\
  \midrule
  \multirow{2}{*}{\textbf{NeRF4}}        
  & CLIP-Similarity$\uparrow$  & \cellcolor{yellow!40} 0.48  & \cellcolor{orange!40}0.56 &0.58 \cellcolor{red!40}\\
  & PSNR$\uparrow$             & \cellcolor{yellow!40} 0.70  & \cellcolor{red!40} 13.48 & \cellcolor{orange!40}13.33\\ \hline
  \multirow{2}{*}{\textbf{RealFusion15}} 
  & CLIP-Similarity$\uparrow$ & \cellcolor{yellow!40} 0.53 & 0.70 \cellcolor{orange!40} &0.70  \cellcolor{orange!40}\\
  & PSNR$\uparrow$            & \cellcolor{yellow!40} 0.98   & \cellcolor{orange!40} 12.83 & \cellcolor{red!40}12.91  \\ \midrule
 - & Average Runtime $\downarrow$  & 78 secs \cellcolor{red!40}& \cellcolor{orange!40} 2 mins& \cellcolor{orange!40} 2 mins\\
  \bottomrule
  \end{tabular}
  }
  \vspace{-4pt}
  \caption{ \textbf{GES Application: Fast Image-to-3D Generation pipeline} We show quantitative results in terms of CLIP-Similarity$\uparrow$ / PSNR$\uparrow$ , and Runtime$\downarrow$, compared to fast methods: Point-E~\cite{PointE} and DreamGaussian~\cite{dreemgaussian} . \methodname offers a good option for a fast and effective image-to-3D solution.}
  \label{tab:img3d}
  \end{table}
\begin{figure}[t!] 
\centering

\includegraphics[trim={2.6cm 0 0.3cm 1.0cm},clip,width=0.47\textwidth]{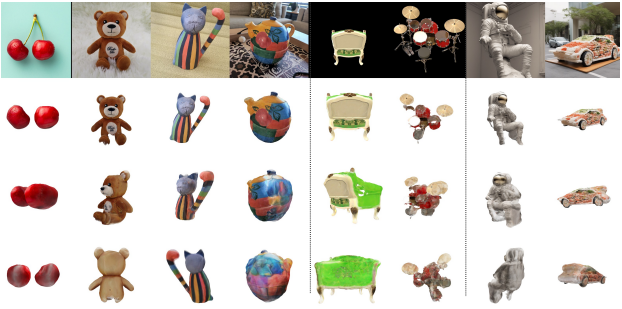}
\caption{
\textbf{Visulization for 3D generation}. We show selected generated examples by GES from Realfusion15 (\textit{left}) and NeRF4 datasets (\textit{middle}). Additionally, we pick two text prompts: \textit{"a car made out of sushi" and "Michelangelo style statue of an astronaut"}, and then use StableDiffusion-XL~\cite{sdxl} to generate the reference images before using \methodname on them(\textit{right}).
}
\label{figsup:gen3d_vis}
\vspace{-1mm}
\end{figure}
\begin{figure}[t]
\centering
\footnotesize
\includegraphics[width=0.98\linewidth]{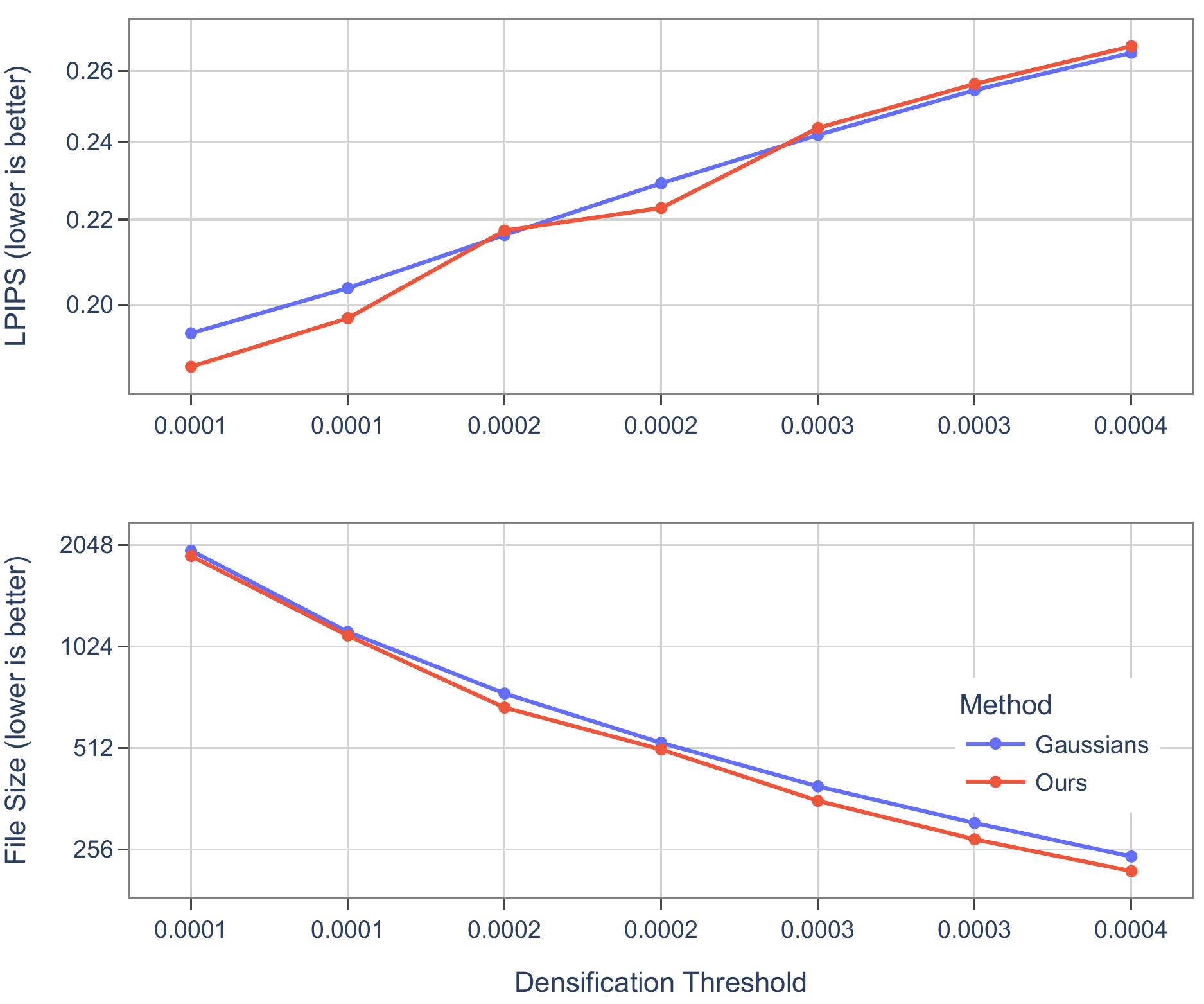}
\caption{\textbf{Ablation Study of Densification Threshold on Novel View Synthesis.} Impact of the densification threshold on reconstruction quality (LPIPS) and file size (MB) for our method and Gaussian Splatting~\cite{gaussiansplatter}, averaged across all scenes in the Mip-NeRF dataset. We see that the densification threshold has a significant impact on both file size and quality. Across the board, our method produces smaller scenes than Gaussian Splatting with similar or even slightly improved performance.}
\label{fig:ablatedenity}
\vspace{-2mm}
\end{figure}
\begin{table}[htbp]
    \centering
    
    \begin{tabular}{llrrrr}
    \toprule
    $\lambda_{\text{freq}}$ &  Method   &  PSNR           &  LPIPS          &  SSIM           &  Size \\ \midrule
    \multicolumn{6}{c}{\textit{Deep Blending}} \\ \midrule
    \multirow{2}{*}{0.05}  &  GES     &  29.58 &  0.252 &  0.900 &  431      \\
                               &  GES (fixed $\beta=2$) &  29.53          &  0.251          &  0.901          &  433       \\ \midrule
    \multirow{2}{*}{0.10}  &  GES     &  29.54          &  0.252 &  0.901 &  428      \\
                               &  GES (fixed $\beta=2$) &  29.61 &  0.252 &  0.901 &  435       \\ \midrule
    \multirow{2}{*}{0.50}  &  GES     &  29.66 &  0.251          &  0.901 &  397      \\
                               &  GES (fixed $\beta=2$) &  29.61          &  0.252 &  0.901 &  437       \\ \midrule
    \multirow{2}{*}{0.90}  &  GES     &  27.21          &  0.259          &  0.899 &  366      \\
                               &  GES (fixed $\beta=2$) &  29.62 &  0.252 &  0.901          &  434       \\ \midrule
    \multicolumn{6}{c}{\textit{MipNeRF}} \\ \midrule
    \multirow{2}{*}{0.05}  &  GES     &  27.08 &  0.250 &  0.796          &  405      \\
                               &  GES (fixed $\beta=2$) &  27.05          &  0.250 &  0.795 &  411       \\ \midrule
    \multirow{2}{*}{0.10}  &  GES     &  27.05 &  0.250 &  0.795          &  403      \\
                               &  GES (fixed $\beta=2$) &  27.05 &  0.250 &  0.796          &  412       \\ \midrule
    \multirow{2}{*}{0.50}  &  GES     &  26.97          &  0.252 &  0.794 &  376      \\
                               &  GES (fixed $\beta=2$) &  27.09 &  0.250          &  0.796          &  415       \\ \midrule
    \multirow{2}{*}{0.90}  &  GES     &  25.82          &  0.255 &  0.792 &  364      \\
                               &  GES (fixed $\beta=2$) &  27.08 &  0.250          &  0.795          &  413       \\ \midrule
    \multicolumn{6}{c}{\textit{Tanks and Temples}} \\ \midrule
    \multirow{2}{*}{0.05}  &  GES     &  23.49          &  0.196 &  0.837          &  251      \\
                               &  GES (fixed $\beta=2$) &  23.55 &  0.196 &  0.836 &  255       \\ \midrule
    \multirow{2}{*}{0.10}  &  GES     &  23.54 &  0.196 &  0.837 &  247      \\
                               &  GES (fixed $\beta=2$) &  23.53          &  0.196 &  0.837 &  255       \\ \midrule
    \multirow{2}{*}{0.50}  &  GES     &  23.35          &  0.197 &  0.836 &  221      \\
                               &  GES (fixed $\beta=2$) &  23.65 &  0.196          &  0.837          &  256       \\ \midrule
    \multirow{2}{*}{0.90}  &  GES     &  22.65          &  0.200 &  0.834 &  210      \\
                               &  GES (fixed $\beta=2$) &  23.50 &  0.197          &  0.836          &  256       \\ \midrule
    \bottomrule
    \end{tabular}
    
    \caption{\textbf{Ablation of $\lambda_{\text{freq}}$.} We show a comparison of performance (PSNR, LPIPS, SSIM) for various values of $\lambda_{\text{freq}}$. Note that increasing $\lambda_{\text{freq}}$ in \methodname indeed reduces the size of the file, but can affect the performance. We chose $\lambda_{\text{freq}} =0.5$ as a middle ground between improved performance and reduced file size. }
    \label{figsup:lambda_laplace}
    \end{table}
\begin{figure}[h]
\centering
\includegraphics[trim= 0.0cm 0cm 0cm 0cm,clip, width=\linewidth]{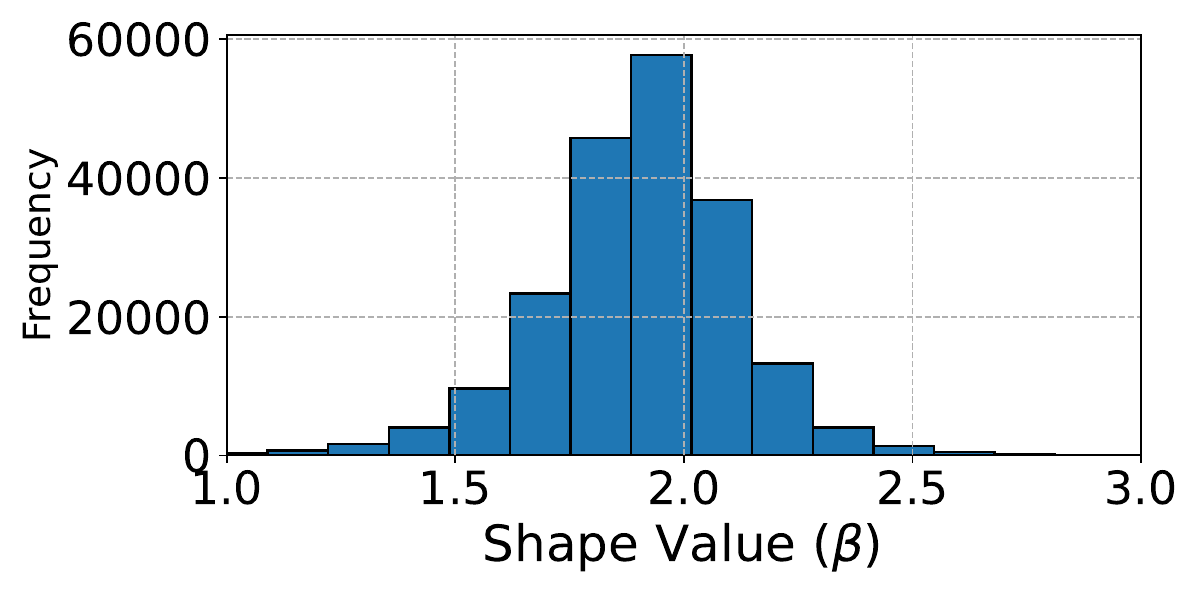}
\caption{\textbf{Distribution of Shape Values}. We show a distribution of $\beta$ values of a converged GES initialized with $\beta=2$. It shows a slight bias to $\beta$ smaller than 2.}
\label{fig:distribution}
\end{figure}

\begin{figure}[h]
\centering
\includegraphics[trim= 0.0cm 0cm 0cm 0cm,clip, width=\linewidth]{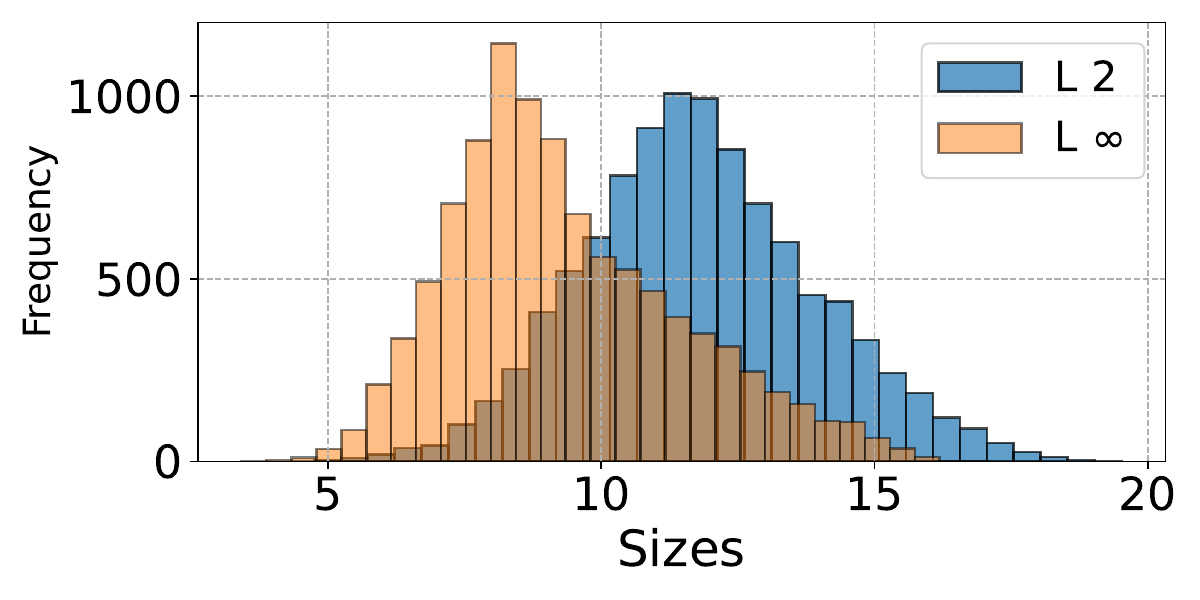}
\caption{\textbf{Distribution of Sizes}. We show a distribution of sizes ($L_2$ and $L_\infty$) of the GES components of a converged scene.}
\label{fig:sizes}
\end{figure}

\begin{figure*}[h]
    \centering
    \resizebox{1.0\linewidth}{!}{
    \begin{tabular}{c|c|c}
    \tabcolsep=0.01cm
    L1 Loss & Train Loss & Train PSNR \\ 
    \includegraphics[width=0.33\linewidth]{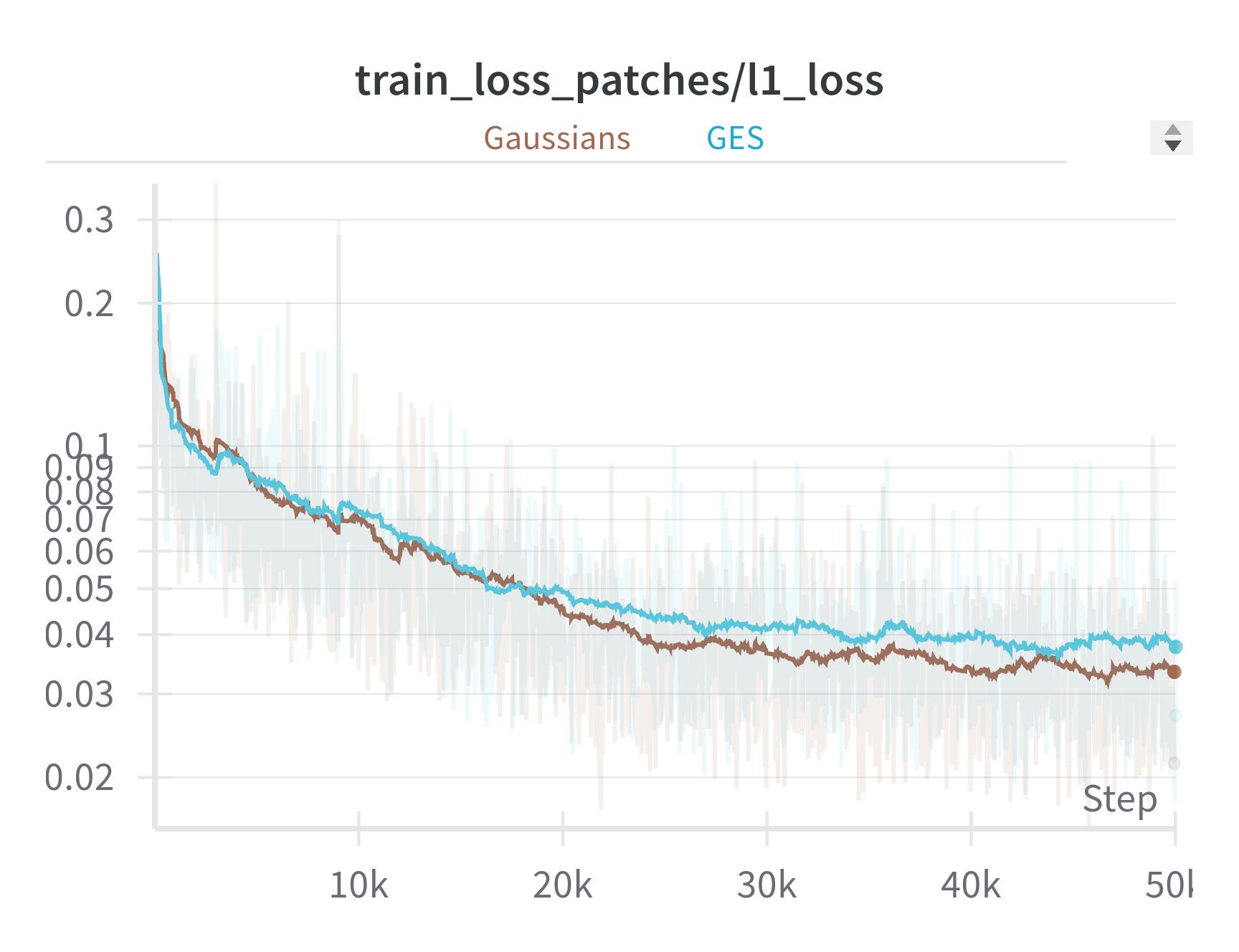} & 
    \includegraphics[width=0.33\linewidth]{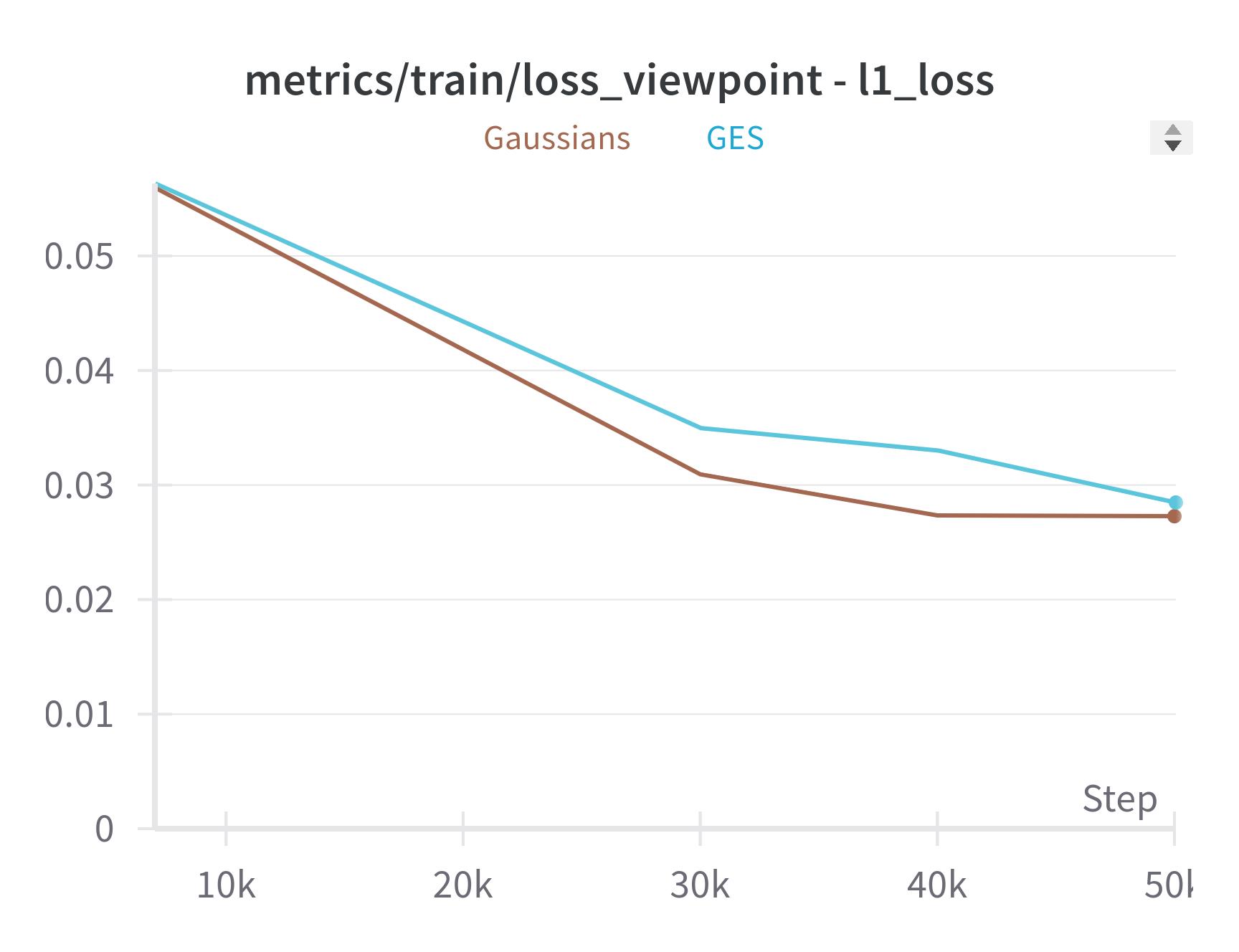} & 
    \includegraphics[width=0.33\linewidth]{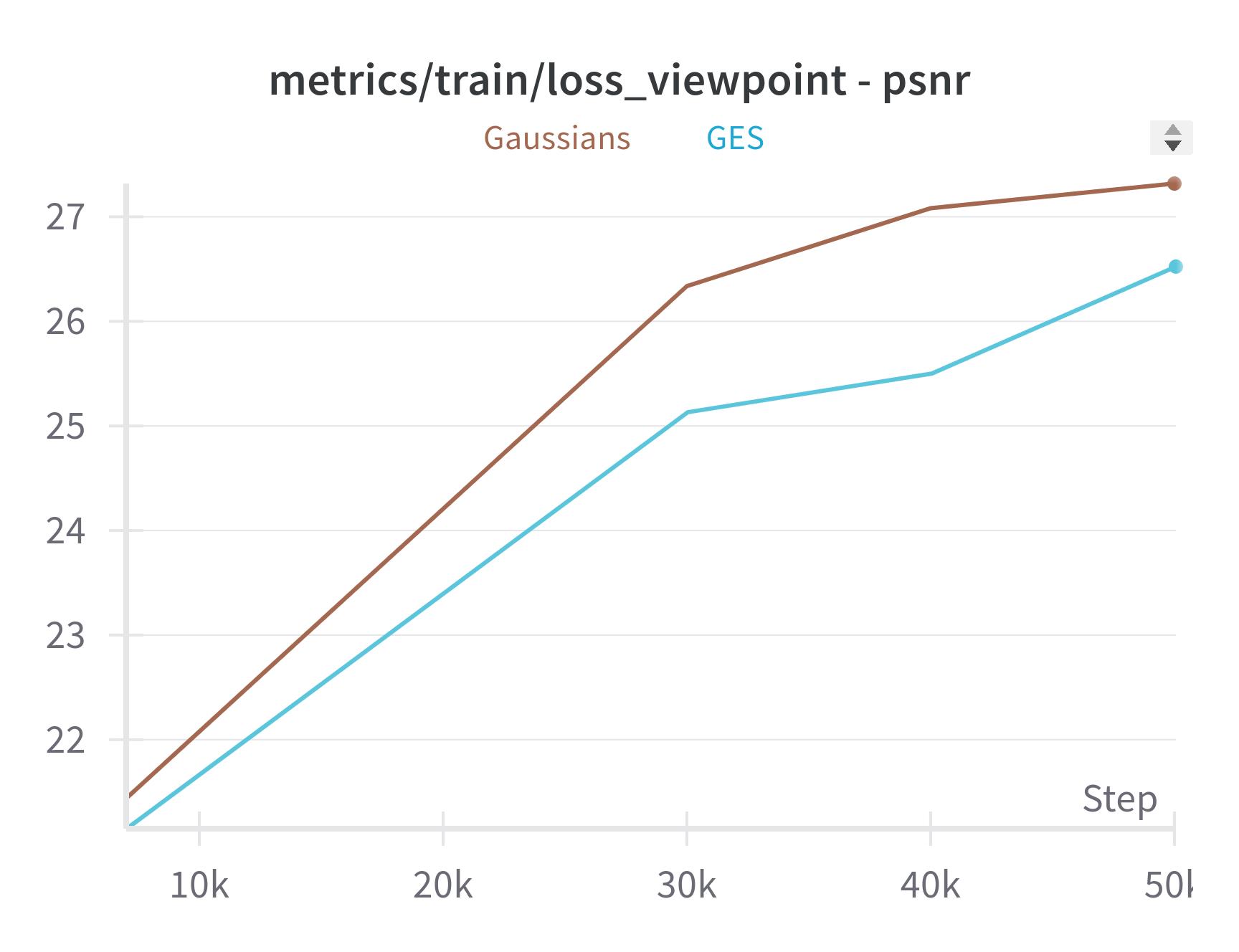} \\ \midrule
    Number of Components & Test Loss & Test PSNR \\ 
    \includegraphics[width=0.33\linewidth]{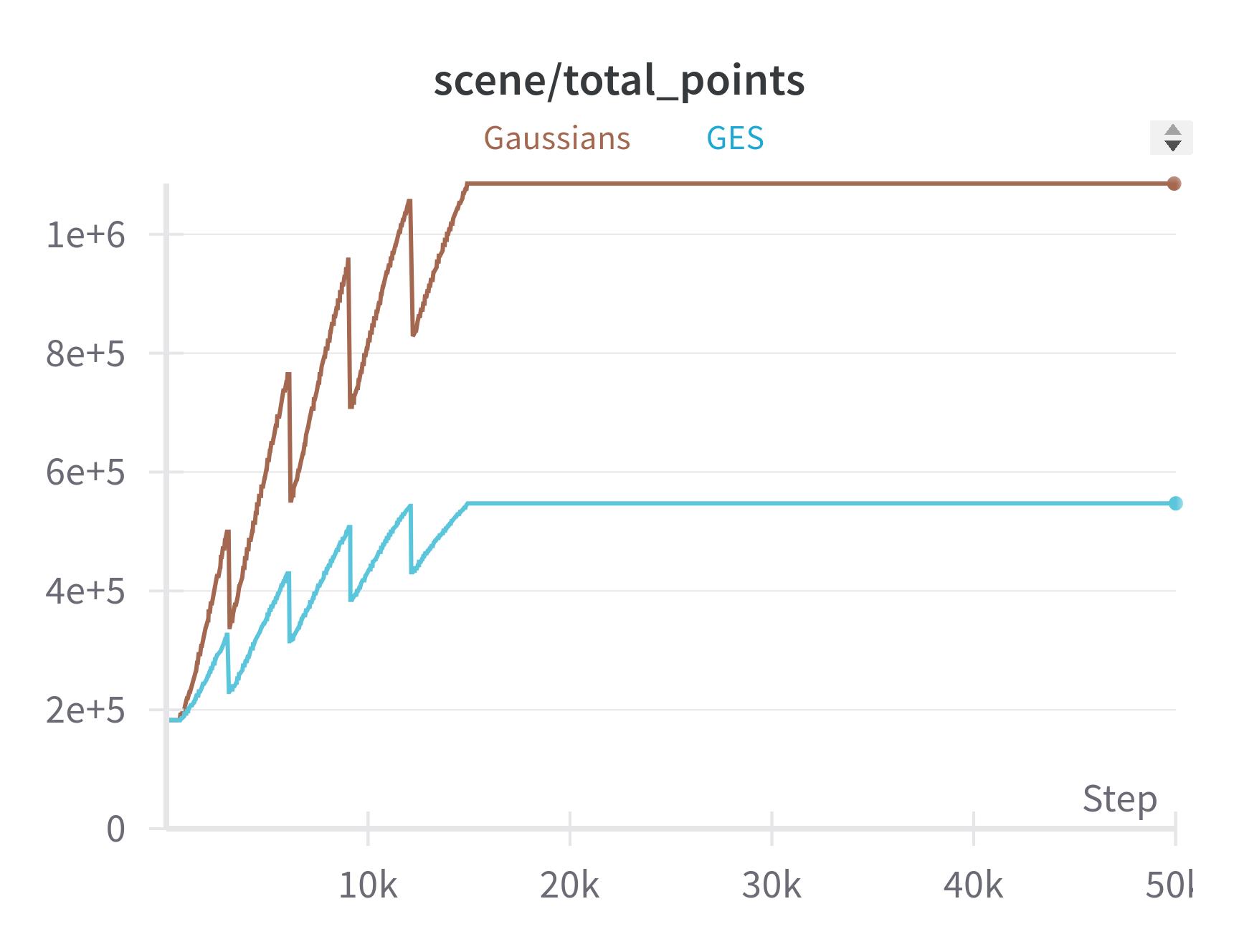} & 
    \includegraphics[width=0.33\linewidth]{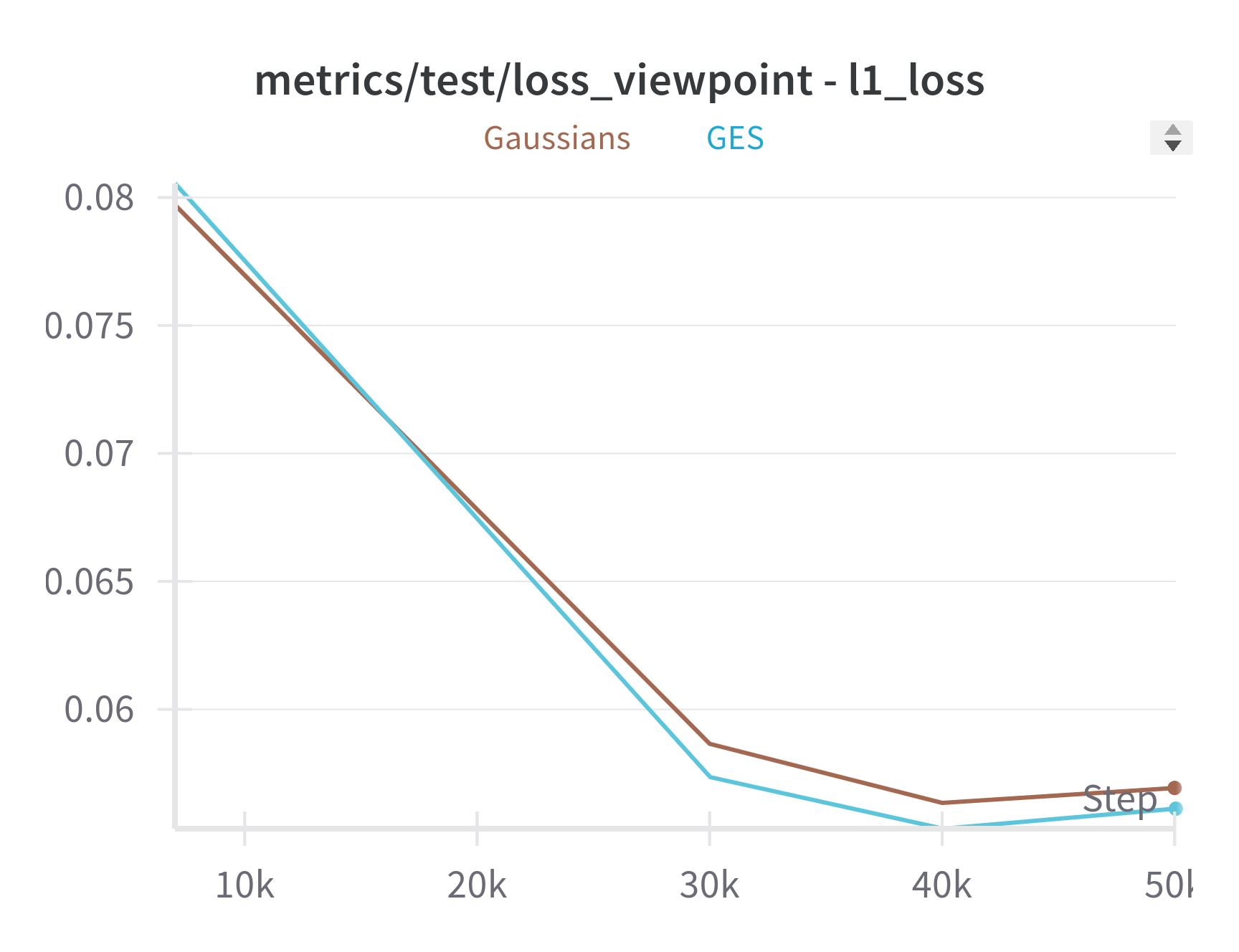} & 
    \includegraphics[width=0.33\linewidth]{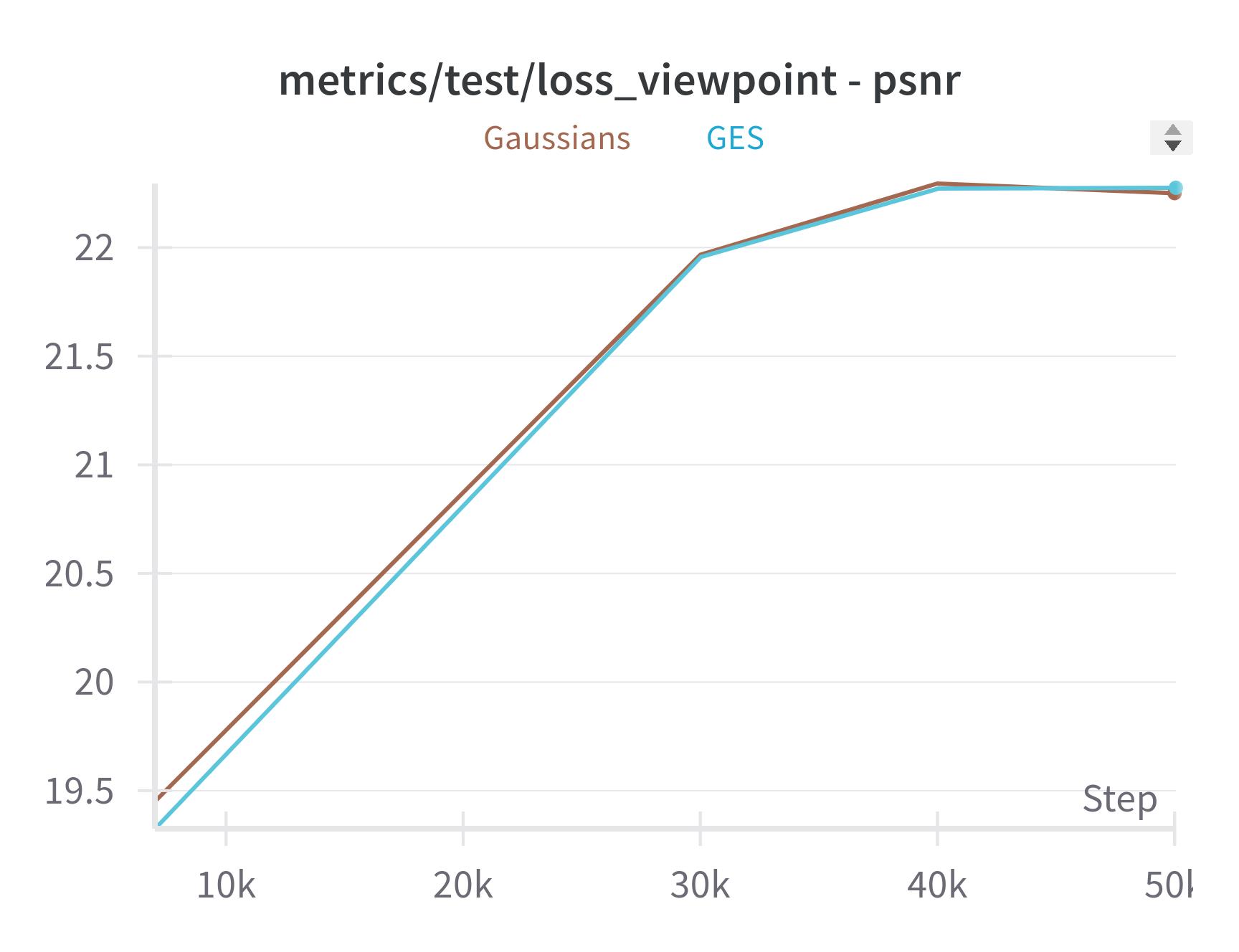} \\
    
    \end{tabular}
    }
    \caption{\textbf{Convergence Plots of Gaussians \vs \methodname }. We show an example of the convergence plots of both \methodname and Gaussians if the training continues up to 50K iterations to inspect the diminishing returns of more training. Despite requiring more iterations to converge, \methodname trains faster than Gaussians due to its smaller number of splatting components.}
    \label{fig:convergence}
    \end{figure*}

\begin{figure*}[t] 
    \centering
    \includegraphics[width=0.24\linewidth]{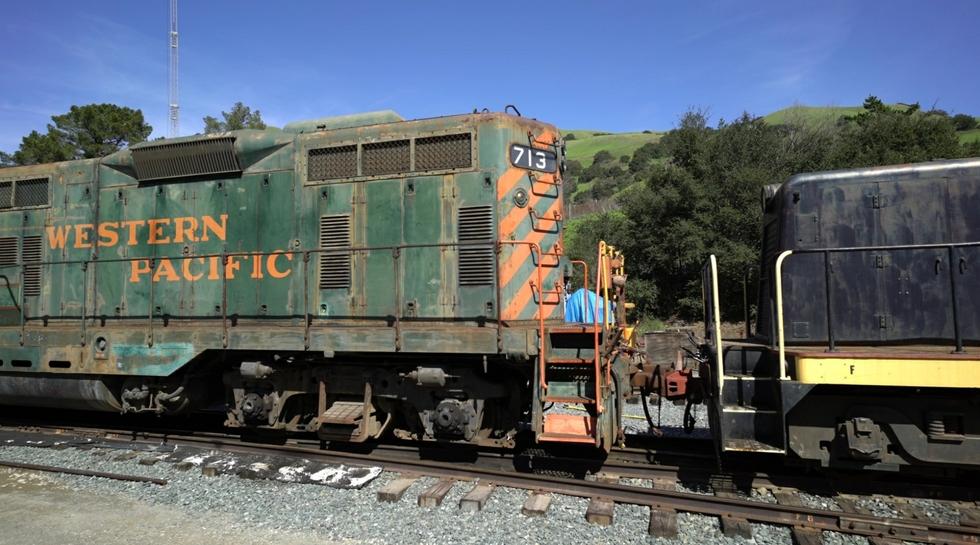}
    \includegraphics[trim={0 0 0 0},clip,width=0.24\linewidth]{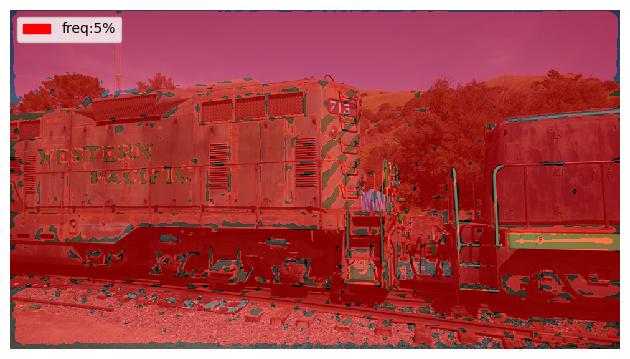}
    \includegraphics[trim={0 0 0 0},clip,width=0.24\linewidth]{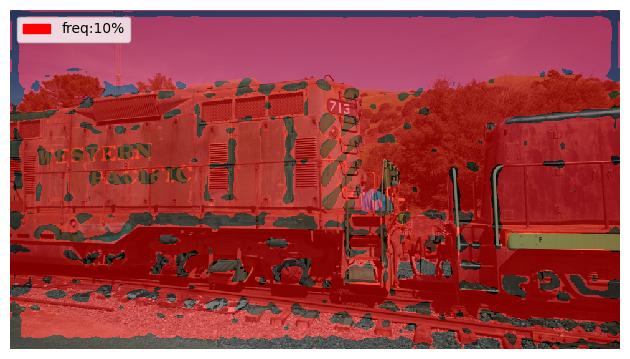}
    \includegraphics[trim={0 0 0 0},clip,width=0.24\linewidth]{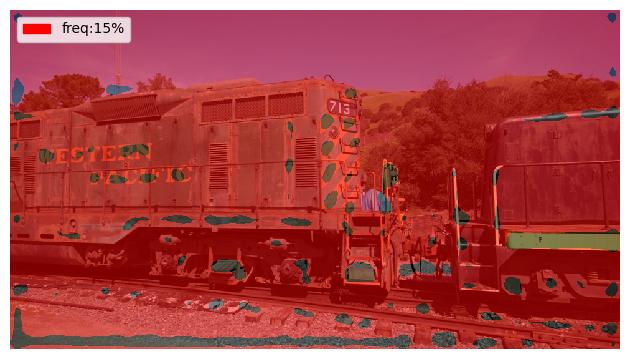}\\
    \includegraphics[trim={0 0 0 0},clip,width=0.24\linewidth]{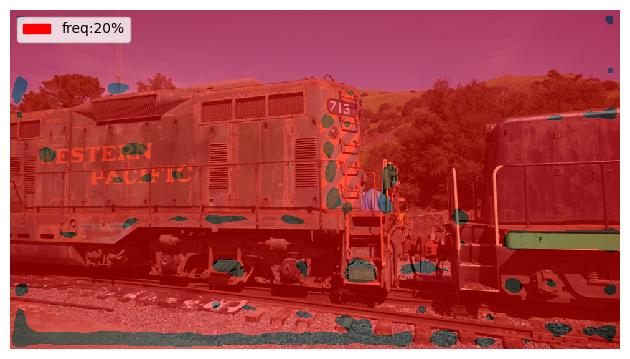}
    \includegraphics[trim={0 0 0 0},clip,width=0.24\linewidth]{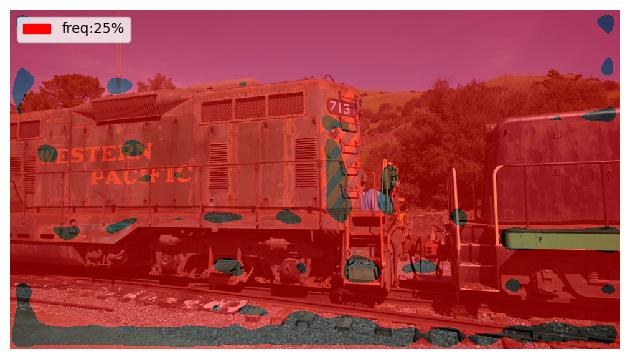}
    \includegraphics[trim={0 0 0 0},clip,width=0.24\linewidth]{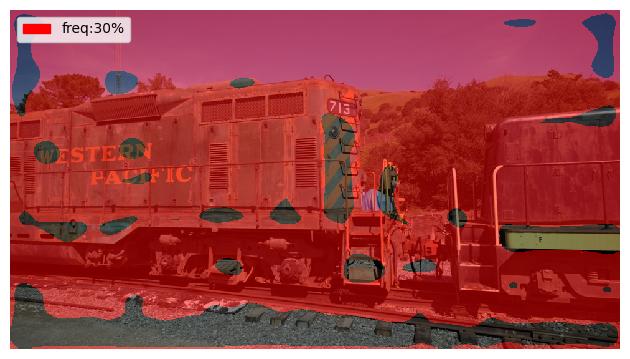}
    \includegraphics[trim={0 0 0 0},clip,width=0.24\linewidth]{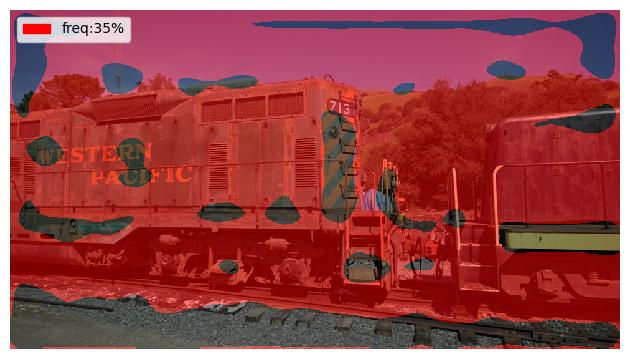}\\
    \includegraphics[trim={0 0 0 0},clip,width=0.24\linewidth]{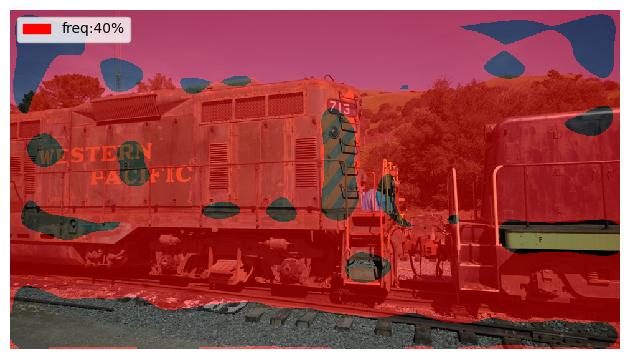}
    \includegraphics[trim={0 0 0 0},clip,width=0.24\linewidth]{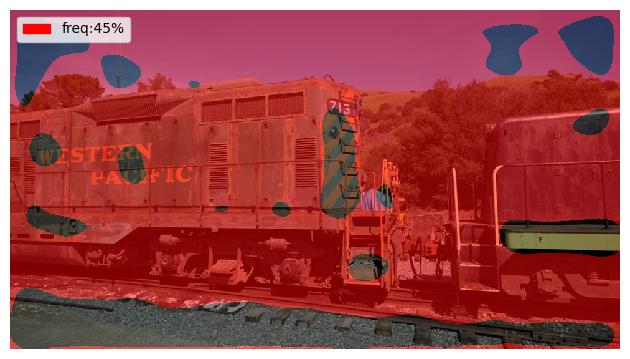}
    \includegraphics[trim={0 0 0 0},clip,width=0.24\linewidth]{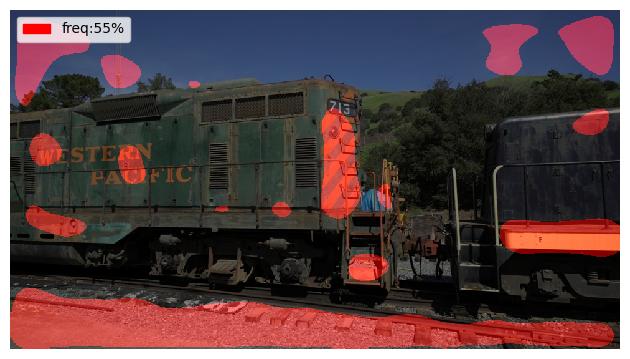}
    \includegraphics[trim={0 0 0 0},clip,width=0.24\linewidth]{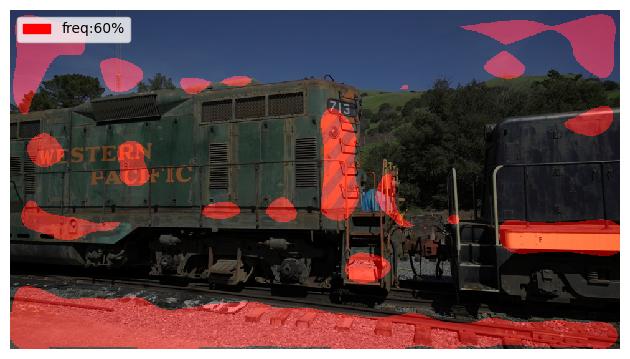}\\
    \includegraphics[trim={0 0 0 0},clip,width=0.24\linewidth]{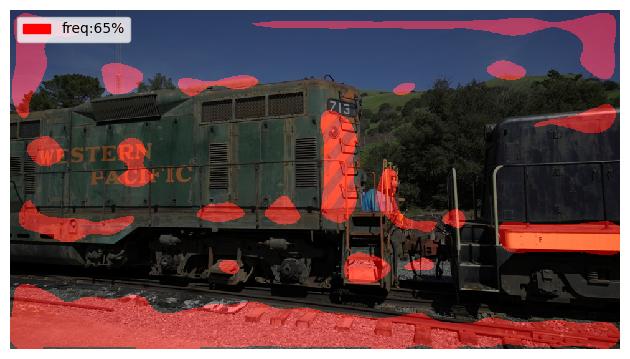}
    \includegraphics[trim={0 0 0 0},clip,width=0.24\linewidth]{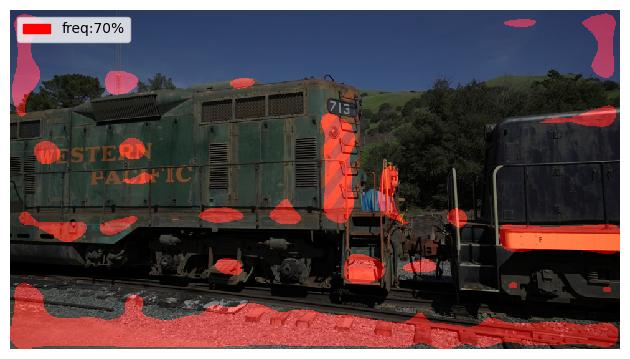}
    \includegraphics[trim={0 0 0 0},clip,width=0.24\linewidth]{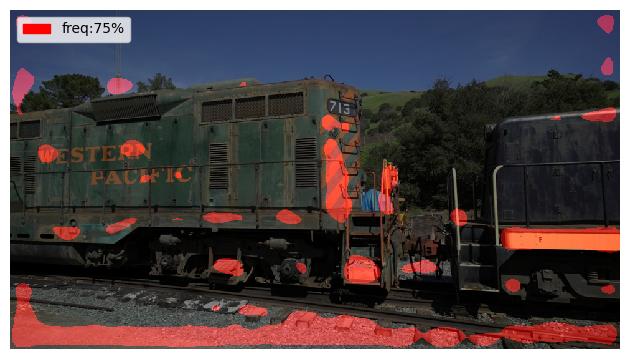}
    \includegraphics[trim={0 0 0 0},clip,width=0.24\linewidth]{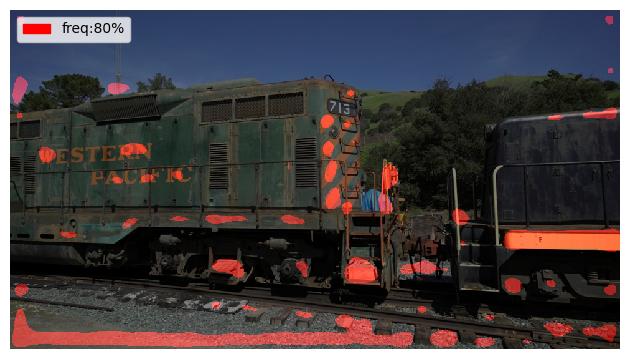}\\
    \includegraphics[trim={0 0 0 0},clip,width=0.24\linewidth]{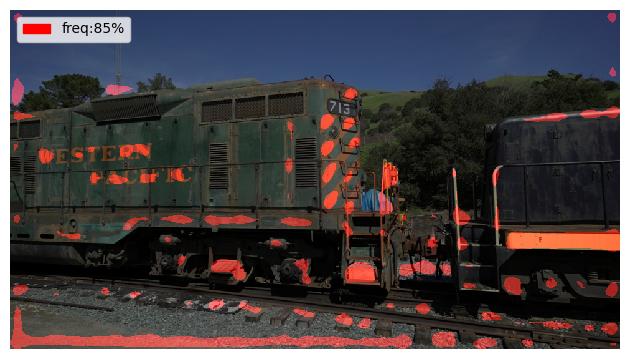}
    \includegraphics[trim={0 0 0 0},clip,width=0.24\linewidth]{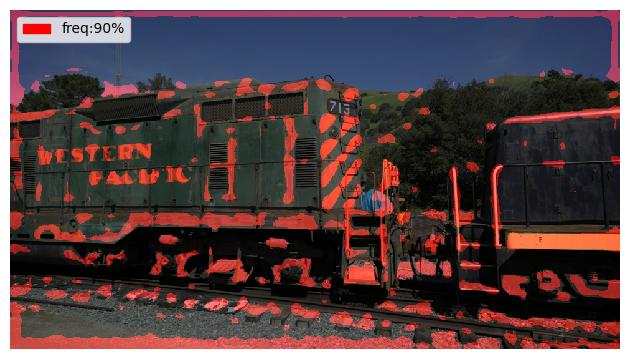}
    \includegraphics[trim={0 0 0 0},clip,width=0.24\linewidth]{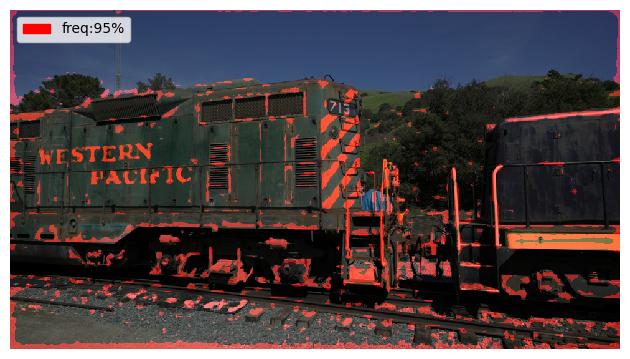}
    \includegraphics[trim={0 0 0 0},clip,width=0.24\linewidth]{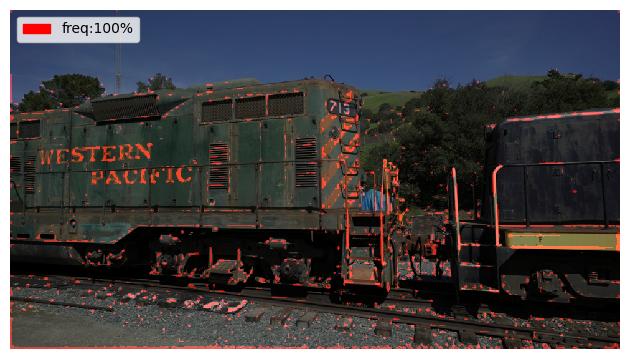}\\

\caption{
\textbf{Frequency-Modulated Image Masks.} For the input example image on the top left, We show examples of the frequency loss masks $M_\omega$ used in \seclabel{\ref{sec:image-laplacian-guidance}} for different numbers of target normalized frequencies $\omega$ ( $\omega= 0\%$ for low frequencies to $\omega= 100\%$ for high frequencies). This masked loss helps our \methodname learn specific bands of frequencies. Note that due to Laplacian filter sensitivity for high-frequencies, the mask for $ 0< \omega \leq 50 \%$ is defined as $1 - M_\omega $ for $ 50<\omega \leq 100 \%$. This ensures that all parts of the image will be covered by one of the masks $M_\omega$, while focusing on the details more as the optimization progresses.
}
    \label{figsup:mask}
    \end{figure*}
\begin{table}[h]
\centering
\begin{center}
  \resizebox{0.93\linewidth}{!}{%

\begin{tabular}{lcccc}
\toprule
 \multirowcell{2}{Shape \\ Learning Rate}  & \multirowcell{2}{PSNR} & \multirowcell{2}{SSIM} & \multirowcell{2}{LPIPS} & \multirowcell{2}{File Size (MB)} \\
 & & & & \\
\midrule
0.0005 & 26.83 & 0.845 & \textbf{0.141} & 659 \\
0.0010 & 26.85 & 0.845 & \textbf{0.141} & 658 \\
0.0015 & \textbf{26.89} & \textbf{0.846} & \textbf{0.141} & \textbf{651} \\
0.0020 & 26.82 & 0.844 & 0.142 & 658 \\
\bottomrule

\toprule
 \multirowcell{2}{Shape \\ Reset Interval}  & \multirowcell{2}{PSNR} & \multirowcell{2}{SSIM} & \multirowcell{2}{LPIPS} & \multirowcell{2}{File Size (MB)} \\
 & & & & \\
\midrule
200 & 26.87 & 0.845 & \textbf{0.141} & 656 \\
500 & 26.86 & 0.845 & \textbf{0.141} & 658 \\
1000 & \textbf{26.89} & \textbf{0.846} & \textbf{0.141} & \textbf{651} \\
2000 & 26.84 & 0.845 & \textbf{0.141} & 657 \\
5000 & 26.84 & 0.845 & \textbf{0.141} & 661 \\
\bottomrule

\toprule
\multirowcell{2}{Shape \\ Strength}  & \multirowcell{2}{PSNR} & \multirowcell{2}{SSIM} & \multirowcell{2}{LPIPS} & \multirowcell{2}{File Size (MB)}  \\
 & & & & \\
\midrule
0.010 & 26.87 & 0.845 & \textbf{0.141} & 661 \\
0.050 & 26.84 & 0.845 & \textbf{0.141} & 653 \\
0.100 & \textbf{26.89} & \textbf{0.846} & \textbf{0.141} & \textbf{651} \\
0.150 & 26.83 & 0.844 & 0.142 & 656 \\
\bottomrule
\end{tabular}
}
\end{center}
\vspace{-3mm}
\footnotesize
\caption{\textbf{Ablation Study on Novel View Synthesis.} Impact of the shape parameter's learning rate, reset interval, and strength on reconstruction quality and file size for the garden scene from the Mip-NeRF dataset.}
\label{tab:ablateshape}
\end{table}

\begin{figure*}[!h]
	\setlength\mytmplen{.193\linewidth}
	\setlength{\tabcolsep}{1pt}
	\renewcommand{\arraystretch}{0.5}
	\centering
	\begin{tabular}{cccccc}
		Ground Truth & \methodname (Ours) & Gaussians & Mip-NeRF360 & InstantNGP \\
		\includegraphics[width=\mytmplen,trim={0 1.5cm 0 2cm},clip]{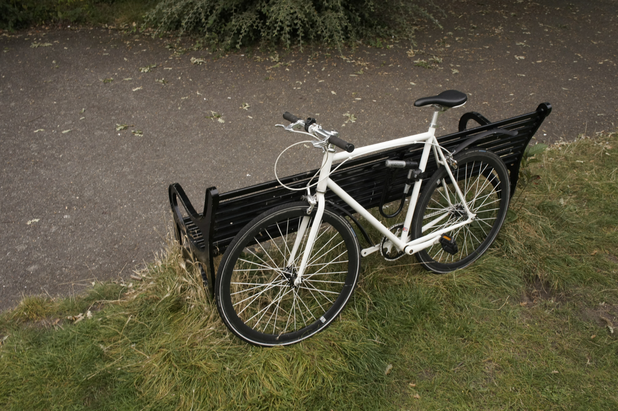} &
		\includegraphics[width=\mytmplen,trim={0 1.5cm 0 2cm},clip]{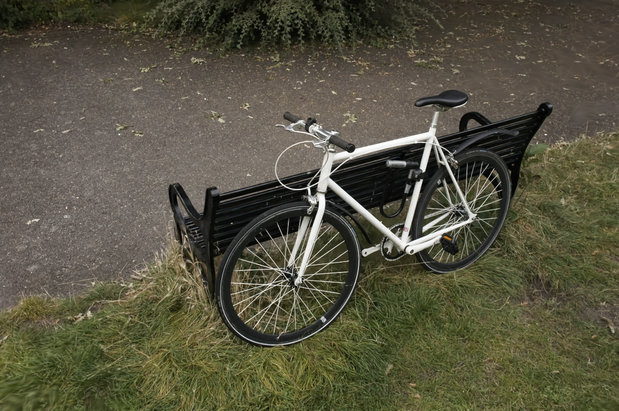} &
		\begin{tikzpicture}
			\node[anchor=south west,inner sep=0] (image) at (0,0) {\includegraphics[width=\mytmplen,trim={0 1.5cm 0 2cm},clip]{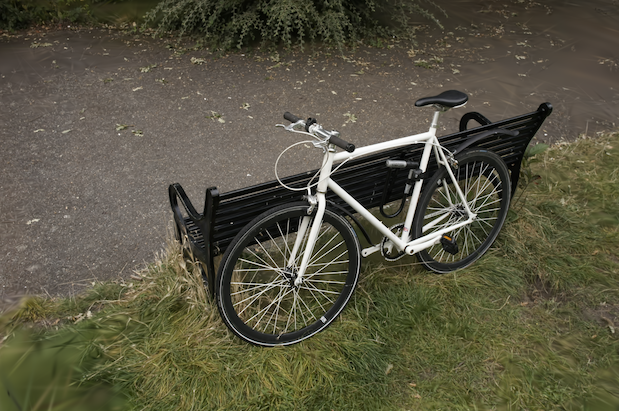}};
		\end{tikzpicture} &
	    \begin{tikzpicture}
			\node[anchor=south west,inner sep=0] (image) at (0,0) {\includegraphics[width=\mytmplen,trim={0 1.5cm 0 2cm},clip]{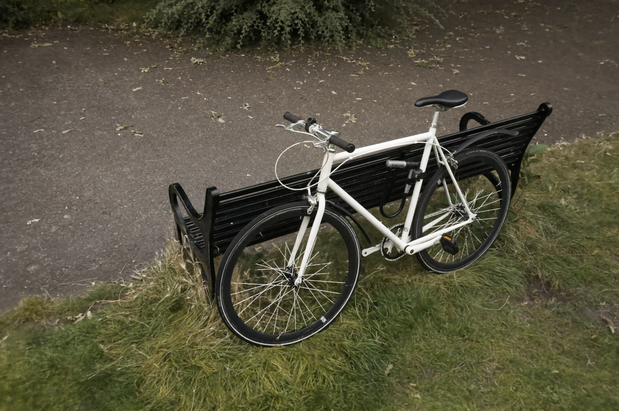}};
		\end{tikzpicture} &
		\includegraphics[width=\mytmplen,trim={0 1.5cm 0 2cm},clip]{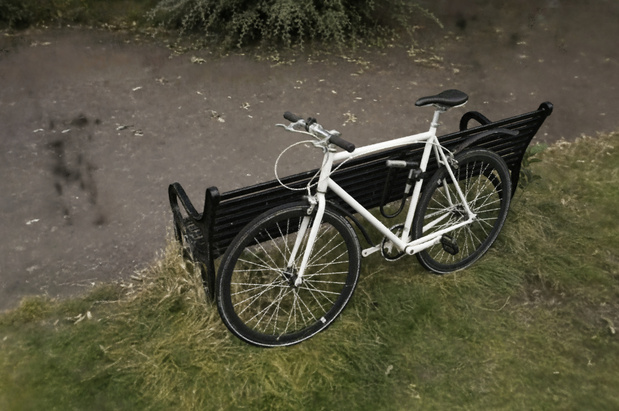} 
 \\
		\zoomin{figures/src/results2/garden/gt_gt_001.png}{0.6}{1.95}{0.55cm}{0.55cm}{1cm}{\mytmplen}{3}{red}&
		\zoomin{figures/src/results2/garden/laplacians_00001.png}{0.6}{1.95}{0.55cm}{0.55cm}{1cm}{\mytmplen}{3}{red} &
		\zoomin{figures/src/results2/garden/gaussians_DSC07964.png}{0.6}{1.95}{0.55cm}{0.55cm}{1cm}{\mytmplen}{3}{red}&
		\zoomin{figures/src/results2/garden/mipnerf360_color_001.png}{0.6}{1.95}{0.55cm}{0.55cm}{1cm}{\mytmplen}{3}{red}&
		\zoomin{figures/src/results2/garden/igp_out_DSC07964.jpg}{0.6}{1.95}{0.55cm}{0.55cm}{1cm}{\mytmplen}{3}{red}
\\
		\includegraphics[width=\mytmplen]{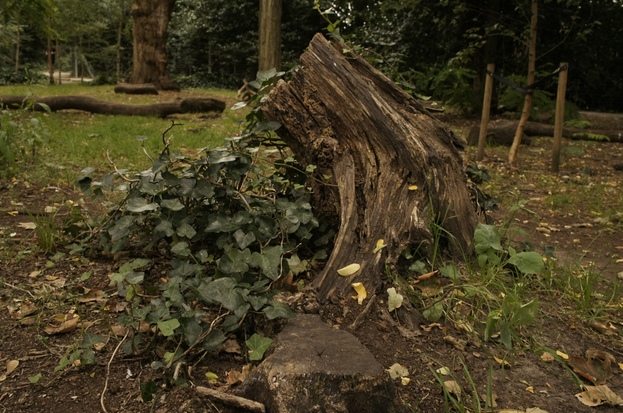} &
		\includegraphics[width=\mytmplen]{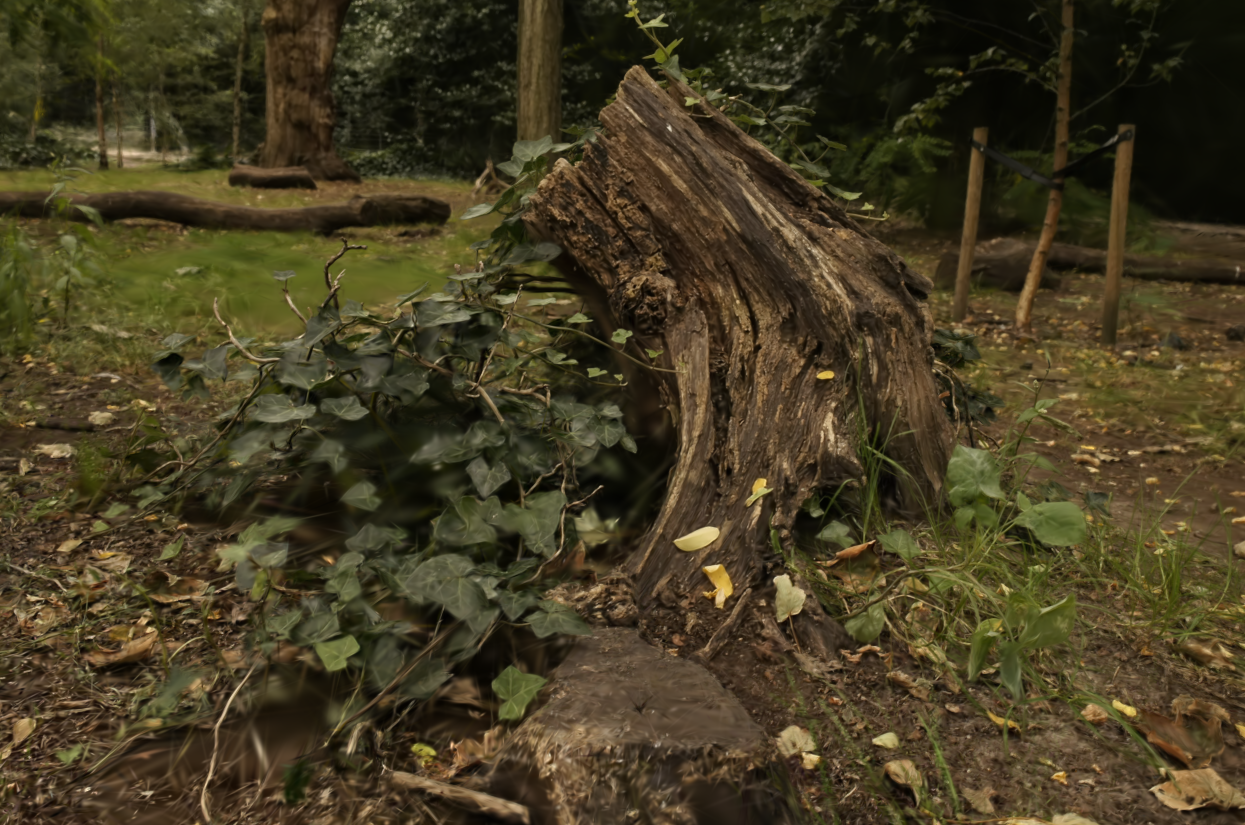} &
		\includegraphics[width=\mytmplen]{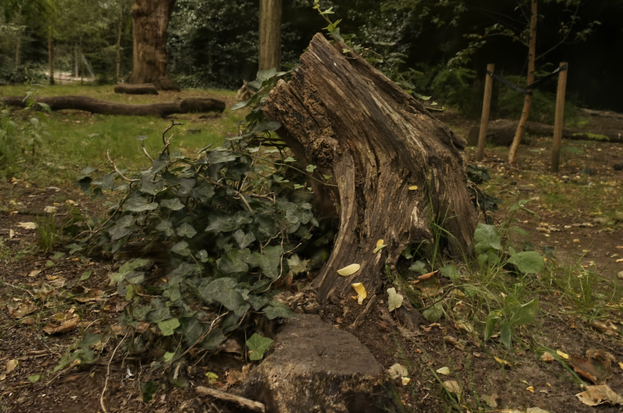} &
		\includegraphics[width=\mytmplen]{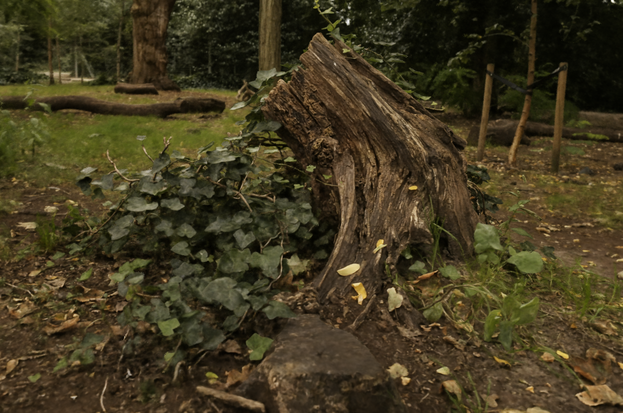} &
		\includegraphics[width=\mytmplen]{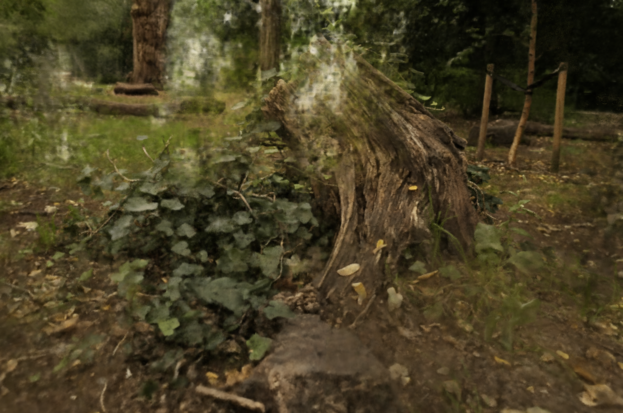} 
		\\
		\includegraphics[width=\mytmplen]{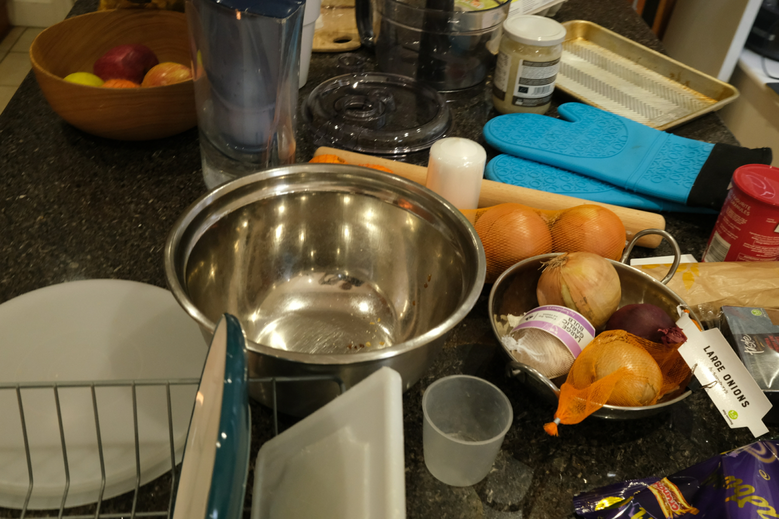} &
		\includegraphics[width=\mytmplen]{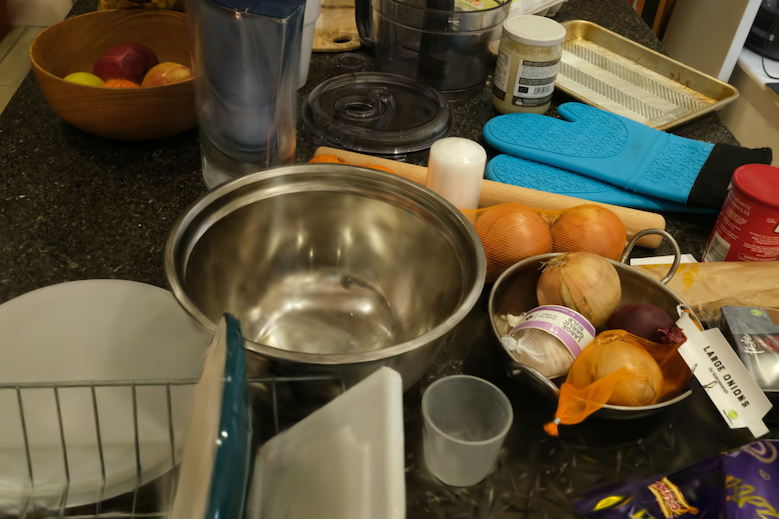} &
		\includegraphics[width=\mytmplen]{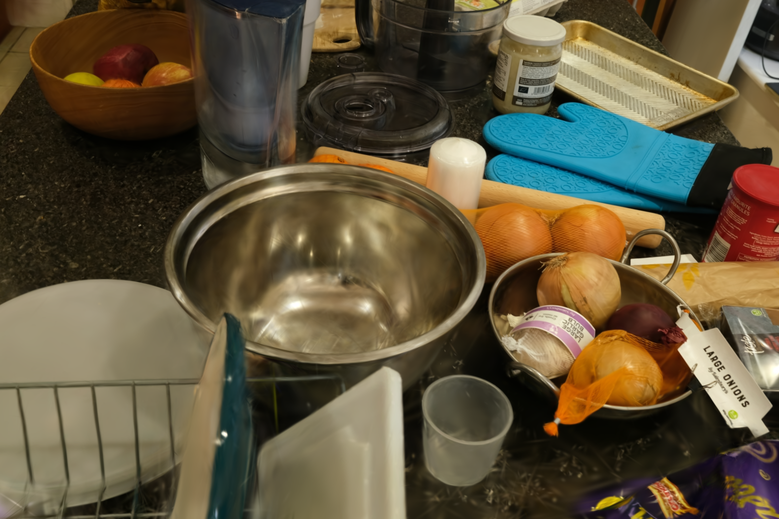} &
 		\begin{tikzpicture}
			\node[anchor=south west,inner sep=0] (image) at (0,0) {\includegraphics[width=\mytmplen]{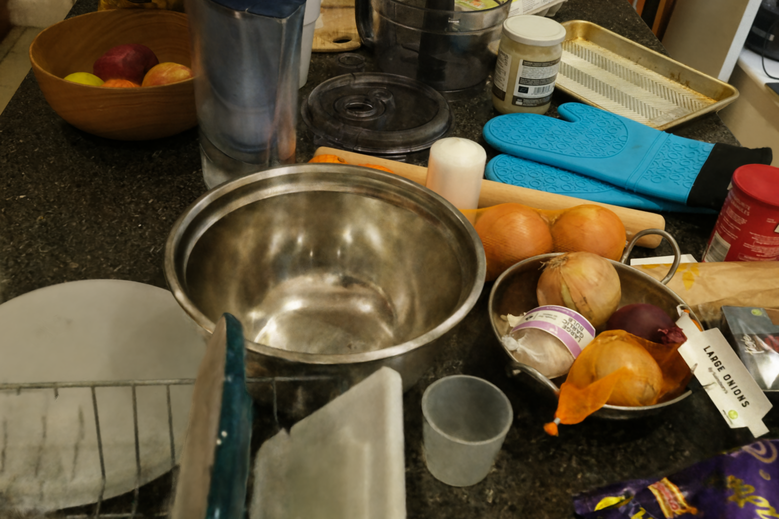}};
		\end{tikzpicture} &
		\includegraphics[width=\mytmplen]{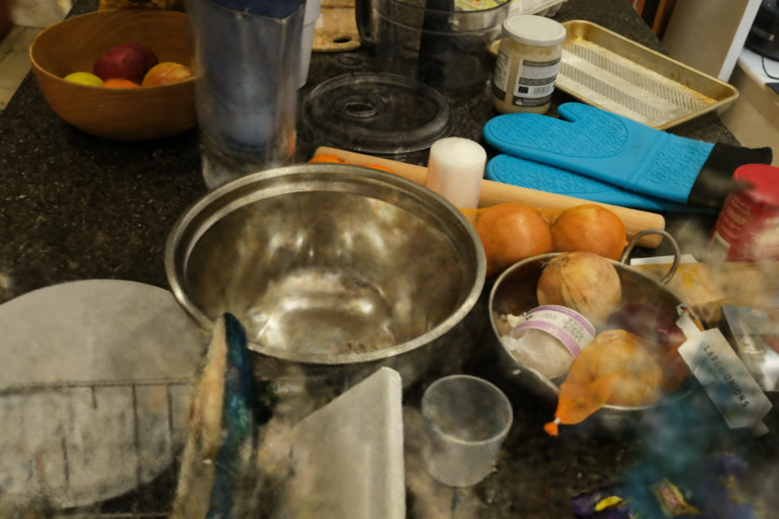} 
		\\
		\includegraphics[width=\mytmplen]{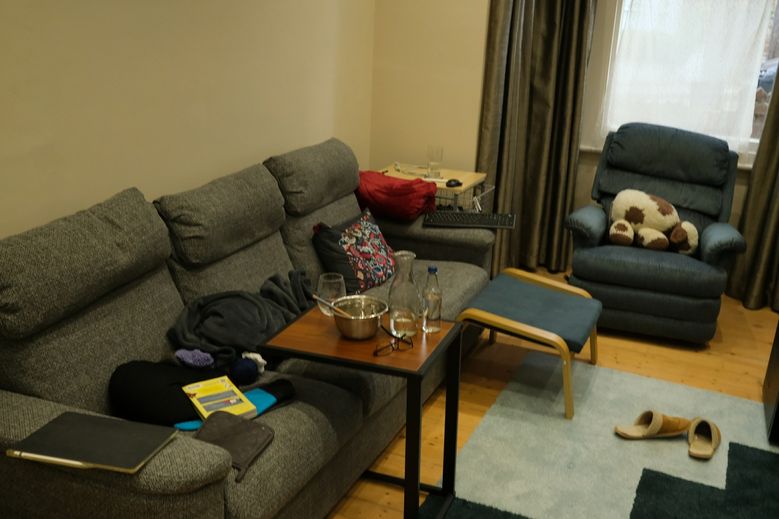} &
		\includegraphics[width=\mytmplen]{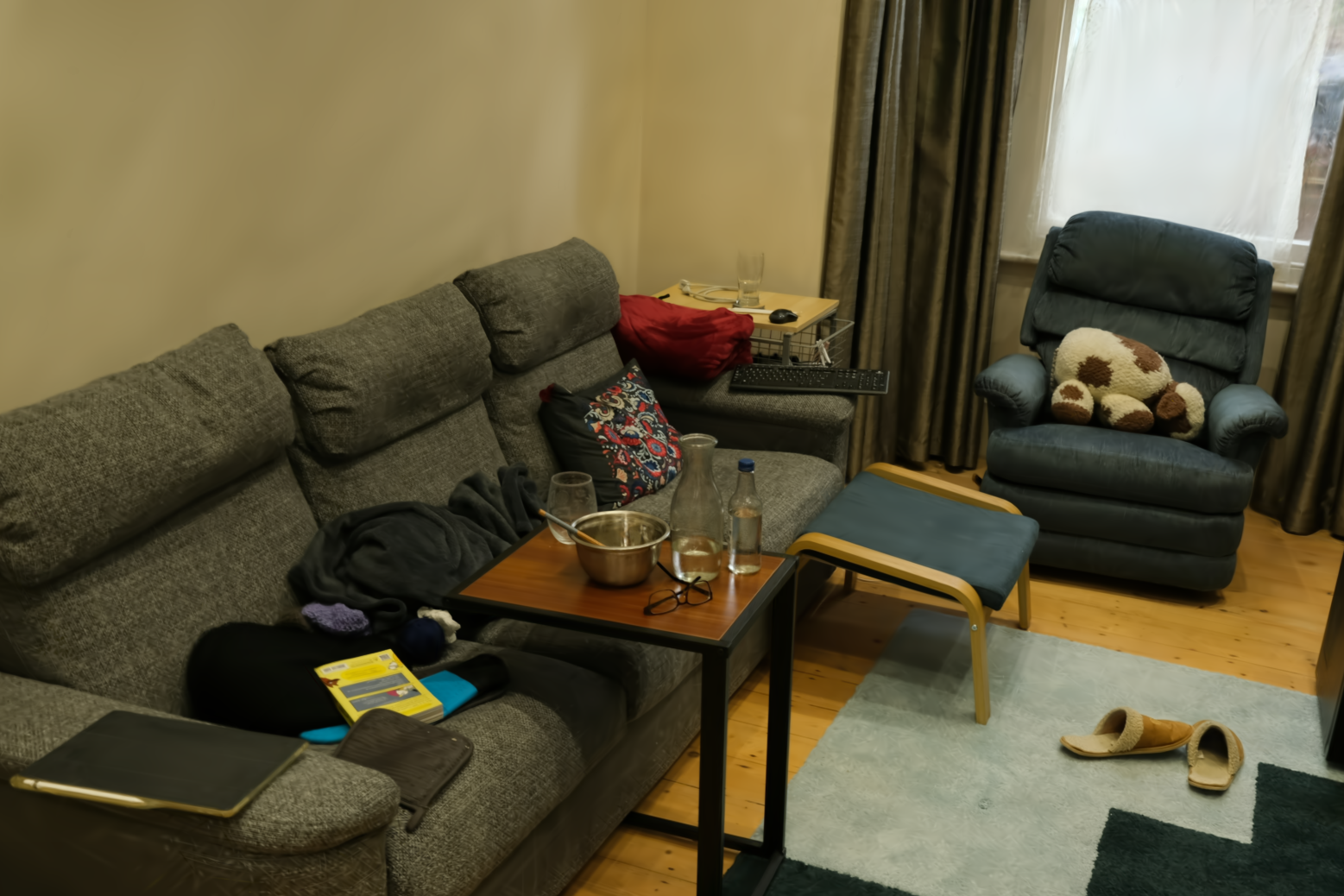} &
		\includegraphics[width=\mytmplen]{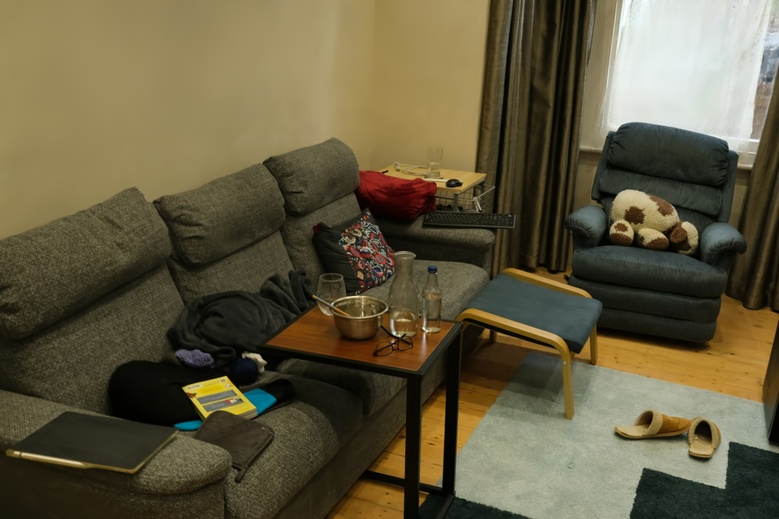} &
		\includegraphics[width=\mytmplen]{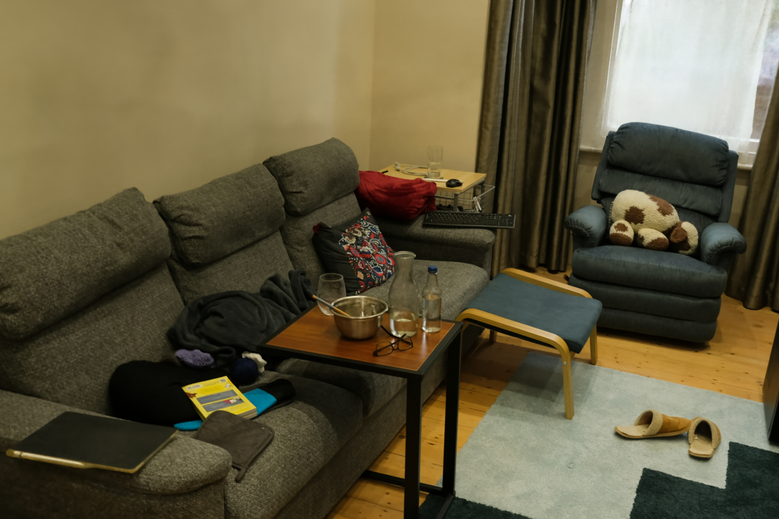} &
		\includegraphics[width=\mytmplen]{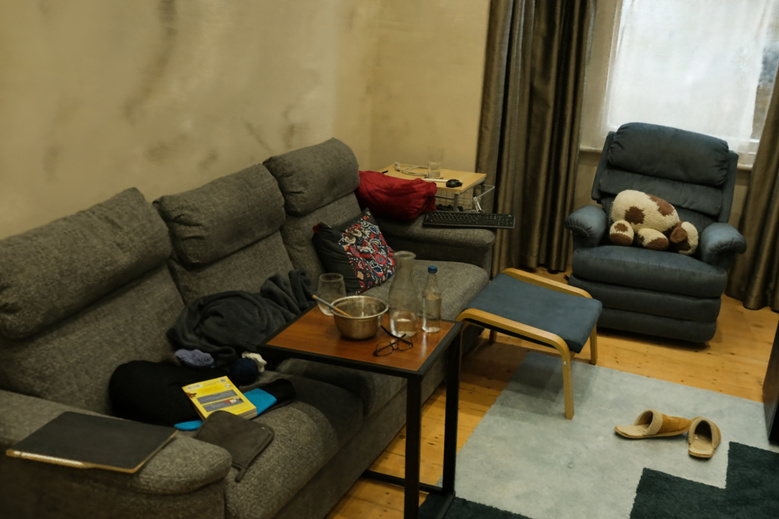} 
		\\
		\zoomin{figures/src/results2/playroom/gt_gt_014_s.png}{1.2}{1.4}{0.55cm}{0.55cm}{1cm}{\mytmplen}{3}{green}&
		\zoomin{figures/src/results2/playroom/laplacians_00014.png}{1.2}{1.4}{0.55cm}{0.55cm}{1cm}{\mytmplen}{3}{green} &
		\zoomin{figures/src/results2/playroom/gaussians_DSC05686_s.png}{1.2}{1.4}{0.55cm}{0.55cm}{1cm}{\mytmplen}{3}{green}&
		\zoomin{figures/src/results2/playroom/mipnerf360_color_014_s.png}{1.2}{1.4}{0.55cm}{0.55cm}{1cm}{\mytmplen}{3}{green}&
		\zoomin{figures/src/results2/playroom/igp_out_0014_s.png}{1.2}{1.4}{0.55cm}{0.55cm}{1cm}{\mytmplen}{3}{green}
\\
		\includegraphics[width=\mytmplen]{figures/src/results2/drjohnson/gt_IMG_6366_s.png} &
		\includegraphics[width=\mytmplen]{figures/src/results2/drjohnson/laplacians_00008.png} &
		\includegraphics[width=\mytmplen]{figures/src/results2/drjohnson/gaussians_IMG_6366_s.png} &
		\includegraphics[width=\mytmplen]{figures/src/results2/drjohnson/mipnerf360_color_008_s.png} &
		\includegraphics[width=\mytmplen]{figures/src/results2/drjohnson/igp_out_0008_s.png} 
\\
		\includegraphics[width=\mytmplen]{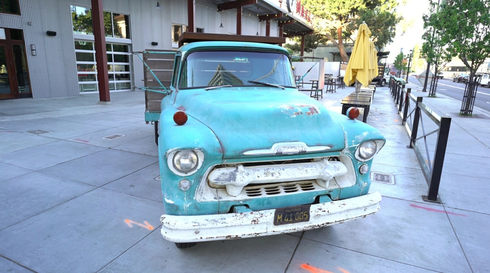} &
		\includegraphics[width=\mytmplen]{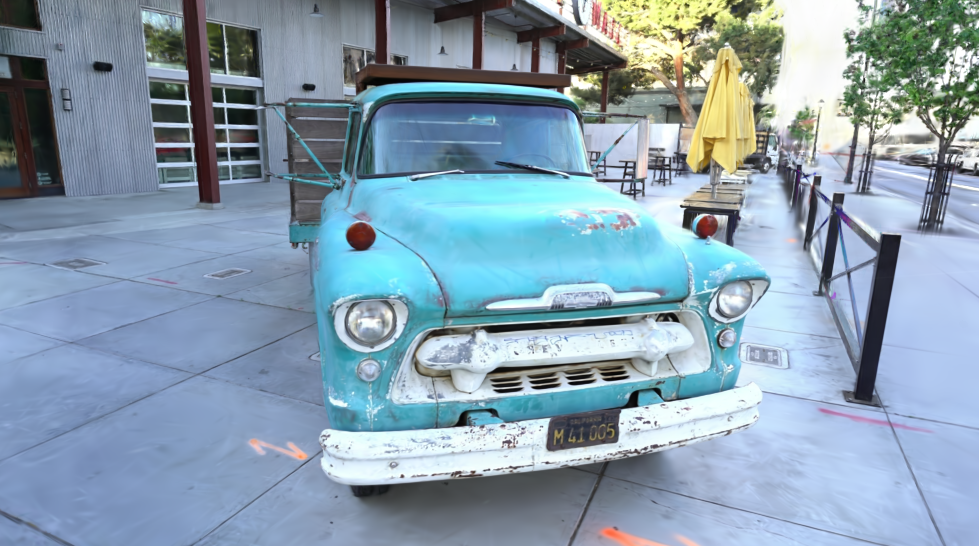} &
		\includegraphics[width=\mytmplen]{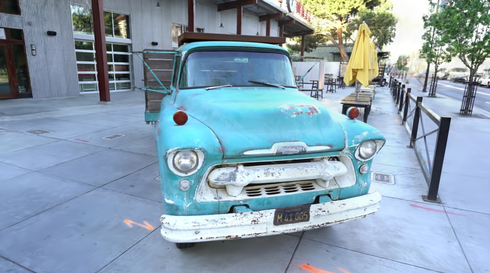} &
		 		\begin{tikzpicture}
			\node[anchor=south west,inner sep=0] (image) at (0,0) {\includegraphics[width=\mytmplen]{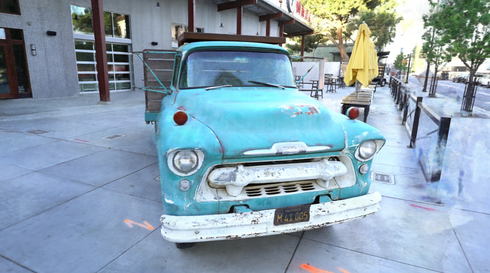}};
		\end{tikzpicture} &
		\includegraphics[width=\mytmplen]{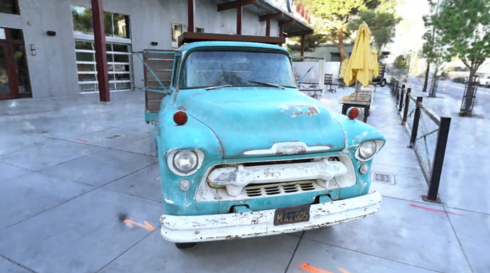}
\\
		\includegraphics[width=\mytmplen]{figures/src/results2/train/gt_gt_0008_s.png} &
		\includegraphics[width=\mytmplen]{figures/src/results2/train/laplacians_00008.png} &
		\includegraphics[width=\mytmplen]{figures/src/results2/train/gaussians_00065_s.png} &
		\includegraphics[width=\mytmplen]{figures/src/results2/train/mipnerf360_color_008_s.png} &
		\includegraphics[width=\mytmplen]{figures/src/results2/train/igp_out_0008_s.png}
		\\
	\end{tabular}
    \vspace{-.3cm} %
\caption{
    \label{fig:comparisons-supp}
    \textbf{Comparative Visualization Across Methods.} Displayed are side-by-side comparisons between our proposed method and established techniques alongside their respective ground truth imagery. The depicted scenes are ordered as follows: \textsc{Bicycle}, \textsc{Garden}, \textsc{Stump}, \textsc{Counter}, and \textsc{Room} from the Mip-NeRF360 dataset; \textsc{Playroom} and \textsc{DrJohnson} from the Deep Blending dataset, and \textsc{Truck} and \textsc{Train} from Tanks\&Temples. Subtle variances in rendering quality are accentuated through zoomed-in details. It might be difficult to see differences between \methodname and Gaussians because they have almost the same PSNR (despite \methodname requiring 50\% less memory).}
	
\end{figure*}
\begin{figure*}[t] 
    \centering
    \includegraphics[width=1.0\linewidth]{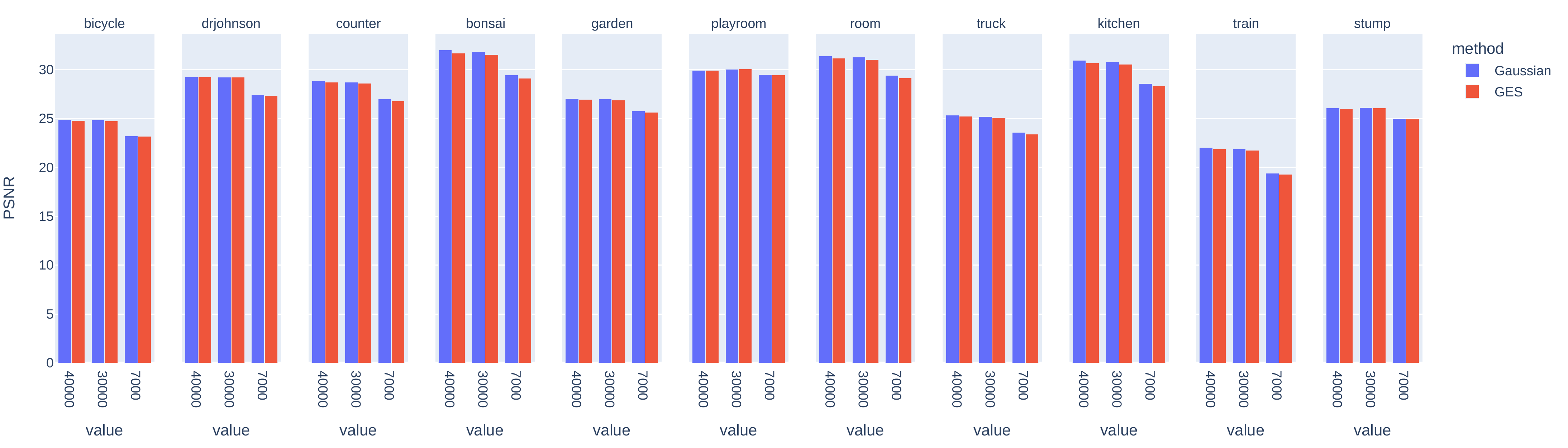}
    \includegraphics[width=1.0\linewidth]{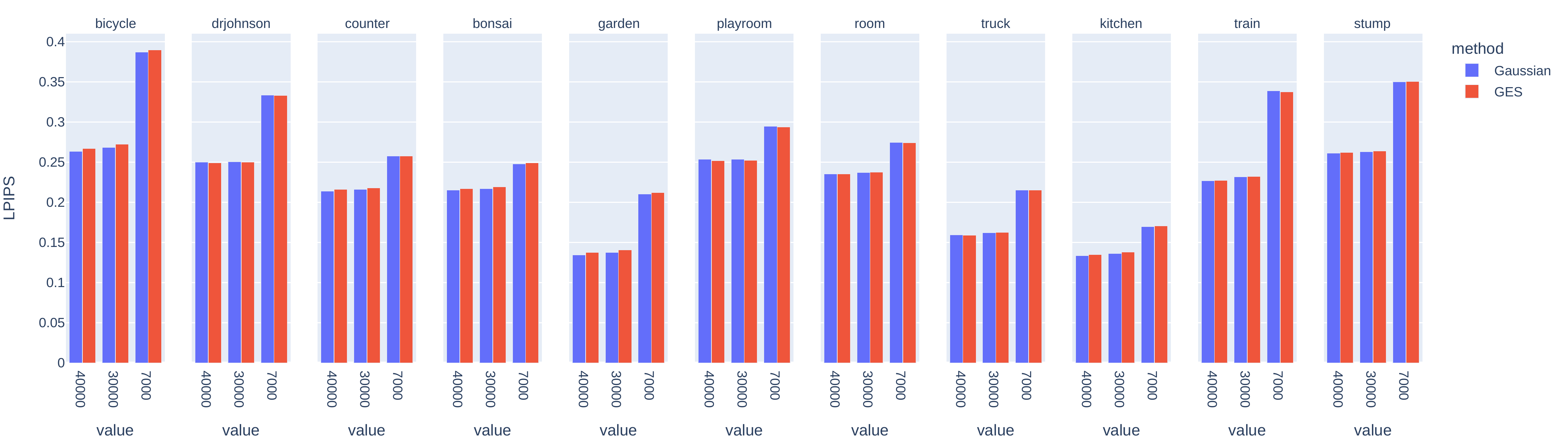}
    \includegraphics[width=1.0\linewidth]{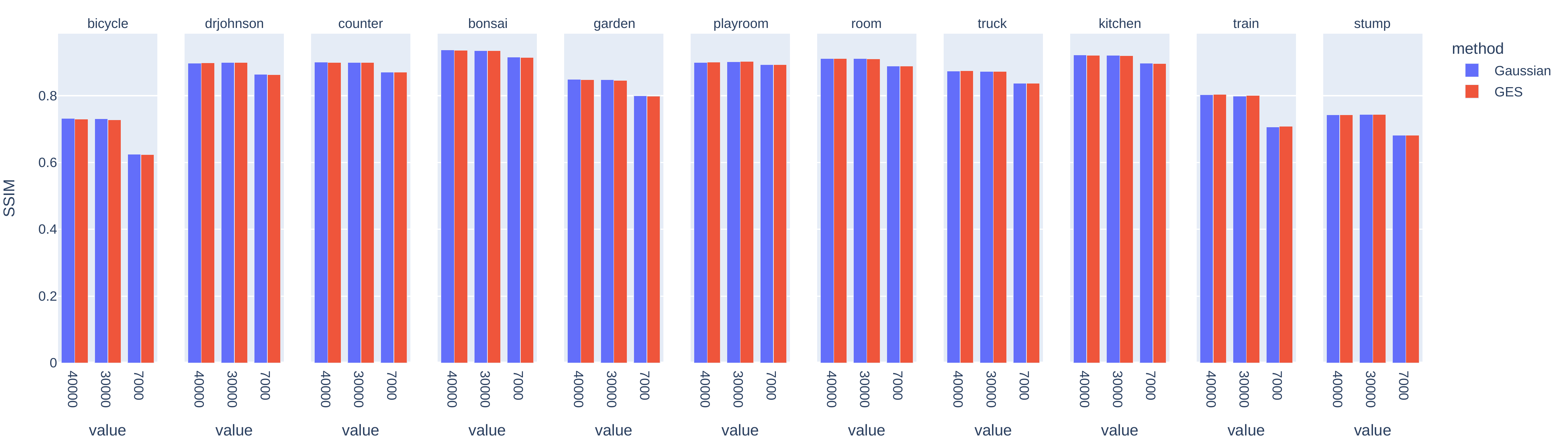}
    \includegraphics[width=1.0\linewidth]{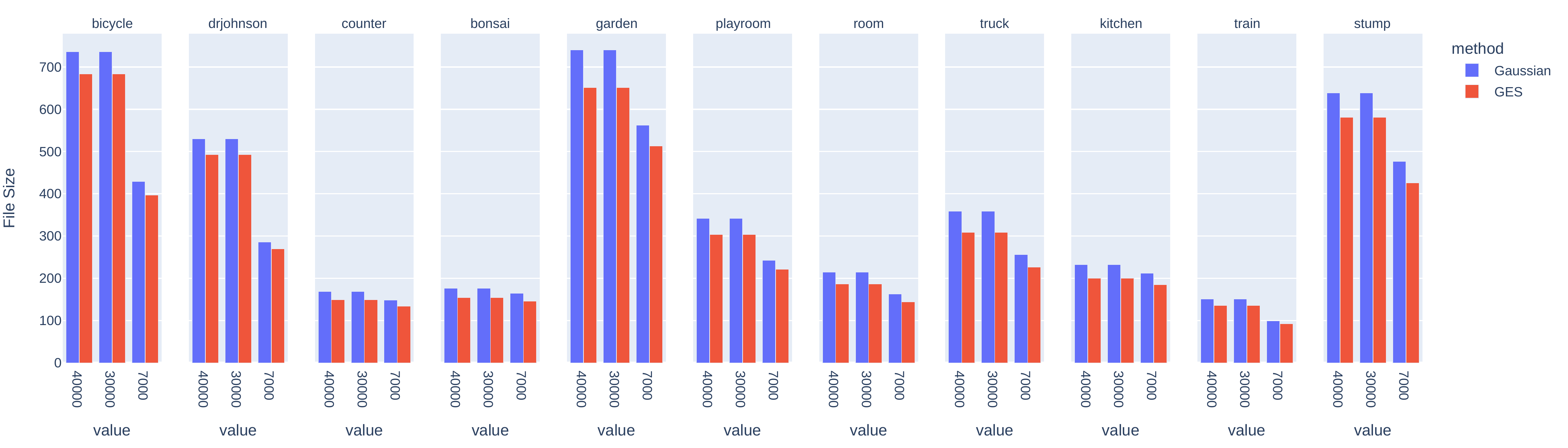}
    \caption{
    \textbf{Detailed Per Scene Results On MipNeRF 360 for Different Iteration Numbers.} We show PSNR, LPIPS, SSIM, and file size results for every single scene in MIPNeRF 360 dataset \cite{MipNeRF-360} of our \methodname and re-running the Gaussian Splatting \cite{gaussiansplatter} baseline with the \textit{exact same} hyperparameters of our  \methodname and on different number of iterations. 
    }
    \label{supfig:detailed}
    \end{figure*}
\begin{figure*}[t] 
    \centering
        \setlength\mytmplen{.243\linewidth}
    \resizebox{1.0\linewidth}{!}{
    \begin{tabular}{cccc}
    \tabcolsep=0.00cm
    Ground Truth& \textbf{\methodname (full)} & \methodname (w/o $\mathcal{L}_{\omega}$ )  & Gaussian Splatting \cite{gaussiansplatter} \\ 
\zoomin{figures/src/laplace/2/gt.png}{3.0}{2.1}{3.6cm}{0.55cm}{1cm}{\mytmplen}{3}{red} &
\zoomin{figures/src/laplace/2/full.png}{3.0}{2.1}{3.6cm}{0.55cm}{1cm}{\mytmplen}{3}{red} &
\zoomin{figures/src/laplace/2/no_mask.png}{3.0}{2.1}{3.6cm}{0.55cm}{1cm}{\mytmplen}{3}{red} &
\zoomin{figures/src/laplace/2/gaussian.png}{3.0}{2.1}{3.6cm}{0.55cm}{1cm}{\mytmplen}{3}{red} \\
    \end{tabular} }
    \caption{
    \textbf{Frequency-Modulated Loss Effect.} We show the effect of the frequency-modulated image loss $\mathcal{L}_{\omega}$ on the performance on novel views synthesis. Note how adding this $\mathcal{L}_{\omega}$ improves the optimization in areas where large contrast exists or where a smooth background is rendered.   
    }
    \label{figsup:laplace-supp}
    \end{figure*}
\begin{table}[t]
    \centering
      \resizebox{0.99\linewidth}{!}{%
          \tabcolsep=0.15cm
    \begin{tabular}{lcccc}
    \toprule
     \multirowcell{1}{Ablation Setup}  & \multirowcell{1}{$PSNR^\uparrow$} & \multirowcell{1}{$SSIM^\uparrow$} & \multirowcell{1}{$LPIPS^\downarrow$} & \multirowcell{1}{\textbf{Size (MB)}$^\downarrow$} \\
    \midrule
    Gaussians   &  23.14 & 0.841 & 0.183   &  411 \\
    GES w/o $\mathcal{L}_{\omega}$  & 23.54 & 0.836 & 0.197 & 254 \\
        GES w/ random $\beta$ init. & 23.37 & 0.836 & 0.198 & 223\\
             GES w/ $\beta=2$ init.  &  23.35 & 0.836 &  0.198 & \textbf{222} \\

    \bottomrule
    \end{tabular}
    \vspace{-4pt} 
    }   
    \footnotesize
    \caption{\textbf{Ablation Study on Novel View Synthesis.} We study the impact of several components in \methodname on the reconstruction quality and file size in the Tanks \& Temples dataset.}
    \label{tab:ablation-general-tanks}
    \end{table}

\end{document}